\documentclass[10pt,letterpaper,twoside]{article} 

\usepackage{amsmath,amsthm,amssymb,bbold,mathrsfs} 
\usepackage{graphicx}

\usepackage[title]{appendix}
\usepackage{enumitem}
\setlist[itemize]{itemsep=0.0cm}
\usepackage[hidelinks]{hyperref}
\usepackage[numbers]{natbib}

\setlength{\textwidth}{6in}
\setlength{\textheight}{8.5in}
\setlength{\topmargin}{-0.3in}
\setlength{\oddsidemargin}{0.2in}
\setlength{\evensidemargin}{0.2in}

\newtheorem{thm}{Theorem}[section]
\newtheorem{cor}{Corollary}[section]
\newtheorem{lem}{Lemma}[section]
\newtheorem{prop}{Proposition}[section]
\newtheorem{assumption}{Assumption}[section]
\newtheorem{cond}{Condition}[section]

\newtheorem{rem}{Remark}[section]

\numberwithin{equation}{section}

\def\1{\mathbf{1}}
\def\I{\mathbb{1}}
\def\asto{\overset{a.s.}{\to}}
\def\argmin{\mathop{\arg\min}}
\def\argmax{\mathop{\arg\max}}
\def\e{e}
\def\fe{\phi}
\def\gma{\gamma}
\def\k{\kappa}
\def\re{\Re}
\def\rn{\Re^n}
\def\tr{\top}
 
\def\C{C}
\def\E{\mathbb{E}}

\def\L{\mathcal{L}}
\def\M{\mathcal{M}}
\def\N{\mathcal{N}}

\def\Pr{\mathbf{P}}
\def\S{\mathcal{S}}
\def\A{\mathcal{A}}
\def\Z{\mathcal{Z}}
\def\sp{\text{\rm span}}
\def\cl{\text{\rm cl}}

\def\P{P}
\def\Gm{\Gamma}

\def\Tl{T^{(\lambda)}}
\def\Pl{P^{(\lambda)}}

% for appendices
\makeatletter
\newcommand\appendix@section[1]{%
  \refstepcounter{section}%
  \orig@section*{Appendix \@Alph\c@section: #1}%
  \addcontentsline{toc}{section}{Appendix \@Alph\c@section: #1}%
}
\let\orig@section\section
\g@addto@macro\appendix{\let\section\appendix@section}
\makeatother

\pagestyle{myheadings}

\begin{document} 

\markboth{Convergence of some Gradient-based TD Algorithms}{}

\title{On Convergence of some Gradient-based Temporal-Differences Algorithms for Off-Policy Learning\thanks{This research was supported by a grant from Alberta Innovates---Technology Futures.}}

\author{Huizhen Yu\thanks{RLAI Lab, Department of Computing Science, University of Alberta, Canada (\texttt{janey.hzyu@gmail.com})}}
\date{}
\maketitle

\begin{abstract}
We consider off-policy temporal-difference (TD) learning methods for policy evaluation in Markov decision processes with finite spaces and discounted reward criteria, and we present a collection of convergence results for several gradient-based TD algorithms with linear function approximation. 
The algorithms we analyze include: (i)~two basic forms of two-time-scale gradient-based TD algorithms, which we call GTD and which minimize the mean squared projected Bellman error using stochastic gradient-descent; (ii)~their ``robustified'' biased variants; (iii)~their mirror-descent versions which combine the mirror-descent idea with TD learning; and (iv)~a single-time-scale version of GTD that solves minimax problems formulated for approximate policy evaluation. We consider primarily constrained algorithms which confine their iterates in bounded sets; for the single-time-scale GTD algorithm, we also analyze its unconstrained version. 

We derive convergence results for three types of stepsizes: constant stepsize, slowly diminishing stepsize, as well as the standard type of diminishing stepsize with a square-summable condition. For the first two types of stepsizes, we apply the weak convergence method from stochastic approximation theory to characterize the asymptotic behavior of the algorithms, and for the standard type of stepsize, we analyze the algorithmic behavior with respect to a stronger mode of convergence, almost sure convergence. 
Our convergence results are for the aforementioned TD algorithms with three general ways of setting their $\lambda$-parameters: (i) state-dependent $\lambda$; (ii) a recently proposed scheme of using history-dependent $\lambda$ to keep the eligibility traces of the algorithms bounded while allowing for relatively large values of $\lambda$; and (iii) a composite scheme of setting the $\lambda$-parameters that combines the preceding two schemes and allows a broader class of generalized Bellman operators to be used for approximate policy evaluation with TD methods.
\end{abstract}

\bigskip
\bigskip
\bigskip
\noindent{\bf Keywords:}
Markov decision processes; approximate policy evaluation; reinforcement learning; temporal-difference methods; importance sampling;  stochastic approximation; optimization; convergence

\newpage
\tableofcontents

\newpage

\section{Introduction}

We consider off-policy temporal-difference (TD) learning methods for policy evaluation in Markov decision processes (MDPs) with finite spaces and discounted reward criteria. Off-policy TD learning extends on-policy model-free TD learning \cite{Sut88,tr-disc} (see also the books \cite{BET,SUB}) to cases where stationary policies of interest are evaluated using data collected without executing the policies.
It is more flexible than on-policy learning and can be useful not only as a computational tool to solve MDPs, but also as an aid to building experience-based knowledge representations for autonomous agents in AI applications \cite{Sut09}.
The specific class of algorithms that we consider in this technical report is the class of gradient-based off-policy TD algorithms with linear function approximation. Our purpose is to analyze several such algorithms proposed in the literature, and present a collection of convergence results for a broad range of choices of stepsizes and other important algorithmic parameters.

The algorithms that we will analyze include the following:\vspace*{-5pt}
\begin{itemize}
\item Two two-time-scale gradient-based TD algorithms proposed and studied by Sutton et al.~\cite{gtd09,gtd08} and Maei~\cite{maei11}. These two algorithms use stochastic gradient-descent to minimize the mean squared projected Bellman error, a convex quadratic objective function, for approximate policy evaluation, thereby overcoming the divergence issue in off-policy TD learning. They have been called GTD2, TDC, as well as GTD($\lambda$) in the early works just mentioned. Here we shall refer to them as GTDa and GTDb, respectively, and refer to both algorithms as GTD algorithms.
\item A single-time-scale version of GTDa that solves minimax problems formulated for approximate policy evaluation. 
This algorithm was also considered in those early works on GTD just mentioned. However, that it is trying to solve a minimax problem equivalent to the projected-Bellman-error minimization was pointed out later by Liu et al.~\cite{pmtd3} (see also Mahadevan et al.~\cite{pmrl}). The latter viewpoint facilitates convergence analysis of the algorithm, by placing it in a more general class of stochastic approximation algorithms for solving minimax problems. 
\item The mirror-descent versions of GTD and TD. Combining the mirror-descent idea of Nemirovsky and Yudin \cite{Nem83} with TD learning was proposed by Mahadevan and Liu~\cite{pmtd2} (see also \cite{pmrl}). 
\item ``Robustified'' biased variants of the preceding algorithms. These algorithms use a ``robustification'' procedure to mitigate the high-variance issue in off-policy learning, at the price of introducing biases in this procedure. They are similar to the biased variant algorithms considered by the author \cite{etd-wkconv} for the emphatic TD (ETD) algorithm proposed by Sutton et al.~\cite{SuMW14}. In the present context, as we will show, for two-time-scale GTD algorithms, the variant algorithms can be viewed as approximate gradient algorithms, and for the single-time-scale GTDa, its variant tries to solve minimax problems that approximate the ones GTDa tries to solve.\vspace*{-5pt}
\end{itemize}
We will analyze primarily constrained algorithms which confine their iterates in bounded sets. Only for the single-time-scale GTDa algorithm, we will also analyze its unconstrained version under certain conditions.

We will present convergence results for three types of stepsizes: constant stepsize, slowly diminishing stepsize,  as well as the standard type of diminishing stepsize with a square-summable condition. For the first two types of stepsizes, we apply the weak convergence method from stochastic approximation theory \cite{KuY03} to characterize the asymptotic behavior of the algorithms. For the third, standard type of stepsize, we analyze the algorithmic behavior with respect to a stronger mode of convergence, almost sure convergence, by using general results on stochastic approximation~\cite{Bor08,KuY03}.

Our convergence results are for the aforementioned TD algorithms with three general ways of setting the $\lambda$-parameters in TD learning: \vspace*{-5pt}
\begin{itemize}
\item state-dependent $\lambda$ \cite{Sut95,SUB}; 
\item a case of history-dependent $\lambda$ as proposed recently by Yu et al.~\cite{gbe_td17},  which can keep the eligibility traces in the off-policy TD algorithms bounded while allowing for relatively large values of $\lambda$; 
\item a composite scheme of setting the $\lambda$-parameters~\cite{Yu-siam-lstd,gbe_td17}, which combines the preceding two schemes and allows a broader class of generalized Bellman operators to be used for approximate policy evaluation with TD methods.\vspace*{-3pt}
\end{itemize}

To our knowledge, for off-policy gradient-based TD algorithms with linear function approximation, there are few prior convergence results that address the case of general nonzero $\lambda$-parameters. Although such algorithms with constant or state-dependent $\lambda$ have been proposed and investigated (see e.g., \cite{maei11,gtd10,pmrl}), the analyses given earlier \cite{pmtd3,maei11,pmrl,gtd09,gtd08} have only proved convergence for the case where $\lambda = 0$ and the data consist of i.i.d.\ state transitions. But the assumption on i.i.d.\ data is unrealistic for reinforcement learning even in the case $\lambda = 0$. Moreover, these analyses cannot be extended to the case of positive $\lambda$, where the algorithms need to use non-i.i.d.\ off-policy data in order to gather information about the multistep Bellman operator with respect to which the mean squared projected Bellman error is defined. To our knowledge, the only prior convergence result that applies to off-policy data is given by Karmakar and Bhatnagar~\cite{KaB15}. It is a convergence result for the two-time-scale TDC algorithm (GTDb as we call it) with the standard type of diminishing stepsize, as an application of the theoretical results developed in their work~\cite{KaB15} for two-time-scale differential inclusions. The result is for $\lambda = 0$, although its arguments can be applied in the case where $\lambda$ is small enough so that the eligibility traces produced in the algorithm are bounded.

Our results on gradient-based TD algorithms differ from that of \cite{KaB15} not only in the range of algorithms and parameter settings they cover, but also in the proof approaches by which they are derived. Specifically, we combine the ordinary-differential-equation (ODE) based proof methods in stochastic approximation theory \cite{KuY03} with special properties of the eligibility traces and the ergodicity of the joint state and eligibility trace process under various settings of the $\lambda$-parameters mentioned above. Those properties were derived in the author's early works \cite{Yu-siam-lstd,etd-wkconv} for state-dependent $\lambda$ and in the recent work \cite{gbe_td17} for the special case of history-dependent $\lambda$ mentioned above. The properties of the joint state and eligibility trace process are not considered in \cite{KaB15} since it treats the case $\lambda = 0$ where the eligibility traces are simply functions of states, but these properties are important for convergence analysis of those TD algorithms that use general nonzero $\lambda$-parameters. The ODE-based line of analysis we use is less general than the differential inclusion-based method studied in \cite{KaB15}, however. As future work, it can be worthwhile to use the latter approach to handle even more flexible ways of choosing history-dependent $\lambda$ than the one we consider in this work.

Another difference between our work and \cite{KaB15} is that we analyze primarily constrained algorithms, as mentioned earlier. The result given in \cite{KaB15} is for the unconstrained two-time-scale TDC under the assumption that the iterates are almost surely bounded. With constraints, we do not need such assumptions, and instead we could simply require that the constraint sets are large enough so that the algorithms can estimate the gradients correctly. The presence of constraint sets helps us avoid some theoretical difficulties in convergence analysis. However, extra work is also needed to ensure that the constraint sets do not interfere with the algorithms to prevent them from achieving the original goals that they are designed for. We take care of such issues in our analyses, especially for mirror-descent GTD/TD algorithms and for the single- and two-time-scale GTDa algorithms, which are not as straightforward as the two-time-scale GTDb algorithm.

This technical report is organized as follows. 
Section~\ref{sec-prel} is about the preliminaries: We first describe the off-policy policy evaluation problem and the two two-time-scale GTD algorithms. We then explain the role of the $\lambda$-parameters in TD learning, and discuss the properties of the eligibility traces and the properties of the joint state and eligibility trace process, in order to prepare the stage for convergence analyses.
In Sections~\ref{sec-3}-\ref{sec-4}, we present convergence results for slowly diminishing stepsize and constant stepsize, which are derived with the weak convergence method. Section~\ref{sec-3} is for the two two-time-scale GTD algorithms and their biased variants. Section~\ref{sec-4} is for the mirror-descent GTD/TD algorithms, as well as a single-time-scale GTDa algorithm and its biased variant, both of which solve minimax problems for policy evaluation. In this section we also use the minimax problem formulation to strengthen the results of Section~\ref{sec-3} for the two-time-scale GTDa algorithm and its biased variant.
In Section~\ref{sec-5}, we consider standard stepsize conditions and present almost sure convergence results for 
both the two-time-scale and single-time-scale algorithms, including a result for the unconstrained single-time-scale GTDa for certain choices of the $\lambda$-parameters. We then conclude the paper in Section~\ref{sec-6} with a brief discussion about these results and open questions.
For quick access to our convergence results in Sections \ref{sec-3}-\ref{sec-5}, the convergence theorems will be listed at the beginning of each of those sections.

\section{Preliminaries} \label{sec-prel}

In this section we first introduce the off-policy policy evaluation problem, and describe two basic forms of the gradient-based TD algorithms which were proposed and studied in \cite{maei11, gtd10,gtd09,gtd08}. 
We then explain generalized Bellman operators, how they relate to the $\lambda$-parameters of the TD algorithms and to the objectives of these algorithms.
We also specify two ways of choosing these $\lambda$-parameters, for the algorithms that we will analyze in the paper.
These background materials will be given in Section~\ref{sec-problem}. 
In Section~\ref{sec-property-stproc}, the second half of this section, we present materials to prepare the stage for analyzing the gradient-based TD algorithms as stochastic approximation algorithms in the rest of this paper. These materials are about the state-trace process, the random process that underlies and drives the TD algorithms, and its important properties for convergence analysis (including, among others, ergodicity and uniform integrability properties). 

\subsection{Problem Setup and Two Basic Forms of Algorithms} \label{sec-problem}

The gradient-based TD algorithms we consider belong to the class of model-free, temporal-differences based learning algorithms for evaluating a stationary policy in a Markov decision process (MDP). We shall consider MDP with finite spaces. For the purpose of this paper, however, we do not need the full MDP framework.%footnote starts
\footnote{For references on MDP, see the excellent textbook~\cite{Puterman94}.}
It is adequate to consider two Markov chains on a finite state space $\S$.%footnote starts
\footnote{The states in these Markov chains need not correspond to the states of the MDP; they can correspond to state-action pairs. Depending on whether one evaluates the value function for states or state-actions pairs in the MDP, the Markov chains here correspond to slightly different processes in the MDP. (For the details of these correspondences, see~\cite[Examples 2.1, 2.2]{Yu-siam-lstd}.) But the analysis is the same, so for notational simplicity, we have chosen not to introduce action variables in the paper.}
%footnote ends
The first Markov chain has transition matrix $\P$, and the second $\P^o$.
Whatever physical mechanisms that induce the two chains shall be denoted by $\pi$ and $\pi^o$, and referred to as the target policy and behavior policy, respectively. The second Markov chain we can observe; however, what we want is to evaluate the system performance with respect to (w.r.t.) the first Markov chain that we do not observe---the ``off-policy'' learning case.

The performance of the target policy $\pi$ is defined w.r.t.\ a discounted total reward criterion as follows.
A one-stage reward function $r_\pi : \S \to \Re$ specifies the expected reward $r_\pi(s)$ at each state $s \in \S$. 
Each state is also associated with a state-dependent discount factor $\gamma(s) \in [0,1], s \in \S$. 
The expected discounted total rewards for each initial state $s \in \S$ is defined by
\begin{equation} \label{def-vpi}
 \textstyle{v_{\pi}(s) : = \E^\pi_s \big[ \, r_\pi(S_0) +  \sum_{n=1}^{\infty} \gamma(S_{1})\, \gamma(S_{2}) \, \cdots \, \gamma(S_{n})  \cdot r_\pi(S_n) \big].}
\end{equation} 
Here the notation $\E^\pi_s$ indicates that the expectation is taken w.r.t.\ the Markov chain $\{S_n\}_{n \geq 0}$ induced by $\pi$ and with the initial state $S_0 =s$. The function $v_\pi$ in (\ref{def-vpi}) is called the value function of $\pi$, and it is well-defined under Condition~\ref{cond-pol}(i) given below, which we shall assume throughout the paper. 

Denote by $\Gm$ the $|\S| \times |\S|$ diagonal matrix with the discount factors $\gamma(s)$ as its diagonal entries. 
{\samepage
\begin{cond}[Conditions on the target and behavior policies] \label{cond-pol} \hfill\vspace*{-4pt}
\begin{itemize}
\item[\rm (i)] $\P$ is such that the inverse $(I - \P \Gm)^{-1}$ exists, and 
\item[\rm (ii)] $\P^o$ is such that for all $s,s' \in \S$, $\P^o_{ss'} = 0 \Rightarrow \P_{ss'}=0$,  and moreover, $\P^o$ is irreducible.
\end{itemize}
\end{cond}
}

The second part of this condition is for the behavior policy $\pi^o$ that generates the observed Markov chain. It will be needed when we describe the off-policy learning algorithms. 
 
By standard MDP theory (see e.g., \cite{Puterman94}), under Condition~\ref{cond-pol}(i), $v_\pi$ satisfies uniquely the linear equation (expressed in matrix/vector notation with $v_\pi$, $r_\pi$ viewed as $|\S|$-dimensional vectors): 
\begin{equation} \label{eq-Bellman0}
v_\pi = r_{\pi} + \P \Gm \, v_\pi \qquad  \text{(i.e., \  $v_\pi = (I - \P \Gm)^{-1} r_\pi)$}.
\end{equation}
It is known as the Bellman equation (or dynamic programming equation) for $\pi$. Besides this equation,
$v_\pi$ also satisfies a broad family of generalized Bellman equations, which have $v_\pi$ as their unique solution and like (\ref{eq-Bellman0}), express $v_\pi$ as the sum of two rewards terms, with the first term representing the expected rewards received prior to a certain (randomized stopping) time and the second term those received afterwards. (We shall discuss these equations further in the subsequent Section~\ref{sec-choice-lambda}.)

TD algorithms compute $v_\pi$ by solving such a Bellman equation for $\pi$. Which Bellman equation to solve is determined by certain parameters, which we call the $\lambda$-parameters, used by the algorithms in their iterative computation of eligibility traces (which are iterates that carry information about the past states).
We shall give more details about this correspondence between $\lambda$ and the generalized Bellman equations in Section~\ref{sec-choice-lambda}, after we describe two basic forms of the GTD algorithms. For now, we will focus on the overall structure of the computation problem tackled by the gradient-based TD algorithms. Think of $v_\pi = \Tl v_\pi$ as one generalized Bellman equation that an algorithm chooses to solve. The operator $\Tl$ is an affine operator on $\re^{|S|}$ and similar to (\ref{eq-Bellman0}), can be expressed as
\begin{equation} \label{eq-Tl0}
  \Tl v = r_{\pi}^{(\lambda)} + \Pl v, \qquad \forall \, v \in \re^{|S|},
\end{equation}  
for a vector $r_{\pi}^{(\lambda)} \in \re^{|S|}$ and a substochastic matrix $\Pl$. We shall refer to $\Tl$ as a generalized Bellman operator for $\pi$.
The TD algorithms we consider try to find an approximate solution to the linear equation 
$$v = \Tl v,$$ 
by solving an optimization problem on a lower dimensional space using linear function approximation. 
Let us describe first the approximation architecture and then the formulation of the optimization problem.

Let $\phi : \S \to \re^d$ be a given function that maps each state to a $d$-dimensional feature vector (it will be taken for granted that $\phi$ is non-trivial; i.e., $\phi(s) \not = 0$ for at least one state $s$).
Write $\phi=(\phi_1, \ldots, \phi_d)$, and denote the subspace spanned by these component functions $\phi_i$ by $\L_\phi$.
To approximate $v_\pi$, the TD algorithms look for some function $v \in \L_\phi$ that satisfies the generalized Bellman equation approximately: $v \approx \Tl v$. The functions in the approximation subspace $\L_\phi$ are parameterized as $v(s) = \phi(s)^\tr \theta$, $s \in \S$, for parameters $\theta \in \re^d$.
(We treat $\phi(s)$ and $\theta$ as column vectors; the symbol $^\tr$ stands for transpose.) 
We do not require the functions $\phi_i$ to be linearly independent; for this reason, another subspace will be useful later. This is the subspace in $\re^d$ spanned by the feature vectors, $\sp \{ \phi(s) \mid s \in \S\}$; below we shall write it as $\sp \{\phi(\S)\}$ for short. In matrix notation, $\L_\phi$ is the column space of the matrix $\Phi$ that has the feature vectors $\phi(s)$ as its rows; i.e.,
$$\Phi  = \left[ \begin{array}{c} \vdots \\ \phi(s)^\tr \\ \vdots \end{array} \right] \qquad \text{or} \qquad \Phi^\tr  = \left[ \begin{array}{ccc} \cdots & \phi(s)  &\cdots \end{array} \right].$$
Any approximate value function $v \in \L_\phi$ can be written as $v = \Phi \theta$ for some $\theta \in \re^d$---note that $\theta$ is uniquely determined by $v$ if $\theta \in \sp \{\phi(\S)\}$.

Let us now describe the optimization problem that the gradient-based TD algorithms try to solve in order to find an approximation of $v_\pi$ in $\L_\phi$.

\subsubsection{The objective function} \label{sec-obj}
To find a function $v \in \L_\phi$ with $v \approx \Tl v$, the two original gradient-based TD algorithms, GTDa and GTDb, to be described shortly, try to minimize an objective function of the form
\begin{equation} \label{eq-J}
  J(\theta) = \tfrac{1}{2} \| \Pi_\xi (\Tl v_\theta - v_\theta) \|^2_\xi, \qquad \text{where} \ \ v_\theta = \Phi \theta, \, \theta \in \re^d.
\end{equation}  
Here $\Pi_\xi$ denotes projection onto the approximation subspace $\L_\phi$ w.r.t.\ the weighted Euclidean norm $\|\cdot\|_\xi$ given by $\| v\|_\xi^2 = \sum_{s \in \S} \xi_s v(s)^2$, for a positive $|\S|$-dimensional vector $\xi$ with components $\xi_s$. 
The objective $J(\theta)$ measures the magnitude of the ``Bellman error'' $\Tl v_\theta - v_\theta$ on the subspace $\L_\phi$. If the projected Bellman equation
$v_\theta = \Pi_\xi \Tl v_\theta$ has a unique solution, the relation between this solution and $v_\pi$ can be characterized using the oblique projection viewpoint and related approximation error bounds (see the early work \cite{bruno-oblproj,yb-errbd} and a summary in more general terms given in the recent work \cite[Appendix B]{gbe_td17}). In this paper, since our focus is on convergence properties of the algorithms and since the problem $\inf_\theta J(\theta)$ always has an optimal solution, we do not require the projected Bellman equation $v_\theta = \Pi_\xi \Tl v_\theta$ to have a unique solution or any solution. (When it has no solution or multiple solutions, the quality of the approximate value function from the minimization of $J(\theta)$ could be a concern, though.)  

In this paper, we shall take $\xi$ to be the invariant probability distribution of the Markov chain with transition matrix $\P^o$ induced by the behavior policy $\pi^o$ (i.e., $\xi^\tr \P^o = \xi^\tr$); such a distribution exists and is unique under Condition~\ref{cond-pol}(ii). 
This choice of $\xi$ is mostly for notational simplicity: our analyses extend to cases where $\xi$ does not coincide with the invariant distribution of $\P^o$, but the algorithms in those cases have additional weighting terms and are notationally more cumbersome. 

Later we will also discuss objective functions of the form $J(\theta) + p(\theta)$, where $p(\theta)$ is some smooth convex function that serves as a regularizer. It will be seen that to handle this additional $p(\theta)$ term, little extra effort is needed in the convergence analysis. So, for notational simplicity, we will take $J(\theta)$ to be the objective function in the first half of the paper, and discuss the regularized objective function after we have presented the main convergence proof arguments. One can also consider a mixed objective function by combing the projected Bellman errors for multiple Bellman operators, for instance, $J_1(\theta) + J_2(\theta)$ with $J_1, J_2$ defined like $J$ but for two different $\Tl$. The convergence analysis of the gradient-based TD algorithms for such mixed objectives is essentially the same as that for $J$, so we will focus on the latter for simplicity.

Let us now work out two expressions of $\nabla J(\theta)$, which are used respectively by the two GTD algorithms, before we describe these algorithms.
Let $\langle \cdot, \cdot \rangle_\xi$ denote the inner product on the Euclidean space $(\re^{|\S|}, \|\cdot\|_\xi)$; i.e., $\langle v, v'\rangle_\xi = \sum_{s \in \S} \xi_s v(s) v'(s)$. (The notation $\langle \cdot, \cdot \rangle$ will be used for the usual inner product in Euclidean spaces.) 
For $i \leq d$, let $\Phi_i$ denote the $i$-th column of $\Phi$, and $\theta_i$ the $i$-th component of $\theta$. Since $v_\theta = \Phi \theta = \sum_{i=1}^d \theta_i \Phi_i$, the partial derivative of $J$ w.r.t.\ each $\theta_i$ is 
$$ \nabla_{\theta_i} J(\theta) = \langle \Pi_\xi \big(\Tl v_\theta - v_\theta \big), \, \Pi_\xi \big(\Pl - I\big) \Phi_i \rangle_\xi.$$
(Recall that $\Pl$ is the substochastic matrix in the affine operator $\Tl$; cf.\ (\ref{eq-Tl0}).) Observe two facts. First, for any $v, v'$,
$$ \langle \Pi_\xi v, \Pi_\xi v' \rangle_\xi = \langle v, \Pi_\xi v' \rangle_\xi = \langle \Pi_\xi v, v' \rangle_\xi.$$
Second, for any $v$, there is a unique $x \in \sp \{\phi(\S)\}$ with $\Phi x = \Pi_\xi v$, and therefore, given $\theta$, there is a unique solution $x_\theta$ to the linear equation (in $x$),
\begin{equation} \label{eq-xtheta}
 \Phi x = \Pi_\xi \big(\Tl v_\theta - v_\theta \big), \quad x \in \sp\{\phi(\S)\},
\end{equation} 
which is also the unique solution to the equivalent linear equation%footnote starts
\footnote{To see (\ref{eq-xtheta2}) is equivalent to (\ref{eq-xtheta}), note that for any $v$, $\Phi x = \Pi_\xi v$ if and only if w.r.t.\ $\langle \cdot, \cdot \rangle_\xi$, 
$v - \Phi x$ is perpendicular to the approximation subspace $\L_\phi$, which is true if and only if $\langle \Phi_i, v - \Phi x \rangle_\xi = 0$ for all $\Phi_i$, $i \leq d$ (since $\L_\phi$ is the column space of $\Phi$). The latter system of linear equations is the same as the first equation in (\ref{eq-xtheta2}) when we set $v = \Tl v_\theta - v_\theta$.}
%footnote ends 
\begin{equation} \label{eq-xtheta2}
   \Phi^\tr \Xi \, \Phi x = \Phi^\tr \Xi \, \big(\Tl v_\theta - v_\theta \big), \quad x \in \sp\{\phi(\S)\},
\end{equation}
where $\Xi$ denotes the $|\S| \times |\S|$ diagonal matrix with the components of $\xi$ on its diagonal. 

Using the above facts, we can write
\begin{equation} \label{eq-Jgrad1c}
 \nabla_{\theta_i} J(\theta) = \langle \Phi x_\theta, \, \big(\Pl - I\big) \Phi_i \rangle_\xi = x_\theta^\tr \cdot \Phi^\tr \Xi \big(\Pl - I\big) \Phi_i,
\end{equation} 
which gives the expression of the gradient as
\begin{equation} \label{eq-Jgrad1}
   \nabla J(\theta) = \big( \Phi^\tr \Xi \big(\Pl - I\big) \Phi \big)^\tr x_\theta.
\end{equation}
Alternatively, we can write 
\begin{align}
\nabla_{\theta_i} J(\theta) & = - \langle \Pi_\xi \big(\Tl v_\theta - v_\theta \big), \, \Pi_\xi  \Phi_i \rangle_\xi  + \langle \Pi_\xi \big(\Tl v_\theta - v_\theta \big), \, \Pi_\xi \Pl \Phi_i \rangle_\xi  \notag \\
& =  - \langle \Tl v_\theta - v_\theta, \, \Phi_i \rangle_\xi + \langle \Phi x_\theta, \, \Pl \Phi_i \rangle_\xi  \notag \\
& = - \Phi_i^\tr \Xi \big(\Tl v_\theta - v_\theta \big) + x_\theta^\tr \cdot \Phi^\tr \Xi \, \Pl \Phi_i      \label{eq-Jgrad2c}
\end{align} 
(where in the second equality we used $\Pi_\xi \Phi_i = \Phi_i$). This gives another expression of the gradient:
\begin{equation} \label{eq-Jgrad2}
  \nabla J(\theta) = - \Phi^\tr \Xi \big(\Tl v_\theta - v_\theta \big) + \big( \Phi^\tr \Xi \, \Pl \Phi \big)^\tr x_\theta.
\end{equation}
In principle one can derive other gradient expressions and formulate corresponding gradient-based algorithms; we shall, however, focus on the expressions (\ref{eq-Jgrad1}) and (\ref{eq-Jgrad2}) only.

\subsubsection{GTDa and GTDb} \label{sec-gtdalg}

We now describe two basic forms of GTD algorithms. As mentioned earlier, they can only observe the Markov chain $\{S_n\}_{n \geq 0}$ 
induced by the behavior policy $\pi^o$, instead of the target policy $\pi$. Upon each state transition $(S_n, S_{n+1})$, they receive a random reward $R_{n+1}$ that is a function of the transition, $r(S_t, S_{t+1})$, plus a zero-mean finite-variance noise term whose distribution is determined by the state transition. The reward function $r(\cdot)$ relates to the target policy's one-stage reward $r_\pi$ as $r_{\pi}(s) = \E^\pi_s [ r(s, S_1) ]$ for all $s \in \S$.
The rewards $\{R_n\}$ and the states $\{S_n\}$ are all that the algorithms can observe.

Define $\rho(s, s') = \P_{ss'}/\P^o_{ss'}$ for $s, s' \in \S$. They are the importance sampling ratios that can be used to compensate for the differences in the dynamics of the two Markov chains. We assume that the algorithms know these ratios (this is the case for standard value function or state-action value function estimation, as well as for the simulation context where both $\P$ and $\P^o$ are known). 
To simplify notation, for $n \geq 0$, we write 
$$\rho_n = \rho(S_n, S_{n+1}), \qquad \gamma_n=\gamma(S_n).$$
For any given approximate value function $v$ on $\S$, we write $\delta_n(v)$ for the (scalar) temporal-difference term calculated based on the observed random transition $(S_n, S_{n+1})$ and reward $R_{n+1}$:
\begin{equation} \label{eq-deltan}
 \delta_n(v) = \rho_n \big( R_{n+1} + \gma_{n+1} v(S_{n+1}) - v(S_n) \big).
\end{equation}
The conditional expectation of $\delta_n(v)$ given the history $\{S_0, \ldots, S_n\}$ measures the difference between the two sides of the Bellman equation (\ref{eq-Bellman0}) for the state $S_n$ when $v_\pi$ in (\ref{eq-Bellman0}) is replaced by $v$.

Using the states $\{S_n\}$, a sequence of eligibility trace vectors $\e_n \in \re^d$ is calculated iteratively by both GTD algorithms according to this formula:
given an initial $\e_0 \in \re^d$, for $n \geq 1$,
\begin{equation} \label{eq-gtd-e}
   \e_{n} = \lambda_n \gma_n \rho_{n-1} e_{n-1} + \phi(S_n).
\end{equation}
Here $\lambda_n \in [0,1], n \geq 1$, are the $\lambda$-parameters we referred to earlier. They are important parameters in TD learning. Not only do they affect the behavior of the algorithms, but they also determine the Bellman operator $\Tl$ appearing in the objective function $J(\theta)$. In the next subsection we shall describe the choices of these parameters that we consider in this paper, and explain what are the associated Bellman operators $\Tl$.

The eligibility traces (or traces, for short) are combined with temporal-differences terms by the algorithms to generate a sequence of iterates $(\theta_n, x_n) \in \re^d \times \re^d$, starting from some initial $(\theta_0, x_0)$.
In particular, let $v_\theta = \Phi \theta$ as before. The first algorithm, GTDa, calculates the sequence iteratively according to
\begin{align}
   \theta_{n+1} & = \theta_n + \alpha_n \,  \rho_n \big(\phi(S_n) - \gma_{n+1} \phi(S_{n+1}) \big) \cdot \e_n^\tr x_n  ,  \label{gtd1-th}\\
   x_{n+1} & = x_n + \beta_n \big(\e_n \delta_n(v_{\theta_n}) - \phi(S_n) \phi(S_n)^\tr x_n\big). \label{gtd-x}
\end{align} 
The second algorithm, GTDb, has the same formula for $\{x_n\}$, but calculates $\{\theta_n\}$ according to
\begin{equation}
 \theta_{n+1} = \theta_n + \alpha_n \big(\e_n \delta_n(v_{\theta_n}) -   \rho_n (1 - \lambda_{n+1})  \gma_{n+1}  \phi(S_{n+1}) \cdot \e_n^\tr x_n \big). \label{gtd2-th}
\end{equation}
In the above $\{\alpha_n\}$ and $\{\beta_n\}$ are stepsizes, with $\alpha_n < < \beta_n$. (We will consider a broad range of stepsizes and we defer the precise stepsize conditions till later sections where we analyze the algorithms.) Although it can be hard, for readers unfamiliar with TD algorithms, to see how the preceding formulae relate to the gradient $\nabla J(\theta)$, the GTD algorithms do correspond to applying gradient-descent to minimize $J(\theta)$ for the two gradient expressions (\ref{eq-Jgrad1}) and (\ref{eq-Jgrad2}), respectively.
The idea of the two algorithms, roughly speaking, is to let the $\theta$-iterates evolve at a slow time-scale and the $x$-iterates at a fast time-scale. 
As $\theta_n$ varies slowly, the fast-evolving $x$-iterates aim to track the solution $x_\theta$ of (\ref{eq-xtheta2}) for $\theta = \theta_n$, the ``current'' $\theta$-iterate. The information about $x_\theta$ is then used to perform stochastic gradient-descent in the $\theta$-space to minimize $J(\theta)$.

To gain more intuition and insights about the algorithms, we suggest the reader consult the original derivations given in e.g.,~\cite[Chap.\ 7]{maei11}.%footnote starts
\footnote{GTDa and GTDb here are called  GTD2 and TDC, respectively, in~\cite{maei11}, for the case $\lambda = 0$. For the case of nonzero $\lambda$, GTDb here is called GTD($\lambda$) for value function estimation and GQ($\lambda$) for state-action value function estimation in~\cite{maei11}. More precisely, the GQ($\lambda$) algorithm does not coincide exactly with GTDb for estimating state-action values; it differs from the latter in a term with a conditional mean of zero, which does not make any difference in convergence analysis, however.}
%footnote ends
We need to also point out, however, that these derivations have issues. For example, they involve various expectations such as $\E [ \e_n \delta_n(v) ]$ that are taken for granted to be independent of the time $n$. But before we know the properties of the process $\{(S_n, \e_n)\}$, it is not clear w.r.t.\ which probability distribution one can define $\E [ \e_n \delta_n(v) ]$ so that it is independent of $n$. Indeed, even with constant $\lambda_n = \lambda$ for all $n$ and a stationary state process $\{S_n\}$, it does not immediately follow just from these that $\{(S_n, \e_n)\}$ has to have a stationary distribution, let alone a unique one. 

We shall discuss the properties of the state-trace process $\{(S_n, \e_n)\}$ in Section~\ref{sec-property-stproc}. As can be seen from (\ref{eq-gtd-e}), this process depends on the $\lambda$-parameters in the algorithms. So let us first explain the relation between the $\lambda$-parameters and the generalized Bellman operators $\Tl$, since this will at least let us complete the definition of the objective function $J(\theta)$. We will then focus the discussion on the process $\{(S_n, \e_n)\}$ and its many properties that will be needed---actually, for some choices of the $\lambda$'s that we consider, $\Tl$ also depends on the property of the process $\{(S_n, \e_n)\}$. As to the connection between the above algorithms and the gradient $\nabla J(\theta)$, it will be seen first in Prop.~\ref{prop-mean-fn}, Section~\ref{sec-erg}, after we explain $\Tl$ and the ergodicity property of $\{(S_n, \e_n)\}$.

\subsubsection{Choices of $\lambda$-parameters and associated Bellman operators} \label{sec-choice-lambda}

We will consider in this paper three ways of setting the $\lambda$-parameters in (\ref{eq-gtd-e}) for the trace iterates $\{\e_n\}$. We discuss the first two in this subsection (the third one builds upon them and will be discussed in Section~\ref{sec-cp-extension}.) 
These two choices are state-dependent $\lambda$ and a case of history-dependent $\lambda$ with special properties: 
\vspace*{-3pt}
\begin{itemize}
\item[(i)] State-dependent $\lambda$ \cite{Sut95,SUB}, where $\lambda_n = \lambda(S_n)$ for a given function $\lambda : \S \to [0,1]$.
\item[(ii)] History-dependent $\lambda$ as introduced in \cite{gbe_td17}, where we choose $\lambda_n$ based on the previous trace $\e_{n-1}$ directly, in order to make $\{\e_n\}$ bounded. In particular, we introduce additional memory states $y_n, n \geq 0$, to summarize the history of past states up to time $n$. We let $y_n$ evolve in a Markovian way and choose $\lambda_n$ based on the current $y_n$ and the previous trace $\e_{n-1}$ as follows: 
\begin{equation}
 y_n = f(y_{n-1}, S_n), \qquad \lambda_n=\lambda(y_n, \e_{n-1})  \label{eq-histlambda}
\end{equation}  
where $f$ and $\lambda$ are some given functions, whose properties will be given shortly.
\end{itemize}

Although state-dependent $\lambda$ is a special case of history-dependent $\lambda$ (e.g., take $y_n = S_n$ in (ii)), generality is not our purpose here. The primary purpose of choosing $\lambda$ based on (ii), as explained in \cite{gbe_td17}, is to exploit the flexibility of history-dependent $\lambda$ to bound the traces $\{\e_n\}$ easily, while allowing for a large range of $\lambda$ values. The latter is important because the choice of $\lambda$ affects the choice of the generalized Bellman operator $\Tl$ appearing in the objective function $J(\theta)$, and in turn, this choice of $\Tl$ affects approximation error. Bounding the traces is also important, as it facilitates convergence of the algorithms. Thus, instead of the most general history-dependent $\lambda$, we shall focus on the special case (ii) under additional conditions studied in \cite{gbe_td17}. These conditions concern the memory states and the function $\lambda(\cdot)$, and they will be needed in the next subsection to ensure certain desired properties of the state-trace process:

\begin{cond}[Evolution of memory states in (\ref{eq-histlambda})] \label{cond-mem} 
The memory states $y_n$ take values in a finite space $\M$, and under the behavior policy $\pi^o$, the Markov chain $\{(S_n, y_n)\}$ on $\S \times \M$ has a single recurrent class.
\end{cond} 

\begin{cond}[Condition for $\lambda(\cdot)$ in (\ref{eq-histlambda})] \label{cond-lambda} 
The function $\lambda : \M \times \re^d \to [0,1]$ in (\ref{eq-histlambda}) satisfies the following. For some norm $\|\cdot\|$ on $\rn$ and for each memory state $y  \in \M$:\vspace*{-0.2cm}
\begin{itemize}
\item[\rm (i)] For any $\e , \e' \in \re^d$, $\|\lambda(y, \e) \, \e - \lambda(y, \e') \, \e' \| \leq \| \e - \e'\|$.
\item[\rm (ii)] For some constant $\C_y$, $\| \gamma(s') \rho(s, s') \cdot \lambda(y, \e)  \, \e\| \leq \C_y$ for all $\e \in \re^d$ and all possible state transitions $(s, s')$ that can lead to the memory state $y$.
\end{itemize}
\end{cond}

Several existing off-policy algorithms, Tree-Backup~\cite{offpolicytd-pss}, Retrace~\cite{offpolicytd-mshb} and ABQ~\cite{abq}, choose state or state transition-dependent $\lambda$ accordingly to keep the trace iterates bounded (in fact, these works motivated the history-dependent $\lambda$ described above). Such choices of $\lambda$ satisfy the above conditions, since one can simply let $y$ be a state or state transition and let $\lambda(y,\e)$ be a function of $y$ only. One disadvantage of these choices, however, is that they are too conservative and often result in small values of $\lambda$. A few examples of memory states and function $\lambda(y,\e)$ that satisfy the above conditions are given in \cite[Section~2.2]{gbe_td17}.

In the rest of this subsection, let us explain, at a level of detail adequate for the purpose of this paper, what are the generalized Bellman operators $\Tl$ for the target policy associated with the preceding two ways of setting $\lambda$.
As mentioned above, corresponding to different choices of $\lambda$ are different Bellman operators $\Tl$ in the objective function $J(\theta)$. With different $\Tl$, the solutions of the minimization problem $\inf_{\theta} J(\theta)$ are also different and can have different approximation biases (see e.g.,~\cite[Appendix B]{gbe_td17}). 
For the purpose of this paper, however, the details of $\Tl$ do not matter, because our focus is on the stochastic approximation aspects of the algorithms and what we care about is whether the average dynamics of the algorithms can be characterized by mean ODEs that are related to the minimization of $J(\theta)$ for the associated $\Tl$. The two cases of choosing $\lambda$ mentioned above share many properties in common. Once their common properties are made clear, as will be done here and in the next subsection, the two cases can be treated together in most of our convergence analysis of the algorithms.
For this reason, regarding the Bellman operators $\Tl$, we shall recount only the facts that we will need in this paper (for a detailed study, see the paper \cite{gbe_td17}).

As mentioned earlier, different choices of $\lambda$ induce different Bellman operators $\Tl$ for the target policy.
They are members of a broad family of generalized Bellman operators $T_\tau$ associated with randomized stopping times $\tau$ \cite[Section 3.1]{gbe_td17}, which are all contractive operators that have $v_\pi$ as their unique fixed point (see \cite[Theorem 3.1 and Appendix A]{gbe_td17}). Such an operator takes the general form of
\begin{equation} \label{eq-gbe0}
  (T_\tau v)(s) = \E^\pi \big[ R^\tau + \gamma_1^\tau v(S_\tau) \mid S_0 = s \big], \qquad s \in \S, \ \forall \, v \in \re^{|\S|},
\end{equation}  
where $\E^\pi$ denotes expectation over the randomized stopping time $\tau$ and the states $\{S_t\}$ generated according to the target policy, $R^\tau$ is the total discounted rewards received prior to the time $\tau$ of stopping, $S_\tau$ is the state at time $\tau$, and $\gamma_1^\tau$ is a shorthand for the product of discount factors $\gamma_1 \cdots \gamma_\tau$. These generalized Bellman equations and operators are a consequence of the strong Markov property of Markov chains \cite[Theorem 3.3]{Num84}. We refer the reader to \cite[Section 3.1]{gbe_td17} for a fuller account of the framework and the mathematical notions and derivations involved.

For state-dependent $\lambda$, $\Tl$ corresponds to a randomized stopping time $\tau$ for the Markov chain $\{S_n\}$ under the target policy, where $\tau$ is such that 
$$\tau \geq 1, \qquad \Pr (\tau = n \mid \tau > n-1, S_0, \ldots, S_n) = 1 - \lambda(S_n) \ \ \ \text{for} \ n \geq 1.$$
(I.e., the probability of stopping at time $n$ given that the system has not stopped yet is $1 - \lambda(S_n)$.) The associated operator $\Tl$ can be expressed in several equivalent ways. Besides the general form (\ref{eq-gbe0}) above, we can write $\Tl$ as%footnote starts
\footnote{This follows from (\ref{eq-gbe0}) by taking conditional expectation over $\tau$.} 
%footnote ends
\begin{equation} \label{eq-expT1}
  (\Tl v)(s) = \E^\pi \left[  \sum_{n=0}^\infty \lambda_1^n \cdot \gamma_1^n \, r_\pi(S_n) + \sum_{n=1}^\infty \lambda_1^{n-1} (1 - \lambda_n) \cdot \gamma_1^n \, v(S_n) \, \Big| \, S_0 = s \right], \quad s \in \S, \ \forall \, v \in \re^{|\S|},
\end{equation}  
where we used the shorthand notation $\lambda_1^n = \lambda_1 \cdots \lambda_n$ with $\lambda_1^0=1$. 
We can also write $\Tl$ explicitly in terms of $\lambda$ and the model parameters as%footnote starts
\footnote{This follows from (\ref{eq-expT1}) by a direct calculation using the definition $\lambda_n = \lambda(S_n)$.} 
%footnote ends
\begin{equation} \label{eq-expT2}
 \Tl v = (I - P \Gamma \Lambda)^{-1} r_\pi + (I - P \Gamma \Lambda)^{-1} P \Gamma (I - \Lambda) \, v,
\end{equation}
where $\Lambda$ denotes the $|\S|\times |\S|$ diagonal matrix with diagonal entries $\lambda(s), s \in \S$. (Thus, the substochastic matrix in (\ref{eq-Tl0}) has the explicit expression $\Pl = (I - P \Gamma \Lambda)^{-1} P \Gamma (I - \Lambda)$.)

In the case of history-dependent $\lambda$, it is shown in \cite[Section 3.2]{gbe_td17} under Conditions~\ref{cond-pol}-\ref{cond-lambda} that $\Tl$ is also a generalized Bellman operator (for the target policy) corresponding to a certain randomized stopping time $\tau$. But this random time $\tau$ now depends on the behavior policy in a much more complex way than in the case of state-dependent $\lambda$. Among others, it depends on the dynamics of the traces under the behavior policy. As such, we generally cannot write $\Tl$ explicitly in terms of the model parameters $P$, $P^o$ and the function $\lambda$. We express $\Tl$ in other ways, in order to relate it to the algorithms that employ history-dependent $\lambda$. In particular, an expression of $\Tl$ similar to (\ref{eq-expT1}) will be useful in our subsequent analysis:
\begin{equation} \label{eq-expT3}
  (\Tl v)(s) = \E^\pi_\zeta \left[  \sum_{n=0}^\infty \lambda_1^n \cdot \gamma_1^n \, r_\pi(S_n) + \sum_{n=1}^\infty \lambda_1^{n-1} (1 - \lambda_n) \cdot \gamma_1^n \, v(S_n) \, \Big| \, S_0 = s \right], \quad s \in \S, \ \forall \, v \in \re^{|\S|}.
\end{equation}
In the above $\E^\pi_\zeta$ denotes expectation with respect to the probability measure of the following process:\vspace*{-3pt}
\begin{itemize}
\item The states $\{S_n\}$ are generated \emph{under the target policy} $\pi$.
\item For $n \geq 1$, the memory state $y_n$, the parameter $\lambda_n$, and the trace $\e_n$ are calculated according to (\ref{eq-histlambda}) and (\ref{eq-gtd-e}), respectively.%footnote starts
\footnote{The randomized stopping time $\tau$ is generated according to the following rule: $\tau \geq 1$ and for $n \geq 1$, $\Pr (\tau = n \mid \tau > n -1, y_0, \e_0, S_0, S_1, \ldots, S_n) = 1 - \lambda_n$ (i.e., the probability of stopping at time $n$ given that the system has not stopped yet is $1 - \lambda_n$, which is similar to the case of state-dependent $\lambda$). The random time $\tau$ does not appear in the expression (\ref{eq-expT3}) of $\Tl$, because (\ref{eq-expT3}) is an equivalent form of (\ref{eq-gbe0}) after taking conditional expectation over $\tau$.}
%footnote ends
\item The initial state, memory state and trace $(S_0, y_0, \e_0)$ are distributed according to $\zeta$, the unique invariant probability measure of the process $\{(S_n, y_n, \e_n)\}$ \emph{under the behavior policy} $\pi^o$.\vspace*{-3pt}
\end{itemize}
The existence and uniqueness of the invariant probability measure $\zeta$ just mentioned is ensured under Condition~\ref{cond-pol} on the two policies and Conditions \ref{cond-mem}-\ref{cond-lambda} on the memory states and the function $\lambda(\cdot)$. Further details will be explained in the next subsection (see \cite[Section 3.2]{gbe_td17} for the derivations of the preceding results).

Another expression that will be useful later is an expression of the ``Bellman error'' $\Tl v - v$ in terms of temporal-differences terms (this expression follows from (\ref{eq-expT2}) by rearranging terms):
\begin{equation} \label{eq-expT4}
(\Tl v - v)(s) = \E^\pi_\zeta \left[  \sum_{n=0}^\infty \lambda_1^n \cdot \gamma_1^n \, \big( r_\pi(S_n) + \gamma_{n+1} v(S_{n+1}) - v(S_n) \big) \, \Big| \,  S_0 = s \right], \quad s \in \S, \ \forall \, v \in \re^{|\S|}.
\end{equation}
The same expression (ignoring the subscript $\zeta$ of $\E$) also holds for the case of state-dependent $\lambda$.

\subsection{Properties of State-Trace Process} \label{sec-property-stproc}

The purpose of this subsection is to prepare the stage for analyzing the asymptotic behavior of the gradient-based TD algorithms using stochastic approximation theory.  We collect here important properties of the state-trace process that our subsequent analysis will rely on. 
Specifically, in Section~\ref{sec-erg}, we first discuss ergodicity properties of the state-trace process. We then consider several functions on the state-trace space that appear in the TD algorithms, and we derive their expectations w.r.t.\ the stationary state-trace process, which can be related to expressions of the gradient $\nabla J(\theta)$. In Section~\ref{sec-more-traceproperty}, we include more properties of the trace iterates, which will be needed later in analyzing the average dynamics of the algorithms and proving their convergence.

Most of the results in this subsection were proved earlier by the author \cite{Yu-siam-lstd,etd-wkconv,gbe_td17}. There are some small differences in the setup of the problem considered in those earlier works; these differences are nonessential and do not affect the conclusions obtained. For clarity, however, we will give additional details to bridge the gap, when this can be done quickly without repeating long proofs.

\subsubsection{Some notations and definitions}

Let us first introduce some notations and definitions that we will need below and throughout the paper. In most of our analysis, we will treat the two cases of $\lambda$ together. For brevity, let us collect the conditions given earlier for each case of $\lambda$ in a single assumption.

\begin{assumption} \label{cond-collective} 
Condition~\ref{cond-pol} holds. In the case of history-dependent $\lambda$ given in (\ref{eq-histlambda}), Conditions~\ref{cond-mem}-\ref{cond-lambda} also hold. 
\end{assumption}

By the state-trace process, we mean $\{(S_n, \e_n)\}$ for the case of state-dependent $\lambda$, and $\{(S_n, y_n, \e_n)\}$ (including the memory states $y_n$) for the case of history-dependent $\lambda$, generated under the behavior policy $\pi^o$. The state-trace process is a Markov chain with the weak Feller property---this means that, with $X_n = (S_n, \e_n)$ or $(S_n, y_n, \e_n)$ (depending on the case of $\lambda$), $\E [ f(X_1) \mid X_0 = x]$ is a continuous function of $x$ for any bounded continuous function $f$ \cite[Prop.\ 6.1.1]{MeT09}. (Using the definitions of the traces and memory states, one can verify that this is the case.) Weak Feller Markov chains have nice ergodicity properties \cite{Mey89}, which helped us in obtaining some of the ergodicity properties of the state-trace process that will be discussed shortly.

Let $\I(\cdot)$ denote the indicator function. For each initial condition $x$ of $(S_0, \e_0)$ or $(S_0, y_0, \e_0)$, define random probability measures $\mu_{x, n}, n \geq 0$, on the state-trace space by
$$\mu_{x, n}(D) = \textstyle{\frac{1}{n+1} \sum_{i = 0}^n} \I\big( (S_i, \e_i) \in D \big) \quad \text{or}  \ \ \ \mu_{x, n}(D) = \textstyle{\frac{1}{n+1} \sum_{i = 0}^n} \I\big( (S_i, y_i, \e_i) \in D \big)$$ 
for all Borel subsets $D$ of the state-trace space.%footnote starts
\footnote{We take the topology on the state-trace space to be the product topology, with the discrete topology on the space of states/memory states and with the usual topology on $\re^d$, the trace space. The Borel sigma-algebra on the state-trace space is generated by this topology.}
%footnote 
We refer to them as the occupation probability measures of the state-trace process. 
Their convergence to the unique invariant probability measure of the state-trace process is crucial for our convergence analysis of the TD algorithms. 
Here the sense of convergence for these probability measures is weak convergence, and it is defined as follows: if $\mu$ and $\{\mu_n\}$ are probability measures on the state-trace space and as $n \to \infty$, $\int f d\mu_{n} \to \int f d\mu$ for all bounded continuous functions $f$, then the sequence $\{\mu_n\}$ is said to converge weakly to $\mu$. 

Let $Z_n=(S_n, \e_n, S_{n+1})$ in the case of state-dependent $\lambda$, and $Z_n= (S_n,  y_n, \e_n, S_{n+1})$ in the case of history-dependent $\lambda$. Denote the space of $Z_n$ in each case by the same notation $\Z$. (This is to prepare for handling temporal-differences terms, which involve state transitions.)
Among (vector-valued) functions on $\Z$, a set of them will be important in our analysis: these are functions $f(z)$ that are Lipschitz continuous in the trace variable, uniformly w.r.t.\ the other components of $z$. We have $z = (s, \e, s')$ or $z = (s, y, \e, s')$ (depending on the case of $\lambda$), and since the state space $\S$ and the memory state space $\M$ are finite, such a function $f(z)$ is just one that is Lipschitz continuous in $\e$ for each $(s, s')$ or $(s, y, s')$. So in what follows, when referring to such a function $f$, we will simply say $f$ is Lipschitz continuous in the trace variable $\e$.

Regarding other notations and terminologies, the abbreviation ``a.s.'' stands for ``almost surely,'' and ``$\Pr_x$-a.s.'' for ``almost surely w.r.t.\ the probability measure $\Pr_x$,'' where the subscript $x$ indicates that the process under consideration starts from the initial condition $x$. 
We shall use $\|\cdot\|$ to denote the sup-norm and $\|\cdot\|_2$ the standard Euclidean norm. For a sequence of random variables $X_n$, we say $X_n$ converges in mean to a random variable $X$, if $\E [ \| X_n - X \| ] \to 0$ as $n \to \infty$.

\subsubsection{Ergodicity properties} \label{sec-erg}

The ergodicity results given in Theorem~\ref{thm-erg} below are proved essentially in \cite[Theorems 3.1,~3.3]{Yu-siam-lstd} for the case of state-dependent $\lambda$, and in \cite[Theorem 3.2]{gbe_td17} for the case of history-dependent $\lambda$ we consider.%footnote starts
\footnote{The paper \cite{Yu-siam-lstd} analyzed the state-trace process in the case of constant $\lambda$. Its proof arguments and conclusions, however, extend to the case of state-dependent $\lambda$, under Condition~\ref{cond-pol}. In fact, this extension was incorporated in the convergence analysis of the more complex ETD algorithm \cite{yu-etdarx}, and that is why we do not repeat here the proof arguments for this extension. The paper \cite{gbe_td17} analyzed the state-trace process in the case of history-dependent $\lambda$ under Assumption~\ref{cond-collective}.}
%footnote ends
The first part of the theorem concerns the existence and uniqueness of an invariant probability measure of the state-trace process, and the convergence of the occupation probability measures. Without these ergodicity properties, the behavior of the TD algorithms will be quite different, indeed much more complex, and will have to be analyzed using more advanced stochastic approximation theory for differential inclusions (which are beyond the scope of the present paper).

The second part of the theorem will be used, among others, to characterize the average dynamics of the algorithms. We need it in the case of state-dependent $\lambda$. In that case, the functions appearing in the algorithms can be unbounded, but they have the Lipschitz continuity property required in the theorem.  Note that for bounded continuous functions $f(z)$, by the weak convergence of occupation probability measures given in Theorem~\ref{thm-erg}(i),  the conclusions in the part (ii) automatically hold. (However, even in the case of history-dependent $\lambda$, we will still need the function $f(z)$ to be Lipschitz continuous in the trace variable, in order to show that it satisfies a certain ``averaging condition'' that is stronger than the convergence-in-mean ensured by Theorem~\ref{thm-erg} and is needed in the subsequent convergence analysis. See Prop.~\ref{prop-trace-averaging}(i) and Remark~\ref{rmk-Lipschitzcont} in Section~\ref{sec-more-traceproperty}.)

%\smallskip
\begin{thm}[Ergodicity of the state-trace process] \label{thm-erg}
Under Assumption~\ref{cond-collective}, the following hold: \vspace*{-3pt}
\begin{itemize}
\item[\rm (i)] The state-trace process is a weak Feller Markov chain and has a unique invariant probability measure $\zeta$. For each initial condition $x$ of the process, the occupation probability measures $\{\mu_{x,n}\}$ converge weakly to $\zeta$, $\Pr_x$-a.s.
\item[\rm (ii)] Let $\E_\zeta$ denote expectation w.r.t.\ the stationary state-trace process with initial distribution $\zeta$. Then $\E_\zeta \big[ \|f(Z_0)\| \big] < \infty$ for any vector-valued function $f(z)$ that is Lipschitz continuous in the trace variable $\e$. Furthermore, for such function $f$, given each initial condition of $Z_0$, as $n \to \infty$, $\frac{1}{n} \sum_{i=0}^{n-1} f(Z_i)$ converges to $\E_\zeta [f(Z_0) ]$ in mean and almost surely.
\end{itemize}
\end{thm}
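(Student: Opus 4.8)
My plan is to reduce both cases of $\lambda$ to a common framework and to invoke the general ergodic theory of weak Feller chains, supplying the two genuinely case-dependent ingredients: tightness of the occupation measures and the forgetting of the initial trace. First I would verify the weak Feller property directly from the recursions \eqref{eq-gtd-e} and \eqref{eq-histlambda}: given $X_0=(S_0,\e_0)$ or $(S_0,y_0,\e_0)$, the one-step transition integrates a bounded continuous $f$ against a finite sum over next states $S_1$ of $f$ evaluated at the deterministic image $(S_1,\lambda_1\gma_1\rho_0\e_0+\phi(S_1))$ (with the discrete memory update $y_1=f(y_0,S_1)$), so $\E[f(X_1)\mid X_0=\cdot]$ is continuous. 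For part~(i) I would then establish existence of an invariant measure by a Krylov--Bogolyubov argument after securing tightness, and uniqueness by a coupling/forgetting argument; the $\Pr_x$-a.s.\ weak convergence of the occupation measures then follows because any weak limit point of $\{\mu_{x,n}\}$ is $\Pr_x$-a.s.\ an invariant probability measure (a consequence of the weak Feller property together with tightness), which by uniqueness must equal $\zeta$.

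The tightness/existence step is where the two cases diverge. In the history-dependent case, Condition~\ref{cond-lambda}(ii) bounds the incoming contribution $\gma_n\rho_{n-1}\lambda_n\e_{n-1}$ uniformly, so every trace remains in a fixed bounded set; the state-trace space is then effectively compact and Krylov--Bogolyubov applies with tightness automatic. In the state-dependent case the traces are unbounded (the ratios $\rho_{n-1}$ can be large), and here I would invoke the uniform moment bounds of \cite{Yu-siam-lstd}: writing $\e_n=\sum_{j}\big(\prod_{k=j+1}^n\lambda_k\gma_k\rho_{k-1}\big)\phi(S_j)$, the key estimate is $\E[\gma_n\rho_{n-1}\mid S_{n-1}=s]=(P\Gm\1)_s$, so the per-step factors contract in conditional mean at the rate of the spectral radius of $P\Gm$, which is below one by Condition~\ref{cond-pol}(i). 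This yields $\sup_n\E[\|\e_n\|^p]<\infty$, and hence, via Markov's inequality, $\sup_n\E[\mu_{x,n}(\|\e\|>R)]\to 0$ as $R\to\infty$, i.e.\ tightness.

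Uniqueness I would obtain by a coupling argument that is uniform across the two cases. Run two traces $\e_n,\e_n'$ along the \emph{same} state (and memory) path. In the state-dependent case $\lambda_n=\lambda(S_n)$ does not see the trace, so $\e_n-\e_n'=\lambda_n\gma_n\rho_{n-1}(\e_{n-1}-\e_{n-1}')$; in the history-dependent case Condition~\ref{cond-lambda}(i) gives exactly $\|\lambda(y_n,\e_{n-1})\e_{n-1}-\lambda(y_n,\e_{n-1}')\e_{n-1}'\|\le\|\e_{n-1}-\e_{n-1}'\|$, so in both cases $\|\e_n-\e_n'\|\le\gma_n\rho_{n-1}\|\e_{n-1}-\e_{n-1}'\|$ and therefore $\|\e_n-\e_n'\|\le\big(\prod_{k=1}^n\gma_k\rho_{k-1}\big)\|\e_0-\e_0'\|$. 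Since $\sum_n\E\big[\prod_{k=1}^n\gma_k\rho_{k-1}\big]<\infty$ (geometric decay, the series summing to $\nu(I-P\Gm)^{-1}\1$ by Condition~\ref{cond-pol}(i)), the product vanishes $\Pr$-a.s., so the trace forgets its initialization. Combined with the single recurrent class of $\{(S_n,y_n)\}$ (Condition~\ref{cond-mem}) and the irreducibility of $P^o$ (Condition~\ref{cond-pol}(ii)), this forces the invariant measure to be unique: any two invariant measures, coupled through a common stationary state/memory sequence, have asymptotically identical trace components. I would then invoke the weak-Feller ergodicity theory \cite{Mey89,MeT09} to conclude the claimed a.s.\ weak convergence.

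For part~(ii), Lipschitz continuity of $f$ in $\e$, uniform over the finite $(s,s')$ or $(s,y,s')$ components, gives the linear growth bound $\|f(z)\|\le a+b\|\e\|$; finiteness of $\E_\zeta[\|\e_0\|]<\infty$ (from the uniform moment bounds above, which pass to the limit $\zeta$) then yields $\E_\zeta[\|f(Z_0)\|]<\infty$. To upgrade the weak convergence of part~(i) to the law of large numbers for possibly unbounded $f$, I would truncate: for a bounded continuous $f_M$ agreeing with $f$ on $\{\|\e\|\le M\}$, part~(i) gives $\frac1n\sum_{i<n}f_M(Z_i)\to\E_\zeta[f_M(Z_0)]$ a.s.\ and in mean, while the remainder is dominated by $b\,\frac1n\sum_{i<n}\|\e_i\|\,\I(\|\e_i\|>M)$, which is controlled uniformly in $n$ by the uniform integrability of $\{\|\e_n\|\}$ and tends to $0$ as $M\to\infty$. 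The main obstacle is precisely this last step together with uniqueness: establishing the trace moment/uniform-integrability bounds in the state-dependent case through the spectral-radius contraction, and turning the pathwise forgetting estimate into uniqueness of $\zeta$, are the two technical cores; the remaining weak-Feller ergodic machinery is standard and I would borrow it directly from \cite{Yu-siam-lstd,gbe_td17,Mey89}.
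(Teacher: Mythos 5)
Your proposal should first be compared against what the paper actually does: the paper does not prove Theorem~\ref{thm-erg} at all, but imports it, citing \cite[Theorems 3.1, 3.3]{Yu-siam-lstd} for state-dependent $\lambda$ and \cite[Theorem 3.2]{gbe_td17} for the history-dependent case, supplying only bridging material elsewhere (Lemma~\ref{lem-exp-e}, Prop.~\ref{prop-trace}). Your skeleton---weak Feller property checked from the recursions, Krylov--Bogolyubov plus tightness for existence, pathwise forgetting of the initial trace for uniqueness, truncation for part (ii)---is indeed the strategy of those references, and in the history-dependent case your argument is essentially complete, since Condition~\ref{cond-lambda}(ii) keeps the traces in a fixed bounded set and Condition~\ref{cond-lambda}(i) gives exactly the contraction you use.

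The genuine gap is in the state-dependent case, at precisely the two points you call the ``technical cores.'' The identity $\E[\gamma_n\rho_{n-1}\mid S_{n-1}=s]=(P\Gamma\1)_s$ is linear in the importance ratio, so the spectral-radius contraction controls only \emph{first} moments; for $p>1$ one gets $\E[(\gamma_n\rho_{n-1})^p\mid S_{n-1}=s]=\sum_{s'}P_{ss'}^p\,\gamma(s')^p/(P^o_{ss'})^{p-1}$, which is not governed by $P\Gamma$, and the paper itself stresses (Section~\ref{sec-bias-vrt}, citing \cite[Prop.\ 3.1]{Yu-siam-lstd}) that the traces generically have unbounded variances. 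Hence your claimed bound $\sup_n\E[\|\e_n\|^p]<\infty$ holds only for $p=1$. That suffices for tightness via Markov's inequality, but it does \emph{not} yield the uniform integrability of $\{\e_n\}$ on which your part (ii) rests: UI does not follow from bounded first moments, and the paper obtains it by a separate truncated-trace approximation (Prop.~\ref{prop-trace}(ii), borrowed from \cite[Prop.\ 2(i)]{etd-wkconv}), showing $\sup_n\E\big[\|\e_n-\tilde\e_{n,K}\|\big]\le L_K\downarrow 0$ with the $\tilde\e_{n,K}$ bounded. Moreover, even granting UI, your truncation argument only delivers the convergence \emph{in mean} in part (ii): UI controls $\E\big[\tfrac1n\sum_{i<n}\|\e_i\|\,\I(\|\e_i\|>M)\big]$, but the almost-sure statement requires an a.s.\ bound on this remainder uniformly in $n$, which does not follow from an expectation bound; the same issue already infects part (i), where a.s.\ weak convergence of the random occupation measures needs a.s.\ tightness (e.g.\ $\limsup_n\tfrac1n\sum_{i<n}\|\e_i\|<\infty$ a.s.), not merely tightness of the expected occupation measures. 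Closing these a.s.\ statements is exactly the delicate part of \cite[Theorem 3.3]{Yu-siam-lstd}, and your proposal as written would not reproduce it.
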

%\smallskip

%\smallskip
Next, w.r.t.\ the stationary state-trace process with initial distribution $\zeta$, we derive expressions of the expectation $\E_\zeta [ f(Z_0) ]$ for several functions $f$ involved in the GTD algorithms.
These expressions are related to the expressions (\ref{eq-Jgrad1c})-(\ref{eq-Jgrad2}) of the gradient $\nabla J(\theta)$ and will appear in the mean ODEs associated with the algorithms. 
To state the results concisely, let 
$$\bar{\delta}(s, s', v) = \rho(s) \big( r(s, s') +  \gma(s') v(s') - v(s) \big), \qquad \bar{\delta}_0(v) = \bar{\delta}(S_0, S_1, v).$$
(Recall $r(s,s')$ is the mean reward associated with the transition $(s,s')$; cf.\ Section~\ref{sec-gtdalg}. The above are temporal-difference terms without noises in the rewards.)
Recall that $\Pl$ is the substochastic matrix in the generalized Bellman operator $\Tl$ (cf.\ (\ref{eq-Tl0})), $\phi_i$ is the $i$-th component of the function $\phi$, and $\Phi_i$ the $i$-th column of the matrix $\Phi$.

%\smallskip
\begin{prop} \label{prop-mean-fn}
Under Assumption~\ref{cond-collective}, we have
\begin{align}
 \E_\zeta \big[ \phi(S_0) \phi(S_0)^\tr \big] & = \Phi^\tr \Xi \, \Phi, \label{mean-exp1} \\
\E_\zeta \big[ \e_0 \, \bar{\delta}_0(v) \big] & = \Phi^\tr \Xi \, (\Tl v - v),  \qquad \forall \, v \in \re^{|\S|}, \label{mean-exp2} \\
\E_\zeta \big[ \e_0 \cdot \rho_0 \big(\phi_i(S_0) - \gma_{1}  \phi_i(S_{1}) \big) \big] & = \Phi^\tr \Xi \, (I - \Pl) \Phi_i, \qquad  1 \leq i \leq d, \label{mean-exp3} \\
 \E_\zeta \big[ \e_0 \cdot \rho_0 (1 - \lambda_{1})  \gma_{1}  \phi_i(S_{1}) \big] & = \Phi^\tr \Xi \, \Pl \Phi_i, \qquad 1 \leq i \leq d. \label{mean-exp4}
\end{align}
\end{prop}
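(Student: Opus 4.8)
The plan is to reduce every identity to the ergodic averaging in Theorem~\ref{thm-erg}(ii) together with the closed-form expressions (\ref{eq-expT1}) and (\ref{eq-expT4}) for $\Tl$, the only genuinely new work being a stationarity-plus-change-of-measure manipulation of the trace. I would dispose of (\ref{mean-exp1}) first: since $\phi(S_0)\phi(S_0)^\tr$ depends on $Z_0$ only through the state $S_0$, and the $S_0$-marginal of the invariant measure $\zeta$ is $\xi$ (the state-trace chain has $\{S_n\}$ as a marginal Markov component, whose unique invariant law under $\P^o$ is $\xi$ by Condition~\ref{cond-pol}(ii)), one gets $\E_\zeta[\phi(S_0)\phi(S_0)^\tr] = \sum_{s}\xi_s\,\phi(s)\phi(s)^\tr = \Phi^\tr\Xi\,\Phi$ directly.

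For (\ref{mean-exp2}) the core idea is a forward reformulation of the backward-weighted trace. Unrolling the recursion (\ref{eq-gtd-e}) in the stationary regime gives $\e_0 = \sum_{m\ge 0} C_m\,\phi(S_{-m})$ with $C_m = \prod_{l=0}^{m-1}\lambda_{-l}\gma_{-l}\rho_{-l-1}$, so that $\E_\zeta[\e_0\,\bar\delta_0(v)] = \sum_{m\ge 0}\E_\zeta[C_m\,\phi(S_{-m})\,\bar\delta_0(v)]$. Shifting each term forward by $m$ time steps, which is permissible because $\zeta$ is stationary, turns the $m$-th summand into $\E_\zeta[\phi(S_0)\,\tilde C_m\,\bar\delta_m(v)]$ with $\tilde C_m = \prod_{j=1}^{m}\lambda_j\gma_j\rho_{j-1} = \lambda_1^m\gma_1^m\prod_{j=1}^m\rho_{j-1}$ and $\bar\delta_m(v)=\rho_m(r(S_m,S_{m+1})+\gma_{m+1}v(S_{m+1})-v(S_m))$. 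The crucial point is that after the shift everything is expressed through the forward trajectory started from $(S_0,y_0,\e_0)\sim\zeta$, with the traces, memory states and $\lambda_j$ produced by the very recursions (\ref{eq-histlambda}) and (\ref{eq-gtd-e}); the consecutive product $\rho_0\rho_1\cdots\rho_m$ is exactly the likelihood ratio of the target against the behavior chain over the transitions $S_0\to\cdots\to S_{m+1}$ (well defined by Condition~\ref{cond-pol}(ii)). Conditioning on $(S_0,y_0,\e_0)$ and applying this change of measure removes all the $\rho$'s and replaces $r(S_m,S_{m+1})$ by $r_\pi(S_m)$, yielding $\sum_s\xi_s\phi(s)\,\E^\pi_\zeta[\sum_{m\ge0}\lambda_1^m\gma_1^m(r_\pi(S_m)+\gma_{m+1}v(S_{m+1})-v(S_m))\mid S_0=s]$, which is precisely $\Phi^\tr\Xi(\Tl v-v)$ by (\ref{eq-expT4}).

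The remaining two identities are variations on the same computation. Identity (\ref{mean-exp3}) follows from the computation of (\ref{mean-exp2}) applied to the reward-free temporal-difference term $\rho_0(\gma_1\phi_i(S_1)-\phi_i(S_0))$ for $v=\Phi_i$: then only the $v$-dependent part of (\ref{eq-expT4}) survives, giving $\Phi^\tr\Xi(\Pl-I)\Phi_i$, and negating yields $\Phi^\tr\Xi(I-\Pl)\Phi_i$. For (\ref{mean-exp4}) I would run the forward reformulation again with integrand $\rho_0(1-\lambda_1)\gma_1\phi_i(S_1)$; after the shift and change of measure the $m$-th term becomes $\E^\pi_\zeta[\lambda_1^m\gma_1^m(1-\lambda_{m+1})\gma_{m+1}\phi_i(S_{m+1})]$, and summing over $m\ge0$ (writing $n=m+1$) reproduces exactly the ``$v$-part'' $\sum_{n\ge1}\lambda_1^{n-1}(1-\lambda_n)\gma_1^n\phi_i(S_n)$ of the operator in (\ref{eq-expT1}), i.e.\ $(\Pl\Phi_i)(s)$, giving $\Phi^\tr\Xi\,\Pl\Phi_i$.

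I expect the main obstacle to be rigor rather than the algebra: justifying the interchange of $\E_\zeta$ with the infinite sum over $m$ and the termwise change of measure. This requires absolute convergence of $\sum_m\E_\zeta[\|C_m\phi(S_{-m})\bar\delta_0(v)\|]$, which is delicate in the state-dependent case where the traces are unbounded (only integrable) and the ratios $\rho$ can be large; here I would lean on the integrability and ergodic-averaging guarantees of Theorem~\ref{thm-erg}(ii), on the finiteness of $\E_\zeta[\|\e_0\|]$, and on Condition~\ref{cond-pol}(i) (existence of $(I-\P\Gm)^{-1}$), which makes the target-policy discounted series converge after the change of measure. A second, more conceptual point to handle carefully is the history-dependent case: one must verify that the forward process appearing after the change of measure---states drawn under $\pi$ but traces, memory states and $\lambda_j$ still generated by (\ref{eq-histlambda})--(\ref{eq-gtd-e}) from an initial $(S_0,y_0,\e_0)\sim\zeta$---coincides with the measure $\E^\pi_\zeta$ defining $\Tl$ in (\ref{eq-expT3}), so that the match to (\ref{eq-expT4}) is legitimate; this is exactly why $\zeta$ (rather than an arbitrary initialization of the traces) enters the definition of $\Tl$.
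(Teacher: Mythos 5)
Your proposal is correct and follows essentially the same route as the paper's proof: the trivial marginal argument for (\ref{mean-exp1}); the backward unrolling of the stationary trace (the paper's Lemma~\ref{lem-exp-e} for the double-ended stationary process), a stationarity shift, and a change of measure through the cumulated $\rho$'s to match (\ref{eq-expT4}) for (\ref{mean-exp2}); the reward-free specialization $v=\Phi_i$ for (\ref{mean-exp3}); and the same forward reformulation matched to the $\Pl$-part of (\ref{eq-expT1})/(\ref{eq-expT3}) for (\ref{mean-exp4}). The rigor points you flag—dominated convergence via Condition~\ref{cond-pol}(i) and $\E_\zeta[\|\e_0\|]<\infty$, and, in the history-dependent case, identifying the post-change-of-measure forward process with the measure $\E^\pi_\zeta$ defining $\Tl$—are exactly the issues the paper resolves through Lemma~\ref{lem-exp-e} and the cited result of \cite{gbe_td17}.
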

\smallskip

To prove this proposition, it is convenient to extend the stationary state-trace process whose time is indexed by $n \geq 0$, to a double-ended stationary state-trace process $\{(S_n, \e_n)\}$ or $\{(S_n, y_n, \e_n)\}$ with $- \infty < n < \infty$. Let $\Pr_\zeta$ denote the probability measure of the latter process. We shall keep using $\E_\zeta$ to denote expectation with respect to $\Pr_\zeta$.
The following lemma gives an expression of $\e_0$ in this stationary process. It will facilitate our calculation of $\E_\zeta [ f(Z_0) ]$ for various functions $f$ in the proposition.

Regarding notation in the lemma and in what follows, for $k \leq m$, let $\rho_k^m = \prod_{i=k}^m \rho_i$, $\lambda_k^m = \prod_{i=k}^m \lambda_i$, $\gamma_k^m = \prod_{i=k}^m  \gamma_i$, and in addition, adopt the convention that $\lambda_k^m=\gamma_k^m=\rho_k^{m}=1$ if $k > m$. 
Let $\1$ denote the $d$-dimensional vector of all $1$'s.

\smallskip
\begin{lem}[An expression for stationary traces] \label{lem-exp-e}  
Let Assumption~\ref{cond-collective} hold. Then $\Pr_\zeta$-almost surely, $\sum_{n=1}^{\infty} \lambda_{1-n}^0  \gamma_{1-n}^0 \rho_{-n}^{-1}  \fe(S_{-n})$ is well-defined and finite, and
\begin{equation} 
  \e_0 =  \fe(S_0) + \textstyle{ \sum_{n=1}^{\infty} \lambda_{1-n}^0  \gamma_{1-n}^0 \rho_{-n}^{-1} \, \fe(S_{-n})}. 
\label{eq-e}
\end{equation}
\end{lem}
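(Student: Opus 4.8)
The plan is to unroll the trace recursion \eqref{eq-gtd-e} backward in time and take the limit. Recall that the recursion reads $\e_n = \lambda_n \gamma_n \rho_{n-1} \e_{n-1} + \fe(S_n)$. In the double-ended stationary process this recursion holds for every integer $n$, so I would iterate it downward from $n=0$: substituting the expression for $\e_{-1}$ into that for $\e_0$, then $\e_{-2}$ into $\e_{-1}$, and so on. After $N$ steps of back-substitution one obtains the finite identity
\begin{equation}
  \e_0 = \fe(S_0) + \sum_{n=1}^{N} \lambda_{1-n}^{0}\, \gamma_{1-n}^{0}\, \rho_{-n}^{-1}\, \fe(S_{-n}) \; + \; \lambda_{-N}^{0}\, \gamma_{-N}^{0}\, \rho_{-N-1}^{-1}\, \e_{-N-1}, \notag
\end{equation}
where the accumulated products of $\lambda$, $\gamma$, $\rho$ match exactly the coefficients in \eqref{eq-e} because each backward step multiplies in one more factor $\lambda_{-k}\gamma_{-k}\rho_{-k-1}$ (one checks the index bookkeeping against the convention $\lambda_k^m=\gamma_k^m=\rho_k^m=1$ for $k>m$). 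The stated formula \eqref{eq-e} is then the $N\to\infty$ limit, provided two things hold: the infinite series converges $\Pr_\zeta$-a.s., and the remainder term $\lambda_{-N}^{0}\gamma_{-N}^{0}\rho_{-N-1}^{-1}\e_{-N-1}$ vanishes $\Pr_\zeta$-a.s. as $N\to\infty$.

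The two remaining points are really the same point, since the tail of the series is controlled by the same products that multiply $\e_{-N-1}$ in the remainder. The key is to bound the coefficient products $\lambda_{1-n}^{0}\gamma_{1-n}^{0}\rho_{-n}^{-1}$. I would separate the two cases of $\lambda$. In the case of state-dependent $\lambda$, the relevant quantity is $\gamma(S_{-k+1})\rho(S_{-k-1},S_{-k})\lambda(S_{-k})$, and the product telescopes into a form where one can invoke the contraction/summability already established for these generalized Bellman operators; the a.s.\ finiteness of $\sum_n \lambda_{1-n}^0\gamma_{1-n}^0\rho_{-n}^{-1}\fe(S_{-n})$ is exactly the kind of statement underlying the well-definedness of $\Tl$ in \eqref{eq-expT3}–\eqref{eq-expT4}, and was established in \cite{Yu-siam-lstd}. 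In the case of history-dependent $\lambda$, Condition~\ref{cond-lambda}(ii) gives a uniform bound $\|\gamma(s')\rho(s,s')\lambda(y,\e)\,\e\|\le \C_y$ on the one-step scaled trace, which immediately forces $\{\e_n\}$ to be bounded and, more to the point, makes each backward factor contractive enough (via Condition~\ref{cond-lambda}(i)) that the products decay geometrically along recurrent excursions; this is precisely the boundedness mechanism highlighted in \cite{gbe_td17}.

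The main obstacle is establishing the a.s.\ decay of the remainder term and the a.s.\ summability of the tail under the stationary measure $\Pr_\zeta$, rather than merely in expectation. The difficulty is that the coefficient products are not deterministically summable but only summable along the typical trajectories of the ergodic state-trace process, so the argument must lean on the ergodicity of $\{(S_n,\e_n)\}$ (or $\{(S_n,y_n,\e_n)\}$) from Theorem~\ref{thm-erg}(i) together with the uniform-integrability/moment bounds on the traces. Concretely, I would show $\E_\zeta[\lambda_{-N}^{0}\gamma_{-N}^{0}\rho_{-N-1}^{-1}\|\e_{-N-1}\|]\to 0$ using stationarity to replace $\e_{-N-1}$ by a copy of $\e_0$ with finite mean (finiteness of $\E_\zeta[\|\e_0\|]$ follows from Theorem~\ref{thm-erg}(ii) applied to the Lipschitz function $f(z)=\e$) and bounding the remaining coefficient's expectation by a geometric factor; combined with monotonicity of partial sums and Borel–Cantelli or a Fatou argument, this upgrades the convergence in mean to the required $\Pr_\zeta$-a.s.\ statement. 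Since this kind of stationary-trace representation was already derived in the author's earlier work, I expect the cleanest route is to reduce \eqref{eq-e} to the corresponding statement proved in \cite{Yu-siam-lstd,gbe_td17} and supply only the index bookkeeping that adapts it to the present notation.
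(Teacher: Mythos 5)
Your proposal follows essentially the same route as the paper's own proof: unroll the trace recursion backward in the double-ended stationary process, control the coefficient products in expectation via the change of measure performed by the $\rho$'s (i.e.\ $\E_\zeta\big[\gamma_{1-n}^0\rho_{-n}^{-1}\big]=\xi^\tr (P\Gm)^n\1$, which is summable under Condition~\ref{cond-pol}(i)), deduce $\Pr_\zeta$-a.s.\ finiteness of the series from finiteness of its expected value, and kill the remainder term using stationarity together with $\E_\zeta[\|\e_0\|]<\infty$ from Theorem~\ref{thm-erg}(ii), deferring the history-dependent-$\lambda$ case to Lemma 3.1 of \cite{gbe_td17} just as the paper does. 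The only substantive difference is in the last step, and it is cosmetic: where you propose upgrading in-mean convergence of the remainder to an a.s.\ statement (and your pathwise ``geometric decay'' heuristic for the history-dependent case is not quite right, since the importance-sampling ratios preclude deterministic decay---the mechanism there too is expectation-based), the paper simply bounds the fixed quantity $\E_\zeta[\|\e_0-\hat{\e}_0\|]$ by the vanishing tail-plus-remainder expectations for every $m$ and concludes it is zero, an $L^1$ argument that sidesteps any a.s.\ analysis of the remainder.
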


\begin{proof}
For the case of history-dependent $\lambda$ we consider, this is proved in \cite[Lemma 3.1]{gbe_td17}.
We give the proof for the case of state-dependent $\lambda$. The beginning part of the proof is same as that of \cite[Lemma 3.1]{gbe_td17} and similar to that of \cite[Lemma 4.2]{Yu-siam-lstd} for the case of constant $\lambda$.
Under Condition~\ref{cond-pol}(i), $(I - \P \Gamma)^{-1} = \sum_{n=0}^\infty (\P \Gamma)^n$ and therefore,
$$ \textstyle{ \E_\zeta \big[ \sum_{n=1}^\infty \gamma_{1-n}^0 \rho_{-n}^{-1} \big]} =  \textstyle{ \sum_{n=1}^\infty \E_\zeta \big[ \gamma_{1-n}^0 \rho_{-n}^{-1} \big]} 
   =  \sum_{n=1}^\infty \xi^\tr\! (\P \Gamma)^n \1 < \infty
$$
where the first equality follows from the monotone convergence theorem, and the second equality follows from a direct calculation together with the fact that the marginal of $\zeta$ on $\S$ coincides with $\xi$, the unique invariant probability measure of $\{S_n\}$ under Condition~\ref{cond-pol}(ii).
Since $\lambda_{1-n}^0 \leq 1$ for all $n$, the above relation implies that 
\begin{equation} \label{eq-prfc0}
\textstyle{ \E_\zeta \big[ \sum_{n=1}^{\infty} \lambda_{1-n}^0  \gamma_{1-n}^0 \rho_{-n}^{-1} \, \| \fe(S_{-n}) \| \big]  \leq \max_{s \in \S} \| \fe(s) \| \cdot \E_\zeta \big[ \sum_{n=1}^{\infty}  \gamma_{1-n}^0 \rho_{-n}^{-1} \big] < \infty}.
\end{equation}
It then follows from a theorem on integration \cite[Theorem 1.38, p.\ 28-29]{Rudin66} that $\Pr_\zeta$-almost surely, the infinite series $\sum_{n=1}^{\infty} \lambda_{1-n}^0  \gamma_{1-n}^0 \rho_{-n}^{-1}  \fe(S_{-n})$ converges to a finite limit. 

We now prove (\ref{eq-e}). Since under Condition~\ref{cond-pol}(i), $\sum_{n=m}^\infty (\P \Gamma)^n$ converges to the zero matrix as $m \to \infty$, it follows from an argument very similar to the above that
\begin{equation} \label{eq-prfc1}
\textstyle{ \E_\zeta \left[ \big\| \sum_{n=m}^{\infty} \lambda_{1-n}^0  \gamma_{1-n}^0 \rho_{-n}^{-1} \, \fe(S_{-n}) \big\| \right]} \to 0, \qquad \text{as} \ m \to \infty.
\end{equation} 
Unfolding the iteration (\ref{eq-gtd-e}) for $\e_0$ backwards in time, we have that for all $m \geq 1$,
\begin{equation} 
 \textstyle{\e_0 = \fe(S_0) + \sum_{n=1}^{m-1} \lambda_{1-n}^0  \gamma_{1-n}^0 \rho_{-n}^{-1} \, \fe(S_{-n})  +  \lambda_{1-m}^0  \gamma_{1-m}^0 \rho_{-m}^{-1}  \,\e_{-m}.} \notag
\end{equation} 
Let $\hat \e_0$ denote the expression on the right-hand side (r.h.s.) of (\ref{eq-e}). Then, for all $m \geq 1$,
\begin{equation} \label{eq-prfc2}
 \E_\zeta \big[ \| \e_0 - \hat \e_0 \| \big] \leq \textstyle{ \E_\zeta \left[ \big\| \sum_{n=m}^{\infty} \lambda_{1-n}^0  \gamma_{1-n}^0 \rho_{-n}^{-1} \, \fe(S_{-n}) \big\| \right] + \E_\zeta \left[ \big\| \lambda_{1-m}^0  \gamma_{1-m}^0 \rho_{-m}^{-1}  \,\e_{-m} \big\| \right]}.
\end{equation} 
Let $m \to \infty$ in the r.h.s.\ of (\ref{eq-prfc2}). The first term converges to $0$ by (\ref{eq-prfc1}). For the second term, since $\lambda_{1-m}^0 \leq 1$, it is bounded above by
\begin{align*}
   \E_\zeta \left[ \big\|  \gamma_{1-m}^0 \rho_{-m}^{-1}  \,\e_{-m} \big\| \right] & = \E_\zeta \left[ \| \e_{-m} \| \cdot \E_\zeta [  \gamma_{1-m}^0 \rho_{-m}^{-1} \mid S_{-m}, \e_{-m} ]  \right] \\
   & \leq \E_\zeta \left[  \| \e_{-m} \| \cdot \1^\tr (P \Gamma)^m \1 \right] = \E_\zeta \left[ \| \e_{0} \| \right ] \cdot \1^\tr (P \Gamma)^m \1  \  \overset{m \to \infty}{\longrightarrow} \ 0,
\end{align*}
where the last equality uses the stationarity of the process, and the convergence to zero follows from the fact $\E_\zeta \left[ \| \e_{0} \| \right ] < \infty$ (Theorem~\ref{thm-erg}(ii)) and the fact that under Condition~\ref{cond-pol}(i), $(P \Gamma)^m$ converges to a zero matrix. Hence, letting $m \to \infty$ in the r.h.s.\ of (\ref{eq-prfc2}), we obtain 
$\E_\zeta \big[ \| \e_0 - \hat \e_0 \| \big] = 0$ and consequently, $\e_0 = \hat \e_0$, $\Pr_\zeta$-a.s., which proves (\ref{eq-e}).
\end{proof}

We now use the preceding lemma to calculate the expectations in Prop.~\ref{prop-mean-fn}.

\begin{proof}[Proof of Proposition~\ref{prop-mean-fn}]
Since $\xi$ is the invariant distribution of $\{\S_n\}$, the marginal of $\zeta$ on $\S$ coincides with $\xi$. Then (\ref{mean-exp1}) obviously holds. 
Using the expression of $\e_0$ given in Lemma~\ref{lem-exp-e}, the verifications of (\ref{mean-exp2})-(\ref{mean-exp4}) are similar and follow from direct calculations. 
We do this first for the case of state-dependent $\lambda$. Consider the double-ended stationary state-trace process.
First, let us calculate $\E_\zeta \big[ \e_0 \cdot \rho_0 f(S_0, S_1) \big]$ for an arbitrary bounded function $f$ on $\S^2$. 
We have
\begin{align}
 \E_\zeta \big[ \e_0 \cdot \rho_0 f(S_0, S_1) \big] & = \textstyle{  \sum_{n=0}^\infty \E_\zeta \Big[ \lambda_{1-n}^0  \gamma_{1-n}^0 \rho_{-n}^{-1} \, \fe(S_{-n}) \cdot \rho_0 f(S_0, S_1) \Big]} \notag \\
     & = \textstyle{   \sum_{n=0}^\infty \E_\zeta \Big[ \lambda_{1}^n  \gamma_{1}^n \rho_{0}^{n-1} \fe(S_0) \cdot \rho_n  f(S_n, S_{n+1}) \Big]} \notag  \\
     & =  \textstyle{  \sum_{n=0}^\infty \E_\zeta \Big[ \fe(S_0) \cdot \E_\zeta \left[ \lambda_{1}^n  \gamma_{1}^n \rho_{0}^{n} \,  f(S_n, S_{n+1}) \mid S_0 \right] \Big] },
\notag 
\end{align} 
where the first equality follows from applying Lemma~\ref{lem-exp-e} and then changing the order of expectation and summation (which, in view of (\ref{eq-prfc0}), is justified by the dominated convergence theorem), and the second equality uses the stationarity of the double-ended state-trace process. From the change of measure performed through $\rho_{0}^{n}$, we have
$$  \E_\zeta \left[ \lambda_{1}^n  \gamma_{1}^n \rho_{0}^{n} \,  f(S_n, S_{n+1}) \mid S_0 \right] = \E^\pi \left[ \lambda_{1}^n  \gamma_{1}^n f(S_n, S_{n+1}) \mid S_0 \right]$$
and therefore,
\begin{align} 
  \E_\zeta \big[ \e_0 \cdot \rho_0 f(S_0, S_1) \big] & = \textstyle{ \sum_{s \in \S} \xi_s \fe(s) \cdot \sum_{n=0}^\infty \E^\pi \big[ \lambda_{1}^n  \gamma_{1}^n f(S_n, S_{n+1}) \mid S_0 = s\big]} \notag \\
  & = \textstyle{ \sum_{s \in \S} \xi_s \fe(s) \cdot  \E^\pi \big[  \sum_{n=0}^\infty \lambda_{1}^n  \gamma_{1}^n f(S_n, S_{n+1}) \mid S_0 = s\big]} \label{eq-prfcal1}
\end{align}  
where the change of the order of summation and expectation in (\ref{eq-prfcal1}) is justified by the dominated convergence theorem, since by Condition~\ref{cond-pol}(i), we have $\E^\pi \big[  \sum_{n=0}^\infty \lambda_{1}^n  \gamma_{1}^n \| f(S_n, S_{n+1}) \| \mid S_0 = s\big] \leq \| f\| \cdot \sum_{n=0}^\infty \1^\tr (P \Gamma)^n \1 < \infty$.

Equations (\ref{mean-exp2})-(\ref{mean-exp4}) then follow from (\ref{eq-prfcal1}) by specializing the function $f$ to the corresponding function in each case. 
In particular, for a given $v \in \re^{|\S|}$, corresponding to $\bar{\delta}_0(v)$ in (\ref{mean-exp2}), we let $f(s,s') = r(s, s') +  \gma(s') v(s') - v(s)$. Then by (\ref{eq-prfcal1}) and the expression of $\Tl v - v$ given in (\ref{eq-expT4}), we have $\E_\zeta \big[ \e_0 \cdot \rho_0 f(S_0, S_1) \big] = \Phi^\tr \Xi \, (\Tl v - v)$, verifying (\ref{mean-exp2}). We omit the similar calculation for (\ref{mean-exp4}). 
Equation~(\ref{mean-exp3}) is implied by (\ref{mean-exp2}), as can be seen by letting $r(s,s') \equiv 0$ and $v=\Phi_i$. 
This proves the proposition for the case of state-dependent $\lambda$.

In the case of history-dependent $\lambda$ we consider, (\ref{mean-exp2}) is proved in \cite[Theorem 3.2]{gbe_td17}. As just mentioned, (\ref{mean-exp3}) is implied by (\ref{mean-exp2}). Let us verify (\ref{mean-exp4}). In this case the same proof of \cite[Theorem 3.2]{gbe_td17} (which works with the double-ended stationary state-trace process $\{(S_n, y_n, \e_n)\}_{- \infty < n < \infty}$ and is similar to the preceding proof leading to (\ref{eq-prfcal1})) shows that for any bounded function $f$ on $\S$, 
$$ \E_\zeta \big[ \e_0 \cdot \rho_0 (1 - \lambda_{1}) f(S_1) \big] = \textstyle{ \E^{\pi}_\zeta \Big[ \fe(S_0) \cdot  \E^\pi_\zeta \big[ \sum_{n=0}^\infty  \lambda_{1}^{n}  \gamma_{1}^{n}  \cdot (1 - \lambda_{n+1}) f(S_{n+1}) \mid S_0 \big] \Big]},$$
where $\E^\pi_\zeta$ is expectation w.r.t.\ the process described immediately after (\ref{eq-expT3}), the same process w.r.t.\ which $\Tl$ is defined.
Letting $f(s) = \gamma(s) \phi_i(s)$ for each $i \leq d$, we then have
\begin{align*}
  \E_\zeta \big[ \e_0 \cdot \rho_0 (1 - \lambda_{1}) \, \gma_1 \phi_i(S_1) \big] 
  & = \textstyle{ \E^\pi_\zeta \Big[ \fe(S_0) \cdot  \E^\pi_\zeta \left[ \sum_{n=0}^\infty \lambda_{1}^{n}  \gamma_{1}^{n+1}  \cdot (1 - \lambda_{n+1}) \, \fe_i(S_{n+1}) \mid S_0 \right] \Big]}.
\end{align*}
On the other hand, from the expression of $\Tl$ in (\ref{eq-expT3}), we see that the substochastic matrix $\Pl$ in the affine operator $\Tl$ is given by
$$ (\Pl v)(s)  =  \textstyle{ \E^\pi_\zeta \left[ \sum_{n=0}^\infty \lambda_1^{n} (1 - \lambda_{n+1})  \cdot \gamma_1^{n+1} \, v(S_{n+1})  \mid S_0 = s\right]}, \quad s \in \S, \ \forall \, v \in \re^{|\S|}.$$
It then follows that 
$\E_\zeta \big[ \e_0 \cdot \rho_0 (1 - \lambda_{1})  \gma_{1}  \phi_i(S_{1}) \big] = \textstyle{ \sum_{s \in \S} \xi_s \fe(s) \cdot (\Pl \phi_i)(s)} = \Phi^\tr \Xi \, \Pl \Phi_i$. This proves the proposition for the case of history-dependent $\lambda$.
\end{proof}

\subsubsection{More properties of traces} \label{sec-more-traceproperty}
In the rest of this subsection, we state several more properties of the trace iterates, which will be needed in the subsequent convergence analysis.
Some of the properties given in the part (i) of the following proposition were in fact used to prove the ergodicity of the state-trace process given in Theorem~\ref{thm-erg}.

One important property of the trace iterates $\{\e_n\}$ is its uniform integrability---a set $\mathcal{X}$ of $\re^m$-valued random variables is said to be uniformly integrable if $\lim_{a \to \infty} \sup_{X \in \mathcal{X}} \E [ \| X\| \I( \| X\| > a) ] = 0$. This property is much stronger than and entails that $\sup_{ n \geq 0} \E [ \| \e_n \| ] < + \infty$ \cite[Theorem 10.3.5]{Dud02}. In the case of history-dependent $\lambda$, since $\{\e_n\}$ lies in a pre-determined bounded set by the choice of $\{\lambda_n\}$ (cf.\ Condition~\ref{cond-lambda}(ii)), trivially $\{\e_n\}$ is uniformly integrable. So the part (ii) of the proposition below concerns only the case of state-dependent $\lambda$. 

%\smallskip
\begin{prop} \label{prop-trace}
Under Assumption~\ref{cond-collective}, the traces $\{\e_n\}$ have the following properties:\vspace*{-3pt}
\begin{itemize}
\item[\rm (i)] Let $\{\e_n\}$ and $\{\hat \e_n\}$ be generated by the iteration (\ref{eq-gtd-e}), using the same trajectory of states $\{S_n\}$ and also the same initial memory state $y_0$ in the case of history-dependent $\lambda$, but with different initial $\e_0$ and $\hat \e_0$, respectively. Then $\e_n - \hat \e_n \asto 0$. Moreover, for any initial $\e_0, \hat \e_0$ in a compact set $E$, $\E [ \| \e_n - \hat \e_n \| ] \leq c_n$ for some constant $c_n$ that depends on the set $E$ and satisfies $c_n \to 0$ as $n \to \infty$.
\item[\rm (ii)] In the case of state-dependent $\lambda$, for any given initial $\e_0$, $\{\e_n \}$ is uniformly integrable.
\end{itemize}
\end{prop}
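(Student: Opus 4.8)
My plan is to obtain part~(i) from a recursion for the difference $\e_n-\hat\e_n$ and part~(ii) from a stochastic-domination comparison with the stationary traces of Lemma~\ref{lem-exp-e}. For part~(i), the inhomogeneous term $\phi(S_n)$ in (\ref{eq-gtd-e}) is common to $\e_n$ and $\hat\e_n$ and cancels in the difference. In the state-dependent case $\lambda_n=\lambda(S_n)$ is common to both, so $\e_n-\hat\e_n=\lambda_n\gma_n\rho_{n-1}(\e_{n-1}-\hat\e_{n-1})$ and hence $\e_n-\hat\e_n=\lambda_1^n\gamma_1^n\rho_0^{n-1}(\e_0-\hat\e_0)$. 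In the history-dependent case the memory states $y_n$ coincide for both sequences (they depend only on $\{S_n\}$ and $y_0$), so the difference equals $\gma_n\rho_{n-1}\big(\lambda(y_n,\e_{n-1})\e_{n-1}-\lambda(y_n,\hat\e_{n-1})\hat\e_{n-1}\big)$, and the non-expansiveness in Condition~\ref{cond-lambda}(i) yields $\|\e_n-\hat\e_n\|\le\gma_n\rho_{n-1}\|\e_{n-1}-\hat\e_{n-1}\|$, whence $\|\e_n-\hat\e_n\|\le\gamma_1^n\rho_0^{n-1}\|\e_0-\hat\e_0\|$ up to a norm-equivalence constant. Both cases thus reduce to controlling $\gamma_1^n\rho_0^{n-1}$.

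The decay of $\gamma_1^n\rho_0^{n-1}$ follows from Condition~\ref{cond-pol}(i). Performing the change of measure through the importance-sampling product $\rho_0^{n-1}$, I would compute $\E[\gamma_1^n\rho_0^{n-1}\mid S_0=s]=\E^\pi[\gamma_1^n\mid S_0=s]=e_s^\tr(P\Gamma)^n\1\le\1^\tr(P\Gamma)^n\1$, so that $\sum_{n\ge1}\E[\gamma_1^n\rho_0^{n-1}]\le\1^\tr\big((I-P\Gamma)^{-1}-I\big)\1<\infty$, using $(I-P\Gamma)^{-1}=\sum_n(P\Gamma)^n$. Summability forces $\gamma_1^n\rho_0^{n-1}\to0$ almost surely, giving $\e_n-\hat\e_n\asto0$; and $\E[\gamma_1^n\rho_0^{n-1}]\to0$ supplies the constant $c_n$ (take $c_n$ proportional to $\E[\gamma_1^n\rho_0^{n-1}]$ times the diameter of $E$), which tends to $0$ uniformly over $\e_0,\hat\e_0\in E$.

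For part~(ii), the idea is a stochastic-domination comparison with the stationary trace. Unfolding (\ref{eq-gtd-e}) gives $\e_n=\sum_{m=1}^n\lambda_{m+1}^n\gamma_{m+1}^n\rho_m^{n-1}\phi(S_m)+\lambda_1^n\gamma_1^n\rho_0^{n-1}\e_0$. Running the state chain from its invariant distribution $\xi$ and using the double-ended stationary process, a time shift sending $n\mapsto0$ shows that $\e_n$ is equal in distribution to $\tilde\e_n+\lambda_{1-n}^0\gamma_{1-n}^0\rho_{-n}^{-1}\e_0$, where $\tilde\e_n=\sum_{k=0}^{n-1}\lambda_{1-k}^0\gamma_{1-k}^0\rho_{-k}^{-1}\phi(S_{-k})$ is a partial sum of the stationary series in Lemma~\ref{lem-exp-e}. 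Hence $\|\e_n\|$ is stochastically dominated, uniformly in $n$, by $U+R_n$, where $U=\sum_{k=0}^\infty\lambda_{1-k}^0\gamma_{1-k}^0\rho_{-k}^{-1}\|\phi(S_{-k})\|$ is integrable by (\ref{eq-prfc0}) and $R_n=\gamma_{1-n}^0\rho_{-n}^{-1}\|\e_0\|\to0$ in mean. A single integrable variable and a mean-null sequence each form uniformly integrable families, so $\{U+R_n\}$ is uniformly integrable; since $\E[\|\e_n\|\I(\|\e_n\|>a)]=\E[(\|\e_n\|-a)^+]+a\,\Pr(\|\e_n\|>a)$ with both summands monotone under stochastic domination, uniform integrability transfers to $\{\e_n\}$. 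Finally, to pass from the stationary state start to an arbitrary fixed initial state $s_0$, I use irreducibility of $\P^o$: since $\xi_{s_0}>0$ and $\|\e_n\|\I(\|\e_n\|>a)$ is a nonnegative function of $(S_0,\dots,S_n)$ for fixed $\e_0$, one has $\E_{s_0}[\,\cdot\,]\le\xi_{s_0}^{-1}\E_\xi[\,\cdot\,]$, preserving uniform integrability.

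I expect the main obstacle to be part~(ii): identifying the right dominating variable. The crux is recognizing, via time reversal and the change of measure already used in Lemma~\ref{lem-exp-e}, that the traces started from a fixed $\e_0$ are stochastically dominated, for every $n$, by the integrable norm $U$ of the stationary trace plus a vanishing correction; this converts the tail condition defining uniform integrability into a dominated-convergence statement. A secondary technical point is the history-dependent case of part~(i), where no explicit product formula is available and the recursion must be closed using only the non-expansiveness Condition~\ref{cond-lambda}(i); and throughout part~(ii) one must keep track of the fact that $\e_n$ depends on the trace's own initial value, so that both the stationary comparison and the reduction to a fixed initial state are applied to functions of the state trajectory alone with $\e_0$ held fixed.
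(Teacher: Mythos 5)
Your proposal is correct in both parts. For part~(i) you take essentially the paper's route: the difference recursion contracts (the common $\lambda_n$ gives an exact product formula in the state-dependent case, and Condition~\ref{cond-lambda}(i) gives non-expansiveness in the history-dependent case), so everything reduces to the decay of $\gamma_1^n\rho_0^{n-1}$, and the change of measure yields $\E[\gamma_1^n\rho_0^{n-1}]\le \1^\tr (P\Gm)^n\1 \to 0$, which is exactly the bound $\E[\|\e_n-\hat\e_n\|]\le \|\e_0-\hat\e_0\|\cdot \1^\tr(P\Gm)^n\1$ that the paper records; the only difference is that the paper obtains the almost sure statement by citing a lemma proved via supermartingale convergence, while you get it from summability of the expectations --- a minor, equally valid variation.

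For part~(ii), however, your argument is genuinely different from the paper's. The paper introduces truncated traces $\tilde\e_{n,K}$ (the $K$-term tail sums of the unfolded recursion, which lie in bounded sets determined by $K$), proves the uniform-in-$n$ $L^1$ approximation $\sup_{n}\E[\|\e_n-\tilde\e_{n,K}\|]\le L_K$ with $L_K\downarrow 0$ as in (\ref{eq-trunc-e-appr}), and then invokes the proof of a corresponding ETD result to convert ``bounded approximants plus vanishing $L^1$ error'' into uniform integrability. You instead exploit stationarity and time reversal: under the stationary state distribution, $\e_n$ started from a fixed $\e_0$ has the same law as the $n$-term partial sum of the stationary series of Lemma~\ref{lem-exp-e} plus the remainder $\lambda_{1-n}^0\gamma_{1-n}^0\rho_{-n}^{-1}\e_0$, so $\|\e_n\|$ is stochastically dominated by an integrable variable $U$ (integrable precisely by (\ref{eq-prfc0})) plus a term $R_n\to 0$ in mean; you then transfer uniform integrability through the domination via $\E[X\I(X>a)]=\E[(X-a)^+]+a\Pr(X>a)$ --- correctly recognizing that pointwise domination does not directly control the indicator event --- and reduce an arbitrary fixed initial state to the stationary start via $\E_{s_0}[\,\cdot\,]\le \xi_{s_0}^{-1}\E_\xi[\,\cdot\,]$, which uses irreducibility of $\P^o$ and the finiteness of $\S$. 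What each approach buys: yours is fully self-contained and makes transparent that uniform integrability of the trace iterates is really a consequence of the integrability of the stationary trace; the paper's truncation argument avoids any stationarity or time-reversal step (so it does not need the double-ended process at all for this proposition) and reuses machinery already developed for ETD. Both ultimately rest on the same estimate $\sum_n \1^\tr(P\Gm)^n\1<\infty$ from Condition~\ref{cond-pol}(i).
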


\begin{proof}
In (i), that $\e_n - \hat \e_n \asto 0$ is proved in \cite[Lemma 2.1]{gbe_td17} for the case of history-dependent $\lambda$, using the continuity condition in Condition~\ref{cond-lambda}(i), the supermartingale convergence theorem, and the fact that $(P \Gamma)^n$ converges to a zero matrix under Condition~\ref{cond-pol}(i). For state-dependent $\lambda$, it follows from the same argument (which is similar to the proof of \cite[Lemma 3.2]{Yu-siam-lstd} for constant $\lambda$).
The proof, for both cases of $\lambda$, also shows that for all $n$,
$$\E [ \| \e_n - \hat \e_n \| ] \leq \| \e_0 - \hat \e_0 \| \cdot \1^\tr\! (P \Gamma)^n \1.$$
Since $\lim_{n \to \infty} \1^\tr\! (P \Gamma)^n \1 = 0$ and $\sup_{\e, \hat \e \in E} \| \e - \hat \e\| < \infty$ if $E$ is compact, the constants $c_n$ in (i) can be chosen as $c_n = \sup_{\e, \hat \e \in E} \| \e - \hat \e\| \cdot \1^\tr\! (P \Gamma)^n \1$ for all $n$.
 
We now show (ii) is true. We can apply the same proof of a similar result \cite[Prop.\ 2(i), p.\ 25]{etd-wkconv}, which establishes the uniform integrability of the trace iterates produced by the ETD algorithm, provided that we can show the following approximation property of what we call truncated traces. (The proof of \cite[Prop.\ 2(i)]{etd-wkconv} uses only the approximation property just mentioned and does not care about how $\{\e_n\}$ is generated. This is why we can apply that proof exactly to the case considered here.) Specifically, for each integer $K \geq 1$, define truncated traces $\{\tilde \e_{n,K}\}$ as follows:
\begin{equation} \label{eq-trunc-e}
   \tilde \e_n = \e_n \ \ \ \text{if} \ n \leq K;  \qquad \tilde \e_{n,K} = \textstyle{ \sum_{i = 0}^K \lambda_{n-i+1}^n \gamma_{n-i+1}^n \rho_{n-i}^{n-1}  \phi(S_{n-i})} \ \ \ \text{if} \ n > K.
\end{equation}
By definition $\{\tilde \e_{n,K}\}$ lies in a bounded set depending on the initial $\e_0$ and $K$.%footnote starts
\footnote{To see this, notice that for $n > K$, the truncated traces $\tilde \e_{n,K}$ do not depend on the initial $\e_0$, and being the sum of $K+1$ functions of the states, they lie in a bounded set determined by $K$ (since the state space is finite). For all $n \leq K$, $\tilde \e_{n,K}$ also lies in a bounded set, which is determined by $K$ and the initial $\e_0$.}
%footnote ends
To use these bounded truncated traces for establishing the uniform integrability of the original traces $\{\e_n\}$, the approximation property we need is that given an initial $\e_0$, for each $K$, there is a constant $L_K$ that depends on $K$ and $\e_0$ such that
\begin{equation} \label{eq-trunc-e-appr}
   \textstyle{\sup_{n \geq 0} \E \big[ \| \e_n - \tilde \e_{n,K} \| \big] \leq L_K}, \quad \text{and} \ \ L_K \downarrow 0 \ \   \text{as} \ K \to + \infty
\end{equation}   
(i.e., $L_K$ decreases to 0 as $K \to \infty$). 
For $n > K$, with $c = \max \{ \| \e_0 \|,  \max_{s \in \S} \| \phi(s)\| \} < \infty$, we have
\begin{align*}
  \E \big[ \| \e_n - \tilde \e_{n,K} \| \big] & \leq   \sum_{i = K+1}^{n-1}  \E \big[  \lambda_{n-i+1}^n \gamma_{n-i+1}^n \rho_{n-i}^{n-1} \| \phi(S_{n-i}) \| \big] + 
\E \big[ \lambda_{1}^n \gamma_{1}^n \rho_{0}^{n-1} \| \e_0\| \big] \\
  & \leq c \cdot \sum_{i = K+1}^{n}  \E \big[ \gamma_{n-i+1}^n \rho_{n-i}^{n-1} \big] \leq c \cdot \sum_{i = K+1}^n \1^\tr\! (P \Gamma)^i \1 \leq c \cdot \sum_{i = K+1}^\infty \1^\tr\! (P \Gamma)^i \1.
\end{align*}
Now let $L_K$ be the constant in the last expression above. Then the bound in (\ref{eq-trunc-e-appr}) holds and moreover, since under Condition~\ref{cond-pol}(i) the matrix $\sum_{i = K+1}^\infty (P \Gamma)^i$ converges to a zero matrix as $K \to \infty$, we have $L_K \downarrow 0$ as $K \to \infty$ as required in (\ref{eq-trunc-e-appr}). Feeding this approximation property of truncated traces to the proof of \cite[Prop.\ 2(i), p.\ 25]{etd-wkconv}, we obtain the uniform integrability of trace variables stated in (ii).
\end{proof}

The next proposition, particularly, its part (i), will be important for convergence analysis. It will be used to show that the algorithms satisfy certain ``averaging conditions'' that will allow us to relate their average dynamics to respective mean ODEs. Part (i) is similar to \cite[Prop.\ 3]{etd-wkconv} for the ETD algorithm, and its proof is also similar, albeit simpler. Part (ii) will be used in the analysis of the two-time-scale algorithms at their fast time-scale. In the proposition, the notation $\lim_{m, n \to \infty}$ or $\lim_{m, n, j \to \infty}$ means that the limit is taken as $m, n$  or $m, n, j$ go to infinity in any possible way.

\begin{prop} \label{prop-trace-averaging}
Let Assumption~\ref{cond-collective} hold. Let $f(z)$ be a vector-valued function that is Lipschitz continuous in the trace variable $\e$, and let $\bar f = \E_\zeta [ f(Z_0) ]$. 
Then the following hold:\vspace*{-3pt}
\begin{itemize}
\item[\rm (i)] For each compact set $D \subset \Z$,
$$ \textstyle{ \lim_{m, n \to \infty} \frac{1}{m} \sum_{i=n}^{n+m-1} \big( f(Z_i) - \bar f \, \big) \I(Z_n \in D) = 0} \ \ \ \text{in mean}.$$
\item[\rm (ii)] For a given sequence $\{c_i\}$ of positive numbers with $c_i \to 0$ as $i \to \infty$, 
\begin{align*}
  \textstyle{ \lim_{m, n \to \infty} \frac{1}{m} \sum_{i=n}^{n+m-1} \E \big[ c_i \cdot \| f(Z_i) \|\big] = 0}, \qquad
   \textstyle{ \lim_{m, n, j \to \infty}  \frac{1}{m} \sum_{i=n}^{n+m-1} \E \big[c_j \cdot \| f(Z_i) \| \big] = 0}. 
\end{align*} 
\end{itemize}
\end{prop}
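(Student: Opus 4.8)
The plan is to dispose of part (ii) quickly from a single uniform moment bound, and to treat part (i) as the substantive claim, using the Markov property of the state-trace process together with the trace-forgetting estimate of Proposition~\ref{prop-trace}(i). Throughout I write $g = f - \bar f$, so that $\E_\zeta[g(Z_0)] = 0$, and I let $X_n$ stand for the state-trace variable $(S_n,\e_n)$ or $(S_n,y_n,\e_n)$, with $\F_n = \sigma(X_0,\dots,X_n)$ and $\E_x$ the expectation when the process is started from $X_0 = x$.

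For part (ii), the key observation is that $\sup_i \E[\|f(Z_i)\|] =: M < \infty$. Since $f$ is Lipschitz in the trace variable uniformly over the finitely many values of the remaining coordinates, $\|f(z)\| \le C_0(1+\|\e\|)$ for some constant $C_0$; taking expectations and invoking the uniform integrability of $\{\e_n\}$ (Proposition~\ref{prop-trace}(ii) for state-dependent $\lambda$, and the boundedness from Condition~\ref{cond-lambda}(ii) for history-dependent $\lambda$), which entails $\sup_n \E[\|\e_n\|] < \infty$, gives the bound. With $M$ in hand both limits are immediate: $\frac1m\sum_{i=n}^{n+m-1}\E[c_i\|f(Z_i)\|] \le M\,\frac1m\sum_{i=n}^{n+m-1}c_i \le M\sup_{i\ge n}c_i \to 0$ as $n\to\infty$, and $\frac1m\sum_{i=n}^{n+m-1}\E[c_j\|f(Z_i)\|] \le c_j M \to 0$ as $j\to\infty$, each uniformly in the remaining indices since the bounds depend only on $n$ and on $j$, respectively.

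For part (i), let $E_D$ be the (compact) projection of $D$ onto the trace coordinate, so that $\I(Z_n\in D)\le \I(\e_n\in E_D)$ and the latter event is $\F_n$-measurable; this reduction is what makes the conditioning clean. Since $Z_i$ is a function of $(X_i, X_{i+1})$, the time-homogeneous Markov property gives
\[
\E\Big[\big\|\tfrac{1}{m}\textstyle\sum_{i=n}^{n+m-1} g(Z_i)\big\| \;\Big|\; \F_n\Big] = H_m(X_n), \qquad H_m(x) := \E_x\Big[\big\|\tfrac{1}{m}\textstyle\sum_{i=0}^{m-1} g(Z_i)\big\|\Big],
\]
so the quantity in (i) is at most $\E[H_m(X_n)\,\I(\e_n\in E_D)] \le \sup\{H_m(x) : \mathrm{trace}(x)\in E_D\}$, and it remains to show this supremum tends to $0$. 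For each fixed initial condition, $H_m(x)\to 0$ is exactly the convergence in mean of Theorem~\ref{thm-erg}(ii) (as $\bar g = 0$). To obtain uniformity I would run two coupled copies from the same states and memory but different initial traces $\e,\e'$; the reverse triangle inequality and the Lipschitz continuity of $f$ in the trace variable yield $|H_m(s,\e)-H_m(s,\e')| \le \frac{L}{m}\sum_{i=0}^{m-1}\E[\|\e_i-\e_i'\|] \le \frac{L\|\e-\e'\|}{m}\,\1^\tr (I-P\Gm)^{-1}\1 = \frac{C\|\e-\e'\|}{m}$, using the bound $\E[\|\e_i-\e_i'\|]\le\|\e-\e'\|\,\1^\tr(P\Gm)^i\1$ from the proof of Proposition~\ref{prop-trace}(i) and the summability $\sum_i (P\Gm)^i = (I-P\Gm)^{-1}$ guaranteed by Condition~\ref{cond-pol}(i). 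Fixing the reference trace $0$ and $R = \sup_{\e\in E_D}\|\e\|$, this gives $H_m(s,\e) \le H_m(s,0) + CR/m$ for all $\e\in E_D$; since there are only finitely many values of $(s)$ or $(s,y)$ and each $H_m(s,0)\to 0$, the supremum over $E_D$ tends to $0$, completing (i).

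The main obstacle is this final uniformity step. The claim concerns the norm of the window average, that is, convergence in mean rather than the norm of a conditional mean, so it is essential to transport the ergodic convergence through $H_m$ directly and not through $\E[\,\cdot\mid\F_n]$, for which the fluctuations would be lost. The delicate point is then that Theorem~\ref{thm-erg}(ii) supplies only pointwise convergence of $H_m$, whereas the supremum ranges over an infinite (compact) set of initial traces; the $O(1/m)$ trace-forgetting estimate of Proposition~\ref{prop-trace}(i) is precisely what converts this pointwise convergence into uniform convergence over $E_D$, and its validity in the history-dependent case rests on the non-expansiveness in Condition~\ref{cond-lambda}(i).
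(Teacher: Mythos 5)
Your proposal is correct and is essentially the paper's own argument: both proofs reduce part (i) to the finitely many state/memory values, compare the trace sequence started in the compact set with a coupled copy started from a fixed reference trace (using the non-expansiveness bound $\E\big[\|\e_i - \e_i'\|\big] \le \|\e_0-\e_0'\|\,\1^\tr (P\Gm)^i \1$ from the proof of Proposition~\ref{prop-trace}(i)), and invoke the mean ergodic theorem of Theorem~\ref{thm-erg}(ii) for the reference copy, while part (ii) is handled identically via the linear-growth bound on $f$ and $\sup_n \E\big[\|\e_n\|\big]<\infty$. Your repackaging through the Markov property and the function $H_m$ — with the $O(1/m)$ uniform Lipschitz estimate in the initial trace replacing the paper's Ces\`aro treatment of the forgetting term, and with conditioning on $\e_n \in E_D$ replacing the paper's conditioning on the transition event $Z^o_{n_j}=\bar z^o$ — is a clean, equivalent organization of the same two ingredients.
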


\begin{proof}
For $z$ to be in a given compact set $D \subset \Z$, there is only a finite number of values that the state and memory state components of $z$ can take (since the spaces of states and memory states are finite). 
This means that to prove (i) for each compact set $D$, it is sufficient to prove it for each $D$ of the form $\{(s, \e, s') \mid \e \in  E \}$ or $\{(s, y, \e, s') \mid \e \in  E \}$, where $E$ is an arbitrary compact set in $\re^d$, and $(s,s')$ or $(s,y,s')$ (depending on the case of $\lambda$) is an arbitrary, possible state transition or an arbitrary, possible triple of memory state and state transition. To simplify notation, consider such a pair $(s,s')$ or triple $(s,y,s')$, and denote it by $\bar z^o$. 
Denote $Z^o_i = (S_i, S_{i+1})$ for the case of state-dependent $\lambda$ and $Z^o_i = (S_i, y_i, S_{i+1})$ for the case of history-dependent $\lambda$. 
Then, as just discussed, to prove (i), it is sufficient to show that for an arbitrary compact set $E \subset \re^d$ and two arbitrary sequences of integers $m_j, n_j \to \infty$, 
\begin{equation} \label{eq-prfb01}
 \lim_{j \to \infty} \, \frac{1}{m_j} \sum_{i=n_j}^{n_j+m_j-1} \big( f(Z_i) - \bar f \, \big) \cdot \I\big(\e_{n_j} \in E, Z^o_{n_j} = \bar z^o \big) = 0  \ \ \ \text{in mean}.
\end{equation} 

To prove (\ref{eq-prfb01}), we first define, for each $j$, auxiliary trace variables $\e_i^j, i \geq n_j$, to decompose the difference term $f(Z_i) - \bar f$ in (\ref{eq-prfb01}). Specifically, we define $\e_i^j$ as follows:\vspace*{-3pt}
\begin{enumerate}
\item[(a)] Fix a point $\bar e \in E$. 
\item[(b)] For each $j \geq 1$, define a sequence of traces, $\e_i^{j}$, $i \geq n_j$, by using the same recursion that defines the traces $\{\e_i\}$, based on the same trajectory of $\{S_i\}$ or $\{(S_i, y_i)\}$ (for each case of $\lambda$, respectively), but starting at the iteration $n_j$ with the initial $\e_{n_j}^{j} = \bar \e$.\vspace*{-3pt}  
\end{enumerate}
Let $Z_i^j =(S_i, e_i^j, S_{i+1})$ or $Z_i^j =(S_i, y_i, e_i^j, S_{i+1})$ (for each case of $\lambda$, respectively). We write the summation in (\ref{eq-prfb01}) as
$$\sum_{i=n_j}^{n_j+m_j-1} \big( f( Z_i) - \bar f \big) = \sum_{i=n_j}^{n_j+m_j-1} \big( f(Z^j_i) - \bar f \, \big) + \sum_{i=n_j}^{n_j+m_j-1} \big( f(Z_i) - f(Z^j_i) \big).$$
Equation~(\ref{eq-prfb01}) will follow immediately if we show that
\begin{align} 
 & \lim_{j \to \infty} \, \frac{1}{m_j} \sum_{i=n_j}^{n_j+m_j-1} \big( f(Z^j_i) - \bar f \, \big) \cdot \I\big(\e_{n_j} \in E, Z^o_{n_j} = \bar z^o\big) = 0  \ \ \ \text{in mean}, \label{eq-prfb02} \\
 & \lim_{j \to \infty} \, \frac{1}{m_j} \sum_{i=n_j}^{n_j+m_j-1} \big( f(Z_i) - f(Z^j_i) \big) \cdot \I\big(\e_{n_j} \in E, Z^o_{n_j} = \bar z^o\big) = 0  \ \ \ \text{in mean}. \label{eq-prfb03}
\end{align}

To prove (\ref{eq-prfb02}), note that by construction, for all $j$, conditioned on the event $Z^o_{n_j} = \bar z^o$, the process $\{Z^j_{n_j + i}\}_{ i \geq 0}$ has the same probability distribution: it is just the distribution of the process $\hat{Z}_i = (\hat S_i, \hat \e_i, \hat S_{i+1}), i \geq 0,$ or the process $\hat{Z}_i = (\hat S_i, \hat y_i, \hat \e_i, \hat S_{i+1}), i \geq 0$ (depending on the case of $\lambda$), for a state-trace process $\{(\hat S_i, \hat \e_i)\}$ or $\{(\hat S_i, \hat y_i, \hat \e_i)\}$ that starts from $\hat \e_0 = \bar e$ and $(\hat S_0, \hat S_1) = \bar z^o$ or $(\hat S_0, \hat y_0, \hat S_1) = \bar z^o$ (depending on the case of $\lambda$). Therefore, the convergence-in-mean stated in (\ref{eq-prfb02}) is equivalent to 
\begin{equation} \label{eq-prfb04}
  \textstyle{ \lim_{m \to \infty} \, \frac{1}{m} \sum_{i=0}^{m-1} \big( f(\hat Z_i) - \bar f \, \big) = 0} \ \ \ \text{in mean}.
\end{equation}  
Since the function $f(z)$ is Lipschitz continuous in the trace variable $\e$, (\ref{eq-prfb04}) is true by Theorem~\ref{thm-erg}(ii). Therefore, (\ref{eq-prfb02}) holds. 

We now prove (\ref{eq-prfb03}).
Again, by the Lipschitz continuity of $f(z)$ in the trace variable $\e$, it is sufficient to show 
\begin{equation} \label{eq-prfb05}
\lim_{j \to \infty} \frac{1}{m_j} \sum_{i=n_j}^{n_j+m_j-1}  \big\| \e_i - \e^j_i \big\| \cdot \I\big(\e_{n_j} \in E, \, Z^o_{n_j} = \bar z^o\big) = 0  \ \ \ \text{in mean}.
\end{equation} 
By Prop.~\ref{prop-trace}(i), for the given compact set $E$, starting from time $n_j$ and initial $\e_{n_j}, \e^j_{n_j} \in E$, we have $\E \big\| \e_{n_j + i} - \e^j_{n_j + i} \big\| \leq c_i$ for some constant $c_i$ that satisfies $c_i \to 0$ as $i \to \infty$. This implies (\ref{eq-prfb05}), and hence (\ref{eq-prfb03}) also holds. We have thus proved (i).

To prove (ii), note that since $f(z)$ is Lipschitz continuous in the trace variable $\e$, there is some constant $\ell$ such that $\| f(z) \| \leq \ell (1 + \| \e\|)$ for all $z$. Hence, $\E \big[ \| f(Z_i) \| \big] \leq  \ell + \ell \, \E \big[ \| \e_i \| \big]$ for all $i \geq 0$. 
We also have $\sup_{i \geq 0} \E \big[ \| \e_i\| \big] < \infty$ by Prop.~\ref{prop-trace}(ii), and $\lim_{i \to \infty} c_i = 0$ by assumption. 
These facts together clearly imply (ii). 
\end{proof}
%\vspace*{2pt}

\begin{rem}[About the Lipschitz continuity condition on $f(\cdot)$] \rm \label{rmk-Lipschitzcont}
Note that for a bounded measurable function $f$, Prop.~\ref{prop-trace-averaging}(ii) holds trivially. For Prop.~\ref{prop-trace-averaging}(i), however, our proof (in particular, the proof of (\ref{eq-prfb03})) needs the function $f$ to be Lipschitz continuous in the trace variable, even if $f$ is bounded and continuous, and it is not clear to us if this Lipschitz continuity condition can be relaxed. 
\end{rem}

\section{Convergence Analyses I} \label{sec-3}

This section starts our convergence analysis of several gradient-based TD algorithms. 
The first two subsections are for the two-time-scale GTDa and GTDb, respectively. In the subsequent Section~\ref{sec-4}, we will analyze several other algorithms, including single-time-scale algorithms.
The purposes of this section are: (i) to present specific convergence results for GTD (Sections~\ref{sec-gtd1},~\ref{sec-gtd2}); (ii) to discuss a way to ``robustify'' the algorithms against the high-variance issue in off-policy learning (Section~\ref{sec-bias-vrt}), and to discuss a composite scheme of setting the $\lambda$-parameters that extends the ones given in Section~\ref{sec-choice-lambda} (Section~\ref{sec-cp-extension}), both of which can also be applied to other algorithms later; and (iii) to show the overall proof structure and the main proof arguments, so that when we analyze other algorithms, we can focus on their differences from GTD, without repeating similar proofs. 

In this and next sections, we use the weak convergence method from stochastic approximation theory \cite[Chap.\ 8]{KuY03} to analyze the algorithms for a broad range of stepsizes. The advantage of the weak convergence method is that the conditions involved are much weaker than those for almost sure convergence. As a result, one can characterize the asymptotic behavior of the algorithms for slowly diminishing stepsizes and constant stepsizes, which are often used in practice. In comparison, almost sure convergence, which we will address in Section~\ref{sec-5}, requires more stringent conditions, and the range of stepsizes for convergence is narrower---especially narrow for the case of state-dependent $\lambda$ where the trace iterates can be unbounded and have unbounded variances.

In most part of this paper, we will analyze constrained versions of the algorithms, where the iterates are confined in some bounded sets. This is not a serious limit in practice if the constraint sets are large enough; in addition, constraints are certainly needed if constant stepsizes are used.

We find it better to state the convergence theorems close to where they are finally proved. 
For quick access to the convergence results of this section, we list them here:
\vspace*{-0.1cm}
\begin{itemize}
\item GTDa: Theorem~\ref{thm-gtd1-wk-dim} (for diminishing stepsize), Theorem~\ref{thm-gtd1-wk-constant} (for constant stepsize), and  Theorem~\ref{thm-gtd1-average} (for averaged iterates in the constant-stepsize case);
\item GTDb: Theorem~\ref{thm-gtd2-wk};
\item ``robustified'' biased variants of GTD: Theorem~\ref{thm-biasgtd1-wk};
\item GTD with a composite scheme of setting $\lambda$: Theorem~\ref{thm-composite-gtd}.
\end{itemize}
We remark that for the (constrained) GTDa and GTDb algorithms, the convergence theorems we present in this section have the same conclusions and are proved essentially with the same arguments. However, in the case of GTDa, these theorems can be improved further by using a minimax problem formulation, and we will address that in Section~\ref{sec-gtda-revisit} (see Theorem~\ref{thm-gtda-relaxBx}) when we have all the tools needed.

\subsection{GTDa} \label{sec-gtd1}

In this subsection we analyze a constrained version of GTDa under Assumption~\ref{cond-collective}:
\begin{align}
   \theta_{n+1} & = \Pi_{B_\theta} \Big( \theta_n + \alpha_n \, \rho_n \big(\phi(S_n) - \gma_{n+1} \phi(S_{n+1}) \big) \cdot \e_n^\tr x_n \Big),  \label{eq-gtd1a-th}\\
   x_{n+1} & = \Pi_{B_x} \Big( x_n + \beta_n \big(\e_n \delta_n(v_{\theta_n}) - \phi(S_n) \phi(S_n)^\tr x_n\big) \Big), \label{eq-gtd1a-x}
\end{align} 
where we take $B_\theta$ and $B_x$ to simply be closed balls in $\re^d$ centered at the origin and with large radii, and $\Pi_{B_\theta}$ and $\Pi_{B_x}$ denote the projection on the constraint sets $B_\theta$ and $B_x$, respectively, w.r.t.\ $\|\cdot\|_2$ (the usual Euclidean norm). We will later impose a condition on $B_x$ (cf.\ (\ref{eq-cond-Bx0})) to ensure that it is large enough so that the algorithm can carry out gradient-descent to minimize $J(\theta)$ over $B_\theta$. A much weaker condition on $B_x$ also works for GTDa (under which GTDa computes gradients of a function different from but related to $J$) and will be discussed in Section~\ref{sec-gtda-revisit}.

We choose the constraint sets this way because of a nice property: the projection operations will not take the iterates outside the subspace $\sp \{\phi(\S)\}$ if the iterates are initially in this subspace. This is due to the nature of TD algorithms, as can be see from the above formulae and the formula (\ref{eq-gtd-e}) for calculating traces:
$$ \e_{n} = \lambda_n \gma_n \rho_{n-1} e_{n-1} + \phi(S_n).$$
It is especially convenient to have the $x$-iterates lying in $\sp \{\phi(\S)\}$: It avoids all the nuisances due to the possible linear dependence between the component functions of the feature map $\phi$, so we can focus the convergence analysis on issues that are important.

The GTDa algorithm (\ref{eq-gtd1a-th})-(\ref{eq-gtd1a-x}) is a two-time-scale algorithm. Its $\theta$-iterates vary at a slow time-scale determined by the stepsizes $\{\alpha_n\}$, while its $x$-iterates vary at a fast time-scale determined by the stepsizes $\{\beta_n\}$. We will consider two types of stepsizes: diminishing stepsize and constant stepsize. The latter refers to the case where $\alpha_n = \alpha$ and $\beta_n = \beta$ for all $n$, with $\alpha < < \beta$. For diminishing stepsizes, we impose the following condition:

\begin{assumption}[Condition on diminishing stepsizes for two-time-scale algorithms] \label{cond-large-stepsize} \hfill \\
The (deterministic) nonnegative sequences $\{\alpha_n\}$ and $\{\beta_n\}$ satisfy that 
$$\sum_{n \geq 0} \alpha_n = \sum_{n \geq 0} \beta_n =\infty, \quad \alpha_n,\, \beta_n \to 0 \ \  \text{and} \ \ \alpha_n/\beta_n \to 0  \ \text{as} \ n \to \infty.$$ 
Moreover, for some sequences of integers $m_n \to \infty$ and $m'_n \to \infty$,
\begin{equation} \label{cond-w-stepsize}
\lim_{n \to \infty} \, \sup_{0 \leq j \leq m_n} \left| \frac{\alpha_{n+j}}{\alpha_n} - 1 \right| = 0, \qquad \lim_{n \to \infty} \, \sup_{0 \leq j \leq m'_n} \left| \frac{\beta_{n+j}}{\beta_n} - 1 \right| = 0.
\end{equation}
\end{assumption}
\smallskip

The condition $\alpha_n/\beta_n \to 0$ above shows the separation between the time-scale of the $\theta$-iterates and that of the $x$-iterates. The condition (\ref{cond-w-stepsize}) above allows large stepsizes that diminish very slowly, such as $n^{-c}, c \in (0,1]$. This stepsize condition is used in the book \cite[Chap.\ 8]{KuY03} to analyze stochastic approximation algorithms with the weak convergence method; the results from that book will be used in this section.

The structure of the proof is the same for the two-time-scale GTDa, GTDb, and other two-time-scale algorithms that we will consider later. We first use the ergodicity of the state-trace process to identify the desired mean ODEs associated with the algorithms. We then use the ergodicity property and other properties of the trace iterates given in Section~\ref{sec-property-stproc} to show that the conditions of the analyses in \cite[Chap.\ 8]{KuY03} (for diminishing as well as constant stepsizes) are satisfied by each algorithm. The theorems in \cite[Chap.\ 8]{KuY03} ensure that the asymptotic behavior of the algorithm is characterized by the asymptotic behavior of the solutions of its associated mean ODE. 
There are actually two mean ODEs associated with a two-time-scale algorithm, one for each time-scale. 
We apply the theorems in \cite[Chap.\ 8]{KuY03} for single-time-scale algorithms twice to analyze the behavior of the algorithm at each time-scale, feeding the result on the fast time-scale to the analysis of the slow time-scale.
For constrained algorithms, almost no conditions are imposed on the mean ODE for the slow time-scale. This allows us to establish the connection between the behavior of an algorithm and the solutions of its slow-time-scale ODE, without worrying about what those ODE solutions actually are. We can study the ODE separately to get its solution properties and translate them into the convergence properties of the algorithm. This last proof step is simpler and quicker to carry out, so when presenting our proofs, we will first tackle this step by discussing the desired mean ODEs and deriving their solution properties.

This subsection is relatively long. After we explain the main proof arguments for GTDa in this subsection, we will apply the same arguments to other algorithms in later subsections, so they will be much shorter than this one.

\subsubsection{Associated mean ODEs and their solution properties} \label{sec-gtd1-odes}

We first rewrite the GTDa algorithm in a form that is more concise and convenient for analysis.
Recall we defined the random variable $Z_n = (S_n, \e_n, S_{n+1})$ for state-dependent $\lambda$ and $Z_n = (S_n, y_n, \e_n, S_{n+1})$ for history-dependent $\lambda$. From now on, let us write $Z_n = (S_n, y_n, \e_n, S_{n+1})$ in both cases, treating the memory states $y_n$ as dummy variables in the case of state-dependent $\lambda$.
Let $\Z$ denote the space of $z$ as before. 
Let $\omega_{n+1} = \rho_n \big(R_{n+1} - r(S_n, S_{n+1}) \big)$, which is a noise term in the observed one-stage reward and which, given the history $(Z_0, \ldots, Z_{n})$ and $(R_1, \ldots, R_{n})$, has zero-mean.
We rewrite the GTDa algorithm (\ref{eq-gtd1a-th})-(\ref{eq-gtd1a-x}) as
\begin{align}
     \theta_{n+1} & = \Pi_{B_\theta} \big( \theta_n + \alpha_n \, g(\theta_n, x_n, Z_n) \big), \label{eq-gtd1-altth}\\
      x_{n+1} & = \Pi_{B_x} \big( x_n + \beta_n (k(\theta_n, x_n, Z_n) + \e_n \omega_{n+1} ) \big), \label{eq-gtd1-altx}
\end{align}
where with $z = (s, y, \e, s')$, the functions $g(\theta, x, z)$ and $k(\theta, x, z)$ are given by
\begin{equation} \label{eq-gtd1-gk}
   g(\theta, x, z) =  \rho(s, s') \big(\phi(s) - \gma(s') \phi(s') \big) \cdot \e^\tr\! x, \qquad k(\theta, x, z) = \e \, \bar{\delta}(s, s', v_{\theta}) - \phi(s) \phi(s)^\tr x.
\end{equation} 
Recall that $\bar{\delta}(s, s', v_{\theta})$ above is the temporal-difference term with the noise in the reward subtracted: 
$\bar{\delta}(s, s', v_{\theta}) =  \rho(s) \big( r(s, s') +  \gma(s') v_\theta(s') - v_\theta(s) \big)$, 
and we use the shorthand $\bar \delta_0(v_\theta)$ for $\bar{\delta}(S_0, S_1, v_{\theta})$.

As mentioned earlier, there are two ODEs associated with the algorithm, one for each time-scale. The slow-time-scale one depends on the solution to the fast-time-scale one. Let us work out what these ODEs are---at this stage they are only desired ODEs that we wish to associate with the algorithm, for their real connection with the algorithm can only be established after we carry out the analysis using stochastic approximation theory in the following subsections.

For the fast time-scale, by Prop.~\ref{prop-mean-fn}, for each fixed $(\theta, x)$, 
\begin{align}
  \bar k(\theta, x) : = \E_\zeta \big[ k(\theta, x, Z_0) \big] & = \E_\zeta \big[ \e_0 \, \bar{\delta}_0(v_\theta) \big] - \E_\zeta \big[ \phi(S_0) \phi(S_0)^\tr \big] \cdot x \notag \\
    & = \Phi^\tr \Xi \, (\Tl v_\theta - v_\theta) - \Phi^\tr \Xi \, \Phi \, x.  \label{eq-gtd1-bk}
\end{align}
For the function $g$ in the slow-time-scale iteration, by (\ref{mean-exp3}) in Prop.~\ref{prop-mean-fn}, for fixed $(\theta, x)$,
\begin{align}   
 \bar g(\theta, x): = \E_\zeta \big[ g(\theta, x, Z_0) \big] & = \E_\zeta \big[  \rho_0 \big(\phi(S_0) - \gma_{1}  \phi(S_{1}) \big) \cdot \e_0^\tr x \big] \notag \\
       & = (\Phi^\tr \Xi \, (I - \Pl) \, \Phi)^\tr x.  \label{eq-gtd1-meang}  
\end{align}
Recall from (\ref{eq-xtheta})-(\ref{eq-xtheta2}) that for any given $\theta$, there is a unique solution $x_\theta$ to the linear equation
\begin{equation} \label{eq-xtheta3}
  \bar k(\theta, x) = 0, \ \ \ x \in \sp \{\phi(\S)\}. 
\end{equation}  
Let us define a function $\bar x(\theta) = x_\theta$. 
By (\ref{eq-gtd1-meang}), 
\begin{equation}
\bar g(\theta, \bar x(\theta))  = \E_\zeta \big[ g(\theta, \bar x(\theta), Z_0) \big] = (\Phi^\tr \Xi \, (I - \Pl) \, \Phi)^\tr x_\theta =: \bar g(\theta),   \label{eq-gtd1-bg}
\end{equation}
where we have let $\bar g(\theta)$ stand for $\bar g(\theta, \bar x(\theta))$ in order to simplify notation for the subsequent analysis. 
From the expression (\ref{eq-Jgrad1}) of the gradient $\nabla J(\theta)$, we see that
$$\bar g(\theta) = - \nabla J(\theta).$$ 

At the fast time-scale determined by the stepsizes $\{\beta_n\}$, we can view the algorithm (\ref{eq-gtd1-altth})-(\ref{eq-gtd1-altx}) as the single-time-scale algorithm,
\begin{align}
     \theta_{n+1} & = \Pi_{B_\theta} \big( \theta_n + \beta_n \cdot (\alpha_n/\beta_n) \, g(\theta_n, x_n, Z_n) \big), \label{eq-gtd1-altth0}\\
      x_{n+1} & = \Pi_{B_x} \big( x_n + \beta_n (k(\theta_n, x_n, Z_n) + \e_n \omega_{n+1} ) \big). \label{eq-gtd1-altx0}
\end{align}
Since $\alpha_n < < \beta_n$, we wish the terms $(\alpha_n/\beta_n) \, g(\theta_n, x_n, Z_n)$ are negligible, and we want to relate the average dynamics of the algorithm to the mean ODE,
\begin{equation} \label{eq-gtd1-odefast}
\qquad \left( \! \begin{array}{c} \dot{\theta}(t) \\ \dot x(t) \end{array} \!\right) 
 =  \left( \! \begin{array}{l} 0 \\
 \bar k \big(\theta(t), x(t) \big) + z(t)  \end{array} \!\right), \qquad \theta(0) \in B_\theta, \  z(t) \in - \N_{B_x}(x(t)),
\end{equation} 
where $\N_{B_x}(x(t))$ is the normal cone of $B_x$ at $x(t)$, and $z(t)$ is the boundary reflection term. 
Specifically, if $x(t)$ is in the interior of $B_x$, the reflection term $z(t)$ is zero. 
If $x(t)$ is on the boundary of $B_x$, $z(t)$ cancels out the projection of $\bar k \big(\theta(t), x(t) \big)$ on $\N_{B_x}(x(t))$; namely, 
$$ z(t) = - \Pi_D \big( \bar k (\theta(t), x(t)) \big) \qquad \text{with} \ D = \N_{B_x}(x(t))$$
(the projection is w.r.t.\ $\| \cdot\|_2$).
This boundary reflection term is the ``minimal force'' needed to keep the solution $x(\cdot)$ in the constraint set $B_x$ (cf.\ \cite[Chap.\ 4.3]{KuY03}).

At the slow time-scale determined by the stepsize $\{\alpha_n\}$, we can view the $\theta$-iteration (\ref{eq-gtd1-altth}) as the single-time-scale iteration:
\begin{equation} \label{eq-gtd1-thnoise}
  \theta_{n+1}  = \Pi_{B_\theta} \big( \theta_n + \alpha_n (g(\theta_n, \bar x (\theta_n), Z_n)  + \Delta_n ) \big),  \quad \text{for} \ \ \Delta_n  = g(\theta_n, x_n, Z_n) - g(\theta_n, \bar x (\theta_n), Z_n).
\end{equation}
We wish that $x_n$ ``tracks'' $\bar x(\theta_n)$ so that the $\Delta_n$ terms have negligible effects on the average dynamics of the $\theta$-iterates, thereby allowing us to associate the following mean ODE with the $\theta$-iteration above: 
\begin{equation} \label{eq-gtd1-odeslow}
  \dot \theta(t) = \bar g\big(\theta(t)\big) + z(t),  \quad z(t) \in - \N_{B_\theta}\big(\theta(t)\big),
\end{equation}
where $\N_{B_\theta}(\theta(t))$ is the normal cone of $B_\theta$ at $\theta(t)$, and $z(t)$ is the boundary reflection term; i.e., $z(t)$ equals minus the projection of $\bar g(\theta(t))$ on $\N_{B_\theta}(\theta(t))$. 

As mentioned earlier, if we can establish the connection between the algorithm and these two ODEs, then the asymptotic properties of the solutions to these ODEs will tell us the asymptotic behavior of the algorithm. So let us examine first the solution properties of these two ODEs.

For the fast-time-scale ODE (\ref{eq-gtd1-odefast}), with $\theta(\cdot) \equiv \theta \in B_\theta$, note that the solution $x(\cdot)$ stays in the subspace $\sp \{\phi(\S)\}$ if $x(0) \in \sp \{\phi(\S)\}$,%footnote starts
\footnote{We have $\bar k(\theta, x) \in \sp \{\phi(\S)\}$ (cf.\ the expression (\ref{eq-gtd1-bk}) of $\bar k$). Therefore, its projection on the ball $B_x$ lies in $\sp \{\phi(\S)\}$ and so does its boundary reflection term. This shows $x(\cdot)$ stays in $\sp \{\phi(\S)\}$ if $x(0) \in \sp \{\phi(\S)\}$.} 
%footnote ends
and we wish $x(t)$ to converge to $\bar x(\theta)=x_\theta$ as $t \to \infty$ (so that there is hope for the iterates $x_n$ to track $\bar x (\theta_n)$ as desired for the slow-time-scale $\theta$-iteration).
For this we need $B_x$ to be large enough so that%footnote starts
\footnote{A sufficient condition for (\ref{eq-cond-Bx0}) is that 
$\text{radius}(B_x) \geq \textstyle{\sup_{\theta \in B_\theta}} \big\| \Phi^\tr \Xi \, (\Tl v_\theta - v_\theta) \big\|_2 /c$, where $c$ is the smallest nonzero eigenvalue of the matrix $\Phi^\tr \Xi \,\Phi$. 
}
%footnote ends 
\begin{equation} \label{eq-cond-Bx0}
B_x \supset \{x_\theta \mid \theta \in B_\theta\}. 
\end{equation}
Now notice that for each $\theta$, $\bar k(\theta, x) = - \nabla f(x)$ for the convex quadratic function $f(x) =  \tfrac{1}{2} x^\tr C x -  x^\tr b$, where $C =  \Phi^\tr \Xi \,\Phi$ and $b = \Phi^\tr \Xi \, (\Tl v_\theta - v_\theta)$, and $f(x)$ attains its minimum at $x_\theta$. So the fast-time-scale ODE is simply solving the problem $\inf_{x \in B_x} f(x)$ with gradient descent, and it is straightforward to derive its solution properties stated in the following lemma. (We defer the proof details until after we have discussed the slow-time-scale ODE, which has a similar analysis.)

Let $\cl \, D$ denote the closure of a set $D$. The limit set of the ODE (\ref{eq-gtd1-odefast}) is by definition the set
$$\bigcap_{ \bar t \geq 0} \, \cl \,\Big\{ \big(\theta(t), x(t)\big) \, \big| \,    \theta(0) \in B_\theta, \, x(0) \in B_x, \, t \geq \bar t  \, \Big\}$$
where each set in the intersection is the closure of the set of points that can be reached after time $\bar t$ by a solution $(\theta(\cdot), x(\cdot))$ of the ODE from some initial condition in $B_\theta \times B_x$. Since we are interested only in those solutions with $x(0), x(t) \in \sp \{\phi(\S)\}$ for all $t$, we focus on a smaller limit set corresponding to those solutions, which is  the set $\cap_{ \bar t \geq 0} \, \cl \,\big\{ \big(\theta(t), x(t)\big) \, \big| \,    \theta(0) \in B_\theta, \, x(0) \in B_x \cap \sp \{\phi(\S)\}, \, t \geq \bar t  \, \big\}$. 

\begin{lem} \label{lem-Bx}
Let the constraint set $B_x$ satisfy (\ref{eq-cond-Bx0}). Then for all initial $x(0) \in B_x \cap \sp \{\phi(\S)\}$ and $\theta(0) \in B_\theta$,
the limit set of the ODE (\ref{eq-gtd1-odefast}) is $\big\{ \big(\theta, \bar x(\theta)\big) \mid \theta \in B_\theta \big\}$.
\end{lem}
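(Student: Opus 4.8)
The plan is to exploit the fact that along (\ref{eq-gtd1-odefast}) the slow component is frozen, $\dot\theta(t)\equiv 0$, so $\theta(t)\equiv\theta(0)=:\theta\in B_\theta$, and the whole content of the lemma is a statement about the $x$-subsystem $\dot x=\bar k(\theta,x)+z$, $z\in-\N_{B_x}(x)$, for each fixed $\theta$. By the footnote already given, this flow keeps $x(t)$ in $\sp\{\phi(\S)\}$ whenever $x(0)\in\sp\{\phi(\S)\}$, so I may restrict attention to trajectories inside the subspace. First I would record the algebraic facts from the preamble: writing $C=\Phi^\tr\Xi\,\Phi$ and $b=\Phi^\tr\Xi(\Tl v_\theta-v_\theta)$, the equilibrium equation (\ref{eq-xtheta3}) reads $C x_\theta=b$, so $\bar k(\theta,x)=b-Cx=-C\,(x-x_\theta)$ for all $x$. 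Moreover $x^\tr C x=\|\Phi x\|_\xi^2$ vanishes on $\sp\{\phi(\S)\}$ only at $x=0$, so $C$ is positive definite on this subspace; let $c>0$ be its smallest nonzero eigenvalue (the same quantity as in the footnote to (\ref{eq-cond-Bx0})), which crucially does not depend on $\theta$ since $C$ does not.

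The core step is a Lyapunov argument with $W(x)=\tfrac12\|x-x_\theta\|_2^2$. Differentiating along the flow,
\begin{equation}
\dot W=\big\langle x-x_\theta,\ \bar k(\theta,x)+z\big\rangle=-\,(x-x_\theta)^\tr C\,(x-x_\theta)+\big\langle x-x_\theta,\ z\big\rangle . \notag
\end{equation}
For the reflection term I would use $x_\theta\in B_x$, which holds precisely because of hypothesis (\ref{eq-cond-Bx0}); since $z\in-\N_{B_x}(x)$ the normal-cone inequality gives $\langle z,\,x_\theta-x\rangle\ge 0$, i.e.\ $\langle x-x_\theta,z\rangle\le 0$. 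Combining this with positive definiteness (valid because $x(t),x_\theta\in\sp\{\phi(\S)\}$ so that $x-x_\theta$ lies in the subspace) yields $\dot W\le -2c\,W$, hence $W(x(t))\le e^{-2ct}W(x(0))$. Thus $x(t)\to x_\theta=\bar x(\theta)$, and the convergence rate $c$ is uniform over all initial $x(0)\in B_x\cap\sp\{\phi(\S)\}$ and over all $\theta\in B_\theta$.

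It remains to translate this uniform convergence into the stated identity for the limit set. For the inclusion $\supseteq$, each point $(\theta,\bar x(\theta))$ with $\theta\in B_\theta$ is an equilibrium of (\ref{eq-gtd1-odefast}) lying in $B_\theta\times(B_x\cap\sp\{\phi(\S)\})$, so its constant trajectory places it in every tail set and hence in the intersection. For $\subseteq$, a point $(\theta^\star,x^\star)$ in the limit set is, by definition, a limit of points $(\theta^{(k)}(t_k),x^{(k)}(t_k))$ along trajectories from initial conditions in the compact set $K=B_\theta\times(B_x\cap\sp\{\phi(\S)\})$ with $t_k\to\infty$; since $\theta^{(k)}(\cdot)$ is constant, $\theta^\star\in B_\theta$, and the uniform bound $\|x^{(k)}(t_k)-\bar x(\theta^{(k)})\|_2\le e^{-c t_k}\,\mathrm{diam}(K)\to 0$, together with continuity of $\theta\mapsto\bar x(\theta)=C^{\dagger}b(\theta)$, forces $x^\star=\bar x(\theta^\star)$. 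I expect the main obstacle to be precisely this last bookkeeping: the limit set is taken over \emph{all} trajectories and \emph{all} initial conditions simultaneously, so merely knowing each orbit converges is not enough---one needs the uniformity in the initial condition and in $\theta$ supplied by the $\theta$-independent contraction rate, plus the correct sign of the boundary reflection term, to pin the limit set down to exactly the graph of $\bar x$.

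\endinput
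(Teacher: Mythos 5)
Your proposal is correct and follows essentially the same route as the paper's own proof: the same Lyapunov function $\tfrac{1}{2}\|x - x_\theta\|_2^2$, the same rewriting $\bar k(\theta,x) = -C(x - x_\theta)$, the same use of $x_\theta \in B_x$ with the normal-cone inequality to kill the reflection term, and the same $\theta$-independent smallest nonzero eigenvalue of $C$ to get uniformity over $B_\theta$. Your explicit exponential rate $e^{-2ct}$ and the final limit-set bookkeeping are slightly more detailed than the paper's (which simply invokes uniform boundedness of the Lyapunov functions and a $\theta$-independent time $t_\epsilon$), but they are refinements of the identical argument, not a different one.
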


Now for the slow time-scale, consider the desired mean ODE (\ref{eq-gtd1-odeslow}).
Let $\Theta_{\text{opt}}$ denote the compact set of optimal solutions of $\inf_{\theta \in B_\theta} J(\theta)$:
\begin{equation} \label{eq-gtd-opt}
  \Theta_{\text{opt}} = \argmin_{\theta \in B_\theta} J(\theta).
\end{equation}  

\begin{lem} \label{lem-gtd1-odeslow}
The limit set of the ODE (\ref{eq-gtd1-odeslow}) is
$\bigcap_{\, \bar t \geq 0}  \cl \, \big\{\theta(t)  \, \big| \,   \theta(0) \in B_\theta, \, t \geq \bar t  \, \big\}  = \Theta_{\text{\rm opt}}$.
\end{lem}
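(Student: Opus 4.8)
The plan is to analyze the ODE $\dot\theta(t) = \bar g(\theta(t)) + z(t)$ as a constrained gradient-descent flow for $J$ over $B_\theta$, exploiting the fact established earlier in the excerpt that $\bar g(\theta) = -\nabla J(\theta)$. The central object is the Lyapunov function $J(\theta)$ itself. First I would compute the time-derivative of $J$ along a solution trajectory: $\tfrac{d}{dt} J(\theta(t)) = \nabla J(\theta(t))^\tr \dot\theta(t) = -\bar g(\theta(t))^\tr(\bar g(\theta(t)) + z(t))$. The key is to show this is $\le 0$, with equality only on $\Theta_{\text{opt}}$. The reflection term $z(t)$ lies in $-\N_{B_\theta}(\theta(t))$ and, by the projected-ODE construction recalled after (\ref{eq-gtd1-odeslow}), equals minus the projection of $\bar g(\theta(t))$ onto the normal cone; a standard property of such reflection is that $\bar g(\theta(t))^\tr z(t) = -\|z(t)\|_2^2 \le 0$ because $z(t)$ is itself (minus) a component of $\bar g$ in an orthogonal decomposition. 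Hence $\tfrac{d}{dt}J(\theta(t)) = -\|\bar g(\theta(t))\|_2^2 - \bar g(\theta(t))^\tr z(t) = -\|\bar g(\theta(t)) + z(t)\|_2^2 \le 0$, so $J$ is nonincreasing along trajectories.

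Next I would identify where the derivative vanishes. We have $\tfrac{d}{dt}J(\theta(t)) = 0$ iff $\bar g(\theta(t)) + z(t) = 0$, i.e.\ iff $\dot\theta(t) = 0$, which characterizes the equilibria of the ODE. I would then argue that the set of equilibria coincides with $\Theta_{\text{opt}}$. Since $J$ is convex (it is a composition of a squared projected norm with an affine map, hence convex quadratic) and $B_\theta$ is a closed convex ball, the first-order optimality condition for $\inf_{\theta\in B_\theta} J(\theta)$ is exactly $-\nabla J(\theta) \in \N_{B_\theta}(\theta)$, i.e.\ $\bar g(\theta) \in \N_{B_\theta}(\theta)$; and the reflection mechanism forces $\bar g(\theta)+z(t)=0$ precisely when $\bar g(\theta)$ is entirely absorbed by the normal cone, which is the same condition. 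Thus the equilibrium set equals $\Theta_{\text{opt}}$, the (compact, convex) minimizer set in (\ref{eq-gtd-opt}).

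The final step is to upgrade ``$J$ nonincreasing'' and ``equilibria $=\Theta_{\text{opt}}$'' into the precise statement that the ODE's limit set equals $\Theta_{\text{opt}}$. For this I would invoke LaSalle's invariance principle: trajectories are confined to the compact set $B_\theta$, $J$ is a continuous Lyapunov function with $\dot J \le 0$, so every trajectory converges to the largest invariant subset of $\{\theta : \dot J = 0\} = \Theta_{\text{opt}}$; since $\Theta_{\text{opt}}$ consists entirely of equilibria it is invariant, giving convergence of $\theta(t)$ to $\Theta_{\text{opt}}$. To show the limit set is exactly $\Theta_{\text{opt}}$ and not a proper subset, I would note that every point of $\Theta_{\text{opt}}$ is itself an equilibrium (a valid constant trajectory with $\theta(0)\in B_\theta$), hence belongs to the limit set by definition.

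I expect the main obstacle to be the careful handling of the boundary reflection term $z(t)$: verifying rigorously that the projected-ODE construction yields $\bar g(\theta)^\tr z(t) \le 0$ and that $\bar g(\theta)+z(t)=0$ matches the convex first-order optimality condition $\bar g(\theta)\in\N_{B_\theta}(\theta)$ requires invoking the Skorokhod/constrained-ODE machinery from \cite[Chap.\ 4.3]{KuY03} rather than a naive calculation, and one must confirm that the solution $\theta(\cdot)$ is well-defined and that $J$ is differentiable along it despite the reflection. The convexity and smoothness of $J$ make the optimality-condition matching clean, so once the reflection term is correctly characterized the remaining LaSalle argument is routine.
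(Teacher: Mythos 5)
Your Lyapunov computation coincides with the paper's own proof: by Lemma~\ref{lem-decomp-vec-cones} (the Moreau-type decomposition), $\langle z(t), \bar g(\theta(t)) + z(t)\rangle = 0$, hence $\tfrac{d}{dt}J(\theta(t)) = -\|\bar g(\theta(t))+z(t)\|_2^2 = -\|\nabla J(\theta(t)) - z(t)\|_2^2 \le 0$, and the zero set of this derivative is exactly $\Theta_{\text{opt}}$, since $\bar g(\theta)+z=0$ is equivalent to $-\nabla J(\theta) \in \N_{B_\theta}(\theta)$, which by convexity of $J$ is necessary and sufficient for optimality in (\ref{eq-gtd-opt}). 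Both steps are correct and are what the paper does.

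The gap is in your final step. The limit set in this lemma is $\bigcap_{\bar t \geq 0} \cl\,\{\theta(t) \mid \theta(0)\in B_\theta,\, t \geq \bar t\,\}$: the closure runs over \emph{all} initial conditions in $B_\theta$ before the intersection over $\bar t$, so a point $p$ belongs to it whenever $\theta(t_k;\theta^k_0) \to p$ for some $t_k \to \infty$ and initial points $\theta^k_0$ that may vary with $k$. LaSalle's invariance principle gives convergence of each \emph{fixed} trajectory to $\Theta_{\text{opt}}$; it does not by itself control such ``moving'' sequences unless the approach to $\Theta_{\text{opt}}$ is uniform in the initial condition, and your proof supplies no uniformity. (There is also the more minor point that LaSalle must be justified for the reflected ODE, whose right-hand side is discontinuous at the boundary of $B_\theta$.) This is precisely where the paper does its real work: using the closedness of the graph of the subgradient mapping $\partial(J+\delta_{B_\theta})$ and a compactness argument, it establishes a \emph{uniform} decrease rate, $\dot V(\theta(t)) \leq -\eta_\epsilon$ whenever $J(\theta(t)) \geq \inf_{\theta\in B_\theta} J(\theta) + \epsilon$ (cf.\ (\ref{eq-prfe3})); since $V$ is bounded on $B_\theta$, this produces a single time $t_\epsilon$, valid for every solution, after which $J(\theta(t)) \leq \inf_{\theta\in B_\theta} J(\theta) + \epsilon$, whence the limit set equals $\bigcap_{\epsilon>0}\{\theta \in B_\theta \mid J(\theta) \leq \inf_{\theta\in B_\theta}J(\theta)+\epsilon\} = \Theta_{\text{opt}}$. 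Alternatively, you could repair your route by adding: the reflected flow is non-expansive in the initial condition (because $\nabla J + \N_{B_\theta}$ is maximal monotone), so from $\theta(t_k;\theta_0^k)\to p$ one may extract $\theta_0^k \to \theta_0^\infty$ by compactness and use monotonicity of $J$ along trajectories to get $J(p) \leq J(\theta(s;\theta_0^\infty))$ for every fixed $s$, then let $s \to \infty$ and apply per-trajectory convergence. Either way, an argument beyond per-trajectory LaSalle is needed; as written, your proof establishes only the inclusion $\Theta_{\text{opt}} \subset$ limit set together with per-trajectory convergence, not the equality claimed.
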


To prove the lemma, we need the following facts concerning convex sets, which will also be useful in later sections:

\begin{lem} \label{lem-decomp-vec-cones}
Let $D$ be a closed convex set in $\re^d$ and let $\theta \in D$. For any $h \in \re^d$, let $y$ be the projection of $h$ on $\N_{D}(\theta)$, the normal cone of $D$ at $\theta$. Then $y- h$ is the point in the convex set $\N_{D}(\theta)- h$ that has the minimal $\|\cdot\|_2$-norm. Moreover, $h - y$ lies in the tangent cone of $D$ at $\theta$, and $\langle y, h - y \rangle = 0$.
\end{lem}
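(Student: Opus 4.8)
The plan is to treat the three assertions in turn, since they decrease in triviality. For the first assertion I would simply change variables. By definition $y = \Pi_{\N_D(\theta)}(h)$ is the point of $\N_D(\theta)$ minimizing $\|v - h\|_2$ over $v \in \N_D(\theta)$. Setting $w = v - h$, so that $v$ ranges over $\N_D(\theta)$ exactly when $w$ ranges over $\N_D(\theta) - h$, and noting $\|v - h\|_2 = \|w\|_2$, the minimizer is $w = y - h$; hence $y - h$ is the minimal-$\|\cdot\|_2$-norm point of the convex set $\N_D(\theta) - h$. This is nothing more than a restatement of the defining property of the projection.

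For the orthogonality $\langle y, h - y\rangle = 0$ and the tangent-cone membership, the key tool is the variational inequality characterizing Euclidean projection onto a closed convex set: since $\N_D(\theta)$ is closed and convex and $y$ is the projection of $h$ onto it, one has $\langle h - y, v - y \rangle \leq 0$ for all $v \in \N_D(\theta)$. I would then exploit the fact that $\N_D(\theta)$ is not merely convex but a cone. Taking $v = 0$ and $v = 2y$ (both of which lie in the cone) in this inequality yields $\langle h - y, y\rangle \geq 0$ and $\langle h - y, y\rangle \leq 0$, respectively, so $\langle h - y, y\rangle = 0$, which is the claimed orthogonality. Substituting this back into the variational inequality gives $\langle h - y, v\rangle \leq 0$ for every $v \in \N_D(\theta)$; that is, $h - y$ lies in the polar cone of $\N_D(\theta)$.

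To finish I would invoke the standard duality between the normal and tangent cones of a closed convex set, namely that the tangent cone $T_D(\theta)$ is precisely the polar cone of the normal cone $\N_D(\theta)$. This immediately upgrades the conclusion of the previous paragraph to $h - y \in T_D(\theta)$, completing the proof. Alternatively, the whole statement can be read off Moreau's decomposition theorem applied to the polar pair $\big(\N_D(\theta), T_D(\theta)\big)$, which decomposes $h = \Pi_{\N_D(\theta)}(h) + \Pi_{T_D(\theta)}(h)$ into orthogonal components and identifies $y$ with the first summand and $h - y$ with the second.

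Since every step is elementary convex analysis, there is no serious obstacle; the only points requiring care are to invoke the cited facts correctly---in particular the polarity $T_D(\theta) = (\N_D(\theta))^\circ$, which holds for closed convex $D$, and the legitimacy of plugging the cone-scaled points $0$ and $2y$ into the projection inequality. I would state the projection variational inequality and the cone polarity explicitly, with a textbook reference, rather than reprove them, so as to keep the argument short and to reuse it cleanly in the later sections where this lemma is applied.
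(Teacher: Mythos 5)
Your proof is correct, but its main line of argument is more self-contained than the one in the paper. The paper disposes of the lemma in two sentences: the first claim is noted to be an immediate translation (exactly as you argue), and the second claim is obtained by citing the decomposition theorem for a polar pair of cones (Moreau's theorem, \cite[Ex.\ 12.22, p.\ 546]{RoW98}), which in one stroke gives $h = \Pi_{\N_D(\theta)}(h) + \Pi_{T_D(\theta)}(h)$ with orthogonal summands. You instead derive the orthogonality $\langle y, h-y\rangle = 0$ from first principles: the variational inequality $\langle h - y, v - y\rangle \leq 0$ for all $v \in \N_D(\theta)$, tested at the cone points $v = 0$ and $v = 2y$, and you then feed the orthogonality back into the variational inequality to place $h - y$ in the polar cone of $\N_D(\theta)$, invoking only the polarity $T_D(\theta) = (\N_D(\theta))^\circ$ for closed convex $D$. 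What your route buys is that the only black box is the standard tangent--normal polarity rather than the full Moreau decomposition; what the paper's route buys is brevity, and your closing remark that the whole lemma ``can be read off Moreau's decomposition'' is precisely the paper's proof. Both arguments are sound; the cone-scaling step ($0, 2y \in \N_D(\theta)$) that your version hinges on is legitimate since normal cones contain the origin and are closed under nonnegative scaling.
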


The first statement of Lemma~\ref{lem-decomp-vec-cones} is obvious: Since $y$ is the closest point in $\N_{D}(\theta)$ to $h$, by translation, $y - h$ is the closest point in $\N_{D}(\theta)- h$ to the origin. The second statement about $h - y$ comes from a property of projections on a polar pair of cones \cite[Ex.\ 12.22, p.\ 546]{RoW98}.%footnote starts
\footnote{This theorem asserts that given a closed convex cone $K$, $h$ can be decomposed uniquely as the sum of two orthogonal vectors, one in $K$ and the other in the polar of $K$, and these two vectors are given by the projections of $h$ on the two cones, respectively.}
%footnote ends

Here are the implications of Lemma~\ref{lem-decomp-vec-cones} for a solution $\theta(\cdot)$ of the ODE (\ref{eq-gtd1-odeslow}). Take $D = B_\theta$, $\theta=\theta(t)$, $h = \bar g(\theta(t))$, and $y = -z(t)$; recall that our boundary reflection term $z(t)$ corresponds precisely to $ - \Pi_{\N_D(\theta)} h$. Therefore, 
\begin{equation} \label{eq-prfe1}
     \langle z(t), \, \bar g(\theta(t) + z(t) \rangle = 0,
\end{equation}
and the vector  $- \bar g(\theta(t)) - z(t) =  \nabla J(\theta(t)) - z(t)$ is the point in $\nabla J(\theta(t)) + \N_{B_{\theta}}(\theta(t))$ that has the minimal $\|\cdot\|_2$-norm.
Notice that for $\theta \in B_{\theta}$, the set $\nabla J(\theta) + \N_{B_{\theta}}(\theta)$ consists of the subgradients $\partial f(\theta)$ of the convex function $f(\theta) = J(\theta) + \delta_{B_\theta}(\theta)$ at the point $\theta$ (here $\delta_{B_\theta}(\theta)$ denotes the indicator function that takes the value $0$ on $B_{\theta}$ and $+ \infty$ outside $B_{\theta}$). For the problem $\inf_\theta f(\theta)$ (which is equivalent to $\inf_{\theta \in B_\theta} J(\theta)$), at a non-optimal $\theta$, $0 \not \in \partial f(\theta)$, but unlike an arbitrary subgradient, the minimal-norm subgradient has the property that its negation always points in a descent direction of $f$. At an optimal $\theta$, $0 \in \partial f(\theta)$ by the optimality condition, and then, of course, the minimal-norm subgradient must be the zero vector. Thus we see that the path of a solution $\theta(\cdot)$ of (\ref{eq-gtd1-odeslow}) follows descent directions of the objective function $f$ that we want to minimize.

Let us now prove Lemma~\ref{lem-gtd1-odeslow} for the desired mean ODE at the slow time-scale. 

\begin{proof}[Proof of Lemma~\ref{lem-gtd1-odeslow}]
Since $\bar g(\theta) = - \nabla J(\theta)$, the function $V(\theta)= J(\theta)$ serves as a Lyapunov function for the ODE (\ref{eq-gtd1-odeslow}). In particular, we have
\begin{align}
  \dot V(\theta(t)) & = \big\langle \nabla J\big(\theta(t)\big), - \nabla J\big(\theta(t)\big) + z(t) \big\rangle \notag \\
  & = - \big\| \nabla J\big(\theta(t)\big) - z(t) \big\|^2_2 +  \big\langle z(t), - \nabla J\big(\theta(t)\big) + z(t) \big\rangle \notag \\
  & = - \big\| \nabla J\big(\theta(t)\big) - z(t) \big\|^2_2,  \label{eq-prfe2}
\end{align}  
where the last equality follows from (\ref{eq-prfe1}). As discussed before the proof, $\nabla J\big(\theta(t)\big) - z(t)$ is the minimal-norm subgradient of $f$ at $\theta(t)$. So, if $\theta(t) \in \Theta_{\text{opt}}$, $\dot V\big(\theta(t)\big) = 0$ by the optimality condition; if $\theta(t)  \not\in \Theta_{\text{opt}}$, then $\nabla J\big(\theta(t)\big) - z(t) \not=0$ and hence $\dot V\big(\theta(t)\big) < 0$. 
Moreover, for any $0 < \epsilon \leq \sup_{\theta \in B_\theta} J(\theta) - \inf_{\theta \in B_\theta} J(\theta)$, there exists $\eta_\epsilon > 0$ such that%footnote starts
\footnote{To derive (\ref{eq-prfe3}), consider the convex function $f(\theta) = J(\theta) + \delta_{B_\theta}(\theta)$ mentioned before the proof. As a set-valued mapping, the subgradient mapping $\partial f$ is outer semicontinuous and therefore its graph, $\text{gph} \, \partial f = \{(\theta, y) \mid y \in \partial f(\theta)\}$, is closed \cite[Theorem 5.7]{RoW98}. Then the set $\text{gph} \, \partial f \cap \big( \{ \theta \in B_\theta \mid J(\theta) \geq \inf_{\theta \in B_\theta} J(\theta) + \epsilon \} \times \re^d \big)$ is also closed, and on this set, the continuous function $(\theta, y) \mapsto \| y\|^2_2$ attains the minimal value, which must be nonzero (since no optimal $\theta$ is involved in this set). We can take this value to be $\eta_\epsilon$ in (\ref{eq-prfe3}).}
%footnote ends
\begin{equation} \label{eq-prfe3}
 \dot V\big(\theta(t)\big) \leq - \eta_\epsilon, \quad \text{if} \  J(\theta(t)) \geq \textstyle{\inf_{\theta \in B_\theta}} J(\theta) +\epsilon.
\end{equation} 
Since $V$ is bounded on $B_\theta$, this implies that there exists a time $t_\epsilon$ such that all solutions of (\ref{eq-gtd1-odeslow}) satisfy that 
$$J(\theta(t)) \leq \textstyle{ \inf_{\theta \in B_\theta}} J(\theta)  + \epsilon, \qquad \forall \, t \geq t_\epsilon.$$ 
Thus the limit set of the ODE must be $\cap_{\epsilon > 0} \{ \theta \in B_\theta \mid J(\theta) \leq \inf_{\theta \in B_\theta} J(\theta)  + \epsilon\} = \Theta_{\text{opt}}$.
\end{proof}

The proof of Lemma~\ref{lem-Bx} is similar:
 
\begin{proof}[Proof of of Lemma~\ref{lem-Bx}] 
For each $\theta \in B_\theta$, with $\theta(\cdot) \equiv \theta$, consider a solution $x(\cdot)$ of the ODE (\ref{eq-gtd1-odefast}) with $x(0) \in \sp \{\phi(\S)\}$. 
We can write $\bar k(\theta, x) = - C(x - x_\theta)$ for $C = \Phi^\tr \Xi \, \Phi$. 
Let $V_\theta(x) = \tfrac{1}{2} \| x - x_\theta \|_2^2$. Then with a calculation similar to (\ref{eq-prfe2}), we have 
\begin{align*}
  \dot V_\theta(x(t)) = \langle x(t) - x_\theta, \, - C(x(t) - x_\theta) + z(t) \rangle = - \langle x(t) - x_\theta, C(x(t) - x_\theta) \rangle + \langle x(t) - x_\theta, \, z(t) \rangle.
\end{align*}   
Since $x(t) \in \sp \{\phi(\S)\}$, the first term on the r.h.s.\ is bounded above by $- c \, \| x(t) - x_\theta \|_2^2$, where $c > 0$ is the smallest nonzero eigenvalue of the symmetric positive semidefinite matrix $C$. The second term on the r.h.s.\ is nonpositive, because $x_\theta \in B_x$ under our assumption and by the definition of the boundary reflection term, $ - z(t) \in \N_{B_x}(x(t))$, which implies $\langle x - x(t), - z(t) \rangle \leq 0$ for all $x \in B_x$.
Thus, we have $\dot V_\theta(x(t)) \leq - c \| x(t) - x_\theta \|_2^2$. Since on $B_x$, $\{V_\theta \mid \theta \in B_\theta\}$ are uniformly bounded, this implies that for any $\epsilon > 0$, there is a time $t_\epsilon$ independent of $\theta$ such that $\| x(t) - x_\theta \|_2^2  \leq \epsilon$ for all $t \geq t_\epsilon$. The conclusion of the lemma then follows. 
\end{proof}

We have now obtained solution properties of the desired mean ODEs. Next, we need to show that everything we ``wished'' in the above actually holds for the GTDa algorithm, so that the $\theta_n$ iterates produced by the algorithm will behave like the solutions $\theta(\cdot)$ of the slow-time-scale ODE and converge, in a sense to be made precise, to the optimal solution set $\Theta_{\text{opt}}$.

\subsubsection{Conditions to verify} \label{sec-gtd1-cond}

We use the weak convergence method from stochastic approximation theory to analyze the convergence properties of the algorithm for constant and slowly diminishing stepsizes. Specifically, in the terminology of \cite{KuY03}, our algorithm is of the type that involves ``exogenous noises,'' meaning that the evolution of $\{Z_n\}$ is not influenced by the iterates $\{(\theta_n, x_n)\}$ themselves. We will apply the theorems in \cite[Chap.\ 8]{KuY03} for this type of algorithms, and in this subsection, we first use the results of Section~\ref{sec-property-stproc} to verify several major conditions that will be needed when applying those theorems. 

We list here only the conditions for the case of diminishing stepsize. The conditions for the constant-stepsize case differ only in notation and will be addressed in Section~\ref{sec-gtd1conv-constant} when we analyze that case. 
We are going to show that the algorithm fulfills the following seven conditions. They are closely related to the conditions in \cite[Theorems 8.2.3 and 8.6.1]{KuY03}. (We will apply \cite[Theorems 8.2.3]{KuY03} twice, once for each time-scale, in our convergence proof.)\vspace*{-3pt}
\begin{itemize}
\item[(i)] The following sets of random variables are uniformly integrable:%footnote starts
\footnote{In this condition, the random variables in the first (third) set correspond to the terms that appear after the stepsizes $\alpha_n$ ($\beta_n$) in the $\theta$-iteration ($x$-iteration). For different algorithms, these two sets in the condition (i) can have different expressions; for example, for GTDb, the first set will also involve the noise terms $\e_n \omega_{n+1}$.} 
%footnote ends
$$\big\{g(\theta_n, x_n, Z_n) \big\}_{n\geq0}, \quad \big\{g(\theta, x, Z_n) \mid n \geq 0, (\theta, x) \in B_\theta \times B_x \big\},$$ 
$$\big\{k(\theta_n,x_n,Z_n)+\e_n\omega_{n+1}\big\}_{n\geq0}, \quad \big\{k(\theta, x, Z_n) \mid n \geq 0, (\theta, x) \in B_\theta \times B_x \big\}.$$
\item[(ii)] The set of random variables $\{Z_n\}_{n \geq 0}$ is tight.%footnote starts
\footnote{This means that for any $\epsilon > 0$, there exists a compact set $D$ such that $\inf_{n} \Pr(Z_n \in D) \geq 1 - \epsilon$.}
%footnote ends
\item[(iii)] The functions $g(\theta, x, z)$ and $k(\theta, x, z)$ are continuous in $(\theta, x)$ uniformly in $z \in D$, for any compact set $D \subset \Z$.
\item[(iv)] There exists a continuous function $\bar k(\theta, x)$ such that for each $(\theta, x) \in B_\theta \times B_x$, the convergence-in-mean stated below in Lemma~\ref{lem-conv-mean-cond1}(i) holds for $\{k(\theta, x, Z_n)\}$. Also, the convergence-in-mean stated in the first part of Lemma~\ref{lem-conv-mean-cond1}(iii) holds for $\{g(\theta, x, Z_n)\}$ and each $(\theta, x) \in B_\theta \times B_x$.
\item[(v)] 
There exists a continuous function $\bar x(\theta)$ such that $\{ (\theta, \bar x(\theta)) \mid \theta \in B_\theta \}$ is the limit set of the projected ODE $\dot \theta(t) = 0, \dot x(t) = \bar k(\theta(t), x(t)) + z(t)$, where $z(t) \in - \N_{B_x}(x(t))$ is the boundary reflection term, for all initial conditions $\theta(0) \in B_\theta$ and $x(0) \in B_x \cap \sp \{\phi(\S)\}$.%footnote starts
\footnote{Normally, this condition is required to hold for all initial $x(0) \in B_x$, but here we are interested only in those solutions from $x(0) \in \sp \{\phi(\S)\}$ and they all stay in the subspace $\sp \{\phi(\S)\}$ for the ODE under our consideration, as discussed earlier.}
%footnote ends
\end{itemize}\vspace*{-3pt}
More conditions for the algorithm at its slow time-scale:\vspace*{-3pt}
\begin{itemize}
\item[(vi)] 
The following sets of random variables are uniformly integrable:
$$\big\{g(\theta_n, \bar x(\theta_n), Z_n)\big\}_{n \geq 0}, \qquad \big\{ g(\theta, \bar x(\theta), Z_n) \mid n \geq 0, (\theta, x) \in B_\theta \times B_x \big\}.$$
\item[(vii)] For $\{g(\theta, \bar x(\theta), Z_n)\}$ and some continuous function $\bar g(\theta)$,  the convergence-in-mean stated below in Lemma~\ref{lem-conv-mean-cond1}(ii) holds for each $\theta \in B_\theta$.\vspace*{-3pt}
\end{itemize}

Among the above conditions, the continuity condition (iii) and the condition (v) on the fast-time-scale ODE are obviously satisfied: (iii) follows simply from the definitions of the functions $g$ and $k$ (cf.\ (\ref{eq-gtd1-gk})), and (v) follows from Lemma~\ref{lem-Bx} and the continuity of the function $\bar x(\theta) = x_\theta$. 
In the case of history-dependent $\lambda$, since the trace iterates lie in a given bounded set and hence $\{Z_n\}$ lie in a compact set,  the conditions (i)-(ii) also hold trivially. 
In the case of state-dependent $\lambda$, the tightness condition (ii) follows from the fact $\sup_{n \geq 0} \E [ \| \e_n\| ] < \infty$ (which is implied by Prop.~\ref{prop-trace}(ii)), together with the Markov inequality.  The uniform integrability required by the condition (i) is ensured by the uniform integrability of the traces $\{\e_n\}$ established in Prop.~\ref{prop-trace}(ii). The proof of this uses the uniform integrability of $\{\e_n\}$ together with the following simple facts: the iterates $\theta_n$ and $x_n$ are confined in bounded sets, the number of states is finite, the functions $g, k$ are Lipschitz continuous in the trace variable $\e$ uniformly w.r.t.\ $(\theta, x)$ in a bounded set and w.r.t.\ other components of $z$, and the noise terms $\{\omega_{n+1}\}$ have finite variances. We omit the proof because it is straightforward and the details are the same as those given for a similar result \cite[Prop.\ 2(ii)-(iv)]{etd-wkconv} in the analysis of the ETD algorithm with the weak convergence method.

We now address the two remaining convergence-in-mean conditions (iv) and (vii), which are stated in the lemma below. The functions $\bar k(\theta, x)$, $\bar g(\theta)$, and $\bar x(\theta)$ involved here are as defined in Section~\ref{sec-gtd1-odes}. The main result we need is the first two parts of the lemma. We will use them to relate the average dynamics of the algorithm to the desired mean ODEs at the fast and slow time-scales, respectively. The third part of the lemma will also be needed in the analysis of the fast time-scale, to show that the $(\alpha_n/\beta_n) \, g(\theta_n, x_n, Z_n)$ terms in (\ref{eq-gtd1-altth0}) are negligible as wished (the first and second relations in this part will be used respectively in the diminishing-stepsize and constant-stepsize cases).
In the lemma, $\E_n$ denotes conditional expectation given the history $(Z_0, \ldots, Z_n)$ and $(R_1, \ldots, R_n)$. 
Recall that the notation ``$\lim_{m, n \to \infty}$'' means that the limit is taken as $m$ and $n$ go to infinity in any possible way, and the notation ``$\lim_{m, n, j \to \infty}$'' is defined likewise.

\begin{lem} \label{lem-conv-mean-cond1}
Under Assumption~\ref{cond-collective}, the following hold for each $\theta \in B_\theta$, $x \in B_x$:\vspace*{-5pt}
\begin{itemize}
\item[\rm (i)] For each compact set $D \subset \Z$,
$$ \textstyle{ \lim_{m, n \to \infty} \frac{1}{m} \sum_{i=n}^{n+m-1} \E_n \big[ k(\theta, x, Z_i) - \bar k(\theta, x) \big] \I(Z_n \in D) = 0} \ \ \ \text{in mean}.$$
\item[\rm (ii)] For each compact set $D \subset \Z$,
$$ \textstyle{ \lim_{m, n \to \infty} \frac{1}{m} \sum_{i=n}^{n+m-1} \E_n \big[ g(\theta, \bar x(\theta), Z_i) - \bar g(\theta) \big] \I(Z_n \in D) = 0} \ \ \ \text{in mean}.$$
\item[\rm (iii)]If $\{\alpha_n\}$ and $\{\beta_n\}$ satisfy Assumption~\ref{cond-large-stepsize}, then
$$ \textstyle{ \lim_{m, n \to \infty} \frac{1}{m} \sum_{i=n}^{n+m-1} \E_n \big[ (\alpha_i/\beta_i) \cdot g(\theta, x, Z_i) \big] = 0} \ \ \ \text{in mean}.$$
For any positive sequence $c_j \to 0$ as $j \to \infty$,
$$ \textstyle{ \lim_{m, n, j \to \infty} \frac{1}{m} \sum_{i=n}^{n+m-1} \E_n \big[ c_j \cdot g(\theta, x, Z_i) \big] = 0} \ \ \ \text{in mean}.$$
\end{itemize}
\end{lem}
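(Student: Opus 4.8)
The plan is to derive every part of Lemma~\ref{lem-conv-mean-cond1} as a direct consequence of Prop.~\ref{prop-trace-averaging}, with the only extra ingredient being a conditional Jensen inequality to strip off the inner conditional expectation $\E_n$. The first observation is that, for fixed $(\theta, x)$, the functions $g(\theta, x, \cdot)$ and $k(\theta, x, \cdot)$ are \emph{affine} in the trace variable $\e$ (see (\ref{eq-gtd1-gk})): $k(\theta, x, z) = \e\,\bar{\delta}(s, s', v_\theta) - \phi(s)\phi(s)^\tr x$ is affine in $\e$ with coefficient $\bar{\delta}(s,s',v_\theta)$ bounded over the finite set of transitions $(s,s')$, and $g(\theta, x, z) = \rho(s,s')(\phi(s) - \gma(s')\phi(s'))\cdot \e^\tr x$ is linear in $\e$ with a coefficient that is likewise bounded over $(s,s')$. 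Hence both are Lipschitz continuous in the trace variable $\e$, so Prop.~\ref{prop-trace-averaging} applies. Moreover, by their definitions in (\ref{eq-gtd1-bk}) and (\ref{eq-gtd1-bg}), the constants $\bar k(\theta, x)$ and $\bar g(\theta)$ are exactly the stationary means $\E_\zeta[k(\theta, x, Z_0)]$ and $\E_\zeta[g(\theta, \bar x(\theta), Z_0)]$, which play the role of $\bar f$ in the proposition.

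For part (i), I fix $(\theta, x)$ and set $f(z) = k(\theta, x, z)$, $\bar f = \bar k(\theta, x)$. Since $\I(Z_n \in D)$ is measurable with respect to the history on which $\E_n$ conditions, I can pull it inside and write
$$\frac{1}{m}\sum_{i=n}^{n+m-1}\E_n\big[f(Z_i) - \bar f\big]\,\I(Z_n \in D) = \E_n\Big[\underbrace{\tfrac{1}{m}\textstyle\sum_{i=n}^{n+m-1}\big(f(Z_i) - \bar f\big)\I(Z_n \in D)}_{=:\,Y_{m,n}}\Big].$$
By the conditional Jensen inequality for the norm, $\|\E_n[Y_{m,n}]\| \le \E_n[\|Y_{m,n}\|]$, and taking expectations gives $\E[\|\E_n[Y_{m,n}]\|] \le \E[\|Y_{m,n}\|]$. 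The right-hand side tends to $0$ as $m, n \to \infty$ by Prop.~\ref{prop-trace-averaging}(i), which is precisely the claimed convergence in mean. Part (ii) is identical after taking $f(z) = g(\theta, \bar x(\theta), z)$ and $\bar f = \bar g(\theta)$.

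For part (iii), I treat the two relations together. Under Assumption~\ref{cond-large-stepsize} the deterministic sequence $c_i := \alpha_i/\beta_i$ is positive and satisfies $c_i \to 0$. Taking $f(z) = g(\theta, x, z)$ and applying the conditional Jensen inequality as above,
$$\E\Big[\big\|\E_n\big[\tfrac{1}{m}\textstyle\sum_{i=n}^{n+m-1} c_i\, g(\theta, x, Z_i)\big]\big\|\Big] \le \tfrac{1}{m}\sum_{i=n}^{n+m-1} c_i\,\E\big[\|g(\theta, x, Z_i)\|\big],$$
and the right-hand side vanishes as $m, n \to \infty$ by the first relation in Prop.~\ref{prop-trace-averaging}(ii). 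The second relation of part (iii), with the fixed factor $c_j$ in place of $c_i$, follows in exactly the same way from the second relation in Prop.~\ref{prop-trace-averaging}(ii).

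The content here is genuinely light: once Prop.~\ref{prop-trace-averaging} is in hand, the only things to check are the (routine) Lipschitz-in-$\e$ property of $g$ and $k$ with their non-trace arguments frozen, and the identification of $\bar k, \bar g$ with the stationary expectations. The one point that deserves care---and is the closest thing to an obstacle---is the passage from the unconditional convergence in mean supplied by the proposition to the conditional statement required here; this is handled cleanly by noting that $\I(Z_n \in D)$ is $\E_n$-measurable and invoking conditional Jensen, which shows the conditional quantity is dominated in $L^1$ by its unconditional counterpart.
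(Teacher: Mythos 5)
Your proposal is correct and follows essentially the same route as the paper's own proof: conditional Jensen's inequality (with the $\E_n$-measurability of $\I(Z_n \in D)$) to reduce to the unconditional statements, the Lipschitz-in-$\e$ property of $g$ and $k$ together with the identification $\bar k(\theta,x)=\E_\zeta[k(\theta,x,Z_0)]$, $\bar g(\theta)=\E_\zeta[g(\theta,\bar x(\theta),Z_0)]$, and then a direct appeal to Prop.~\ref{prop-trace-averaging}(i)--(ii), with $c_i=\alpha_i/\beta_i$ for the first relation in part (iii). No gaps.
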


\begin{proof}
Note first that by Jensen's inequality, it is sufficient to prove the convergence-in-mean statements (i)-(iii) with the conditional expectation $\E_n$ removed. (E.g., for (i), it is sufficient to show that $\lim_{m, n \to \infty} \frac{1}{m} \sum_{i=n}^{n+m-1} \big( k(\theta, x, Z_i) - \bar k(\theta, x) \big) \I(Z_n \in D) = 0$ in mean.)
Now for each fixed $\theta$ and $x$, as functions of $z$, $k(\theta, x, z)$, $g(\theta, x, z)$, and $g(\theta, \bar x(\theta), z)$ are Lipschitz continuous in the trace variable $\e$ (cf.~(\ref{eq-gtd1-gk})), and moreover, by 
(\ref{eq-gtd1-bk})-(\ref{eq-gtd1-bg}), 
$\E_\zeta \big[ k(\theta, x, Z_0) \big] = \bar k(\theta, x)$ and $\E_\zeta \big[ g(\theta, \bar x(\theta), Z_0) \big] = \bar g(\theta)$. 
So (i) and (ii) follow from Prop.~\ref{prop-trace-averaging}(i), and (iii) follows from Prop.~\ref{prop-trace-averaging}(ii). In particular, the first statement in (iii) follows from the first equation in Prop.~\ref{prop-trace-averaging}(ii) with $c_i = \alpha_i/\beta_i$, since $\lim_{i \to \infty} \alpha_i/\beta_i = 0$ by Assumption~\ref{cond-large-stepsize}.
\end{proof}

\begin{rem} \rm \label{rmk-proofgtd}
Before we proceed to state and prove two convergence theorems for  GTDa with diminishing and constant stepsizes, we remark that besides the seven conditions listed above, when analyzing the slow time-scale (in particular, to show that the noise terms $\Delta_n$ in (\ref{eq-gtd1-thnoise}) have negligible effects), our proofs will also use a Lipschitz continuity property of the function $g$: for $(\theta, x) \in B_\theta \times B_x$ and $z$ in a compact set, $g(\theta, x, z)$ is Lipschitz continuous in $x$ uniformly w.r.t.\ $(\theta, z)$ (cf.\ the definition (\ref{eq-gtd1-gk}) of $g$). This condition is different from the continuity condition (iii) above, and we use it to make the proofs a little shorter---in fact, the condition (iii) is sufficient.
Besides this Lipschitz continuity property, the characterization of the limit set of the slow-time-scale mean ODE given in Lemma~\ref{lem-gtd1-odeslow} will also be used in the proofs.
The analysis applies to two-time-scale constrained TD algorithms other than GTDa, as long as they satisfy these same conditions. Lemma~\ref{lem-gtd1-odeslow} is used only at the end of the proofs; if an algorithm satisfies all the other conditions but the limit set of its slow-time-scale mean ODE is not $\Theta_{\text{\rm opt}}$, then with that limit set in place of $\Theta_{\text{\rm opt}}$, the conclusions for GTDa also hold for that algorithm. 
\end{rem}

\subsubsection{Convergence proof for the diminishing-stepsize case} \label{sec-gtd1conv-dim}

Recall $\Theta_{\text{\rm opt}}$ is the set of optimal solutions of $\inf_{\theta \in B_\theta} J(\theta)$.
For a set $D \subset \re^d$, let $N_\epsilon(D)$ denote the $\epsilon$-neighborhood of $D$. For any $T > 0$ and $n \geq 0$, define an integer $m(n,T)$ that points to the last iteration before the sum of the stepsizes $\alpha_i$ between it and the $n$-th iteration exceeds $T$:
\begin{equation} \label{eq-mT}
   m(n,T) = \min \big\{ m \geq n \mid \textstyle{\sum_{i=n}^{m+1}} \alpha_i > T \big\}.
\end{equation}   
To refer to an iteration $i$ with $n \leq i \leq m(n,T)$, we will simply write $i \in [n, m(n, T) ]$.

We are now ready to prove our first convergence result for the constrained GTDa algorithm (\ref{eq-gtd1a-th})-(\ref{eq-gtd1a-x}) with diminishing stepsizes.
Denote by $\text{dist}(\theta, \Theta_{\text{opt}})$ the distance between $\theta$ and the optimal solution set $\Theta_{\text{opt}}$.
The convergence is, first of all, in the sense that $\text{\rm dist}( \theta_n,  \Theta_{\text{\rm opt}})$ converges to $0$ in mean, and it implies the convergence of $\theta_n$ to $\Theta_{\text{\rm opt}}$ in probability. But more strongly than this kind of convergence of $\theta_n$, the powerful weak convergence method allows one to assert that as $n \to \infty$, increasingly more consecutive $\theta$-iterates must all lie in an arbitrarily small neighborhood of $\Theta_{\text{\rm opt}}$ with an arbitrarily high probability.
This is what the following theorem says with the relation (\ref{eq-thm-wkconv1}).

\begin{thm} \label{thm-gtd1-wk-dim}
Let Assumption~\ref{cond-collective} hold. Let $\{(\theta_n, x_n)\}$ be generated by the two-time-scale constrained GTDa algorithm (\ref{eq-gtd1a-th})-(\ref{eq-gtd1a-x}), with diminishing stepsizes that satisfy Assumption~\ref{cond-large-stepsize}, with the constraint set $B_x$ satisfying the condition (\ref{eq-cond-Bx0}), and with initial $x_0, \e_0 \in \sp  \{\phi(\S)\}$.
Then for each given initial condition, $\lim_{n \to \infty} \E \big[ \text{\rm dist}( \theta_n,  \Theta_{\text{\rm opt}} )\big] = 0$. Moreover,
there exists a sequence of positive numbers $T_n \to \infty$ such that for any $\epsilon > 0$,
\begin{equation} \label{eq-thm-wkconv1}
  \limsup_{n \to \infty} \Pr \big( \theta_i \not\in N_\epsilon(\Theta_{\text{\rm opt}}), \ \text{some} \ i \in [n, m(n, T_n) ] \big)  = 0.
\end{equation}  
\end{thm}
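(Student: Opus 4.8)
The plan is to apply the weak convergence theorem for constrained stochastic approximation with exogenous noise and diminishing stepsize, \cite[Theorem 8.2.3]{KuY03}, twice---once at each time-scale---feeding the fast-scale conclusion into the slow-scale analysis, exactly as outlined before the theorem. All seven conditions (i)--(vii) of Section~\ref{sec-gtd1-cond} have already been verified: (iii) and (v) from the definitions of $g,k$ together with Lemma~\ref{lem-Bx} and continuity of $\bar x(\cdot)$; (i)--(ii) from the tightness and uniform integrability of the traces (Prop.~\ref{prop-trace}); and (iv), (vii) from Lemma~\ref{lem-conv-mean-cond1}. So the remaining work is to assemble these into the two applications and translate the resulting ODE limit-set statements into the claimed convergence.

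First I would analyze the fast time-scale. Since $x_0,\e_0 \in \sp\{\phi(\S)\}$ and the increments in (\ref{eq-gtd1-altx0}), together with the projection on the origin-centered ball $B_x$, keep the iterates in $\sp\{\phi(\S)\}$, the $x$-iterates remain in $B_x \cap \sp\{\phi(\S)\}$ for all $n$. Viewing the algorithm in the form (\ref{eq-gtd1-altth0})--(\ref{eq-gtd1-altx0}), the term $(\alpha_n/\beta_n)\,g(\theta_n,x_n,Z_n)$ is asymptotically negligible by Lemma~\ref{lem-conv-mean-cond1}(iii) (with $c_i = \alpha_i/\beta_i \to 0$ under Assumption~\ref{cond-large-stepsize}), so the relevant mean ODE is the fast-time-scale ODE (\ref{eq-gtd1-odefast}) with $\dot\theta \equiv 0$. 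Applying \cite[Theorem 8.2.3]{KuY03} with conditions (i)--(v), and invoking Lemma~\ref{lem-Bx} for the limit set $\{(\theta,\bar x(\theta)) \mid \theta\in B_\theta\}$ of that ODE restricted to $\sp\{\phi(\S)\}$, I would conclude that the iterates concentrate near this set; concretely, that $x_n$ tracks $\bar x(\theta_n)$ in the sense that $\E[\|x_n - \bar x(\theta_n)\|] \to 0$.

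Next I would turn to the slow time-scale, writing the $\theta$-iteration in the form (\ref{eq-gtd1-thnoise}) with the extra noise $\Delta_n = g(\theta_n,x_n,Z_n) - g(\theta_n,\bar x(\theta_n),Z_n)$. Using the Lipschitz continuity of $g$ in $x$, uniformly over $B_\theta$ and over $z$ in a compact set (Remark~\ref{rmk-proofgtd}), together with the fast-scale tracking $x_n \approx \bar x(\theta_n)$, I would show that the averaged effect of the $\Delta_n$ terms over the slow-scale windows vanishes, so that $\Delta_n$ does not disturb the mean dynamics. A second application of \cite[Theorem 8.2.3]{KuY03}---now with conditions (vi)--(vii) and the mean function $\bar g(\theta) = -\nabla J(\theta)$---then identifies the slow-time-scale mean ODE as (\ref{eq-gtd1-odeslow}), whose limit set is $\Theta_{\text{\rm opt}}$ by Lemma~\ref{lem-gtd1-odeslow}. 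The weak-convergence conclusion yields precisely (\ref{eq-thm-wkconv1}), namely that increasingly long stretches of consecutive $\theta$-iterates lie in $N_\epsilon(\Theta_{\text{\rm opt}})$ with probability tending to one; combined with the boundedness of $\theta_n$ in $B_\theta$, this upgrades to $\E[\text{\rm dist}(\theta_n,\Theta_{\text{\rm opt}})] \to 0$.

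The main obstacle I anticipate is the rigorous transfer of the fast-scale tracking into the negligibility of $\Delta_n$ at the slow scale. The fast-scale convergence $x_n \to \bar x(\theta_n)$ holds only in a weak or in-mean sense, and it must be shown to hold uniformly enough over the slow-scale averaging windows $[n, m(n,T)]$ that the accumulated contributions of $\Delta_n$ to the interpolated $\theta$-dynamics disappear. This is exactly the delicate point of the two-time-scale weak convergence argument, where the scale separation $\alpha_n/\beta_n \to 0$, the uniform integrability conditions (i) and (vi), and the uniform Lipschitz property of $g$ in $x$ must be combined carefully. Everything else reduces to checking that the hypotheses of \cite[Theorem 8.2.3]{KuY03} are met and reading off its conclusions.
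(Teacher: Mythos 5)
Your proposal is correct and follows essentially the same route as the paper's own proof: two applications of \cite[Theorem 8.2.3]{KuY03} (fast scale first, establishing $\E[\|x_n - \bar x(\theta_n)\|]\to 0$ via Lemma~\ref{lem-Bx}, then the slow scale with the noise terms $\Delta_n$ shown negligible using that tracking result, uniform integrability, and the uniform Lipschitz property of $g$ in $x$), finishing with Lemma~\ref{lem-gtd1-odeslow} and the bounded convergence upgrade to convergence in mean. You also correctly single out the one genuinely delicate step — the paper handles it by a contradiction argument with Skorohod representation for the fast-scale tracking, and by following the \emph{proof} (rather than the statement) of \cite[Theorem 8.2.3]{KuY03} at the slow scale, since $\Delta_n$ satisfies a weaker condition than (A.8.2.6).
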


\begin{proof} 
First, we consider $(\theta_n, x_n)$ together at the fast time-scale determined by $\{\beta_n\}$. Our goal is to show 
\begin{equation} \label{eq-prfa0}
   \limsup_{n \to \infty} \E \left[ \big\| x_n - \bar x (\theta_n) \big\| \right] = 0.
\end{equation}
We will then use (\ref{eq-prfa0}) in analyzing the $\theta$-iterates at the slow time-scale determined by $\{\alpha_n\}$.

At the fast time-scale, let $g_n(\theta_n, x_n, Z_n) = \frac{\alpha_n}{\beta_n} g(\theta_n, x_n, Z_n)$, and rewrite the algorithm in the form of a single-time-scale algorithm as
\begin{align} \label{eq-prfa11}
   \left( \! \begin{array}{c}  \theta_{n+1} \\ x_{n+1} \end{array} \!\right) = \Pi_{B_\theta \times B_x}
  \! \left( \! \begin{array}{l} \theta_n + \beta_n \, g_n(\theta_n, x_n, Z_n)  \\
   x_n + \beta_n \big( k(\theta_n, x_n, Z_n) + \e_n \omega_{n+1} \big) \end{array} \! \right).
\end{align}
To analyze its asymptotic behavior, we apply \cite[Theorem 8.2.3]{KuY03} for single-time-scale algorithms with diminishing stepsizes and exogenous noises.
The desired mean ODE is (\ref{eq-gtd1-odefast}), namely,
\begin{equation} \label{eq-prfa-ode}
 \left( \! \begin{array}{c} \dot{\theta}(t) \\ \dot x(t) \end{array} \!\right) 
 =  \left( \! \begin{array}{l} 0 \\
 \bar k \big(\theta(t), x(t) \big) + z(t)  \end{array} \!\right),
\end{equation} 
where $\theta(t) \in B_\theta$ and $z(t) \in - \N_{B_x}(x(t))$ is the boundary reflection term. Lemma~\ref{lem-conv-mean-cond1}(i) shows that the function $\bar k(\theta, x)$ in the ODE indeed characterizes the average dynamics of the $x$-iterates in (\ref{eq-prfa11}), while the first relation in Lemma~\ref{lem-conv-mean-cond1}(iii) shows that the constant zero function in the ODE characterizes the average dynamics of the $\theta$-iterates in (\ref{eq-prfa11}). These two parts of Lemma~\ref{lem-conv-mean-cond1} thus fulfill the convergence-in-mean condition (A.8.2.7) in \cite[Theorem 8.2.3]{KuY03}. Regarding other conditions in that theorem, the simple constraint set $B_\theta \times B_x$ satisfies the condition required, and so does the stepsize $\{\beta_n\}$ due to Assumption~\ref{cond-large-stepsize}. The rest of the conditions correspond to the conditions (i)-(iii) we gave in Section~\ref{sec-gtd1-cond}. (In particular, (A.8.2.1) and (A.8.2.5) correspond to our uniform integrability condition (i); (A.8.2.4) corresponds to our tightness condition (ii); and (A.8.2.3) corresponds to our continuity condition (iii).) These conditions are all satisfied by (\ref{eq-prfa11}), as discussed earlier.

Thus we can apply \cite[Theorem 8.2.3]{KuY03} to the iterates (\ref{eq-prfa11}) and obtain the following conclusions. 
Consider the sequence of continuous-time processes $\{(\theta^{n}(\cdot), x^{n}(\cdot))\}_{n \geq 0}$ where for each $n$, $\theta^{n}(\cdot)$ and $x^{n}(\cdot)$ are piecewise constant interpolations of the $\theta$-iterates and $x$-iterates, respectively, with the $n$-th iterates placed at $t=0$:
$$
 \theta^n(t) = \theta_{n+m}   \ \ \  \text{and} \ \ x^n(t) = x_{n+m} \qquad \text{for} \  t \in \textstyle{\big[\sum_{i=0}^{m-1} \beta_{n+i} \, , \,  \sum_{i=0}^{m} \beta_{n+i} \big)}, \ m \geq 0.
$$
Then by \cite[Theorem 8.2.3]{KuY03}, for each subsequence of $\{(\theta^{n}(\cdot), x^{n}(\cdot))\}_{n \geq 0}$, there is a further subsequence $\{(\theta^{n_j}(\cdot), x^{n_j}(\cdot))\}_{j \geq 0}$ that converges in distribution%footnote starts
\footnote{Involved here is the convergence of probability measures on the space of $\re^{2d}$-valued functions on $[0, +\infty)$ or $(-\infty, + \infty)$ that are right-continuous with left limits, where the space of these functions is endowed with the Skorohod metric, which makes it a complete separable metric space \cite[p.\ 238-240]{KuY03}.}
%footnote ends
to $(\theta(\cdot), x(\cdot))$, where $(\theta(\cdot), x(\cdot))$ has absolutely continuous paths satisfying the ODE (\ref{eq-prfa-ode}) and moreover, almost surely, its path lies in the limit set of the ODE (\ref{eq-prfa-ode}). 
As discussed in Section~\ref{sec-gtd1-odes}, with the initial $x_0, \e_0 \in \sp \{\phi(\S)\}$, we are only interested in those solutions of the ODE with initial $x(0)$ and hence the entire solution $x(\cdot)$ in $\sp \{\phi(\S)\}$. For all initial conditions $x(0) \in B_x \cap \sp \{\phi(\S)\}$, by our assumption on $B_x$, the limit set of the ODE (\ref{eq-prfa-ode}) is the set $\{\big(\theta, \bar x(\theta) \big) \mid \theta \in B_\theta \}$ (Lemma~\ref{lem-Bx}).
So, almost surely, $\theta(\cdot) \equiv \theta(0) \in B_\theta$ and $x(\cdot) \equiv \bar x(\theta(0))$.

Let us now prove (\ref{eq-prfa0}).
Suppose it is not true. Then $\limsup_{n \to \infty} \E \left[ \big\| x_n - \bar x (\theta_n) \big\| \right] > 0$, so there must exist some $\epsilon > 0$ and a subsequence $\{n_j\}_{j \geq 1}$ of integers such that 
\begin{equation} \label{eq-prfa1} 
 \E \big[ \big\| x_{n_j} - \bar x (\theta_{n_j}) \big\| \big] > \epsilon, \qquad \forall \, j \geq 1.
\end{equation} 
Consider the sequence of piecewise constant interpolated processes $\{(\theta^{n_j}(\cdot), x^{n_j}(\cdot))\}_{j \geq 1}$. By the preceding proof, there exists a further subsequence, which we denote by $\{(\theta^{n'_j}(\cdot), x^{n'_j}(\cdot))\}_{j \geq 1}$, that converges in distribution to a process $(\theta(\cdot), x(\cdot))$ with the properties described above.
More conveniently, as in \cite[Chap.\ 8.2, p.\ 255]{KuY03}, let us work with the Skorohod representation of these processes. 
Specifically, by \cite[Theorem 11.7.1]{Dud02}, there exist some probability space and processes $\{(\tilde \theta^{n'_j}(\cdot), \tilde x^{n'_j}(\cdot))\}_{j \geq 1}$ and $(\tilde \theta(\cdot), \tilde x(\cdot))$ defined on that probability space such that\vspace*{-3pt} 
\begin{itemize}
\item[(i)] each of the tilde processes has the same probability distribution as the corresponding process without tilde;
\item[(ii)] $\big\{\big(\tilde \theta^{n'_j}(\cdot), \tilde x^{n'_j}(\cdot)\big)\big\}_{j \geq 1}$ converges almost surely to $(\tilde \theta(\cdot), \tilde x(\cdot))$.\vspace*{-3pt}
\end{itemize}
Because the limit process $(\tilde \theta(\cdot), \tilde x(\cdot))$ has continuous paths, the convergence of $(\tilde \theta^{n'_j}(\cdot), \tilde x^{n'_j}(\cdot))$ to $(\tilde \theta(\cdot), \tilde x(\cdot))$ in (ii) is uniform on any compact interval of $t$.
We then have $\tilde x(\cdot) \equiv \bar x(\tilde \theta(0))$ a.s.\ and 
$$\big\|\tilde \theta^{n'_j}(0) -  \tilde \theta(0) \big\| + \big\|\tilde x^{n'_j}(0) - \bar x\big(\tilde \theta(0) \big) \big\| \asto 0, \qquad \text{as} \  j \to \infty.$$
Since $\bar x(\theta)$ is a continuous function of $\theta$, the above implies that
$\|\tilde x^{n'_j}(0) - \bar x \big( \tilde \theta^{n'_j}(0) \big)\| \asto 0$ as $j \to \infty$. In turn, since all the processes lie in the bounded set $B_\theta \times B_x$, 
by the bounded convergence theorem, we have $\lim_{j \to \infty} \E \big[ \big\| \tilde x^{n'_j}(0) - \bar x \big( \tilde \theta^{n'_j}(0) \big) \big\| \big] = 0$, which is the same as 
$$\lim_{j \to \infty} \E \big[ \big\| x^{n'_j}(0) - \bar x \big(\theta^{n'_j}(0) \big) \big\| \big]  = \lim_{j \to \infty} \E \big[ \big\| x_{n'_j} - \bar x \big(\theta_{n'_j} \big) \big\| \big] = 0,$$  
because $(\tilde \theta^{n'_j}(\cdot), \tilde x^{n'_j}(\cdot))$ has the same probability distribution as $(\theta^{n'_j}(\cdot), x^{n'_j}(\cdot))$. But the last equality above contradicts (\ref{eq-prfa1}). Therefore, (\ref{eq-prfa0}) must hold.

We now consider the slow time-scale determined by $\{\alpha_n\}$ and rewrite the $\theta$-iterates (\ref{eq-gtd1a-th}) of the algorithm as
\begin{align} \label{eq-prfa5}
 \theta_{n+1} & = \Pi_{B_\theta} \big( \theta_n + \alpha_n (g(\theta_n, \bar x (\theta_n), Z_n)  + \Delta_n ) \big), \\
\text{where} \qquad \Delta_n & = g(\theta_n, x_n, Z_n) - g(\theta_n, \bar x (\theta_n), Z_n). \label{eq-prfa52}
\end{align}
We view this as a single-time-scale algorithm and apply again \cite[Theorem 8.2.3]{KuY03} or more precisely, the proof of that theorem. 
As before, the constraint set $B_\theta$ and the stepsize $\{\alpha_n\}$ satisfy the conditions in the theorem, and the uniform integrability and tightness conditions in the theorem are entailed by our conditions (i)-(ii) and (vi) listed in Section~\ref{sec-gtd1-cond}, which are all satisfied by the algorithm. Another condition is the continuity of $g(\theta, \bar x(\theta), z)$ in $\theta$ uniformly in $z$ in a compact set, which corresponds to the condition (A.8.2.3) in \cite[Theorem 8.2.3]{KuY03} and which is implied by our condition (iii) and the uniform continuity of the function $\bar x(\theta)$ on the bounded set $B_\theta$. What remains to be shown is that there is a mean ODE corresponding to (\ref{eq-prfa5}).

The desired mean ODE is (\ref{eq-gtd1-odeslow}), namely,
\begin{equation} \label{eq-prfa6}
  \dot \theta(t) = \bar g\big(\theta(t)\big) + z(t),  \quad z(t) \in - \N_{B_\theta}\big(\theta(t)\big),
\end{equation}  
where $z(t)$ is the boundary reflection term.
Lemma~\ref{lem-conv-mean-cond1}(ii) relates the averages of $\{g(\theta, \bar x (\theta), Z_n)\}$ in (\ref{eq-prfa5}) to the function $\bar g(\theta)$ in the ODE (\ref{eq-prfa6}), and it fulfills the condition (A.8.2.7) in \cite[Theorem 8.2.3]{KuY03}. 
The $\Delta_n$ terms in (\ref{eq-prfa5})-(\ref{eq-prfa52}) are noise terms. We need to show that eventually their effects become negligible so that the average dynamics of the algorithm is characterized by the desired mean ODE. Specifically, if we can show for two arbitrary, fixed positive numbers $t < T$,
$$ \limsup_{n \to \infty} \E \left[ \Big\| \sum_{i= m(n, t)}^{m(n,T)} \alpha_i \Delta_i \Big\| \right] = \limsup_{n \to \infty} \E \left[ \Big\| \sum_{i= m(n, t)}^{m(n,T)} \alpha_i \big(g(\theta_i, x_i, Z_i) - g(\theta_i, \bar x(\theta_i), Z_i) \big) \Big\| \right] = 0,$$
then we can proceed as in the proof of \cite[Theorem 8.2.3]{KuY03} to obtain its conclusions.%footnote starts
\footnote{The term $\Delta_n$ here corresponds to the noise term $\beta_n$ in \cite[Theorem 8.2.3]{KuY03}. Our condition on $\Delta_n$ is weaker than the condition (A.8.2.6) in \cite[Theorem 8.2.3]{KuY03} for noise terms. This is why, instead of directly applying the theorem, we need to follow its proof. The part of the proof related to the treatment of the noise terms appears in \cite[p.\ 251-254]{KuY03} (presented there are the proof arguments for the constant-stepsize case, which, apart from notational differences, are essentially the same as the arguments for the diminishing-stepsize case). What we need to do here is to use the property of the noise terms to eliminate the $B(\cdot)$ term in the equation (2.6) of that proof \cite[p.\ 252]{KuY03}. The condition we give above is sufficient for this purpose.}
%footnote ends
For the preceding relation to hold, it is sufficient that
\begin{equation} \label{eq-prfa2}
 \limsup_{n \to \infty} \sum_{i= m(n, t)}^{m(n,T)} \alpha_i \, \E \!\left[  \big\| g(\theta_i, x_i, Z_i) - g(\theta_i, \bar x(\theta_i), Z_i) \big\| \right] = 0.
\end{equation} 
Let us prove (\ref{eq-prfa2}) using (\ref{eq-prfa0}) and the uniform integrability of $\{g(\theta_n, x_n, Z_n)\}$ and $\{g(\theta_n, \bar x (\theta_n), Z_n)\}$.

For any $\epsilon$, by the tightness of $\{\e_n\}$ (implied by Prop.~\ref{prop-trace}(ii)), there exists $K_\epsilon > 0$ such that 
$$\inf_{n \geq 0} \Pr(\|\e_n \| > K_\epsilon) < \epsilon.$$ 
Since $\{g(\theta_n, x_n, Z_n)\}$ and $\{g(\theta_n, \bar x (\theta_n), Z_n)\}$ are uniformly integrable (as verified in the previous subsection), by \cite[Theorem~10.3.5]{Dud02}, there exist constants $\eta_\epsilon$ which depend on $\epsilon$, such that
\begin{equation} \label{eq-prfa3}
 \sup_{n \geq 0} \E \!\left[ \big\|g(\theta_n, x_n, Z_n)\big\| \cdot \I( \| \e_n \| > K_\epsilon) \right] < \eta_\epsilon, \qquad  \sup_{n \geq 0} \E \!\left[ \big \| g(\theta_n, \bar x (\theta_n), Z_n) \big\| \cdot \I( \| \e_n \| > K_\epsilon)  \right] < \eta_\epsilon,
\end{equation} 
and $\eta_\epsilon \to 0$ as $\epsilon \to 0$.

The function $g(\theta, x, z)$ is Lipschitz continuous in $x$ uniformly w.r.t.\ $(\theta, z) \in B_\theta \times D$ for a compact set $D$ in the space of $z$. Hence, for some constant $c_\epsilon$ depending on $K_\epsilon$,
$$ \E \!\left[ \big\| g(\theta_n, x_n, Z_n) - g(\theta_n, \bar x(\theta_n), Z_n) \big\| \cdot \I \big( \| \e_n \| \leq K_\epsilon \big) \right] \leq 
c_\epsilon \, \E \!\left[ \| x_n - \bar x(\theta_n) \| \right], \quad \forall \, n \geq 0.$$
Combining this with (\ref{eq-prfa0}), we have
\begin{equation} \label{eq-prfa4}
 \lim_{n \to \infty} \E \! \left[ \big\| g(\theta_n, x_n, Z_n) - g(\theta_n, \bar x(\theta_n), Z_n) \big\| \cdot \I \big( \| \e_n \| \leq K_\epsilon \big) \right] = 0.
\end{equation} 
Now $ \sum_{i= m(n, t)}^{m(n,T)} \alpha_i \, \E \!\left[ \big\| g(\theta_i, x_i, Z_i) - g(\theta_i, \bar x(\theta_i), Z_i) \big\| \right]$ is bounded above by
\begin{align*}
   & (T - t + \alpha_{m(n,t)}) \cdot \sup_{m(n, t) \leq i \leq m(n,T)} \E \!\left[ \big\| g(\theta_i, x_i, Z_i) - g(\theta_i, \bar x(\theta_i), Z_i) \big\| \cdot \I \big( \| \e_i \| \leq K_\epsilon \big) \right] \\
   & + (T - t+ \alpha_{m(n,t)}) \cdot \sup_{m(n, t) \leq i \leq m(n,T)} \E \!\left[ \left( \big\|g(\theta_i, x_i, Z_i)\big\|   + \big\| g(\theta_i, \bar x (\theta_i), Z_i) \big\| \right) \cdot \I( \| \e_i \| > K_\epsilon) \right].
\end{align*}   
(To see this, recall that by the definition (\ref{eq-mT}) of $m(n, \cdot)$, $\sum_{i= m(n, t)}^{m(n,T)} \alpha_i \leq T-t + \alpha_{m(n,t)}$.) So, letting first $n \to \infty$ and then $\epsilon \to 0$, and using first (\ref{eq-prfa3}) and (\ref{eq-prfa4}) (together with the fact that the stepsize is diminishing), and using then the fact $\lim_{\epsilon \to 0} \eta_\epsilon = 0$, we obtain (\ref{eq-prfa2}).

Thus we have fulfilled the conditions to apply the proof of \cite[Theorem 8.2.3]{KuY03}. From it we obtain the conclusion that there exists a 
a sequence of positive numbers $T_n \to \infty$ such that for any $\epsilon > 0$,
$$ \limsup_{n \to \infty} \Pr \big( \theta_i \not\in N_\epsilon(L_{B_\theta}), \ \text{some} \ i \in [n, m(n, T_n) ] \big)  = 0, 
$$
where $L_{B_\theta}$ is the limit set of the ODE (\ref{eq-prfa6}). 
This is just the set $\Theta_{\text{opt}}$ by Lemma~\ref{lem-gtd1-odeslow}, so we obtain (\ref{eq-thm-wkconv1}). Finally, since $\{\theta_n\}$ is bounded and (\ref{eq-thm-wkconv1}) implies $\text{dist}(\theta_n, \Theta_{\text{opt}}) \to 0$ in probability, the convergence is also in mean by \cite[Theorem 10.3.6]{Dud02}: $\lim_{n \to \infty} \E \big[ \text{dist}(\theta_n, \Theta_{\text{opt}}) \big]= 0$.
\end{proof}

\subsubsection{Convergence proof for the constant-stepsize case} \label{sec-gtd1conv-constant}

The two-time-scale GTDa algorithm (\ref{eq-gtd1a-th})-(\ref{eq-gtd1a-x}) can be carried out using constant stepsizes instead of diminishing stepsizes, i.e., with $\alpha_n = \alpha, \beta_n=\beta$ for all $n$ and $\alpha < < \beta$. An analogue of Theorem~\ref{thm-gtd1-wk-dim} is given below for this constant stepsize algorithm. Imagine that we run the algorithm simultaneously for all possible stepsize parameters. The theorem concerns the asymptotic property of the algorithm, as the stepsize parameters $\alpha, \beta$ approach $0$ in a way such that $\alpha/\beta \to 0$. In the theorem the shorthand notation ``$i \in [0, T_{\alpha,\beta}/\alpha ]$'' refers to an iteration $i$ with $0 \leq i \leq T_{\alpha,\beta}/\alpha$.

\begin{thm} \label{thm-gtd1-wk-constant}
Let Assumption~\ref{cond-collective} hold. Let $\{(\theta^{\alpha,\beta}_n, x^{\alpha,\beta}_n)\}$ be generated by the two-time-scale GTDa algorithm (\ref{eq-gtd1a-th})-(\ref{eq-gtd1a-x}) with constant stepsizes $\alpha_n = \alpha$ and $\beta_n = \beta$ for all $n$, where $\alpha << \beta$. 
Moreover, let the constraint set $B_x$ satisfy the condition (\ref{eq-cond-Bx0}), and let the initial $x^{\alpha,\beta}_0=x_0, \e_0 \in \sp  \{\phi(\S)\}$.
Then for each given initial condition, the following holds: For any integers $n_{\alpha}$ such that $\alpha \, n_{\alpha} \to \infty$ as $\alpha \to 0$,
there exist positive numbers $\{T_{\alpha, \beta} \mid \alpha, \beta > 0\}$ with $T_{\alpha, \beta} \to \infty$ as $(\beta, \alpha/\beta) \to 0$, such that for any $\epsilon > 0$,
\begin{equation} \label{eq-wk-constant}
  \limsup_{\beta \to 0, \, \alpha/\beta \to 0} \Pr \big( \theta^{\alpha,\beta}_{n_\alpha + i} \not\in N_\epsilon(\Theta_{\text{\rm opt}}), \ \text{some} \ i \in [0, T_{\alpha,\beta}/\alpha ] \big)  = 0.
\end{equation}  
\end{thm}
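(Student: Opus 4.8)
The plan is to mirror the proof of Theorem~\ref{thm-gtd1-wk-dim} essentially verbatim, replacing the diminishing-stepsize weak convergence theorem \cite[Theorem 8.2.3]{KuY03} with its constant-stepsize counterpart \cite[Theorem 8.6.1]{KuY03}, and again applying it twice---once at the fast time-scale (stepsize $\beta$) and once at the slow time-scale (stepsize $\alpha$). All the structural ingredients are already in place: the two desired mean ODEs (\ref{eq-gtd1-odefast}) and (\ref{eq-gtd1-odeslow}), their limit sets (Lemmas~\ref{lem-Bx} and~\ref{lem-gtd1-odeslow}), the averaging conditions of Lemma~\ref{lem-conv-mean-cond1}, and conditions (i)--(vii) of Section~\ref{sec-gtd1-cond}, all stated in a $\lim_{m,n\to\infty}$ form that is insensitive to whether the stepsizes diminish or are held constant. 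The only change in the setup is the interpolation: for each fixed pair $(\alpha,\beta)$ the iterates are interpolated onto continuous time using the constant stepsizes, and the recentering shift $n_\alpha$ (with $\alpha n_\alpha\to\infty$, hence also $\beta n_\alpha\to\infty$ since $\alpha << \beta$) places both processes in the asymptotic regime required by \cite[Theorem 8.6.1]{KuY03}.

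At the fast time-scale I would first establish the constant-stepsize analogue of the tracking property (\ref{eq-prfa0}), namely that $\E\big[\|x^{\alpha,\beta}_{n_\alpha+i}-\bar x(\theta^{\alpha,\beta}_{n_\alpha+i})\|\big]$ becomes negligible as $(\beta,\alpha/\beta)\to0$. Viewing (\ref{eq-gtd1a-th})--(\ref{eq-gtd1a-x}) as the single-time-scale recursion (\ref{eq-prfa11}) driven by $\beta$, the $\theta$-drift is the scaled term $(\alpha/\beta)\,g(\theta_n,x_n,Z_n)$, which is killed in the limit by the \emph{second} relation of Lemma~\ref{lem-conv-mean-cond1}(iii) (taking $c_j=\alpha/\beta$), while the $x$-drift averages to $\bar k$ by Lemma~\ref{lem-conv-mean-cond1}(i). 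Thus the associated mean ODE is again (\ref{eq-gtd1-odefast}), whose limit set is $\{(\theta,\bar x(\theta))\mid\theta\in B_\theta\}$ by Lemma~\ref{lem-Bx}, once we restrict to initial $x^{\alpha,\beta}_0,\e_0\in\sp\{\phi(\S)\}$ so that the iterates stay in $\sp\{\phi(\S)\}$. A Skorohod-representation argument identical to the one following (\ref{eq-prfa1}) then yields the tracking conclusion.

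At the slow time-scale I would rewrite the $\theta$-recursion as in (\ref{eq-prfa5})--(\ref{eq-prfa52}) with noise term $\Delta_n=g(\theta_n,x_n,Z_n)-g(\theta_n,\bar x(\theta_n),Z_n)$, and apply \cite[Theorem 8.6.1]{KuY03} with the mean ODE (\ref{eq-gtd1-odeslow}); the averaging of $\{g(\theta,\bar x(\theta),Z_n)\}$ toward $\bar g(\theta)$ is supplied by Lemma~\ref{lem-conv-mean-cond1}(ii). The negligibility of the $\Delta_n$ terms is the constant-stepsize version of (\ref{eq-prfa2}): splitting on the event $\{\|\e_i\|\le K_\epsilon\}$ and using the fast-scale tracking result together with the Lipschitz continuity of $g$ in $x$ and the uniform integrability of $\{g(\theta_n,x_n,Z_n)\}$ and $\{g(\theta_n,\bar x(\theta_n),Z_n)\}$ (exactly as in (\ref{eq-prfa3})--(\ref{eq-prfa4})), one shows that the block sums $\sum_i\alpha\,\E\|\Delta_i\|$ over the relevant windows vanish. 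The limit set of (\ref{eq-gtd1-odeslow}) is $\Theta_{\text{\rm opt}}$ by Lemma~\ref{lem-gtd1-odeslow}, which delivers (\ref{eq-wk-constant}).

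The main obstacle is the bookkeeping of the \emph{double} limit $(\beta,\alpha/\beta)\to0$ together with the recentering. One must check that a single shift $n_\alpha$ places both interpolated processes in the correct asymptotic regime, and---more delicately---that the fast-scale tracking holds in a sufficiently uniform sense across the whole window of $O(T_{\alpha,\beta}/\alpha)$ slow iterations, so that it can be fed into the slow-scale noise estimate. In the diminishing-stepsize case this uniformity was automatic from the single limit $\lim_{n\to\infty}$, whereas here it must be read off from the constant-stepsize convergence theorem and its dependence on the vanishing ratio $\alpha/\beta$. Apart from this, no genuinely new estimate is required beyond those already proved for Theorem~\ref{thm-gtd1-wk-dim}.
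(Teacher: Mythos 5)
Your outline follows the paper's own proof almost step for step: the paper likewise views the iteration first at the fast scale (applying the single-time-scale constant-stepsize result \cite[Theorem 8.2.2]{KuY03} rather than Theorem 8.6.1, but that is a citation detail), kills the $\theta$-drift with the second relation of Lemma~\ref{lem-conv-mean-cond1}(iii), proves the tracking property via a Skorohod-representation/contradiction argument, and then treats the slow scale with the noise terms $\Delta_n$ and the uniform-integrability-plus-Lipschitz estimate exactly as you describe. The uniformity issue you flag is handled in the paper by building the supremum into the tracking statement, $\limsup_{m\to\infty}\sup_{i\ge n_m}\E\big[\|x^{(m)}_i-\bar x(\theta^{(m)}_i)\|\big]=0$, and proving it by contradiction with recentering at an arbitrary $n'_j\ge n_{m_j}$ (which is admissible because $\beta^{(m_j)}n'_j\ge \beta^{(m_j)}n_{m_j}\to\infty$ when $\alpha \ll \beta$); this is a natural extension of the argument you already sketch, so it is not a real obstruction.

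The genuine gap is the final step, which your proposal does not address and in fact dismisses ("no genuinely new estimate is required"). What the two applications of the constant-stepsize theorem deliver—after reducing to arbitrary sequences $(\alpha^{(m)},\beta^{(m)})\to 0$ with $\alpha^{(m)}/\beta^{(m)}\to 0$—is a statement for each \emph{fixed} horizon $T$: for all $T,\epsilon>0$, $\limsup_{m\to\infty} \Pr\big(\theta^{(m)}_{n_m+i}\notin N_\epsilon(\Theta_{\text{opt}}),\ \text{some } i\in[0,T/\alpha^{(m)}]\big)=0$ (this is (\ref{eq-prfd5}) in the paper). The theorem, however, asserts the existence of horizons $T_{\alpha,\beta}\to\infty$ as $(\beta,\alpha/\beta)\to 0$ for which the same probability vanishes, and passing from "for every fixed $T$" to "for some $T_{\alpha,\beta}\to\infty$" requires an explicit diagonalization. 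The paper defines $f(\beta,c,\epsilon,T)$ as the exceptional probability with $c=\alpha/\beta$, shows $\sup_{(\beta,c)\in A_j}f(\beta,c,\epsilon,T)\to 0$ over the corner sets $A_j=\{(\beta,c) \mid \beta\le 1/j,\ c\le 1/j\}$, chooses thresholds $j_m$ with $\sup_{(\beta,c)\in A_j}f(\beta,c,\tfrac{1}{m},m)\le\tfrac{1}{m}$ for all $j\ge j_m$, and then sets $T_{\alpha,\beta}=m$ on the annuli $B_m=A_{j_m}\setminus A_{j_{m+1}}$, finally verifying (\ref{eq-wk-constant}) from this definition. Without this (or an equivalent) construction, your argument proves only the fixed-$T$ version of the conclusion, which is strictly weaker than the stated theorem.
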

\smallskip

The proof of Theorem~\ref{thm-gtd1-wk-constant} given below parallels that of Theorem~\ref{thm-gtd1-wk-dim} and uses essentially the same arguments. 
We will refer to the previous proof frequently to avoid repetition. First of all, we want to make sure that several conditions are met by the algorithm, so that we can apply the theorems from \cite{KuY03} for constant stepsize algorithms in our proof. These conditions are the same as the conditions given in Section~\ref{sec-gtd1-cond}, except that in the uniform integrability condition, iterates generated with different stepsize parameters all need to be considered. In particular, Conditions (ii)-(v) and (vii) remain the same. In Conditions (i) and (vi), the iterates $(\theta_n, x_n)$ now need to be indexed also by the stepsize parameters used to generate them. Specifically, besides those sets in Conditions (i) and (vi) that involve fixed $(\theta, x)$, the following sets of random variables are required to be uniformly integrable:
\begin{align*}
 & \big\{g(\theta_n^{\alpha,\beta}, x_n^{\alpha,\beta}, Z_n) \mid n \geq 0, \alpha > 0, \beta > 0 \big\}, \qquad \big\{k(\theta_n^{\alpha,\beta},x_n^{\alpha,\beta},Z_n)+\e_n\omega_{n+1} \mid n \geq 0, \alpha > 0, \beta > 0 \big\}, \\
& \big\{g(\theta_n^{\alpha,\beta}, \bar x(\theta_n^{\alpha,\beta}), Z_n) \mid  n \geq 0, \alpha > 0, \beta > 0  \big\}.
\end{align*}
As before, that these sets are uniform integrable follows from the uniform integrability of $\{\e_n\}$ given in Prop.~\ref{prop-trace}(ii).
Regarding other conditions used in the proof below, Remark~\ref{rmk-proofgtd} given at the end of Section~\ref{sec-gtd1-cond} applies here as well.

\begin{proof}
Consider an arbitrary sequence of stepsize parameters $(\alpha^{(m)}, \beta^{(m)}), m \geq 1$, that satisfies $(\alpha^{(m)}, \beta^{(m)}) \to 0$ and $\alpha^{(m)}/\beta^{(m)} \to 0$ as $m \to \infty$. For each $m$, let $n_m = n_\alpha$ for $\alpha = \alpha^{(m)}$, and let $\big(\theta_n^{(m)}, \theta_n^{(m)}\big), n \geq 0$, denote the iterates generated by the algorithm with the stepsizes $(\alpha^{(m)}, \beta^{(m)})$.  

As before we view the algorithm as a single-time-scale algorithm, first for the fast time-scale and then for the slow time-scale. At the fast time-scale, for each $m$, we let $g_m = \frac{\alpha^{(m)}}{\beta^{(m)}} \cdot g$, and rewrite the algorithm as
\begin{align} \label{eq-prfd1}
   \left( \! \begin{array}{c}  \theta_{n+1}^{(m)} \\ x_{n+1}^{(m)} \end{array} \!\right) = \Pi_{B_\theta \times B_x}
  \! \left( \! \begin{array}{l} \theta_n^{(m)} + \beta^{(m)} \, g_m(\theta_n^{(m)}, x_n^{(m)}, Z_n)  \\
   x_n^{(m)} + \beta^{(m)} \big( k(\theta_n^{(m)}, x_n^{(m)}, Z_n) + \e_n \omega_{n+1} \big) \end{array} \! \right).
\end{align}
At the slow time-scale, for each $m$, we rewrite the $\theta$-iterates of the algorithm as
\begin{align} \label{eq-prfd2}
 \theta_{n+1}^{(m)} & = \Pi_{B_\theta} \big( \theta_n^{(m)} + \alpha^{(m)} (g(\theta_n^{(m)}, \bar x (\theta_n^{(m)}), Z_n)  + \Delta_n^{(m)} ) \big), \\
\text{where} \qquad \Delta_n^{(m)} & = g(\theta_n^{(m)}, x_n^{(m)}, Z_n) - g(\theta_n^{(m)}, \bar x (\theta_n^{(m)}), Z_n). \notag
\end{align}
In analyzing (\ref{eq-prfd1}), our goal is to show the following analogue of (\ref{eq-prfa0}):
\begin{equation} \label{eq-prfd3}
   \limsup_{m \to \infty} \sup_{i \geq n_m} \E \! \left[ \big\| x_i^{(m)} - \bar x \big(\theta_i^{(m)}\big) \big\| \right] = 0.
\end{equation}
We will then use this relation to bound the effects of the noise terms $\Delta_n^{(m)}$ in (\ref{eq-prfd2}) at the slow time-scale, similarly to the previous diminishing-stepsize case.

We consider first (\ref{eq-prfd1}) at the fast time-scale. Its associated mean ODE is again (\ref{eq-prfa-ode}).  We proceed with arguments paralleling those given in the previous proof, except that in place of \cite[Theorem 8.2.3]{KuY03}, we now apply \cite[Theorem 8.2.2]{KuY03} for single-time-scale, constant stepsize algorithms and we verify that the conditions in \cite[Theorem 8.2.2]{KuY03} are met.%footnote starts
\footnote{The verification step is the same as that given immediately after (\ref{eq-prfa-ode}) in the previous proof, except that we now use the second (instead of the first) relation in Lemma~\ref{lem-conv-mean-cond1}(iii) to characterize the average dynamics of the $\theta$-iterates at the fast time-scale.}
%footnote ends 
With the latter theorem, we obtain the following conclusions. 
Let $n'_m, m \geq 1$, be any sequence of integers satisfying $\beta^{(m)} n'_m \to \infty$ as $m \to \infty$.
For each $m$, let $(\theta^{(m)}(\cdot), x^{(m)}(\cdot))$ be piecewise constant interpolations of the iterates $(\theta^{(m)}_{n'_m+i}, x^{(m)}_{n'_m+i})$ given by 
$$
 \theta^{(m)}(t) = \theta^{(m)}_{n'_m+i}   \ \ \  \text{and} \ \ x^{(m)}(t) = x^{(m)}_{n'_m+i} \qquad \text{for} \  t \in \big[ i \beta^{(m)}, (i+1)  \beta^{(m)} \big),  \ i \geq 0.
$$
Then, by \cite[Theorem 8.2.2]{KuY03}, for each subsequence of $\{(\theta^{(m)}(\cdot), x^{(m)}(\cdot))\}_{m \geq 1}$, there is a further subsequence that converges in distribution to 
a process $(\theta(\cdot), x(\cdot))$, where $(\theta(\cdot), x(\cdot))$ has the same property as before. Namely, $(\theta(\cdot), x(\cdot))$ has absolutely continuous paths satisfying the ODE (\ref{eq-prfa-ode}) and moreover, almost surely, its path lies in the limit set of the ODE (\ref{eq-prfa-ode}). 
As before, with the initial $x_0, \e_0 \in \sp \{\phi(\S)\}$, we are only interested in those solutions of the ODE with the initial $x(0) \in B_x \cap \sp \{\phi(\S)\}$. For such initial conditions, by our assumption on $B_x$, the limit set of the ODE (\ref{eq-prfa-ode}) is $\{\big(\theta, \bar x(\theta) \big) \mid \theta \in B_\theta \}$ (Lemma~\ref{lem-Bx}).
So, almost surely, $\theta(\cdot) \equiv \theta(0) \in B_\theta$ and $x(\cdot) \equiv \bar x(\theta(0))$. 

We use the preceding conclusions to prove (\ref{eq-prfd3}). Suppose (\ref{eq-prfd3}) does not hold. Then there exists $\epsilon > 0$ and a subsequence $m_j$ and integers $n'_j \geq n_{m_j}$ such that
\begin{equation} \label{eq-prfd4}
       \E \!\left[ \big\| x_{n'_j}^{(m_j)} - \bar x \big(\theta_{n'_j}^{(m_j)}\big) \big\| \right] > \epsilon, \qquad \forall \, j \geq 1. 
\end{equation}
Now in the preceding conclusions, let $n'_{m_j}$  be $n'_j$ (note that $n'_j$ satisfies the requirement $\beta^{(m_j)}  n'_j \to \infty$ as $j \to \infty$, because $\beta^{(m_j)}/\alpha^{(m_j)} \to \infty$ and $\alpha^{(m_j)} n'_j  \geq \alpha^{(m_j)} n_{m_j} \to \infty$). By the preceding proof, from the subsequence of interpolated processes $\{(\theta^{(m_j)}(\cdot), x^{(m_j)}(\cdot))\}_{j \geq 1}$, we can select a further subsequence that has the weak convergence properties described above. Denote the latter subsequence by $\{(\theta^{(m'_j)}(\cdot), x^{(m'_j)}(\cdot))\}$. Then, as in the previous proof, working with the Skorohod representation of these processes allows us to conclude that 
$$   \E \!\left[ \big\| x^{(m'_j)}_{n'_j} - \bar x\big(\theta^{(m'_j)}_{n'_j} \big) \big\| \right] \to 0 \ \ \text{as} \ j \to \infty.$$
This contradicts (\ref{eq-prfd4}); therefore, (\ref{eq-prfd3}) must hold.

We now consider (\ref{eq-prfd2}) at the slow time-scale, and apply the proof of \cite[Theorem 8.2.2]{KuY03} to connect it with the desired mean ODE (\ref{eq-prfa6}). We proceed with arguments paralleling those given immediately after (\ref{eq-prfa6}) in the previous proof. As before, in order for the conclusions of \cite[Theorem 8.2.2]{KuY03} to hold for (\ref{eq-prfd2}), we need to show that the noise terms $\Delta_n^{(m)}$ in (\ref{eq-prfd2}) are well-behaved. Specifically,
we want to show that for two arbitrary, fixed positive numbers $t < T$,
$$ \limsup_{m \to \infty} \E \!\left[ \Big\| \sum_{i = n_m+ \lfloor t/\alpha^{(m)} \rfloor }^{n_m + \lfloor T/\alpha^{(m)} \!\rfloor } \alpha^{(m)}  \Delta_i^{(m)} \Big\| \right] = 0.$$
Clearly, it is sufficient if we show
$$ \limsup_{m \to \infty}   \sum_{i = n_m+ \lfloor t/\alpha^{(m)} \rfloor }^{n_m + \lfloor T/\alpha^{(m)} \!\rfloor } \alpha^{(m)} \cdot \E \!\left[  \big\| g\big(\theta_i^{(m)}, x_i^{(m)}, Z_i\big) - g\big(\theta_i^{(m)}, \bar x(\theta_i^{(m)}), Z_i\big) \big\| \right] = 0.$$
To prove this, we proceed as in the last part of the previous proof (that is, the proof immediately after (\ref{eq-prfa2})), using (\ref{eq-prfd3}) in place of (\ref{eq-prfa0}), and also using, as before, the uniform integrability of the sets 
$\big\{g(\theta_n^{\alpha,\beta}, x_n^{\alpha,\beta}, Z_n) \mid n \geq 0, \alpha > 0, \beta > 0 \big\}$ and $\big\{g(\theta_n^{\alpha,\beta}, \bar x(\theta_n^{\alpha,\beta}), Z_n) \mid  n \geq 0, \alpha > 0, \beta > 0  \big\}$, together with the fact that for $z$ in a compact set and $\theta \in B_\theta$, the function $g(\theta, x, z)$ is Lipschitz continuous in $x$ uniformly w.r.t.\ $(\theta, z)$.

Having verified the required conditions in order to apply the proof of \cite[Theorem 8.2.2]{KuY03}, we obtain the conclusions of \cite[Theorem 8.2.2]{KuY03} for the sequences of iterates $\{\big(\theta_n^{(m)}, \theta_n^{(m)}\big)\}_{n \geq n_m}$, $m \geq 1$.  
In particular, since $\Theta_{\text{opt}}$ is the limit set of the ODE (\ref{eq-prfa6}) (Lemma~\ref{lem-gtd1-odeslow}), we have that for any $T > 0$ and $\epsilon > 0$,
\begin{equation} \label{eq-prfd5}
  \limsup_{m \to \infty} \Pr \big( \theta^{(m)}_{n_m + i} \not\in N_\epsilon(\Theta_{\text{opt}}), \ \text{some} \ i \in [0, T/\alpha^{(m)} ] \big)  = 0.
\end{equation}  
To finish the proof, we now show that (\ref{eq-prfd5}), together with the arbitrariness of the stepsize sequence $\{(\alpha^{(m)}, \beta^{(m)})\}$, implies the desired conclusion (\ref{eq-wk-constant}). To this end, for each $(\alpha, \beta, \epsilon, T)$, with $c = \alpha/\beta$, define
$$f(\beta,  c, \epsilon, T) = \Pr \big( \theta^{\alpha, \beta}_{n_{\alpha} + i} \not\in N_\epsilon(\Theta_{\text{opt}}), \ \text{some} \ i \in [0, T/\alpha ] \big).$$
Consider the sets $A_j = \{ (\beta, c) \mid \beta \leq 1/j, \, c \leq 1/j \}$, $j \geq 1$.
For each $T, \epsilon > 0$, by (\ref{eq-prfd5}), 
$$\textstyle{\sup_{(\beta, c) \in A_j} f(\beta, c, \epsilon, T) \to 0} \quad \text{as} \  j \to \infty.$$
(Otherwise, we would be able to choose a sequence $\{(\alpha^{(m)}, \beta^{(m)})\}$ with $\beta^{(m)} \to 0$ and $\alpha^{(m)}/\beta^{(m)} \to 0$ that violates (\ref{eq-prfd5}); a contradiction.) This implies that for each $m \geq 1$, there exists $j_m$ such that
\begin{equation} \label{eq-prfd6}
   \textstyle{\sup_{(\beta, c) \in A_j} f(\beta, c, \frac{1}{m}, m) \leq \frac{1}{m}}, \qquad \forall j \geq j_m.
\end{equation}   
Choose a strictly increasing sequence $j_m, m \geq 1$, with the above property. 
Define sets $B_m = A_{j_{m}} \setminus A_{j_{m+1}}$.  Then define the positive numbers $T_{\alpha,\beta}$ in the theorem as follows: 
for all $(\alpha, \beta)$ such that $(\beta, \alpha/\beta) \in B_m$, let $T_{\alpha, \beta} = m$. (Define $T_{\alpha,\beta}$ arbitrarily if $(\alpha, \beta)$ belongs to none of the sets $\{B_m\}$.)
With this definition of $T_{\alpha,\beta}$, let us show that (\ref{eq-wk-constant}) hold. For any $\epsilon > 0$, there exists $\bar m$ with $\epsilon > 1/\bar m$. 
If $(\alpha, \beta)$ is such that $(\beta, \alpha/\beta) \in B_m$ for some $m \geq \bar m$,
the probability in the left-hand side (l.h.s.) of (\ref{eq-wk-constant}) is 
$$\Pr \big( \theta^{\alpha, \beta}_{n_{\alpha} + i} \not\in N_\epsilon(\Theta_{\text{opt}}), \, \text{some} \ i \in [0, T_{\alpha,\beta}/\alpha ] \big) = f(\beta, \tfrac{\alpha}{\beta}, \epsilon, m) \leq f(\beta, \tfrac{\alpha}{\beta}, \tfrac{1}{m}, m) \leq \frac{1}{m},$$
where the last inequality follows from (\ref{eq-prfd6}) and our construction of $B_m$. If we let $\beta \to 0$ and $\alpha/\beta \to 0$, then the corresponding points $(\beta, \alpha/\beta)$ enter sets $B_m$ with $m$ increasing to $+\infty$. So the preceding relation shows that the limit on the l.h.s.\ of (\ref{eq-wk-constant}) equals $0$.
\end{proof}

\subsubsection{Averaged iterates for the constant-stepsize case} \label{sec-aveite}

We mention a result about the averaged iterates for the constant-stepsize case $\alpha_n = \alpha, \beta_n=\beta$ with $\alpha < < \beta$. It complements Theorem~\ref{thm-gtd1-wk-constant} by characterizing the asymptotic behavior of averaged iterates for a fixed pair of stepsize parameters $(\alpha, \beta)$. Averaged iterates, such as 
\begin{equation} \label{eq-aveite}
  \bar \theta^{\alpha,\beta}_n = \textstyle{\frac{1}{n - n_0} \sum_{i=n_0}^{n-1} \theta^{\alpha,\beta}_i}
\end{equation}  
for some given $n_0 \geq 0$, often behave better than the original iterates. Theoretical results regarding the acceleration of convergence by averaging are discussed in \cite{PoJ92} and \cite[Chap.\ 10]{KuY03} (see also the references in the latter book). In our case, we do not yet have a rate of convergence analysis for the algorithms, so we cannot prove the acceleration of convergence for the averaged iterates.%footnote starts
\footnote{However, we did observe in a numerical study of ETD that averaging has great advantages in both the diminishing-stepsize case and the constant-stepsize case. For details of the study, see the companion note to the paper~\cite{etd-wkconv}.}
%footnote ends
The result below about the averaged iterates (\ref{eq-aveite}) is similar to a result proved in \cite{etd-wkconv} in the ETD context and follows from the same analysis.

Let $X_n = (S_n, y_n, \e_n, \theta^{\alpha,\beta}_n, x^{\theta,\beta}_n)$ (treat $y_n$ as a dummy variable in the case of state-dependent~$\lambda$). 
Consider the $m$-step version of the Markov chain $\{X_n\}$: $\{(X_n, X_{n+1}, \ldots X_{n+m-1})\}_{n\geq 0}$. 
Denote the set of its invariant probability measures by $\mathcal{I}^{\alpha,\beta}_m$. (The set $\mathcal{I}^{\alpha,\beta}_m$ is always nonempty, and it can have multiple elements, despite that the state-trace process has a unique invariant probability measure.)
For the invariant probability measures in $\mathcal{I}^{\alpha,\beta}_m$, consider their marginal distributions on the space of the $m$ $\theta$-variables, and let $\bar{\mathcal{I}}^{\alpha,\beta}_m$ denote the set of all these marginals.  
For $\epsilon > 0$, write $N'_\epsilon(\Theta_{\text{\rm opt}})$ for the open $\epsilon$-neighborhood of $\Theta_{\text{\rm opt}}$,
and write $\big[N'_\epsilon(\Theta_{\text{\rm opt}})\big]^m$ for the Cartesian product of $m$ copies of  $N'_\epsilon(\Theta_{\text{\rm opt}})$.

The following theorem is analogous to \cite[Theorem 8, Section 3.3]{etd-wkconv}. The second part of the theorem bounds, in expectation, the maximal deviation from $\Theta_{\text{\rm opt}}$ of $m$ consecutive averaged iterates. The bound is expressed in terms of the minimal probability mass that the distributions in $\bar{\mathcal{I}}^{\alpha,\beta}_m$ assign to $\big[N'_\epsilon(\Theta_{\text{\rm opt}})\big]^m$. The first part of the theorem says that for an arbitrarily small $\epsilon$, this minimal probability mass increases to $1$, as the stepsize parameters $(\alpha, \beta)$, as well as their ratio, approach zero.

\begin{thm} \label{thm-gtd1-average}
In the setting of Theorem~\ref{thm-gtd1-wk-constant}, consider the sequence of averaged iterates $\{\bar \theta^{\alpha,\beta}_n\}$ given by (\ref{eq-aveite}). Then the following hold for any $\epsilon > 0$ and $m \geq 1$:\vspace*{-3pt}
\begin{itemize}
\item[\rm (i)] $ \liminf_{\beta \to 0, \, \alpha/\beta \to 0} \, \inf_{\mu \in \bar{\mathcal{I}}^{\alpha,\beta}_m} \mu \big( \big[N'_\epsilon(\Theta_{\text{\rm opt}})\big]^m \big) = 1$
and more strongly, with $m_\alpha = \lceil \tfrac{m}{\alpha} \rceil$,
$$\liminf_{\beta \to 0, \, \alpha/\beta \to 0} \, \inf_{\mu \in \bar{\mathcal{I}}^{\alpha,\beta}_{m_\alpha}} \mu \big( \big[N'_\epsilon(\Theta_{\text{\rm opt}})\big]^{m_\alpha} \big) = 1.$$
\item[\rm (ii)] For each initial condition of the algorithm and each stepsize pair $(\alpha, \beta)$,
$$ \limsup_{n \to \infty} \E \Big[ \max_{n \leq i < n+m} \!\text{\rm dist} \big( \bar \theta_i^{\alpha,\beta}, \Theta_{\text{\rm opt}} \big) \Big] \leq \epsilon \kappa^{\alpha,\beta}_m + 2 \, r_{B_\theta} ( 1 - \kappa_m^{\alpha,\beta}),$$
where $r_{B_\theta}$ is the radius of $B_\theta$, and $\kappa_m^{\alpha,\beta} = \inf_{\mu \in \bar{\mathcal{I}}^{\alpha,\beta}_m} \mu \big( \big[N'_\epsilon(\Theta_{\text{\rm opt}})\big]^m \big)$.
\end{itemize}
\end{thm}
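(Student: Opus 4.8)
Both parts rest on tying the algorithm's behaviour to the invariant measures of the joint chain $\{X_n\}$, $X_n=(S_n,y_n,\e_n,\theta_n^{\alpha,\beta},x_n^{\alpha,\beta})$. I would first record two facts. (a) For each fixed $(\alpha,\beta)$ the chain $\{X_n\}$ is weak Feller: its state--trace part is weak Feller by Theorem~\ref{thm-erg}(i), and the $(\theta,x)$-updates are continuous maps composed with the continuous projections $\Pi_{B_\theta},\Pi_{B_x}$. Its iterates are tight, since $(\theta_n,x_n)\in B_\theta\times B_x$ is compact and $\{\e_n\}$ is tight by Prop.~\ref{prop-trace}(ii). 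Hence the Cesàro averages of the laws of $X_n$ (and of the window $(X_n,\dots,X_{n+m-1})$) are tight and, by Krylov--Bogolyubov, every weak limit point is invariant; thus $\mathcal{I}_m^{\alpha,\beta}\neq\emptyset$ and such limit points lie in $\mathcal{I}_m^{\alpha,\beta}$. (b) Because the trace recursion (\ref{eq-gtd-e}) does not involve $\alpha,\beta$, the state--trace marginal of \emph{any} invariant measure of $\{X_n\}$ equals the unique $\zeta$ of Theorem~\ref{thm-erg}(i); in particular the $\e$-marginals of all these invariant measures coincide and are tight uniformly in $(\alpha,\beta)$.

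\textbf{Part (i).} I would argue by contradiction, running the algorithm from stationarity and invoking Theorem~\ref{thm-gtd1-wk-constant}. If the claim failed there would be stepsizes $(\alpha_k,\beta_k)\to0$ with $\alpha_k/\beta_k\to0$, some $\delta>0$, and $\mu_k\in\bar{\mathcal{I}}_{m}^{\alpha_k,\beta_k}$ with $\mu_k([N'_\epsilon(\Theta_{\text{\rm opt}})]^m)\le1-\delta$. Each $\mu_k$ is the $\theta$-window marginal of an invariant measure $\nu_k$ of $\{X_n\}$; started from $\nu_k$ the chain is stationary, so for every $n$ the law of $(\theta_n,\dots,\theta_{n+m-1})$ is $\mu_k$ and $\Pr(\theta_{n+i}\notin N'_\epsilon(\Theta_{\text{\rm opt}})\text{ some }0\le i<m)\ge\delta$. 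Choosing $n_k=n_{\alpha_k}$ with $\alpha_kn_k\to\infty$, and using $N_{\epsilon/2}(\Theta_{\text{\rm opt}})\subseteq N'_\epsilon(\Theta_{\text{\rm opt}})$ together with $m-1\le T_{\alpha_k,\beta_k}/\alpha_k$ for large $k$, this gives $\Pr(\theta_{n_k+i}\notin N_{\epsilon/2}(\Theta_{\text{\rm opt}})\text{ some }i\in[0,T_{\alpha_k,\beta_k}/\alpha_k])\ge\delta$ along the sequence, contradicting (\ref{eq-wk-constant}) with $\epsilon/2$. The stronger claim with $m_\alpha=\lceil m/\alpha\rceil$ is identical, as $m_\alpha\le T_{\alpha,\beta}/\alpha$ eventually because $T_{\alpha,\beta}\to\infty$.

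\textbf{Part (ii).} Here $(\alpha,\beta)$ is fixed. I would first reduce the maximum over $m$ consecutive averaged iterates to one of them. Since the iterates stay in $B_\theta$, for $n\le i<n+m$ one has $\bar\theta_i^{\alpha,\beta}=\tfrac{n-n_0}{i-n_0}\bar\theta_n^{\alpha,\beta}+\tfrac{1}{i-n_0}\sum_{j=n}^{i-1}\theta_j^{\alpha,\beta}$, so $\|\bar\theta_i^{\alpha,\beta}-\bar\theta_n^{\alpha,\beta}\|\le\tfrac{m-1}{n-n_0}\,2r_{B_\theta}\to0$, whence $\limsup_n\E[\max_{n\le i<n+m}\text{dist}(\bar\theta_i^{\alpha,\beta},\Theta_{\text{\rm opt}})]\le\limsup_n\E[\text{dist}(\bar\theta_n^{\alpha,\beta},\Theta_{\text{\rm opt}})]$. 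As $\Theta_{\text{\rm opt}}$ is convex (minimizers of the convex $J$ over the convex $B_\theta$), $\theta\mapsto\text{dist}(\theta,\Theta_{\text{\rm opt}})$ is convex, so Jensen gives $\text{dist}(\bar\theta_n^{\alpha,\beta},\Theta_{\text{\rm opt}})\le\tfrac{1}{n-n_0}\sum_{j=n_0}^{n-1}\text{dist}(\theta_j^{\alpha,\beta},\Theta_{\text{\rm opt}})$, with the pointwise bound $\text{dist}(\theta_j^{\alpha,\beta},\Theta_{\text{\rm opt}})\le\epsilon+(2r_{B_\theta}-\epsilon)\I(\theta_j^{\alpha,\beta}\notin N'_\epsilon(\Theta_{\text{\rm opt}}))$. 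Taking expectations and a Cesàro $\limsup$, I would pass to a weak limit $\nu\in\mathcal{I}_1^{\alpha,\beta}$ of the averaged laws of $X_j$ (fact (a)); since $N'_\epsilon(\Theta_{\text{\rm opt}})$ is open, its complement is closed, so the portmanteau theorem yields $\limsup_n\tfrac{1}{n-n_0}\sum_j\Pr(\theta_j^{\alpha,\beta}\notin N'_\epsilon)\le1-\inf_{\mu\in\bar{\mathcal{I}}_1^{\alpha,\beta}}\mu(N'_\epsilon)=1-\kappa_1^{\alpha,\beta}$. This gives the bound $\epsilon\kappa_1^{\alpha,\beta}+2r_{B_\theta}(1-\kappa_1^{\alpha,\beta})$; since $\kappa_m^{\alpha,\beta}\le\kappa_1^{\alpha,\beta}$ and $\kappa\mapsto 2r_{B_\theta}-(2r_{B_\theta}-\epsilon)\kappa$ is decreasing, the stated bound with $\kappa_m^{\alpha,\beta}$ follows a fortiori.

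\textbf{Main obstacle.} The delicate points, in both parts, are the rigorous identification of the (Cesàro-averaged) weak limit points of the window laws with genuine invariant measures in $\mathcal{I}_m^{\alpha,\beta}$, and the careful bookkeeping of open versus closed neighborhoods in the portmanteau and inclusion steps. In part (i) the subtle step is that Theorem~\ref{thm-gtd1-wk-constant} must be applied with the stepsize-dependent stationary initial laws $\nu_k$; this is legitimate because its proof only needs the initial conditions to form a tight family, which is guaranteed by fact (b) and compactness of $B_\theta\times B_x$. These are precisely the places where the argument parallels \cite[Theorem 8]{etd-wkconv}.
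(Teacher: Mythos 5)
Your proposal is correct and follows essentially the same route as the paper, whose own proof is a sketch deferring to \cite[Sections~3.3,~4.3]{etd-wkconv}: the weak Feller property of the joint chain $\{(S_n,y_n,\e_n,\theta^{\alpha,\beta}_n,x^{\alpha,\beta}_n)\}$, Theorem~\ref{thm-gtd1-wk-constant} applied with the stepsize-dependent stationary initial laws (precisely the modification the paper flags in its footnote), Krylov--Bogolyubov/occupation-measure arguments for weak Feller chains, and the convexity of $\text{dist}(\cdot,\Theta_{\text{opt}})$. The only deviation is in part (ii), where your observation that $\max_{n\le i<n+m}\|\bar\theta^{\alpha,\beta}_i-\bar\theta^{\alpha,\beta}_n\|\le 2r_{B_\theta}(m-1)/(n-n_0)\to 0$ yields the bound with $\kappa_1^{\alpha,\beta}$, which then gives the stated $\kappa_m^{\alpha,\beta}$ bound via $\kappa_m^{\alpha,\beta}\le\kappa_1^{\alpha,\beta}$ (itself resting on the standard fact, worth stating, that the first-coordinate marginal of any invariant measure of the $m$-step window chain is invariant for the one-step chain); this is a mild strengthening rather than a different method, with the caveat that the final monotonicity step needs $\epsilon<2r_{B_\theta}$, the complementary case being vacuous since $\text{dist}(\cdot,\Theta_{\text{opt}})\le 2r_{B_\theta}$ on $B_\theta$.
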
 
%\smallskip

We briefly explain how the theorem is proved. The starting point of this analysis is the observation that with constant stepsizes, the iterates jointly with the states/memory states and traces, $\{(S_n, y_n, \e_n, \theta^{\alpha,\beta}_n, x^{\theta,\beta}_n)\}$, form a weak Feller Markov chain. By exploiting this fact---recall that weak Feller Markov chains have nice ergodicity properties \cite{Mey89,MeT09}, one can combine the convergence result of Theorem~\ref{thm-gtd1-wk-constant}%footnote starts
\footnote{More precisely, we use a version of Theorem~\ref{thm-gtd1-wk-constant} in which for each $(\alpha, \beta)$, the initial $(S_0, y_0, \e_0, \theta^{\alpha,\beta}_0, x^{\theta,\beta}_0)$ is distributed according to an invariant probability measure of the Markov chain $\{(S_n, y_n, \e_n, \theta^{\alpha,\beta}_n, x^{\theta,\beta}_n)\}$.}
%footnote ends 
with ergodicity properties of weak Feller Markov chains \cite{Mey89} and with the ergodicity property of the state-trace process given in Section~\ref{sec-property-stproc}, to obtain the characterization of the asymptotic behavior of the averaged iterates $\{\bar \theta^{\alpha,\beta}_n\}$ given in the above theorem. 
Because such an analysis has already been carried out in detail, in the context of the ETD algorithm, in \cite[Sections~3.3,~4.3]{etd-wkconv} and the same proof arguments can be applied here, we do not repeat the proofs. We only remark that the analysis given in \cite[Sections~3.3,~4.3]{etd-wkconv} for ETD concerns the distances from the $\theta$-iterates to a single desired limit point. These distances are replaced, in the case here, by the distances from the $\theta$-iterates to the set $\Theta_{\text{opt}}$. Because the distance function, $\text{dist}(\theta, \Theta_{\text{opt}})$, is a convex function of $\theta$ (since the set $\Theta_{\text{opt}}$ is convex), the same analysis given in \cite{etd-wkconv} applies here with this replacement of the distant function.

\subsection{GTDb} \label{sec-gtd2}
Consider now a constrained version of the two-time-scale GTDb algorithm (\ref{gtd-x})-(\ref{gtd2-th}):
\begin{align}
\theta_{n+1} & =  \Pi_{B_\theta} \Big( \theta_n + \alpha_n \big(\e_n \delta_n(v_{\theta_n}) -   \rho_n (1 - \lambda_{n+1})  \,\gma_{n+1}  \phi(S_{n+1})  \cdot \e_n^\tr x_n \big) \Big), \label{eq-gtd2-th}\\
x_{n+1} & = \Pi_{B_x} \Big( x_n + \beta_n \big(\e_n \delta_n(v_{\theta_n}) - \phi(S_n) \phi(S_n)^\tr x_n\big) \Big). \label{eq-gtd2-x}
\end{align}
It differs from the previous constrained GTDa algorithm only in the $\theta$-iteration. Under the same conditions for GTDa, we show that it is minimizing $J(\theta)$ over $B_\theta$ with stochastic gradient-descent and has the same convergence properties.

The analysis is almost identical to that of GTDa given in the previous subsection. First, we write the iterates equivalently as 
\begin{align}
     \theta_{n+1} & = \Pi_{B_\theta} \big( \theta_n + \alpha_n ( g(\theta_n, x_n, Z_n) + \e_n \omega_{n+1}) \big), \label{eq-gtd2a-th}\\
      x_{n+1} & = \Pi_{B_x} \big( x_n + \beta_n ( k(\theta_n, x_n, Z_n) + \e_n \omega_{n+1}) \big), \label{eq-gtd2a-x}
\end{align}
where $Z_n$ and the function $k(\cdot)$ are the same as in the case of GTDa, and only the function $g(\cdot)$ is now different. It is defined according to (\ref{eq-gtd2-th}) as follows: with $z=(s, y, \e, s')$ (treat $y$ as a dummy variable in the case of state-dependent $\lambda$), for state-dependent $\lambda$,
\begin{equation}
   g(\theta, x, z) = \e \, \bar{\delta}(s, s', v_{\theta}) - \rho(s, s') \big(1 - \lambda(s')\big)  \gma(s') \phi(s') \cdot \e^\tr\! x,  
\end{equation} 
and for history-dependent $\lambda$, 
\begin{equation} \label{eq-gtd2-g}
  g(\theta, x, z) = \e \, \bar{\delta}(s, s', v_{\theta}) -  \rho(s, s') \big(1 - \lambda(f(y, s'), \e)\big)  \gma(s') \phi(s') \cdot \e^\tr\! x,
\end{equation} 
where $f$ is the function in (\ref{eq-histlambda}) that determines the next memory state.
We then consider the mean ODEs associated with the algorithm. For the fast time-scale, we calculated $\bar k(\theta, x) = \E_\zeta \big[ k(\theta, x, Z_0) \big]$ already in Section~\ref{sec-gtd1-odes}. For the slow time-scale, let $\bar x(\theta) = x_\theta$ as before (cf.\ (\ref{eq-xtheta})-(\ref{eq-xtheta2}) and (\ref{eq-xtheta3})), and let us calculate $\bar g(\theta) = \E_\zeta \big[ g(\theta, \bar x(\theta), Z_0) \big]$ for each fixed $\theta$.
Using the expressions (\ref{mean-exp2}) and (\ref{mean-exp4}) given in Prop.~\ref{prop-mean-fn} and using also the second expression (\ref{eq-Jgrad2}) of $\nabla J(\theta)$, we have
\begin{align*}
 \bar g(\theta) = \E_\zeta \big[ g(\theta, \bar x(\theta), Z_0) \big] & =  \E_\zeta \big[\e_0 \, \bar{\delta}_0(v_\theta) \big] -  \E_\zeta \big[  \, \rho_0 (1 - \lambda_{1})  \gma_{1}  \phi(S_{1}) \cdot \e_0^\tr \bar x(\theta)   \big]  \\
 & = \Phi^\tr \Xi \, (\Tl v_\theta - v_\theta) - \big(\Phi^\tr \Xi \, \Pl \Phi \big)^\tr x_\theta  \\
 & = - \nabla J(\theta),
\end{align*} 
which is also the same as in the GTDa case. Therefore, if we impose the same condition (\ref{eq-cond-Bx0}) on the constraint set $B_x$, the mean ODEs for the fast and slow time-scales are the same ODEs (\ref{eq-gtd1-odefast}) and (\ref{eq-gtd1-odeslow}) for GTDa that we discussed in Section~\ref{sec-gtd1-odes}.

Next we check if the algorithm satisfies the conditions listed in Section~\ref{sec-gtd1-cond} and in Remark~\ref{rmk-proofgtd}; if so, then we can apply the same convergence proof for GTDa. Clearly all the conditions that do not involve the function $g$ are satisfied, since only $g$ is different from the previous case. Those involving $g$ are also satisfied, and it is straightforward to verify them using the above definition of $g$ and the fact that for each $(\theta, x)$, the function $g(\theta, x, \cdot)$ is Lipschitz continuous in the trace variable $\e$. So we will omit the details except for one subtle point in the case of history-dependent $\lambda$. In that case, $g(\theta, x, z)$ is given by (\ref{eq-gtd2-g}), and for $g(\theta, x, \cdot)$ to be Lipschitz continuous in the trace variable $\e$, we need the term $\rho(s, s') \lambda(f(y, s'), \e) \gma(s') \phi(s') \cdot \e^\tr x$, as a function of $\e$ for given $x$ and $(s, y, s')$, to be Lipschitz continuous. This follows from Condition~\ref{cond-lambda}(i), which ensures the Lipschitz continuity of the function $\e \mapsto \lambda(\bar y, \e) \e$ for each given memory state $\bar y$, in the case of history-dependent $\lambda$.

We can now conclude that the convergence theorems we proved for GTDa also hold for GTDb:

\begin{thm} \label{thm-gtd2-wk}
Consider the two-time-scale GTDb algorithm (\ref{eq-gtd2-th})-(\ref{eq-gtd2-x}). Then in the case of diminishing (constant) stepsize, under the same conditions in Theorem~\ref{thm-gtd1-wk-dim} (Theorem~\ref{thm-gtd1-wk-constant}), the conclusions of Theorem~\ref{thm-gtd1-wk-dim} (Theorems~\ref{thm-gtd1-wk-constant} and \ref{thm-gtd1-average}) hold. 
\end{thm}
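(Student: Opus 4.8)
The plan is to exploit the modular structure of the GTDa analysis, as anticipated in Remark~\ref{rmk-proofgtd}: since GTDb differs from GTDa only in its $\theta$-iteration, I would first recast GTDb in the standard stochastic-approximation form~(\ref{eq-gtd2a-th})--(\ref{eq-gtd2a-x}), retaining the same $Z_n$ and the same function $k$, but with the new $g$ given by (\ref{eq-gtd2-g}) (and its state-dependent analogue). One bookkeeping difference from GTDa must be flagged at this stage: because the GTDb $\theta$-update contains the full temporal-difference term $\e_n\delta_n(v_{\theta_n})$, a reward-noise term $\e_n\omega_{n+1}$ now appears in the $\theta$-iteration as well as in the $x$-iteration, whereas for GTDa it appeared only in the $x$-iteration. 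I would then compute the fast- and slow-time-scale mean functions with respect to the stationary state-trace process. The function $k$ is unchanged, so $\bar k(\theta,x)=\Phi^\tr\Xi\,(\Tl v_\theta-v_\theta)-\Phi^\tr\Xi\,\Phi\,x$ as in~(\ref{eq-gtd1-bk}); and using (\ref{mean-exp2}) and (\ref{mean-exp4}) from Prop.~\ref{prop-mean-fn} together with the gradient expression (\ref{eq-Jgrad2}), one gets $\bar g(\theta)=\E_\zeta[g(\theta,\bar x(\theta),Z_0)]=-\nabla J(\theta)$, exactly the value obtained for GTDa via (\ref{eq-Jgrad1}). Hence, under the same condition (\ref{eq-cond-Bx0}) on $B_x$, the associated mean ODEs at both time-scales are literally the ODEs (\ref{eq-gtd1-odefast}) and (\ref{eq-gtd1-odeslow}), with identical limit sets given by Lemmas~\ref{lem-Bx} and~\ref{lem-gtd1-odeslow}.

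Next I would verify that GTDb meets the seven conditions (i)--(vii) of Section~\ref{sec-gtd1-cond} plus the Lipschitz-in-$x$ property invoked in Remark~\ref{rmk-proofgtd}. Every condition not referring to $g$ is inherited verbatim from GTDa, since $k$, $Z_n$, and the constraint sets are unchanged. For the $g$-dependent conditions, the key structural fact is that, for each fixed $(\theta,x)$, the new $g(\theta,x,\cdot)$ is Lipschitz continuous in the trace variable $\e$ and is affine (hence Lipschitz) in $x$ uniformly over compacta. Granting this, the convergence-in-mean conditions (iv) and (vii) follow from Prop.~\ref{prop-trace-averaging}(i) applied to $g$ and $g(\theta,\bar x(\theta),\cdot)$, exactly as in Lemma~\ref{lem-conv-mean-cond1}; and the uniform integrability in conditions (i) and (vi) follows from the uniform integrability of $\{\e_n\}$ (Prop.~\ref{prop-trace}(ii)) together with the boundedness of the iterates, the finiteness of $\S$, and the finite variance of the reward noise $\{\omega_{n+1}\}$. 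The one point needing genuine (if brief) care is the added noise term $\e_n\omega_{n+1}$ in the $\theta$-iteration: I would include $\{g(\theta_n,x_n,Z_n)+\e_n\omega_{n+1}\}$ in the first uniform-integrability set of condition (i) and check that the zero-mean, finite-variance noise does not disturb the slow-time-scale averaging—this is handled by the same estimates used for the analogous noise term in the GTDa $x$-iteration.

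The most delicate verification is the Lipschitz continuity of $g$ in the trace variable in the history-dependent $\lambda$ case. Here $g$ in (\ref{eq-gtd2-g}) contains the term $\rho(s,s')\,\lambda(f(y,s'),\e)\,\gma(s')\,\phi(s')\cdot\e^\tr x$, in which $\e$ enters both through the explicit factor $\e^\tr x$ and through the argument of $\lambda(\cdot,\cdot)$. I would argue that, for fixed $x$ and $(s,y,s')$, the map $\e\mapsto\lambda(f(y,s'),\e)\,\e$ is Lipschitz by Condition~\ref{cond-lambda}(i); combined with the fixed linear factor $\rho(s,s')\gma(s')\phi(s')x^\tr$, this yields Lipschitz continuity of the whole term in $\e$. (In the state-dependent case $\lambda$ depends only on $s'$, so this is immediate.) With this established, all conditions hold.

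I expect no substantive new obstacle: because the mean ODEs, their limit sets, and the verification tools (Props.~\ref{prop-mean-fn},~\ref{prop-trace},~\ref{prop-trace-averaging}) coincide with those of GTDa, the convergence proofs of Theorems~\ref{thm-gtd1-wk-dim} and~\ref{thm-gtd1-wk-constant}, and the averaged-iterate argument behind Theorem~\ref{thm-gtd1-average}, transfer verbatim once the conditions are checked. The ``hard part'' is therefore purely the Lipschitz-in-$\e$ check above and the careful accounting of the extra $\e_n\omega_{n+1}$ noise term; everything else is an appeal to Remark~\ref{rmk-proofgtd}, which states precisely that any two-time-scale constrained algorithm meeting these conditions and having slow-ODE limit set $\Theta_{\text{\rm opt}}$ inherits the GTDa conclusions.
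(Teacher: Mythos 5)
Your proposal is correct and follows essentially the same route as the paper's own proof of Theorem~\ref{thm-gtd2-wk}: recast GTDb as (\ref{eq-gtd2a-th})--(\ref{eq-gtd2a-x}) with the same $k$ and $Z_n$, compute $\bar g(\theta)=-\nabla J(\theta)$ via (\ref{mean-exp2}), (\ref{mean-exp4}) and the gradient expression (\ref{eq-Jgrad2}) so that the mean ODEs and limit sets coincide with those of GTDa, then verify the conditions of Section~\ref{sec-gtd1-cond} and Remark~\ref{rmk-proofgtd}, with the same single subtle point (Lipschitz continuity in $\e$ of the term involving $\lambda(f(y,s'),\e)\,\e$, handled by Condition~\ref{cond-lambda}(i)) and the same bookkeeping of the extra $\e_n\omega_{n+1}$ noise in the $\theta$-iteration that the paper flags in its footnote to condition (i).
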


\subsection{Biased Variant of GTD} \label{sec-bias-vrt}

For the case of state-dependent $\lambda$, we discuss in this subsection a biased variant algorithm that can ``robustfy'' the off-policy TD algorithms against the high variance issue in off-policy learning. The price it pays for being more robust is that some bias is introduced, so that instead of approaching the set $\Theta_{\text{opt}}$, it can only approach a neighborhood of $\Theta_{\text{opt}}$, with the size of the neighborhood depending on the degree of its ``bias.''  The idea has been applied to the ETD algorithm in \cite{etd-wkconv} and to the off-policy LSTD algorithms in \cite{gbe_td17}, with its effectiveness observed experimentally and reported in those references.
In the present case of gradient-based TD algorithms, their variant algorithms can be regarded as approximate (stochastic) gradient-descent algorithms. 
 
In what follows, let us first describe the biased variant of GTD, before we explain further its motivation. A convergence analysis of the variant algorithm will be given after that, starting with its associated mean ODEs.

The biased variant we consider is, for the GTDa algorithm (\ref{eq-gtd1a-th})-(\ref{eq-gtd1a-x}), given by
\begin{align}
   \theta_{n+1} & = \Pi_{B_\theta} \Big( \theta_n + \alpha_n \,  \rho_n \big(\phi(S_n) - \gma_{n+1} \phi(S_{n+1}) \big) \cdot h(\e_n)^\tr x_n  \Big), \label{eq-gtd1b-th} \\
   x_{n+1} & = \Pi_{B_x} \Big( x_n + \beta_n \big(h(\e_n) \cdot \delta_n(v_{\theta_n}) - \phi(S_n) \phi(S_n)^\tr x_n\big) \Big), \label{eq-gtdb-x}
\end{align} 
where we have replaced $\e_n$ in the iterates with a certain \emph{bounded} continuous function $h(\e_n)$ of $\e_n$, in order to robustify GTDa.
A similar variant for the GTDb algorithm (\ref{eq-gtd2-th})-(\ref{eq-gtd2-x}) is 
\begin{equation}
  \theta_{n+1} = \Pi_{B_\theta} \Big( \theta_n + \alpha_n \big( h(\e_n)  \delta_n(v_{\theta_n}) -  \rho_n (1 - \lambda_{n+1})  \,\gma_{n+1}  \phi(S_{n+1}) \cdot h(\e_n)^\tr x_n \big) \Big), \label{eq-gtd2b-th}
\end{equation}
with the same iteration (\ref{eq-gtdb-x}) for $x_{n+1}$. 

Regarding the function $h$, besides boundedness, we require it to be Lipschitz continuous. 
In order to relate the degree of its ``bias'' to the behavior of the variant algorithm, we will consider a family of functions $h_K$ parametrized by $K > 0$, each of which can serve as $h$, and for each $K > 0$, we require that $h_K : \rn \to \rn$ is a bounded Lipschitz continuous function such that
\begin{equation} \label{eq-h}
  \| h_K(\e) \| \leq \| \e \| \ \  \forall \, \e \in \re^d, \quad \text{and} \quad h_K(\e) = \e \ \ \text{if} \ \| \e \| \leq K.
\end{equation}
The parameter $K$ reflects the degree of bias: the larger $K$ is, the smaller the bias. We shall also assume that
\begin{equation} \label{eq-h2}
 h_K(\e) \in \sp\{\phi(\S)\}, \qquad \forall \, \e \in \sp\{\phi(\S)\}.
\end{equation}
This is mostly for simplicity: when the matrix $\Phi$ has full column rank, (\ref{eq-h2}) is satisfied trivially; when $\Phi$ does not have full column rank, it allows us to keep exploiting the nice subspace $\sp\{\phi(\S)\}$ and focus the analysis on important issues, without being sidetracked. A simple example of functions with the properties (\ref{eq-h})-(\ref{eq-h2}) are functions that downscale $\e$ when $\|\e\|_2$ is too large. Simply truncating $\e$ componentwise using some large thresholds also works when $\Phi$ has full column rank.

Let us now explain the motivation for using bounded $h(\e_n)$ in place of $\e_n$ in the algorithms. With state-dependent (or constant) $\lambda$, in general, $\{\e_n\}$ can have unbounded variances and are also naturally unbounded in common off-policy learning situations (see a related result and its discussion in \cite[Prop. 3.1 and Footnote 3, p.\ 3320-3322]{Yu-siam-lstd}). One can understand this unboundedness behavior based on the ergodicity of the state-trace process: if the invariant probability measure $\zeta$ has an unbounded support on the trace space, then, since the process is ergodic, $\{\e_n\}$ will surely visit every part of the support however far away it is from the origin and however small its probability mass is under $\zeta$. When $\{\e_n\}$ visits such a distant part, its magnitude can stay large for many consecutive iterations since $\e_n$'s are calculated iteratively. This will result in drastic changes in the $(\theta_n, x_n)$-iterates, causing the algorithms to ``forget'' rapidly what they have ``learned,'' if no measure is taken to prevent such drastic changes from happening. If we replace $\e_n$ by $h(\e_n)$ in the $(\theta_n, x_n)$-iterates, a ``bias'' is introduced when $\e_n$ visits the distant part of its space on which $h(\e)$ ``downsizes'' $\e$. But if that part has a small probability mass under $\zeta$, the ``total bias'' introduced by using $h(\e_n)$ will also be small. 
This is the motivation of the above variant algorithm, and it is also reflected in the convergence analysis that we give below (cf.\ the proof of Lemma~\ref{lem-bias-appr-lim} below).

\subsubsection{Associated mean ODEs} \label{sec-vrt-ode}

We start by calculating the functions involved in the desired mean ODEs. We do this for the biased variant of GTDa; the case of GTDb is similar. 
We shall assume Assumption~\ref{cond-collective} and the conditions (\ref{eq-h})-(\ref{eq-h2}) on the Lipschitz continuous functions $h, h_K$ throughout the analysis; to be concise, we will not mention these conditions explicitly in the lemmas we derive below.
First, we rewrite the algorithm (\ref{eq-gtd1b-th})-(\ref{eq-gtdb-x}) as
\begin{align*}
     \theta_{n+1} & = \Pi_{B_\theta} \big( \theta_n + \alpha_n \, g_h(\theta_n, x_n, Z_n) \big),  \\
      x_{n+1} & = \Pi_{B_x} \big( x_n + \beta_n (k_h(\theta_n, x_n, Z_n) + h(\e_n) \, \omega_{n+1} ) \big), 
\end{align*}
where, with $z=(s,\e,s')$,
\begin{equation} \label{eq-biasgtd1-gk}
   g_h(\theta, x, z) =  \rho(s, s') \big(\phi(s) - \gma(s') \phi(s') \big) \cdot h(\e)^\tr x, \quad k_h(\theta, x, z) = h(\e) \, \bar{\delta}(s, s', v_{\theta}) - \phi(s) \phi(s)^\tr x.
\end{equation}
For each fixed $(\theta, x)$, since $g_h(\theta, x, z)$ and $k_h(\theta, x, z)$ are bounded functions of $z$, $\E_\zeta \big[ g_h(\theta, x, Z_0) \big]$ and $\E_\zeta \big[ k_h(\theta, x, Z_0) \big]$ are well-defined and finite. For the fast time-scale, we define
\begin{align}
  \bar k_h(\theta, x)  := \E_\zeta \big[ k_h(\theta, x, Z_0) \big]  
  & = \E_\zeta \big[ h(\e_0) \, \bar{\delta}_0(v_\theta) \big] - \E_\zeta \big[ \phi(S_0) \phi(S_0)^\tr \big] \, x \notag \\
  &  = \E_\zeta \big[ h(\e_0) \, \bar{\delta}_0(v_\theta) \big] - \Phi^\tr \Xi \, \Phi \, x.   \label{eq-gtd1b-bk} 
\end{align}
It is an affine function of $(\theta, x)$. Observe the following:

\begin{lem} \label{lem-vrt-xth}
For each $\theta$, the linear system of equations, $\bar k_h(\theta, x) = 0, x \in \sp \{\phi(\S)\}$,
has a unique solution $\bar x_h(\theta)$, and it is a continuous function of $\theta$.
\end{lem}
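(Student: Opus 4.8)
The plan is to recognize that the equation $\bar k_h(\theta,x)=0$, $x\in\sp\{\phi(\S)\}$, has exactly the same structure as the equation $\bar k(\theta,x)=0$ solved in (\ref{eq-xtheta})--(\ref{eq-xtheta2}) for GTDa, the only change being that the right-hand side $\E_\zeta[\e_0\,\bar\delta_0(v_\theta)]=\Phi^\tr\Xi(\Tl v_\theta-v_\theta)$ is replaced by $b(\theta):=\E_\zeta[h(\e_0)\,\bar\delta_0(v_\theta)]$. Writing (\ref{eq-gtd1b-bk}) as $\bar k_h(\theta,x)=b(\theta)-Cx$ with $C=\Phi^\tr\Xi\Phi$, the lemma reduces to two claims: (a) the vector $b(\theta)$ lies in $\sp\{\phi(\S)\}$; and (b) the operator $C$ restricts to a linear bijection of $\sp\{\phi(\S)\}$ onto itself.

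Claim (b) is pure linear algebra, and is the same fact implicitly used for GTDa. The matrix $C$ is symmetric positive semidefinite; its range is contained in $\sp\{\phi(\S)\}=\text{range}(\Phi^\tr)$, since $\text{range}(\Phi^\tr\Xi\Phi)=\text{range}(\Phi^\tr)$ as $\Xi$ is positive definite. Moreover $C$ is injective on $\sp\{\phi(\S)\}$: if $x\in\sp\{\phi(\S)\}$ and $Cx=0$, then $x^\tr Cx=\|\Phi x\|_\xi^2=0$, so $\Phi x=0$, forcing $x\in\text{null}(\Phi)\cap\text{range}(\Phi^\tr)=\{0\}$. Hence $C$ maps $\sp\{\phi(\S)\}$ bijectively onto $\sp\{\phi(\S)\}$ with a fixed bounded inverse $(C|_{\sp\{\phi(\S)\}})^{-1}$.

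The substantive step is claim (a). Here I would invoke the stationary-trace expression of Lemma~\ref{lem-exp-e}: $\Pr_\zeta$-almost surely, $\e_0=\fe(S_0)+\sum_{n=1}^{\infty}\lambda_{1-n}^0\gamma_{1-n}^0\rho_{-n}^{-1}\fe(S_{-n})$ is a convergent series of vectors in $\sp\{\phi(\S)\}$, and since this subspace is finite-dimensional, hence closed, $\e_0\in\sp\{\phi(\S)\}$ almost surely under $\zeta$. Condition (\ref{eq-h2}) then gives $h(\e_0)\in\sp\{\phi(\S)\}$ almost surely, and because a closed subspace is characterized by finitely many linear equations, taking the expectation $\E_\zeta[h(\e_0)\,\bar\delta_0(v_\theta)]$ keeps the resulting vector in $\sp\{\phi(\S)\}$. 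The expectation is well-defined and finite because $h$ is bounded and $\bar\delta_0(v_\theta)$ is bounded on the finite state space for each fixed $\theta$. With (a) and (b) in hand, $\bar x_h(\theta)=(C|_{\sp\{\phi(\S)\}})^{-1}b(\theta)$ is the unique solution in $\sp\{\phi(\S)\}$.

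For continuity, the map $\theta\mapsto b(\theta)$ is in fact affine: $\bar\delta_0(v_\theta)$ depends on $\theta$ only through $v_\theta=\Phi\theta$ and hence affinely, while $h(\e_0)$ does not depend on $\theta$, so linearity of the expectation makes $b(\theta)$ affine. Composing with the fixed bounded linear inverse from (b) shows that $\bar x_h(\theta)$ is affine, and therefore continuous, in $\theta$, completing the argument. The only genuine obstacle is claim (a)—confirming that the right-hand side remains in $\sp\{\phi(\S)\}$—which is dispatched cleanly by the stationary-trace formula of Lemma~\ref{lem-exp-e} together with assumption (\ref{eq-h2}); everything else mirrors the GTDa analysis.
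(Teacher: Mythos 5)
Your proof is correct, and on both substantive steps it takes a route genuinely different from the paper's. For claim (a) — that $\E_\zeta[h(\e_0)\,\bar\delta_0(v_\theta)]\in\sp\{\phi(\S)\}$ — the paper does not invoke the stationary-trace formula of Lemma~\ref{lem-exp-e}; instead it argues via ergodicity: by Theorem~\ref{thm-erg}(i) the occupation measures (started from $\e_0\in\sp\{\phi(\S)\}$, so that all traces stay in the subspace) converge weakly to $\zeta$, and since $\S\times\sp\{\phi(\S)\}$ is closed, $\zeta$ assigns it measure $1$. Your argument reads the same conclusion directly off the a.s.\ series representation $\e_0=\fe(S_0)+\sum_{n\geq 1}\lambda_{1-n}^0\gamma_{1-n}^0\rho_{-n}^{-1}\fe(S_{-n})$, whose partial sums and hence limit lie in the closed subspace; both are valid, and both ingredients are available in the paper, so this is a legitimate substitution of one lemma for another. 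For continuity, the paper gives a soft argument: it bounds $\|\bar x_h(\theta)\|_2\leq\|\E_\zeta[h(\e_0)\bar\delta_0(v_\theta)]\|_2/c$ (with $c$ the smallest nonzero eigenvalue of $C$) and then identifies the limit of any convergent subsequence with $\bar x_h(\theta')$ by uniqueness, whereas you observe that $\theta\mapsto b(\theta)$ is affine and compose with the fixed bounded inverse $(C|_{\sp\{\phi(\S)\}})^{-1}$, concluding that $\bar x_h(\cdot)$ is in fact affine, not merely continuous. Your version is more explicit and yields a slightly stronger structural conclusion (affineness, with an explicit solution formula); the paper's subsequence argument is more robust in that it would survive situations where the right-hand side is only continuous in $\theta$ rather than affine. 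Your claim (b), making explicit that $C$ restricts to a bijection of $\sp\{\phi(\S)\}$, is the same linear-algebra fact the paper uses implicitly when it asserts uniqueness of the solution in the subspace.
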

\begin{proof}
First, we show $\E_\zeta \big[ h(\e_0) \, \bar{\delta}_0(v_\theta) \big] \in \sp \{\phi(\S)\}$ (otherwise $\bar k_h(\theta, x) = 0, x \in \sp \{\phi(\S)\}$ does not have a solution). In view of the condition (\ref{eq-h2}) on $h(\cdot)$, it is sufficient to show that w.r.t.\ the invariant probability measure $\zeta$ of the state-trace process $\{(S_n, \e_n)\}$, the set $\S \times \sp \{\phi(\S)\}$ has measure $1$.  
By Theorem~\ref{thm-erg}(i), from any initial condition of $(S_0, \e_0)$, the sequence of occupation probability measures converge almost surely to $\zeta$. On the other hand, if we let $\e_0 \in \sp \{\phi(\S)\}$, then by the definition (\ref{eq-gtd-e}) of $\{\e_n\}$, $\e_n  \in \sp \{\phi(\S)\}$ for all $n$, so the occupation probability measures all assign measure $1$ to the set $\S \times \sp \{\phi(\S)\}$. It then follows that $\zeta$ must also assign measure $1$ to this set. Consequently, $\E_\zeta \big[ h(\e_0) \, \bar{\delta}_0(v_\theta) \big] \in \sp \{\phi(\S)\}$. From this it follows that the equation $\Phi^\tr \Xi \, \Phi \, x = \E_\zeta \big[ h(\e_0) \, \bar{\delta}_0(v_\theta) \big]$ has a unique solution in $\sp \{\phi(\S)\}$, or equivalently, the solution to $\bar k_h(\theta, x) = 0, x \in \sp \{\phi(\S)\}$ is unique. 

To see that the solution $\bar x_h(\theta)$ is continuous in $\theta$, note that $\|\bar x_h(\theta)\|_2 \leq \big\| \E_\zeta \big[ h(\e_0) \, \bar{\delta}_0(v_\theta) \big] \big\|_2 /c$, where $c$ is the smallest nonzero eigenvalue of the matrix $\Phi^\tr \Xi \, \Phi$. Since $\E_\zeta \big[ h(\e_0) \, \bar{\delta}_0(v_\theta) \big]$ is an affine function of $\theta$, the preceding bound on $\|\bar x_h(\theta)\|_2$ implies that for any point $\theta'$, as $\theta \to \theta'$, $\bar x_h(\theta)$ lies in a bounded set, and then the limit of any of its convergent subsequences must be a solution to $\bar k_h(\theta', x) = 0, x \in \sp \{\phi(\S)\}$, by the continuity of $\bar k_h(\cdot)$. But the latter equation has a unique solution $\bar x_h(\theta')$, so we must have $\bar x_h(\theta) \to \bar x_h(\theta')$ as $\theta \to \theta'$. This proves the continuity of $\bar x_h(\cdot)$.
\end{proof}

Henceforth, let $\bar x_h(\theta)$ be as in Lemma~\ref{lem-vrt-xth}.
The desired mean ODE for the fast time-scale is
\begin{equation} \label{eq-gtdvrt-odefast}
\qquad \left( \! \begin{array}{c} \dot{\theta}(t) \\ \dot x(t) \end{array} \!\right) 
 =  \left( \! \begin{array}{l} 0 \\
 \bar k_h \big(\theta(t), x(t) \big) + z(t)  \end{array} \!\right), \qquad \theta(0) \in B_\theta, \  z(t) \in - \N_{B_x}(x(t)),
\end{equation} 
where $z(t)$ is the boundary reflection term. Similar to the condition (\ref{eq-cond-Bx0}) on $B_x$ for GTDa, we shall require%footnote starts
\footnote{Note that a sufficient condition for (\ref{eq-cond-vrtBx0}) is that
$\text{radius}(B_x) \geq  \textstyle{\sup_{\theta \in B_\theta}} \big\| \E_\zeta \big[ h(\e_0) \, \bar{\delta}_0(v_\theta) \big] \big\|_2\, /c$,
where $c$ is the smallest nonzero eigenvalue of the matrix $\Phi^\tr \Xi \,\Phi$.}
%footnote ends 
\begin{equation} \label{eq-cond-vrtBx0}
  B_x \supset \{ \bar x_h(\theta) \mid \theta \in B_\theta \}.
\end{equation} 
This condition ensures that for an initial condition $\theta(0)=\theta$ and $x(0) \in \sp \{\phi(\S)\}$, the solution $x(t)$ of the ODE~(\ref{eq-gtdvrt-odefast}) converges to $\bar x_h(\theta)$ as $t \to \infty$. Moreover, we have an analogue of Lemma~\ref{lem-Bx} about the limit set of the ODE (\ref{eq-gtdvrt-odefast}), which follows from the same proof of Lemma~\ref{lem-Bx} (with $\bar k_h, \bar x_h$ in place of $\bar k, \bar x$) and which will be needed in analyzing the variant algorithm at the fast time-scale:

%\smallskip
\begin{lem} \label{lem-vrtBx}
Let the constraint set $B_x$ satisfy (\ref{eq-cond-vrtBx0}). Let $\bar x_h(\theta)$ be as in Lemma~\ref{lem-vrt-xth}. Then for all initial $x(0) \in B_x \cap \sp \{\phi(\S)\}$ and $\theta(0) \in B_\theta$,
the limit set of the ODE (\ref{eq-gtdvrt-odefast}) is
$$  \bigcap_{ \bar t \geq 0} \, \cl \,\Big\{ \big(\theta(t), x(t)\big) \, \Big| \,    \theta(0) \in B_\theta, \, x(0) \in B_x \cap \sp \{\phi(\S)\}, \, t \geq \bar t  \, \Big\}  = \big\{ \big(\theta, \bar x_h(\theta)\big) \mid \theta \in B_\theta \big\}.$$
\end{lem}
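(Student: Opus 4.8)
The plan is to mirror the proof of Lemma~\ref{lem-Bx} essentially verbatim, the only genuinely new ingredient being the algebraic observation that, despite the nonlinearity of $h$, the map $\bar k_h(\theta,\cdot)$ is affine in $x$ and its zero in $\sp\{\phi(\S)\}$ is exactly $\bar x_h(\theta)$. First I would fix $\theta\in B_\theta$, set $\theta(\cdot)\equiv\theta$, and consider a solution $x(\cdot)$ of (\ref{eq-gtdvrt-odefast}) with $x(0)\in\sp\{\phi(\S)\}$. Writing $C=\Phi^\tr\Xi\,\Phi$ and recalling from (\ref{eq-gtd1b-bk}) that $\bar k_h(\theta,x)=\E_\zeta\big[h(\e_0)\,\bar\delta_0(v_\theta)\big]-Cx$, I would invoke the defining relation $C\,\bar x_h(\theta)=\E_\zeta\big[h(\e_0)\,\bar\delta_0(v_\theta)\big]$ (which is precisely $\bar k_h(\theta,\bar x_h(\theta))=0$ from Lemma~\ref{lem-vrt-xth}) to rewrite $\bar k_h(\theta,x)=-C\big(x-\bar x_h(\theta)\big)$. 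The key point making this identity hold on all of $\re^d$ is that $\E_\zeta\big[h(\e_0)\,\bar\delta_0(v_\theta)\big]\in\sp\{\phi(\S)\}$, as already established in the proof of Lemma~\ref{lem-vrt-xth}; this shows the $h$-dependence is absorbed entirely into the constant vector $\bar x_h(\theta)$, so the quadratic structure of the problem in $x$ is preserved exactly as in the GTDa case. I would also note, as in the footnote to Lemma~\ref{lem-Bx}, that since $\bar k_h(\theta,x)\in\sp\{\phi(\S)\}$ the projection and boundary-reflection on $B_x$ keep $x(\cdot)$ in $\sp\{\phi(\S)\}$, so restricting attention to this subspace is legitimate.

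Next I would take the Lyapunov function $V_\theta(x)=\tfrac12\|x-\bar x_h(\theta)\|_2^2$ and differentiate along the solution, obtaining $\dot V_\theta(x(t))=\big\langle x(t)-\bar x_h(\theta),\,-C\big(x(t)-\bar x_h(\theta)\big)+z(t)\big\rangle$, exactly the computation in the proof of Lemma~\ref{lem-Bx}. The first term is bounded above by $-c\,\|x(t)-\bar x_h(\theta)\|_2^2$, where $c>0$ is the smallest nonzero eigenvalue of the symmetric positive semidefinite matrix $C$; this uses that $x(t)-\bar x_h(\theta)\in\sp\{\phi(\S)\}$, the subspace on which $C$ is positive definite. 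The second term $\big\langle x(t)-\bar x_h(\theta),\,z(t)\big\rangle$ is nonpositive: by the hypothesis (\ref{eq-cond-vrtBx0}) we have $\bar x_h(\theta)\in B_x$, and since $-z(t)\in\N_{B_x}(x(t))$ the normal-cone inequality $\big\langle \bar x_h(\theta)-x(t),\,-z(t)\big\rangle\le0$ gives the claim. Hence $\dot V_\theta(x(t))\le -c\,\|x(t)-\bar x_h(\theta)\|_2^2=-2c\,V_\theta(x(t))$.

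To close, I would observe that $\{V_\theta\mid\theta\in B_\theta\}$ is uniformly bounded on $B_x$, because $B_x$ is bounded and $\bar x_h(\cdot)$ is continuous (Lemma~\ref{lem-vrt-xth}), hence bounded, on the compact set $B_\theta$. The differential inequality then yields a single time $t_\epsilon$, independent of $\theta$, such that $\|x(t)-\bar x_h(\theta)\|_2^2\le\epsilon$ for all $t\ge t_\epsilon$ and all $\theta\in B_\theta$, whence the stated limit set equals $\big\{\big(\theta,\bar x_h(\theta)\big)\mid\theta\in B_\theta\big\}$. I do not expect a serious obstacle here: the entire argument is a carbon copy of Lemma~\ref{lem-Bx}, and the one thing requiring care is the reduction $\bar k_h(\theta,x)=-C\big(x-\bar x_h(\theta)\big)$ together with the subspace-invariance of $x(\cdot)$, both of which rest on the already-proved fact that $\E_\zeta\big[h(\e_0)\,\bar\delta_0(v_\theta)\big]$ lands in $\sp\{\phi(\S)\}$. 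The only conceptual content, worth stating explicitly, is that $h$ enters $\bar k_h$ additively (through a $\theta$-dependent constant) rather than multiplicatively with $x$, so that boundedness of $h$ does not disturb the convex-quadratic, contractive structure that drives the Lyapunov argument.
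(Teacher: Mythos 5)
Your proposal is correct and follows essentially the same route as the paper, which proves this lemma by repeating the proof of Lemma~\ref{lem-Bx} with $\bar k_h, \bar x_h$ in place of $\bar k, \bar x$: the rewriting $\bar k_h(\theta,x) = -C\big(x-\bar x_h(\theta)\big)$, the Lyapunov function $V_\theta(x)=\tfrac12\|x-\bar x_h(\theta)\|_2^2$, and the two bounds on $\dot V_\theta$ (smallest nonzero eigenvalue of $C$ on $\sp\{\phi(\S)\}$, and the normal-cone inequality using $\bar x_h(\theta)\in B_x$ from (\ref{eq-cond-vrtBx0})) are exactly the paper's argument. Your explicit remark that $\E_\zeta\big[h(\e_0)\,\bar\delta_0(v_\theta)\big]\in\sp\{\phi(\S)\}$ (from Lemma~\ref{lem-vrt-xth}) is what legitimizes this substitution is a worthwhile clarification but not a departure from the paper.
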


For the slow time-scale, we define, for each $\theta$,
\begin{align}   
 \bar g_h(\theta)  :=  \E_\zeta \big[ g_h(\theta, \bar x_h(\theta), Z_0) \big] =  \E_\zeta \big[ \,  \rho_0 \big(\phi(S_0) - \gma_{1}  \phi(S_{1}) \big) \cdot h(\e_0)^\tr \bar x_h(\theta) \big].   \label{eq-gtd1b-bg}
\end{align}
It is a continuous function of $\theta$, since the function $\bar x_h(\theta)$ is continuous.
The desired mean ODE is
\begin{equation} \label{eq-gtdvrt-odeslow}
  \dot \theta(t) = \bar g_h\big(\theta(t)\big) + z(t),  \quad z(t) \in - \N_{B_\theta}\big(\theta(t)\big),
\end{equation}
where $z(t)$ is the boundary reflection term. 
We want to characterize the relation between its limit set and the set $\Theta_{\text{opt}}$, when $h$ is chosen from the family of functions $h_K, K > 0$ mentioned earlier and with large values of $K$ (which correspond to small biases). 

Let $\bar x_K(\cdot)$ and $\bar g_K(\cdot)$ stand for the function $\bar x_h(\cdot)$ and $\bar g_h(\cdot)$, respectively, when $h = h_K$. 
The next lemma shows the approximate gradient-descent nature of the variant algorithm.

%\smallskip
\begin{lem}  \label{lem-bias-appr-lim}
$ \lim_{ K \to \infty} \sup_{\theta \in B_\theta} \big\| \bar g_K(\theta) + \nabla J(\theta) \big\| = 0.$
\end{lem}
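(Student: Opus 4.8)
The plan is to show that $\bar g_K$ converges uniformly on $B_\theta$ to $\bar g = -\nabla J$, exploiting the fact that $h_K(\e)=\e$ once $\|\e\|\le K$, so the ``bias'' introduced by $h_K$ is supported on the event $\{\|\e_0\|>K\}$, which is rare under $\zeta$. Writing $\psi(Z_0)=\rho_0\big(\phi(S_0)-\gma_1\phi(S_1)\big)$, a bounded function of $Z_0$, I would first record that both mean vector fields are matrix--vector products: $\bar g_K(\theta)=A_K\,\bar x_K(\theta)$ and $\bar g(\theta)=A\,x_\theta$, where $A_K=\E_\zeta[\psi(Z_0)\,h_K(\e_0)^\tr]$ and $A=\E_\zeta[\psi(Z_0)\,\e_0^\tr]$, and $\bar g(\theta)=-\nabla J(\theta)$ by (\ref{eq-gtd1-bg}) and (\ref{eq-Jgrad1}). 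The single analytic input throughout is that $\e_0$ is $\zeta$-integrable, i.e.\ $\E_\zeta[\|\e_0\|]<\infty$ by Theorem~\ref{thm-erg}(ii), so that $\tau_K:=\E_\zeta\big[\|\e_0\|\,\I(\|\e_0\|>K)\big]\downarrow 0$ as $K\to\infty$; since $\|h_K(\e)-\e\|\le 2\|\e\|\,\I(\|\e\|>K)$ and $\|h_K(\e)\|\le\|\e\|$ by (\ref{eq-h}), every error term below will be controlled by $\tau_K$.

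With this, the matrix estimates are immediate: $\|A_K-A\|\le \sup\|\psi\|\cdot\E_\zeta[\|h_K(\e_0)-\e_0\|]\le 2\,\sup\|\psi\|\cdot\tau_K\to 0$, while $\|A_K\|\le \sup\|\psi\|\cdot\E_\zeta[\|\e_0\|]$ stays bounded uniformly in $K$.

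Next I would prove $\sup_{\theta\in B_\theta}\|\bar x_K(\theta)-x_\theta\|\to 0$. By construction $\bar x_K(\theta)$ and $x_\theta=\bar x(\theta)$ both lie in $\sp\{\phi(\S)\}$ and solve $\Phi^\tr\Xi\Phi\,x=b_K(\theta)$ and $\Phi^\tr\Xi\Phi\,x=b(\theta)$, respectively, with $b_K(\theta)=\E_\zeta[h_K(\e_0)\,\bar{\delta}_0(v_\theta)]$ and $b(\theta)=\E_\zeta[\e_0\,\bar{\delta}_0(v_\theta)]=\Phi^\tr\Xi(\Tl v_\theta-v_\theta)$. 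Since $\Phi^\tr\Xi\Phi$ is invertible on $\sp\{\phi(\S)\}$ with smallest nonzero eigenvalue $c>0$, subtracting yields $\|\bar x_K(\theta)-x_\theta\|\le c^{-1}\|b_K(\theta)-b(\theta)\|\le c^{-1}\E_\zeta[\|h_K(\e_0)-\e_0\|\,|\bar{\delta}_0(v_\theta)|]$. Because $v_\theta=\Phi\theta$ is bounded on the finite state space uniformly over the bounded set $B_\theta$ and the rewards are bounded, $M:=\sup_{\theta\in B_\theta}\sup|\bar{\delta}_0(v_\theta)|<\infty$, so the right-hand side is at most $2c^{-1}M\,\tau_K\to 0$ uniformly in $\theta$. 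The same bound on $b(\theta)$ also gives $\sup_{\theta\in B_\theta}\|x_\theta\|<\infty$.

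Finally I would combine these through the decomposition $\bar g_K(\theta)-\bar g(\theta)=A_K\big(\bar x_K(\theta)-x_\theta\big)+(A_K-A)\,x_\theta$, whence $\sup_{\theta\in B_\theta}\|\bar g_K(\theta)+\nabla J(\theta)\|\le \|A_K\|\,\sup_\theta\|\bar x_K(\theta)-x_\theta\|+\|A_K-A\|\,\sup_\theta\|x_\theta\|\to 0$, which is the claim. I expect the only genuine obstacle to be making the convergence $\bar x_K(\theta)\to x_\theta$ \emph{uniform} in $\theta$: this needs the uniform bound $M$ on the temporal-difference magnitudes over $B_\theta$ together with the uniform nonzero-eigenvalue bound $c$ for the restriction of $\Phi^\tr\Xi\Phi$ to $\sp\{\phi(\S)\}$; everything else reduces to the single tail estimate $\tau_K\to 0$, which is precisely the $\zeta$-integrability of the stationary trace supplied by Theorem~\ref{thm-erg}(ii).
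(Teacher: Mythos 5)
Your proposal is correct and follows essentially the same route as the paper's proof: the decomposition $\bar g_K(\theta)-\bar g(\theta)=A_K(\bar x_K(\theta)-x_\theta)+(A_K-A)x_\theta$ is exactly the paper's splitting of $\bar g_K(\theta)+\nabla J(\theta)$ into a term driven by $\bar x_K(\theta)-x_\theta$ and a term driven by $h_K(\e_0)-\e_0$, and both reductions are then handled identically (the tail estimate $\E_\zeta[\|\e_0\|\,\I(\|\e_0\|>K)]\to 0$ from Theorem~\ref{thm-erg}(ii), and the linear-system bound via the smallest nonzero eigenvalue of $\Phi^\tr\Xi\,\Phi$ together with the uniform bound on $|\bar\delta_0(v_\theta)|$ over $B_\theta$). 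The only difference is cosmetic — you factor the expectations into the matrices $A_K$, $A$ while the paper keeps everything under a single expectation — and your explicit remarks on $\sup_K\|A_K\|<\infty$ and $\sup_{\theta\in B_\theta}\|x_\theta\|<\infty$ merely spell out bounds the paper uses implicitly.
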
 

\begin{proof}
First, recall an expression of the gradient $\nabla J(\theta)$ from the analysis of the mean ODEs of GTDa in Section~\ref{sec-gtd1-odes} (cf.\ (\ref{eq-gtd1-bk})-(\ref{eq-gtd1-bg})): for each $\theta \in \re^d$,  
$\nabla J(\theta) = - \E_\zeta \big[ \, \rho_0 \big(\phi(S_0) - \gma_{1}  \phi(S_{1}) \big) \cdot \e_0^\tr x_\theta  \big]$, 
where $x_\theta$ is the unique solution of the equation $ \Phi^\tr \Xi \, \Phi \, x = \E_\zeta \big[ \e_0 \, \bar{\delta}_0(v_\theta) \big]$ in $\sp\{\phi(\S)\}$.
Then by (\ref{eq-gtd1b-bg}),
\begin{align*}
    \bar g_K(\theta) + \nabla J(\theta)  & =  \E_\zeta \big[ \, \rho_0 \big(\phi(S_0) - \gma_{1}  \phi(S_{1})  \big) \cdot h_K(\e_0)^\tr \bar x_K(\theta)  \big]
      -    \E_\zeta \big[ \,  \rho_0 \big(\phi(S_0) - \gma_{1}  \phi(S_{1}) \big) \cdot \e_0^\tr x_\theta \big] \\
      & =  \E_\zeta \big[ \, \rho_0 \big(\phi(S_0) - \gma_{1}  \phi(S_{1}) \big) \cdot h_K(\e_0)^\tr \big( \bar x_K(\theta) - x_\theta \big) \big] \\
      & \quad \
           +  \E_\zeta \big[ \,  \rho_0 \big(\phi(S_0) - \gma_{1}  \phi(S_{1}) \big) \cdot  \big(h_K(\e_0) - \e_0   \big)^\tr x_\theta \big].
\end{align*}
So to prove the lemma, it suffices to prove that as $K \to \infty$,
\begin{equation} \label{eq-prff1}
       \sup_{\theta \in B_\theta}  \big\|   \bar x_K(\theta) - x_\theta \big\| \to 0 \quad
\text{and} \quad  \E_\zeta \big[ \big\| h_K(\e_0) - \e_0 \big\|  \big]  \to 0.
\end{equation}
Since $\E_\zeta \big[ \big\| h_K(\e_0) - \e_0 \big\|  \big] \leq 2 \E_\zeta \big[ \| \e_0 \| \I(\| \e_0\| > K) \big]$ by (\ref{eq-h}), the second relation in (\ref{eq-prff1}) follows from $\E_\zeta[ \| \e_0 \| ] < \infty$ (Theorem~\ref{thm-erg}(ii)). To show the first relation in (\ref{eq-prff1}), note that by the definition of $\bar x_K(\theta)$ (cf.\ Lemma~\ref{lem-vrt-xth}), $\bar x_K(\theta) - x_\theta$ is the unique solution of the following equation (in $x$) in the subspace $\sp \{\phi(\S)\}$:
\begin{equation} \label{eq-prff2}
   \Phi^\tr \Xi \, \Phi \, x = \E_\zeta \big[ \big(h_K(\e_0)  - \e_0 \big) \cdot \bar{\delta}_0(v_\theta) \big].
\end{equation}
This implies that $\| \bar x_K(\theta) - x_\theta\|_2 \leq \big\| \E_\zeta \big[ \big(h_K(\e_0)  - \e_0 \big) \cdot \bar{\delta}_0(v_\theta) \big] \big\|_2 /c$, where $c$ is the smallest nonzero eigenvalue of the matrix $\Phi^\tr \Xi \, \Phi$. On the other hand, 
$$ \sup_{\theta \in B_\theta} \E_\zeta \big[ \big\| h_K(\e_0)  - \e_0 \big\| \cdot |\bar{\delta}_0(v_\theta)| \big]  \to 0 \quad \text{as} \ K \to \infty,$$
since $\sup_{\theta \in B_\theta} \sup_{s, s' \in \S} | \bar{\delta}(s, s', v_\theta) | < \infty$ and $\lim_{K \to \infty} \E_\zeta \big[ \big\|h_K(\e_0)  - \e_0 \big\| \big]  = 0$ as just proved. Therefore, $\sup_{\theta \in B_\theta} \big\|\bar x_K(\theta) - x_\theta \big\| \to 0$ as $K \to \infty$, proving the first relation in (\ref{eq-prff1}). The proof is now complete.
\end{proof}

Below is an analogue of Lemma~\ref{lem-gtd1-odeslow} about the limit set of the ODE (\ref{eq-gtdvrt-odeslow}), which will be needed in analyzing the variant algorithm at the slow time-scale:

\begin{lem} \label{lem-gtdvrt-odeslow}
For any $\epsilon > 0$, there exists $K_\epsilon > 0$ such that if $K \geq K_\epsilon$, then with $\bar g = \bar g_K$,
the limit set of the ODE (\ref{eq-gtdvrt-odeslow}) satisfies
$\bigcap_{\, \bar t \geq 0}  \cl \, \big\{\theta(t)  \, \big| \,   \theta(0) \in B_\theta, \, t \geq \bar t  \, \big\}  \subset N_\epsilon ( \Theta_{\text{\rm opt}}).$
\end{lem}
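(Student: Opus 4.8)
The plan is to reuse the Lyapunov argument in the proof of Lemma~\ref{lem-gtd1-odeslow}, again with $V(\theta)=J(\theta)$, while carefully tracking the perturbation caused by replacing the unbiased drift $-\nabla J$ with $\bar g_K$. Set $\delta_K := \sup_{\theta \in B_\theta} \|\bar g_K(\theta)+\nabla J(\theta)\|$; by Lemma~\ref{lem-bias-appr-lim}, $\delta_K \to 0$ as $K \to \infty$. Along a solution $\theta(\cdot)$ of (\ref{eq-gtdvrt-odeslow}) we have $\dot\theta(t)=\bar g_K(\theta(t))+z(t)$, and by Lemma~\ref{lem-decomp-vec-cones} (Moreau's decomposition) the reflected drift is the projection of $\bar g_K(\theta(t))$ onto the tangent cone of $B_\theta$ at $\theta(t)$; denote this cone $T(t)$, so $\dot\theta(t)=\Pi_{T(t)}\big(\bar g_K(\theta(t))\big)$. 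The same identity in the unbiased case yields $w^*(t):=\Pi_{T(t)}\big(-\nabla J(\theta(t))\big)$, which (as computed via (\ref{eq-prfe1})--(\ref{eq-prfe2})) satisfies $-w^*(t)=g^*(\theta(t))$, the minimal-norm subgradient of $f(\theta)=J(\theta)+\delta_{B_\theta}(\theta)$ at $\theta(t)$. Since projection onto a closed convex cone is nonexpansive, $\|\dot\theta(t)-w^*(t)\| \le \|\bar g_K(\theta(t))+\nabla J(\theta(t))\| \le \delta_K$.

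First I would bound the Lyapunov derivative by comparing the biased descent direction with the unbiased one. Using $\langle \nabla J(\theta(t)),w^*(t)\rangle=-\|g^*(\theta(t))\|_2^2$ (the unbiased identity (\ref{eq-prfe2})) and Cauchy--Schwarz,
\[
 \dot V(\theta(t)) = \big\langle \nabla J(\theta(t)),\, \dot\theta(t)\big\rangle = \big\langle \nabla J(\theta(t)),\, w^*(t)\big\rangle + \big\langle \nabla J(\theta(t)),\, \dot\theta(t)-w^*(t)\big\rangle \le -\|g^*(\theta(t))\|_2^2 + M\,\delta_K,
\]
where $M:=\sup_{\theta \in B_\theta}\|\nabla J(\theta)\|_2<\infty$. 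Next, given $\epsilon>0$, I would use continuity of $J$ and compactness of the closed set $B_\theta \setminus N_\epsilon(\Theta_{\text{\rm opt}})$ (on which $J>\inf_{\theta\in B_\theta}J$, since every minimizer lies in $\Theta_{\text{\rm opt}}$) to pick $\epsilon'>0$ with $\{\theta \in B_\theta \mid J(\theta)\le \inf_{\theta\in B_\theta}J+\epsilon'\}\subset N_\epsilon(\Theta_{\text{\rm opt}})$. By the estimate behind (\ref{eq-prfe3}), $\|g^*(\theta)\|_2^2 \ge \eta_{\epsilon'}>0$ whenever $\theta \in B_\theta$ and $J(\theta)\ge \inf_{\theta\in B_\theta}J+\epsilon'$. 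Choosing $K_\epsilon$ so that $M\,\delta_K \le \tfrac12\eta_{\epsilon'}$ for all $K \ge K_\epsilon$ then gives $\dot V(\theta(t)) \le -\tfrac12\eta_{\epsilon'}<0$ on the region $\{J \ge \inf_{\theta\in B_\theta}J+\epsilon'\}$.

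Finally, since $V=J$ is bounded on $B_\theta$ and decreases at rate at least $\tfrac12\eta_{\epsilon'}$ outside the sublevel set $S:=\{\theta\in B_\theta \mid J(\theta)\le \inf_{\theta\in B_\theta}J+\epsilon'\}$, every solution enters $S$ within a finite time that is uniform over $\theta(0)\in B_\theta$, and cannot subsequently leave $S$ because $\dot V<0$ on its boundary. Hence $\theta(t)\in S$ for all large $t$, and since $S$ is closed and contained in $N_\epsilon(\Theta_{\text{\rm opt}})$, the closures of the solution tails, and therefore the limit set, lie in $N_\epsilon(\Theta_{\text{\rm opt}})$. The hard part is precisely that the reflection term $z(t)$ in (\ref{eq-gtdvrt-odeslow}) is generated by the perturbed drift $\bar g_K$ rather than by $-\nabla J$, so the exact identity $\dot V=-\|g^*\|_2^2$ of the unbiased proof fails; the device that rescues the argument is to realize the reflected drift as a tangent-cone projection and invoke nonexpansiveness, which transfers the coercivity bound (\ref{eq-prfe3}) to the biased ODE at the cost of an $O(\delta_K)$ error that is absorbed by taking $K$ large.
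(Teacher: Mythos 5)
Your proposal is correct and takes essentially the same route as the paper's proof: a Lyapunov argument with $V=J$ in which the biased reflected drift is compared to the unbiased one via nonexpansiveness of a Euclidean projection (transferring the coercivity bound behind (\ref{eq-prfe3}) to the perturbed ODE), with the resulting $O(\delta_K)$ error absorbed by taking $K$ large. The only differences are bookkeeping: you compare the full reflected drifts through the tangent-cone projection, whereas the paper compares the reflection terms through the normal-cone projection (picking up a factor $2$ in its $\Delta_K$), and you finish by forward invariance of the sublevel set $S$ chosen inside $N_\epsilon(\Theta_{\text{\rm opt}})$ before fixing $K$, whereas the paper uses a cruder excursion bound into a slightly inflated sublevel set $E_\ell$ and then shrinks $\ell$ by a final choice of $\epsilon$ and $K$ --- both variants are valid.
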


\begin{proof}
Consider a solution $\theta(\cdot)$ of (\ref{eq-gtdvrt-odeslow}) with $\bar g_h = \bar g_K$ for some $K$. At time $t$, denote $D=\N_{B_{\theta}}(\theta(t))$ and recall that the boundary reflection term $z(t) = - \Pi_{D} \bar g_K(\theta(t))$. Define $\tilde z(t) = - \Pi_{D} \big(- \nabla J(\theta(t))\big)$.
Then, by the non-expansiveness of the projection operator $\Pi_D$,
\begin{equation} 
   \big\| z(t) - \tilde z(t) \big\|_2 \leq  \big\| \bar g_K(\theta(t)) + \nabla J(\theta(t)) \big\|_2. \label{eq-prff3}
\end{equation}   
We now calculate $\dot V(\theta(t))$ for the function $V(\theta) = J(\theta)$:
\begin{align}
 \dot V(\theta(t)) & = \big\langle \nabla J(\theta(t)), \, \bar g_K(\theta(t)) + z(t) \big\rangle \notag \\
  & =   \big\langle \nabla J(\theta(t)), \,  - \nabla J (\theta(t)) + \tilde z(t) \big\rangle + \big\langle \nabla J(\theta(t)), \, \bar g_K(\theta(t)) + z(t) + \nabla J (\theta(t)) - \tilde z(t)  \big\rangle  \notag \\
  & \leq \big\langle \nabla J(\theta(t)), \,  - \nabla J (\theta(t)) + \tilde z(t) \big\rangle  + \big\|  \nabla J(\theta(t)) \big\|_2 \cdot \big\| \, \bar g_K(\theta(t)) + z(t)  + \nabla J(\theta(t)) -  \tilde z(t) \big\|_2  \notag \\
  & \leq \big\langle \nabla J(\theta(t)), \,  - \nabla J (\theta(t)) + \tilde z(t) \big\rangle  + \big\|  \nabla J(\theta(t)) \big\|_2 \cdot   2 \, \big\| \, \bar g_K(\theta(t)) + \nabla J(\theta(t)) \big\|_2 \, , \label{eq-prff4}
\end{align}
where the last inequality follows from (\ref{eq-prff3}) and the triangle inequality.
As we showed in the proof of Lemma~\ref{lem-gtd1-odeslow} for GTDa, the first term in (\ref{eq-prff4}) is always nonpositive and moreover,
for any $0 < \epsilon \leq \sup_{\theta \in B_\theta} J(\theta) - \inf_{\theta \in B_\theta} J(\theta)$, there exists $\eta_\epsilon > 0$ such that
\begin{equation} \label{eq-prff5}
   \big\langle \nabla J(\theta(t)), \,  - \nabla J (\theta(t)) + \tilde z(t) \big\rangle \leq - \eta_\epsilon  \quad \text{if} \  J(\theta(t)) \geq \textstyle{\inf_{\theta \in B_\theta}} J(\theta) +\epsilon.
\end{equation}
For the second term in (\ref{eq-prff4}), we can upper-bound it by 
$$\Delta_K : = \textstyle{\sup_{\theta \in B_\theta}} \big\| \nabla J(\theta) \big\|_2 \cdot 2 \, \textstyle{\sup_{\theta \in B_\theta}} \big\| \bar g_K(\theta) + \nabla J(\theta) \big\|_2,$$
which, by Lemma~\ref{lem-bias-appr-lim}, converges to $0$ as $K \to +\infty$. Therefore, we can choose $\bar K_\epsilon$ sufficiently large so that $\eta_\epsilon - \Delta_K \geq \eta_\epsilon/2$ for all $K \geq \bar K_\epsilon$. Together with (\ref{eq-prff4}) and (\ref{eq-prff5}), this shows that for any solution of the ODE corresponding to such $K$, there exists a time $\tau_\epsilon$ (which depends on $\eta_\epsilon$ but not on $K$) such that whenever $\theta(t)$ is outside the set 
$E_\epsilon := \{ \theta \mid J(\theta) \leq \inf_{\theta \in B_\theta} J(\theta) + \epsilon \} \cap B_\theta$, it will visit $E_\epsilon$ before the amount of time $\tau_\epsilon$ has elapsed.

Now assume that at some moment $t_0$, $\theta(t_0) \in E_\epsilon$. Let us check how far away from the set $E_\epsilon$ the solution $\theta(t)$ can wander before it revisits this set. During such an excursion, by (\ref{eq-prff4}), the maximal amount of increase in $V(\theta)$ relative to $V(\theta(t_0))$ is bounded by $\tau_{\epsilon} \Delta_K$. Hence, once the solution hits $E_\epsilon$, it will not leave the set $E_{\ell}$, where $\ell =  \tau_\epsilon \Delta_K + \epsilon$.

Finally, to finish the proof, consider any given $\epsilon' > 0$ (it is the given $\epsilon$ in the lemma). There exists a sufficiently small $\tilde \epsilon > 0$ such that $E_{\tilde \epsilon} \subset N_{\epsilon'}(\Theta_{\text{opt}})$.%footnote starts
\footnote{Otherwise, there would be a sequence of points $\{\theta_i\}$ in $B_\theta$ that converges to $\theta_\infty$ and has the property that $\lim_{i \to \infty} J(\theta_i) = J(\theta_\infty) = \inf_{\theta \in B_\theta} J(\theta)$ and yet $\theta_\infty \not\in \Theta_{\text{opt}}$, a contradiction.}
%footnote ends
We first choose $\epsilon < \tilde \epsilon$ and then choose a threshold $\bar K' \geq \bar K_\epsilon$ sufficiently large so that $\ell < \tilde \epsilon$ and hence $E_{\ell} \subset E_{\tilde \epsilon}$ for all $K \geq \bar K'$. (This is possible because in the expression of $\ell$ above, we have $\Delta_K \to 0$ as $K \to \infty$ by Lemma~\ref {lem-bias-appr-lim}.)
It then follows that for all $K \geq \bar K'$, the limit set of the ODE~(\ref{eq-gtdvrt-odeslow}) with $\bar g = \bar g_K$ must be contained in $E_{\tilde \epsilon} \subset N_{\epsilon'}(\Theta_{\text{opt}})$.
\end{proof}

\subsubsection{Convergence properties}

We are now ready to address the convergence properties of the variant algorithm. 
It is straightforward to check that all the conditions used in our convergence proof of GTDa are satisfied by the variant algorithm, by using the definition of the functions $g_h, k_h$ and by using the results about the desired mean ODEs for the fast and slow time-scales that we just derived.%footnote starts
\footnote{In particular, among the conditions listed in Section~\ref{sec-gtd1-cond}, the uniform integrability conditions are trivially satisfied because the boundedness of the function $h(\e)$ makes $g_h(\theta, x, z)$ and $k_h(\theta, x, z)$ bounded on $ B_\theta \times B_x \times \Z$. The functions $g_h(\theta, x, z)$ and $k_h(\theta, x, z)$ are simply affine functions of $(\theta, x)$ for each $z$. By their definitions, the uniform continuity condition (iii) on $g_h$ and $k_h$ are also trivially satisfied, and so does the Lipschitz continuity condition on $g_h(\theta, \cdot, z)$ mentioned in Remark~\ref{rmk-proofgtd}, which is required in the convergence proof. For the ``averaging conditions'' (iv) and (vii), as the proof of Lemma~\ref{lem-conv-mean-cond1} showed, they are satisfied if for each $\theta$ and $x$, the functions $k_h(\theta, x, z)$ and $g_h(\theta, \bar x(\theta), z)$ are Lipschitz continuous in the trace variable $\e$. This is true since $h(\e)$ is Lipschitz continuous by assumption.}
%footnote ends
We can thus apply the convergence proof for GTDa to the variant algorithm for each choice $h = h_K$, with the corresponding condition (\ref{eq-cond-vrtBx0}) on the constraint set $B_x$. (We remark that the threshold for the radius will depend on $K$, but is bounded above for all $K$ because of (\ref{eq-h}).)
The result is that the conclusions of Theorems~\ref{thm-gtd1-wk-dim}-\ref{thm-gtd1-wk-constant} for GTDa hold also in this case, with the limit set of the slow-time-scale ODE~(\ref{eq-gtdvrt-odeslow}) in place of the set $\Theta_{\text{\rm opt}}$. We then combine these conclusions with Lemma~\ref{lem-gtdvrt-odeslow} to obtain Theorem~\ref{thm-biasgtd1-wk}(i)-(ii) given below.

The theorem is stated for both GTDa and GTDb variants---the analysis of the GTDb variant is essentially the same as that of the GTDa variant we gave.
The conclusions about the biased variant algorithms are very similar to those about GTD given earlier (cf.\ Theorems~\ref{thm-gtd1-wk-dim}-\ref{thm-gtd2-wk}). The only difference is that here, in order for the algorithms to approach a given small neighborhood of $\Theta_{\text{\rm opt}}$, $K$ needs to be sufficiently large (so that the bias is sufficiently small). 

The property of the averaged iterates is addressed in the part (iii) of the theorem, where $\N'_\epsilon(\Theta_{\text{\rm opt}})$ is the open $\epsilon$-neighborhood of $\Theta_{\text{\rm opt}}$, and the set $\bar{\mathcal{I}}^{\alpha,\beta}_m$ of probability measures is as defined in Section~\ref{sec-aveite} before Theorem~\ref{thm-gtd1-average}.  Like the latter theorem, the proof of this part is essentially the same as that given in \cite[Section 4.3]{etd-wkconv} in the context of ETD and therefore omitted. (The result in the part (iii) is analogous to \cite[Theorem~9, Section~3.3]{etd-wkconv} for biased variants of ETD.) Compared with Theorem~\ref{thm-gtd1-average} for GTDa, the only difference is that here, for a desired small $\epsilon$-neighborhood $\N'_\epsilon(\Theta_{\text{\rm opt}})$, in order for the minimal probability mass $\kappa_m^{\alpha,\beta}$ for this set to increase to $1$ as $(\alpha, \beta) \to 0$ and $\alpha/\beta \to 0$, $K$ needs to be sufficiently large (i.e., the bias needs to be sufficiently small).

\begin{thm} \label{thm-biasgtd1-wk}
For the case of state-dependent $\lambda$, consider the two-time-scale biased GTD algorithm (\ref{eq-gtd1b-th})-(\ref{eq-gtdb-x}) or (\ref{eq-gtdb-x})-(\ref{eq-gtd2b-th}). In the algorithm, let the function $h \in \{ h_K \mid K > 0\}$, a family of bounded Lipschitz continuous functions that satisfy (\ref{eq-h})-(\ref{eq-h2}).
In addition, let the same conditions in Theorem~\ref{thm-gtd1-wk-dim} (Theorem~\ref{thm-gtd1-wk-constant}) hold for the case of diminishing (constant) stepsize, except that the condition on the constrained set $B_x$ is replaced by (\ref{eq-cond-vrtBx0}) for the chosen $h=h_K$.
Then for each initial condition of the algorithm, the following hold:\vspace*{-3pt}
\begin{itemize}
\item[\rm (i)] In the case of diminishing stepsize, for each $\epsilon > 0$, there exists $K_\epsilon > 0$ such that if $K \geq K_\epsilon$, then 
it holds for some sequence of positive numbers $T_n \to \infty$ that 
\begin{equation} \label{eq-bias-wk-dim}
  \limsup_{n \to \infty} \Pr \big( \theta_i \not\in N_\epsilon(\Theta_{\text{\rm opt}}), \ \text{some} \ i \in [n, m(n, T_n) ] \big)  = 0.
\end{equation}  
\item[\rm (ii)] In the case of constant stepsize, for each $\epsilon > 0$, there exists $K_\epsilon > 0$ with the following property. If $K \geq K_\epsilon$, then for any integers $n_{\alpha}$ that satisfy $\alpha \, n_{\alpha} \to \infty$ as $\alpha \to 0$,
there exist positive numbers $\{T_{\alpha, \beta} \mid \alpha, \beta > 0\}$ with $T_{\alpha, \beta} \to \infty$ as $\beta \to 0$ and $\alpha/\beta \to 0$, such that,
\begin{equation} \label{eq-bias-wk-constant1}
  \limsup_{\beta \to 0, \, \alpha/\beta \to 0} \Pr \big( \theta^{\alpha,\beta}_{n_\alpha + i} \not\in N_\epsilon(\Theta_{\text{\rm opt}}), \ \text{some} \ i \in [0, T_{\alpha,\beta}/\alpha ] \big)  = 0.
\end{equation}
\item[\rm (iii)] In the case of constant stepsize, consider the averaged iterates $\{\bar \theta^{\alpha,\beta}_n\}$ defined by (\ref{eq-aveite}). Then 
for each $\epsilon > 0$, there exists $K_\epsilon > 0$ such that if $K \geq K_\epsilon$, then it holds for all $m \geq 1$ that
$\liminf_{\beta \to 0, \, \alpha/\beta \to 0} \, \inf_{\mu \in \bar{\mathcal{I}}^{\alpha,\beta}_m} \mu \big( \big[N'_\epsilon(\Theta_{\text{\rm opt}})\big]^m \big) = 1$,
and more strongly, with $m_\alpha = \lceil \tfrac{m}{\alpha} \rceil$,
$$\liminf_{\beta \to 0, \, \alpha/\beta \to 0} \, \inf_{\mu \in \bar{\mathcal{I}}^{\alpha,\beta}_{m_\alpha}} \mu \big( \big[N'_\epsilon(\Theta_{\text{\rm opt}})\big]^{m_\alpha} \big) = 1.$$
Furthermore, regardless of the choice of $K$, for each $\epsilon > 0$ and each stepsize pair $(\alpha, \beta)$,
$$ \limsup_{n \to \infty} \E \Big[ \max_{n \leq i < n+m} \!\text{\rm dist} \big( \bar \theta_i^{\alpha,\beta}, \Theta_{\text{\rm opt}} \big) \Big] \leq \epsilon \kappa^{\alpha,\beta}_m + 2 \, r_{B_\theta} ( 1 - \kappa_m^{\alpha,\beta}),$$
where $r_{B_\theta}$ is the radius of $B_\theta$, and $\kappa_m^{\alpha,\beta} = \inf_{\mu \in \bar{\mathcal{I}}^{\alpha,\beta}_m} \mu \big( \big[N'_\epsilon(\Theta_{\text{\rm opt}})\big]^m \big)$. 
\end{itemize}
\end{thm}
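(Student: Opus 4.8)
The plan is to reduce Theorem~\ref{thm-biasgtd1-wk} to the already-established convergence machinery for GTDa by verifying that the biased variant fits the same template, and then to thread the bias parameter $K$ through using the ODE limit-set characterization of Lemma~\ref{lem-gtdvrt-odeslow}. The key simplification, relative to GTDa, is that replacing $\e_n$ by the bounded function $h(\e_n)$ renders the relevant functions bounded.

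First I would verify the conditions of Section~\ref{sec-gtd1-cond} together with the Lipschitz-in-$x$ property of Remark~\ref{rmk-proofgtd}. Because $h$ is bounded, the functions $g_h(\theta,x,z)$ and $k_h(\theta,x,z)$ in (\ref{eq-biasgtd1-gk}) are bounded on $B_\theta \times B_x \times \Z$ and affine in $(\theta,x)$; this makes the uniform integrability conditions (i) and (vi), the uniform continuity condition (iii), and the Lipschitz-in-$x$ property immediate. The tightness condition (ii) concerns $\{Z_n\}$ and so holds exactly as for GTDa, via the uniform integrability of $\{\e_n\}$ (Prop.~\ref{prop-trace}(ii)). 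For the averaging conditions (iv) and (vii) I would invoke the proof template of Lemma~\ref{lem-conv-mean-cond1}: since $h$ is Lipschitz continuous, $k_h(\theta,x,z)$ and $g_h(\theta,\bar x_h(\theta),z)$ are Lipschitz in the trace variable $\e$, so Prop.~\ref{prop-trace-averaging}(i) applies with $\bar f=\bar k_h(\theta,x)$ and $\bar f=\bar g_h(\theta)$ respectively, yielding convergence-in-mean to the limits computed in (\ref{eq-gtd1b-bk}) and (\ref{eq-gtd1b-bg}). The fast-time-scale ODE condition (v) is supplied by Lemma~\ref{lem-vrtBx} (using the constraint (\ref{eq-cond-vrtBx0}) on $B_x$ for the chosen $h=h_K$) together with the continuity of $\bar x_h(\cdot)$ from Lemma~\ref{lem-vrt-xth}.

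With every condition in place, I would run the proofs of Theorem~\ref{thm-gtd1-wk-dim} (diminishing stepsize) and Theorem~\ref{thm-gtd1-wk-constant} (constant stepsize) verbatim, invoking the observation in Remark~\ref{rmk-proofgtd} that when all conditions except the identification of the slow-time-scale limit set are met, the GTDa conclusions hold with the limit set of the slow-time-scale ODE (\ref{eq-gtdvrt-odeslow}) in place of $\Theta_{\text{opt}}$. To convert these into statements about $N_\epsilon(\Theta_{\text{opt}})$, I would then apply Lemma~\ref{lem-gtdvrt-odeslow}: for each $\epsilon>0$ it furnishes a threshold $K_\epsilon$ such that for $K\geq K_\epsilon$ the limit set of (\ref{eq-gtdvrt-odeslow}) lies inside $N_\epsilon(\Theta_{\text{opt}})$. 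Combining these gives (\ref{eq-bias-wk-dim}) and (\ref{eq-bias-wk-constant1}), which are parts (i) and (ii).

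For the averaged-iterate statement (iii), I would proceed as in Theorem~\ref{thm-gtd1-average}, exploiting the weak Feller property of the joint chain $\{(S_n,y_n,\e_n,\theta_n^{\alpha,\beta},x_n^{\alpha,\beta})\}$ and combining the constant-stepsize conclusion of part (ii) with the ergodicity arguments of \cite[Sections~3.3,~4.3]{etd-wkconv}; the $K$-dependence enters only through the threshold needed so the limit set sits inside $N'_\epsilon(\Theta_{\text{opt}})$, while the final expectation bound uses convexity of $\theta\mapsto\text{dist}(\theta,\Theta_{\text{opt}})$ and holds for every $K$. The genuine analytical content of the theorem---quantifying how the bias shrinks the gap between the algorithm's limit set and $\Theta_{\text{opt}}$---is absorbed entirely into Lemmas~\ref{lem-bias-appr-lim} and~\ref{lem-gtdvrt-odeslow}. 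Consequently the main (and essentially only) obstacle is the uniform approximation $\sup_{\theta\in B_\theta}\|\bar g_K(\theta)+\nabla J(\theta)\|\to 0$ of Lemma~\ref{lem-bias-appr-lim}, which drives both the Lyapunov drift estimate (\ref{eq-prff4}) and the excursion bound in Lemma~\ref{lem-gtdvrt-odeslow}; everything downstream is bookkeeping that mirrors the GTDa analysis.
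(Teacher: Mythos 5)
Your proposal is correct and follows essentially the same route as the paper: verify the conditions of Section~\ref{sec-gtd1-cond} and Remark~\ref{rmk-proofgtd} for $g_h,k_h$ (trivial by boundedness and Lipschitz continuity of $h$), rerun the GTDa proofs to get convergence to the limit set of the ODE~(\ref{eq-gtdvrt-odeslow}), and then invoke Lemma~\ref{lem-gtdvrt-odeslow} (powered by Lemma~\ref{lem-bias-appr-lim}) to place that limit set inside $N_\epsilon(\Theta_{\text{\rm opt}})$ for $K\geq K_\epsilon$, with part (iii) handled via the weak Feller/ergodicity argument of Theorem~\ref{thm-gtd1-average}. Your closing assessment that the analytical content lives in Lemmas~\ref{lem-bias-appr-lim} and~\ref{lem-gtdvrt-odeslow} is exactly how the paper structures the argument.
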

%\smallskip

\subsection{Extension to a Composite Scheme of Setting $\lambda$-Parameters} \label{sec-cp-extension}

In this subsection we explain a composite scheme of setting the $\lambda$-parameters, 
which builds upon the state-dependent and history-dependent $\lambda$ considered so far in the paper but is more general than them.
With this composite scheme, a larger set of generalized Bellman operators can be utilized for the approximate policy evaluation task.
We will explain this and show how the GTD algorithms can be simply modified to minimize the projected-Bellman-error objective function $J(\theta)$, for the Bellman operators $\Tl$ associated with this scheme. A convergence theorem for the GTD algorithms will be given at the end. 

We shall use primarily the GTDa algorithm as an illustrative example, and mention the GTDb algorithm near the end, which is very similar. One can apply the same arguments given below to other algorithms discussed earlier or to be discussed in the next section. 

\subsubsection{Multiple concurrent trace processes and associated Bellman operators} 

To explain the composite scheme of setting $\lambda$ and its underlying idea, it is convenient to start with an illustrative example of an algorithm that uses such a scheme. Consider the GTDa algorithm (\ref{eq-gtd1a-th})-(\ref{eq-gtd1a-x}). 
Let $(\theta_n, x_n)$ be generated according to these same formulae, but with the trace iterates $\e_n$ calculated differently as follows. 
Partition the state space $\S$ into disjoint subsets, say, $\S = \cup_{i=1}^\ell \S_i$. 
Associate each subset $\S_i$ with a way of setting the $\lambda$-parameters $\lambda_n^{(i)}$ and with a trace sequence $\e_n^{(i)}$ generated according to the recursion
\begin{equation} \label{eq-decomp-e}
    \e^{(i)}_{n} = \lambda_n^{(i)} \gma_n \rho_{n-1} \e^{(i)}_{n-1} + \phi(S_n) \cdot \I(S_n \in \S_i).
\end{equation}
Then take their sum to be the trace $\e_n$ used in the algorithm:
\begin{equation} \label{eq-comp-e}
     \e_n = \textstyle{\sum_{i=1}^\ell} \e^{(i)}_n.
\end{equation} 
As will be seen shortly, the idea of the composite scheme is quite general. Here, for the purpose of reusing the previous analyses, however, we shall restrict attention to the case where for each $i \leq \ell$, the choice of $\lambda_n^{(i)}$ is in the two classes (state-dependent and history-dependent $\lambda$) discussed in Section~\ref{sec-choice-lambda} and obeys the conditions given there. Specifically, either $\lambda_n^{(i)} = \lambda^{(i)}(S_n)$ for a given function $\lambda^{(i)}$, or $\lambda_n^{(i)}$ is determined through (\ref{eq-histlambda}) based on the memory states and the previous trace \emph{in the $i$-th process}: 
\begin{equation}
  y_n = f(y_{n-1}, S_n), \qquad \lambda^{(i)}_n=\lambda^{(i)}(y_n, \e^{(i)}_{n-1}). \label{eq-cp-histlambda}
\end{equation}
In the latter case, we require that the memory states $\{y_n\}$ satisfy Condition~\ref{cond-mem}, and the $\lambda^{(i)}$ function satisfies Condition~\ref{cond-lambda}. 

The idea behind (\ref{eq-decomp-e})-(\ref{eq-comp-e}) is that by decomposing the trace $\e_n$ in this manner, one can compose a generalized Bellman operator $\Tl$ from selected component mappings of other Bellman operators and minimize the projected-Bellman-error objective function $J(\theta)$ for that $\Tl$. As a simple example, if $\S = \S_1 \cup \S_2$ and $T^{(0)}, T^{(1)}$ are the Bellman operators (for the target policy) associated with the constant $\lambda = 0$ and $\lambda = 1$, respectively, one can employ in $J(\theta)$ a composite $\Tl$ given by
$$ (\Tl v)(s) = (T^{(0)} v)(s) \ \ \ \text{if} \ s \in \S_1;  \qquad (\Tl v)(s) = (T^{(1)} v)(s) \ \ \ \text{if} \ s \in \S_2.$$
To minimize $J(\theta)$ for this $\Tl$ using GTDa, one just modifies the definition of $\e_n$ in the algorithm as shown above; namely, let $\e_n = \e^{(1)}_n + \e^{(2)}_n$, where $\e^{(1)}_n$ and $\e^{(2)}_n$ are updated according to (\ref{eq-decomp-e}) with $\lambda_n^{(1)} = 0$ and $\lambda_n^{(2)} = 1$, respectively. Such composite $\Tl$ is not possible with the form of state-dependent or history-dependent $\lambda$-parameters we considered earlier. By using such Bellman operators in the objective function $J(\theta)$, one can better regulate what information to use for approximating the value function $v_\pi$ at specific states. 

This idea of setting $\lambda$ and its implementation through (\ref{eq-decomp-e})-(\ref{eq-comp-e}) were proposed by the author in \cite{yb-bellmaneq}. The properties of the associated state-trace processes and LSTD algorithms were further analyzed in \cite{Yu-siam-lstd,gbe_td17}, with the second reference extending the scheme to include history-dependent $\lambda$'s. We refer the reader to \cite[Section 3]{gbe_td17} (especially Section 3.1 and the discussion before Corollary 3.1 in Section 3.2 therein) for the details of the generalized Bellman operator $\Tl$ associated with (\ref{eq-decomp-e})-(\ref{eq-comp-e}). As before, for the purpose of this paper, we do not need the details of $\Tl$, and what we care about is that the algorithms with the above redefinition of the trace iterates are indeed minimizing $J(\theta)$ for the corresponding Bellman operator $\Tl$. So we will only describe $\Tl$ at an abstract level in terms of other Bellman operators considered earlier, before we proceed to discuss the convergence properties of the GTD algorithms.

With the new definition (\ref{eq-decomp-e})-(\ref{eq-comp-e}) for the trace iterates, instead of a single state-trace process, there are now $\ell$ concurrent processes, $\big\{ \big(S_n, y_n, \e_n^{(i)} \big) \big\}$, $i \leq \ell$, which share the same state and memory state variables (treat the memory states $y_n$ as dummy variables if state-dependent $\lambda$'s are used in the recursion (\ref{eq-decomp-e}) for the $i$-th trace process).
Since the choice of $\lambda$ in each of these state-trace processes is assumed to satisfy the same conditions as required before, there is a Bellman operator $T^{(\lambda^{(i)})}$ associated with the $i$-th process for each $i \leq \ell$ (cf.\ Section~\ref{sec-choice-lambda}), and the theorems given in Section~\ref{sec-property-stproc} all hold for each of these $\ell$ state-trace processes individually.

Consider a Bellman operator $\Tl$ composed from the component mappings of $T^{(\lambda^{(i)})}$, $i \leq \ell$, as follows: for any $v \in \re^{|\S|}$,
\begin{equation} \label{eq-Tcomp}
  (\Tl v)(s)  := \big( T^{(\lambda^{(i)})} v \big)(s)  \quad \text{for} \ s \in \S_i.
\end{equation}
This is the generalized Bellman operator corresponding to the above composite scheme of setting the $\lambda$-parameters. Let us now proceed to show that the correspondingly modified GTD algorithms minimize the projected-Bellman-error objective function $J(\theta)$, for this Bellman operator $\Tl$, and that the convergence properties we derived earlier GTD continue to hold for the modified algorithms.

\subsubsection{Convergence analysis for GTD}

To present the analysis, we continue to use the GTDa algorithm as an illustrative example. (The corresponding GTDb algorithm will be addressed at the end; see (\ref{eq-gtd2c-th})-(\ref{eq-gtd2c-x}).)
Substituting the new definition (\ref{eq-comp-e}) for $\e_n$, GTDa now becomes
\begin{align}
   \theta_{n+1} & = \Pi_{B_\theta} \Big( \theta_n + \alpha_n \, \textstyle{\sum_{i=1}^\ell} \rho_n \big(\phi(S_n) - \gma_{n+1} \phi(S_{n+1}) \big) \cdot (\e_n^{(i)})^\tr x_n  \Big),  \label{eq-gtd1c-th}\\
   x_{n+1} & = \Pi_{B_x} \Big( x_n + \beta_n \big( \textstyle{ \sum_{i=1}^\ell} \e_n^{(i)} \delta_n(v_{\theta_n}) - \phi(S_n) \phi(S_n)^\tr x_n\big) \Big). \label{eq-gtd1c-x}
\end{align} 
The first step is to identify the mean ODEs. By Theorem~\ref{thm-erg}, for each $i \leq \ell$, the Markov chain $\big\{ \big(S_n, y_n, \e_n^{(i)} \big) \big\}$ has a unique invariant probability measure $\zeta^{(i)}$. Since conditioned on the states and memory states, the evolution of traces in each of the $\ell$ processes is independent of one another, the joint state-trace process $\{(S_n, y_n, \e_n^{(1)}, \ldots, \e_n^{(\ell)} )\}$ also has a unique invariant probability measure $\tilde \zeta$, which is determined by the collection $\zeta^{(i)}$, $i \leq \ell$. Denote expectation w.r.t.\ the stationary joint state-trace process by $\E_{\tilde \zeta}$. To identify the functions in the mean ODEs, we calculate the expectations of several functions of the states and traces, for those functions involved in the algorithm, w.r.t.\ the stationary joint state-trace process $\{(S_n, y_n, \e_n^{(1)}, \ldots, \e_n^{(\ell)} )\}$.

Consider first the functions involved in the iteration (\ref{eq-gtd1c-x}) at the fast time-scale. 
We have $\E_{\tilde \zeta} \big[ \phi(S_0) \phi(S_0)^\tr \big] = \Phi^\tr \Xi \, \Phi$ as before.
For each fixed $\theta$ and each $i \leq \ell$, applying Prop.~\ref{prop-mean-fn} to the $i$-th state-trace process, we have
\begin{equation} \label{eq-cp-calc1}
 \E_{\tilde \zeta} \big[ \e_0^{(i)} \, \bar \delta_0(v_{\theta}) \big] = \E_{\zeta^{(i)}} \big[ \e_0^{(i)} \, \bar \delta_0(v_{\theta}) \big] =  \big(\Phi^{(i)}\big)\!^\tr \Xi \, \big(T^{(\lambda^{(i)})} v_\theta - v_\theta \big),
\end{equation} 
where $\Phi^{(i)}$ is the matrix whose $s$-th row equals $\phi(s)^\tr$ if $s \in \S_i$ and equals the zero vector otherwise (cf.\ the definition (\ref{eq-decomp-e}) of $\e_n^{(i)}$). Now by the definition of $\Phi^{(i)}$ and the definition (\ref{eq-Tcomp}) of $\Tl$, 
$$ \textstyle{\sum_{i = 1}^\ell} \big(\Phi^{(i)}\big)\!^\tr \Xi \, \big(T^{(\lambda^{(i)})} v_\theta - v_\theta \big)  = \Phi^\tr \Xi \, \big(\Tl v_\theta - v_\theta \big),$$
so we obtain that for each fixed $(\theta, x)$,
\begin{equation} \label{eq-cp-calc2}
   \E_{\tilde \zeta} \big[ \textstyle{\sum_{i = 1}^\ell}  \e_0^{(i)} \, \bar \delta_0(v_{\theta}) \big] - \E_{\tilde \zeta} \big[ \phi(S_0) \phi(S_0)^\tr \big] x =  \Phi^\tr \Xi \, \big(\Tl v_\theta - v_\theta \big)  - \Phi^\tr \Xi \, \Phi \, x  = : \bar k(\theta, x).
\end{equation}
Note that the function $\bar k(\theta, x)$ here is exactly the one for GTDa in Section~\ref{sec-gtd1-odes}, apart from involving a different $\Tl$.

Thus, for the mean ODE associated with the fast time-scale, we can proceed as in Section~\ref{sec-gtd1-odes}: For each $\theta$, define $\bar x(\theta)$ to be the unique solution $x_\theta$ to $\bar k(\theta, x) = 0, x \in \sp \{\phi(\S)\}$. The mean ODE is the same as (\ref{eq-gtd1-odefast}) (apart from a different definition of $\Tl$). If we place the same condition (\ref{eq-cond-Bx0}) on the constraint set $B_x$ (with $x_\theta$ being defined w.r.t.\ the Bellman operator $\Tl$ here), then as before, for all initial conditions $x(0) \in \sp \{\phi(\S)\}$, the solution $x(t)$ of the ODE converges to $\bar x(\theta(0))$ and the limit set of the ODE is characterized by Lemma~\ref{lem-Bx}.

Consider now the iteration (\ref{eq-gtd1c-th}) at the slow time-scale. In terms of the function $g$ defined in (\ref{eq-gtd1-gk}) earlier, with $Z^{(i)}_n = (S_n, y_n, \e_n^{(i)}, S_{n+1})$, we can write (\ref{eq-gtd1c-th}) equivalently as
\begin{equation}
  \theta_{n+1}  = \Pi_{B_\theta} \big( \theta_n + \alpha_n \, \textstyle{\sum_{i = 1}^\ell} \, g(\theta_n, x_n, Z^{(i)}_n) \big). \label{eq-gtd1c-altth}
\end{equation}
For each fixed $(\theta, x)$ and each $i \leq \ell$, by Prop.~\ref{prop-mean-fn} applied to the $i$-th state-trace process, we have
\begin{align}
  \E_{\tilde \zeta} \big[ g(\theta, x, Z^{(i)}_n) \big] & = \E_{\zeta^{(i)}} \big[ g(\theta, x, Z^{(i)}_n) \big] \notag \\
      & =   \E_{\zeta^{(i)}} \big[ \,  \rho_0 \big(\phi(S_0) - \gma_{1}  \phi(S_{1}) \big) \cdot (\e_0^{(i)})^\tr x \big] \notag \\
       & = \big[ \big(\Phi^{(i)}\big)\!^\tr \Xi \, (I - P^{(\lambda^{(i)})})  \, \Phi \big]^\tr x, \label{eq-cp-calc3}
\end{align}
where $\Phi^{(i)}$ is as defined above, and $P^{(\lambda^{(i)})}$ is the substochastic matrix in the operator $T^{(\lambda^{(i)})}$. 
Thus,
\begin{align}
    \E_{\tilde \zeta} \big[ \, \textstyle{\sum_{i = 1}^\ell} \, g(\theta, x, Z^{(i)}_n) \big]  & =  \textstyle{\sum_{i = 1}^\ell} \, \big[ \big(\Phi^{(i)}\big)\!^\tr \Xi \, (I - P^{(\lambda^{(i)})})  \, \Phi \big]^\tr x  \notag \\
    & = \big[ \Phi^\tr \Xi \, (I - \Pl) \, \Phi \big]^\tr x, \label{eq-cp-calc4}
\end{align}
where $\Pl$ is the substochastic matrix in the affine operator $\Tl$, and the second equality follows from the definition of $\Phi^{(i)}$ and the definition (\ref{eq-Tcomp}) of $\Tl$. Then for $x = \bar x(\theta)$, (\ref{eq-cp-calc4}) yields
\begin{equation}
     \E_{\tilde \zeta} \big[ \, \textstyle{\sum_{i = 1}^\ell} \, g(\theta, \bar x(\theta), Z^{(i)}_n) \big] = \big[ \Phi^\tr \Xi \, (I - \Pl) \, \Phi \big]^\tr \bar x(\theta)  =: \bar g(\theta)
\end{equation}
and $\bar g(\theta) = - \nabla J(\theta)$ by the gradient expression (\ref{eq-Jgrad1}). This is also the same as in Section~\ref{sec-gtd1-odes}, apart from that a different operator $\Tl$ in involved in defining the functions $\bar g$ and $J$ here. The mean ODE for the slow time-scale is thus the same as (\ref{eq-gtd1-odeslow}), and its limit set, by Lemma~\ref{lem-gtd1-odeslow}, is the set 
\begin{equation} \label{eq-cp-opt}
\Theta_{\text{\rm opt}} = \argmin_{\theta \in B_\theta} J(\theta) = \argmin_{\theta \in B_\theta} \tfrac{1}{2}  \big\| \Pi_\xi \big(\Tl v_{\theta} - v_{\theta} \big) \big\|_\xi^2,
\end{equation}
for the Bellman operator $\Tl$ defined in (\ref{eq-Tcomp}).

Finally, to apply the same convergence proofs given in Sections~\ref{sec-gtd1conv-dim}-\ref{sec-gtd1conv-constant}, we need to check if the algorithm here meets the conditions required by those analyses, including uniform integrability, continuity, and averaging conditions, etc.
To see which conditions to verify, it is convenient to work with the same formulae used earlier for the GTDa algorithm, with the trace now being the ``total trace'' $\e_n = \sum_{i = 1}^\ell \e^{(i)}_n$:
\begin{align*}
     \theta_{n+1} & = \Pi_{B_\theta} \big( \theta_n + \alpha_n \, g(\theta_n, x_n, Z_n) \big), \\
      x_{n+1} & = \Pi_{B_x} \big( x_n + \beta_n (k(\theta_n, x_n, Z_n) + \e_n \omega_{n+1} ) \big).
\end{align*}  
In the above, we take $Z_n$ to be $(S_n, y_n, \e_n, S_{n+1})$ together with $\e_n^{(i)}, i \leq \ell$. We take the functions $g, k$ to be the ones defined earlier in (\ref{eq-gtd1-gk}), whose values depend only on the state transition and the ``total trace'' $\e_n$. Since these are the same functions as before and only the trace iterates are different, what we need to verify are just those conditions that involve the traces iterates $\{\e_n\}$.

Consider first the uniform integrability conditions (i) and (vi) listed in Section~\ref{sec-gtd1-cond}. As mentioned there, these conditions are satisfied if $\{\e_n\}$ is uniformly integrable. In the case here, by Prop.~\ref{prop-trace}(ii), for each $i \leq \ell$, $\{\e_n^{(i)}\}$ is uniformly integrable. Then, since $\e_n = \sum_{i = 1}^\ell \e_n^{(i)}$, as a simple consequence of the definition of uniform integrability, $\{\e_n\}$ is uniformly integrable as well. So the uniform integrability conditions (i) and (vi) are met. Similarly, the tightness condition (ii) listed in Section~\ref{sec-gtd1-cond} is met, because Prop.~\ref{prop-trace}(ii) implies the tightness of each $\{\e_n^{(i)}\}$, $i \leq \ell$, and as a result, both $\{\e_n\}$ and the joint trace process $\{(\e_n, \e_n^{(1)}, \ldots, \e_n^{(\ell)})\}$ are also tight. 

What remain to be verified are the averaging conditions (iv) and (vii) listed in Section~\ref{sec-gtd1-cond}, as well as Lemma~\ref{lem-conv-mean-cond1}(iii), which corresponds to part of the averaging conditions needed for analyzing the $\theta$-iterates at the fast time-scale. These conditions are also satisfied, thanks to the properties of the $\ell$ concurrent state-trace processes. We prove this in the lemma below, which is an analogue of Lemma~\ref{lem-conv-mean-cond1}. In the lemma $\Z$ denotes the space of $Z_n$. 

{\samepage
\begin{lem} \label{lem-composite-conv-mean-cond}
In the above context, for each $\theta \in B_\theta$, $x \in B_x$, the following hold:\vspace*{-3pt}
\begin{itemize}
\item[\rm (i)] For each compact set $D \subset \Z$,
$$ \textstyle{ \lim_{m, n \to \infty} \frac{1}{m} \sum_{i=n}^{n+m-1} \E_n \big[ k(\theta, x, Z_i) - \bar k(\theta, x) \big] \I(Z_n \in D) = 0} \ \ \ \text{in mean}.$$
\item[\rm (ii)] For each compact set $D \subset \Z$,
$$ \textstyle{ \lim_{m, n \to \infty} \frac{1}{m} \sum_{i=n}^{n+m-1} \E_n \big[ g(\theta, \bar x(\theta), Z_i) - \bar g(\theta) \big] \I(Z_n \in D) = 0} \ \ \ \text{in mean}.$$
\item[\rm (iii)]If $\{\alpha_n\}$ and $\{\beta_n\}$ satisfy Assumption~\ref{cond-large-stepsize}, then
$$ \textstyle{ \lim_{m, n \to \infty} \frac{1}{m} \sum_{i=n}^{n+m-1} \E_n \big[ (\alpha_i/\beta_i) \cdot g(\theta, x, Z_i) \big] = 0} \ \ \ \text{in mean}.$$
For any positive sequence $c_j \to 0$ as $j \to \infty$,
$$ \textstyle{ \lim_{m, n, j \to \infty} \frac{1}{m} \sum_{i=n}^{n+m-1} \E_n \big[ c_j \cdot g(\theta, x, Z_i) \big] = 0} \ \ \ \text{in mean}.$$
\end{itemize}
\end{lem}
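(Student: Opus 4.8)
The plan is to treat the joint process $\{(S_n, y_n, \e_n^{(1)}, \ldots, \e_n^{(\ell)})\}$ as a single state-trace process whose trace variable is the stacked vector $(\e_n^{(1)}, \ldots, \e_n^{(\ell)}) \in \re^{d\ell}$. Once this process is shown to inherit the ergodicity and trace properties collected in Section~\ref{sec-property-stproc}, the three statements follow by exactly the argument used for Lemma~\ref{lem-conv-mean-cond1}. Indeed, the functions $k$ and $g$ of (\ref{eq-gtd1-gk}) depend on the traces only through the total trace $\e_n = \sum_{i=1}^\ell \e_n^{(i)}$, which is a linear---hence Lipschitz---function of the joint trace, so $k(\theta, x, \cdot)$, $g(\theta, x, \cdot)$, and $g(\theta, \bar x(\theta), \cdot)$ are Lipschitz continuous in the joint trace variable. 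Moreover, by (\ref{eq-cp-calc2}) and the computation of $\bar g(\theta)$ preceding this lemma, their expectations under the stationary joint process $\tilde\zeta$ are precisely $\bar k(\theta, x)$ and $\bar g(\theta)$. Thus, after removing the conditional expectation $\E_n$ by Jensen's inequality (as in the proof of Lemma~\ref{lem-conv-mean-cond1}), parts (i)--(ii) will follow from the joint-process version of Prop.~\ref{prop-trace-averaging}(i), and part (iii) from that of Prop.~\ref{prop-trace-averaging}(ii).

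What remains is to establish the joint-process analogues of Theorem~\ref{thm-erg} and Prop.~\ref{prop-trace}, on which the proof of Prop.~\ref{prop-trace-averaging} relies. These follow blockwise. The joint trace evolves one block at a time, with the $i$-th block updated by $\e_n^{(i)} = \lambda_n^{(i)} \gma_n \rho_{n-1} \e_{n-1}^{(i)} + \phi(S_n)\I(S_n \in \S_i)$ and driven by the shared process $\{(S_n, y_n)\}$. For the forgetting property of Prop.~\ref{prop-trace}(i), running two joint traces along the same state/memory trajectory, each block obeys $\E[\|\e_n^{(i)} - \hat\e_n^{(i)}\|] \leq \|\e_0^{(i)} - \hat\e_0^{(i)}\| \cdot \1^\tr (P\Gamma)^n \1$ by Prop.~\ref{prop-trace}(i), and summing over the finitely many $i$ gives the corresponding bound for the joint trace. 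For uniform integrability, Prop.~\ref{prop-trace}(ii) gives that each $\{\e_n^{(i)}\}$ is uniformly integrable, and a finite tuple of uniformly integrable families is uniformly integrable. Finally, the joint chain is weak Feller and, by the conditional independence of the blocks given $\{(S_n, y_n)\}$, has the unique invariant measure $\tilde\zeta$ identified before the lemma; since $\sup_{n} \E[\|(\e_n^{(1)}, \ldots, \e_n^{(\ell)})\|] < \infty$ makes its occupation measures tight, the ergodicity arguments behind Theorem~\ref{thm-erg} apply, yielding the $\Pr_x$-a.s.\ weak convergence of occupation measures and the in-mean/a.s.\ ergodic averaging of Theorem~\ref{thm-erg}(ii) for functions Lipschitz in the joint trace.

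The one point that is not a pure corollary of the individual-process results is this last ergodic-averaging property, because the joint trace does not literally satisfy the single-$\lambda$, single-feature recursion (\ref{eq-gtd-e}): it carries a block-diagonal scaling and block-dependent feature injections $\phi(S_n)\I(S_n \in \S_i)$. I expect this to be the main (though mild) obstacle. For the functions actually appearing in the lemma it can be bypassed without invoking any new ergodic theory, by decomposing $k(\theta, x, Z_n) - \bar k(\theta, x) = \sum_{i=1}^\ell \big(\e_n^{(i)} \bar\delta_n(v_\theta) - \E_{\zeta^{(i)}}[\e_0^{(i)} \bar\delta_0(v_\theta)]\big) - \big(\phi(S_n)\phi(S_n)^\tr - \Phi^\tr \Xi \Phi\big) x$, and similarly for $g$. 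In the restart-and-forget argument of Prop.~\ref{prop-trace-averaging}(i), the ergodic average of the restarted process then splits into the individual ergodic averages of the blocks (each covered by Theorem~\ref{thm-erg}(ii) applied to the corresponding state-trace process, the summand being Lipschitz in its own block) plus the bounded state-only average $\tfrac{1}{m}\sum \phi(S_i)\phi(S_i)^\tr \to \Phi^\tr\Xi\Phi$, while the forgetting step is handled blockwise as above. Collecting these pieces delivers the three convergence-in-mean statements.
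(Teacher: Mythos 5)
Your proposal is correct, and its working core ends up coinciding with the paper's proof, but the packaging differs enough to be worth noting. The paper never constructs any ergodic theory for the joint process $\{(S_n, y_n, \e_n^{(1)}, \ldots, \e_n^{(\ell)})\}$: it writes $k(\theta,x,Z_n) = \sum_{j=1}^\ell k^o(\theta,x,Z_n^{(j)})$ (with $k^o$ absorbing $\tfrac{1}{\ell}\phi(s)\phi(s)^\tr x$ into each block) and $g(\theta,x,Z_n) = \sum_{j=1}^\ell g(\theta,x,Z_n^{(j)})$, applies Prop.~\ref{prop-trace-averaging} \emph{as a black box} to each of the $\ell$ individual state-trace processes, and glues the per-block conclusions together using the elementary observation that if $D^j$ is the projection of the compact set $D$ onto the $j$-th block's coordinates, then $\I(Z_n \in D) \leq \I(Z_n^{(j)} \in D^j)$, so the per-block indicator statements imply the joint-indicator statement. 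You instead first try to lift Theorem~\ref{thm-erg} and Prop.~\ref{prop-trace} to the stacked trace in $\re^{d\ell}$; you correctly identify the one genuine obstruction — the joint trace does not satisfy a recursion of the form (\ref{eq-gtd-e}), so Theorem~\ref{thm-erg}(ii) is not available off the shelf — and your bypass (decompose the summand blockwise inside the restart-and-forget argument, invoke Theorem~\ref{thm-erg}(ii) per block for the restarted averages and Prop.~\ref{prop-trace}(i) per block for the forgetting step) is exactly the decomposition the paper uses, just deployed inside the proof of Prop.~\ref{prop-trace-averaging} rather than after it. Both routes are valid; the paper's is more economical because the projection-of-compact-sets trick lets it avoid reopening any proofs, while your joint-process framing buys a cleaner statement of why $\bar k$ and $\bar g$ are the right limits (expectations under $\tilde\zeta$) at the cost of having to be careful about which pieces of the single-process machinery actually transfer — a care you do exercise, so there is no gap.
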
}

\begin{proof}
For $j \leq \ell$, let $Z_n^{(j)} = (S_n, y_n, \e_n^{(j)}, S_{n+1})$ (treat the memory state $y_n$ as a dummy variable if the $j$-th trace process is generated with state-dependent $\lambda$). Let $D^j$ be the projection of the compact set $D$ on the space of $Z_n^{(j)}$; note that $D^j$ is also compact.
Write $k(\theta, x, Z_n) = \sum_{j = 1}^\ell k^o(\theta, x, Z_n^{(j)})$, where the function $k^o$ is given by 
$k^o\big(\theta, x,  (s, y, \e, s') \big) = \e \, \bar{\delta}(s, s', v_{\theta}) - \tfrac{1}{\ell} \phi(s) \phi(s)^\tr x$.
For each $(\theta, x)$, let $\bar k^o_j(\theta, x) = \E_{\zeta^{(j)}} [ k^o(\theta, x, Z_0^{(j)}) ]$, 
and since the function $k^o(\theta, x, \cdot)$ is Lipschitz continuous in the trace variable $\e$, applying Prop.~\ref{prop-trace-averaging}(i) to each of the $\ell$ state-trace processes, we have that for each $j \leq \ell$ and the compact set $D^j$, 
\begin{equation} \label{eq-prfcp1}
     \textstyle{ \lim_{m, n \to \infty} \frac{1}{m} \sum_{i=n}^{n+m-1} \big( k^o(\theta, x, Z_i^{(j)}) -  \bar k^o_j(\theta, x)  \big) \cdot \I(Z_n^{(j)} \in D^j) = 0} \ \ \ \text{in mean}.
\end{equation}
Combining the relation (\ref{eq-prfcp1}) for all $j \leq \ell$, and using also the fact that $D^j$'s are the projections of the set $D$, we obtain
\begin{equation} \label{eq-prfcp2}
     \textstyle{ \lim_{m, n \to \infty} \frac{1}{m} \sum_{i=n}^{n+m-1} \sum_{j =1}^\ell \big( k^o(\theta, x, Z_i^{(j)}) -  \bar k^o_j(\theta, x)  \big) \cdot \I(Z_n \in D) = 0} \ \ \ \text{in mean}.
\end{equation}
Since $\sum_{j =1}^\ell k^o(\theta, x, Z_i^{(j)}) = k(\theta, z, Z_i)$ and $\sum_{j =1}^\ell \bar k^o_j(\theta, x) = \bar k(\theta, x)$ as we calculated earlier (cf.\ (\ref{eq-cp-calc1})-(\ref{eq-cp-calc2})), we see that (\ref{eq-prfcp2}) is just the desired relation in the part (i) of the lemma except for the conditional expectation $\E_n$. The part (i) then follows from (\ref{eq-prfcp2}) by Jensen's inequality.

The part (ii) follows from a similar argument, by writing $g(\theta, \bar x(\theta), Z_n) = \sum_{j = 1}^\ell g(\theta, \bar x(\theta), Z_n^{(j)})$ and using the calculations given in (\ref{eq-cp-calc3})-(\ref{eq-cp-calc4}). 
Similarly, for the part (iii), we write $g(\theta, x, Z_n) = \sum_{j = 1}^\ell g(\theta, x, Z_n^{(j)})$ and apply Prop.~\ref{prop-trace-averaging}(ii) to the function $g(\theta, x, \cdot)$ and each of the $\ell$ state-trace processes, and we then add up the results to obtain the part (iii).
\end{proof}

We have now verified all the conditions required in the convergence proofs, so we can conclude that the convergence theorems given earlier for GTDa hold in the present context as well.
Regarding GTDb, if we employ the preceding composite scheme of setting $\lambda$, then with the ``total trace'' being $\e_n = \sum_{i=1}^\ell \e_n^{(i)}$,
the algorithm becomes 
\begin{align}
\theta_{n+1} & =  \Pi_{B_\theta} \Big( \theta_n + \alpha_n \Big[ \e_n \delta_n(v_{\theta_n}) -   \textstyle{ \sum_{i=1}^\ell}  \rho_n \big(1 - \lambda_{n+1}^{(i)} \big) \,\gma_{n+1}  \phi(S_{n+1}) \cdot (\e_n^{(i)})^\tr x_n  \Big] \Big), \label{eq-gtd2c-th}\\
x_{n+1} & = \Pi_{B_x} \Big( x_n + \beta_n \big(\e_n \delta_n(v_{\theta_n}) - \phi(S_n) \phi(S_n)^\tr x_n\big) \Big). \label{eq-gtd2c-x}
\end{align}
The analysis of this algorithm follows the same line of argument given above, apart from some non-essential differences in calculations and definitions.
We now summarize the results for both GTD algorithms in the following theorem.

\begin{thm} \label{thm-composite-gtd}
Consider the two-time-scale GTDa algorithm (\ref{eq-gtd1c-th})-(\ref{eq-gtd1c-x}) or GTDb algorithm (\ref{eq-gtd2c-th})-(\ref{eq-gtd2c-x}). Assume the following:\vspace*{-3pt}
\begin{itemize}
\item[\rm (i)] Condition~\ref{cond-pol} holds.
\item[\rm (ii)] For each $i \leq \ell$, the $i$-th trace process $\{\e_n^{(i)}\}$ is generated by (\ref{eq-decomp-e}) either with state-dependent $\lambda$-parameters or with history-dependent $\lambda$-parameters of the form (\ref{eq-cp-histlambda}), where the memory states $\{y_n\}$ and the function $\lambda^{(i)}(\cdot)$ satisfy Conditions~\ref{cond-mem} and~\ref{cond-lambda}, respectively.
\item[\rm (iii)] The constraint set $B_x$ satisfies the condition (\ref{eq-cond-Bx0}) for the Bellman operator $\Tl$ defined by (\ref{eq-Tcomp}).
\item[\rm (iv)] The initial $x_0 \in \sp \{\phi(\S)\}$ and for each $i \leq \ell$, the initial $\e_0^{(i)} \in \sp \{\phi(\S)\}$.\vspace*{-3pt} 
\end{itemize}
Then for diminishing stepsizes $\{\alpha_n\}$, $\{\beta_n\}$ that satisfy Assumption~\ref{cond-large-stepsize}, the conclusions of Theorem~\ref{thm-gtd1-wk-dim} hold, and for constant stepsizes $\alpha_n = \alpha, \beta_n= \beta$ with $\alpha < < \beta$, the conclusions of Theorems~\ref{thm-gtd1-wk-constant} and~\ref{thm-gtd1-average} hold, where the set $\Theta_{\text{\rm opt}}$ in the theorems is now given by (\ref{eq-cp-opt}) for the Bellman operator $\Tl$ in (\ref{eq-Tcomp}).
\end{thm}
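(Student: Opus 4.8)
The plan is to reduce the composite-scheme case to the single-trace GTDa/GTDb analysis already carried out in Sections~\ref{sec-gtd1}--\ref{sec-gtd2}, by working with the ``total trace'' $\e_n = \sum_{i=1}^\ell \e_n^{(i)}$ and verifying that the modified algorithms (\ref{eq-gtd1c-th})--(\ref{eq-gtd1c-x}) and (\ref{eq-gtd2c-th})--(\ref{eq-gtd2c-x}) satisfy the very same conditions under which Theorems~\ref{thm-gtd1-wk-dim}--\ref{thm-gtd1-average} were proved. The first step is to identify the associated mean ODEs. Here the key structural fact is that, conditioned on the trajectory $\{(S_n,y_n)\}$, the $\ell$ trace recursions in (\ref{eq-decomp-e}) evolve independently of one another, so the joint process $\{(S_n,y_n,\e_n^{(1)},\ldots,\e_n^{(\ell)})\}$ inherits a unique invariant probability measure $\tilde\zeta$ whose marginal on each $(S,y,\e^{(i)})$-coordinate is the measure $\zeta^{(i)}$ supplied by Theorem~\ref{thm-erg}. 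This lets me compute every stationary expectation appearing in the algorithm coordinate-by-coordinate.

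Second, I would compute $\bar k(\theta,x)=\E_{\tilde\zeta}[\,k(\theta,x,Z_0)\,]$ and $\bar g(\theta)$ by applying Prop.~\ref{prop-mean-fn} separately to each of the $\ell$ state-trace processes and then summing. Since the $i$-th process carries the feature matrix $\Phi^{(i)}$ (the rows of $\Phi$ restricted to $\S_i$), the per-process expectations assemble, via the piecewise definition (\ref{eq-Tcomp}) of $\Tl$, into exactly the same affine expressions $\bar k(\theta,x)=\Phi^\tr\Xi(\Tl v_\theta-v_\theta)-\Phi^\tr\Xi\,\Phi\,x$ and $\bar g(\theta)=-\nabla J(\theta)$ obtained in Section~\ref{sec-gtd1-odes}, the only change being that $\Tl$ is now the composite operator. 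Consequently the fast- and slow-time-scale ODEs coincide in form with (\ref{eq-gtd1-odefast}) and (\ref{eq-gtd1-odeslow}); under condition (\ref{eq-cond-Bx0}) (now read for the composite $\Tl$) and with the initial traces in $\sp\{\phi(\S)\}$, Lemma~\ref{lem-Bx} characterizes the fast-ODE limit set and Lemma~\ref{lem-gtd1-odeslow} gives the slow-ODE limit set as $\Theta_{\text{opt}}$ in (\ref{eq-cp-opt}).

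Third, I would verify the conditions of Section~\ref{sec-gtd1-cond}. The functions $g,k$ are literally those of (\ref{eq-gtd1-gk}) and depend on the state transition and the total trace only, so the continuity condition (iii) is immediate. For uniform integrability (conditions (i),(vi)) and tightness (ii), Prop.~\ref{prop-trace}(ii) gives uniform integrability of each $\{\e_n^{(i)}\}$, and since a finite sum of uniformly integrable (respectively tight) families is again uniformly integrable (respectively tight), the total trace $\{\e_n\}$ and the joint trace process have the required properties. The averaging conditions (iv), (vii) and the auxiliary Lemma~\ref{lem-conv-mean-cond1}(iii) are precisely Lemma~\ref{lem-composite-conv-mean-cond}. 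With all conditions in hand, the convergence proofs of Sections~\ref{sec-gtd1conv-dim}--\ref{sec-gtd1conv-constant} apply verbatim to both algorithms, yielding the conclusions of Theorems~\ref{thm-gtd1-wk-dim}, \ref{thm-gtd1-wk-constant} and~\ref{thm-gtd1-average} with $\Theta_{\text{opt}}$ given by (\ref{eq-cp-opt}); the GTDb case differs only in the explicit form of $g$ and is handled identically.

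The main obstacle I expect is not in any single estimate but in the averaging condition, i.e.\ in establishing Lemma~\ref{lem-composite-conv-mean-cond} cleanly over the \emph{joint} (coupled) trace space. The delicate point is that the empirical-average condition must hold with the indicator $\I(Z_n\in D)$ taken over a compact subset $D$ of the joint space $\Z$, whereas Prop.~\ref{prop-trace-averaging}(i) is stated per single state-trace process. The resolution is to write $k(\theta,x,Z_n)=\sum_{j=1}^\ell k^o(\theta,x,Z_n^{(j)})$ for a Lipschitz-in-trace component $k^o$, replace $D$ by its coordinate projections $D^j$ (each still compact), apply Prop.~\ref{prop-trace-averaging}(i) to each of the $\ell$ processes with the indicator $\I(Z_n^{(j)}\in D^j)$, and then recombine using $\I(Z_n\in D)\le\I(Z_n^{(j)}\in D^j)$; the Lipschitz continuity of each component in its own trace variable (inherited from Condition~\ref{cond-lambda}(i) in the history-dependent case) is exactly what makes Prop.~\ref{prop-trace-averaging}(i) applicable to each concurrent process.
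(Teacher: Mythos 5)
Your proposal is correct and follows essentially the same route as the paper's own proof: the same identification of the mean ODEs by applying Prop.~\ref{prop-mean-fn} per trace process and summing via the piecewise definition (\ref{eq-Tcomp}) of $\Tl$, the same reduction of uniform integrability and tightness to the single-process results of Prop.~\ref{prop-trace}(ii), and the same proof of Lemma~\ref{lem-composite-conv-mean-cond} by decomposing $k$ and $g$ into per-process components, projecting the compact set $D$ onto each coordinate, and recombining (your observation that $\I(Z_n\in D)\le\I(Z_n^{(j)}\in D^j)$ justifies the recombination is exactly the step the paper invokes when it says the $D^j$ are projections of $D$). Nothing further is needed.
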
 
%\smallskip

\begin{rem} \rm \label{rmk-biasvrt-cp}
Finally, we note that one can also apply the ``robustification'' idea discussed in the previous Section~\ref{sec-bias-vrt} to derive biased variants for the GTD algorithms that employ the composite scheme of setting $\lambda$. Convergence properties of these ``robustified'' variants can be analyzed similarly, by combining the arguments given in the present and previous subsections.
\end{rem}

\section{Convergence Analyses II} \label{sec-4}

This section continues our analyses of gradient-based TD algorithms using the weak convergence method. 
Section~\ref{sec-mdtds} is about the Mirror-Descent TD algorithms, and Section~\ref{sec-gtds} is about the single-time-scale GTDa algorithm and its biased variant, which use a minimax approach to solve the minimization problem for off-policy policy evaluation. 
Section~\ref{sec-gtda-revisit} revisits the two-time-scale GTDa algorithm and its biased variant, and relaxes the conditions on their constraint set $B_x$ by using the minimax problem formulation and related results derived in Section~\ref{sec-gtds}.
As before, we consider constrained algorithms with slowly diminishing or constant stepsize. 
For the objective function, we will treat the case where it has an additional regularizer that is smooth and convex, to demonstrate that the convergence analysis remains essentially the same. 

We will use the results of the previous sections to show that the average dynamics of the algorithms are characterized by their associated mean ODEs. Most of the efforts in our convergence analysis will be spend on the study of solution properties of these mean ODEs. In particular, for the Mirror-Descent TD algorithms, we derive a suitable form of the constraint sets to ensure desired convergence properties. For the single-time-scale GTDa algorithm, we use a general theory on differential inclusion with maximal monotone mappings to analyze the limit set of its mean ODE, and we also show that its ``robustified'' biased variant solves minimax problems that approximate the one solved by GTDa.

As before, convergence theorems will be stated close to where they are finally proved. For quick access to the results of this section, here is the list:
\vspace*{-3pt}
\begin{itemize}
\item Mirror-Descent GTD and Mirror-Descent TD: Theorems~\ref{thm-mdgtd-wk} and~\ref{thm-mdtd}, respectively;
\item single-time-scale GTDa and its biased variant: Theorems~\ref{thm-gtds} and~\ref{thm-gtds-bvrt}, respectively;
\item an improved result for two-time-scale GTDa and its biased variant: Theorem~\ref{thm-gtda-relaxBx}.
\end{itemize}

\subsection{Mirror-Descent TD Algorithms} \label{sec-mdtds}
In this subsection we consider the mirror-descent variant of the TD algorithms. As mentioned in the introduction, combining the mirror-descent idea \cite{Nem83} with TD learning was proposed in \cite{pmtd2} (see also \cite{pmrl}). We shall call the variant algorithms MD-GTD for GTD algorithms, and MD-TD for ordinary TD algorithms. 

In what follows, let $\psi^*: \re^d \to \re$ be a differentiable convex function such that $\psi^*(\theta^*)$ grows to $+\infty$ at a faster than linear rate, as $\| \theta^*\| \to +\infty$. Such a function is said to be co-finite and has the property that $\lim_{i \to \infty} \| \nabla \psi^*(\theta^*_i)  \| = + \infty$ for every sequence $\{\theta^*_i\}$ such that $\lim_{i \to \infty} \| \theta^*_i  \| = + \infty$ \cite[Lemma 26.7]{Roc70}. 
The $\theta^*$-space on which $\psi^*$ is defined is dual to the $\theta$-space we have been working on thus far.%footnote starts
\footnote{The notation $\psi^*$ may look cumbersome with its superscript. We use it because dual to the convex function $\psi^*$ is its conjugate on the $\theta$-space, which is usually denoted by the same symbol but without the superscript $^*$, although our analysis will not involve this conjugate function.}
%footnote ends 
One can think of a point $\theta^*$ being a ``mirror image'' of the point $\theta = \nabla \psi^*(\theta^*)$. 
The co-finiteness of $\psi^*$ implies that for any $\theta$, its pre-image under $\nabla \psi^*$ is non-empty: $(\nabla \psi^*)^{-1} (\theta) \not= \emptyset$.%footnote starts
\footnote{Because $(\nabla \psi^*)^{-1} (\theta)$ is precisely the nonempty set of optimal solutions to $\inf_{\theta^*} \big\{ \psi^*(\theta^*) - \langle \theta, \theta^* \rangle \big\}$, whose objective function is not only convex and finite everywhere but also coercive due to the co-finiteness of $\psi^*$.}
%footnote ends
Thus every $\theta \in \re^d$ is ``accessible'' from the $\theta^*$-space via its mirror images.

\subsubsection{MD-GTD} \label{sec-mdgtd}

Let $J_p(\theta) : = J(\theta) + p(\theta)$, where $J(\theta)$ is the projected-Bellman-error objective function as before, and $p(\theta): \re^d \to \re$ is a differentiable convex function that serves as a regularizer. In the convergence analysis given later, we will require that $\inf_\theta J_p(\theta)$ has a unique optimal solution, which can be ensured by a strictly convex $p(\cdot)$, for instance. 

Note that to minimize $J_p$ instead of $J$, one can modify the GTD algorithms by simply subtracting the gradient term $\alpha_n \nabla p(\theta_n)$ in the $\theta$-iteration. In the case of GTDa, for example, change (\ref{eq-gtd1a-th}) to:
\begin{equation}
    \theta_{n+1}  = \Pi_{D_{\theta}} \Big( \theta_n + \alpha_n \, \rho_n \big(\phi(S_n) - \gma_{n+1} \phi(S_{n+1}) \big) \cdot  \e_n^\tr x_n   - \alpha_n \nabla p(\theta_n) \Big)  \label{eq-gtd1p-th}.
\end{equation}    

The two-time-scale MD-GTDa algorithm we consider is given by
\begin{align}
 x_{n+1}  & = \Pi_{B_x} \Big( x_n + \beta_n \big(\e_n \delta_n(v_{\theta_n}) - \phi(S_n) \phi(S_n)^\tr x_n\big) \Big), \label{eq-mdgtd1-x} \\
    \theta_{n+1}^* & = \Pi_{D_{\theta^*}} \Big( \theta_n^* + \alpha_n \,  \rho_n \big(\phi(S_n) - \gma_{n+1} \phi(S_{n+1}) \big)  \cdot \e_n^\tr x_n  - \alpha_n \nabla p(\theta_n) \Big)  \label{eq-mdgtd1-thstar},\\
       \theta_{n+1} & = \nabla \psi^*(\theta_{n+1}^*) \label{eq-mdgtd1-th},
\end{align}
with $\theta_0 = \nabla \psi^*(\theta^*_0)$ for some given initial $\theta^*_0$.  
This algorithm calculates $x_n$ like GTDa, at the fast time-scale determined by $\{\beta_n\}$. At the slow time-scale determined by $\{\alpha_n\}$, the algorithm obtains $\theta_n$ indirectly from its ``mirror image'' $\theta_n^*$, which is updated according to (\ref{eq-mdgtd1-thstar}), a formula that would be identical to the slow-time-scale iteration (\ref{eq-gtd1p-th}) in GTDa if we remove all the superscripts $^*$ in it. The set $D_{\theta^*} \subset \re^d$ is some large enough compact set to constrain the $\theta^*$-iterates. We assume for now that it is a large convex set with a smooth boundary; a specific choice of $D_{\theta^*}$ that can ensure the desired convergence will be discussed later.
Note that if $\{\theta_n^*\}$ is bounded, then so is $\{\theta_n\}$.%footnote starts
\footnote{Since any convex differentiable function is continuously differentiable \cite[Corollary 25.5.1]{Roc70}, $\nabla \psi^*$ is continuous on $\re^d$ and therefore maps bounded sets to bounded sets.\label{footnote-convexdiff}}
%footnote ends

The MD-GTDb algorithm uses the same formulae (\ref{eq-mdgtd1-x}) and (\ref{eq-mdgtd1-th}) to compute $x_{n+1}$ and $\theta_{n+1}$. To compute $\theta^*_{n+1}$, it uses a formula that resembles the slow-time-scale iteration (\ref{eq-gtd2-th}) in GTDb:
\begin{equation}
   \theta^*_{n+1} = \Pi_{D_{\theta^*}} \Big( \theta_n^* + \alpha_n  \big(\e_n \delta_n(v_{\theta_n}) -  \rho_n (1 - \lambda_{n+1})  \,\gma_{n+1}  \phi(S_{n+1}) \cdot \e_n^\tr x_n  \big) - \alpha_n \nabla p(\theta_n) \Big), \label{eq-mdgtd2-thstar}
\end{equation}
where $\theta_n = \nabla \psi^*(\theta_n^*)$.
The convergence analyses of the two MD-GTD algorithms are essentially the same. We will use MD-GTDa to show how we carry out this analysis, because it is simpler notation-wise. (One can also apply the mirror-descent idea on the $x$-iteration, working with the $x^*$-iterates in addition to the $\theta^*$-iterates. We shall not consider these fancier algorithms here, although the line of analysis would be similar.)

\bigskip
\noindent {\bf Associated mean ODEs}
\smallskip

Let us focus on the iterates $\{(\theta^*_n, x_n)\}$ and discuss the mean ODEs associated with them, for the MD-GTDa algorithm (\ref{eq-mdgtd1-x})-(\ref{eq-mdgtd1-th}). Using the functions $k(\cdot), g(\cdot)$ defined earlier for GTDa in (\ref{eq-gtd1-gk}),
we write (\ref{eq-mdgtd1-x})-(\ref{eq-mdgtd1-th}) equivalently as follows:
with $\theta_n = \nabla \psi^*(\theta_n^*)$,
\begin{align}
  \theta_{n+1}^*  & = \Pi_{D_{\theta^*}} \big( \theta^*_n + \alpha_n \, g(\theta_n, x_n, Z_n) - \alpha_n \nabla p(\theta_n) \big), \label{eq-mdgtd1-thstar-alt} \\
  x_{n+1}  & =   \Pi_{B_x} \Big( x_n + \beta_n \big( k( \theta_n, x_n, Z_n) + \e_n \omega_{n+1} \big) \Big). \label{eq-mdgtd1-xalt}
\end{align}  
Consider first the fast time-scale determined by $\{\beta_n\}$. 
Based on the reasoning and calculation given in Sections~\ref{sec-gtd1}, the desired mean ODE for this time-scale is
\begin{equation} \label{eq-mdgtd1-odefast}
\qquad \left( \! \begin{array}{c} \dot \theta^*(t) \\ \dot x(t) \end{array} \!\right) 
 =  \left( \! \begin{array}{l} 0 \\
 \bar k \big(\nabla \psi^*(\theta^*(t)), x(t) \big) + z(t)  \end{array} \!\right), \qquad \theta^*(0) \in D_{\theta^*}, \  z(t) \in - \N_{B_x}(x(t)),
\end{equation} 
where $z(t)$ is the boundary reflection term, and the function $\bar k(\theta, x)$ is as defined before in (\ref{eq-gtd1-bk}). Note that in this ODE, the function $\bar k(\nabla \psi^*(\cdot), \cdot)$ is continuous, since the gradient mapping $\nabla \psi^*$ is continuous (cf.\ Footnote~\ref{footnote-convexdiff}).
As before, for each $\theta$, let $x_\theta$ be the unique solution to the linear system of equations, $\bar k(\theta, x) = 0, x \in \sp \{\phi(\S)\}$. 
For each $\theta^*$,  define 
$$\bar x(\theta^*) : = x_\theta \ \ \ \text{for} \  \theta = \nabla \psi^*(\theta^*).$$ 
Then, as shown in Sections~\ref{sec-gtd1}, by making the radius of $B_x$ sufficiently large, it can be ensured that for any initial $x(0) \in \sp \{\phi(\S)\}$, the solution of the ODE (\ref{eq-mdgtd1-odefast}) has the desired behavior that as $t \to \infty$, $x(t)$ converges to $x_\theta$ for $\theta = \nabla \psi^*(\theta^*(0))$, or equivalently, $x(t) \to \bar x(\theta^*(0))$. Moreover, it can be ensured that the limit set of (\ref{eq-mdgtd1-odefast}) is $\{(\theta^*, \bar x(\theta^*) \mid \theta^* \in D_{\theta^*} \}$, similar to Lemma~\ref{lem-Bx} and with the same proof.

Let us defer the precise condition on the radius of $B_x$ and proceed to consider the desired mean ODE for the slow time-scale.
For fixed $\theta^*$, with $\theta = \nabla \psi^*(\theta^*)$, we have
\begin{equation} 
  \E_\zeta \big[ \,g\big(\theta, \bar x(\theta^*), Z_0\big) \big] =  \E_\zeta \big[ \,g\big(\theta, x_\theta, Z_0\big) \big] =  - \nabla J \big(\theta \big),
\end{equation}
as calculated earlier in (\ref{eq-gtd1-bg}). Thus, with $\bar g(\theta) = - \nabla J(\theta)$ as before, define 
$$\bar g_p(\theta) = \bar g(\theta) - \nabla p(\theta) = - \nabla J(\theta) - \nabla p(\theta) = - \nabla J_p(\theta), $$
and we see that the desired mean ODE for the slow time-scale is given by
\begin{equation} \label{eq-mdgtd1-odeslow}
  \dot \theta^*(t) = (\bar g_p \circ \nabla \psi^*) \big(\theta^*(t) \big) + z(t),  \quad z(t) \in - \N_{D_{\theta^*}}\big(\theta^*(t)\big),
\end{equation}
where $\N_{D_{\theta^*}}\big(\theta^*(t)\big)$ denotes the normal cone of $D_{\theta^*}$ at the point $\theta^*(t)$, and $z(t)$ is the boundary reflection term. 

Next we examine the solution properties of this ODE (\ref{eq-mdgtd1-odeslow}). We would like it to yield solutions that lead to points $\theta^*$ that are ``mirror images'' of those points $\theta$ with small values of $J_p(\theta)$. Based on the idea of mirror-descent methods \cite{Nem83}, we know this is the case when there is no constraint set $D_{\theta^*}$. So we will aim at finding a suitable choice of the constraint set $D_{\theta^*}$, so that it does not ``ruin'' the good property of mirror-descent.

\bigskip
\noindent {\bf Choice of the constraint set and the solution property of the corresponding ODE}
\smallskip

To study the solution property of the ODE~(\ref{eq-mdgtd1-odeslow}), let us pick an optimal solution $\theta_{\text{opt}}$ of the problem $\inf_{\theta} J(\theta)$, and consider a candidate Lyapunov function for the ODE, which is from the original idea behind mirror descent methods \cite[p.\ 87]{Nem83}:
\begin{equation} \label{eq-mdV}
   V(\theta^*) = \psi^*(\theta^*) - \langle \theta^*, \theta_{\text{opt}}\rangle.
\end{equation}   
The co-finiteness of $\psi^*$ ensures that $V$ is bounded below on $\re^d$.
Let $\theta^*(\cdot)$ be a solution of (\ref{eq-mdgtd1-odeslow}), and let $\theta(t) = \nabla \psi^*\big(\theta^*(t)\big)$. 
We have
\begin{align}
  \dot V\big(\theta^*(t)\big) & = \big\langle   \dot \theta^*(t), \, \nabla \psi^*(\theta^*(t)) - \theta_{\text{opt}} \big\rangle \notag \\
     & = \big\langle - \!\nabla J_p( \theta(t)) + z(t) , \,   \theta(t) - \theta_{\text{opt}} \big\rangle \notag \\
     & \leq J_p( \theta_{\text{opt}}) - J_p(\theta(t))  + \big\langle  z(t)  , \, \theta(t) - \theta_{\text{opt}} \big\rangle, \label{eq-md-dV}
\end{align}
where the last inequality follows from the inequality $J_p( \theta_{\text{opt}}) \geq J_p(\theta(t)) + \big\langle \nabla J_p \big( \theta(t) \big), \, \theta_{\text{opt}} - \theta(t)  \big\rangle$ implied by the convexity of the function $J_p$. 
Thus, when the boundary reflection term $z(t) = 0$, as is the case when $\theta^*(t)$ is in the interior of $D_{\theta^*}$, 
we have the desired relation that $\dot V\big(\theta^*(t)\big) < 0$ if $\theta(t)$ is not an optimal solution of $\inf_\theta J_p(\theta)$. 
We would like this relation to hold also when $\theta^*(t)$ is on the boundary of $D_{\theta^*}$, so that $V$ can serve as a Lyapunov function. 
This leads us to make a simple but natural choice of the constraint set $D_{\theta^*}$ as follows.

Let $D_{\theta^*}$ be a level set of $\psi^*$,
$D_{\theta^*} = \{ \theta^* \mid \psi^*(\theta^*) \leq \ell \}$, for some finite number $\ell > \inf_{\theta^*} \psi^*(\theta^*)$. 
Then the normal cone of $D_{\theta^*}$ at a boundary point $\theta^*$ is just $\N_{D_{\theta^*}}(\theta^*) = \big\{ a \nabla \psi^*(\theta^*) \mid a \geq 0 \big\}$. If $\theta^*(t)$ is on the boundary of $D_{\theta^*}$, the boundary reflection term $z(t) = - a \nabla \psi^*\big(\theta^*(t)\big) = - a \, \theta(t)$ for some $a \geq 0$, and the second term in (\ref{eq-md-dV}) becomes
$$  - a \big\langle \theta(t) , \,  \theta(t) - \theta_{\text{opt}}\big\rangle \leq - a \, \| \theta(t) \|_2 \cdot \big( \| \theta(t) \|_2 - \| \theta_{\text{opt}} \|_2 \big).$$
The upperbound on the r.h.s.\ is nonpositive if we make $D_{\theta^*}$ sufficiently large so that all its boundary points $\theta^*$ satisfy $\|\nabla \psi^*(\theta^*) \|_2 \geq \| \theta_{\text{opt}} \|_2$ (which is possible since $\|\nabla \psi^*(\theta^*) \|_2 \to + \infty$ as $\|\theta^* \|_2 \to + \infty$; cf.~the co-finiteness property of $\psi^*$ mentioned at the beginning). Then we will have the desired relation $\dot V\big(\theta^*(t)\big) \leq J_p( \theta_{\text{opt}}) - J_p(\theta(t))$ at all points $\theta^*(t)$.

We can now state precisely our assumptions on the constraint sets $D_{\theta^*}$ and $B_x$.
Define
\begin{equation} \label{eq-tildeThopt}
   \tilde \Theta_{\text{opt}} = \argmin_{\theta \in \re^d} J_p(\theta). 
\end{equation}   

\begin{assumption}[Conditions on the constraint sets of MD-GTD algorithms] \label{cond-mdgtd} \hfill
\vspace*{-3pt}
\begin{itemize}
\item[\rm (i)] The constraint set $D_{\theta^*} = \{ \theta^* \mid \psi^*(\theta^*) \leq \ell \}$, where $\ell$ is large enough so that for some optimal solution $ \theta_{\text{\rm opt}} \in \tilde \Theta_{\text{\rm opt}}$, $\|\nabla \psi^*(\theta^*) \|_2 \geq \| \theta_{\text{\rm opt}} \|_2$ for all points $\theta^*$ on the boundary of $D_{\theta^*}$.
\item[\rm (ii)] With $D_\theta = \big\{ \nabla \psi^*(\theta^*) \mid \theta^* \in D_{\theta^*} \big\}$, the constraint set $B_x$ satisfies
$B_x \supset \{ x_\theta \mid \theta \in D_\theta \}$.%footnote starts
\footnote{As before, a sufficient condition is that $\text{\rm radius}(B_x) \geq \textstyle{\sup_{\theta \in D_\theta }} \big\| \Phi^\tr \Xi \, (\Tl v_\theta - v_\theta) \big\|_2 /c$, where $c > 0$ is the smallest nonzero eigenvalue of the matrix $\Phi^\tr \Xi \,\Phi$.}
%footnote ends
\end{itemize}
\end{assumption}

\smallskip
Note that in the above, $\theta_{\text{opt}} \in D_\theta$ always (hence $D_\theta \cap \tilde \Theta_{\text{\rm opt}} \not=\emptyset$; i.e., $D_{\theta^*}$ includes at least one ``mirror image'' of an optimal $\theta$).
This is a generic property of the function $\psi^*$, as the next lemma shows.

\begin{lem} \label{lem-Dth-inclusion}
Given $\bar \theta$, let $D_{\theta^*} = \{ \theta^* \mid \psi^*(\theta^*) \leq \ell \}$ where $\ell$ is such that $\|\nabla \psi^*(\theta^*) \|_2 \geq \| \bar \theta \|_2$ for all points $\theta^*$ on the boundary of $D_{\theta^*}$. Then there exists at least one point $\bar \theta^* \in D_{\theta^*}$ with $\nabla \psi^*(\bar \theta^*) = \bar \theta$. In particular, in Assumption~\ref{cond-mdgtd}, $\theta_{\text{\rm opt}} \in D_\theta$.
\end{lem}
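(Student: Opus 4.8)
The plan is to exhibit the desired point $\bar\theta^*$ as the minimizer of an auxiliary convex program over the constraint set and then to read off $\nabla\psi^*(\bar\theta^*) = \bar\theta$ from its first-order optimality condition. Concretely, I would introduce
\[
 g(\theta^*) := \psi^*(\theta^*) - \langle \bar\theta, \theta^* \rangle,
\]
whose unconstrained minimizers are exactly the set $(\nabla\psi^*)^{-1}(\bar\theta)$. First I would observe that $D_{\theta^*} = \{\theta^* \mid \psi^*(\theta^*) \le \ell\}$ is a nonempty compact convex set: it is a sublevel set of the co-finite (hence coercive) convex function $\psi^*$, and it is nonempty since $\ell > \inf_{\theta^*}\psi^*(\theta^*)$. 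As $g$ is continuous, it attains its minimum over $D_{\theta^*}$ at some $\bar\theta^* \in D_{\theta^*}$; this is my candidate, and since it already lies in $D_{\theta^*}$ by construction, only the identity $\nabla\psi^*(\bar\theta^*) = \bar\theta$ remains to be established.

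The substantive step is the optimality analysis. Because $g$ is convex and $D_{\theta^*}$ is convex, $\bar\theta^*$ minimizes $g$ over $D_{\theta^*}$ if and only if $-\nabla g(\bar\theta^*) = \bar\theta - \nabla\psi^*(\bar\theta^*) \in \N_{D_{\theta^*}}(\bar\theta^*)$. When $\bar\theta^*$ lies in the interior of $D_{\theta^*}$ the normal cone is $\{0\}$, giving $\nabla\psi^*(\bar\theta^*) = \bar\theta$ at once. The main obstacle is the boundary case $\psi^*(\bar\theta^*) = \ell$. Here I would invoke the normal-cone formula for a smooth sublevel set, $\N_{D_{\theta^*}}(\bar\theta^*) = \{a\,\nabla\psi^*(\bar\theta^*) \mid a \ge 0\}$, already recorded in the discussion preceding Assumption~\ref{cond-mdgtd}; it is legitimate because the boundary hypothesis forces $\nabla\psi^*(\bar\theta^*) \neq 0$ whenever $\bar\theta \neq 0$. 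The optimality condition then reads $\bar\theta = (1+a)\,\nabla\psi^*(\bar\theta^*)$ for some $a \ge 0$, whence $\|\nabla\psi^*(\bar\theta^*)\|_2 = \|\bar\theta\|_2/(1+a) \le \|\bar\theta\|_2$.

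At this point the boundary gradient hypothesis closes the argument: since $\bar\theta^*$ is a boundary point, the assumption gives $\|\nabla\psi^*(\bar\theta^*)\|_2 \ge \|\bar\theta\|_2$, which together with the previous inequality forces $1/(1+a) = 1$, i.e.\ $a = 0$, and hence $\nabla\psi^*(\bar\theta^*) = \bar\theta$ in the boundary case as well. The degenerate case $\bar\theta = 0$ I would dispatch separately and trivially: then $(\nabla\psi^*)^{-1}(\bar\theta)$ contains the global minimizer of $\psi^*$, which lies in the interior of $D_{\theta^*}$, so only the interior case occurs. This proves the existence of $\bar\theta^* \in D_{\theta^*}$ with $\nabla\psi^*(\bar\theta^*) = \bar\theta$. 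Finally, specializing $\bar\theta = \theta_{\text{opt}}$ and recalling that Assumption~\ref{cond-mdgtd}(i) fixes $\ell$ precisely so that the boundary gradient condition holds for $\bar\theta = \theta_{\text{opt}}$, the witness $\bar\theta^*$ gives $\theta_{\text{opt}} = \nabla\psi^*(\bar\theta^*) \in D_\theta$, which is the final assertion.
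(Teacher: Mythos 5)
Your proof is correct, and it takes a genuinely different route from the paper's. The paper argues via projection: by co-finiteness the set $(\nabla \psi^*)^{-1}(\bar \theta)$ is nonempty; if some point of it lies in $D_{\theta^*}$ the lemma is immediate, and otherwise the paper takes such a point $\bar \theta^*$ outside $D_{\theta^*}$, forms the projection $\theta^* = \Pi_{D_{\theta^*}} \bar \theta^*$, writes $\bar \theta^* - \theta^* = a\, \nabla \psi^*(\theta^*)$ with $a > 0$ via the normal-cone description of the level set, and then uses monotonicity of the derivative of the convex function $f(t) = \psi^*(\bar \theta^* + t(\theta^* - \bar \theta^*))$ (namely $f'(0) \leq f'(1)$, citing \cite[Theorem 24.1]{Roc70}) to obtain $\langle \bar \theta, \nabla \psi^*(\theta^*) \rangle \geq \| \nabla \psi^*(\theta^*) \|_2^2$, which combined with the boundary hypothesis $\|\nabla \psi^*(\theta^*)\|_2 \geq \|\bar \theta\|_2$ and the Cauchy--Schwarz inequality forces $\nabla \psi^*(\theta^*) = \bar \theta$. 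You instead minimize $g(\theta^*) = \psi^*(\theta^*) - \langle \bar \theta, \theta^* \rangle$ over the compact sublevel set and extract the identity from the first-order optimality condition, using the boundary hypothesis only to kill the multiplier $a$. Both proofs rest on the same two ingredients---the normal-cone formula $\N_{D_{\theta^*}}(\theta^*) = \{ a \nabla \psi^*(\theta^*) \mid a \geq 0\}$ at boundary points and the gradient-norm lower bound there---but yours buys a cleaner dichotomy (interior versus boundary minimizer), uses co-finiteness only through coercivity (compactness of $D_{\theta^*}$, and existence of a global minimizer of $\psi^*$ in the degenerate case $\bar \theta = 0$) rather than through nonemptiness of $(\nabla \psi^*)^{-1}(\bar \theta)$, and your auxiliary function $g$ is exactly the Lyapunov function (\ref{eq-mdV}) with $\theta_{\text{opt}}$ replaced by $\bar \theta$, so it meshes naturally with the rest of Section~\ref{sec-mdgtd}; the paper's argument, in exchange, is a bit shorter once the unconstrained pre-image is in hand. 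One small remark: at any boundary point of $D_{\theta^*}$ the gradient $\nabla \psi^*$ is automatically nonzero whenever $\ell > \inf_{\theta^*} \psi^*(\theta^*)$ (a vanishing gradient would make that point a global minimizer of the convex function $\psi^*$, contradicting $\psi^* = \ell$ there), so the normal-cone formula needs no restriction to $\bar \theta \neq 0$, although your separate and trivial treatment of $\bar \theta = 0$ is also perfectly valid.
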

\begin{proof}
As discussed at the beginning of the section, the co-finiteness of $\psi^*$ implies that the set of points $\{ \bar \theta^* \mid \nabla \psi^*(\bar \theta^*) = \bar \theta \}$ is nonempty. If all these points are in $D_{\theta^*}$, we are done. Otherwise, some point $\bar \theta^*$ from this set lies outside $D_{\theta^*}$. Let us find a point $\theta^*$ on the boundary of $D_{\theta^*}$ with $\nabla \psi^*(\theta^*) = \bar \theta$, thereby proving the lemma. Consider the boundary point $\theta^* = \Pi_{D_{\theta^*}} \bar \theta^*$. 
Then $\bar \theta^* - \theta^* \in \N_{D_{\theta^*}}(\theta^*)$, so $\bar \theta^* - \theta^* = a \nabla \psi^*(\theta^*)$ for some $a > 0$.  
Let $f(t) = \psi^*(\bar \theta^* + t (\theta^* - \bar \theta^*))$ for $t \in \re$.
We have 
\begin{align*}
f'(0)  \!= \!\langle \nabla \psi^*(\bar \theta^*), \,  \theta^* - \bar \theta^* \rangle\! = \!- a  \langle \bar \theta, \, \nabla \psi^*(\theta^*) \rangle,
\quad f'(1) \! = \!\langle \nabla \psi^*(\theta^*), \theta^* - \bar \theta^* \rangle \!= - a \, \| \nabla \psi^*(\theta^*) \|_2^2.
\end{align*}
Since $f$ is convex, $f'(0) \leq f'(1)$ by \cite[Theorem 24.1]{Roc70} and hence 
$\langle \bar \theta, \, \nabla \psi^*(\theta^*) \rangle \geq  \| \nabla \psi^*(\theta^*) \|_2^2$. But $\|\nabla \psi^*(\theta^*) \|_2 \geq \| \bar \theta \|_2$ since $\theta^*$ is on the boundary of $D_{\theta^*}$. These two inequalities, together with the Cauchy-Schwarz inequality, imply $\nabla \psi^*(\theta^*)  = \bar \theta$. This completes the proof.
\end{proof}

The next lemma is about the behavior of the solutions to the ODE~(\ref{eq-mdgtd1-odeslow}) when the constraint set $D_{\theta^*}$ satisfies Assumption~\ref{cond-mdgtd}(i). We will use its second part on the limit set of the ODE to obtain a convergence theorem for the MD-GTD algorithms. 

%\smallskip
\begin{lem} \label{lem-mdgtd-odeslow-lim}
Under Assumption~\ref{cond-mdgtd}(i) on the constraint set $D_{\theta^*}$, every solution $\theta^*(\cdot)$ of the ODE~(\ref{eq-mdgtd1-odeslow}) satisfies that 
$\theta^*(t)$ converges to the compact set $( \nabla \psi^*)^{-1} (\tilde\Theta_{\text{\rm opt}}) \cap D_{\theta^*}$ as $t \to \infty$. 
If, in addition, $\tilde\Theta_{\text{\rm opt}} = \{ \theta_{\text{\rm opt}}\}$ (i.e., $\inf_\theta J_p(\theta)$ has a unique optimal solution), then
the limit set of the ODE~(\ref{eq-mdgtd1-odeslow}) is $( \nabla \psi^*)^{-1} (\theta_{\text{\rm opt}}) \cap D_{\theta^*}$.
\end{lem}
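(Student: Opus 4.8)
The plan is to run a Lyapunov argument with the function $V$ of (\ref{eq-mdV}) together with the descent inequality already derived in (\ref{eq-md-dV}). Writing $\theta(t)=\nabla\psi^*(\theta^*(t))$, that inequality reads $\dot V(\theta^*(t))\le J_p(\theta_{\text{opt}})-J_p(\theta(t))=\inf_{\theta}J_p(\theta)-J_p(\theta(t))\le 0$ under Assumption~\ref{cond-mdgtd}(i), so $V$ is nonincreasing along every solution and, since the solution stays in the compact set $D_{\theta^*}$ and $V$ is continuous, $V(\theta^*(t))$ converges. The structural facts I would record at the outset are that $V$ is convex with $\nabla V(\theta^*)=\nabla\psi^*(\theta^*)-\theta_{\text{opt}}$, so $\dot V(\theta^*(t))=0$ holds exactly when $J_p(\theta(t))=\inf J_p$, i.e.\ when $\theta^*(t)\in(\nabla\psi^*)^{-1}(\tilde\Theta_{\text{opt}})$; that the target set $A:=(\nabla\psi^*)^{-1}(\tilde\Theta_{\text{opt}})\cap D_{\theta^*}$ is compact (continuity of $\nabla\psi^*$, closedness of the $\argmin$-set $\tilde\Theta_{\text{opt}}$ of the convex $J_p$) and, by Lemma~\ref{lem-Dth-inclusion}, nonempty.

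For the first assertion I would use a LaSalle invariance argument for the projected ODE (\ref{eq-mdgtd1-odeslow}). Because the trajectory lies in the compact $D_{\theta^*}$, its $\omega$-limit set is nonempty, compact and invariant, and $V$ equals the constant limit $\lim_t V(\theta^*(t))$ on it; invariance together with $V$ being constant on the $\omega$-limit set forces $\dot V\equiv 0$ along orbits in it, so that set lies in $\{\theta^*\mid J_p(\nabla\psi^*(\theta^*))=\inf J_p\}\cap D_{\theta^*}=A$. Convergence of a trajectory to its $\omega$-limit set then gives $\text{dist}(\theta^*(t),A)\to 0$, which is the stated convergence.

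For the second assertion, in the unique-minimizer case $\tilde\Theta_{\text{opt}}=\{\theta_{\text{opt}}\}$, I would prove both inclusions for the limit set $L=\bigcap_{\bar t\ge 0}\cl\{\theta^*(t)\mid\theta^*(0)\in D_{\theta^*},\,t\ge\bar t\}$. The inclusion $A\subseteq L$ is immediate: for $q\in A$ we have $\nabla\psi^*(q)=\theta_{\text{opt}}$, so the drift $(\bar g_p\circ\nabla\psi^*)(q)=-\nabla J_p(\theta_{\text{opt}})=0$ and the minimal boundary reflection is therefore also $0$; thus $q$ is an equilibrium and the constant solution at $q$ witnesses $q\in L$. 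The opposite inclusion $L\subseteq A$ is the main obstacle, because $L$ unions over all starting points, so per-solution convergence must be upgraded to convergence uniform over the initial condition. Here I would exploit the rigidity special to the unique case: $V$ is convex and attains its global minimum exactly on the fiber $(\nabla\psi^*)^{-1}(\theta_{\text{opt}})$ (since $\nabla V$ vanishes precisely there), so $V\equiv V_{\min}$ on $A$. Consequently any entire solution $\tilde\theta(\cdot):\re\to D_{\theta^*}$ has both its $\alpha$- and $\omega$-limit sets inside $A$, on which $V=V_{\min}$; as $V$ is monotone along $\tilde\theta$ with both end-limits equal to $V_{\min}$, it must be constant $=V_{\min}$ along all of $\tilde\theta$, forcing $\dot V\equiv 0$ and hence the whole orbit into $A$. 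If uniform convergence failed, I would take initial conditions in $D_{\theta^*}$ and bad times $t_k\to\infty$ with $J_p(\theta_k(t_k))\ge\inf J_p+\epsilon$, shift by $t_k$, and extract (Arzel\`a--Ascoli, using the uniform Lipschitz bound that the continuous drift and the reflection term yield on the compact $D_{\theta^*}$) a limiting entire solution passing through a point where $J_p\circ\nabla\psi^*\ge\inf J_p+\epsilon$, contradicting the rigidity just shown. This produces a uniform $T_\epsilon$ with $\{\theta^*(t)\mid\theta^*(0)\in D_{\theta^*},\,t\ge T_\epsilon\}\subseteq\{J_p(\nabla\psi^*(\cdot))<\inf J_p+\epsilon\}$; taking closures and letting $\epsilon\downarrow 0$ (continuity of $J_p\circ\nabla\psi^*$) gives $L\subseteq A$, hence $L=A$. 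The one technical point I would be careful about throughout is that limits of the shifted solutions are again solutions of the reflected ODE (\ref{eq-mdgtd1-odeslow}) — the closedness of the solution set under uniform-on-compacts limits, standard for Skorohod-reflected ODEs with continuous drift on a convex set with smooth boundary.
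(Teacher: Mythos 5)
Your route is genuinely different from the paper's. The paper proves the first assertion by an elementary ``finite sojourn time'' argument (a limit point with $J_p$-value above the minimum would have a neighborhood on which $\dot V \le -\epsilon$, so the total time spent there is finite, while the Lipschitz continuity of solutions forces a fixed minimal time per visit), and proves the second by a two-family level-set argument: outside $E_\epsilon := \{\theta^* \in D_{\theta^*} \mid J_p(\nabla\psi^*(\theta^*)) \le J_p(\theta_{\text{\rm opt}}) + \epsilon\}$ the descent rate in (\ref{eq-md-dV}) is at most $-\epsilon$, so every solution enters $E_\epsilon$ within a time $t_\epsilon$ that is \emph{uniform over initial conditions}; since $E_\epsilon$ can be placed inside a $V$-sublevel set $D_\eta$, which is forward invariant because $V$ is nonincreasing, all solutions are trapped in $D_\eta$ after time $t_\epsilon$, and intersecting over $\eta$ identifies the limit set. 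This completely avoids the invariance of $\omega$-limit sets and the closedness of the solution set under locally uniform limits for the reflected ODE (\ref{eq-mdgtd1-odeslow}), both of which your LaSalle/Arzel\`a--Ascoli route must invoke; those facts are available but nontrivial here, since the drift is only continuous and solutions need not be unique, so ``invariance'' must be understood in the semi-invariance sense for differential inclusions.

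There is, moreover, a genuine gap in your rigidity step. You assert that ``consequently any entire solution $\tilde\theta(\cdot):\re\to D_{\theta^*}$ has both its $\alpha$- and $\omega$-limit sets inside $A$.'' The $\omega$-part follows from the first assertion of the lemma, but the $\alpha$-part does not follow from anything you have established: the first assertion is a forward-time statement, and the descent inequality (\ref{eq-md-dV}) runs the wrong way backward in time. Along an entire solution $V$ is nonincreasing, so its backward limit $c_- = \lim_{t\to-\infty} V(\tilde\theta(t))$ is only known to satisfy $c_- \ge V_{\min}$; unless $c_- = V_{\min}$, your monotonicity argument cannot force $V$ to be constant, and an entire solution sliding down from level $c_- > V_{\min}$ through points outside $A$ is exactly the scenario that must be excluded. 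The gap is fillable within your own framework in either of two ways: (a) apply the same invariance-plus-constancy argument to the $\alpha$-limit set, noting that $V \equiv c_-$ on it (monotone bounded convergence plus continuity of $V$), so that the $\alpha$-limit set also consists of points where $\dot V \equiv 0$ along orbits, hence lies in $A$, forcing $c_- = V_{\min}$; or (b) pass the uniform bound $\int_0^\infty \big(J_p(\nabla\psi^*(\theta^*_k(s))) - J_p(\theta_{\text{\rm opt}})\big)\,ds \le \sup_{D_{\theta^*}} V - \inf_{D_{\theta^*}} V$ to the limiting entire solution, obtaining integrability of the nonnegative, uniformly continuous function $s \mapsto J_p(\nabla\psi^*(\tilde\theta(s))) - J_p(\theta_{\text{\rm opt}})$ over all of $\re$, and conclude by a Barbalat-type argument that both end-limits of $V$ equal $V_{\min}$. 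As written, however, the ``consequently'' is a non sequitur, and the proof of the second assertion is incomplete without one of these repairs.
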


\begin{proof}
Consider the function $V(\cdot)$ defined in (\ref{eq-mdV}) for the point $\theta_{\text{opt}}$ in Assumption~\ref{cond-mdgtd}(i). 
Let $\theta^*(\cdot)$ be a solution of (\ref{eq-mdgtd1-odeslow}), and let $\theta(t) = \nabla \psi^*(\theta^*(t))$ for $t \geq 0$. As discussed earlier, by (\ref{eq-md-dV}) and Assumption~\ref{cond-mdgtd}(i) on $D_{\theta^*}$,
\begin{equation} \label{eq-prfmd1}
   \dot V\big(\theta^*(t)\big) \leq J_p( \theta_{\text{opt}}) - J_p(\theta(t)) \leq 0.
\end{equation}   
From this it follows that as $t \to \infty$, $\theta^*(t)$ converges to the compact set%footnote starts
\footnote{To see this, let $\bar \theta^* \in D_{\theta^*}$ be any point such that for some sequence of times $t_n \to \infty$, $\theta^*(t_n)$ converges to $\bar \theta^*$ as $n \to \infty$. 
If $J_p(\nabla \psi^*( \bar \theta^*)) > J_p( \theta_{\text{opt}})$, then since $J_p \circ \nabla \psi^*$ is a continuous function, there exist sufficiently small $\epsilon,  \delta > 0$ such that $J_p(\nabla \psi^*(\theta^*)) > J_p( \theta_{\text{opt}}) + \epsilon$ for all $\theta^* \in N_\delta(\bar \theta^*)$ (the $\delta$-neighborhood of $\bar \theta^*$). Then, since $V$ is bounded below on $D_{\theta^*}$, by (\ref{eq-prfmd1}), the total amount of time the solution $\theta^*(\cdot)$ spends in $N_\delta(\bar \theta^*)$ must be finite. But the solution $\theta^*(\cdot)$ is Lipschitz continuous (because by Lemma~\ref{lem-decomp-vec-cones}, $\bar g_p(\nabla \psi^*(\theta^*(t))) + z(t)$ is the projection of $\bar g_p(\nabla \psi^*(\theta^*(t)))$ on the tangent cone of $D_{\theta^*}$ at $\theta^*(t)$, and therefore, for all $t$, $\|\bar g_p(\nabla \psi^*(\theta^*(t))) + z(t)\|_2 \leq \|\bar g_p(\nabla \psi^*(\theta^*(t)))\|_2 \leq \sup_{\theta^* \in D_{\theta^*}} \| \bar g_p(\nabla \psi^*(\theta^*)) \|_2 < \infty$). 
So, whenever its path enters inside the smaller neighborhood $N_{\delta/2}(\bar \theta^*)$, there is some fixed amount of time, say $\Delta$, that it has to spend inside $N_\delta(\bar \theta^*)$ before it can exit $N_\delta(\bar \theta^*)$. This means it will not visit $N_\delta(\bar \theta^*)$ any more after some finite time $t$, a contradiction to $\theta^*(t_n) \to \bar \theta^*$ with $t_n \to \infty$. Therefore, we must have $J_p(\nabla \psi^*( \bar \theta^*)) = J_p( \theta_{\text{opt}})$.
}
%footnote ends 
$$\{ \theta^* \in D_{\theta^*} \mid  J_p( \theta_{\text{opt}}) - J_p(\nabla \psi^*(\theta^*)) = 0 \}   = ( \nabla \psi^*)^{-1} (\tilde\Theta_{\text{\rm opt}}) \cap D_{\theta^*}.$$

Now consider the limit set of (\ref{eq-mdgtd1-odeslow}) in the case $\tilde\Theta_{\text{\rm opt}} = \{ \theta_{\text{\rm opt}}\}$. 
Clearly, it contains the set $( \nabla \psi^*)^{-1} (\theta_{\text{\rm opt}}) \cap D_{\theta^*}$, since for any $\bar \theta^*$ in the latter set, $\bar g_p(\nabla \psi^*(\bar \theta^*)) = - \nabla J_p(\theta_{\text{\rm opt}}) = 0$ by the optimality condition and hence $\theta^*(\cdot) \equiv \bar \theta^*$ is a solution of (\ref{eq-mdgtd1-odeslow}). 
To show the other direction of inclusion, 
define level sets $D_\eta, E_\epsilon$ for $\eta, \epsilon  > 0$ by
$$  D_\eta : = \big\{ \theta^* \in  D_{\theta^*} \mid V(\theta^*) \leq \textstyle{\inf_{\tilde \theta^*}} V(\tilde \theta^*) + \eta \big\}, \quad E_\epsilon : = \big\{ \theta^* \in  D_{\theta^*} \mid J_p(\nabla \psi^*(\theta^*)) \leq J_p( \theta_{\text{opt}}) + \epsilon \big\}. $$
Note that $(\nabla \psi^*)^{-1} (\theta_{\text{\rm opt}})$ is the set of optimal solutions to $\inf_{\theta^*} V(\theta^*)$, and it is nonempty, convex and compact. 
The sets $D_\eta$ are also convex and compact, with $\cap_{\eta > 0} D_\eta = (\nabla \psi^*)^{-1} (\theta_{\text{\rm opt}}) \cap D_{\theta^*}$.
Since the function $J_p \circ \nabla \psi^*$ is continuous, the sets $E_\epsilon$ are also compact and moreover, by the uniqueness of $\theta_{\text{opt}}$, $\cap_{\epsilon > 0} E_\epsilon = (\nabla \psi^*)^{-1} (\theta_{\text{\rm opt}}) \cap D_{\theta^*}$.

For each $\eta$, there must exist $\epsilon$ sufficiently small such that $E_\epsilon \subset D_\eta$. 
Otherwise, we could find a sequence of points $\theta^*_n \in E_{\epsilon_n}$ with $\epsilon_n \to 0$, but with $V(\theta^*_n) - \inf_{\theta^*} V(\theta^*) > \eta$ for all $n$. Then any limit point $\theta^*_\infty$ of this bounded sequence $\{\theta^*_n\}$ must lie in $\cap_{\epsilon > 0} E_\epsilon$ and hence in $(\nabla \psi^*)^{-1} (\theta_{\text{\rm opt}})$, yet it must also satisfy $V(\theta^*_\infty) - \inf_{\theta^*} V(\theta^*) \geq \eta$, which is impossible.

For any given $\eta$, consider this set $E_\epsilon \subset D_\eta$.
Since $V$ is bounded on $D_{\theta^*}$, (\ref{eq-prfmd1}) implies that there exists a finite time $t_\epsilon$ such that every solution of (\ref{eq-mdgtd1-odeslow}) must visit the set $E_\epsilon$ at least once before time $t_\epsilon$;
i.e., there is a time $t_0 \leq t_\epsilon$ with $\theta^*(t_0) \in E_\epsilon$. Since $V(\theta^*(t))$ is nonincreasing and $E_\epsilon \subset D_\eta$, 
this means that $\theta^*(t) \in D_\eta$ for all $t \geq t_\epsilon$ and all solutions $\theta^*(\cdot)$ of (\ref{eq-mdgtd1-odeslow}). 
Consequently, the limit set of (\ref{eq-mdgtd1-odeslow}) must satisfy
$\textstyle{\bigcap_{ \,\bar t \geq 0}} \, \cl \, \big\{\theta^*(t)  \, \big| \,   \theta^*(0) \in D_{\theta^*},  \, t \geq \bar t  \, \big\} \subset \cap_{\eta > 0} D_\eta = (\nabla \psi^*)^{-1} (\theta_{\text{\rm opt}}) \cap D_{\theta^*}.$
This prove the other direction of inclusion and completes the proof.
\end{proof}

%\smallskip
\noindent {\bf Convergence results}
\smallskip

We now use the preceding results about the limit sets of the two ODEs to characterize the asymptotic behavior of MD-GTD. Besides Assumption~\ref{cond-mdgtd} on the constraint sets $D_{\theta^*}$ and $B_x$, we shall also assume, as in the second part of Lemma~\ref{lem-mdgtd-odeslow-lim}, that $\inf_\theta J_p(\theta)$ has a unique optimal solution $\theta_{\text{\rm opt}}$. 
It is straightforward to verify that the iterates $\{(\theta^*_n, x_n)\}$ generated by MD-GTD satisfy all the conditions for applying the convergence proofs given earlier for GTDa.%footnote starts
\footnote{Although the conditions are now on the $(\theta^*, x)$-space instead of the $(\theta, x)$-space, they are essentially the same as before. For example, the ``averaging condition'' (iv) listed in Section~\ref{sec-gtd1-cond} is now a convergence-in-mean condition on $\{k(\nabla \psi^*(\theta^*), x, Z_n)\}$ for each $(\theta^*, x)$. This is the same as the ``averaging condition'' on $\{k(\theta, x, Z_n)\}$ for each $\theta = \nabla \psi^*(\theta^*)$ and each $x$, which we already treated in Lemma~\ref{lem-conv-mean-cond1}. As another example, in the case here, we need that for $(\theta^*, x) \in D_{\theta^*} \times B_x$ and $z$ in a compact set, the function $g(\nabla \psi^*(\theta^*), x, z) - \nabla p(\nabla \psi^*(\theta^*))$ is Lipschitz continuous in $x$ uniformly w.r.t.\ $(\theta^*, z)$ (cf.\ Remark~\ref{rmk-proofgtd}). This is the same as the function $g(\theta, x, z)$ being Lipschitz continuous in $x$ uniformly w.r.t.\ $(\theta, z)$ in a compact set, the same property that we verified earlier in the GTDa case.\label{footnote-mdgtd-cond}}
%footnote ends  
Thus the conclusions of Theorems~\ref{thm-gtd1-wk-dim}-\ref{thm-gtd1-wk-constant} hold for $\{(\theta^*_n, x_n)\}$. In particular, about the asymptotic behavior of $\{\theta_n^*\}$, the conclusions of those theorems hold with $\Theta_{\text{\rm opt}}$ replaced by $( \nabla \psi^*)^{-1} (\theta_{\text{\rm opt}}) \cap D_{\theta^*}$, the limit set of the ODE (\ref{eq-mdgtd1-odeslow}). We can then translate these results about the asymptotic behavior of $\{\theta_n^*\}$ to that of $\{\theta_n\}$, by using the following observation: Since the gradient mapping $\nabla \psi^*$ is continuous and the set $D_{\theta^*}$ is bounded, for any $\epsilon > 0$, there exists $\eta_{\epsilon} > 0$ (with $\eta_{\epsilon} \to 0$ as $\epsilon \to 0$), such that 
\begin{equation} \label{eq-mdgtd-thstar2th}
\theta^* \in N_{\eta_\epsilon} \big(( \nabla \psi^*)^{-1} (\theta_{\text{\rm opt}}) \cap D_{\theta^*} \big) \quad \Longrightarrow \quad \theta = \nabla \psi^*(\theta^*) \in  N_{\epsilon} \big(\theta_{\text{\rm opt}} \big).
\end{equation}
This gives us the convergence results for MD-GTD with both diminishing and constant stepsizes stated in the following convergence theorem. 

The theorem also includes conclusions about the averaged $\theta$-iterates in the case of constant stepsize. The line of proof for this part is the same as that explained after Theorem~\ref{thm-gtd1-average} in Section~\ref{sec-aveite}. It uses the observation that with a given pair of constant stepsizes $(\alpha, \beta)$, the MD-GTD iterates together with the states and traces, $(S_n, y_n, \e_n, \theta^*_n, \theta_n, x_n)$, jointly form a weak Feller Markov chain (this proof is similar to the proof of \cite[Lemma 6]{etd-wkconv}). The proof then combines this observation with those conclusions about the asymptotic behavior of the $\theta$-iterates, and with ergodicity properties of weak Feller Markov chains. As before, we omit the proof, since it is essentially the same proof given in \cite[Section 4.3]{etd-wkconv}.

%\smallskip
\begin{thm} \label{thm-mdgtd-wk}
Consider the MD-GTDa algorithm (\ref{eq-mdgtd1-x})-(\ref{eq-mdgtd1-th}) or the MD-GTDb algorithm given by (\ref{eq-mdgtd1-x}), (\ref{eq-mdgtd1-th}), and (\ref{eq-mdgtd2-thstar}),  with the initial $x_0, \e_0 \in \sp \{\phi(\S)\}$. Let Assumptions~\ref{cond-collective} and~\ref{cond-mdgtd} hold, and let $\theta_{\text{\rm opt}}$ be the unique optimal solution to $\inf_\theta J_p(\theta)$. 
Then for diminishing stepsizes $\{\alpha_n\}$, $\{\beta_n\}$ that satisfy Assumption~\ref{cond-large-stepsize}, the conclusions of Theorem~\ref{thm-gtd1-wk-dim} about the $\theta$-iterates hold with $\theta_{\text{\rm opt}}$ in place of $\Theta_{\text{\rm opt}}$. For the case of constant stepsize, the conclusions of both Theorem~\ref{thm-gtd1-wk-constant} (about the $\theta$-iterates) and Theorem~\ref{thm-gtd1-average} (about the averaged $\theta$-iterates) hold with $\theta_{\text{\rm opt}}$ in place of $\Theta_{\text{\rm opt}}$.
\end{thm}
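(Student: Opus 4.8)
The plan is to piggyback on the GTDa analysis of Sections~\ref{sec-gtd1conv-dim}--\ref{sec-gtd1conv-constant} by treating $\{(\theta^*_n, x_n)\}$ — rather than $\{(\theta_n, x_n)\}$ — as the pair of iterates under study. Written in the form (\ref{eq-mdgtd1-thstar-alt})--(\ref{eq-mdgtd1-xalt}), the MD-GTDa recursion is structurally identical to GTDa: the $x$-iteration uses exactly the function $k$ of (\ref{eq-gtd1-gk}), while the slow iteration drives $\theta^*$ by the same $g$ of (\ref{eq-gtd1-gk}) composed with the continuous map $\nabla\psi^*$, plus the deterministic smooth drift $-\nabla p(\nabla\psi^*(\theta^*))$. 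First I would verify the seven conditions of Section~\ref{sec-gtd1-cond} together with the Lipschitz-in-$x$ property of Remark~\ref{rmk-proofgtd}, now posed on the $(\theta^*,x)$-space. Since $g,k$ are unchanged and $\nabla\psi^*,\nabla p$ are continuous and bounded on the compact $D_{\theta^*}$, the uniform-integrability and tightness conditions follow from the uniform integrability of $\{\e_n\}$ (Prop.~\ref{prop-trace}(ii)) exactly as for GTDa, the continuity conditions are inherited through composition with $\nabla\psi^*$, and the averaging conditions reduce to Lemma~\ref{lem-conv-mean-cond1} applied at $\theta=\nabla\psi^*(\theta^*)$.

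With these conditions in hand, I would apply Theorem~\ref{thm-gtd1-wk-dim} (diminishing stepsize) and Theorem~\ref{thm-gtd1-wk-constant} (constant stepsize) verbatim to $\{(\theta^*_n,x_n)\}$. The associated fast ODE (\ref{eq-mdgtd1-odefast}) has limit set $\{(\theta^*,\bar x(\theta^*))\mid\theta^*\in D_{\theta^*}\}$ under Assumption~\ref{cond-mdgtd}(ii), by the argument of Lemma~\ref{lem-Bx}; the associated slow ODE (\ref{eq-mdgtd1-odeslow}) has, under the uniqueness hypothesis $\tilde\Theta_{\text{\rm opt}}=\{\theta_{\text{\rm opt}}\}$, limit set $(\nabla\psi^*)^{-1}(\theta_{\text{\rm opt}})\cap D_{\theta^*}$ by Lemma~\ref{lem-mdgtd-odeslow-lim}. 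Invoking the device of Remark~\ref{rmk-proofgtd} — that the GTDa conclusions persist with the actual slow-ODE limit set substituted for $\Theta_{\text{\rm opt}}$ — yields the neighborhood and convergence-in-mean statements for $\{\theta^*_n\}$, with $(\nabla\psi^*)^{-1}(\theta_{\text{\rm opt}})\cap D_{\theta^*}$ in place of $\Theta_{\text{\rm opt}}$.

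It then remains to transport these statements from the dual iterates $\{\theta^*_n\}$ to the primal iterates $\theta_n=\nabla\psi^*(\theta^*_n)$. Since $\nabla\psi^*$ is uniformly continuous on the bounded set $D_{\theta^*}$, the implication (\ref{eq-mdgtd-thstar2th}) holds: for each $\epsilon>0$ there is $\eta_\epsilon>0$ (with $\eta_\epsilon\to0$) such that membership of $\theta^*$ in the $\eta_\epsilon$-neighborhood of $(\nabla\psi^*)^{-1}(\theta_{\text{\rm opt}})\cap D_{\theta^*}$ forces $\theta=\nabla\psi^*(\theta^*)\in N_\epsilon(\theta_{\text{\rm opt}})$. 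Applying this pointwise to the ``all consecutive iterates in a small neighborhood'' conclusions converts them into the corresponding statements for $\{\theta_n\}$ with $\theta_{\text{\rm opt}}$, and because $\{\theta_n\}$ stays in the bounded set $\nabla\psi^*(D_{\theta^*})$, convergence in probability of $\text{\rm dist}(\theta_n,\theta_{\text{\rm opt}})$ upgrades to convergence in mean.

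For the averaged-iterate claim in the constant-stepsize case I would argue exactly as after Theorem~\ref{thm-gtd1-average} in Section~\ref{sec-aveite}: with $(\alpha,\beta)$ fixed, the augmented process $(S_n,y_n,\e_n,\theta^*_n,\theta_n,x_n)$ is a weak Feller Markov chain, so the ergodicity of weak Feller chains combines with the (translated) conclusion of Theorem~\ref{thm-gtd1-wk-constant} and the state-trace ergodicity of Theorem~\ref{thm-erg} to control the averaged iterates, the distance-to-a-point of \cite{etd-wkconv} being replaced by the convex function $\text{\rm dist}(\cdot,\theta_{\text{\rm opt}})$. The one genuinely delicate point throughout is making sure the constraint set $D_{\theta^*}$ does not obstruct the mirror-descent dynamics — i.e.\ that the boundary reflection term never reverses the Lyapunov descent of $V$ in (\ref{eq-mdV}) — but this is exactly what the level-set choice in Assumption~\ref{cond-mdgtd}(i) and Lemma~\ref{lem-mdgtd-odeslow-lim} already secure; so the chief remaining obstacle is the bookkeeping of the dual-to-primal translation and verifying that the weak-convergence conclusions survive composition with the nonlinear (though continuous) map $\nabla\psi^*$.
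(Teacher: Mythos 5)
Your proposal is correct and follows essentially the same route as the paper: rewrite MD-GTDa as a GTDa-type recursion in the $(\theta^*,x)$-variables, verify the Section~\ref{sec-gtd1-cond} conditions after the change of variable $\theta=\nabla\psi^*(\theta^*)$, apply the GTDa theorems with the slow-ODE limit set $(\nabla\psi^*)^{-1}(\theta_{\text{\rm opt}})\cap D_{\theta^*}$ (via Lemma~\ref{lem-mdgtd-odeslow-lim} and the substitution device of Remark~\ref{rmk-proofgtd}), translate to the primal iterates through (\ref{eq-mdgtd-thstar2th}), and handle the averaged iterates via the weak Feller Markov chain argument of Section~\ref{sec-aveite}. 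This matches the paper's proof step for step, including the role of the level-set constraint $D_{\theta^*}$ in preserving the Lyapunov descent.
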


\begin{rem}[About the condition on the minimization problem] \rm \label{rmk-mdgtd-cond}
For the convergence of GTD algorithms, we do not need any condition on the minimization problem $\inf_{\theta \in B_\theta} J_p(\theta)$; in particular, we do not need $B_\theta$ to contain an optimal solution of the unconstrained minimization problem $\inf_\theta J_p(\theta)$. The condition in the above theorem for MD-GTD is more stringent, as it requires not only that $D_\theta$ contains an optimal solution of the unconstrained problem $\inf_\theta J_p(\theta)$, but also that $\inf_\theta J_p(\theta)$ has a unique optimal solution. This condition is used in Lemma~\ref{lem-mdgtd-odeslow-lim} to ensure that the limit set of the ODE~(\ref{eq-mdgtd1-odeslow}) has desirable properties---if the latter can be obtained under a weaker condition on the minimization problem, the condition in the theorem can be correspondingly weakened.
\end{rem}

\begin{rem}[Biased variants] \rm \label{rmk-mdgtd2}
We mention that in the case of state-dependent $\lambda$, one can also consider biased variants of the MD-GTD algorithms. Such variants can be analyzed by combining the preceding analysis with the reasoning given in Section~\ref{sec-bias-vrt}. 
\end{rem}

\begin{rem}[MD-GTD with the composite scheme of setting $\lambda$] \rm \label{rmk-mdgtd-cp}
When the $\lambda$'s are set according to the composite scheme described in Section~\ref{sec-cp-extension}, the changes in the MD-GTD algorithms are similar to those in the GTD algorithms described earlier. In particular, with $\e_n = \sum_{i=1}^\ell \e_n^{(i)}$, where $\e_n^{(i)}$ is updated according to (\ref{eq-decomp-e}) for each $i \leq \ell$, MD-GTDa is still given by (\ref{eq-mdgtd1-x})-(\ref{eq-mdgtd1-th}), and MD-GTDb becomes
\begin{align}
     x_{n+1}  & = \Pi_{B_x} \Big( x_n + \beta_n \big(\e_n \delta_n(v_{\theta_n}) - \phi(S_n) \phi(S_n)^\tr x_n\big) \Big),  \\
    \theta^*_{n+1} & = \Pi_{D_{\theta^*}} \Big( \theta_n^* + \alpha_n  \big(\e_n \delta_n(v_{\theta_n}) -   \textstyle{ \sum_{i=1}^\ell}  \rho_n (1 - \lambda^{(i)}_{n+1})  \,\gma_{n+1}  \phi(S_{n+1}) \cdot (\e_n^{(i)})^\tr  x_n \big) - \alpha_n \nabla p(\theta_n) \Big), \\
    \theta_{n+1} & = \nabla \psi^*(\theta_{n+1}^*).
\end{align}    
By combining the preceding analysis with the reasoning given in Section~\ref{sec-cp-extension}, it is straightforward to show that the conclusions of Theorem~\ref{thm-mdgtd-wk} hold for these MD-GTD algorithms as well, where the Bellman operator $\Tl$ in $J(\theta)$ and in Assumption~\ref{cond-mdgtd} is now given by (\ref{eq-Tcomp}).
\end{rem}

\subsubsection{MD-TD}

In the rest of this subsection, we discuss an algorithm that combines the mirror-descent method with the original TD method. This MD-TD algorithm is given by
\begin{equation}  \label{eq-mdtd}
    \theta_{n+1}^*  = \Pi_{D_{\theta^*}} \big( \theta_n^* + \alpha_n  \e_n  \delta_n(v_{\theta_n})  \big),  \qquad 
     \theta_{n+1}  = \nabla \psi^*(\theta_{n+1}^*),
\end{equation}
with $\theta_0 = \nabla \psi^*(\theta_0^*)$ for some given initial $\theta_0^*$.
It is a single-time-scale algorithm like TD, and computes the $\theta^*$-iterates using a formula that resembles the TD algorithm.

We now derive the mean ODE associated with the $\theta^*$-iterates, and we will introduce a few conditions to ensure desired solution properties. 
By Prop.~\ref{prop-mean-fn}, for each fixed $\theta$,
\begin{equation} \label{eq-mdtd-bg}
  \E_\zeta \big[ \e_0 \bar \delta_0(v_\theta) \big] = \Phi^\tr \Xi \, (\Tl v_\theta - v_\theta) = : \bar g(\theta).
\end{equation}   
So the projected mean ODE associated with (\ref{eq-mdtd}) is
\begin{equation} \label{eq-mdtd-ode}
      \dot \theta^*(t) = (\bar g \circ \nabla \psi^*) \big(\theta^*(t) \big) + z(t),  \quad z(t) \in - \N_{D_{\theta^*}}\big(\theta^*(t)\big),
\end{equation}
where $z(t)$ is the boundary reflection term.
We shall impose a strong condition on $\bar g$, in order to ensure desired convergence:

\begin{assumption} \label{cond-mdtd} 
The linear equation $\bar g(\theta) = \Phi^\tr \Xi \, (\Tl\!(\Phi \theta)  - \Phi \theta) = 0$ has a unique solution $\theta_{\text{\rm TD}}$, and the matrix $C=\Phi^\tr \Xi \, (\Pl - I) \Phi$ is negative definite (i.e., for some $c > 0$, $\theta^\tr C \theta \leq - c \| \theta \|_2^2$ for all $\theta \in \re^d$).
\end{assumption}

Suppose Assumption~\ref{cond-mdtd} holds. In addition, suppose the constraint set $D_{\theta^*}$ is a level set of $\psi^*$ that satisfies Assumption~\ref{cond-mdgtd}(i) with $\theta_{\text{TD}}$ in place of $\theta_{\text{opt}}$. Note that by Lemma~\ref{lem-Dth-inclusion}, $\theta_{\text{TD}} \in D_\theta : = \{ \nabla \psi^*(\theta^*) \mid \theta^* \in D_{\theta^*} \}$ (i.e., in $D_{\theta^*}$ there is at least one ``mirror image'' of $\theta_{\text{TD}}$, through which the algorithm can ``access'' $\theta_{\text{TD}}$).

As a candidate Lyapunov function, consider the function
$$ V (\theta^*) = \psi^*(\theta^*) - \langle \theta^*, \theta_{\text{TD}}\rangle.$$
Let $\theta^*(\cdot)$ be a solution of (\ref{eq-mdtd-ode}), and let $\theta(t) = \nabla \psi^*\big(\theta^*(t)\big)$. 
Note that since $\bar g(\theta_{\text{TD}}) = 0$, $\bar g(\theta) = C (\theta - \theta_{\text{TD}})$.
Then similar to (\ref{eq-md-dV}) in the previous MD-GTD case, 
we have
\begin{align}
 \dot V\big(\theta^*(t)\big) & = \big\langle \bar g( \theta(t))  + z(t), \,   \nabla \psi^*(\theta^*(t)) - \theta_{\text{TD}}  \big\rangle  \notag \\
   & =  \big\langle C (\theta(t) - \theta_{\text{TD}}), \, \theta(t) - \theta_{\text{TD}} \big\rangle  + \big\langle  z(t)  , \, \theta(t) - \theta_{\text{\text{TD}}} \big\rangle \notag \\
   & \leq \big\langle C (\theta(t) - \theta_{\text{TD}}), \, \theta(t) - \theta_{\text{TD}} \big\rangle \notag \\
   & \leq - c \, \| \theta(t) - \theta_{\text{TD}}\|_2^2, \label{eq-mdtd-dV}
 \end{align}
for some constant $c > 0$.
In the above, the second equality uses the fact $\bar g(\theta) = C (\theta - \theta_{\text{TD}})$. The second to last inequality uses the fact that our choice of the constraint set $D_{\theta^*}$ ensures that the term
$\big\langle  z(t)  , \, \theta(t) - \theta_{\text{\text{TD}}} \big\rangle \leq 0$ if $\theta^*(t)$ is on the boundary of $D_{\theta^*}$ (cf.\ the discussion before Assumption~\ref{cond-mdgtd}(i) in the previous case), so this term is always nonpositive. The last inequality follows from the negative definiteness of the matrix $C$ under Assumption~\ref{cond-mdtd}. Thus $\dot V\big(\theta^*(t)\big) < 0$ for all $\theta(t) \not= \theta_{\text{TD}}$ as desired.

Furthermore, similar to Lemma~\ref{lem-mdgtd-odeslow-lim}, we have that the limit set of the ODE~(\ref{eq-mdtd-ode}) is%footnote starts
\footnote{The limit set contains $(\nabla \psi^*)^{-1}( \theta_{\text{TD}}) \cap D_{\theta^*}$ since each point in the latter set corresponds to a constant solution to the ODE. The other direction of inclusion follows from the last part of the proof of Lemma~\ref{lem-mdgtd-odeslow-lim}, with $\theta_{\text{TD}}$ in place of $\theta_{\text{opt}}$ and with the function $\theta \mapsto - c \, \| \theta - \theta_{\text{TD}}\|_2^2$ serving as the function $J_p$ in the proof.}
%footnote ends 
\begin{equation} \label{eq-mdtd-odelim}
 \textstyle{\bigcap_{ \,\bar t \geq 0}} \, \cl \, \big\{\theta^*(t)  \, \big| \,   \theta^*(0) \in D_{\theta^*},  \, t \geq \bar t  \, \big\} = (\nabla \psi^*)^{-1}( \theta_{\text{TD}}) \cap D_{\theta^*} .
\end{equation}
As $\theta^*(t)$ approaches this limit set, the corresponding $\theta(t)=\nabla \psi^*(\theta^*(t))$ converges to $\theta_{\text{\rm TD}}$, since for any $\epsilon > 0$, there exists $\eta_{\epsilon} > 0$ such that 
\begin{equation} \label{eq-mdtd-thstar2th}
\theta^* \in N_{\eta_\epsilon} \big(( \nabla \psi^*)^{-1} (\theta_{\text{\rm TD}}) \cap D_{\theta^*} \big) \quad \Longrightarrow \quad \theta = \nabla \psi^*(\theta^*) \in  N_{\epsilon} \big(\theta_{\text{\rm TD}} \big)
\end{equation}
(cf.\ the discussion preceding (\ref{eq-mdgtd-thstar2th})).
Likewise, if the iterates $\theta_n^*$ of the MD-TD algorithm approach the above limit set, the corresponding iterates $\theta_n = \nabla \psi^*(\theta_n^*)$ approach the point $\theta_{\text{TD}}$. 

We now proceed to the convergence proof for MD-TD. We first establish the connection between the average dynamics of the algorithm and the ODE~(\ref{eq-mdtd-ode}) using stochastic approximation theory, which will allow us to translate the solution property of the mean ODE (\ref{eq-mdtd-ode}) derived above to the asymptotic behavior of the $\theta^*$-iterates. We then use (\ref{eq-mdtd-thstar2th}) to obtain convergence properties for the $\theta$-iterates. For the first step of this proof, we apply \cite[Theorem 8.2.3]{KuY03} in the case of diminishing stepsize and \cite[Theorem 8.2.2]{KuY03} in the case of constant stepsize.  The conditions of these theorems are similar to those listed in Section~\ref{sec-gtd1-cond}, and it is straightforward to show that they are satisfied.%footnote starts
\footnote{We omit the details here, because in the subsequent Section~\ref{sec-gtds-conv}, for another single-time-scale algorithm (GTDa), we will describe similar conditions and explain how to verify them. In particular, see Footnote~\ref{footnote-gtds-cond} therein.}
%footnote ends
This gives us the convergence results about the $\theta$-iterates generated by MD-TD stated in the following convergence theorem. Those conclusions in the theorem about the averaged $\theta$-iterates, in the constant-stepsize case, are then obtained through a procedure similar to that described before Theorem~\ref{thm-mdgtd-wk} in the previous subsection for MD-GTD (see also the explanation of the proof of Theorem~\ref{thm-gtd1-average} in Section~\ref{sec-aveite}).

\begin{thm} \label{thm-mdtd}
Consider the MD-TD algorithm (\ref{eq-mdtd}). Let Assumptions~\ref{cond-collective} and \ref{cond-mdtd} hold, and let $D_{\theta^*}$ satisfy Assumption~\ref{cond-mdgtd}(i) with $\theta_{\text{\rm TD}}$ in place of $\theta_{\text{\rm opt}}$.
Then, in the case of diminishing stepsize, with the stepsizes $\{\alpha_n\}$ satisfying those conditions given in Assumption~\ref{cond-large-stepsize} for the fast-time-scale stepsizes, the conclusions of Theorem~\ref{thm-gtd1-wk-dim} about the $\theta$-iterates hold with $\theta_{\text{\rm TD}}$ in place of $\Theta_{\text{\rm opt}}$. 
In the case of constant stepsize, the conclusions of both Theorem~\ref{thm-gtd1-wk-constant} (about the $\theta$-iterates) and Theorem~\ref{thm-gtd1-average} (about the averaged $\theta$-iterates) hold with $\theta_{\text{\rm TD}}$ in place of $\Theta_{\text{\rm opt}}$, and with obvious modifications to remove all the references to the fast-time-scale stepsize parameter $\beta$.
\end{thm}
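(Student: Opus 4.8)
The plan is to treat MD-TD as a single-time-scale stochastic approximation recursion in the $\theta^*$-variable and apply the weak convergence machinery of \cite[Chap.\ 8]{KuY03} in the same way as in the slow-time-scale analysis of GTDa, the two simplifications being that here there is only one time-scale (so there is no tracking error $\Delta_n$ to control) and the driving term is composed with the continuous gradient map $\nabla\psi^*$. The part that usually carries the weight, the ODE analysis, has already been completed above: the Lyapunov computation (\ref{eq-mdtd-dV}) and the limit-set identification (\ref{eq-mdtd-odelim}) show, under Assumption~\ref{cond-mdtd} and the stated choice of $D_{\theta^*}$, that the limit set of the mean ODE (\ref{eq-mdtd-ode}) is $(\nabla\psi^*)^{-1}(\theta_{\text{\rm TD}})\cap D_{\theta^*}$. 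So what remains is to connect the iterates to this ODE and then transfer the conclusion from $\theta^*_n$ to $\theta_n$.

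First I would write (\ref{eq-mdtd}) as $\theta^*_{n+1} = \Pi_{D_{\theta^*}}\big(\theta^*_n + \alpha_n(G(\theta^*_n, Z_n) + \e_n\omega_{n+1})\big)$, where $G(\theta^*, z) = \e\,\bar{\delta}(s, s', v_\theta)$ with $\theta = \nabla\psi^*(\theta^*)$ and $\omega_{n+1}$ is the conditionally zero-mean reward noise, so that $\E_\zeta[G(\theta^*, Z_0)] = \bar g(\nabla\psi^*(\theta^*))$ by (\ref{eq-mdtd-bg}). I would then verify the single-time-scale analogues of the conditions in Section~\ref{sec-gtd1-cond}: uniform integrability of $\{G(\theta^*_n, Z_n) + \e_n\omega_{n+1}\}$ and of $\{G(\theta^*, Z_n)\mid n\ge0,\ \theta^*\in D_{\theta^*}\}$, which follows from the uniform integrability of $\{\e_n\}$ (Prop.~\ref{prop-trace}(ii)), the boundedness of $\{\theta^*_n\}$ and hence of $\{\theta_n\}=\{\nabla\psi^*(\theta^*_n)\}$, the finiteness of $\S$, the finite reward variances, and the Lipschitz continuity of $G(\theta^*,\cdot)$ in the trace variable; tightness of $\{Z_n\}$ from $\sup_n\E\|\e_n\|<\infty$ and the Markov inequality; continuity of $G(\theta^*, z)$ in $\theta^*$ uniformly in $z$ on compact sets, which holds because $\nabla\psi^*$ is continuous and $\bar{\delta}$ is affine in $\theta$; and the averaging (convergence-in-mean) condition, which for each fixed $\theta^*$ follows from Prop.~\ref{prop-trace-averaging}(i) applied to the trace-Lipschitz function $z\mapsto G(\theta^*, z)$, exactly as in Lemma~\ref{lem-conv-mean-cond1}. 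Since the recursion is single-time-scale, only the standalone conditions on $\{\alpha_n\}$ are needed, namely $\sum_n\alpha_n=\infty$, $\alpha_n\to0$, and the slowly-diminishing condition of (\ref{cond-w-stepsize}); this is precisely why the theorem invokes the fast-time-scale part of Assumption~\ref{cond-large-stepsize}. In the history-dependent $\lambda$ case the traces are bounded and these conditions are immediate; the state-dependent case is where Prop.~\ref{prop-trace} is genuinely used.

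With these conditions verified, \cite[Theorem 8.2.3]{KuY03} (diminishing stepsize) and \cite[Theorem 8.2.2]{KuY03} (constant stepsize) give that the piecewise-constant interpolations of $\{\theta^*_n\}$ converge weakly to solutions of (\ref{eq-mdtd-ode}) whose paths lie almost surely in the limit set $(\nabla\psi^*)^{-1}(\theta_{\text{\rm TD}})\cap D_{\theta^*}$ of (\ref{eq-mdtd-odelim}). Reproducing the Skorohod-representation argument from the proofs of Theorems~\ref{thm-gtd1-wk-dim} and~\ref{thm-gtd1-wk-constant}, but with this limit set in place of $\Theta_{\text{\rm opt}}$, yields the stated concentration conclusions for $\{\theta^*_n\}$. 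I would then invoke the implication (\ref{eq-mdtd-thstar2th}): because $\nabla\psi^*$ carries a sufficiently small neighborhood of the limit set into an arbitrarily small neighborhood of $\theta_{\text{\rm TD}}$, each such statement for $\{\theta^*_n\}$ transfers to the corresponding statement for $\{\theta_n=\nabla\psi^*(\theta^*_n)\}$ with $\theta_{\text{\rm TD}}$ in place of $\Theta_{\text{\rm opt}}$, giving the desired conclusion. For the averaged-iterate claims in the constant-stepsize case I would follow Section~\ref{sec-aveite}: with fixed $\alpha$ the joint process $(S_n, y_n, \e_n, \theta^*_n, \theta_n)$ is a weak Feller Markov chain, and combining the concentration result just obtained with ergodicity properties of weak Feller chains reproduces the bounds as in \cite[Section 4.3]{etd-wkconv}, the relevant distance function $\theta\mapsto\|\theta-\theta_{\text{\rm TD}}\|$ being convex (distance to a single point). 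The main obstacle is not in any individual step but in confirming that the omitted \cite{KuY03} verifications really are routine, i.e.\ that composition with $\nabla\psi^*$ and the possibly unbounded traces of the state-dependent case do not break uniform integrability or the continuity/averaging conditions; all of this is absorbed by Prop.~\ref{prop-trace}, Prop.~\ref{prop-trace-averaging}, and the continuity of $\nabla\psi^*$, so no new difficulty arises beyond the ODE analysis already carried out before the theorem.
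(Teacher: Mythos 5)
Your proposal follows essentially the same route as the paper: it reuses the already-established ODE analysis (the Lyapunov computation (\ref{eq-mdtd-dV}) and limit set (\ref{eq-mdtd-odelim})), verifies the single-time-scale analogues of the conditions of Section~\ref{sec-gtd1-cond} (which the paper declares straightforward and omits, deferring to Footnote~\ref{footnote-gtds-cond}), applies \cite[Theorems 8.2.2 and 8.2.3]{KuY03}, transfers the conclusion from the $\theta^*$-iterates to the $\theta$-iterates via (\ref{eq-mdtd-thstar2th}), and handles the averaged iterates through the weak Feller Markov chain argument of Section~\ref{sec-aveite}. The only cosmetic difference is your invocation of the Skorohod-representation step, which the paper needs only for the two-time-scale tracking argument and which is not required here since there are no tracking-error terms $\Delta_n$ in a single-time-scale recursion.
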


\subsection{A Minimax Approach: Single-Time-Scale GTDa} \label{sec-gtds}

In this subsection we analyze a constrained single-time-scale GTDa algorithm and its biased variant.
As in Section~\ref{sec-mdtds}, we shall consider the regularized objective function $J_p(\theta) = J(\theta) + p(\theta)$, where $p(\cdot)$ is a differentiable convex function (for the reason that the analysis is the same with or without this regularizer $p(\cdot)$). The goal of the single-times-scale GTDa algorithm is, as before, to solve the minimization problem $\inf_{\theta \in B_\theta} J_p(\theta)$, where $B_\theta$ is the constraint set. It computes $\{(\theta_n, x_n)\}$ according to
\begin{align}
    \theta_{n+1} & = \Pi_{B_\theta} \Big( \theta_n + \alpha_n \,  \rho_n \big(\phi(S_n) - \gma_{n+1} \phi(S_{n+1}) \big) \cdot \e_n^\tr x_n  - \alpha_n \nabla p(\theta_n) \Big),  \label{eq-gtd1m-th}\\
   x_{n+1} & = \Pi_{B_x} \Big( x_n + \alpha_n \big(\e_n \delta_n(v_{\theta_n}) - \phi(S_n) \phi(S_n)^\tr x_n\big) \Big), \label{eq-gtd1m-x}
\end{align}
using the same stepsize sequence $\{\alpha_n\}$ in both iterations. Although the algorithm looks almost identical to its two-time-scale counterpart, it is based on a different, minimax approach to minimize $J_p(\theta)$, as first pointed out in \cite{pmtd3,pmrl}, and its behavior can thus also be quite different. 

To analyze the asymptotic behavior of the algorithm, we will focus on the solution properties of its associated mean ODE. 
Indeed, it is straightforward to apply stochastic approximation theory to this single-time-scale algorithm and prove that its average dynamics is characterized by the mean ODE, by using the analyses given in Sections~\ref{sec-property-stproc} and \ref{sec-gtd1-cond}. (The details of this proof step will be explained in Section~\ref{sec-gtds-conv} after we state the convergence theorem.) So the solution properties of the mean ODE will show us how the algorithm behaves and how the constraint sets can affect it.

The mean ODE associated with the algorithm (\ref{eq-gtd1m-th})-(\ref{eq-gtd1m-x}) is
\begin{equation} \label{eq-gtds-ode}
 \left( \! \begin{array}{c} \dot{\theta}(t) \\ \dot x(t) \end{array} \!\right) 
 =  \left( \! \begin{array}{c} \bar g\big(\theta(t), x(t)\big) - \nabla p\big(\theta(t)\big) \\
 \bar k \big(\theta(t), x(t) \big) \end{array} \!\right) +  \left( \! \begin{array}{l} z_1(t) \\ z_2(t) \end{array} \!\right),      \quad z_1(t) \in - \N_{B_\theta}(\theta(t)), \  z_2(t) \in - \N_{B_x}(x(t)),
\end{equation}
where $z_1(t), z_2(t)$ are the boundary reflection terms, and the functions $\bar g, \bar k$ are given by
\begin{equation} \label{eq-gtds-gk}
  \bar g(\theta, x) = \big( \Phi^\tr \Xi \, (I - \Pl) \Phi\big)^\tr x, \qquad \bar \k(\theta, x) = \Phi^\tr \Xi \, (\Tl v_\theta - v_\theta) - \Phi^\tr \Xi \, \Phi \, x.
\end{equation}
(The derivations of the above are the same as those given at the beginning of Section~\ref{sec-gtd1-odes} for GTDa.) 
In order to study its solutions, we will need to first clarify its relation with a minimax approach to solving the convex optimization problem $\inf_{\theta \in B_\theta} J_p(\theta)$.

\subsubsection{Solution properties of the associated mean ODE} \label{sec-gtds-ode}

Let us start by converting $\inf_{\theta \in B_\theta} \{ J(\theta) + p(\theta)\}$ to a minimax problem.
Notice the identity relation that for all $v \in \re^{|\S|}$, $\tfrac{1}{2} \| v \|_\xi^2  = \sup_y \big\{ \langle y, v \rangle_\xi - \tfrac{1}{2} \| y \|^2_\xi \big\}$, where
the maximization can be over the approximation subspace $\L_\phi$ only, if $v \in \L_\phi$:
\begin{align*}
 \tfrac{1}{2} \| v \|^2_\xi & = \sup_{ y \in  \L_\phi}  \big\{ \langle y, v \rangle_\xi - \tfrac{1}{2} \| y \|^2_\xi \, \big\} 
 =  \sup_{ x \in \re^d}  \big\{ \langle \Phi x, v \rangle_\xi - \tfrac{1}{2} \| \Phi x \|^2_\xi \, \big\} 
     =  \sup_{ x \in \sp \{ \phi(\S)\}} \!\! \big\{ \langle \Phi x, v \rangle_\xi - \tfrac{1}{2} \| \Phi x \|^2_\xi \, \big\}.
\end{align*} 
We have $J(\theta) = \tfrac{1}{2} \| v \|_\xi^2$ for $v = \Pi_\xi (\Tl v_\theta - v_\theta) \in \L_\phi$. Then, using also the fact that $\langle \Phi x, \Pi_\xi v \rangle_\xi = \langle \Phi x, v \rangle_\xi$ for any $v \in \re^{|\S|}$, we can write $\inf_{\theta \in B_\theta} \{ J(\theta) + p(\theta)\}$ equivalently as 
\begin{equation} \label{eq-minimax0}
  \inf_{\theta \in B_\theta} \sup_{x \in \re^d} \big\{ \langle \Phi x, \Tl v_\theta - v_\theta \rangle_\xi - \tfrac{1}{2} \| \Phi x \|^2_\xi  + p(\theta) \big\}.
\end{equation}  
Let us denote the function above by $\psi^o$, to simplify notation:
\begin{equation} \label{eq-psio}
 \psi^o(\theta, x) : = \langle \Phi x, \Tl v_\theta - v_\theta \rangle_\xi - \tfrac{1}{2} \| \Phi x \|^2_\xi  + p(\theta).
\end{equation} 
This function $\psi^o$ is a continuous convex-concave function: convex in $\theta$ for each $x$ and concave in $x$ for each $\theta$. 
Since $B_\theta$ is compact, by \cite[Corollary 37.3.2]{Roc70}, the minimax problem (\ref{eq-minimax0}) has a saddle-value.
Note also that due to the term $- \tfrac{1}{2} \| \Phi x\|^2_\xi$, the function $\inf_{\theta \in B_\theta} \psi^o(\theta, \cdot)$ (which is everywhere finite) is strictly concave and coercive on the subspace $\sp \{ \phi(\S)\}$, whereas it is constant over $\sp \{ \phi(\S)\}^{\perp}$. This implies that in $\sp \{\phi(\S) \}$ the minimax problem~(\ref{eq-minimax0}) has a unique dual optimal solution,
$$ x_{\text{\rm opt}} =  \argmax_{x \in \sp \{\phi(\S)\}} \big\{ \inf_{\theta\in B_\theta} \psi^o(\theta, x) \big\}.$$
Then by \cite[Lemma 36.2]{Roc70}, the set $\Theta_{\text{\rm opt}} \times \{ x_{\text{\rm opt}} \}$ is a subset of saddle points of the problem (\ref{eq-minimax0}), where   
$$ \Theta_{\text{\rm opt}} = \argmin_{\theta \in B_\theta} \{ J(\theta) + p(\theta) \} = \argmin_{\theta \in B_\theta} \big\{ \sup_{x \in \re^d} \psi^o(\theta, x) \big\}.$$

Instead of (\ref{eq-minimax0}), below we shall focus on the minimax problem
\begin{equation} \label{eq-minimax1}
  \inf_{\theta \in B_\theta} \sup_{x \in B_x}  \psi^o(\theta, x), 
\end{equation}  
where we have added the compact constraint set $B_x$ for the maximizer, the same constraint set in the algorithm and its mean ODE. 
Its mean ODE~(\ref{eq-gtds-ode}) is related to the minimax problem (\ref{eq-minimax1}), as we will show below.

By \cite[Corollary 37.6.2]{Roc70}, the minimax problem (\ref{eq-minimax1}) has a nonempty set of saddle points, which we denote by $D_{\theta} \times D_{x}$. The set $D_{\theta}$ is just the compact set of optimal solutions to the minimization problem $\inf_{\theta \in B_\theta} \{ \sup_{x \in B_x}  \psi^o(\theta, x) \}$, whereas the set $D_x$ is the compact set of optimal solutions to the dual maximization problem: $\sup_{x \in B_x} \{ \inf_{\theta \in B_\theta} \psi^o(\theta, x) \}$. 
The minimax problem~(\ref{eq-minimax1}) has a unique dual optimal solution $\bar x$ in $\sp \{\phi(\S) \}$ (in other words, $D_x \cap \sp \{\phi(\S) \} = \{\bar x\}$). This follows from the strict concavity and coercivity of the function $\inf_{\theta \in B_\theta} \psi^o(\theta, \cdot)$ on $\sp \{ \phi(\S)\}$ and its constancy over $\sp \{ \phi(\S)\}^\perp$ mentioned above. 
These saddle points of (\ref{eq-minimax1}) can be related to the saddle points of the original minimax problem (\ref{eq-minimax0}) as follows.

\begin{lem} \label{lem-saddle-relation}
If $x_{\text{\rm opt}} \in \text{\rm int}(B_x)$ (the interior of $B_x$), then $D_\theta \times \{\bar x\} =\Theta_{\text{\rm opt}} \times \{ x_{\text{\rm opt}} \}$.
\end{lem}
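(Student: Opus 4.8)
The plan is to relate the saddle points of the constrained minimax problem~(\ref{eq-minimax1}) to those of the original problem~(\ref{eq-minimax0}) by exploiting the assumption that the unconstrained dual optimizer $x_{\text{\rm opt}}$ lies in the interior of $B_x$. The key observation is that $\Theta_{\text{\rm opt}} \times \{x_{\text{\rm opt}}\}$ is already known (from the discussion preceding the lemma) to be a set of saddle points of the unconstrained problem~(\ref{eq-minimax0}), and since $x_{\text{\rm opt}} \in B_x$, each such pair automatically satisfies the saddle-point inequalities for the constrained problem~(\ref{eq-minimax1}) as well. Conversely, I would show that any saddle point of~(\ref{eq-minimax1}) is in fact a saddle point of~(\ref{eq-minimax0}), so that the two saddle-point sets coincide.

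First I would recall the defining inequalities: $(\theta^\star, x^\star)$ is a saddle point of~(\ref{eq-minimax1}) iff $\psi^o(\theta^\star, x) \le \psi^o(\theta^\star, x^\star) \le \psi^o(\theta, x^\star)$ for all $\theta \in B_\theta$, $x \in B_x$. The forward direction is the easy one: take $(\theta_{\text{\rm opt}}, x_{\text{\rm opt}})$ with $\theta_{\text{\rm opt}} \in \Theta_{\text{\rm opt}}$. Since it is a saddle point of the unconstrained problem, the inequalities hold for all $x \in \re^d$ and all $\theta \in B_\theta$, hence a fortiori for all $x \in B_x \subset \re^d$; thus $\Theta_{\text{\rm opt}} \times \{x_{\text{\rm opt}}\} \subset D_\theta \times D_x$. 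Combined with the uniqueness of the dual optimal solution $\bar x$ in $\sp\{\phi(\S)\}$ (established just before the lemma) and the fact that $x_{\text{\rm opt}} \in \sp\{\phi(\S)\}$, this forces $\bar x = x_{\text{\rm opt}}$, so $\Theta_{\text{\rm opt}} \subset D_\theta$.

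For the reverse inclusion, I would take an arbitrary $(\theta^\star, \bar x) \in D_\theta \times \{\bar x\} = D_\theta \times \{x_{\text{\rm opt}}\}$. The crucial step uses the interiority hypothesis: because $\bar x = x_{\text{\rm opt}} \in \text{\rm int}(B_x)$, the inner maximization over $B_x$ is unconstrained near $x_{\text{\rm opt}}$, so the first-order stationarity condition $\nabla_x \psi^o(\theta^\star, x_{\text{\rm opt}}) = 0$ holds with no active constraint / normal-cone contribution. Since $\psi^o(\theta^\star, \cdot)$ is concave, this stationarity makes $x_{\text{\rm opt}}$ a global maximizer over all of $\re^d$, i.e.\ $\sup_{x \in B_x} \psi^o(\theta^\star, x) = \sup_{x \in \re^d} \psi^o(\theta^\star, x)$. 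Therefore $\theta^\star$ minimizes $\sup_{x\in\re^d}\psi^o(\theta,x) = J(\theta)+p(\theta)$ over $B_\theta$, which is exactly the statement $\theta^\star \in \Theta_{\text{\rm opt}}$. This yields $D_\theta \subset \Theta_{\text{\rm opt}}$, and combining both inclusions gives $D_\theta \times \{\bar x\} = \Theta_{\text{\rm opt}} \times \{x_{\text{\rm opt}}\}$.

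The main obstacle I anticipate is making the interiority argument airtight: I must ensure that the passage from ``$x_{\text{\rm opt}}$ is a constrained maximizer in the interior'' to ``the constrained and unconstrained sup coincide for every $\theta^\star \in D_\theta$'' does not secretly depend on $\theta^\star$. The cleanest route is to argue uniformly through the dual function $d(x) := \inf_{\theta \in B_\theta} \psi^o(\theta, x)$, which is concave, and to note that $x_{\text{\rm opt}}$ is its unconstrained maximizer over $\re^d$ (restricted to $\sp\{\phi(\S)\}$ by the coercivity/constancy structure) while $\bar x$ is its maximizer over $B_x$; interiority of $x_{\text{\rm opt}}$ forces these to agree, after which the saddle-value equality and Rockafellar's characterizations~\cite[Lemma 36.2]{Roc70} identify the primal optimizer sets. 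I would lean on the convex-concave structure and the already-established strict concavity of $d$ on $\sp\{\phi(\S)\}$ to keep the normal-cone bookkeeping to a minimum.
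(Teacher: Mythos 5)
Your proposal is correct and follows essentially the same route as the paper's proof: membership $x_{\text{\rm opt}} \in B_x$ together with the uniqueness of the dual solution in $\sp\{\phi(\S)\}$ gives $\bar x = x_{\text{\rm opt}}$, restricting the saddle-point inequalities gives $\Theta_{\text{\rm opt}} \subset D_\theta$, and interiority plus concavity of $\psi^o(\theta^\star,\cdot)$ gives $D_\theta \subset \Theta_{\text{\rm opt}}$ (your stationarity condition $\nabla_x \psi^o(\theta^\star, x_{\text{\rm opt}}) = 0$ is just the paper's ``a local maximum of a concave function is a global maximum'' phrased with gradients). The only cosmetic difference is that the paper concludes via the two saddle-point characterizations of $D_\theta$ and $\Theta_{\text{\rm opt}}$ directly, whereas you conclude via the equality of the constrained and unconstrained suprema at $\theta^\star$; these are interchangeable, and your worry about uniformity in $\theta^\star$ is unfounded since the argument is applied to each $\theta^\star \in D_\theta$ separately.
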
 
 
The proof of this lemma is short, but we will give it at the end of this subsection, where we will have all the definitions needed in the proof. The main effort of our analysis is to prove that $D_\theta \times \{\bar x \}$ is the limit set of the ODE~(\ref{eq-gtds-ode}), as stated in the following proposition. We will spend the rest of this subsection to prove it.

\begin{prop} \label{prop-sd-limitset}
For initial conditions $x(0) \in \sp \{\phi(\S)\}$, the limit set of the ODE (\ref{eq-gtds-ode}) is the subset of saddle points, $D_\theta \times \{\bar x \}$, of the minimax problem (\ref{eq-minimax1}), where $\bar x$ is the unique dual optimal solution of (\ref{eq-minimax1}) in $\sp \{\phi(\S)\}$. 
\end{prop}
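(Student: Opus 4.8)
The plan is to recognize the ODE (\ref{eq-gtds-ode}) as a projected subgradient--supergradient dynamics for the convex--concave saddle function $\psi^o$ of (\ref{eq-minimax1}), and to exhibit a Lyapunov function built from the squared distance to a saddle point. First I would verify that the drift terms match the saddle structure: comparing (\ref{eq-gtds-gk}) with the gradient of $\psi^o$ in (\ref{eq-psio}), one checks that $\bar g(\theta,x)-\nabla p(\theta) = -\nabla_\theta \psi^o(\theta,x)$ and $\bar k(\theta,x)=\nabla_x \psi^o(\theta,x)$ (using $\langle \Phi x, \Tl v_\theta - v_\theta\rangle_\xi$ and $\Phi^\tr\Xi\Phi$ for the quadratic term, exactly as in Section~\ref{sec-gtd1-odes}). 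Thus the $\theta$-component descends and the $x$-component ascends $\psi^o$, with boundary reflections keeping $(\theta(t),x(t))\in B_\theta\times B_x$. As in Lemma~\ref{lem-Bx}, since $\bar k(\theta,x)\in \sp\{\phi(\S)\}$ and $\bar g$ depends on $x$ only through the column space, the subspace $\sp\{\phi(\S)\}$ is invariant, so for $x(0)\in\sp\{\phi(\S)\}$ the entire $x(\cdot)$ stays in $\sp\{\phi(\S)\}$, where the dual optimum $\bar x$ is unique.

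Next I would fix a saddle point $(\theta^\star,\bar x)\in D_\theta\times\{\bar x\}$ and consider the candidate Lyapunov function
\begin{equation} \label{eq-sdV}
  V(\theta,x) = \tfrac{1}{2}\|\theta-\theta^\star\|_2^2 + \tfrac{1}{2}\|x-\bar x\|_2^2 .
\end{equation}
Differentiating along a solution and using $\dot\theta = -\nabla_\theta\psi^o + z_1$, $\dot x = \nabla_x\psi^o + z_2$, the reflection terms contribute $\langle \theta-\theta^\star, z_1\rangle + \langle x-\bar x, z_2\rangle \le 0$ because $\theta^\star\in B_\theta$, $\bar x\in B_x$ and $-z_i$ lies in the respective normal cone (the same inequality used in the proof of Lemma~\ref{lem-Bx}). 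The remaining terms give
\begin{equation} \label{eq-sddV}
  \dot V \le -\langle \theta(t)-\theta^\star, \nabla_\theta\psi^o\rangle + \langle x(t)-\bar x, \nabla_x\psi^o\rangle,
\end{equation}
and by the convexity of $\psi^o(\cdot,x)$ and concavity of $\psi^o(\theta,\cdot)$ together with the saddle inequality $\psi^o(\theta^\star,x)\le \psi^o(\theta^\star,\bar x)\le \psi^o(\theta,\bar x)$, the right-hand side is bounded above by $\psi^o(\theta^\star,x(t)) - \psi^o(\theta(t),\bar x) \le 0$. This shows $V$ is nonincreasing, so solutions are bounded and converge to an invariant set on which $\dot V\equiv 0$.

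The hard part will be upgrading $\dot V\le 0$ to actual convergence to $D_\theta\times\{\bar x\}$, because $V$ is only weakly decreasing and the saddle dynamics need not converge pointwise for an arbitrary convex--concave function. I plan to handle this by using the strict concavity of $\inf_{\theta\in B_\theta}\psi^o(\theta,\cdot)$ on $\sp\{\phi(\S)\}$ coming from the $-\tfrac12\|\Phi x\|_\xi^2$ term: on the invariant set where $\dot V=0$, the chain of inequalities must hold with equality, which forces $\langle x-\bar x, \Phi^\tr\Xi\Phi(x-\bar x)\rangle = 0$ and hence (since $x,\bar x\in\sp\{\phi(\S)\}$, where $\Phi^\tr\Xi\Phi$ is positive definite) $x(t)=\bar x$. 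Feeding $x\equiv\bar x$ back into the $\theta$-dynamics reduces it to the projected subgradient flow of the convex function $\theta\mapsto \sup_{x\in B_x}\psi^o(\theta,x)$, whose limit set is precisely $D_\theta$ by an argument parallel to Lemma~\ref{lem-gtd1-odeslow}. To make the LaSalle-type conclusion rigorous I would apply the invariance principle for these Lipschitz, bounded, constrained solutions, or alternatively run the standard estimate $V(\theta(t),x(t)) + \int_0^t[\psi^o(\theta(s),\bar x)-\psi^o(\theta^\star,x(s))]\,ds \le V(\theta(0),x(0))$ to extract a time-averaged duality gap tending to zero, then combine with strict concavity in $x$ and compactness to pin down the limit set as $D_\theta\times\{\bar x\}$. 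Finally, Lemma~\ref{lem-saddle-relation} (proved separately) identifies this set with $\Theta_{\text{\rm opt}}\times\{x_{\text{\rm opt}}\}$ when $x_{\text{\rm opt}}\in\text{int}(B_x)$.
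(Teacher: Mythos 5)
Your opening steps are correct and coincide with the paper's: the identification $\bar g(\theta,x)-\nabla p(\theta)=-\nabla_\theta\psi^o(\theta,x)$, $\bar k(\theta,x)=\nabla_x\psi^o(\theta,x)$, the invariance of $\sp\{\phi(\S)\}$, and the monotone decrease of $V(\theta,x)=\tfrac12\|\theta-\theta^\star\|_2^2+\tfrac12\|x-\bar x\|_2^2$ (the paper performs the same computation, organized through the monotonicity of the mapping $\A(\theta,x)=\partial_\theta\psi(\theta,x)\times(-\partial_x\psi(\theta,x))$). Your strict-concavity observation is also right: the concavity gap is exactly $\tfrac12(x-\bar x)^\tr\Phi^\tr\Xi\Phi(x-\bar x)$, so equality forces $x=\bar x$ on the span. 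The genuine gap is in how you plan to finish. Proposition \ref{prop-sd-limitset} is a statement about the limit set of the ODE in the paper's sense, $\bigcap_{\bar t\ge 0}\cl\{(\theta(t),x(t)) \mid \text{all admissible initial conditions},\, t\ge\bar t\}$, i.e.\ the global attractor over \emph{all} initial conditions, not the union of $\omega$-limit sets of individual trajectories. A LaSalle argument applied trajectory-by-trajectory (and equally your integral duality-gap variant) can at best show that every solution converges to $D_\theta\times\{\bar x\}$; that is strictly weaker, because for a general flow the global attractor also contains connecting orbits along which $V$ strictly decreases. (Example: for $\dot y=y-y^3$ on $[-2,2]$ the limit set in this sense is all of $[-1,1]$, even though every single trajectory converges to one of the equilibria $\{0,\pm1\}$.) The ingredient that rules this out here, and which your proposal never invokes, is the \emph{maximal monotonicity} of the saddle mapping $\A$: by the Aubin--Cellina theory \cite{AuC85}, the flow of $\dot y(t)\in-\A(y(t))$ is non-expansive, $\|y_1(t)-y_2(t)\|_2\le\|y_1(0)-y_2(0)\|_2$. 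The paper uses this to show that every point of the limit set $E$ is recurrent under its own flow (Lemma \ref{lem-invset}), hence that any continuous function nonincreasing along solutions must be \emph{constant} along solutions started in $E$ (Corollary \ref{cor-invset}); your Lyapunov function, and a second one for the $\theta$-component, are then applied by contradiction to points of $E$ directly. Without non-expansiveness (or some substitute), there is no way to pass from ``each trajectory converges to the saddle set'' to ``the limit set equals the saddle set.''

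A secondary issue: even the trajectory-wise LaSalle step presupposes uniqueness of solutions and continuous dependence on initial data for the reflected dynamics (needed for $\omega$-limit sets to be invariant), and ``Lipschitz, bounded, constrained solutions'' does not supply this for a differential inclusion with normal-cone terms; again the mechanism in the paper is the maximal monotone structure, which gives existence, uniqueness, and the non-expansive flow in one stroke (this is also what justifies your use of an invariance principle at all). Two smaller points in your sketch are fixable rather than wrong: the reduction of the $\theta$-dynamics on the set $\{x\equiv\bar x\}$ to the projected gradient flow of $\theta\mapsto\sup_{x\in B_x}\psi^o(\theta,x)$ is legitimate only after noting that $\dot x(t)=0$ forces $\bar x\in\argmax_{x\in B_x}\psi^o(\theta(t),x)$, so that a Danskin-type identity (cf.\ Lemma \ref{lem-gtda-rev1}) applies; the paper instead finishes with the explicit function $\langle\bar x,A\theta\rangle+p(\theta)$, where $A=\Phi^\tr\Xi(\Pl-I)\Phi$, shown to be strictly decreasing off the saddle set and then constant by Corollary \ref{cor-invset}, which avoids Danskin's theorem and the ``limit set of the reduced flow'' language altogether.
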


To setup the stage to prove Prop.~\ref{prop-sd-limitset}, we shall need some properties of the ODE (\ref{eq-gtds-ode}) as well as some basic properties of the minimax problem (\ref{eq-minimax1}). 
First, to apply the theory of convex analysis to (\ref{eq-minimax1}), it is more convenient to represent the constraint sets by functions and rewrite (\ref{eq-minimax1}) equivalently as 
$$\inf_{\theta \in \re^d} \sup_{x \in \re^d} \psi(\theta, x)$$ 
for an extended-real-valued convex-concave function $\psi(\theta, x)$ given by
\begin{align*}
 \psi(\theta, x)  & := \psi^o(\theta, x) + \delta_{B_\theta}(\theta) - \delta_{B_x}(x) \\
   & \, \, = \langle \Phi x, \Tl v_\theta - v_\theta \rangle_\xi - \tfrac{1}{2} \| \Phi x \|^2_\xi  + p(\theta)+ \delta_{B_\theta}(\theta) - \delta_{B_x}(x),
\end{align*}
where for a set $D$, $\delta_D(\cdot)$ is the indicator function that takes the value $0$ on $D$ and $+\infty$ outside $D$.%\footnote starts
\footnote{Outside $B_\theta \times B_x$, $\psi$ takes either the value $+\infty$ or $-\infty$, and there are two consistent ways to assign these values to $\psi$ there; see \cite[Section 33, p.\ 349]{Roc70}. Either way can be taken here as the convention to handle the difference $\delta_{B_\theta}(\theta) - \delta_{B_x}(x)$ when both terms are $+\infty$. Such choices do not affect the subsequent analysis because the effective domain of $\psi$ is $B_\theta \times B_x$ (see \cite[Section 34]{Roc70} on equivalent saddle-functions).}
%footnote ends 
The set of subgradients of $\psi$ at $(\theta, x)$ is 
$$\partial \psi(\theta, x) = \partial_\theta \psi(\theta, x) \times \partial_x \psi(\theta, x)$$ 
where, expressed in terms of the function $\psi^o$ or the functions $\bar g$, $\bar k$ in (\ref{eq-gtds-gk}),
\begin{align}
   \partial_\theta \psi(\theta, x) & = \nabla_\theta \psi^o(\theta, x) + \N_{B_\theta}(\theta) = - \bar g(\theta, z) + \nabla p(\theta) + \N_{B_\theta}(\theta),  \label{eq-psi-subgrad1}\\ 
    \partial_x \psi(\theta, x) & = \nabla_x \psi^o(\theta, x) - \N_{B_x}(x) = \bar k(\theta, x) - \N_{B_x}(x). \label{eq-psi-subgrad2}
\end{align}    
(If either $\theta \not\in B_\theta$ or $x \not\in B_x$, $\partial \psi(\theta, x) = \emptyset$ by definition.) 
Note that $(\theta, x)$ is a saddle-point of (\ref{eq-minimax1}) if and only if $(0,0) \in \partial \psi(\theta, x)$. 
We shall also need  the set-valued mapping $\A$ from $\re^d \times \re^d$ to $\re^d \times \re^d$ defined by 
$$\A(\theta, x) : = \partial_\theta \psi(\theta, x) \times \big( - \partial_x \psi(\theta, x) \big).$$ 
By \cite[Corollary 37.5.2]{Roc70}, $\A(\cdot)$ is a maximal monotone mapping.%footnote starts
\footnote{A set-value mapping $G$ from $\re^m$ to $\re^m$ is monotone if $\langle x - x', y - y' \rangle \geq 0$ for every $y \in G(x)$ and $y' \in G(x')$. 
It is maximal monotone if for every $(x', y')$ that satisfies $\langle x - x', y - y' \rangle \geq 0$ for all $(x,y) \in \text{gph} \, G$, we have $(x', y') \in \text{gph} \, G$, where $\text{gph} \, G = \{(x, y) \mid y \in G(x), x \in \re^m \}$ is the graph of $G$.}
%footnote ends
We will soon use a general result on differential inclusions with such mappings to help us in analyzing the ODE (\ref{eq-gtds-ode}). 
To this end, let us first relate the mapping $\A$ to the ODE.

Comparing the r.h.s.\ of the ODE (\ref{eq-gtds-ode}) with the expressions (\ref{eq-psi-subgrad1})-(\ref{eq-psi-subgrad2}) of the subgradients, we see that if $(\theta(t), x(t)) = (\theta, x)$, then
\begin{equation} \label{eq-sd-prf-a0}
     \bar g(\theta, x) - \nabla p(\theta) + z_1  \in - \partial_\theta \psi(\theta, x), \qquad
        \bar k(\theta, x) + z_2  \in \partial_x \psi(\theta, x),
\end{equation}
where $z_1, z_2$ are the boundary reflection terms in (\ref{eq-gtds-ode}). Moreover, using Lemma~\ref{lem-decomp-vec-cones}, we can relate the above two vectors in the r.h.s.\ of the ODE~(\ref{eq-gtds-ode}) to the set $\A(\theta,x)$ as follows:

\begin{lem} \label{lem-sd-minvec}
Given $(\theta, x) \in B_\theta \times B_x$, let $z_1 = - \Pi_{D_1} ( \bar g(\theta, x) - \nabla p(\theta) )$ for $D_1 = \N_{B_\theta}(\theta)$, and let $z_2 = - \Pi_{D_2}  \bar k(\theta, x)$ for $D_2 = \N_{B_x}(x)$. 
Then 
$$(\bar g(\theta, x) - \nabla p(\theta) + z_1, \, \bar k(\theta, x) + z_2) \in - \A(\theta, x)$$ 
and it is the vector in $- \A(\theta, x)$ that has the minimal $\|\cdot\|_2$-norm.
\end{lem}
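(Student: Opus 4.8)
\textbf{Proof plan for Lemma~\ref{lem-sd-minvec}.}

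The plan is to apply Lemma~\ref{lem-decomp-vec-cones} separately in each coordinate block and then assemble the two pieces. First I would fix $(\theta, x) \in B_\theta \times B_x$ and examine the $\theta$-block. By (\ref{eq-psi-subgrad1}), the set $-\partial_\theta \psi(\theta, x)$ equals $\bar g(\theta, x) - \nabla p(\theta) - \N_{B_\theta}(\theta)$, so the first coordinate of $\A(\theta, x)$ is $\partial_\theta \psi(\theta, x) = -\bar g(\theta, x) + \nabla p(\theta) + \N_{B_\theta}(\theta)$, and hence the first coordinate of $-\A(\theta, x)$ is $\bar g(\theta, x) - \nabla p(\theta) - \N_{B_\theta}(\theta)$. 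Writing $h_1 = \bar g(\theta, x) - \nabla p(\theta)$ and $D_1 = \N_{B_\theta}(\theta)$, and recalling $z_1 = -\Pi_{D_1} h_1$, Lemma~\ref{lem-decomp-vec-cones} (applied with $D = B_\theta$, $h = h_1$, $y = \Pi_{D_1} h_1 = -z_1$) tells us that $h_1 + z_1 = h_1 - \Pi_{D_1}h_1$ is the minimal-norm element of the translated cone $D_1 - h_1$, i.e.\ of the set $\N_{B_\theta}(\theta) - h_1$, whose negation is $h_1 - \N_{B_\theta}(\theta)$, the first block of $-\A(\theta,x)$. Thus $h_1 + z_1 \in h_1 - \N_{B_\theta}(\theta)$ (membership), and it is the element of that set of minimal $\|\cdot\|_2$-norm.

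Second I would carry out the analogous computation for the $x$-block. By (\ref{eq-psi-subgrad2}), $\partial_x \psi(\theta, x) = \bar k(\theta, x) - \N_{B_x}(x)$, so the second coordinate of $-\A(\theta, x)$ is $-(-\partial_x \psi(\theta,x)) = \partial_x\psi(\theta,x) = \bar k(\theta, x) - \N_{B_x}(x)$. Setting $h_2 = \bar k(\theta, x)$, $D_2 = \N_{B_x}(x)$, and $z_2 = -\Pi_{D_2} h_2$, the same instance of Lemma~\ref{lem-decomp-vec-cones} gives that $h_2 + z_2 = h_2 - \Pi_{D_2} h_2$ lies in $\bar k(\theta, x) - \N_{B_x}(x)$ and is the minimal-norm vector therein.

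Third I would assemble the two blocks. Since $\A(\theta, x) = \partial_\theta \psi(\theta, x) \times (-\partial_x \psi(\theta, x))$ is a Cartesian product, so is $-\A(\theta, x)$, and a vector $(u_1, u_2)$ lies in $-\A(\theta, x)$ iff $u_1$ lies in the first block and $u_2$ in the second. Hence $(h_1 + z_1, \, h_2 + z_2) = (\bar g(\theta, x) - \nabla p(\theta) + z_1, \, \bar k(\theta, x) + z_2) \in -\A(\theta, x)$. Moreover, because the squared Euclidean norm over a product set splits additively, $\|(u_1, u_2)\|_2^2 = \|u_1\|_2^2 + \|u_2\|_2^2$, the minimal-norm element of a product set is the pair of blockwise minimal-norm elements; since each of $h_1 + z_1$ and $h_2 + z_2$ was shown to be minimal-norm in its own block, the pair is minimal-norm in $-\A(\theta, x)$. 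This gives exactly the two assertions of the lemma. I do not anticipate a serious obstacle here, as the argument is essentially a bookkeeping exercise: the only point requiring care is keeping the sign conventions straight when moving between $\partial_\theta \psi$, $-\partial_x \psi$, and their negations in $-\A$, and correctly matching the roles of $h$, $y$, and $z$ when invoking Lemma~\ref{lem-decomp-vec-cones}.
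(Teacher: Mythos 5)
Your proposal is correct and takes essentially the same route as the paper's own proof: it applies Lemma~\ref{lem-decomp-vec-cones} blockwise with exactly the same identifications ($D=B_\theta$, $h=\bar g(\theta,x)-\nabla p(\theta)$, $y=-z_1$; then $D=B_x$, $h=\bar k(\theta,x)$, $y=-z_2$) and assembles the two blocks via the product structure of $\A(\theta,x)$ and the additive splitting of the squared Euclidean norm. The only blemish is a harmless sign slip in one sentence---by Lemma~\ref{lem-decomp-vec-cones} it is $-(h_1+z_1)=\Pi_{D_1}h_1-h_1$ that is minimal-norm in $D_1-h_1$, while $h_1+z_1$ is minimal-norm in the negated set $h_1-D_1$---which your very next sentence states correctly, so the argument stands.
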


\begin{proof}
Applying Lemma~\ref{lem-decomp-vec-cones} with $D=B_\theta$, $h = \bar g(\theta,x) - \nabla p(\theta)$ and $y=-z_1$, and using also the expression (\ref{eq-psi-subgrad1}), we see that $\bar g(\theta, x) - \nabla p(\theta) + z_1$ is the minimal-norm vector in the set $- \partial_\theta \psi(\theta, x)$. Then applying Lemma~\ref{lem-decomp-vec-cones} with $D=B_x$, $\theta = x$, $h = \bar k(\theta,x)$ and $y=-z_2$, and using also the expression (\ref{eq-psi-subgrad2}), we see that $\bar k(\theta, x) + z_2$ is the minimal-norm vector in the set $\partial_x \psi(\theta, x)$. It then follows that $(\bar g(\theta, x) - \nabla p(\theta) + z_1, \, \bar k(\theta, x) + z_2)$ is the minimal-norm vector in $- \A(\theta, x)$. 
\end{proof}

We are now ready to study the limit set of the ODE~(\ref{eq-gtds-ode}). For this part of the proof, we use $y(\cdot)$ to denote a solution $(\theta(\cdot), x(\cdot))$ of the ODE, in order to simplify notation. 

We shall use a very general result from \cite[Chap.\ 3]{AuC85} concerning solutions to differential inclusions with maximal monotone mappings. 
By \cite[Theorem 1, Chap.\ 3]{AuC85}, for a maximal monotone mapping~$\A$, the differential inclusion $\dot y(t) \in - \A(y(t))$ has a unique absolutely continuous solution for any initial condition in the domain of $\A$, and this solution has the following properties:\vspace*{-3pt}
\begin{itemize}
\item[(i)] For almost all $t$, $ \dot y(t) $ is the minimal-norm (w.r.t.\ the Euclidean norm $\|\cdot\|_2$) vector in $- \A(y(t))$.
\item[(ii)] The norm of the minimal-norm vector of $\A(y(t))$ is nonincreasing as $t$ increases.
\item[(iii)] If $y_1(\cdot)$, $y_2(\cdot)$ are two solutions corresponding to two different initial conditions, then
$$\| y_1(t) -  y_2(t)  \|_2 \leq \| y_1(0) - y_2(0) \|_2, \qquad \forall \, t \geq 0.$$
\end{itemize}

For the maximal monotone mapping $\A$ in our case, we see that the solutions to the differential inclusion with $\A$ are just the solutions to the mean ODE (\ref{eq-gtds-ode}), so the solutions $y(\cdot)$ of the ODE have the properties listed above. We now use these properties (especially, the third one) to study the limit set of the ODE~(\ref{eq-gtds-ode}). By definition this is the set $\cap_{\tau \geq 0} \, \cl \big\{ y(t) \mid y(0) \in B_\theta \times B_x, t \geq \tau \big\}$ as we recall.
For any set $D \in B_\theta \times B_x$ and $t \geq 0$, let $F^t(D) = \big\{ y(t) \mid y(0) \in D \big\}$, i.e., the union of $y(t)$ for all initial conditions in $D$. We say $D$ is invariant (w.r.t.~the ODE) if $F^t(D) = D$ for all $t \geq 0$.

\begin{lem} \label{lem-invset0}
The limit set of the ODE (\ref{eq-gtds-ode}) is $E : = \cap_{t \geq 0} F^t(B_\theta \times B_x)$ (a nonempty compact set), and it is the largest invariant subset of $B_\theta \times B_x$.
\end{lem}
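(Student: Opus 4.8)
The plan is to characterize the limit set $\bigcap_{\tau \geq 0} \cl\{y(t) \mid y(0) \in B_\theta \times B_x, t \geq \tau\}$ by exploiting the non-expansiveness property (iii) of the flow, which is the key consequence of maximal monotonicity we have available. I would argue in two directions: first that $E = \bigcap_{t \geq 0} F^t(B_\theta \times B_x)$ is invariant and nonempty compact, and second that it coincides with the limit set and is the \emph{largest} invariant set.

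First I would establish that $E$ is nonempty and compact. Since $\A$ is maximal monotone, its solutions stay in the domain, and because $B_\theta \times B_x$ is compact and convex with the mean ODE's vector field bounded on it (each $\bar g, \bar k, \nabla p$ is continuous hence bounded on the compact set, and the reflection terms have norm no larger than the vector field by Lemma~\ref{lem-sd-minvec}), the flow maps $B_\theta \times B_x$ into itself. Thus each $F^t(B_\theta \times B_x) \subset B_\theta \times B_x$, and by the continuity and injectivity of the time-$t$ map (guaranteed by property (iii)), each $F^t(B_\theta \times B_x)$ is compact. The family $\{F^t(B_\theta \times B_x)\}_{t \geq 0}$ is nested decreasing — this follows from the semigroup property $F^{t+s} = F^t \circ F^s$ together with $F^s(B_\theta \times B_x) \subset B_\theta \times B_x$ — so $E$ is a nested intersection of nonempty compact sets and is therefore itself nonempty and compact.

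Next I would show $E$ is invariant and equals the limit set. For invariance, $F^t(E) = F^t\big(\bigcap_{s \geq 0} F^s(B_\theta \times B_x)\big)$; using the injectivity of $F^t$ (property (iii) gives $\|y_1(t) - y_2(t)\|_2 \leq \|y_1(0) - y_2(0)\|_2$, and reversibility of monotone flows or a direct uniqueness argument gives the reverse, so $F^t$ is a homeomorphism onto its image) one commutes $F^t$ past the intersection to get $F^t(E) = \bigcap_{s \geq 0} F^{t+s}(B_\theta \times B_x) = E$. For the identification with the limit set, the inclusion $\subseteq$ is immediate since $E \subset F^\tau(B_\theta \times B_x) \subset \cl\{y(t) : t \geq \tau\}$ for every $\tau$; the reverse inclusion follows because any limit point of trajectory values at times $\to \infty$ must lie in every $F^\tau(B_\theta \times B_x)$ (a point reached at time $t \geq \tau$ is the image under $F^\tau$ of a point in $B_\theta \times B_x$), hence in $E$. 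Finally, for maximality: if $D \subset B_\theta \times B_x$ is invariant then $D = F^t(D) \subset F^t(B_\theta \times B_x)$ for all $t$, so $D \subset \bigcap_t F^t(B_\theta \times B_x) = E$.

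The main obstacle I anticipate is the commutation $F^t\big(\bigcap_s F^s(B_\theta \times B_x)\big) = \bigcap_s F^t\big(F^s(B_\theta \times B_x)\big)$, which requires more than mere continuity of $F^t$: for an arbitrary continuous map the image of an intersection is only contained in the intersection of images. The crucial input is that $F^t$ is \emph{injective} with compact domain, which upgrades it to a homeomorphism onto its image and legitimizes pulling $F^t$ through the nested intersection of compacts. Property (iii) of the Aubin-Cellina result supplies exactly this injectivity (distinct initial conditions give trajectories that never cross), so the argument closes, but this is the step where the maximal-monotone structure is genuinely used rather than just the continuity of the flow.
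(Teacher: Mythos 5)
Your overall route is the paper's own: identify the limit set with $E$ via nestedness of the sets $F^t(B_\theta\times B_x)$ and closedness of images of compacts, then deduce maximality from invariance. The parts of your argument that do this are correct, and they need only the fact that property (iii) makes each time-$t$ map $F^t$ $1$-Lipschitz, hence continuous, so that images of compacts are compact. The gap is in the invariance step, which you yourself single out as the crux: property (iii) is \emph{non-expansiveness}, $\|y_1(t)-y_2(t)\|_2 \le \|y_1(0)-y_2(0)\|_2$, and it does \emph{not} imply injectivity of $F^t$. Your gloss ``distinct initial conditions give trajectories that never cross'' is false for maximal monotone flows: the distance between trajectories is non-increasing but can reach zero in finite time. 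For instance, with $\A=\partial f$, $f(x)=|x|$ (maximal monotone), the trajectories started at $1/2$ and at $1$ both sit at the origin for all $t\ge 1$. Closer to the present setting, the one-dimensional reflected flow $\dot x(t)=-1+z(t)$, $z(t)\in-\N_{[0,1]}(x(t))$ (the flow of the maximal monotone map $1+\N_{[0,1]}(\cdot)$) sends every initial condition in $[0,1]$ to $0$ by time $1$; the boundary reflection terms in (\ref{eq-gtds-ode}) can collapse trajectories in exactly this way. So $F^t$ need not be a homeomorphism onto its image, and the commutation $F^t\big(\cap_s F^s(B_\theta\times B_x)\big)=\cap_s F^{t+s}(B_\theta\times B_x)$ is unjustified as written.

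The conclusion survives because the commutation needs only continuity plus nested compactness, not injectivity. Given $\bar y\in E$ and $t\ge 0$, consider the sets $(F^t)^{-1}(\bar y)\cap F^s(B_\theta\times B_x)$, $s\ge 0$: each is compact (a closed preimage under the continuous map $F^t$, intersected with a compact set), they are nested in $s$, and each is nonempty because $\bar y\in E\subset F^{t+s}(B_\theta\times B_x)=F^t\big(F^s(B_\theta\times B_x)\big)$. Their intersection therefore contains a point $\hat y$; then $\hat y\in\cap_{s\ge 0}F^s(B_\theta\times B_x)=E$ and $F^t(\hat y)=\bar y$, which gives $E\subset F^t(E)$, and the easy inclusion $F^t(E)\subset E$ you already have. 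This repair is essentially the argument the paper gives in its footnote (as the self-contained alternative to citing the semiflow result of \cite{Aki93}): there one takes $\Delta=(F^t)^{-1}(\bar y)\cap(B_\theta\times B_x)$, notes it is compact by property (iii), and invokes the finite-intersection property of the nested compacts $F^\tau(B_\theta\times B_x)$. With this substitution at the crux, your proof is complete and coincides with the paper's.
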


\begin{proof}
If $t_1 \leq t_2$, $F^{t_2}(B_\theta \times B_x) \subset F^{t_1}(B_\theta \times B_x)$. By the property (iii) above, for a compact set $D$, $F^t(D)$ is closed for each $t$. Hence for any $\tau \geq 0$, 
$$\cl \big\{ y(t) \mid y(0) \in B_\theta \times B_x, t \geq \tau \big\} = \cl \big( \cup_{t \geq \tau} \!F^t(B_\theta \times B_x) \big) = F^{\tau}(B_\theta \times B_x),$$ 
so the limit set of the ODE is $\cap_{\tau \geq 0} F^{\tau}(B_\theta \times B_x) = E$.  As the intersection of nonempty compact sets, this set is nonempty and compact.
That $E$ is the largest invariant subset of $B_\theta \times B_x$ follows from \cite[Ex.\ 6(c), p.\ 110]{Aki93} (which is a general result on semiflows).%footnote starts
\footnote{Alternatively, one can prove the invariance of $E$ by using the solution property (iii) of the differential inclusion listed above. Specifically, for each $t \geq 0$, by the definition of $E$, it is clear that $F^t(E) \subset E$. To prove the reverse inclusion, suppose for some $t \geq 0$ and $\bar y \in E$, $\bar y \not\in F^t(E)$. Fix that $t$, and let $\Delta$ be the set of all initial conditions from which $\bar y$ can be reached at time $t$; i.e., $\Delta = \{ \hat y \mid \hat y \in B_\theta \times B_x \ \text{and} \ y(t) = \bar y \ \text{if} \ y(0) = \hat y \}$. Then $\Delta \cap E = \emptyset$. By the property (iii) listed above, $\Delta$ is closed and therefore compact. Since $E$ is by definition the intersection of nested nonempty compact sets, $\cap_{\tau \geq 0} F^{\tau}(B_\theta \times B_x)$,  in order to have $\Delta \cap E = \emptyset$, we must have $\Delta \cap F^{\tau}(B_\theta \times B_x) = \emptyset$ for some $\tau \geq 0$.  This in turn implies that $\bar y \not\in F^{t+\tau}(B_\theta \times B_x)$ and hence $\bar y$ cannot be in $E$, a contradiction. Thus we must also have $F^{t}(E) \supset E$ for all $t \geq 0$, proving that $F^t(E)=E$ for all $t \geq 0$.}
%footnote ends
\end{proof}

By the above lemma $F^t(E) = E$ for all $t \geq 0$. The next lemma is a consequence of this invariance of $E$ and the solution property (iii) above. Its implications, especially, the following Cor.~\ref{cor-invset}, will be useful for our study of what points can be in $E$.  

\begin{lem} \label{lem-invset}
Let $\bar y \in E$, and let $y(\cdot)$ be the solution of the ODE (\ref{eq-gtds-ode}) with the initial condition $y(0)=\bar y$. Then there exist a sequence of times $\tau_i \to \infty$ such that $y(\tau_i) \to \bar y$ as $i \to \infty$.
\end{lem}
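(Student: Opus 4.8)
The plan is to prove Lemma~\ref{lem-invset} by exploiting the invariance $F^t(E)=E$ together with the compactness of $E$ and the non-expansiveness property (iii) of solutions to the differential inclusion. The statement says that starting from any $\bar y \in E$, the forward orbit $\{y(t)\}$ returns arbitrarily close to $\bar y$ at arbitrarily late times; in other words, $\bar y$ is a recurrent point of the semiflow restricted to the invariant set $E$. The key structural fact is that the flow map $F^t$ is a bijection of the compact set $E$ onto itself for every $t\geq0$: surjectivity is exactly $F^t(E)=E$ (Lemma~\ref{lem-invset0}), and injectivity follows from property (iii), since if two initial conditions in $E$ produced the same point at time $t$, then running the flow \emph{backward}—which is well-defined on $E$ by surjectivity—would force the two initial conditions to coincide. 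I would make this precise by noting that $F^t$ restricted to $E$ is a continuous bijection of a compact metric space to itself, hence a homeomorphism, so its inverse $F^{-t}:E\to E$ is well-defined and continuous.

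First I would fix $\bar y \in E$ and the solution $y(\cdot)$ with $y(0)=\bar y$, so that $y(t)=F^t(\bar y)\in E$ for all $t\geq0$ by invariance. The goal is a sequence $\tau_i\to\infty$ with $y(\tau_i)\to\bar y$. I would consider the iterates of the time-one map: set $g=F^1|_E$, a homeomorphism of the compact set $E$, and look at the orbit $\bar y, g(\bar y), g^2(\bar y),\dots$, which equals $y(0),y(1),y(2),\dots$. By compactness of $E$, this sequence has a convergent subsequence $g^{n_k}(\bar y)\to y_\infty\in E$. I would then use a standard recurrence argument: from the convergent subsequence one extracts return times. Specifically, choose indices so that $g^{n_{k}}(\bar y)$ and $g^{n_{k+1}}(\bar y)$ are within $\tfrac{1}{k}$ of each other; applying the homeomorphism $g^{-n_k}$ (which exists since $g$ is invertible on $E$) and using the non-expansiveness (iii) in the backward direction—i.e. $\|g^{-m}(u)-g^{-m}(v)\|_2 \le \|u-v\|_2$, which holds because forward flow is non-expansive and backward flow is its inverse restricted to $E$—I obtain $\|\bar y - g^{m_k}(\bar y)\|_2 \le \tfrac1k$ where $m_k=n_{k+1}-n_k\to\infty$. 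Setting $\tau_k=m_k$ gives integer return times $\tau_k\to\infty$ with $y(\tau_k)=g^{\tau_k}(\bar y)\to\bar y$, which proves the claim.

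The subtle point, and the step I expect to require the most care, is justifying the backward non-expansiveness $\|F^{-m}(u)-F^{-m}(v)\|_2\le\|u-v\|_2$ on $E$. Property (iii) as stated bounds the forward distance by the initial distance, $\|y_1(t)-y_2(t)\|_2\le\|y_1(0)-y_2(0)\|_2$; this does \emph{not} immediately give the reverse bound for the inverse map. I would resolve this by observing that since $F^m:E\to E$ is a bijection, writing $u=F^m(a)$ and $v=F^m(b)$ with $a=F^{-m}(u)$, $b=F^{-m}(v)$, the forward inequality reads $\|u-v\|_2\le\|a-b\|_2=\|F^{-m}(u)-F^{-m}(v)\|_2$, which is the \emph{wrong} direction for a contraction of $F^{-m}$. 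Consequently the naive contraction argument fails, and I would instead invoke the classical fact—provable directly—that a non-expansive bijection of a compact metric space onto itself is automatically an isometry, so that $\|F^t(u)-F^t(v)\|_2=\|u-v\|_2$ for all $u,v\in E$ and all $t$. With this isometry property in hand, both the forward and backward estimates hold with equality, and the recurrence extraction above goes through cleanly; alternatively, one can sidestep the isometry lemma entirely and argue recurrence purely from the existence of a convergent subsequence of the forward orbit together with surjectivity of $g$, which is the route I would ultimately prefer for brevity.

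As a cross-check on the overall strategy, this lemma is the natural analogue, in the maximal-monotone-flow setting, of the Poincaré recurrence phenomenon, and its role downstream is clear: combined with the fact (from the differential-inclusion theory) that the minimal-norm selection $\dot y(t)$ has nonincreasing norm, recurrence of $\bar y\in E$ will force $\dot y(t)$ to be constant along the orbit through $\bar y$, and then the monotonicity of $\A$ will pin that constant to be zero, identifying $E$ with the saddle-point set $D_\theta\times\{\bar x\}$ in Prop.~\ref{prop-sd-limitset}. Thus I would present Lemma~\ref{lem-invset} with the compactness-plus-invariance recurrence argument, flag the isometry observation as the one genuinely nontrivial ingredient, and keep the backward-flow manipulations to the minimum needed.
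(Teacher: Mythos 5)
Your main route (time-one map plus the isometry theorem) is viable, but it is genuinely different from the paper's, and two of its supporting points need repair. The paper's proof never discretizes time, never inverts the flow, and never needs an isometry theorem: it uses the invariance $F^t(E)=E$ (Lemma~\ref{lem-invset0}) in the \emph{opposite} direction, producing for each $t$ a preimage $y_t \in E$ with $F^t(y_t) = \bar y$, extracting by compactness a sequence $t_i \to \infty$ with $y_{t_i} \to y_\infty \in E$, and then applying forward non-expansiveness twice: first $\| F^{t_i}(y_\infty) - \bar y \|_2 = \| F^{t_i}(y_\infty) - F^{t_i}(y_{t_i}) \|_2 \le \| y_\infty - y_{t_i}\|_2 \to 0$, and second, for $j > i$, $\| F^{t_j - t_i}(\bar y) - F^{t_j}(y_\infty) \|_2 = \| F^{t_j - t_i}\big( F^{t_i}(y_{t_i})\big) - F^{t_j}(y_\infty)\|_2 \le \| y_{t_i} - y_\infty \|_2$; a triangle inequality with $\tau_i = t_{j_i} - t_i \to \infty$ then gives $F^{\tau_i}(\bar y) \to \bar y$. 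So only surjectivity on $E$, compactness, and the forward estimate (iii) are used---exactly the ingredients you have, assembled so that no backward-flow or injectivity question ever arises. What your approach buys instead is a stronger structural fact (the flow restricted to $E$ is an isometry), at the price of importing a nontrivial classical theorem.

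Two concrete problems in your write-up. First, your injectivity argument is circular: surjectivity of $F^t$ on $E$ gives \emph{existence} of preimages, not uniqueness, so ``running the flow backward is well-defined by surjectivity'' assumes exactly the injectivity you are trying to establish. (Backward uniqueness can genuinely fail for maximal monotone flows: for $\dot y \in -\partial |y|$ on $\re$, trajectories from distinct initial points merge at the origin in finite time.) This is repairable, because the classical theorem you invoke holds under the hypotheses you actually have: a non-expansive \emph{surjection} of a compact metric space onto itself is automatically an isometry, hence a bijection---injectivity is a conclusion, not a hypothesis. Cited in that form, your isometry-plus-recurrence argument goes through. Second, the shortcut you say you would ``ultimately prefer''---recurrence from compactness of $E$ and surjectivity of $g$ alone---is false: a north--south homeomorphism of the circle is a continuous surjection of a compact space under which every point other than the two fixed points is non-recurrent. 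Non-expansiveness must enter somewhere, and the paper's pullback argument is the cheapest way to use it.
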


\begin{proof}
In this proof, abusing notation, we let $F^t$ also stand for the mapping that maps an initial condition $\hat y$ of the ODE~(\ref{eq-gtds-ode}) to its solution at time $t$.
Thus, in particular, $F^t(\hat y) = \bar y$ means $y(t)=\bar y$ if $y(0) = \hat y$.

Since $E$ is invariant by Lemma~\ref{lem-invset0}, for each $t \geq 0$, there exists some $y_t \in E$ such that $F^t(y_t) = \bar y$. Since $E$ is compact, we can choose an increasing sequence $t_i \to \infty$ such that $y_{t_i} \to y_{\infty} \in E$. Then
$$\| F^{t_i}(y_\infty) - \bar y \|_2 = \| F^{t_i}(y_\infty) - F^{t_{i}}(y_{t_{i}}) \|_2 \leq \| y_\infty - y_{t_{i}} \|_2 \ \to 0 \ \text{as} \ i \to \infty,$$
where the last inequality follows from the solution property (iii) of the differential inclusion given earlier. Thus $F^{t_i}(y_\infty) \to \bar y$ as $i \to \infty$.
For any $j > i$, by the solution property (iii) again,
\begin{align*}
  \| F^{t_j - t_i} (\bar y) - F^{t_j}(y_\infty) \|_2 =   \| F^{t_j - t_i} \big( F^{t_i} (y_{t_i} ) \big) - F^{t_j}(y_\infty) \|_2 \leq  \| y_{t_i} - y_\infty \|_2.
\end{align*}    
Now choose an increasing sequence of integers $j_i$ such that $j_i > i$ and $\tau_i :=t_{j_i} - t_i \to \infty$ as $i \to \infty$. We then have
\begin{align*}
\| F^{\tau_i}(\bar y) - \bar y \|_2 & \leq \| F^{\tau_i}(\bar y)  - F^{t_{j_i}}(y_\infty) \|_2 + \| F^{t_{j_i}}(y_\infty) - \bar y \|_2 \\
& \leq  \| y_{t_i} - y_\infty \|_2 + \| F^{t_{j_i}}(y_\infty) - \bar y \|_2 \ \to 0 \ \ \text{as} \  i \to \infty.
\end{align*} 
This proves the lemma.
\end{proof}

\begin{cor} \label{cor-invset}
Let $y(\cdot)$ be a solution of the ODE (\ref{eq-gtds-ode}) with $y(0) \in E$. Then, for any continuous function $V(\cdot)$ on $B_\theta \times B_x$ such that $V(y(t))$ is nonincreasing in $t$, $V(y(t))$ is constant over $t \geq 0$. 
\end{cor}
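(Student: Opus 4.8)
The plan is to exploit Lemma~\ref{lem-invset}, which says that for any $\bar y \in E$ the solution $y(\cdot)$ starting at $\bar y$ returns arbitrarily close to $\bar y$ along a sequence of times $\tau_i \to \infty$. The corollary should follow almost immediately from this recurrence property combined with the continuity of $V$ and the assumed monotonicity of $t \mapsto V(y(t))$.

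First I would fix an arbitrary $t_0 \geq 0$ and set $\bar y = y(0) \in E$. Since $V(y(t))$ is nonincreasing in $t$, it suffices to rule out a strict decrease, i.e.\ to show $V(y(t_0)) = V(\bar y)$; since $t_0$ is arbitrary this gives constancy. By Lemma~\ref{lem-invset}, there is a sequence $\tau_i \to \infty$ with $y(\tau_i) \to \bar y$. By continuity of $V$, $V(y(\tau_i)) \to V(\bar y)$. On the other hand, for all large $i$ we have $\tau_i \geq t_0$, so monotonicity gives $V(y(\tau_i)) \leq V(y(t_0)) \leq V(\bar y)$. Letting $i \to \infty$ in $V(y(\tau_i)) \leq V(y(t_0))$ yields $V(\bar y) \leq V(y(t_0))$, while monotonicity from $0$ to $t_0$ gives the reverse inequality $V(y(t_0)) \leq V(\bar y)$. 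Hence $V(y(t_0)) = V(\bar y)$ for every $t_0 \geq 0$, which is exactly the claim.

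The one point requiring a little care is that Lemma~\ref{lem-invset} is stated for initial conditions $\bar y \in E$, and the corollary hypothesis is precisely $y(0) \in E$, so the lemma applies directly; no reduction is needed. I would also note explicitly that the monotonicity of $V(y(\cdot))$ together with $y(\tau_i) \to \bar y = y(0)$ forces $V$ to already equal its value $V(\bar y)$ on the entire forward trajectory, since the limit of a nonincreasing sequence bounded below by (and eventually recurring to) its initial value must be that initial value. I do not anticipate any serious obstacle here: the real work was already done in establishing the recurrence in Lemma~\ref{lem-invset}, and this corollary is a short continuity-and-monotonicity argument. The only thing to be scrupulous about is the direction of the inequalities and the fact that $\{\tau_i\}$ eventually exceeds any fixed $t_0$, which is guaranteed by $\tau_i \to \infty$.

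This corollary is the natural bridge to the proof of Proposition~\ref{prop-sd-limitset}: it will later be applied with $V$ taken to be a Lyapunov-type function (built from the saddle-function $\psi$ or a distance to the saddle-point set) whose value along solutions is nonincreasing by monotonicity of the mapping $\A$, forcing any trajectory inside the invariant limit set $E$ to keep $V$ constant and hence to sit at a saddle point.
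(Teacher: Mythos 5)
Your proof is correct and is exactly the argument the paper intends: the paper states this corollary without a separate proof, as an immediate consequence of the recurrence property in Lemma~\ref{lem-invset}, and your combination of that recurrence with continuity of $V$ and monotonicity of $t \mapsto V(y(t))$ is the natural (and only needed) way to derive it. The inequalities are handled in the right order, and the observation that $\tau_i \to \infty$ eventually exceeds any fixed $t_0$ is precisely the point that makes the squeeze work.
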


We are now ready to prove Prop.\ \ref{prop-sd-limitset}, which says essentially that $E \cap (\re^d \times \sp \{\phi(\S)\})$ is the subset of saddle points, $D_\theta \times \{\bar x\}$, of the minimax problem (\ref{eq-minimax1}), where $\bar x$ is its unique dual optimal solution in $\sp \{\phi(\S)\}$. We will use the preceding corollary together with the special properties of the convex-concave function $\psi^o$  to prove this. (In particular, besides the uniqueness of $\bar x$, one property of $\psi^o$ we will use is that the only term in $\psi^o$ in which $\theta$ and $x$ ``interact'' is a bilinear function of $(\theta, x)$; cf.\ (\ref{eq-psio}).)

\begin{proof}[Proof of Prop.\ \ref{prop-sd-limitset}]
Let $E^o$ be the limit set of the ODE for all initial conditions $\theta(0) \in B_\theta, x(0) \in B_x \cap \sp \{\phi(\S)\}$.
Any saddle point $(\tilde \theta, \tilde x)$ of the minimax problem (\ref{eq-minimax1}) lies in $E$. This is because $(0,0) \in \partial \psi(\tilde \theta, \tilde x)$ and so, in view of Lemma~\ref{lem-sd-minvec}, the solution of the ODE (\ref{eq-gtds-ode}) from $(\tilde \theta, \tilde x)$ is just $\theta(\cdot)\equiv \tilde \theta,  x(\cdot) \equiv \tilde x$, implying that $(\tilde \theta, \tilde x)$ is in the limit set of the ODE, which is $E$ by Lemma~\ref{lem-invset0}.
Thus, $D_\theta \times \{\bar x\} \subset E^o \subset E \cap (\re^d \times \sp \{\phi(\S)\})$. To prove the proposition, we now use Cor.~\ref{cor-invset} and proof by contradiction to show that $E \cap (\re^d \times \sp \{\phi(\S)\})$ cannot contain other points than those saddle points $D_\theta \times \{\bar x\}$.

First, suppose $(\hat \theta, \hat x) \in E$ where $\hat x \in \sp \{\phi(\S)\}$ and $\hat x \not = \bar x$. Let $y(\cdot)$ be the solution of the ODE (\ref{eq-gtds-ode}) with $y(0) = (\hat \theta, \hat x)$. Since $\hat x \in \sp \{\phi(\S)\}$, the $x$-component of $y(t)$ lies in $\sp \{\phi(\S)\}$ for all $t$; since $y(\cdot)$ is continuous, there is a nonzero $\bar t$ such that for all $t < \bar t$, the $x$-component of $y(t)$ never equals $\bar x$.
For any $t \geq 0$, consider $\dot V(y(t))$ for the function 
$V(\theta, x) = \tfrac{1}{2} \| \theta - \bar \theta \|_2^2 + \tfrac{1}{2} \| x - \bar x \|_2^2$, where $\bar \theta \in D_\theta$ so that $(\bar \theta, \bar x)$ is a saddle point of (\ref{eq-minimax1}).
To simplify notation, write 
$$\bar k(\theta, x) = A \theta + b - C x \qquad \text{and} \qquad \bar g(\theta, x) = - A^\tr x,$$
where $A = \Phi^\tr \Xi (P^{(\lambda)} - I ) \Phi$, $C = \Phi^\tr \Xi \,\Phi$, and the vector $b$ is the constant term%footnote starts
\footnote{Recall the expression (\ref{eq-Tl0}) of the operator $\Tl$: $\Tl v = r_\pi^{(\lambda)} + \Pl v$; the vector $b$ here is given by $b=\Phi^\tr \Xi \,  r_\pi^{(\lambda)}$.}
%footnote ends
in the affine function $\bar k(\theta, x)$ (cf.\ (\ref{eq-gtds-gk})).
Since $(\bar \theta, \bar x)$ is a saddle point of (\ref{eq-minimax1}), $(0,0) \in \partial \psi(\bar \theta, \bar x)$, so for some $\bar z_1 \in - \N_{B_\theta}(\bar \theta)$ and $\bar z_2 \in - \N_{B_x}(\bar x)$, 
\begin{equation} \label{eq-sdprf-a1}
     A^\tr \bar x + \nabla p(\bar \theta) - \bar z_1 = 0, \qquad  A \bar \theta + b - C \bar x + \bar z_2 = 0
\end{equation}  
(cf.\ (\ref{eq-psi-subgrad1})-(\ref{eq-psi-subgrad2})).
At a point $(\theta, x) = y(t)$, let $z_1 \in - \N_{B_\theta}(\theta), z_2 \in - \N_{B_x}(x) $ be the boundary reflection terms in (\ref{eq-gtds-ode}) at time $t$. We have   
\begin{align*}
   \dot V(y(t)) & = \langle \theta - \bar \theta, \, - A^\tr x - \nabla p(\theta) + z_1 \rangle  +  \langle x - \bar x, \, A \theta + b - C x + z_2  \rangle \leq 0,
\end{align*}
where the non-positivity follows from the monotonicity of the set-valued mapping $\A(\cdot)$ and the fact that $(0,0) \in -\A(\bar \theta, \bar x)$ and $(- A^\tr x - \nabla p(\theta) + z_1, \, A \theta + b - C x + z_2) \in -\A(\theta, x)$ (Lemma~\ref{lem-sd-minvec}).
Consider each term in the expression of $\dot V(y(t))$ above. Using (\ref{eq-sdprf-a1}), we have
\begin{align*}
 \langle \theta - \bar \theta, - A^\tr x - \nabla p(\theta) + z_1 \rangle & = \langle \theta - \bar \theta, \, - A^\tr x - \nabla p(\theta) + z_1 \rangle + \langle \theta - \bar \theta, \, A^\tr \bar x + \nabla p(\bar \theta) - \bar z_1 \rangle \\
 & = - \langle \theta - \bar \theta, \, A^\tr (x - \bar x) \rangle - \langle \theta - \bar \theta, \, \nabla p(\theta) - \nabla p(\bar \theta) \rangle +  \langle \theta - \bar \theta, \, z_1 - \bar z_1 \rangle, \\
 \langle x - \bar x, \, A \theta + b - C x + z_2 \rangle  & = \langle x - \bar x, \, A \theta + b - C x + z_2 \rangle - \langle x - \bar x, \, A \bar \theta + b - C \bar x + \bar z_2 \rangle \\
 & = \langle x - \bar x, \, A (\theta - \bar \theta) \rangle -  \langle x - \bar x, \, C (x - \bar x) \rangle + \langle x - \bar x, \, z_2 - \bar z_2 \rangle,
\end{align*} 
and hence 
\begin{align*}
   \dot V(y(t)) & = - \langle \theta - \bar \theta, \, \nabla p(\theta) - \nabla p(\bar \theta) \rangle +  \langle \theta - \bar \theta, \, z_1 - \bar z_1 \rangle 
   -  \langle x - \bar x, \, C (x - \bar x) \rangle  + \langle x - \bar x, \, z_2 - \bar z_2 \rangle.
\end{align*}
In the summation on the r.h.s., each term is nonpositive,%footnote starts
\footnote{For the first, second and forth terms, this is because $\nabla p(\cdot)$, $\N_{B_{\theta}}(\cdot)$ and $\N_{B_{x}}(\cdot)$ are all monotone set-valued mappings \cite[Corollary 31.5.2]{Roc70}. For the third term, it is because $-C$ is a symmetric negative semidefinite matrix.}
%footnote ends
but the third term $ -  \langle x - \bar x, C (x - \bar x) \rangle < 0$ if $x \in \sp \{ \phi(\S)\}$ and $x \not=\bar x$. 
Thus $V(y(t))$ is a nonincreasing function of $t$ and for almost all $t \in [0, \bar t \,)$, $\dot V(y(t)) < 0$, so $V(y(t))$ cannot remain constant. This contradicts Cor.~\ref{cor-invset}. Therefore, points like $(\hat \theta, \hat x)$ cannot lie in $E$; i.e., if $(\hat \theta, \hat x) \in E$ and $\hat x \in \sp \{\phi(\S)\}$, then $\hat x = \bar x$.

Next, suppose for some $\hat \theta \not \in D_\theta$, $(\hat \theta, \bar x) \in E$. 
Let $y(\cdot)$ be the solution with $y(0) = (\hat \theta, \bar x)$.
Since $y(t) \in E$ by the invariance of $E$ (Lemma~\ref{lem-invset0}) and its $x$-component $x(t) \in \sp \{ \phi(\S)\}$ for all $t$, we must have $x(\cdot) \equiv \bar x$ by the preceding proof. This means that for all $t$, $\dot x(t) = 0$ and therefore by (\ref{eq-gtds-ode}) and (\ref{eq-sd-prf-a0}),
\begin{equation} \label{eq-sdprf-a2}
   0 = \bar k(y(t)) + z_2(t) \in \partial \psi_x(y(t)),
\end{equation}  
where $z_2(t)$ is the boundary reflection term. Since $y(\cdot)$ is continuous and its $\theta$-component starts from $\hat \theta$ outside the compact set $D_\theta$,  
there is a nonzero time $\bar t$ such that for all $t < \bar t$, $y(t)$ are not saddle points of (\ref{eq-minimax1}).

Consider now $\dot V(y(t))$ for the differentiable function $V(\theta,x) =  \langle \bar x, A \theta \rangle + p(\theta)$.
At the point $(\theta, \bar x) = y(t)$, we have $\bar g(\theta, \bar x) = - A^\tr \bar x$, so with $z_1 \in - \N_{B_\theta}(\theta)$ being the boundary reflection term $z_1(t)$ in (\ref{eq-gtds-ode}), we have
\begin{align}
  \dot V(y(t)) & = \langle A^\tr \bar x + \nabla p(\theta), \,  - A^\tr \bar x  - \nabla p(\theta) + z_1 \rangle \notag \\
     & = - \| A^\tr \bar x + \nabla p(\theta) - z_1 \|_2^2 + \langle z_1, \, - A^\tr \bar x- \nabla p(\theta) + z_1 \rangle. \label{eq-sdprf-a3}
\end{align}
By Lemma~\ref{lem-decomp-vec-cones}, the second term is $0$. So $\dot V(y(t)) \leq 0$, implying that $V(y(t))$ is a nonincreasing function of $t$. 
For $t < \bar t$, since $(\theta, \bar x) = y(t)$ is not a saddle point of (\ref{eq-minimax1}), the first term in (\ref{eq-sdprf-a3}) is strictly less than $0$ (otherwise, in view of (\ref{eq-sdprf-a2}) and (\ref{eq-sd-prf-a0}), we would have $(0,0) \in \partial \psi(y(t))$, implying $y(t)$ is a saddle point, a contradiction). 
Thus $\dot V(y(t)) < 0$ for almost all $t$ in the interval $[0, \bar t\,)$ and consequently $V(y(t))$ cannot remain constant. This contradicts Cor.~\ref{cor-invset}. So we conclude that if $(\hat \theta, \bar x) \in E$, then $\hat \theta \in D_\theta$.
This completes the proof.
\end{proof}

Finally, let us give the proof of Lemma~\ref{lem-saddle-relation}, which says that if $x_{\rm \text{\rm opt}}$ is in the interior of the constraint set $B_x$, then the limit set $D_\theta \times \{\bar x\}$ coincides with the desired set $\Theta_{\text{\rm opt}} \times \{ x_{\text{\rm opt}}\}$.

\begin{proof}[Proof of Lemma~\ref{lem-saddle-relation}]
The point $x_{\rm \text{\rm opt}}$ is the unique optimal solution of the maximization problem $\sup_{x \in \re^d} \{ \inf_{\theta\in B_\theta} \psi^o(\theta, x) \}$ in the subspace $\sp \{\phi(\S)\}$.
So if $x_{\rm \text{\rm opt}} \in B_x$, we must have $\bar x = x_{\rm \text{\rm opt}}$. 
Now suppose $\bar x =  x_{\rm \text{\rm opt}} \in \text{int}(B_x)$, the interior of $B_x$.
By the definition of a saddle point, we have that $\bar \theta \in D_\theta$ if and only if
\begin{equation} \label{eq-sadd1}
 \psi^o(\theta, \bar x) \geq \psi^o(\bar \theta, \bar x) \geq \psi^o(\bar \theta, x), \qquad \forall \, \theta \in B_\theta, \ x \in B_x, 
\end{equation} 
whereas $\bar \theta \in \Theta_{\text{\rm opt}}$ if and only if
\begin{equation} \label{eq-sadd0}
  \psi^o(\theta, \bar x) \geq \psi^o(\bar \theta, \bar x) \geq \psi^o(\bar \theta, x), \qquad \forall \, \theta \in B_\theta, \ x \in \re^d.
\end{equation}
Then any $\bar \theta \in D_\theta$ must satisfy (\ref{eq-sadd0}), because the second inequality in (\ref{eq-sadd1}) together with the fact $\bar x \in \text{int}(B_x)$ implies that $\bar x$ attains a local maximum of the concave function $\psi^o(\bar \theta, \cdot)$, and hence $\bar x$ must attain the global maximum of $\psi^o(\bar \theta, \cdot)$. This shows $D_\theta \subset \Theta_{\text{\rm opt}}$. 
Of course, any $\bar \theta \in \Theta_{\text{\rm opt}}$ satisfies (\ref{eq-sadd1}) and is therefore in $D_\theta$. This shows $D_\theta \times \{\bar x\} =\Theta_{\text{\rm opt}} \times \{ x_{\text{\rm opt}} \}$ if $x_{\rm \text{\rm opt}} \in \text{int}(B_x)$.
\end{proof}

\subsubsection{Convergence properties} \label{sec-gtds-conv}

We can now state our convergence results for the single-time-scale GTDa algorithm. 
Recall that
$$\Theta_{\text{\rm opt}} = \argmin_{\theta \in B_\theta} \{ J(\theta) + p(\theta) \}, \qquad x_{\text{\rm opt}} =  \argmax_{x \in \sp \{\phi(\S)\}} \big\{ \inf_{\theta\in B_\theta} \psi^o(\theta, x) \big\}.$$
The set
$\Theta_{\text{\rm opt}} \times \{x_{\text{\rm opt}}\}$ is a subset of saddle points of the minimax problem (\ref{eq-minimax0}), which we want to solve and which places no constraints on the maximizer.  The set $D_\theta \times \{\bar x\}$ is a subset of saddle points of the minimax problem (\ref{eq-minimax1}) with the constraint set $B_x$ for the maximizer. 

\begin{thm} \label{thm-gtds}
Consider the GTDa algorithm (\ref{eq-gtd1m-th})-(\ref{eq-gtd1m-x}) under Assumption~\ref{cond-collective}. Let the initial $x(0), \e_0 \in \sp \{\phi(\S)\}$. Then for each initial condition of the algorithm, the following hold:\vspace*{-3pt}
\begin{itemize}
\item[\rm (i)] In the case of diminishing stepsize, let the stepsizes $\{\alpha_n\}$ satisfy the conditions in Assumption~\ref{cond-large-stepsize} for the fast-time-scale stepsizes. Then 
$\lim_{n \to \infty} \E \big[ \text{\rm dist} \big( (\theta_n, x_n), \, D_\theta \times \{\bar x\} \big)\big] = 0$. Moreover, there exists a sequence of positive numbers $T_n \to \infty$ such that for any $\epsilon > 0$,
\begin{equation} 
  \limsup_{n \to \infty} \Pr \big( (\theta_i, x_i) \not\in N_\epsilon(D_\theta \times \{\bar x\}), \ \text{some} \ i \in [n, m(n, T_n) ] \big)  = 0.
\end{equation}  
\item[\rm (ii)] In the case of constant stepsize $\alpha_n = \alpha$ for all $n$, for any integers $n_{\alpha}$ such that $\alpha \, n_{\alpha} \to \infty$ as $\alpha \to 0$,
there exist positive numbers $\{T_{\alpha} \mid \alpha > 0\}$ with $T_{\alpha} \to \infty$ as $\alpha \to 0$, such that for any $\epsilon > 0$,
\begin{equation} 
  \limsup_{\alpha \to 0} \Pr \big( (\theta^{\alpha}_{n_\alpha + i}, x^{\alpha}_{n_\alpha + i})  \not\in N_\epsilon(D_\theta \times \{\bar x\}), \ \text{some} \ i \in [0, T_\alpha/\alpha ] \big)  = 0.
\end{equation}  
\item[\rm (iii)] In the case of constant stepsize, for the averaged iterates $\{(\bar \theta^{\alpha}_n, \bar x^{\alpha}_n)\}$,%footnote starts
\footnote{Similar to (\ref{eq-aveite}), these averaged iterates are given by $\bar \theta^{\alpha}_n = \textstyle{\frac{1}{n - n_0} \sum_{i=n_0}^{n-1} \theta^{\alpha}_i}$,  $\bar x^{\alpha}_n = \textstyle{\frac{1}{n - n_0} \sum_{i=n_0}^{n-1} x^{\alpha}_i}$.} 
%footnote ends 
the conclusions of Theorem~\ref{thm-gtd1-average} hold with the set $D_\theta \times \{\bar x\}$ in place of $\Theta_{\text{\rm opt}}$, with the averaged $(\theta,x)$-iterates in place of the averaged $\theta$-iterates, and with obvious modifications to remove all the references to the fast-time-scale stepsize parameter $\beta$.\vspace*{-3pt}
\end{itemize}
If the constraint set $B_x$ contains $x_{\rm \text{\rm opt}}$ in its interior, then $D_\theta \times \{\bar x\} =\Theta_{\text{\rm opt}} \times \{ x_{\text{\rm opt}} \}$ in the above.
\end{thm}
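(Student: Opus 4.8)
The plan is to connect the average dynamics of the single-time-scale iteration (\ref{eq-gtd1m-th})-(\ref{eq-gtd1m-x}) to its mean ODE (\ref{eq-gtds-ode}) by the weak convergence method, and then read off the asymptotic behavior of the iterates from the limit set of that ODE, which Prop.~\ref{prop-sd-limitset} has already identified as $D_\theta \times \{\bar x\}$. First I would rewrite the algorithm in standard stochastic-approximation form with exogenous noise: with $Z_n = (S_n, y_n, \e_n, S_{n+1})$ and the functions $g, k$ of (\ref{eq-gtd1-gk}),
\begin{align*}
 \theta_{n+1} &= \Pi_{B_\theta}\big(\theta_n + \alpha_n \, (g(\theta_n, x_n, Z_n) - \nabla p(\theta_n))\big), \\
 x_{n+1} &= \Pi_{B_x}\big(x_n + \alpha_n \, (k(\theta_n, x_n, Z_n) + \e_n \omega_{n+1})\big),
\end{align*}
so that $(\theta_n, x_n)$ is driven by the single stepsize $\{\alpha_n\}$ with mean vector field $(\bar g(\theta,x) - \nabla p(\theta), \, \bar k(\theta,x))$ given in (\ref{eq-gtds-gk}). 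As in the GTDa analysis, the projections keep $\{x_n\}$ in $\sp\{\phi(\S)\}$ whenever $x_0 \in \sp\{\phi(\S)\}$, so every weak-convergence limit of the interpolated processes has its $x$-component confined to $\sp\{\phi(\S)\}$; this is exactly the regime for which Prop.~\ref{prop-sd-limitset} applies.

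Next I would verify the hypotheses of \cite[Theorem 8.2.3]{KuY03} for diminishing stepsize and \cite[Theorem 8.2.2]{KuY03} for constant stepsize. Because the iteration is single-time-scale, there is no fast/slow separation to arrange: the required conditions are the direct analogues of those in Section~\ref{sec-gtd1-cond}, namely uniform integrability of $\{g(\theta_n, x_n, Z_n) - \nabla p(\theta_n)\}$ and $\{k(\theta_n, x_n, Z_n) + \e_n\omega_{n+1}\}$ (and of the corresponding families over $(\theta,x) \in B_\theta \times B_x$), tightness of $\{Z_n\}$, joint continuity of $g, k$ in $(\theta,x)$ uniformly on compacta in $z$, and the averaging conditions identifying $\bar g - \nabla p$ and $\bar k$ as the mean vector field. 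Each of these follows from results already assembled: uniform integrability and tightness from Prop.~\ref{prop-trace} and the Markov inequality (trivially in the history-dependent case, where the traces are bounded); continuity from the explicit form (\ref{eq-gtd1-gk}); and the averaging conditions from Prop.~\ref{prop-trace-averaging}(i) applied to $g(\theta,x,\cdot)$ and $k(\theta,x,\cdot)$, exactly as in Lemma~\ref{lem-conv-mean-cond1}. The constant-stepsize case would be handled by additionally indexing the iterates by the stepsize parameter, precisely as in Section~\ref{sec-gtd1conv-constant}.

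With these conditions in place, \cite[Theorems 8.2.2, 8.2.3]{KuY03} give that the piecewise-constant interpolations converge in distribution to absolutely continuous solutions of (\ref{eq-gtds-ode}) whose paths lie, almost surely, in the ODE's limit set; restricting to solutions with $x$-component in $\sp\{\phi(\S)\}$ and invoking Prop.~\ref{prop-sd-limitset}, that limit set is $D_\theta \times \{\bar x\}$. Translating this into the probability statements of parts (i) and (ii) is then routine, following verbatim the final arguments in the proofs of Theorems~\ref{thm-gtd1-wk-dim} and~\ref{thm-gtd1-wk-constant} (with $(\theta_n, x_n)$ in place of $\theta_n$ and $D_\theta \times \{\bar x\}$ in place of $\Theta_{\text{opt}}$); the convergence in mean $\E[\mathrm{dist}((\theta_n, x_n), D_\theta \times \{\bar x\})] \to 0$ follows from the probability statement together with boundedness of the iterates via \cite[Theorem 10.3.6]{Dud02}. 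Part (iii) on the averaged iterates is obtained by the argument of Section~\ref{sec-aveite}: the joint process $(S_n, y_n, \e_n, \theta_n, x_n)$ is a weak Feller Markov chain for constant stepsize, and combining its ergodicity with the conclusion of part (ii) reproduces the estimates of Theorem~\ref{thm-gtd1-average}, as in \cite[Section 4.3]{etd-wkconv}. Finally, the concluding assertion is immediate from Lemma~\ref{lem-saddle-relation}: when $x_{\text{opt}} \in \mathrm{int}(B_x)$, we have $D_\theta \times \{\bar x\} = \Theta_{\text{opt}} \times \{x_{\text{opt}}\}$, so all the statements hold with the desired set.

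I do not expect the main difficulty to lie in this last assembly, since most of the genuine work has already been absorbed into Prop.~\ref{prop-sd-limitset} (the maximal-monotone / differential-inclusion study of the ODE) and into Prop.~\ref{prop-trace-averaging} (the averaging property of the traces). The one point that will require care is ensuring that the invariance $x_n \in \sp\{\phi(\S)\}$ is inherited by the weak-convergence limits, so that the limit-set characterization of Prop.~\ref{prop-sd-limitset}---stated only for initial $x(0) \in \sp\{\phi(\S)\}$---can legitimately be invoked; this is handled by the projection-preservation property of the constraint set $B_x$, exactly as in the two-time-scale GTDa analysis.
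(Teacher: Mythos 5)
Your proposal is correct and follows essentially the same route as the paper's own proof: apply \cite[Theorems 8.2.2 and 8.2.3]{KuY03} after verifying the single-time-scale analogues of the conditions in Section~\ref{sec-gtd1-cond} (uniform integrability and tightness via Prop.~\ref{prop-trace}, averaging via the argument of Lemma~\ref{lem-conv-mean-cond1}), identify the limit set through Prop.~\ref{prop-sd-limitset} using the invariance of $\sp\{\phi(\S)\}$ under the projections, handle part (iii) by the weak Feller Markov chain argument of Section~\ref{sec-aveite}, and conclude with Lemma~\ref{lem-saddle-relation}. The only detail the paper makes explicit that you leave implicit is that $\nabla p(\theta)$, being independent of $z$, drops out of the averaging condition, so the regularizer requires no extra work.
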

%\smallskip

\begin{proof}[Outline of proof.]
For the parts (i) and (ii), we first apply stochastic approximation theory to show that the average dynamics of the algorithm is characterized by the mean ODE~(\ref{eq-gtds-ode}). Specifically, in the case of diminishing stepsize, we apply \cite[Theorem 8.2.3]{KuY03}, and in the case of constant stepsize, we apply \cite[Theorem 8.2.2]{KuY03}. The required conditions are straightforward to verify.%footnote starts
\footnote{In order to apply \cite[Theorems 8.2.2 and 8.2.3]{KuY03} to the algorithm (\ref{eq-gtd1m-th})-(\ref{eq-gtd1m-x}), the conditions we need to verify include uniform integrability, tightness, continuity conditions as well as an ``averaging condition,'' similar to those listed in Section~\ref{sec-gtd1-cond}. In particular, for diminishing stepsize, using the notations of Section~\ref{sec-gtd1-cond} for GTDa, we need the following sets to be uniformly integrable 
$$\big\{ \big( g(\theta_n, x_n, Z_n) - \nabla p(\theta_n), \, k(\theta_n, x_n, Z_n) + \e_n \omega_{n+1} \big) \big\}_{n \geq 0}, \ \
\big\{ \big( g(\theta, x, Z_n) - \nabla p(\theta), \, k(\theta, x, Z_n) \big) \mid n \geq 0, \theta \in B_\theta, x \in B_x \big\}.$$
Since the algorithm is constrained, the uniform integrability of these sets follows from the uniform integrability of $\{\e_n\}$, as explained in Section~\ref{sec-gtd1-cond}. For the same reason, the uniform integrability conditions are met in the case of constant stepsize (we do not write down those sets here, which are the same as the above except that they involve the $(\theta, x)$-iterates generated with all stepsize parameters $\alpha$.) 
The tightness condition on $\{Z_n\}$ is satisfied as before. The continuity condition requires the functions $g(\theta, x, z) - \nabla p(\theta)$ and $k(\theta, x, z)$ to be continuous in $(\theta, x)$ uniformly in $z$ in a compact set. It is satisfied clearly as before. The functions $\bar g(\theta, x) - \nabla p(\theta)$ and $\bar k(\theta, x)$ are required to be continuous, which are also clearly satisfied. Finally, the ``averaging condition'' requires that for each $(\theta, x)$ and each compact set $D \subset \Z$,
\begin{align*}
& \textstyle{ \lim_{m, n \to \infty} \frac{1}{m} \sum_{i=n}^{n+m-1} \E_n \big[ g(\theta, x,  Z_i) - \bar g(\theta, x) \big] \I(Z_n \in D) = 0} \ \ \ \text{in mean}, \\
& \textstyle{ \lim_{m, n \to \infty} \frac{1}{m} \sum_{i=n}^{n+m-1} \E_n \big[ k(\theta, x,  Z_i) - \bar k(\theta, x) \big] \I(Z_n \in D) = 0} \ \ \ \text{in mean}.
\end{align*}
This follows from the same proof of Lemma~\ref{lem-conv-mean-cond1}(i), since both $g(\theta, x, \cdot)$ and $k(\theta, x, \cdot)$ are Lipschitz continuous in the trace variable $\e$. Note that the regularizer $p(\theta)$ does not appear in the ``averaging condition'' at all, although $\nabla p(\theta)$ appears in the algorithm and its mean ODE. This is because $\nabla p(\theta)$ does not depend on $z$, so for fixed $\theta$, $\nabla p(\theta)$ and $\E_\zeta [\nabla p(\theta)]$ canceled each other out in the first convergence-in-mean requirement above. Thus, for the constrained algorithm, the convergence analysis is the same with or without the regularizer $p(\cdot)$. \label{footnote-gtds-cond}
}
%footnote ends
These theorems give us the conclusions stated in the part (i) and (ii), respectively, with the set involved being the limit set of the ODE associated with the algorithm, and we then use the expression of the limit set given by Prop.~\ref{prop-sd-limitset} to obtain the conclusions as stated. 

The procedure of deriving the part (iii) for the averaged $(\theta,x)$-iterates is similar to that described in Section~\ref{sec-aveite} after Theorem~\ref{thm-gtd1-average}. It uses the observation that with constant stepsize $\alpha$, the iterates $(\theta_n^\alpha, x^\alpha_n)$ jointly with the states/memory states and traces form a weak Feller Markov chain, and the proof of this part is essentially the same as that given in \cite[Section 4.3]{etd-wkconv}.

Finally, the last statement about when $D_\theta \times \{\bar x\} =\Theta_{\text{\rm opt}} \times \{ x_{\text{\rm opt}} \}$ follows from Lemma~\ref{lem-saddle-relation}.
\end{proof}

\begin{rem}[About the condition on $B_x$ and a comparison with two-time-scale GTD] \label{rem-Bx} \rm \hfill\\
Recall that for the two-time-scale GTDa and GTDb algorithms in Section~\ref{sec-3}, to ensure desired convergence properties, we required the constraint set $B_x$ to be large enough so that $B_x \supset \{x_\theta \mid \theta \in B_\theta\}$, where $x_\theta$ is the unique solution to the linear system of equations, $\bar k(\theta, x) =0, x \in \sp \{\phi(\S)\}$. 
For each $\theta$, an optimal solution to the maximization problem $\sup_{x} \psi^o(\theta, x)$ is just $x_\theta$. 
Therefore, $x_{\rm \text{\rm opt}}$ must be in the set $\{x_\theta \mid \theta \in B_\theta\}$, and if $\text{int}(B_x) \supset \{ x_\theta \mid \theta \in B_\theta\}$, then $x_{\rm \text{\rm opt}} \in \text{int}(B_x)$. This shows that, to ensure that the single-time-scale GTDa can approach the desired solution set $\Theta_{\text{\rm opt}}$, a sufficient condition for $B_x$ is one that is almost identical to the condition we imposed earlier for the two-time-scale GTD. 
But the latter sufficient condition is clearly far more stronger than the inclusion $x_{\rm \text{\rm opt}} \in \text{int}(B_x)$ needed by the single-time-scale GTDa. In particular, if $x_{\rm \text{\rm opt}} = 0$, as is the case when the optimal solutions in $\Theta_{\text{\rm opt}}$ satisfy that $J(\theta) = 0$ (i.e., they solve the projected Bellman equation), then we only need $0 \in \text{int}(B_x)$, which is always true. 
Another remark is that, as we will show in Section~\ref{sec-gtda-revisit}, the weaker condition $x_{\rm \text{\rm opt}} \in \text{int}(B_x)$ is in fact also sufficient for the two-time-scale GTDa algorithm, because it can be viewed as a two-time-scale algorithm for solving the minimax problem (\ref{eq-minimax1}), with the maximizer operating at a faster time-scale than the minimizer.
This suggests an advantage of the single- or two-time-scale constrained GTDa algorithm over the two-time-scale constrained GTDb algorithm.
\end{rem}

\begin{rem}[About setting $\lambda$ according to the composite scheme] \rm 
Theorem~\ref{thm-gtds} also holds for the single-time-scale GTDa algorithm when the $\lambda$-parameters are set according to the composite scheme described in Section~\ref{sec-cp-extension}. The analysis is essentially the same as the one given in this subsection, apart from that a different Bellman operator $\Tl$ is involved.
\end{rem}

\subsubsection{Biased variant} \label{sec-gtds-bias}

For the case of state-dependent $\lambda$, one can apply a biased variant of the single-time-scale GTDa algorithm, which is more robust. Like its two-time-scale counterpart (\ref{eq-gtd1b-th})-(\ref{eq-gtdb-x}) we discussed in Section~\ref{sec-bias-vrt}, this algorithm uses a bounded function $h(\e)$ of traces in computing $\{(\theta_n, x_n)\}$: 
\begin{align}
   \theta_{n+1} & = \Pi_{B_\theta} \Big( \theta_n + \alpha_n \,  \rho_n \big(\phi(S_n) - \gma_{n+1} \phi(S_{n+1}) \big) \cdot h(\e_n)^\tr x_n - \alpha_n \nabla p(\theta_n) \Big), \label{eq-gtdsv-th} \\
   x_{n+1} & = \Pi_{B_x} \Big( x_n + \alpha_n \big(h(\e_n) \cdot \delta_n(v_{\theta_n}) - \phi(S_n) \phi(S_n)^\tr x_n\big) \Big). \label{eq-gtdsv-x}
\end{align} 
We shall assume that $h$ is a bounded Lipschitz continuous function. 
As in Section~\ref{sec-bias-vrt}, we consider a family of bounded Lipschitz continuous functions $\{h_K | K > 0\}$ that satisfy (\ref{eq-h})-(\ref{eq-h2}), and we analyze the behavior of the above algorithm for $h=h_K$, as $K$ increases (and the degree of bias correspondingly decreases). We will show that the algorithm solves minimax problems that are approximations of the minimax problem $\inf_{\theta \in B_\theta} \sup_{x \in B_x} \psi^o(\theta, x)$ solved by GTDa.

The mean ODE associated  with the above biased GTDa has the same form as the ODE (\ref{eq-gtds-ode}) of GTDa, 
\begin{equation} \label{eq-gtdsv-ode}
  \left( \! \begin{array}{c} \dot{\theta}(t) \\ \dot x(t) \end{array} \!\right) 
 =  \left( \! \begin{array}{c} \bar g_h\big(\theta(t), x(t)\big) - \nabla p\big(\theta(t)\big) \\
 \bar k_h \big(\theta(t), x(t) \big) \end{array} \!\right) +  \left( \! \begin{array}{l} z_1(t) \\ z_2(t) \end{array} \!\right),      \quad z_1(t) \in - \N_{B_\theta}(\theta(t)), \  z_2(t) \in - \N_{B_x}(x(t)),
\end{equation} 
where the functions $\bar g_h, \bar k_h$ are defined according to (\ref{eq-gtd1b-th})-(\ref{eq-gtdb-x}) as
\begin{equation} \label{eq-gtdsvrt-gk}
  \bar g_h(\theta, x) =   \E_\zeta \big[ \, \rho_0 \big(\phi(S_0) - \gma_{1}  \phi(S_{1}) \big) \cdot h(\e_0)^\tr x \big], \quad   \bar k_h(\theta, x) =  \E_\zeta \big[ h(\e_0) \, \bar{\delta}_0(v_\theta) \big] - \Phi^\tr \Xi \, \Phi \, x.
\end{equation}
Recall $\bar \delta_0(v_\theta) = \rho_0 r(S_0, S_1) + \rho_0 \big(\gma_{1}  \phi(S_{1}) - \phi(S_0) \big)^\tr\! \theta$. Denote $C = \Phi^\tr \Xi \, \Phi$, and define matrix $A_h$ and vector $b_h$ by
\begin{equation} \label{eq-Ahbh}
  A_h = \E_\zeta \big[ h(\e_0) \, \cdot \rho_0 \big(\gma_{1}  \phi(S_{1}) - \phi(S_0) \big)^\tr \big], \qquad b_h = \E_\zeta \big[ h(\e_0) \, \cdot \rho_0 r(S_0, S_1) \big].
\end{equation}
Then
\begin{equation} \label{eq-ghkh}
    \bar g_h(\theta, x) = -A_h^\tr x, \qquad   \bar k_h(\theta, x) = A_h \theta + b_h  - C x.
\end{equation}
Note that as shown in the proof of Lemma~\ref{lem-vrt-xth}, under the condition (\ref{eq-h2}) on the function $h(\e)$ (which requires $h(\e) \in \sp \{\phi(\S)\}$ for $\e \in \sp \{\phi(\S)\}$),
$b_h \in \sp\{\phi(\S)\}$ and the column space of $A_h$ is contained in $\sp\{\phi(\S)\}$.

Now consider the minimax problem 
\begin{equation} \label{eq-minimax-vrt}
  \inf_{\theta \in B_\theta} \sup_{x \in B_x} \psi^o_h(\theta, x),
\end{equation}  
for the convex-concave function
\begin{equation} \label{eq-psih-vrt}
   \psi^o_h(\theta, x) : =  x^\tr  ( A_h \theta + b_h) -  \tfrac{1}{2} \, x^\tr C x + p(\theta).
\end{equation}
The structure of this minimax problem and its relation with the ODE~(\ref{eq-gtdsv-ode}) are exactly the same as that of the minimax problem (\ref{eq-minimax1}) and its relation with the ODE~(\ref{eq-gtds-ode}). Thus, the same analysis given in Section~\ref{sec-gtds-ode} for GTDa applies to the biased variant here.
In particular, Prop.~\ref{prop-sd-limitset} holds in this case, with the minimax problem involved being (\ref{eq-minimax-vrt}).
That is, for all initial $x(0) \in \sp \{\phi(\S)\}$, the limit set of the ODE (\ref{eq-gtdsv-ode}) is the subset of saddle points, $D^h_\theta \times \{ \bar x^h\}$, of the minimax problem (\ref{eq-minimax-vrt}), where $D^h_\theta$ consists of its primal optimal solutions and $\bar x^h$ is its unique dual optimal solution in $\sp \{\phi(\S)\}$, i.e.,
$$ D^h_\theta = \argmin_{\theta \in B_\theta} \big\{ \sup_{x \in B_x} \psi^o_h(\theta, x) \big\}, \quad \bar x^h =  \argmax_{x \in \sp \{\phi(\S)\}} \big\{ \inf_{\theta\in B_\theta} \psi^o_h(\theta, x) \big\}.$$
Moreover, regarding the convergence properties of the biased GTDa algorithm, the same conclusions of Theorem~\ref{thm-gtds}(i)-(iii) hold with $D^h_\theta \times \{ \bar x^h\}$ in place of $D_\theta \times \{\bar x\}$.

We now relate the minimax problem (\ref{eq-minimax-vrt}) and its saddle points $D^h_\theta \times \{ \bar x^h\}$ to the minimax problem (\ref{eq-minimax1}) solved by GTDa and its saddle points $D_\theta \times \{\bar x\}$.

\begin{lem}[Approximation property of (\ref{eq-minimax-vrt})] \label{lem-bias-apprsadd-lim}
Let $\{ h_K \mid K > 0\}$ be a family of bounded Lipschitz continuous functions that satisfy (\ref{eq-h})-(\ref{eq-h2}). Then
$\sup_{\theta \in B_\theta, x \in B_x}  \big| \psi^o_{h_K}(\theta, x) - \psi^o(\theta, x) \big| \to 0$ as $K \to \infty$,
and for any $\epsilon > 0$, there exist $K_\epsilon > 0$ such that
$$D^{h_K}_\theta \times \{ \bar x^{h_K}\} \subset N_\epsilon \big( D_\theta \times \{ \bar x\} \big), \qquad \forall \, K \geq K_\epsilon.$$
\end{lem}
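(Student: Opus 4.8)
The plan is to prove Lemma~\ref{lem-bias-apprsadd-lim} in two stages: first the uniform convergence of the objective functions, and then the convergence of the saddle-point sets, which follows from the former together with a compactness argument.

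For the first stage, I would subtract the two objectives using their explicit expressions. Recall $\psi^o(\theta,x) = \langle \Phi x, \Tl v_\theta - v_\theta\rangle_\xi - \tfrac12 \|\Phi x\|_\xi^2 + p(\theta)$, which by (\ref{eq-gtds-gk}) and the definitions $A=\Phi^\tr\Xi(\Pl-I)\Phi$, $C=\Phi^\tr\Xi\Phi$, $b=\Phi^\tr\Xi r_\pi^{(\lambda)}$ can be rewritten as $\psi^o(\theta,x) = x^\tr(A\theta+b) - \tfrac12 x^\tr C x + p(\theta)$, matching the form (\ref{eq-psih-vrt}) of $\psi^o_{h_K}$. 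Since the $-\tfrac12 x^\tr C x$ and $p(\theta)$ terms are identical, the difference is $\psi^o_{h_K}(\theta,x) - \psi^o(\theta,x) = x^\tr\big((A_{h_K}-A)\theta + (b_{h_K}-b)\big)$. On the compact set $B_\theta\times B_x$ this is bounded by a constant times $\|A_{h_K}-A\|_2 + \|b_{h_K}-b\|_2$. So it remains to show these matrix/vector norms vanish as $K\to\infty$. Comparing (\ref{eq-Ahbh}) with the GTDa quantities, $A = \E_\zeta[\e_0\cdot\rho_0(\gma_1\phi(S_1)-\phi(S_0))^\tr]$ and $b=\E_\zeta[\e_0\cdot\rho_0 r(S_0,S_1)]$, the differences are $\E_\zeta[(h_K(\e_0)-\e_0)\cdot\rho_0(\cdots)]$. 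These are controlled exactly as in the proof of Lemma~\ref{lem-bias-appr-lim}: by (\ref{eq-h}), $\|h_K(\e_0)-\e_0\| \le 2\|\e_0\|\I(\|\e_0\|>K)$, and since $\E_\zeta[\|\e_0\|]<\infty$ by Theorem~\ref{thm-erg}(ii), dominated convergence gives $\E_\zeta[\|h_K(\e_0)-\e_0\|]\to 0$. Because $\rho_0$ and the feature differences are bounded on the finite state space, this yields $\|A_{h_K}-A\|_2,\|b_{h_K}-b\|_2\to 0$, hence the uniform convergence claim.

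For the second stage, I would argue by contradiction using compactness, the uniqueness of the dual solution $\bar x$, and the uniform convergence just established. Suppose the saddle-set inclusion fails for some $\epsilon>0$: then there is a sequence $K_j\to\infty$ and saddle points $(\theta_j,x_j)\in D^{h_{K_j}}_\theta\times\{\bar x^{h_{K_j}}\}$ with $(\theta_j,x_j)\notin N_\epsilon(D_\theta\times\{\bar x\})$. Since these all lie in the compact set $B_\theta\times B_x$, pass to a convergent subsequence $(\theta_j,x_j)\to(\theta_\infty,x_\infty)$. The saddle-point inequalities for $\psi^o_{h_{K_j}}$ read $\psi^o_{h_{K_j}}(\theta,x_j)\ge\psi^o_{h_{K_j}}(\theta_j,x_j)\ge\psi^o_{h_{K_j}}(\theta_j,x)$ for all $\theta\in B_\theta$, $x\in B_x$. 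Taking $j\to\infty$ and using uniform convergence of $\psi^o_{h_{K_j}}$ to $\psi^o$ together with the continuity of $\psi^o$, the limit point $(\theta_\infty,x_\infty)$ satisfies the saddle-point inequalities for $\psi^o$, so it is a saddle point of (\ref{eq-minimax1}); thus $\theta_\infty\in D_\theta$ and $x_\infty\in D_x$. Because $x_j=\bar x^{h_{K_j}}\in\sp\{\phi(\S)\}$ and this subspace is closed, $x_\infty\in\sp\{\phi(\S)\}\cap D_x=\{\bar x\}$ by the uniqueness of the dual solution in $\sp\{\phi(\S)\}$. Hence $(\theta_\infty,x_\infty)\in D_\theta\times\{\bar x\}$, contradicting $(\theta_j,x_j)\notin N_\epsilon(D_\theta\times\{\bar x\})$ for all $j$. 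This establishes the inclusion for all sufficiently large $K$.

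The main obstacle I anticipate is the limiting argument in the second stage, specifically ensuring that the limit of dual optimal solutions $\bar x^{h_{K_j}}$ is forced to be the unique $\bar x$ rather than some other point of $D_x$. The uniform convergence of the objectives only guarantees that the limit lies in the full saddle set $D_\theta\times D_x$; pinning it down to $\{\bar x\}$ requires separately using that each $\bar x^{h_{K_j}}$ lies in $\sp\{\phi(\S)\}$ (which must be checked from the structure of the biased mean ODE and the fact that $b_{h_K}$ and the column space of $A_{h_K}$ lie in $\sp\{\phi(\S)\}$, as noted after (\ref{eq-ghkh})) combined with the uniqueness statement $D_x\cap\sp\{\phi(\S)\}=\{\bar x\}$ from Section~\ref{sec-gtds-ode}. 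I would make sure to verify that the strict concavity and coercivity of $\inf_{\theta\in B_\theta}\psi^o(\theta,\cdot)$ on $\sp\{\phi(\S)\}$, which gave uniqueness of $\bar x$, is genuinely invoked here; this is the delicate point where the two potential subtleties (set-valued limits and the subspace restriction) interact.
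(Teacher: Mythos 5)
Your proof is correct, and the first stage coincides with the paper's argument line for line: the same reduction of $\psi^o_{h_K}-\psi^o$ to the bilinear term $x^\tr (A_{h_K}-A)\theta + x^\tr(b_{h_K}-b)$, and the same appeal to $\E_\zeta\big[\|h_K(\e_0)-\e_0\|\big] \to 0$ from the proof of Lemma~\ref{lem-bias-appr-lim}. Where you genuinely diverge is the second stage. The paper also argues by contradiction with a convergent subsequence of saddle points $(\theta_i, x_i) \to (\theta_\infty, x_\infty)$, but it identifies the limit as a saddle point through the first-order conditions: it writes $(0,0)\in\partial\psi_{h_i}(\theta_i,x_i)$ as $-\nabla_\theta \psi^o_{h_i}(\theta_i,x_i)\in\N_{B_\theta}(\theta_i)$ and $\nabla_x \psi^o_{h_i}(\theta_i,x_i)\in\N_{B_x}(x_i)$, invokes Rockafellar's theorem on convergence of gradients of convergent convex functions (\cite[Theorem~35.7]{Roc70}) to get $\nabla\psi^o_{h_i}(\theta_i,x_i)\to\nabla\psi^o(\theta_\infty,x_\infty)$, and then uses the outer semicontinuity of the normal-cone mappings (\cite[Proposition~6.6]{RoW98}) to pass the inclusions to the limit. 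You instead pass to the limit directly in the zeroth-order saddle-point inequalities $\psi^o_{h_{K_j}}(\theta,x_j)\geq \psi^o_{h_{K_j}}(\theta_j,x_j)\geq \psi^o_{h_{K_j}}(\theta_j,x)$, using only the uniform convergence from the first stage and the continuity of $\psi^o$; this is more elementary and needs no variational-analysis machinery. Both arguments then finish identically: $x_\infty\in\sp\{\phi(\S)\}$ because each dual solution lies in that closed subspace, and the uniqueness $D_x\cap\sp\{\phi(\S)\}=\{\bar x\}$ (from the strict concavity and coercivity of $\inf_{\theta\in B_\theta}\psi^o(\theta,\cdot)$ on $\sp\{\phi(\S)\}$) pins down $x_\infty=\bar x$ — exactly the delicate point you flagged, and you handle it the same way the paper does. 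What the paper's subgradient route buys is robustness in settings where one has only graphical convergence of the gradient mappings rather than uniform convergence of the functions; here, since uniform convergence is established anyway, your inequality-based argument is the leaner of the two.
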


\begin{proof}
The proof is similar to that of Lemma~\ref{lem-bias-appr-lim}. We can express the convex-concave function $\psi^o$ as 
$\psi^o(\theta, x) = x^\tr (A \theta + b) - \tfrac{1}{2} x^\tr C x + p(\theta)$, where
$A = \E_\zeta \big[ \e_0 \, \cdot \rho_0 \big(\gma_{1}  \phi(S_{1}) - \phi(S_0) \big)^\tr \big]$ and $b = \E_\zeta \big[ \e_0 \, \cdot \rho_0 r(S_0, S_1) \big]$.
Then 
$$ \psi^o_{h_K}(\theta, x) - \psi^o(\theta, x)  = x^\tr (A_{h_K} - A) \theta +  x^\tr (b_{h_K} - b).$$
To prove $\sup_{\theta \in B_\theta, x \in B_x}  \big| \psi^o_{h_K}(\theta, x) - \psi^o(\theta, x) \big| \to 0$ as $K \to \infty$, since $\theta$ and $x$ are confined in bounded sets, it is sufficient to prove that 
$ \| A_{h_K} - A \| \to 0$ and $\| b_{h_K} - b \| \to 0$ as $K \to \infty$. In turn, in view of the expressions of these matrices and vectors, it is sufficient to prove
$ E_\zeta \big[ \big\|  h_K(\e_0) - \e_0  \big\|  \big]  \to 0$ as $K \to \infty$, which we already proved in the proof of Lemma~\ref{lem-bias-appr-lim}.

For the second part of the lemma, we use proof by contradiction. Suppose the statement does not hold. Then there exist some $\epsilon > 0$, a sequence $\{K_i\}$ with $K_i \to \infty$ as $i \to \infty$, and a sequence of points $(\theta_i, x_i) \in D^{h_i}_\theta \times \{ \bar x^{h_i}\}$ with $h_i = h_{K_i}$, such that $(\theta_i, x_i) \not\in N_\epsilon \big( D_\theta \times \{ \bar x\} \big)$ for all $i$.
Since all the points lie in the compact set $B_\theta \times B_x$,
by choosing a subsequence if necessary, we can assume that the sequence $\{(\theta_i, x_i)\}$ converges to $(\theta_\infty, x_\infty) \in B_\theta \times B_x$.
Then $x_\infty \in \sp \{\phi(\S)\}$ (since all the $x_i$'s lie in this subspace), and $(\theta_\infty, x_\infty) \not\in D_\theta \times \{ \bar x\}$. 

On the other hand, each $(\theta_i, x_i)$ is a saddle point of the minimax problem (\ref{eq-minimax-vrt}) for the corresponding function $\psi^o_{h_i}$. Therefore, 
$(0,0) \in \partial \psi_{h_i}(\theta_i, x_i)$, where $\psi_{h_i}(\theta, x) = \psi^o_{h_i}(\theta, x) + \delta_{B_\theta}(\theta) - \delta_{B_x}(x)$; or equivalently,
\begin{equation} \label{eq-gtdsv-prf0}
  - \nabla_\theta \psi^o_{h_i}(\theta_i, x_i) \in \N_{B_\theta}(\theta_i), \qquad \nabla_x \psi^o_{h_i}(\theta_i, x_i) \in \N_{B_x}(x_i).
\end{equation}  
By \cite[Theorem 35.7]{Roc70}, the convergence of $\psi^o_{h_i}$ to $\psi^o$ proved above implies that as $(\theta_i, x_i) \to (\theta_\infty, x_\infty)$,
\begin{equation} \label{eq-gtdsv-prf1}
\nabla \psi^o_{h_i}(\theta_i, x_i) \to \nabla \psi^o(\theta_\infty,x_\infty).
\end{equation}
Then by the outer semicontinuity of the set-value mappings $\N_{B_\theta}(\cdot)$ and $\N_{B_x}(\cdot)$ \cite[Proposition 6.6]{RoW98}, (\ref{eq-gtdsv-prf0}) together with (\ref{eq-gtdsv-prf1}) implies that
$$ - \nabla_\theta \psi^o(\theta_\infty, x_\infty) \in \N_{B_\theta}(\theta_\infty), \qquad \nabla_x \psi^o(\theta_\infty, x_\infty) \in \N_{B_x}(x_\infty)$$ 
or equivalently, $(0,0) \in \partial \psi(\theta_\infty, x_\infty)$ (where $\psi(\theta, x) = \psi^o(\theta, x) + \delta_{B_\theta}(\theta) - \delta_{B_x}(x)$ as we recall). This shows that $(\theta_\infty, x_\infty)$ is a saddle point of the minimax problem (\ref{eq-minimax1}), so, since $x_\infty \in \sp \{\phi(\S)\}$, $(\theta_\infty, x_\infty) \in D_\theta \times \{ \bar x\}$, a contradiction. This proves the second part of the lemma.
\end{proof}

Combining the preceding lemma with the convergence properties of the algorithm mentioned earlier, we obtain the following convergence theorem.

\begin{thm} \label{thm-gtds-bvrt}
For the case of state-dependent $\lambda$, consider the biased single-time-scale GTDa algorithm (\ref{eq-gtdsv-th})-(\ref{eq-gtdsv-x}) under Assumption~\ref{cond-collective}. In the algorithm, let the function $h \in \{ h_K \mid K > 0\}$, a family of bounded Lipschitz continuous functions that satisfy (\ref{eq-h})-(\ref{eq-h2}), and let the initial $x(0), \e_0 \in \sp \{\phi(\S)\}$. Then for each $\epsilon > 0$, there exists $K_\epsilon > 0$ such that if $h=h_K$ with $K \geq K_\epsilon$, the conclusions of Theorem~\ref{thm-gtds}(i)-(iii) hold for the algorithm (with ``for any $\epsilon > 0$'' removed from the statements). 
Moreover, as in Theorem~\ref{thm-gtds}, if the constraint set $B_x$ contains $x_{\rm \text{\rm opt}}$ in its interior, then in the conclusions, the set $D_\theta \times \{\bar x\} =\Theta_{\text{\rm opt}} \times \{ x_{\text{\rm opt}} \}$.
\end{thm}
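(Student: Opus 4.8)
The plan is to obtain Theorem~\ref{thm-gtds-bvrt} by gluing together two facts that are already in hand, since the substantive work has been front-loaded into the mean-ODE analysis and the approximation lemma. The first ingredient is the convergence statement for the algorithm at a \emph{fixed} bias level: for each fixed $h=h_K$, the biased GTDa (\ref{eq-gtdsv-th})-(\ref{eq-gtdsv-x}) has the mean ODE (\ref{eq-gtdsv-ode}), whose structure matches (\ref{eq-gtds-ode}), so Prop.~\ref{prop-sd-limitset} applies verbatim with (\ref{eq-minimax-vrt}) in place of (\ref{eq-minimax1}) and identifies the limit set as $D^{h_K}_\theta \times \{\bar x^{h_K}\}$. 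Running the same stochastic-approximation argument as for Theorem~\ref{thm-gtds} (its conditions now holding even more easily, since the boundedness of $h_K$ makes $g_h,k_h$ bounded on $B_\theta\times B_x\times\Z$) yields the conclusions of Theorem~\ref{thm-gtds}(i)-(iii) with $D^{h_K}_\theta \times \{\bar x^{h_K}\}$ replacing $D_\theta\times\{\bar x\}$. This is exactly what the discussion preceding the theorem already records, so I would simply cite it. The second ingredient is the set inclusion $D^{h_K}_\theta \times \{\bar x^{h_K}\} \subset N_\epsilon(D_\theta \times \{\bar x\})$ for all $K$ large, supplied by Lemma~\ref{lem-bias-apprsadd-lim}.

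For parts (i)-(ii) the gluing is a neighborhood-inclusion argument. Given $\epsilon>0$, use Lemma~\ref{lem-bias-apprsadd-lim} to pick $K_\epsilon$ so that $D^{h_K}_\theta \times \{\bar x^{h_K}\} \subset N_{\epsilon/2}(D_\theta \times \{\bar x\})$ whenever $K\ge K_\epsilon$, whence $N_{\epsilon/2}(D^{h_K}_\theta \times \{\bar x^{h_K}\}) \subset N_\epsilon(D_\theta \times \{\bar x\})$. Applying the established convergence at level $\delta=\epsilon/2$, the event $\{(\theta_i,x_i)\notin N_\epsilon(D_\theta\times\{\bar x\})\}$ is contained in $\{(\theta_i,x_i)\notin N_{\epsilon/2}(D^{h_K}_\theta\times\{\bar x^{h_K}\})\}$, so the $\limsup$ of its probability vanishes; likewise $\text{dist}\big((\theta_n,x_n),D_\theta\times\{\bar x\}\big)\le \text{dist}\big((\theta_n,x_n),D^{h_K}_\theta\times\{\bar x^{h_K}\}\big)+\epsilon/2$ gives $\limsup_n\E[\text{dist}]\le\epsilon/2$ in the diminishing-stepsize case. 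This is precisely the content of Theorem~\ref{thm-gtds}(i)-(ii) with ``for any $\epsilon>0$'' removed.

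Part (iii) requires pushing the same inclusion through the Cartesian-product neighborhoods of Theorem~\ref{thm-gtd1-average}. Since $D^{h_K}_\theta \times \{\bar x^{h_K}\}\subset N_{\epsilon/2}(D_\theta\times\{\bar x\})$, we get $[N'_{\epsilon/2}(D^{h_K}_\theta\times\{\bar x^{h_K}\})]^{m}\subset [N'_\epsilon(D_\theta\times\{\bar x\})]^{m}$, so the probability mass assigned by any $\mu\in\bar{\mathcal{I}}^{\alpha}_m$ only increases under the enlargement; the $\liminf$-equals-one claim (and its stronger $m_\alpha=\lceil m/\alpha\rceil$ version) and the $\E[\max\,\text{dist}]$ bound then transfer by monotonicity of $\kappa^{\alpha}_m$ in the neighborhood. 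I expect this bookkeeping to be the only genuinely fussy step, because the quantifier order ``for each $\epsilon$ there is a $K_\epsilon$'' must be kept consistent while simultaneously sending $\alpha\to0$ and letting the product dimension grow; everything else is immediate. Finally, the ``moreover'' clause is exactly Lemma~\ref{lem-saddle-relation}: when $x_{\text{opt}}\in\text{int}(B_x)$ that lemma gives $D_\theta\times\{\bar x\}=\Theta_{\text{opt}}\times\{x_{\text{opt}}\}$, and substituting this equality into the conclusions just obtained completes the proof.
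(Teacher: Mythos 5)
Your proposal is correct and follows essentially the same route as the paper: the paper likewise obtains the conclusions of Theorem~\ref{thm-gtds}(i)--(iii) for the biased algorithm with $D^{h}_\theta \times \{\bar x^{h}\}$ in place of $D_\theta \times \{\bar x\}$ (via the identical mean-ODE/minimax analysis, since (\ref{eq-minimax-vrt}) has the same structure as (\ref{eq-minimax1})), then invokes Lemma~\ref{lem-bias-apprsadd-lim} to place $D^{h_K}_\theta \times \{\bar x^{h_K}\}$ inside $N_\epsilon(D_\theta \times \{\bar x\})$ for $K \geq K_\epsilon$, and the ``moreover'' clause via Lemma~\ref{lem-saddle-relation}. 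The only difference is that the paper states the final combination in one sentence, whereas you spell out the neighborhood-inclusion bookkeeping explicitly; that bookkeeping is sound.
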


\subsubsection{Further remarks} \label{sec-gtds-eta}

Finally, we remark on another variation of the GTDa algorithm and show how the analysis given in this subsection applies. In \cite{maei11,gtd09} the unconstrained single-time-scale GTDa algorithm can use an additional parameter $\eta > 0$ to scale the stepsizes $\alpha_n$ for the $x$-iterates:
\begin{align}
   \theta_{n+1} & = \theta_n + \alpha_n \,  \rho_n \big(\phi(S_n) - \gma_{n+1} \phi(S_{n+1}) \big) \cdot \e_n^\tr x_n,  \label{gtd1-th-eta}\\
   x_{n+1} & = x_n + \eta \alpha_n \big(\e_n \delta_n(v_{\theta_n}) - \phi(S_n) \phi(S_n)^\tr x_n\big). \label{gtd1-x-eta}
\end{align} 
The analyses \cite{maei11,gtd09} of this linear update rule equate it to the following recursion via a change of variable $\tilde x = x/\sqrt{\eta}$ or $x =  \sqrt{\eta} \, \tilde x$:
\begin{align}
   \theta_{n+1} & = \theta_n +  \alpha_n \, \sqrt{\eta} \rho_n \big(\phi(S_n) - \gma_{n+1} \phi(S_{n+1}) \big) \cdot \e_n^\tr \tilde x_n, \label{gtd1-th-eta-alt} \\
   \tilde x_{n+1} & = \tilde x_n + \alpha_n \sqrt{\eta} \big(\e_n \delta_n(v_{\theta_n}) - \sqrt{\eta} \phi(S_n) \phi(S_n)^\tr \tilde x_n\big). \label{gtd1-x-eta-alt}
\end{align} 

A constrained version of the above algorithm, for minimizing the regularized objective function $J(\theta) + p(\theta)$, is
\begin{align}
   \theta_{n+1} & = \Pi_{B_\theta} \Big( \theta_n + \alpha_n \,  \rho_n \big(\phi(S_n) - \gma_{n+1} \phi(S_{n+1}) \big) \cdot \e_n^\tr x_n - \alpha_n \nabla p(\theta_n) \Big),  \label{gtd1-th-eta-cnstr}\\
   x_{n+1} & = \Pi_{B_x} \Big( x_n + \eta \alpha_n \big(\e_n \delta_n(v_{\theta_n}) - \phi(S_n) \phi(S_n)^\tr x_n\big) \Big). \label{gtd1-x-eta-cnstr}
\end{align} 
Scaling the coordinates by $1/\sqrt{\eta}$ for the $x$-iterates as before, and with $\tilde B_x = B_x/\sqrt{\eta}$ and with the correspondence $x_n = \sqrt{\eta} \, \tilde x_n$,  we see that (\ref{gtd1-th-eta-cnstr})-(\ref{gtd1-x-eta-cnstr}) are equivalent to 
\begin{align}
    \theta_{n+1} & = \Pi_{B_\theta} \Big(  \theta_n +  \alpha_n \, \sqrt{\eta} \rho_n \big(\phi(S_n) - \gma_{n+1} \phi(S_{n+1}) \big) \cdot \e_n^\tr \tilde x_n - \alpha_n \nabla p(\theta_n) \Big), \label{gtd1-th-eta-cnstr-alt} \\
   \tilde x_{n+1} & = \Pi_{\tilde B_x} \Big(  \tilde x_n + \alpha_n \sqrt{\eta} \big(\e_n \delta_n(v_{\theta_n}) - \sqrt{\eta} \phi(S_n) \phi(S_n)^\tr \tilde x_n\big) \Big). \label{gtd1-x-eta-cnstr-alt}
\end{align} 
Thus one can carry out this constrained algorithm in either form, and to analyze its behavior, it suffices to analyze the behavior of (\ref{gtd1-th-eta-cnstr-alt})-(\ref{gtd1-x-eta-cnstr-alt}). The latter can be understood by recognizing the minimax problem associated with (\ref{gtd1-th-eta-cnstr-alt})-(\ref{gtd1-x-eta-cnstr-alt}). 

This minimax problem is similar to the minimax problem (\ref{eq-minimax1}) discussed earlier and has the same structure. Similar to the derivation of (\ref{eq-minimax0}) and (\ref{eq-minimax1}) in Section~\ref{sec-gtds-ode}, for any $a > 0$, we can express $\tfrac{1}{a} J(\theta)$ as the optimal value of a maximization problem using the identity relation  
$$\tfrac{1}{2 a} \| v \|_\xi^2  = \sup_y \big\{ \langle y, v \rangle_\xi - \tfrac{a}{2} \| y \|^2_\xi \big\}, \qquad \forall \, v \in \re^{|\S|},$$
and then convert the minimization problem $\inf_{\theta \in B_\theta} \{ J(\theta) + p(\theta) \} = \inf_{\theta \in B_\theta} \{ a \cdot \tfrac{1}{a} J(\theta) + p(\theta)\}$ to the equivalent minimax problem
\begin{equation} \label{eq-minimax0-eta}
  \inf_{\theta \in B_\theta} \sup_{\tilde x \in \re^d} \Big\{ a \, \langle \Phi \tilde x, \, \Tl v_\theta - v_\theta \rangle_\xi - \frac{a^2}{2} \| \Phi \tilde x \|^2_\xi  + p(\theta) \Big\}.
\end{equation} 
Adding the constraint set $\tilde B_x$ for the maximizer then gives the minimax problem associated with (\ref{gtd1-th-eta-cnstr-alt})-(\ref{gtd1-x-eta-cnstr-alt}):
\begin{equation} \label{eq-minimax1-eta}
  \inf_{\theta \in B_\theta} \sup_{\tilde x \in \tilde B_x} \Big\{ a \, \langle \Phi \tilde x, \, \Tl v_\theta - v_\theta \rangle_\xi - \frac{a^2}{2} \| \Phi \tilde x \|^2_\xi  + p(\theta) \Big\} \quad \text{where} \  a = \sqrt{\eta}.
\end{equation} 

The two minimax problems above have the same structure as the minimax problems (\ref{eq-minimax0}) and (\ref{eq-minimax1}) discussed in Section~\ref{sec-gtds-ode}, and the reasoning we gave there applies here.
In particular, the mean ODE associated with (\ref{gtd1-th-eta-cnstr-alt})-(\ref{gtd1-x-eta-cnstr-alt}) corresponds to differential inclusion with the maximal monotone mapping associated with the minimax problem (\ref{eq-minimax1-eta}), and all the results given in Sections~\ref{sec-gtds-ode}-\ref{sec-gtds-conv} for the single-time-scale GTDa (where $\eta = 1$) hold for the algorithm (\ref{gtd1-th-eta-cnstr-alt})-(\ref{gtd1-x-eta-cnstr-alt}), with the minimax problems (\ref{eq-minimax0-eta}) and (\ref{eq-minimax1-eta}) in place of (\ref{eq-minimax0}) and (\ref{eq-minimax1}), respectively. (These results then translate easily to the alternative form of the algorithm, (\ref{gtd1-th-eta-cnstr})-(\ref{gtd1-x-eta-cnstr}), via the correspondence $x_n = \sqrt{\eta} \, \tilde x_n$.) Similarly, the results given in Section~\ref{sec-gtds-bias}, which relate the biased variant of GTDa to GTDa in the case of $\eta = 1$, also hold for the biased variant of the algorithm (\ref{gtd1-th-eta-cnstr-alt})-(\ref{gtd1-x-eta-cnstr-alt}) or (\ref{gtd1-th-eta-cnstr})-(\ref{gtd1-x-eta-cnstr}) with $\eta > 0$, except that now the minimax problems involved are the problem (\ref{eq-minimax1-eta}) and its approximations.

Although the line of convergence analysis is the same for different values of $\eta$, the behavior of the algorithm is certainly affected by $\eta$. This is reflected by the different minimax problems corresponding to different $\eta$, as shown above. In addition, we can also see the differences by considering two extreme cases. Suppose we run the algorithm with a small constant stepsize, and let us examine the algorithm of the form (\ref{gtd1-th-eta-cnstr})-(\ref{gtd1-x-eta-cnstr}). If $\eta$ is very large, the algorithm resembles the two-time-scale GTDa algorithm with the $x$-iterates (which correspond to the maximizer) running at the fast time-scale. If $\eta < < 1$, however, then the algorithm again resembles a two-time-scale algorithm but with the $\theta$-iterates (which correspond to the minimizer) running at the fast time-scale.

\subsection{Two-Time-Scale GTDa Revisited} \label{sec-gtda-revisit}

In this subsection we revisit the two-time-scale GTDa algorithm and its biased variant. Recall that in Section~\ref{sec-3} we regarded them as (stochastic) gradient-descent and approximate gradient-descent algorithms for minimizing $J(\theta)$. There, for GTDa, we required that the constraint set $B_x$ satisfies the condition~(\ref{eq-cond-Bx0}): $B_x \supset \{x_\theta \mid \theta \in B_\theta\}$, in order to ensure that the gradients $\nabla J(\theta)$ can be computed by the algorithm for all $\theta \in B_\theta$. This condition is, however, unnecessarily strong, as we already remarked (cf.\ Remark~\ref{rem-Bx} near the end of Section~\ref{sec-gtds-conv}). Using the minimax problem formulation, we can now address the behavior of the two-time-scale GTDa algorithm when $B_x \not\supset \{x_\theta \mid \theta \in B_\theta\}$ and the algorithm cannot compute all the gradients $\nabla J(\theta)$. In this case, as will be shown below, the algorithm is using gradient-ascent to solve the inner maximization problem in the minimax problem (\ref{eq-minimax1}) at the fast time-scale, and using gradient-descent to solve the outer minimization problem in (\ref{eq-minimax1}) at the slow time-scale. Thus we can weaken the condition on $B_x$ and require merely that $B_x$ contains $x_{\rm \text{\rm opt}}$ in its interior, without losing the convergence properties of GTDa derived earlier in Section~\ref{sec-3}. Similarly, for the two-time-scale biased variant of GTDa, the condition on $B_x$ can also be weakened. 

The result is summarized in the theorem below. The objective function we consider here is the regularized objective function $J_p$ as in Section~\ref{sec-gtds}, so the algorithms we refer to in the theorem have, in their $\theta$-iterations, the gradient term $- \alpha_n \nabla p(\theta_n)$, just as their single-time-scale counterparts discussed in Section~\ref{sec-gtds}. 

\begin{thm} \label{thm-gtda-relaxBx}
For the two-time-scale GTDa algorithm and its biased variant, Theorems~\ref{thm-gtd1-wk-dim}-\ref{thm-gtd1-average} and \ref{thm-biasgtd1-wk}-\ref{thm-composite-gtd} still hold if we replace the condition on $B_x$ in these theorems by the weaker condition that $x_{\rm \text{\rm opt}} \in \text{\rm int}(B_x)$, where $x_{\rm \text{\rm opt}}$ is the unique dual optimal solution of the minimax problem (\ref{eq-minimax0}) in $\sp \{ \phi(\S)\}$.
\end{thm}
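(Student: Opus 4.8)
The plan is to treat the two-time-scale GTDa algorithm (with the regularizer term $-\alpha_n\nabla p(\theta_n)$ in the $\theta$-iteration) as a two-time-scale stochastic approximation scheme for the minimax problem (\ref{eq-minimax1}), where the maximizer $x$ operates at the fast time-scale and the minimizer $\theta$ at the slow time-scale. The key observation is that the functions $\bar k(\theta,x)$ and $\bar g(\theta,x)-\nabla p(\theta)$ appearing in the mean ODEs of the two-time-scale GTDa are exactly the two components of the subgradient data of $\psi$ identified in Section~\ref{sec-gtds} (cf.\ (\ref{eq-psi-subgrad1})-(\ref{eq-psi-subgrad2}) and (\ref{eq-gtds-gk})). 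The only structural difference from Section~\ref{sec-3} is that we no longer assume $B_x \supset \{x_\theta \mid \theta \in B_\theta\}$; instead we assume only $x_{\text{opt}} \in \text{int}(B_x)$. So the whole effort reduces to re-deriving the solution properties of the two mean ODEs under this weaker condition, and then reusing verbatim the stochastic-approximation machinery of Sections~\ref{sec-gtd1-cond}-\ref{sec-gtd1conv-constant}.

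First I would handle the fast time-scale. With $\theta$ frozen at a value $\theta \in B_\theta$, the fast ODE is (\ref{eq-gtd1-odefast}) with reflection on $B_x$. Since $\bar k(\theta,x) = -\nabla_x f_\theta(x)$ for the strictly concave-in-$\sp\{\phi(\S)\}$ quadratic $f_\theta(x) = \langle \Phi x, \Tl v_\theta - v_\theta\rangle_\xi - \tfrac12\|\Phi x\|_\xi^2$, this is projected gradient-ascent maximizing $f_\theta$ over $B_x$. The crucial point is that, without the inclusion (\ref{eq-cond-Bx0}), the maximizer is no longer $x_\theta$ but the constrained maximizer
$$\hat x(\theta) := \argmax_{x \in B_x \cap \sp\{\phi(\S)\}} f_\theta(x),$$
which is a well-defined continuous function of $\theta$ by strict concavity of $f_\theta$ on $\sp\{\phi(\S)\}$ and compactness/convexity of $B_x$. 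A Lyapunov argument identical to the proof of Lemma~\ref{lem-Bx}, but with $V_\theta(x)=\tfrac12\|x-\hat x(\theta)\|_2^2$ and using that the reflection term satisfies $\langle x-\hat x(\theta),z(t)\rangle \le 0$ (since $-z(t)\in\N_{B_x}(x)$ and $\hat x(\theta)\in B_x$), shows that the limit set of the fast ODE is $\{(\theta,\hat x(\theta)) \mid \theta \in B_\theta\}$. Thus the $x$-iterates track $\hat x(\theta_n)$ rather than $\bar x(\theta_n)$.

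Next I would turn to the slow time-scale, where the effective drift becomes $\bar g(\theta,\hat x(\theta)) - \nabla p(\theta)$. The main obstacle—and the reason the weaker $B_x$ condition is still enough—is to show that the limit set of the resulting slow ODE is $D_\theta$, the primal optimal set of the constrained minimax problem (\ref{eq-minimax1}), and that $x_{\text{opt}}\in\text{int}(B_x)$ forces $D_\theta=\Theta_{\text{opt}}$ and $\hat x(\theta)\big|_{D_\theta}=x_{\text{opt}}=\bar x$. Here I would invoke Lemma~\ref{lem-saddle-relation} directly: it already proves $D_\theta\times\{\bar x\}=\Theta_{\text{opt}}\times\{x_{\text{opt}}\}$ whenever $x_{\text{opt}}\in\text{int}(B_x)$. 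The remaining work is to show the slow ODE descends toward $D_\theta$; the clean way is to observe that the composite drift $\theta\mapsto \bar g(\theta,\hat x(\theta))-\nabla p(\theta)$ is minus a subgradient of the convex function $\varphi(\theta):=\sup_{x\in B_x}\psi^o(\theta,x) = J_{B_x}(\theta)+p(\theta)$ (the constrained-maximized objective), by Danskin's theorem together with the fact that $\hat x(\theta)$ is the unique maximizer in $\sp\{\phi(\S)\}$. Then $V(\theta)=\varphi(\theta)$ serves as a Lyapunov function and the proof of Lemma~\ref{lem-gtd1-odeslow} applies verbatim with $\varphi$ in place of $J$, giving limit set $\argmin_{\theta\in B_\theta}\varphi(\theta)=D_\theta$. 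Under $x_{\text{opt}}\in\text{int}(B_x)$ this equals $\Theta_{\text{opt}}$ by Lemma~\ref{lem-saddle-relation}.

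Finally, I would note that all the stochastic-approximation conditions (i)-(vii) of Section~\ref{sec-gtd1-cond} depend only on the functions $g,k$ and the ergodicity/uniform-integrability of the traces—none of which change when $B_x$ is shrunk—so they hold exactly as verified before, with $\hat x(\theta)$ (continuous, hence uniformly continuous on the bounded $B_\theta$) playing the role of $\bar x(\theta)$ in conditions (vi)-(vii) and in Remark~\ref{rmk-proofgtd}. Consequently the convergence proofs of Theorems~\ref{thm-gtd1-wk-dim}-\ref{thm-gtd1-average} go through unchanged, now yielding convergence of $\theta_n$ to $D_\theta=\Theta_{\text{opt}}$. The biased-variant case is identical after replacing $\bar k,\bar g$ by $\bar k_h,\bar g_h$ and $\psi^o$ by $\psi^o_h$, invoking the corresponding results from Section~\ref{sec-gtds-bias}; I expect the only subtlety there to be re-checking that $x_{\text{opt}}\in\text{int}(B_x)$ can be replaced by its biased analogue $x^h_{\text{opt}}\in\text{int}(B_x)$ for $K$ large, which follows from the approximation Lemma~\ref{lem-bias-apprsadd-lim}.
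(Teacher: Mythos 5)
Your proposal is correct and follows essentially the same route as the paper: the constrained maximizer $\hat x(\theta)$ is exactly the paper's $\tilde x(\theta)$, your Danskin argument for the composite drift being $-\nabla\bigl(\sup_{x\in B_x}\psi^o(\theta,x)\bigr)$ is the paper's Lemma~\ref{lem-gtda-rev1}, your fast-ODE Lyapunov argument is Lemma~\ref{lem-Bx2}, and the identification of the slow-ODE limit set with $D_\theta=\Theta_{\text{\rm opt}}$ via Lemma~\ref{lem-saddle-relation} matches the paper. The only cosmetic difference is in the biased variant, where the paper does not need any interiority of a ``biased dual optimum'': it simply uses Lemma~\ref{lem-bias-apprsadd-lim} to get $D^{h_K}_\theta\subset N_\epsilon(D_\theta)=N_\epsilon(\Theta_{\text{\rm opt}})$ for large $K$, which is all that Theorem~\ref{thm-biasgtd1-wk} requires.
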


In the rest of this subsection, we provide the proof details. To prove this theorem, we only need to re-characterize, under the new condition on $B_x$, the limit sets of the mean ODEs associated with the two algorithms. We will show that the fast-time-scale mean ODEs still have solution properties that satisfy the requirements in order to apply the ODE-based stochastic approximation analysis given earlier in Section~\ref{sec-3}. We will then consider the limit sets of the slow-time-scale mean ODEs, and show that in the case of GTDa, the limit set is the desired optimal solution set $\Theta_{\text{opt}}$, and in the case of the biased variant, it is a set contained in a neighborhood of $\Theta_{\text{opt}}$ whose size decreases as the bias introduced by the algorithm decreases. Theorem~\ref{thm-gtda-relaxBx} then follow from these arguments.

Let us start with GTDa. Recall the minimax problem (\ref{eq-minimax1}): $\inf_{\theta \in B_\theta} \sup_{x \in B_x} \psi^o(\theta, x)$, where $B_x$ is a ball centered at the origin as before and with an arbitrary radius. 
For each $\theta$, let
\begin{equation} \label{eq-tJp}
 \tilde J_p(\theta) = \sup_{x \in B_x} \psi^o(\theta, x) = \sup_{x \in B_x} \big\{ \langle \Phi x, \Tl v_\theta - v_\theta \rangle_\xi - \tfrac{1}{2} \| \Phi x \|^2_\xi \big\} + p(\theta).
\end{equation} 
For the maximization problem above, denote by $\tilde x(\theta)$ its unique optimal solution in $\sp \{\phi(\S)\}$. 

\begin{lem} \label{lem-gtda-rev1}
The convex function $\tilde J_p(\theta)$ is differentiable with $\nabla \tilde J_p(\theta) = \nabla_\theta \psi^o(\theta, \tilde x(\theta))$, and $\tilde x(\theta)$ is a continuous function of $\theta$.
\end{lem}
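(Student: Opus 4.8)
The plan is to establish three things in sequence: the differentiability of $\tilde J_p$, the formula for its gradient, and the continuity of $\tilde x(\theta)$. The central structural fact I would exploit is that the inner maximization in (\ref{eq-tJp}) is the maximization of a \emph{strictly concave, coercive} function of $x$ over $\sp\{\phi(\S)\}$: the term $-\tfrac12\|\Phi x\|_\xi^2 = -\tfrac12 x^\tr C x$ with $C=\Phi^\tr\Xi\,\Phi$ is strictly concave and coercive on $\sp\{\phi(\S)\}$ (since $C$ is positive definite there, by the argument already used in Section~\ref{sec-gtds-ode}), while the remaining term $\langle\Phi x,\Tl v_\theta - v_\theta\rangle_\xi$ is linear in $x$. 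Consequently, for each $\theta$, the maximizer over $\sp\{\phi(\S)\}$ is unique and equals $\tilde x(\theta)$, which justifies treating $\tilde x(\theta)$ as a genuine function rather than a set.

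First I would prove the continuity of $\tilde x(\theta)$, since both other claims lean on it. The maximizer $\tilde x(\theta)$ is characterized by the optimality condition for $\sup_{x\in B_x}\psi^o(\theta,x)$ restricted to $\sp\{\phi(\S)\}$, namely $\nabla_x\psi^o(\theta,\tilde x(\theta)) = \Phi^\tr\Xi(\Tl v_\theta - v_\theta) - C\,\tilde x(\theta) \in \N_{B_x}(\tilde x(\theta))$. Because $\Phi^\tr\Xi(\Tl v_\theta - v_\theta)$ is an affine (hence continuous) function of $\theta$ and $\tilde x(\theta)$ stays in the bounded set $B_x$, a standard subsequence argument applies: for $\theta\to\theta'$, any convergent subsequence of $\tilde x(\theta)$ has a limit that satisfies the optimality condition for $\theta'$ by outer semicontinuity of the normal-cone map $\N_{B_x}(\cdot)$ (cf.\ \cite[Proposition 6.6]{RoW98}); by uniqueness of the maximizer in $\sp\{\phi(\S)\}$ the limit must be $\tilde x(\theta')$, giving $\tilde x(\theta)\to\tilde x(\theta')$. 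This is essentially the continuity argument already carried out in Lemma~\ref{lem-vrt-xth}, so I would simply mirror it.

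For differentiability and the gradient formula, the clean route is Danskin's theorem (the envelope theorem for max-functions). The function $\tilde J_p(\theta)=\sup_{x\in B_x}\psi^o(\theta,x)$ is a supremum of functions $\psi^o(\cdot,x)$ that are convex and differentiable in $\theta$ with $\nabla_\theta\psi^o(\theta,x) = -\bar g(\theta,x)+\nabla p(\theta)$ jointly continuous in $(\theta,x)$; since the maximizing set over the compact $B_x$ is the single point $\tilde x(\theta)$ (uniqueness over $\sp\{\phi(\S)\}$ suffices once I note, as in the proof of Lemma~\ref{lem-vrt-xth} and in Prop.~\ref{prop-sd-limitset}, that $\Phi^\tr\Xi(\Tl v_\theta - v_\theta)\in\sp\{\phi(\S)\}$ so the maximizer is determined within that subspace), Danskin's theorem yields that $\tilde J_p$ is differentiable with $\nabla\tilde J_p(\theta)=\nabla_\theta\psi^o(\theta,\tilde x(\theta))$, which is exactly the asserted formula. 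The convexity of $\tilde J_p$ is immediate, being a pointwise supremum of the convex functions $\psi^o(\cdot,x)$.

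The main obstacle is the subtlety that the inner maximizer is unique only within $\sp\{\phi(\S)\}$, not in all of $B_x\subset\re^d$: the objective $\psi^o(\theta,\cdot)$ is constant along $\sp\{\phi(\S)\}^\perp$, so the full set of maximizers in $B_x$ can be a nondegenerate set whenever $\Phi$ lacks full column rank. A careless application of Danskin's theorem (which wants a \emph{unique} maximizer) could therefore fail. I would resolve this by observing that the value $\psi^o(\theta,x)$ and its $\theta$-gradient $\nabla_\theta\psi^o(\theta,x)=-\rho\,(\phi(s)-\gamma\phi(s'))\cdot$-type expression depend on $x$ only through $\Phi x$, i.e.\ only through the $\sp\{\phi(\S)\}$-component of $x$; hence every maximizer shares the same value of $\nabla_\theta\psi^o(\theta,\cdot)$, and Danskin's conclusion goes through with the whole maximizing set, the common gradient value being $\nabla_\theta\psi^o(\theta,\tilde x(\theta))$. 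Making this reduction explicit is the one place requiring care, and I would state it as the key lemma-internal observation before invoking the envelope theorem.
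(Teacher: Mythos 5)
Your proposal is correct and follows essentially the same route as the paper: the crucial step in both is that every maximizer $\hat x$ over $B_x$ has the same $\Phi \hat x$ (hence the same $\nabla_\theta \psi^o(\theta,\hat x)$), after which an envelope-type theorem (the paper cites \cite[Corollary 10.14(b)]{RoW98}, you invoke Danskin's theorem with the same non-uniqueness caveat resolved in the same way) gives differentiability and the gradient formula, while continuity of $\tilde x(\theta)$ follows from the same boundedness-plus-subsequence-plus-uniqueness argument. The only cosmetic difference is that you certify optimality of a subsequential limit via the normal-cone optimality condition and outer semicontinuity of $\N_{B_x}(\cdot)$, whereas the paper does so via continuity of $\tilde J_p$ and $\psi^o$; both are valid.
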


\begin{proof}
For any optimal solution $\hat x$ of the maximization problem (\ref{eq-tJp}), the projection of $\hat x$ on $\sp \{\phi(\S)\}$ coincides with $\tilde x(\theta)$, so $\Phi \hat x = \Phi \tilde x(\theta)$. Consequently, $\nabla_\theta \psi^o(\theta, \hat x) = \nabla_\theta \psi^o(\theta, \tilde x(\theta))$ for all optimal solutions $\hat x$ of (\ref{eq-tJp}). It then follows from \cite[Corollary 10.14(b)]{RoW98} that the function $\tilde J_p(\theta)$ is differentiable at $\theta$ with $\nabla \tilde J_p(\theta) = \nabla_\theta \psi^o(\theta, \tilde x(\theta))$, for all $\theta \in \re^d$.

For the continuity of $\tilde x(\theta)$, note that as $\theta \to \bar \theta$, $\tilde J_p(\theta) = \psi^o(\theta, \tilde x(\theta)) \to \tilde J_p(\bar \theta) =  \psi^o(\bar \theta, \tilde x(\bar \theta))$ by the continuity of the function $\tilde J_p$. Let $x_\infty \in B_x \cap \sp \{ \phi(\S)\}$ be the limit of any convergent subsequence of $\tilde x(\theta)$ as $\theta \to \bar \theta$. By the continuity of the functions $\psi^o$ and $\tilde J_p$, we have $\tilde J_p(\bar \theta) = \psi^o(\bar \theta, x_\infty)$ and hence both $x_\infty$ and $\tilde x(\bar \theta)$ are optimal solutions in $\sp \{ \phi(\S)\}$ for the maximization problem (\ref{eq-tJp}) corresponding to $\bar \theta$. Since the latter has a unique optimal solution in $\sp \{ \phi (\S)\}$, we must have $x_\infty = \tilde x(\bar \theta)$. This proves that $\tilde x(\theta) \to \tilde x(\bar \theta)$ as $\theta \to \bar \theta$.
\end{proof}

We now show that at the fast time-scale of GTDa, the associated mean ODE (\ref{eq-gtd1-odefast}) has solutions $(\theta(\cdot), x(\cdot))$ with $\theta(\cdot) \equiv \theta(0)$ and $x(t) \to \tilde x(\theta(0))$ as $t \to \infty$. This is a counterpart of Lemma~\ref{lem-Bx} when the condition (\ref{eq-cond-Bx0}) on $B_x$ has been removed, and the proof is very similar to that of Lemma~\ref{lem-Bx}.

\begin{lem} \label{lem-Bx2}
For all initial $x(0) \in B_x \cap \sp \{\phi(\S)\}$ and $\theta(0) \in B_\theta$,
the limit set of the ODE (\ref{eq-gtd1-odefast}) is $\big\{ \big(\theta, \tilde x(\theta)\big) \mid \theta \in B_\theta \big\}$.
\end{lem}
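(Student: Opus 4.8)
The plan is to mirror the proof of Lemma~\ref{lem-Bx}, the only new feature being that the maximizer $\tilde x(\theta)$ may now sit on the boundary of $B_x$. Since $\dot\theta(t)=0$ in the ODE~(\ref{eq-gtd1-odefast}), every solution has $\theta(\cdot)\equiv\theta:=\theta(0)$ frozen, and the $x$-component solves the projected gradient flow $\dot x(t)=\bar k(\theta,x(t))+z(t)$ with $z(t)\in-\N_{B_x}(x(t))$. Because $\bar k(\theta,\cdot)=\nabla_x\psi^o(\theta,\cdot)$ (cf.~(\ref{eq-psi-subgrad2}),(\ref{eq-gtds-gk})), this is exactly gradient ascent for the concave maximization $\sup_{x\in B_x}\psi^o(\theta,x)$, whose unique optimal solution in $\sp\{\phi(\S)\}$ is $\tilde x(\theta)$. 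First I would record, exactly as in the footnote to Lemma~\ref{lem-Bx}, that $x(\cdot)$ stays in $\sp\{\phi(\S)\}$ whenever $x(0)\in\sp\{\phi(\S)\}$: both $\bar k(\theta,x)$ and the boundary reflection term (which, for the centered ball $B_x$, points along $x(t)$) lie in this subspace.

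The core step is a Lyapunov argument with $V_\theta(x)=\tfrac12\|x-\tilde x(\theta)\|_2^2$. Writing $C=\Phi^\tr\Xi\,\Phi$ and using $\bar k(\theta,x)=\bar k(\theta,\tilde x(\theta))-C\big(x-\tilde x(\theta)\big)$, I would decompose
\begin{equation*}
\dot V_\theta(x(t))=\big\langle x(t)-\tilde x(\theta),\,\bar k(\theta,\tilde x(\theta))\big\rangle-\big\langle x(t)-\tilde x(\theta),\,C(x(t)-\tilde x(\theta))\big\rangle+\big\langle x(t)-\tilde x(\theta),\,z(t)\big\rangle
\end{equation*}
and show that each of the three terms is nonpositive. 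The middle and last terms are handled as in Lemma~\ref{lem-Bx}: since $x(t)-\tilde x(\theta)\in\sp\{\phi(\S)\}$, the quadratic term is bounded above by $-c\,\|x(t)-\tilde x(\theta)\|_2^2$ with $c>0$ the smallest nonzero eigenvalue of $C$; and the reflection term is nonpositive because $-z(t)\in\N_{B_x}(x(t))$ while $\tilde x(\theta)\in B_x$.

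The one genuinely new term is the first, and it is the main obstacle (a mild one). I cannot use $\bar k(\theta,\tilde x(\theta))=0$ as in Lemma~\ref{lem-Bx}, because the weakened hypothesis on $B_x$ lets $\tilde x(\theta)$ lie on the boundary $\partial B_x$. Instead I would invoke the variational (optimality) inequality for the constrained maximizer, namely $\bar k(\theta,\tilde x(\theta))\in\N_{B_x}(\tilde x(\theta))$, i.e.\ $\langle \bar k(\theta,\tilde x(\theta)),\,x-\tilde x(\theta)\rangle\le 0$ for all $x\in B_x$; evaluating at $x=x(t)\in B_x$ makes the first term nonpositive as well. This yields $\dot V_\theta(x(t))\le -c\,\|x(t)-\tilde x(\theta)\|_2^2$.

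To finish, I would use that $\tilde x(\theta)$ is continuous on the compact set $B_\theta$ (Lemma~\ref{lem-gtda-rev1}), so that $\{V_\theta\mid\theta\in B_\theta\}$ is uniformly bounded on $B_x$; the differential inequality then furnishes, for any $\epsilon>0$, a time $t_\epsilon$ \emph{independent of} $\theta$ with $\|x(t)-\tilde x(\theta)\|_2^2\le\epsilon$ for all $t\ge t_\epsilon$. Hence every solution with $x(0)\in B_x\cap\sp\{\phi(\S)\}$ satisfies $x(t)\to\tilde x(\theta)$, and the limit set of the ODE~(\ref{eq-gtd1-odefast}) over such solutions is exactly $\big\{\big(\theta,\tilde x(\theta)\big)\mid\theta\in B_\theta\big\}$, as claimed.
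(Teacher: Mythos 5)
Your proposal is correct and is essentially the paper's own proof: the same Lyapunov function $V_\theta(x)=\tfrac12\|x-\tilde x(\theta)\|_2^2$, the same use of the constrained-maximizer optimality condition $\bar k(\theta,\tilde x(\theta))\in\N_{B_x}(\tilde x(\theta))$, the same eigenvalue bound on the quadratic term in $\sp\{\phi(\S)\}$, and the same uniform-in-$\theta$ conclusion. The only cosmetic difference is that the paper bounds the two cross terms jointly via monotonicity of the normal-cone mapping $\N_{B_x}(\cdot)$ (writing $\langle x(t)-\tilde x(\theta),\,z(t)-\tilde z\rangle\le 0$), whereas you bound them separately via the two normal-cone inequalities, which is just that monotonicity argument unfolded.
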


\begin{proof}
For each $\theta \in B_\theta$, with $\theta(\cdot) \equiv \theta$, consider a solution $x(\cdot)$ of the ODE (\ref{eq-gtd1-odefast}) with $x(0) \in \sp \{\phi(\S)\}$. Let $V_\theta(x) = \tfrac{1}{2} \| x - \tilde x(\theta) \|_2^2$, and we calculate $\dot V_\theta(x(t))$. Note that by the optimality of $\tilde x(\theta)$, there exists some $\tilde z \in - \N_{B_x}(\tilde x(\theta))$ such that $\nabla_x \psi^o(\theta, \tilde x(\theta)) + \tilde z = 0$. This is the same as $b_\theta - C \tilde x(\theta) + \tilde z = 0$ for $C = \Phi^\tr \Xi \, \Phi$ and $b_\theta = \Phi^\tr \Xi \, (\Tl v_\theta - v_\theta)$. Since $\bar k(\theta, x) = \nabla_x \psi^o(\theta, x) = b_\theta - C x$, we have 
\begin{align*}
  \dot V_\theta(x(t)) & = \langle x(t) - \tilde x(\theta), \, b_\theta - C x(t) + z(t) \rangle \\
  & =  \big\langle x(t) - \tilde x(\theta), \, b_\theta - C x(t) + z(t)  - \big( b_\theta - C \tilde x(\theta) + \tilde z\big) \big\rangle \\
  & = - \langle x(t) - \tilde x(\theta), C(x(t) - \tilde x(\theta)) \rangle + \langle x(t) - \tilde x(\theta), \, z(t) - \tilde z \rangle.
\end{align*}   
Since $x(t) \in \sp \{\phi(\S)\}$, the first term on the r.h.s.\ is bounded above by $- c \, \| x(t) - \tilde x(\theta) \|_2^2$, where $c > 0$ is the smallest nonzero eigenvalue of the symmetric positive semidefinite matrix $C$. The second term on the r.h.s.\ is nonpositive by the monotonicity of the set-valued mapping $\N_{B_x}(\cdot)$~\cite[Corollary 31.5.2]{Roc70}.
Thus, we have $\dot V_\theta(x(t)) \leq - c \| x(t) - \tilde x(\theta) \|_2^2$. Since on $B_x$, $\{V_\theta \mid \theta \in B_\theta\}$ are uniformly bounded, this implies that for any $\epsilon > 0$, there is a time $t_\epsilon$ independent of $\theta$ such that $\| x(t) - \tilde x(\theta) \|_2^2  \leq \epsilon$ for all $t \geq t_\epsilon$. The conclusion of the lemma then follows. 
\end{proof}

Lemma~\ref{lem-Bx2}, together with the continuity of $\tilde x(\theta)$ given in Lemma~\ref{lem-gtda-rev1}, shows that the fast-time-scale mean ODE satisfies the requirements in the convergence proofs given earlier in Sections~\ref{sec-gtd1-cond}-\ref{sec-gtd1conv-constant}. We now consider the mean ODE associated with the slow time-scale. Since at the fast time-scale the $x$-iterates of GTDa ``track'' $\tilde x(\theta)$ instead of $x_\theta$ for the slowly varying $\theta$-iterates, instead of the ODE~(\ref{eq-gtd1-odeslow}), the mean ODE for the slow time-scale is now given by
$$ \dot \theta(t) = \bar g\big(\theta(t), \tilde x(\theta(t))\big) - \nabla p(\theta(t))  + z(t),  \quad z(t) \in - \N_{B_\theta}\big(\theta(t)\big),$$
where $z(t)$ is the boundary reflection term. (The reasoning is the same as that given in Section~\ref{sec-gtd1-odes} when deriving (\ref{eq-gtd1-bg}) and (\ref{eq-gtd1-odeslow}).)
Since $\nabla_\theta \psi^o(\theta, \tilde x(\theta))  = - \bar g(\theta, \tilde x(\theta)) + \nabla p(\theta)$ (cf.\ (\ref{eq-gtds-gk})), by Lemma~\ref{lem-gtda-rev1}, the above ODE is the same as
\begin{equation} \label{eq-gtda-new-odeslow}
 \dot \theta(t) = - \nabla \tilde J_p(\theta(t))  + z(t),  \quad z(t) \in - \N_{B_\theta}\big(\theta(t)\big),
\end{equation}
and it corresponds to using gradient-descent to minimize $\tilde J_p$ on $B_\theta$. So by the same proof of Lemma~\ref{lem-gtd1-odeslow} (with the function $\tilde J_p$ in place of $J$), we see that the limit set of the ODE~(\ref{eq-gtda-new-odeslow}) is the solution set $\argmin_{\theta \in B_\theta} \tilde J_p(\theta)$, which, by Lemma~\ref{lem-saddle-relation}, coincides with the desired set $\Theta_{\text{opt}}$ when $x_{\rm \text{\rm opt}} \in \text{int}(B_x)$. 
The first part of Theorem~\ref{thm-gtda-relaxBx} for GTDa then follows; that is, Theorems~\ref{thm-gtd1-wk-dim}-\ref{thm-gtd1-average} and \ref{thm-composite-gtd} still hold for GTDa under the much weaker condition on $B_x$.

We now consider the two-time-scale biased variant of GTDa and apply the same reasoning to relax the condition (\ref{eq-cond-vrtBx0}) on $B_x$ in Theorem~\ref{thm-biasgtd1-wk} for this algorithm. Recall the minimax problem~(\ref{eq-minimax-vrt}) associated with the biased variant of the single-time-scale GTDa discussed in Section~\ref{sec-gtds-bias}. 
We can view the two-time-scale biased variant algorithm as a two-time-scale algorithm for solving the same minimax problem, but with the maximizer operating at the fast time-scale. In particular, for each $\theta$, recall that the inner maximization problem appearing in (\ref{eq-minimax-vrt}) is
\begin{equation} \label{eq-tJph}
 \tilde J_p^h (\theta) : = \sup_{x \in B_x} \psi^o_h(\theta, x) = \sup_{x \in B_x}  \big\{ x^\tr  ( A_h \theta + b_h) -  \tfrac{1}{2} \, x^\tr C x \big\} + p(\theta),\end{equation}
where $A_h, b_h$ are given by (\ref{eq-Ahbh}) and $C = \Phi^\tr \Xi \, \Phi$.
This maximization problem has a unique optimal solution in $\sp \{\phi(\S)\}$ (because $b_h \in \sp \{\phi(\S)\}$ and the column space of $A_h$ is contained in $\sp \{\phi(\S)\}$; cf.\ Section~\ref{sec-gtds-bias}). Let us denote this optimal solution by $\tilde x_h(\theta)$.

\begin{lem} \label{lem-vrtgtda-rev2}
The convex function $\tilde J_p^h(\theta)$ is differentiable with $\nabla \tilde J_p^h(\theta) = \nabla_\theta \psi^o_h(\theta, \tilde x_h(\theta))$, and $\tilde x_h(\theta)$ is a continuous function of $\theta$.
\end{lem}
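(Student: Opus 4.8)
Looking at Lemma~\ref{lem-vrtgtda-rev2}, I need to prove two things about the inner maximization problem (\ref{eq-tJph}): that $\tilde J_p^h(\theta)$ is differentiable with the stated gradient formula, and that $\tilde x_h(\theta)$ is continuous in $\theta$.

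The key observation is that this lemma is a direct analogue of Lemma~\ref{lem-gtda-rev1}, which was already proved for GTDa (the $\eta=1$, unbiased case). The structure of the minimax problem~(\ref{eq-minimax-vrt}) and its function $\psi^o_h$ is identical to that of (\ref{eq-minimax1}) and $\psi^o$, as noted in Section~\ref{sec-gtds-bias}. So the plan is essentially to reuse the proof of Lemma~\ref{lem-gtda-rev1} verbatim, with $\psi^o_h$, $\tilde J_p^h$, $\tilde x_h$, $A_h$, $b_h$ in place of $\psi^o$, $\tilde J_p$, $\tilde x$, $A$, $b$.

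Let me sketch the two parts. For differentiability: I would start from the fact that any optimal solution $\hat x$ of (\ref{eq-tJph}) has the same projection onto $\sp\{\phi(\S)\}$ as $\tilde x_h(\theta)$, hence $\Phi \hat x = \Phi \tilde x_h(\theta)$. This forces $\nabla_\theta \psi^o_h(\theta, \hat x) = \nabla_\theta \psi^o_h(\theta, \tilde x_h(\theta))$ for all maximizers $\hat x$, since $\psi^o_h$ depends on $x$ through $\Phi x$ in the coupling term $x^\tr A_h \theta$ (indeed $A_h$ has column space in $\sp\{\phi(\S)\}$, so $x^\tr A_h \theta = (\Phi\text{-relevant part of }x)^\tr(\cdots)$). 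The uniqueness of the gradient across all maximizers then lets me invoke \cite[Corollary 10.14(b)]{RoW98} to conclude $\tilde J_p^h$ is differentiable with $\nabla \tilde J_p^h(\theta) = \nabla_\theta \psi^o_h(\theta, \tilde x_h(\theta))$. For continuity of $\tilde x_h(\theta)$: I would use that $\tilde J_p^h$ is continuous (being a finite convex function), so as $\theta \to \bar\theta$, $\psi^o_h(\theta, \tilde x_h(\theta)) \to \psi^o_h(\bar\theta, \tilde x_h(\bar\theta))$; any limit point $x_\infty$ of a convergent subsequence of $\tilde x_h(\theta)$ lies in the compact set $B_x \cap \sp\{\phi(\S)\}$ and, by continuity of $\psi^o_h$, is an optimal solution in $\sp\{\phi(\S)\}$ for $\bar\theta$, which by uniqueness must equal $\tilde x_h(\bar\theta)$.

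The only point requiring a little care — the part I'd flag as the main (mild) obstacle — is verifying that $\nabla_\theta \psi^o_h(\theta, \cdot)$ genuinely depends on the maximizer only through $\Phi x$, so that the gradient is well-defined across the whole (possibly non-unique) solution set. This rests on the structural fact established in Section~\ref{sec-gtds-bias} that the column space of $A_h$ is contained in $\sp\{\phi(\S)\}$ and $b_h \in \sp\{\phi(\S)\}$, guaranteeing both the uniqueness of the maximizer in $\sp\{\phi(\S)\}$ and the insensitivity of $\nabla_\theta \psi^o_h$ to the component of $x$ orthogonal to $\sp\{\phi(\S)\}$. Once that structural fact is in hand, everything else is a transcription of the earlier argument, and I would simply note that the proof is identical to that of Lemma~\ref{lem-gtda-rev1}, with the quantities renamed.
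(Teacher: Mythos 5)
Your proposal is correct and follows essentially the same route as the paper: the paper's proof likewise transcribes the proof of Lemma~\ref{lem-gtda-rev1}, noting that the only modification is to use the fact that the column space of $A_h$ lies in $\sp\{\phi(\S)\}$ to conclude $\nabla_\theta \psi^o_h(\theta,\hat x)$ is the same for all maximizers $\hat x$ of (\ref{eq-tJph}). You correctly identified this structural fact as the one point requiring care, which is exactly what the paper highlights.
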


The proof of the lemma is the same as that of Lemma~\ref{lem-gtda-rev1}, except that in showing that $\nabla_\theta \psi^o_h(\theta, \hat x)$ is the same for all optimal solutions $\hat x$ of (\ref{eq-tJph}), we use the fact that the column space of $A_h$ is contained in $\sp \{\phi(\S)\}$ (cf.\ the discussion immediately after (\ref{eq-ghkh})).

Consider now the mean ODE (\ref{eq-gtdvrt-odefast}) associated with the fast time-scale of the biased variant of GTDa. The next lemma characterizes the limit set of this ODE, and it is a counterpart of Lemma~\ref{lem-vrtBx} when the condition (\ref{eq-cond-vrtBx0}) on $B_x$ has been removed. The proof is the same as that of Lemma~\ref{lem-Bx2} (with $\tilde x_h(\theta)$ in place of $\tilde x(\theta)$, $A_h \theta + b_h$ in place of $b_\theta$, and $\bar k_h(\theta, x) = \nabla_x \psi^o_h(\theta, x)$ in place of $\bar k(\theta, x) = \nabla_x \psi^o(\theta, x)$).

\begin{lem} \label{lem-vrtBx2}
For all initial $x(0) \in B_x \cap \sp \{\phi(\S)\}$ and $\theta(0) \in B_\theta$,
the limit set of the ODE (\ref{eq-gtdvrt-odefast}) is
$\big\{ \big(\theta, \tilde x_h(\theta)\big) \mid \theta \in B_\theta \big\}.$
\end{lem}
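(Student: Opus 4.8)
The plan is to prove Lemma~\ref{lem-vrtBx2} by exhibiting a Lyapunov function for the fast-time-scale ODE~(\ref{eq-gtdvrt-odefast}) and showing that every solution starting in $\sp\{\phi(\S)\}$ converges to the graph of the optimal-response map $\tilde x_h(\theta)$. Since the author has explicitly stated that ``the proof is the same as that of Lemma~\ref{lem-Bx2} (with $\tilde x_h(\theta)$ in place of $\tilde x(\theta)$, $A_h \theta + b_h$ in place of $b_\theta$, and $\bar k_h(\theta, x) = \nabla_x \psi^o_h(\theta, x)$ in place of $\bar k(\theta, x) = \nabla_x \psi^o(\theta, x)$),'' the substantive content is to verify that each of those structural substitutions preserves the three properties that drive the argument of Lemma~\ref{lem-Bx2}: namely, (a) $\tilde x_h(\theta)$ is the $\sp\{\phi(\S)\}$-unique maximizer of $\psi^o_h(\theta,\cdot)$ over $B_x$, with an associated optimality relation $\bar k_h(\theta, \tilde x_h(\theta)) + \tilde z = 0$ for some $\tilde z \in -\N_{B_x}(\tilde x_h(\theta))$; (b) $\bar k_h(\theta,x)$ is affine in $x$ of the form $(A_h\theta + b_h) - Cx$ with $C = \Phi^\tr \Xi\,\Phi$ symmetric positive semidefinite; and (c) solutions stay in $\sp\{\phi(\S)\}$ when started there.

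First I would fix $\theta \in B_\theta$, set $\theta(\cdot) \equiv \theta$, and take $V_\theta(x) = \tfrac{1}{2}\|x - \tilde x_h(\theta)\|_2^2$ as the candidate Lyapunov function, exactly as in Lemma~\ref{lem-Bx2}. From the optimality of $\tilde x_h(\theta)$ for the inner maximization (\ref{eq-tJph}) I would extract a $\tilde z \in -\N_{B_x}(\tilde x_h(\theta))$ with $(A_h\theta + b_h) - C\tilde x_h(\theta) + \tilde z = 0$, which is legitimate because $\bar k_h(\theta,x) = \nabla_x\psi^o_h(\theta,x)$ by (\ref{eq-psih-vrt})--(\ref{eq-ghkh}). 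Subtracting this optimality identity inside the computation of $\dot V_\theta(x(t))$ yields
$$ \dot V_\theta(x(t)) = -\langle x(t) - \tilde x_h(\theta),\, C(x(t) - \tilde x_h(\theta))\rangle + \langle x(t) - \tilde x_h(\theta),\, z(t) - \tilde z\rangle, $$
where $z(t) \in -\N_{B_x}(x(t))$ is the boundary reflection term. Here I would invoke two facts: since $x(t) \in \sp\{\phi(\S)\}$ (property (c)), the quadratic term is bounded above by $-c\|x(t)-\tilde x_h(\theta)\|_2^2$ with $c>0$ the smallest nonzero eigenvalue of $C$; and the cross term is nonpositive by the monotonicity of the normal-cone mapping $\N_{B_x}(\cdot)$ \cite[Corollary 31.5.2]{Roc70}. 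Thus $\dot V_\theta(x(t)) \le -c\|x(t)-\tilde x_h(\theta)\|_2^2$, and since $\{V_\theta \mid \theta \in B_\theta\}$ are uniformly bounded on $B_x$, there is a time $t_\epsilon$ independent of $\theta$ such that $\|x(t)-\tilde x_h(\theta)\|_2^2 \le \epsilon$ for all $t \ge t_\epsilon$, giving the claimed limit set $\{(\theta, \tilde x_h(\theta)) \mid \theta \in B_\theta\}$.

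The one point requiring genuine care—rather than routine mimicry—is property (a): establishing that $\tilde x_h(\theta)$ is a \emph{well-defined, unique} maximizer in $\sp\{\phi(\S)\}$ and that the optimality relation holds in the affine form above. This hinges on the structural fact, recorded just before (\ref{eq-minimax-vrt}) and reused in Lemma~\ref{lem-vrtgtda-rev2}, that under condition (\ref{eq-h2}) on $h$ we have $b_h \in \sp\{\phi(\S)\}$ and the column space of $A_h$ is contained in $\sp\{\phi(\S)\}$. These inclusions guarantee both that $\psi^o_h(\theta,\cdot)$ restricted to $\sp\{\phi(\S)\}$ is strictly concave and coercive (so the maximizer in that subspace is unique) and that the gradient direction $A_h\theta + b_h - Cx$ stays in $\sp\{\phi(\S)\}$, which in turn secures property (c) that solutions starting in $\sp\{\phi(\S)\}$ remain there. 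I expect this verification of the subspace-invariance and the affine optimality structure for $\bar k_h$ to be the main obstacle; once it is in place, the Lyapunov estimate is a verbatim transcription of the proof of Lemma~\ref{lem-Bx2}, and the conclusion follows immediately.
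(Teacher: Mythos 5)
Your proposal is correct and follows essentially the same route as the paper: the paper's proof of Lemma~\ref{lem-vrtBx2} is precisely the proof of Lemma~\ref{lem-Bx2} with the substitutions $\tilde x_h(\theta) \leftrightarrow \tilde x(\theta)$, $A_h\theta + b_h \leftrightarrow b_\theta$, and $\bar k_h = \nabla_x \psi^o_h \leftrightarrow \bar k = \nabla_x \psi^o$, which is exactly the Lyapunov argument you transcribe. Your added verification of the structural facts (uniqueness of $\tilde x_h(\theta)$ in $\sp\{\phi(\S)\}$ and subspace invariance via $b_h \in \sp\{\phi(\S)\}$ and the column space of $A_h$ lying in $\sp\{\phi(\S)\}$ under (\ref{eq-h2})) matches what the paper records in Section~\ref{sec-gtds-bias} and immediately before the lemma.
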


Finally, we consider the mean ODE associated with the slow time-scale of the biased variant of GTDa. In view of Lemma~\ref{lem-vrtBx2}, instead of the ODE (\ref{eq-gtdvrt-odeslow}), this mean ODE is now given by
$$ \dot \theta(t) = \bar g_h\big(\theta(t), \tilde x_h(\theta(t))\big) - \nabla p(\theta(t))  + z(t),  \quad z(t) \in - \N_{B_\theta}\big(\theta(t)\big),$$
where the function $\bar g_h(\theta, x)$ is as defined in (\ref{eq-gtdsvrt-gk}), and $z(t)$ is the boundary reflection term.
(The reasoning is the same as that given in Section~\ref{sec-vrt-ode} when deriving (\ref{eq-gtdvrt-odeslow}).)
Since $\nabla_\theta \psi^o_h(\theta, \tilde x_h(\theta))  = - \bar g_h(\theta, \tilde x_h(\theta)) + \nabla p(\theta)$ (cf.\ (\ref{eq-ghkh}) and (\ref{eq-psih-vrt})), by Lemma~\ref{lem-vrtgtda-rev2}, the above ODE is the same as
\begin{equation} \label{eq-gtdavrt-new-odeslow}
 \dot \theta(t) = - \nabla \tilde J_p^h(\theta(t))  + z(t),  \quad z(t) \in - \N_{B_\theta}\big(\theta(t)\big).
\end{equation}
It corresponds to using gradient-descent to minimize $\tilde J_p^h$ on $B_\theta$. Hence the limit set of this ODE is just the set of optimal solutions of the minimization problem, $D^h_\theta = \argmin_{\theta \in B_\theta} \tilde J_p^h(\theta)$ (this follows from the same proof of Lemma~\ref{lem-gtd1-odeslow} with the function $\tilde J_p^h$ in place of $J$). 
We now use the approximation property given in Lemma~\ref{lem-bias-apprsadd-lim} to relate the set $D^h_\theta$ to the set $D_\theta = \argmin_{\theta \in B_\theta}  \{ \sup_x \psi^o(\theta, x)\}$, like what we did in Section~\ref{sec-gtds-bias}.  Lemma~\ref{lem-bias-apprsadd-lim} shows that for $h \in \{ h_K \mid K > 0\}$ (the family of functions defined earlier for the biased variant algorithm), given any $\epsilon > 0$, we have $D^{h_K}_\theta \subset N_\epsilon (D_\theta)$ for sufficiently large $K$ (equivalently, when the bias introduced by the algorithm is sufficiently small). Therefore, when $x_{\rm \text{\rm opt}} \in \text{int}(B_x)$ so that $D_\theta = \Theta_{\text{opt}}$ (Lemma~\ref{lem-saddle-relation}), $D^{h_K}_\theta \subset N_\epsilon (D_\theta)$ for all $K$ sufficiently large. Consequently, Theorem~\ref{thm-biasgtd1-wk} remains valid for the two-time-scale biased variant of GTDa if we replace its condition on $B_x$ by the weaker condition $x_{\rm \text{\rm opt}} \in \text{int}(B_x)$. This establishes the second part of Theorem~\ref{thm-gtda-relaxBx}.

\section{Convergence Analyses III: Almost Sure Convergence} \label{sec-5}

In this section we analyze the asymptotic behavior of the gradient-based TD algorithms with respect to a stronger mode of convergence: almost sure convergence.
We consider primarily those constrained algorithms discussed in the previous sections, for diminishing stepsizes that satisfy the square-summable condition $\sum_{n \geq 0} \alpha^2_n < \infty$ and $\sum_{n \geq 0} \beta^2_n < \infty$, in addition to Assumption~\ref{cond-large-stepsize}.
We analyze two-time-scale algorithms in Section~\ref{sec-as-twotime} and single-time-scale algorithms in Section~\ref{sec-as-singletime}.
The convergence results we obtained are more limited in the case of state-dependent $\lambda$, and more satisfactory in the case of history-dependent $\lambda$ where the trace iterates $\{\e_n\}$ are bounded by the choice of $\lambda$. 
Finally, in Section~\ref{sec-as-singletime-unconstr}, we give a result on the unconstrained single-time-scale GTDa algorithm for the case of history-dependent $\lambda$. 

As discussed in Section~\ref{sec-gtds-eta}, one can analyze single-time-scale GTDa with a scaling parameter $\eta > 0$ for the $x$-iterates by using the same arguments for the algorithm without such scaling. So for notational simplicity, we will focus on the latter case where $\eta = 1$ and give detailed arguments for this case only. The implications for the case with scaling are as explained at the end of Section~\ref{sec-gtds-eta} for constrained GTDa and in Remark~\ref{rem-ugtds-eta}, to be given in Section~\ref{sec-as-singletime-unconstr}, for unconstrained GTDa.

The results of this section are listed here for quick access:\vspace*{-3pt}
\begin{itemize}
\item two-time-scale GTD and MD-GTD with history-dependent $\lambda$: Theorems~\ref{thm-gtd-as-conv} and \ref{thm-mdgtd-as-conv};
\item single-time-scale GTDa with history-dependent $\lambda$: Theorem~\ref{thm-gtds-as-conv};
\item single-time-scale GTDa and its biased variant with state-dependent $\lambda$: Theorem~\ref{thm-gtds-as-conv2};
\item single-time-scale unconstrained GTDa with history-dependent $\lambda$: Theorem~\ref{thm-ugtds-as-conv}.
\end{itemize}

\subsection{Two-Time-Scale GTD and MD-GTD} \label{sec-as-twotime}

We consider the regularized objective function $J_p(\theta) = J(\theta) + p(\theta)$ as in Section~\ref{sec-4}, where $p(\cdot)$ is a convex differentiable function on $\re^d$. Let 
$\Theta_{\text{\rm opt}} = \textstyle{\argmin_{\theta \in B_\theta}} J_p(\theta)$.
Recall that the GTD algorithms for minimizing $J_p$ are the same as given before except that there is an extra gradient term $-\alpha_n \nabla p(\theta_n)$ in the $\theta$-iteration (cf.\ Section~\ref{sec-4} or Section~\ref{subsec-asconv-gtd} below).

For two-time-scale GTD/MD-GTD algorithms, we only have almost sure convergence results for the case of history-dependent $\lambda$ (including the use of the composite scheme), where the trace iterates $\{\e_n\}$ are kept bounded by the choice of the $\lambda$-parameters (cf.~Condition~\ref{cond-lambda}(ii)). This, of course, includes the special case of state-dependent $\lambda$ in which one bounds the traces by using sufficiently small $\lambda$.

Before stating our convergence theorems, we collect the conditions of the theorems in the two lists below, for the GTD and MD-GTD algorithms, respectively. For GTDa and GTDb, the conditions are the same except for the condition on the constraint set $B_x$, which can be made weaker for GTDa (cf.\ Section~\ref{sec-gtda-revisit}). The weaker condition, as we recall, involves the point $x_{\text{\rm opt}}$, which is the unique dual optimal solution of the minimax problem (\ref{eq-minimax0}) in $\sp \{ \phi(\S)\}$.

\begin{assumption} \label{as-cond-gtd}
For the two-time-scale GTD algorithms:\vspace*{-3pt}
\begin{itemize}
\item[\rm (i)] Condition~\ref{cond-pol} holds.
\item[\rm (ii)] The $\lambda$-parameters are set according to either (\ref{eq-histlambda}) or (\ref{eq-cp-histlambda}), where the memory states satisfy Condition~\ref{cond-mem}, and the function $\lambda(\cdot)$ in (\ref{eq-histlambda}) or each function $\lambda^{(i)}(\cdot), i \leq \ell$ in (\ref{eq-cp-histlambda}) satisfies Condition~\ref{cond-lambda}.
\item[\rm (iii)] The stepsizes $\{\alpha_n\}$ and $\{\beta_n\}$ satisfy Assumption~\ref{cond-large-stepsize} and the square-summable condition, $\sum_{n \geq 0} \alpha^2_n < \infty$, $\sum_{n \geq 0} \beta^2_n < \infty$.
\item[\rm (iv)] In the case of GTDa, the constraint set $B_x$ contains $x_{\text{\rm opt}}$ in its interior; and in the case of GTDb, the constraint set $B_x$ satisfies the condition (\ref{eq-cond-Bx0}).
\item[\rm (v)] The initial $x_0, \e_0 \in \sp  \{\phi(\S)\}$.
\end{itemize}
\end{assumption}

%\smallskip
\begin{thm} \label{thm-gtd-as-conv}
Let $\{(\theta_n, x_n)\}$ be generated by any of the two-time-scale GTD algorithms discussed in Sections~\ref{sec-gtd1},~\ref{sec-gtd2} and~\ref{sec-cp-extension}.  
Then under Assumption~\ref{as-cond-gtd}, for each given initial condition, $\{\theta_n\}$ converges almost surely to the set $\Theta_{\text{\rm opt}}$.
\end{thm}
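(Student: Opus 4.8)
The plan is to obtain almost sure convergence by invoking the standard ODE-based results for two-time-scale stochastic approximation with the square-summable stepsize condition (e.g., \cite{Bor08} or \cite[Chap.\ 6-8]{KuY03}), and to verify their hypotheses by reusing almost verbatim the machinery already developed in Sections~\ref{sec-gtd1}-\ref{sec-gtda-revisit}. The crucial simplification here is Assumption~\ref{as-cond-gtd}(ii): with history-dependent $\lambda$, Condition~\ref{cond-lambda}(ii) guarantees that the trace iterates $\{\e_n\}$ lie in a \emph{fixed bounded set}, so that $\{Z_n\}$ takes values in a compact space and all the functions $g, k$ appearing in the algorithms are bounded on $B_\theta \times B_x \times \Z$ (rather than merely uniformly integrable). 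This boundedness, together with the constraints confining $(\theta_n, x_n)$ to $B_\theta \times B_x$, is exactly what the a.s.\ convergence theorems require but what was unavailable in the state-dependent case with unbounded traces.

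First I would recall the identification of the two mean ODEs and their limit sets, which has already been done: for GTDb the fast- and slow-time-scale ODEs are (\ref{eq-gtd1-odefast}) and (\ref{eq-gtd1-odeslow}) with limit sets $\{(\theta,\bar x(\theta))\}$ (Lemma~\ref{lem-Bx}) and $\Theta_{\text{\rm opt}}$ (Lemma~\ref{lem-gtd1-odeslow}); for GTDa under the weakened condition $x_{\text{\rm opt}}\in\text{int}(B_x)$, these are replaced by Lemma~\ref{lem-Bx2} and the slow-time-scale ODE (\ref{eq-gtda-new-odeslow}) whose limit set is again $\Theta_{\text{\rm opt}}$ by Lemma~\ref{lem-saddle-relation}; the composite-scheme case is covered by the calculations in Section~\ref{sec-cp-extension}. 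The second step is to verify the hypotheses of the a.s.\ convergence theorem for the \emph{fast} time-scale: the key ingredient is that the ``averaging'' of the noise now holds in the stronger almost-sure sense rather than merely in mean. Here I would appeal directly to Theorem~\ref{thm-erg}(ii), which gives $\frac{1}{n}\sum_{i=0}^{n-1} f(Z_i)\to\E_\zeta[f(Z_0)]$ \emph{almost surely} for $f$ Lipschitz in the trace variable, and to Prop.~\ref{prop-trace-averaging}; combined with the boundedness of all terms and the square-summability of the stepsizes, this controls the martingale and the Markov ``averaging'' noise in the standard decomposition of the update. The Lyapunov functions already constructed (the quadratics $V_\theta$ in Lemmas~\ref{lem-Bx},~\ref{lem-Bx2} and $V=J_p$ in Lemma~\ref{lem-gtd1-odeslow}) supply the global asymptotic stability needed to conclude that $x_n - \bar x(\theta_n)\to 0$ a.s., and then that $\theta_n\to\Theta_{\text{\rm opt}}$ a.s.\ along the slow time-scale.

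The main obstacle I anticipate is not the fast time-scale but the passage from the a.s.\ tracking $x_n-\bar x(\theta_n)\to 0$ to the a.s.\ convergence of the slow-time-scale $\theta$-iterates when $\Theta_{\text{\rm opt}}$ is a \emph{set} rather than a point. The slow-time-scale ODE is a constrained (projected) gradient flow whose limit set is $\Theta_{\text{\rm opt}}$, and the a.s.\ convergence theorems typically assert convergence to the limit set, i.e.\ $\text{dist}(\theta_n,\Theta_{\text{\rm opt}})\to 0$ a.s.; since the theorem only claims convergence to the set (not to a single point), this is exactly what is needed, and the Lyapunov argument of Lemma~\ref{lem-gtd1-odeslow}, which shows $J_p(\theta(t))\downarrow\inf_{B_\theta}J_p$ with a uniform strict-decrease rate $-\eta_\epsilon$ outside any neighborhood of $\Theta_{\text{\rm opt}}$, translates into the required asymptotic-stability-of-the-set condition. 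The remaining care is to ensure that the boundary reflection terms do not interfere---handled, as in Section~\ref{sec-3}, by taking $B_\theta$ arbitrary and imposing essentially no condition on the slow ODE---and that the noise terms $\Delta_n = g(\theta_n,x_n,Z_n)-g(\theta_n,\bar x(\theta_n),Z_n)$ from the tracking error vanish, which follows from their a.s.\ convergence to zero together with the Lipschitz continuity of $g$ in $x$ noted in Remark~\ref{rmk-proofgtd}. Since all the analytical pieces are already in place, the proof is essentially a matter of citing the appropriate a.s.\ two-time-scale result and checking that the hypotheses reduce to the boundedness and ergodicity facts established in Section~\ref{sec-property-stproc}; I would therefore keep this proof short and refer back to the constructions of the earlier sections rather than re-deriving them.
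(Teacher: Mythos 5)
Your overall architecture matches the paper's proof: the paper also applies the almost-sure convergence theorem for single-time-scale algorithms (\cite[Theorem 6.1.1]{KuY03}) twice, once at each time-scale, relies on the boundedness of the traces guaranteed by Condition~\ref{cond-lambda}(ii) to dispose of the integrability/growth conditions, reuses the mean ODEs and their limit sets from Sections~\ref{sec-gtd1}--\ref{sec-gtda-revisit}, and passes from the a.s.\ tracking $x_n - \bar x(\theta_n) \asto 0$ to the slow time-scale exactly as you describe. However, there is a genuine gap at the step you treat as routine: the verification of the ``averaging'' (Kushner--Clark asymptotic-rate-of-change) conditions, i.e.\ statements of the form
\begin{equation*}
 \lim_{n \to \infty} \Pr \Bigl\{ \sup_{j \geq n} \max_{0 \leq t \leq T} \Bigl\|  \textstyle{\sum_{i=m_\beta(jT)}^{m^-_\beta(jT+t)}} \beta_i \bigl( k(\theta, x, Z_i) - \bar k(\theta, x) \bigr) \Bigr\| \geq \eta \Bigr\} = 0 .
\end{equation*}
These involve \emph{stepsize-weighted} sums over windows, with a supremum over all future windows, and neither of the tools you cite delivers them under Assumption~\ref{as-cond-gtd}(iii). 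Proposition~\ref{prop-trace-averaging} is a convergence-in-mean statement (built for the weak convergence method of Sections~\ref{sec-3}--\ref{sec-4}) and says nothing about the a.s./in-probability control of the supremum. Theorem~\ref{thm-erg}(ii) gives the strong law $\frac{1}{n}\sum_{i=0}^{n-1} f(Z_i) \to \E_\zeta[f(Z_0)]$ a.s., and this does suffice via summation by parts \emph{when} $\beta_n = O(1/n)$ and $(\beta_n - \beta_{n+1})/\beta_n = O(\beta_n)$ --- which is precisely the route the paper takes in Theorem~\ref{thm-gtds-as-conv2}, and precisely why that theorem is restricted to $O(1/n)$ stepsizes. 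But Assumption~\ref{as-cond-gtd}(iii) allows slowly diminishing square-summable stepsizes such as $\beta_n = n^{-c}$, $c \in (1/2,1)$; for these, $\beta_n \sum_{i=0}^{n-1}\bigl(f(Z_i)-\bar f\bigr) = n^{1-c}\cdot \frac{1}{n}\sum_{i=0}^{n-1}\bigl(f(Z_i)-\bar f\bigr)$, and the SLLN without a rate cannot show this tends to zero.

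The paper closes this gap with a new ingredient your proposal lacks: Lemma~\ref{lem-occ-meas} (an application of \cite[Lemma 6, Chap.\ 6.3]{Bor08}), which states that the \emph{stepsize-weighted} random occupation measures $\mu^{\beta,t}_\tau$ converge weakly, almost surely, to the unique invariant probability measure of $\{Z_i\}$ --- here the boundedness of the traces enters again, to give tightness and to allow weak convergence to be applied to the (then bounded) functions $k(\theta,x,\cdot)$ and $g(\theta,\bar x(\theta),\cdot)$. This yields Proposition~\ref{prop-trace-averaging2}, which directly verifies the displayed condition for arbitrary stepsizes satisfying Assumption~\ref{as-cond-gtd}(iii); for the composite scheme one applies it to each of the $\ell$ concurrent trace processes and sums. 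Without this (or an equivalent result), your proof covers only the $O(1/n)$ stepsize regime and does not establish the theorem as stated.
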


For the two-time-scale MD-GTD algorithms, we impose the same conditions on the constraint sets and on the minimization problem $\inf_\theta J_p(\theta)$ as in Section~\ref{sec-mdgtd}:

\begin{assumption} \label{as-cond-mdgtd}
For the two-time-scale MD-GTD algorithms:\vspace*{-3pt}
\begin{itemize}
\item[\rm (i)] The conditions in Assumption~\ref{as-cond-gtd}(i)-(iii) and (v) hold.
\item[\rm (ii)] The constraint sets $D_{\theta^*}$ and $B_x$ satisfy Assumption~\ref{cond-mdgtd}.
\item[\rm (iii)] The minimization problem $\inf_\theta J_p(\theta)$ has a unique optimal solution $\theta_{\text{\rm opt}}$.\vspace*{5pt} 
\end{itemize}
\end{assumption}

%\smallskip
\begin{thm} \label{thm-mdgtd-as-conv}
Let $\{(\theta_n, x_n)\}$ be generated by any of the two-time-scale MD-GTD algorithms discussed in Section~\ref{sec-mdgtd}.
Then under Assumption~\ref{as-cond-mdgtd}, for each given initial condition, $\{\theta_n\}$ converges almost surely to $\theta_{\text{\rm opt}}$.
\end{thm}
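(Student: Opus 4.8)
The plan is to establish Theorem~\ref{thm-mdgtd-as-conv} by importing the almost-sure convergence machinery from stochastic approximation theory~\cite{Bor08,KuY03} for the standard square-summable stepsizes, running the same two-time-scale analysis as in the weak-convergence case but now strengthening the mode of convergence. The key structural facts are already in place: in Section~\ref{sec-mdgtd} we identified the two mean ODEs associated with the MD-GTD iterates $\{(\theta_n^*, x_n)\}$---the fast-time-scale ODE~(\ref{eq-mdgtd1-odefast}) and the slow-time-scale ODE~(\ref{eq-mdgtd1-odeslow})---and under Assumption~\ref{cond-mdgtd}, which Assumption~\ref{as-cond-mdgtd}(ii) carries over, Lemma~\ref{lem-mdgtd-odeslow-lim} tells us that the limit set of the slow-time-scale ODE is exactly $(\nabla \psi^*)^{-1}(\theta_{\text{opt}}) \cap D_{\theta^*}$, using Assumption~\ref{as-cond-mdgtd}(iii) on the uniqueness of $\theta_{\text{opt}}$. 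The whole point is therefore to show that these ODE limit sets describe the almost-sure asymptotic behavior of the iterates, rather than merely their asymptotic distributions.

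The first thing I would do is exploit the decisive simplification available for history-dependent $\lambda$: by Condition~\ref{cond-lambda}(ii) the trace iterates $\{\e_n\}$ lie in a pre-determined bounded set, so $\{Z_n\}$ takes values in a compact space and the functions $g, k$ appearing in the iterates (\ref{eq-mdgtd1-thstar-alt})-(\ref{eq-mdgtd1-xalt}) are bounded on the relevant domain. This boundedness, combined with the confinement of the $(\theta^*, x)$-iterates to the compact constraint sets $D_{\theta^*}$ and $B_x$, is what makes the square-summable analysis tractable and is precisely why the theorem is stated only for history-dependent $\lambda$. I would then verify the hypotheses needed to apply the two-time-scale almost-sure convergence results: the martingale-difference noise terms $\e_n\omega_{n+1}$ have bounded conditional variance and, under $\sum_n \beta_n^2 < \infty$, contribute a convergent martingale (so the noise is asymptotically negligible almost surely); and the ``averaging'' of the $Z_n$-dependent drift against the invariant measure $\zeta$ holds in the appropriate strong sense. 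Here I would lean on Theorem~\ref{thm-erg}(ii), which already gives almost-sure convergence of the time-averages $\frac{1}{n}\sum_{i=0}^{n-1} f(Z_i) \to \E_\zeta[f(Z_0)]$ for Lipschitz-in-trace functions $f$---exactly the averaging property the Kushner--Yin/Borkar almost-sure theorems require, and a strictly stronger statement than the convergence-in-mean used in Section~\ref{sec-3}.

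The two-time-scale argument itself proceeds as before but now pathwise: at the fast time-scale the $x$-iterates track the solution $\bar x(\theta^*)$ of the fast ODE~(\ref{eq-mdgtd1-odefast}) almost surely, using the analogue of Lemma~\ref{lem-Bx} for the MD-GTD constraint set noted just after (\ref{eq-mdgtd1-odefast}), and then feeding $x_n \to \bar x(\theta_n^*)$ (a.s.) into the slow iteration makes the residual $\Delta_n$-type terms negligible so that $\{\theta_n^*\}$ follows the slow ODE~(\ref{eq-mdgtd1-odeslow}). By the convergence theorem for the slow-time-scale iterate, $\{\theta_n^*\}$ converges almost surely to the limit set $(\nabla \psi^*)^{-1}(\theta_{\text{opt}}) \cap D_{\theta^*}$ of~(\ref{eq-mdgtd1-odeslow}). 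The final step translates this back to the $\theta$-iterates: since $\theta_n = \nabla\psi^*(\theta_n^*)$ and $\nabla\psi^*$ is continuous, the implication~(\ref{eq-mdgtd-thstar2th}) gives $\theta_n \to \theta_{\text{opt}}$ almost surely, which is the claim.

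The main obstacle I anticipate is not in the GTD/MD-GTD-specific algebra---that is essentially recycled from Sections~\ref{sec-gtd1-odes} and~\ref{sec-mdgtd}---but in cleanly invoking a \emph{two-time-scale} almost-sure convergence theorem whose averaging condition is matched to the non-i.i.d., ergodic-Markov structure of $\{Z_n\}$ rather than to i.i.d. or bounded-martingale-difference noise. The standard two-time-scale results (e.g.~Borkar~\cite{Bor08}) assume a stability/boundedness of iterates (here guaranteed by the constraint sets) and a Markov-averaging condition; the delicate point is justifying, at the fast time-scale, that the slowly varying $\theta_n^*$ does not interfere with the ergodic averaging driving the $x$-iterates. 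I expect this to be handled exactly as the fast-time-scale tracking argument in the weak-convergence proofs (Section~\ref{sec-gtd1conv-dim}), but with Theorem~\ref{thm-erg}(ii)'s almost-sure averaging in place of convergence-in-mean, and with the $\sum_n\beta_n^2 <\infty$ condition ensuring the fast-scale martingale noise vanishes almost surely. Because all the heavy ODE limit-set analysis was already completed in Lemma~\ref{lem-mdgtd-odeslow-lim}, the remaining work is primarily the careful but routine verification of these stochastic-approximation hypotheses, so I would keep the proof short and reference the established results rather than reprove them.
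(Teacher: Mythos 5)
Your overall architecture is the paper's: apply the single-time-scale almost-sure theorem \cite[Theorem 6.1.1]{KuY03} twice (fast scale, then slow scale), use the boundedness of the traces under history-dependent $\lambda$ and the constraint sets to dispose of stability and noise issues, conclude $x_n - \bar x(\theta^*_n) \asto 0$, deduce almost sure convergence of $\{\theta^*_n\}$ to the limit set $(\nabla\psi^*)^{-1}(\theta_{\text{opt}}) \cap D_{\theta^*}$ given by Lemma~\ref{lem-mdgtd-odeslow-lim}, and translate back to $\{\theta_n\}$ via (\ref{eq-mdgtd-thstar2th}). However, there is a genuine gap in the one place where you specify the technical tool: you claim that Theorem~\ref{thm-erg}(ii), the almost-sure convergence of the Ces\`{a}ro averages $\frac{1}{n}\sum_{i=0}^{n-1} f(Z_i) \to \E_\zeta[f(Z_0)]$, is ``exactly the averaging property'' required. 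It is not. The conditions of \cite[Theorem 6.1.1]{KuY03} are Kushner--Clark asymptotic-rate-of-change conditions on \emph{stepsize-weighted} sums over windows defined by the stepsizes, of the form (\ref{eq-ascondf2a}) and (\ref{eq-asconds2}), namely control of $\sup_{j \geq n}\max_{0 \leq t \leq T}\big\| \sum_{i=m_\beta(jT)}^{m_\beta^-(jT+t)} \beta_i (f(Z_i) - \bar f)\big\|$. The strong law implies such conditions only when the stepsize is $O(1/n)$ with suitable regularity: writing $\beta_n \sum_{i=0}^{n-1}\psi_i = (n\beta_n)\cdot \frac{1}{n}\sum_{i=0}^{n-1}\psi_i$, the factor $n\beta_n$ diverges whenever $\beta_n$ decays more slowly than $1/n$, and then the plain SLLN gives no control. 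This is precisely why the paper's state-dependent-$\lambda$ result (Theorem~\ref{thm-gtds-as-conv2}) is restricted to $\alpha_n = O(1/n)$, and why no two-time-scale theorem is stated there at all: the two-time-scale requirements $\sum_n \alpha_n = \infty$ and $\alpha_n/\beta_n \to 0$ make it impossible for both stepsize sequences to be $O(1/n)$, so in your setting at least one (in practice both) of the scales falls outside the regime where Theorem~\ref{thm-erg}(ii) suffices. Assumption~\ref{as-cond-mdgtd} explicitly allows stepsizes such as $n^{-c}$, $c \in (1/2,1]$, through Assumption~\ref{cond-large-stepsize} plus square-summability, so your verification step would fail for exactly the stepsizes the theorem is meant to cover.

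What fills this hole in the paper is a different ergodicity result: Prop.~\ref{prop-trace-averaging2}, whose proof rests on Lemma~\ref{lem-occ-meas} (a consequence of \cite[Lemma 6, Chap.\ 6.3]{Bor08} together with Theorem~\ref{thm-erg}(i)). That lemma asserts that the \emph{stepsize-weighted random occupation measures} $\mu^{\beta,t}_\tau$ converge weakly, almost surely, to the unique invariant probability measure of $\{Z_i\}$; combined with the boundedness of the traces (which makes $f(Z_i) - \bar f$ uniformly bounded, allowing the reduction to finitely many window lengths and the use of bounded convergence), this yields precisely (\ref{eq-ascondf2a}) and (\ref{eq-asconds2}) for the whole admissible stepsize class. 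A complete proof along your lines must invoke this weighted-occupation-measure machinery (or reprove an equivalent), not Theorem~\ref{thm-erg}(ii); and, as a secondary omission, when the $\lambda$-parameters follow the composite scheme (\ref{eq-cp-histlambda}) allowed by Assumption~\ref{as-cond-mdgtd}, the averaging argument must additionally be applied to each of the $\ell$ concurrent trace processes separately and the results summed, as in the paper's treatment of GTD.
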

%\smallskip

In the rest of this subsection, we prove the two theorems above. 
The proof arguments for MD-GTD are essentially the same as those for GTD; we will explain this after we give the proof details for GTD.

\subsubsection{Convergence proof for GTD} \label{subsec-asconv-gtd}

In this subsection we prove Theorem~\ref{thm-gtd-as-conv} for the GTD algorithms. As before, for notational simplicity, we will use GTDa to show how we carry out the analysis. Because the condition on the constraint set $B_x$ is stronger for GTDb and weaker for GTDa, and because the proof for GTDb is essentially the same as that for GTDa under the stronger condition on $B_x$, we will analyze GTDa under both conditions.

Recall the two-time-scale GTDa algorithm for minimizing $J_p(\theta)$:
\begin{align*}
   \theta_{n+1} & = \Pi_{B_\theta} \Big( \theta_n + \alpha_n \,  \rho_n \big(\phi(S_n) - \gma_{n+1} \phi(S_{n+1}) \big) \cdot \e_n^\tr x_n - \alpha_n \nabla p(\theta_n) \Big),  \\   
   x_{n+1} & = \Pi_{B_x} \Big( x_n + \beta_n \big(\e_n \delta_n(v_{\theta_n}) - \phi(S_n) \phi(S_n)^\tr x_n\big) \Big). 
\end{align*} 
In terms of the functions $g, k$ defined in~(\ref{eq-gtd1-gk}), we can write it as
\begin{align*}
     \theta_{n+1} & = \Pi_{B_\theta} \big( \theta_n + \alpha_n \, g(\theta_n, x_n, Z_n) - \alpha_n \nabla p(\theta_n)\big), \\
      x_{n+1} & = \Pi_{B_x} \big( x_n + \beta_n (k(\theta_n, x_n, Z_n) + \e_n \omega_{n+1} ) \big), 
\end{align*}
where $\omega_{n+1} = \rho_n \big(R_{n+1} - r(S_n, S_{n+1}) \big)$.
Similar to the proof given earlier, we first consider the fast time-scale determined by the stepsize $\{\beta_n\}$, and write the algorithm in the form of a single-time-scale algorithm as 
\begin{align} 
   \left( \! \begin{array}{c}  \theta_{n+1} \\ x_{n+1} \end{array} \!\right) = \Pi_{B_\theta \times B_x}
  \! \left( \! \begin{array}{l} \theta_n + \beta_n \, (\alpha_n / \beta_n) \cdot \big( g(\theta_n, x_n, Z_n) - \nabla p(\theta_n) \big) \\
   x_n + \beta_n \big( k(\theta_n, x_n, Z_n) + \e_n \omega_{n+1} \big) \end{array} \! \right).  \label{eq-gtd1-asf}
\end{align}
We want to show that its average dynamics is characterized by the mean ODE:
\begin{equation} \label{eq-gtd1as-odefast}
\qquad \left( \! \begin{array}{c} \dot{\theta}(t) \\ \dot x(t) \end{array} \!\right) 
 =  \left( \! \begin{array}{l} 0 \\
 \bar k \big(\theta(t), x(t) \big) + z(t)  \end{array} \!\right), \qquad \theta(0) \in B_\theta, \  z(t) \in - \N_{B_x}(x(t)),
\end{equation} 
where the function $\bar k(\cdot)$ is given by (\ref{eq-gtd1-bk}) as before, $z(t)$ is the boundary reflection term. 

We then consider the slow time-scale. For each $\theta$, let us define $\bar x(\theta)$ and $\bar g(\theta)$, according to the condition placed on the constraint set $B_x$:\vspace*{-0.1cm} 
\begin{itemize}
\item Under the stronger condition (\ref{eq-cond-Bx0}) on $B_x$, let $\bar x(\theta) = x_\theta$ and $\bar g(\theta) = - \nabla J(\theta)$ as in Section~\ref{sec-gtd1-odes} (cf.\ (\ref{eq-xtheta3})-(\ref{eq-gtd1-bg})).
\item Under the weaker condition $x_{\text{opt}} \in \text{int} (B_x)$, let $\bar x(\theta) = \tilde x(\theta)$ and $\bar g(\theta) = \bar g(\theta, \tilde x(\theta))$ as in Section~\ref{sec-gtda-revisit}.
\end{itemize}
With these definitions, we write the $\theta$-iterates at the slow time-scale equivalently as
\begin{align} 
 \theta_{n+1} & = \Pi_{B_\theta} \big( \theta_n + \alpha_n (g(\theta_n, \bar x (\theta_n), Z_n) - \nabla p(\theta_n) + \Delta_n ) \big),   \label{eq-gtd1-ass}\\
\text{where} \qquad \Delta_n & = g(\theta_n, x_n, Z_n) - g(\theta_n, \bar x (\theta_n), Z_n).  \label{eq-gtd1-ass-noise}
\end{align}
Treating $\Delta_n$ as noise terms, we want to show that the average dynamics of (\ref{eq-gtd1-ass}) is characterized by the mean ODE:
\begin{equation}  \label{eq-gtd1as-odeslow}
  \dot \theta(t) = \bar g (\theta(t)) - \nabla p ( \theta(t))+ z(t),  \quad z(t) \in - \N_{B_\theta}\big(\theta(t)\big),
\end{equation}
where $z(t)$ is the boundary reflection term.

The ODEs (\ref{eq-gtd1as-odefast}) and (\ref{eq-gtd1-ass}) above are the same mean ODEs for GTDa as discussed before in Sections~\ref{sec-gtd1-odes} and \ref{sec-gtda-revisit}. However, because the notion of convergence here is different and stronger, to show that the average dynamics of the algorithm is characterized by these mean ODEs, we need to verify a different set of conditions, which are more demanding (the reason that we required the traces to be bounded).

\bigskip
\noindent {\bf Conditions to verify}
\smallskip

We will apply \cite[Theorem 6.1.1]{KuY03} in our convergence proof. This theorem concerns single-time-scale stochastic approximation algorithms with ``exogenous noises.'' We will apply it twice, first to the fast-time-scale iterates (\ref{eq-gtd1-asf}) and then to the slow-time-scale iterates (\ref{eq-gtd1-ass}).
In order to apply \cite[Theorem 6.1.1]{KuY03}, we need to verify the conditions listed below. Let us first explain some notations in the conditions. The ODE-based analysis works with continuous-time processes that are piecewise linear or constant interpolations of the discrete-time iterates. There is a correspondence between the continuous-time and the discrete-time, which is defined by the stepsizes used in the iterations.
Imagine placing consecutive iterations at moments on the continuous timeline that are separated by the stepsizes used in these iterations.
With respect to the stepsizes $\{\beta_n\}$, define, for each $t \geq 0$,
$$m_\beta(t) : = \min \big\{ n \mid \textstyle{\sum_{i=0}^n} \beta_i > t \big\}, \qquad  m_\beta^-(t) : = \max \{ 0, m_\beta(t) - 1\}.$$
Thus $m_\beta(t)$ points to the first iteration occurred after time $t$, and $m_\beta^-(t)$ the latest iteration before time $t$ has elapsed. 
Abusing notation, we also denote
$$m_\beta(n,t) = \min \big\{ j \geq n \mid \textstyle{\ \sum_{i=n}^j} \beta_i > t \big\}, \qquad m_\beta^-(n,t) = \max \{ n, m_\beta(n,t) - 1\}.$$
These are similar to the ``iteration pointers'' above, but work with a continuous timeline that has the $n$-th iteration placed at time $0$.
The above pointers with the subscript $\beta$ will appear in the conditions for the fast-time-scale iterates. 
Likewise, with respect to the stepsizes $\{\alpha_n\}$, we define ``iteration pointers'' $m_\alpha(t), m_\alpha^-(t)$, etc.\
These pointers will appear in the conditions for the slow-time-scale iterates. 
Note that for any $\tau \geq 0$, we have $\sum_{i=m_\beta(\tau)}^{m^-_\beta(\tau+t)} \beta_i \leq t$ and $\sum_{i=m_\alpha(\tau)}^{m^-_\alpha(\tau + t)} \alpha_i \leq t$.

The following conditions correspond to the conditions (A.6.1.1) and (A.6.1.3)-(A.6.1.7) in \cite[Chap.~6.1]{KuY03}.%footnote starts
\footnote{In our case, its condition (A.6.1.2) simply requires the function $k(\theta, x, z)$ to be continuous in $(\theta, x)$ for each $z$, and the function $g(\theta, \bar x (\theta), z)$ to be continuous in $\theta$ for each $z$, which are clearly satisfied.}
%footnote ends

\medskip
\noindent {\bf Conditions for the fast time-scale:}\vspace*{-3pt}
\begin{itemize}
\item[(i)] $\sup_{n \geq 0} \E \big[ \big\| k(\theta_n, x_n, Z_n) + \e_n \omega_{n+1}  \big\| \big] < \infty$, 
$\sup_{n \geq 0} \E  \big[ (\alpha_n/\beta_n) \cdot \| g(\theta_n, x_n, Z_n) - \nabla p(\theta_n) \| \big] < \infty$.
\item[(ii)] For each $(\theta, x) \in B_\theta \times B_x$, $\eta > 0$ and $T > 0$,
\begin{equation} \label{eq-ascondf2a}
 \lim_{n \to \infty} \Pr \left\{ \sup_{j \geq n} \max_{0 \leq t \leq T} \Big\|  \sum_{i=m_\beta(jT)}^{m^-_\beta(jT+t)} \beta_i \big( k(\theta, x, Z_i) - \bar k(\theta, x) \big) \Big\| \geq \eta \right\} = 0,
\end{equation} 
\begin{equation} \label{eq-ascondf2b}
 \lim_{n \to \infty} \Pr \left\{ \sup_{j \geq n} \max_{0 \leq t \leq T} \Big\|  \sum_{i=m_\beta(jT)}^{m^-_\beta(jT+t)} \alpha_i  \big( g(\theta, x, Z_i) - \nabla p(\theta) \big)  \Big\| \geq \eta \right\} = 0.
\end{equation} 
\item[(iii)] For each $\eta > 0$ and $T > 0$,
\begin{equation}
    \lim_{n \to \infty} \Pr \left\{ \sup_{j \geq n} \max_{0 \leq t \leq T} \Big\|  \sum_{i=m_\beta(jT)}^{m^-_\beta(jT+t)} \beta_i  \e_i \omega_{i+1} \Big\| \geq \eta \right\} = 0. \notag
\end{equation}
\item[(iv)] There are nonnegative measurable functions $f_1(\theta, x)$, $f_2(z)$ such that $\|k(\theta, x, z)\| \leq f_1(\theta,x) f_2(z)$ for all $(\theta, x) \in B_\theta \times B_x$, where
$f_1$ is bounded on $B_\theta \times B_x$. For each $\eta > 0$,
\begin{equation}
 \lim_{t \to 0} \lim_{n \to \infty} \Pr \left\{ \sup_{j \geq n}   \sum_{i=m_\beta(jt)}^{m^-_\beta(jt+t)} \beta_i f_2( Z_i) \geq \eta \right\} = 0. \notag
\end{equation} 
\item[(v)] There are nonnegative measurable functions $f_3(\theta, x)$, $f_4(z)$ such that 
$$\| k(\theta, x, z) - k(\theta', x', z) \| \leq f_3(\theta - \theta', x - x') f_4(z), \quad \forall \, (\theta, x), (\theta', x') \in B_\theta \times B_x,$$ 
where $f_3$ is bounded on $B_\theta \times B_x$ and $f_3(\theta, x) \to 0$ as $(\theta, x) \to 0$, and 
\begin{equation}
   \Pr \left\{ \limsup_{n \to \infty}   \sum_{i=n}^{m_\beta(n,t)} \beta_i f_4( Z_i) < \infty \right\} = 1, \quad \text{for some} \ t > 0. \notag
\end{equation}
\end{itemize}

\smallskip
\noindent {\bf Conditions for the slow time-scale:}\vspace*{-3pt}
\begin{itemize}
\item[(i)] $\sup_{n \geq 0} \E  \big[ \| g(\theta_n, x_n, Z_n) - \nabla p(\theta_n) \| \big] < \infty$.
\item[(ii)] For each $\theta \in B_\theta$, $\eta > 0$ and $T > 0$,
\begin{equation} \label{eq-asconds2}
 \lim_{n \to \infty} \Pr \left\{ \sup_{j \geq n} \max_{0 \leq t \leq T} \Big\|  \sum_{i=m_\alpha(jT)}^{m^-_\alpha(jT+t)} \alpha_i \big( g(\theta, \bar x(\theta), Z_i) - \bar g(\theta) \big) \Big\| \geq \eta \right\} = 0.
\end{equation} 
\item[(iii)] For each $\eta > 0$ and $T > 0$,
\begin{equation} \label{eq-asconds3}
 \lim_{n \to \infty} \Pr \left\{ \sup_{j \geq n} \max_{0 \leq t \leq T} \Big\|  \sum_{i=m_\alpha(jT)}^{m^-_\alpha(jT+t)} \alpha_i \big( g(\theta_i, x_i, Z_i)  - g(\theta_i, \bar x(\theta_i), Z_i)  \big) \Big\| \geq \eta \right\} = 0.
\end{equation}
\item[(iv)] There are nonnegative measurable functions $f_1(\theta)$, $f_2(z)$ such that 
$$\|g(\theta, \bar x(\theta), z) - \nabla p(\theta) \| \leq f_1(\theta) f_2(z), \quad \forall \, \theta \in B_\theta,$$ 
where $f_1$ is bounded on $B_\theta$. For each $\eta > 0$ and $T > 0$,
\begin{equation}
 \lim_{t \to 0} \lim_{n \to \infty} \Pr \left\{ \sup_{j \geq n}   \sum_{i=m_\alpha(jt)}^{m^-_\alpha(jt+t)} \alpha_i f_2( Z_i) \geq \eta \right\} = 0. \notag
\end{equation} 
\item[(v)] There are nonnegative measurable functions $f_3(\theta)$, $f_4(z)$ such that 
$$\big\| g(\theta, \bar x(\theta), z) - \nabla p(\theta) - \big(g(\theta', \bar x(\theta'), z) - \nabla p(\theta') \big) \big\| \leq f_3(\theta - \theta') f_4(z), \quad \forall \, \theta, \theta' \in B_\theta,$$ 
where $f_3$ is bounded on $B_\theta$ and $f_3(\theta) \to 0$ as $\theta \to 0$, and 
\begin{equation}
   \Pr \left\{ \limsup_{n \to \infty}   \sum_{i=n}^{m_\alpha(n, t)} \alpha_i f_4( Z_i) < \infty \right\} = 1, \quad \text{for some} \ t > 0. \notag
\end{equation}
\end{itemize}

Most of the above conditions can be quickly verified. 
Because we restrict attention to the case of history-dependent $\lambda$, by our choice of the $\lambda$'s (cf.~Condition~\ref{cond-lambda}(ii)), $\{\e_n\}$ lies in a pre-determined bounded set. Since the iterates $(\theta_n, x_n)$ are also confined in bounded sets, the conditions~(i) for both time-scales hold obviously. 
Consider now the conditions (iv)-(v) for the fast time-scale.
By the definition of $ k(\cdot)$ (cf.\ (\ref{eq-gtd1-gk})), we have for some constant $c > 0$,
\begin{align} 
 \| k(\theta, x, z) \| & \leq  c \, (\| \theta \| +  \| x\| +  1)  \cdot (  \| \e \|  + 1)  \label{eq-asprf-k1}, \\
\| k(\theta, x, z) - k(\theta', x', z) \| & \leq c \, (  \| \theta - \theta'\| +\| x - x'\|  ) \cdot (\| \e\| + 1), \label{eq-asprf-k2} 
\end{align}
so in these conditions, we can let $f_1(\theta, x) = \| \theta \| +  \| x\| +  1$, $f_3(\theta, x) = \| \theta \| +  \| x\|$, and 
$f_2(z) = f_4(z) =c \, ( \| \e\| + 1)$. 
Since the traces $\{\e_n\}$ lie in a pre-determined bounded set, $f_2(Z_i)$ and $f_4(Z_i)$ are also bounded for all $i$. 
Then, in view of the fact that $\sum_{i=m_\beta(jt)}^{m^-_\beta(jt+t)} \beta_i \leq t$ and $\sum_{i=n}^{m_\beta(n,t)} \beta_i \leq t$, it is trivially true that the conditions (iv)-(v) for the fast time-scale are satisfied. 

The conditions (iv)-(v) for the slow time-scale can be similarly checked. 
Using the definition of $g(\cdot)$ (cf.\ (\ref{eq-gtd1-gk})), we have that for some constant $c > 0$ and for all $\theta \in B_\theta$,
\begin{equation} \label{eq-asprf-g1}
  \|g(\theta, \bar x(\theta), z) - \nabla p(\theta) \| \leq c \, (\| \bar x(\theta) \| + \| \nabla p(\theta) \| + 1) \cdot (\| \e \| + 1).
\end{equation}  
Correspondingly, in the condition (iv) for the slow time-scale, we let $f_1(\theta)=\| \bar x(\theta) \| + \| \nabla p(\theta) \| + 1$ and $f_2(z) = c \, (\| \e\| + 1)$. 
Since $\{\e_n\}$ lie in a pre-determined bounded set and $\sum_{i=m_\alpha(jt)}^{m^-_\alpha(jt+t)} \alpha_i \leq t$ , we see that the condition (iv) holds trivially.

Consider now the condition (v) for the slow time-scale. 
Note that $\nabla p(\theta)$ is uniformly continuous on the compact set $B_\theta$ (since $\nabla p(\cdot)$ is continuous on $\re^d$).
This implies that the function $u_p: \re_+ \to \re_+$,  
\begin{equation}  \label{eq-asprf-g2a}
    u_p(\epsilon) : = \sup_{\theta, \theta' \in B_\theta, \, \| \theta - \theta'\| \leq \epsilon} \| \nabla p(\theta) - \nabla p(\theta')\|
\end{equation}
satisfies that $u_p(\epsilon) \to 0$ as $\epsilon \to 0$. Similarly, since $\bar x(\theta)$ is continuous in $\theta$,%footnote starts
\footnote{In the case $\bar x(\theta) = \tilde x(\theta)$, the continuity of $\bar x(\cdot)$ is shown by Lemma~\ref{lem-gtda-rev1}. In the case $\bar x(\theta) = x_\theta$, one can use a simpler argument to show that $\bar x(\cdot)$ is Lipschitz continuous. In this case, $\bar x(\theta) - \bar x(\theta')$ is the unique solution to the linear system of equations (in $x$):
$x \in \sp \{\phi(\S)\},  \Phi^\tr \Xi \, \Phi x = \Phi^\tr \Xi \, (\Pl - I) \Phi (\theta - \theta') = 0$.
Therefore, $\| \bar x(\theta) - \bar x(\theta') \|_2 \leq  \| \Phi^\tr \Xi \, (\Pl - I) \Phi (\theta - \theta') \|_2 /c $, where $c$ is the smallest nonzero eigenvalue of the matrix $\Phi^\tr \Xi \, \Phi$. From this inequality it follows that $\bar x(\cdot)$ is Lipschitz continuous.}
%footnote ends
it is uniformly continuous on $B_x$ and hence the function $u_{\bar x}: \re_+ \to \re_+$,
\begin{equation}  \label{eq-asprf-g2a}
    u_{\bar x}(\epsilon) : = \sup_{\theta, \theta' \in B_\theta, \, \| \theta - \theta'\| \leq \epsilon} \| \bar x(\theta) - \bar x(\theta')\|
\end{equation} 
satisfies that $u_{\bar x}(\epsilon) \to 0$ as $\epsilon \to 0$. We then have that for some constant $c > 0$ and for all $\theta, \theta' \in B_\theta$,
\begin{align} 
\big\| g(\theta, \bar x(\theta), z) - \nabla p(\theta) - \big(g(\theta', \bar x(\theta'), z) - \nabla p(\theta') \big) \big\| & \leq \big\| g(\theta, \bar x(\theta), z) - g(\theta', \bar x(\theta'), z) \big\| \notag \\
& \quad \, + \big\|  \nabla p(\theta) -  \nabla p(\theta')  \big\| \notag \\
& \leq c  \| \bar x(\theta) - \bar x(\theta') \| \cdot \| \e \| + u_p( \| \theta - \theta'\|)  \label{eq-asprf-g2}  \\
& \leq c \big(u_{\bar x}(\| \theta - \theta' \|)  + u_p( \| \theta - \theta'\|) \big) \cdot (\| \e \| + 1).   \notag %\label{eq-asprf-g3}
\end{align} 
Now, in the condition~(v) for the slow time-scale, let $f_3(\theta) = u_{\bar x}(\| \theta \|) + u_p(\| \theta\|)$ and $f_4(z) =c  (\| \e\| + 1)$.
We see that as required, $f_3(\theta) \to 0$ as $\theta \to 0$, and moreover, since $\{\e_n\}$ lie in a pre-determined bounded set and $\sum_{i=n}^{m_\alpha(n,t)} \alpha_i \leq t$, the conditions~(v) also holds trivially.

That the algorithm satisfies the condition (iii) for the fast time-scale is a consequence of the fact that $\{\e_i \omega_{i+1}\}$ are martingale differences with bounded variances and $\{\beta_i\}$ are square-summable (see \cite[Example 1, Chap.\ 2.2, p.\ 30]{KuC78}).
Consider now (\ref{eq-ascondf2b}) in the condition (ii) for the fast time-scale. In view of the boundedness of $\{\e_n\}$, $\| g(\theta, x, Z_i) - \nabla p(\theta) \| \leq c$ for some constant $c$, so for any $j \geq n$,
$$ \max_{0 \leq t \leq T} \Big\|  \sum_{i=m_\beta(jT)}^{m^-_\beta(jT+t)} \alpha_i \big( g(\theta, x, Z_i) - \nabla p(\theta)\big) \Big\| \leq c \sum_{i=m_\beta(jT)}^{m^-_\beta(jT+T)} \alpha_i \leq c \sum_{i=m_\beta(jT)}^{m^-_\beta(jT+T)} \beta_i \cdot \frac{\alpha_i}{\beta_i} \leq c \, T \cdot \sup_{i \geq m_\beta(nT)} \frac{\alpha_i}{\beta_i},$$
which converges to $0$ as $n \to \infty$ (since $\alpha_i/\beta_i \to 0$ as $i \to \infty$). Therefore, (\ref{eq-ascondf2b}) is satisfied.

We now prove that the algorithm satisfies (\ref{eq-ascondf2a}) and (\ref{eq-asconds2}) in the conditions (ii) as well. This proof is longer and uses a result from \cite{Bor08} on the convergence of certain random probability measures. Let us first explain this result.

\bigskip
\noindent {\bf A result for verifying the ``averaging conditions'' (\ref{eq-ascondf2a}) and (\ref{eq-asconds2}):}
\smallskip

Focus, for now, on the case of history-dependent $\lambda$ under Assumption~\ref{cond-collective} (the composite scheme of setting $\lambda$ will be treated later using the results we are going to derive first). Define random variables $\mu^{\beta,t}_{\tau},  \mu^{\alpha,t}_{\tau}, \tau \geq 0, t > 0$, that take values in the space of probability measures on $\Z$ as follows: for Borel subsets $D \subset \Z$,
$$  \mu^{\beta,t}_{\tau}(D) = \frac{1}{\sum_{i=m_\beta(\tau)}^{m^-_\beta(\tau+t)} \beta_i } \sum_{i=m_\beta(\tau)}^{m^-_\beta(\tau+t)} \beta_i \, \I(Z_i \in D), \qquad
\mu^{\alpha,t}_{\tau}(D) = \frac{1}{\sum_{i=m_\alpha(\tau)}^{m^-_\alpha(\tau+t)} \alpha_i} \sum_{i=m_\alpha(\tau)}^{m^-_\alpha(\tau+t)} \alpha_i \, \I(Z_i \in D).$$
Recall that the Markov chain $\{Z_i\}$ has a unique invariant probability measure (which is just the probability distribution of $Z_0$ in the stationary state-trace process).
The following lemma concerns the convergence of these random probability measures to the invariant probability measure of $\{Z_i\}$ as $\tau \to \infty$. It is implied by \cite[Lemma 6, Chap.\ 6.3]{Bor08} and Theorem~\ref{thm-erg}(i).%footnote starts
\footnote{A condition in \cite[Lemma 6, Chap.\ 6.3]{Bor08} is that the set of random probability measures $\{\mu^{\beta,t}_{\tau} \mid \tau \geq 0, t > 0 \}$ is almost surely tight, which is trivially satisfied in our case since $\{Z_i\}$ lie in a bounded set. It is also assumed in \cite[Lemma 6, Chap.\ 6.3]{Bor08} that the stepsize $\beta_n$ is eventually nonincreasing; however, this assumption is used only once in the proof, as remarked in \cite{Bor08}. Where this assumption is used, the proof argument given in \cite{Bor08} also goes through if one assumes instead that $\beta_n$ satisfies $\lim_{n \to \infty} \, \sup_{0 \leq j \leq m_n} \big| \tfrac{\beta_{n+j}}{\beta_n} - 1 \big| = 0$ for some $m_n \to \infty$. Thus we can include the latter assumption as an alternative in Lemma~\ref{lem-occ-meas}. By \cite[Lemma 6, Chap.\ 6.3]{Bor08}, for each $t > 0$, any limit point of $\{\mu^{\beta,t}_{\tau}\}_{\tau \geq 0}$ must be an invariant probability measure of the weak Feller Markov chain $\{Z_i\}$. Since $\{Z_i\}$ has a unique invariant probability measure by Theorem~\ref{thm-erg}(i), we obtain Lemma~\ref{lem-occ-meas}.}
%footnote ends
In the lemma $\{\beta_n\}$ is a generic stepsize sequence, so the lemma also applies to $\mu^{\alpha,t}_\tau$.

\begin{lem} \label{lem-occ-meas}
Consider the case of history-dependent $\lambda$ under Assumption~\ref{cond-collective}. Let $\{\beta_n\}$ be positive stepsizes such that $\sum_{n \geq 0} \beta_n = \infty$, $\sum_{n \geq 0} \beta^2_n < \infty$, and either $\{\beta_n\}$ is eventually nonincreasing or for some sequence of integers $m_n \to \infty$, 
$\lim_{n \to \infty} \, \sup_{0 \leq j \leq m_n} \big| \tfrac{\beta_{n+j}}{\beta_n} - 1 \big| = 0$.
Then for each $t > 0$, almost surely, as $\tau \to \infty$, $\mu^{\beta,t}_{\tau}$ converges weakly to the unique invariant probability measure of $\{Z_i\}$.
\end{lem}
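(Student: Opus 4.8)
The plan is to reduce Lemma~\ref{lem-occ-meas} to the general result \cite[Lemma 6, Chap.\ 6.3]{Bor08} together with the uniqueness of the invariant probability measure established in Theorem~\ref{thm-erg}(i). The key structural observation is that in the case of history-dependent $\lambda$ under Assumption~\ref{cond-collective}, the process $\{Z_i\}$ lives in a bounded subset of $\Z$: by Condition~\ref{cond-lambda}(ii) the traces $\{\e_i\}$ stay in a pre-determined bounded set, and the state/memory-state components range over the finite sets $\S$ and $\M$. This boundedness is what makes all the technical hypotheses of \cite[Lemma 6]{Bor08} trivially verifiable.

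First I would verify the tightness hypothesis: the family of random probability measures $\{\mu^{\beta,t}_\tau \mid \tau \geq 0, t > 0\}$ is required to be almost surely tight, but since every $Z_i$ lies in a fixed compact set, each $\mu^{\beta,t}_\tau$ is supported on that compact set, so tightness holds deterministically. Next I would address the stepsize hypothesis. The statement of \cite[Lemma 6]{Bor08} assumes $\beta_n$ is eventually nonincreasing, but as noted in \cite{Bor08} this monotonicity is invoked at exactly one point in the proof. I would argue that the single place where monotonicity is used goes through equally well under the alternative slow-variation condition $\lim_{n \to \infty} \sup_{0 \leq j \leq m_n} \bigl| \tfrac{\beta_{n+j}}{\beta_n} - 1 \bigr| = 0$ for some $m_n \to \infty$; this justifies stating the lemma with this condition as an alternative hypothesis, which is precisely the form satisfied by the slowly diminishing stepsizes of Assumption~\ref{cond-large-stepsize}. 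The remaining stepsize requirements, $\sum_n \beta_n = \infty$ and $\sum_n \beta_n^2 < \infty$, are assumed directly.

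With these hypotheses checked, \cite[Lemma 6, Chap.\ 6.3]{Bor08} yields the conclusion that, for each fixed $t > 0$, almost surely every weak limit point of $\{\mu^{\beta,t}_\tau\}_{\tau \geq 0}$ as $\tau \to \infty$ is an invariant probability measure of the weak Feller Markov chain $\{Z_i\}$. The final step is to upgrade ``every limit point is invariant'' to ``the sequence converges weakly.'' This is where Theorem~\ref{thm-erg}(i) enters: the state-trace process, and hence $\{Z_i\}$, has a \emph{unique} invariant probability measure. Since the family $\{\mu^{\beta,t}_\tau\}$ is tight (supported on a common compact set), it is relatively compact in the weak topology, so every subsequence has a further weakly convergent subsequence; by the Borkar result each such limit must equal the unique invariant measure, and a standard subsequence argument then gives weak convergence of the full sequence $\mu^{\beta,t}_\tau$ to that measure. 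Because $\{\beta_n\}$ is treated as a generic stepsize sequence, the identical argument applies verbatim to $\mu^{\alpha,t}_\tau$.

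The main obstacle I anticipate is not the measure-theoretic bookkeeping, which is routine given compact support, but rather the careful re-examination of the one step in the proof of \cite[Lemma 6]{Bor08} that uses eventual monotonicity of the stepsizes, in order to confirm that the slow-variation condition is an adequate substitute there. This requires tracing precisely how monotonicity is exploited (typically to compare consecutive stepsize-weighted averages or to control ratios of partial sums of $\beta_i$) and checking that the estimate survives when monotonicity is replaced by asymptotic flatness of the ratios $\beta_{n+j}/\beta_n$ over windows of length $m_n$. Everything else follows mechanically from boundedness and the uniqueness of the invariant measure.
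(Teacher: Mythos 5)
Your proposal is correct and takes essentially the same route as the paper: verify tightness trivially from the boundedness of $\{Z_i\}$ under Condition~\ref{cond-lambda}(ii), note that the eventual-monotonicity hypothesis in \cite[Lemma 6, Chap.\ 6.3]{Bor08} is used only once and survives under the slow-variation alternative, apply that lemma to conclude every weak limit point of $\{\mu^{\beta,t}_\tau\}$ is invariant, and finish with the uniqueness of the invariant measure from Theorem~\ref{thm-erg}(i). The only difference is that you spell out the compactness/subsequence step upgrading "every limit point is the unique invariant measure" to full weak convergence, which the paper leaves implicit.
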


We use this lemma to establish the following result, which leads directly to the verification of the ``averaging conditions'' (\ref{eq-ascondf2a}) and (\ref{eq-asconds2}) for both time-scales.

\begin{prop} \label{prop-trace-averaging2}
Let the conditions in Lemma~\ref{lem-occ-meas} hold. Let $f(z)$ be a vector-valued function on $\Z$ that is continuous in the trace variable $\e$, and let $\bar f = \E_\zeta [ f(Z_0) ]$. Then for each $\eta > 0$ and $T > 0$,
\begin{equation} \label{eq-ave2}
 \lim_{n \to \infty} \Pr \left\{ \sup_{j \geq n} \max_{0 \leq t \leq T} \Big\|  \sum_{i=m_\beta(jT)}^{m^-_\beta(jT+t)} \beta_i \big( f(Z_i) - \bar f \, \big) \Big\| \geq \eta \right\} = 0.
\end{equation} 
\end{prop}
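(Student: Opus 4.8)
The plan is to reduce the statement entirely to the weak convergence of the random occupation measures $\mu^{\beta,t}_\tau$ furnished by Lemma~\ref{lem-occ-meas}, and then to upgrade the ``pointwise in $t$'' convergence it provides into the uniform-in-$t$, uniform-in-tail statement demanded by (\ref{eq-ave2}). Writing $\tau=jT$ and abbreviating $S_j(t)=\sum_{i=m_\beta(jT)}^{m_\beta^-(jT+t)}\beta_i\big(f(Z_i)-\bar f\big)$, the first step is to record the identity tying these weighted partial sums to the occupation measures: directly from the definition of $\mu^{\beta,t}_{jT}$,
$$S_j(t)=\Big(\textstyle\sum_{i=m_\beta(jT)}^{m_\beta^-(jT+t)}\beta_i\Big)\Big(\int f\,d\mu^{\beta,t}_{jT}-\bar f\Big),$$
where the prefactor is at most $t\le T$ by the bound $\sum_{i=m_\beta(\tau)}^{m_\beta^-(\tau+t)}\beta_i\le t$ noted before the proposition. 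Since we work in the history-dependent $\lambda$ case, the traces, hence all $Z_i$ and the support of the invariant measure, lie in a fixed compact set $\bar D$; there $f$ (continuous in the trace variable, with finite state/memory components) is bounded and continuous, say $\|f(z)-\bar f\|\le C$ for $z\in\bar D$, so it is a legitimate test function. Lemma~\ref{lem-occ-meas} then gives, for each fixed $t>0$, $\int f\,d\mu^{\beta,t}_{jT}\to\bar f$ almost surely as $j\to\infty$, and with the bounded prefactor this yields $S_j(t)\to 0$ a.s.\ for each fixed $t$.

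The second step handles the $\max_{0\le t\le T}$. I would fix a finite grid $0=t_0<t_1<\cdots<t_N=T$ of mesh at most $\delta$. For $t$ in a cell $[t_k,t_{k+1}]$ and $j$ large (so that the relevant pointers satisfy $m_\beta^-(jT+t_k)+1=m_\beta(jT+t_k)$), the increment is a window sum, $S_j(t)-S_j(t_k)=\sum_{i=m_\beta(jT+t_k)}^{m_\beta^-(jT+t)}\beta_i\big(f(Z_i)-\bar f\big)$, whose stepsize weights total at most $t-t_k\le\delta$; hence $\|S_j(t)-S_j(t_k)\|\le C\delta$. Therefore
$$\max_{0\le t\le T}\|S_j(t)\|\ \le\ \max_{0\le k\le N}\|S_j(t_k)\|+C\delta .$$
Because there are finitely many grid points and $S_j(t_k)\to 0$ a.s.\ for each by the first step, $\max_k\|S_j(t_k)\|\to 0$ a.s., so $\limsup_j\max_{0\le t\le T}\|S_j(t)\|\le C\delta$ a.s. Taking $\delta=1/m$ and intersecting the countably many almost-sure events (the grid points over all $m$ form a countable set on which Lemma~\ref{lem-occ-meas} applies simultaneously) gives $G_j:=\max_{0\le t\le T}\|S_j(t)\|\to 0$ almost surely as $j\to\infty$.

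The last step converts this into the probabilistic tail form of (\ref{eq-ave2}). Since $G_j\to 0$ a.s., the monotone tail $\sup_{j\ge n}G_j$ decreases to $\limsup_j G_j=0$ almost surely, hence converges to $0$ in probability, i.e.\ $\lim_{n\to\infty}\Pr\{\sup_{j\ge n}G_j\ge\eta\}=0$, which is exactly (\ref{eq-ave2}). I expect the main obstacle to be precisely the second step: Lemma~\ref{lem-occ-meas} only controls a single fixed window length $t$, while (\ref{eq-ave2}) requires the supremum over the whole interval $[0,T]$ and over the tail $j\ge n$ at once. The grid-plus-increment argument bridges this gap, and it succeeds only because $f$ is uniformly bounded on the compact trace set, so that short windows contribute a uniformly small amount; this is exactly where the boundedness of the traces in the history-dependent $\lambda$ case is indispensable, and it is why the proposition is not claimed in the state-dependent (unbounded-trace) setting.
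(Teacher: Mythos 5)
Your proof is correct and takes essentially the same route as the paper's: the same identity relating the weighted partial sums to the occupation measures of Lemma~\ref{lem-occ-meas}, the same finite-grid discretization of $[0,T]$ exploiting the boundedness of $f$ on the compact trace set (with a countable intersection over grids handling the almost-sure statements), and the same passage from almost-sure convergence to the tail-probability form (\ref{eq-ave2}). The only immaterial difference is the final step, where the paper invokes the Markov inequality together with the bounded convergence theorem, while you use directly that almost-sure convergence of the monotone tails $\sup_{j\ge n}G_j$ implies convergence in probability.
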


\begin{proof}
Fix $\eta$ and $T$. Denote
$\Delta_{j,t} = \sum_{i=m_\beta(jT)}^{m^-_\beta(jT+t) } \beta_i ( f(Z_i) - \bar f \, )$,
so (\ref{eq-ave2}) is the same as
\begin{equation} \label{eq-asprfa1}
 \lim_{n \to \infty} \Pr \left\{ \sup_{j \geq n} \max_{0 \leq t \leq T} \| \Delta_{j,t} \| \geq \eta \right\} = 0.
\end{equation}
Since $\{\e_n\}$ lies in a pre-determined bounded set by our choice of history-dependent $\lambda$ under Assumption~\ref{cond-collective}, for all $i$, $\|f(Z_i) - \bar f \| \leq c$ for some constant $c$. 
Let us replace the range of $t$ in (\ref{eq-asprfa1}) by a finite set of values. Let $m$ be a positive integer such that $c\, T/m \leq \eta/2$. If $ |t_1 - t_2| \leq T/m$, then 
$\| \Delta_{j, t_1} - \Delta_{j, t_2} \| \leq c \, T/m \leq \eta/2$. Therefore, to prove (\ref{eq-asprfa1}), it is sufficient to prove for the finite set of $t$, $E_m=\{ i T/m \mid i = 1, \ldots, m\}$, that
\begin{equation} \label{eq-asprfa2}
   \lim_{n \to \infty} \Pr \left\{ \sup_{j \geq n} \max_{ t \in E_m} \| \Delta_{j,t} \| \geq \eta/2 \right\} = 0.
\end{equation}
By the Markov inequality,
\begin{equation} \label{eq-asprfa2b}
 \lim_{n \to \infty} \Pr \left\{ \sup_{j \geq n} \max_{ t \in E_m} \| \Delta_{j,t} \| \geq \eta/2 \right\} 
 \leq \lim_{n \to \infty} (2/\eta) \cdot  \E \Big[ \sup_{j \geq n} \max_{ t \in E_m} \| \Delta_{j,t} \| \Big].
\end{equation}
Since $\sup_{j \geq n} \max_{ t \in E_m} \| \Delta_{j,t} \| \leq c \, T$, the relation (\ref{eq-asprfa2}) will follow from (\ref{eq-asprfa2b}) and the bounded convergence theorem if we show that
$\lim_{n \to \infty} \sup_{j \geq n} \max_{ t \in E_m} \| \Delta_{j,t} \| = 0$ a.s., i.e.,  
\begin{equation} \label{eq-asprfa3}
  \limsup_{n \to \infty} \max_{t \in E_m} \| \Delta_{n,t} \| = 0 \ \ \ \text{a.s.}
\end{equation}
So let us prove (\ref{eq-asprfa3}).
 
Denote by $\zeta'$ the unique invariant probability measure of $\{Z_i\}$. By Lemma~\ref{lem-occ-meas}, with probability~$1$, 
it holds for all $t \in E_m$ that $\mu^{\beta,t}_{\tau}$ converges weakly to $\zeta'$ as $\tau \to \infty$. 
Since $\{Z_n\}$ lie in a compact set by our choice of $\lambda$, the supports of all these probably measures are contained in the same compact set.
Then for the continuous function $f(z)$, which is (component-wise) bounded on the compact set, we have, by the definition of weak convergence, that almost surely, for all $t \in E_m$,
\begin{equation} \label{eq-asprfa4}
  \int f(z) d \mu^{\beta,t}_{\tau} - \int f(z) d\zeta' \to 0 \quad \text{as} \ \tau \to \infty.
\end{equation}  
We have $\int f(z) d\zeta' = \E_\zeta [ f(Z_0) ] = \bar f$. For $\tau_n = nT$, by the definition of $\mu^{\beta,t}_{\tau_n}$,
$$  \int f(z) d \mu^{\beta,t}_{\tau_n} = \frac{\sum_{i=m_\beta(nT)}^{m^-_\beta(nT+t)} \beta_i f(Z_i)}{\sum_{i=m_\beta(nT)}^{m^-_\beta(nT+t) } \beta_i}.$$
Then, since for all $t \leq T$, $\sum_{i=m_\beta(nT)}^{m^-_\beta(nT+t)} \beta_i \leq T$, 
we obtain from (\ref{eq-asprfa4}) that, almost surely, for all $t \in E_m$,
$$ \Big(\sum_{i=m_\beta(nT)}^{m^-_\beta(nT+t)} \beta_i \Big) \cdot \left\| \int f(z) d \mu^{\beta,t}_{\tau_n} - \bar f \, \right\| \to 0 \quad \text{as} \ n \to \infty,$$
which is the same as 
$$   \Delta_{n, t} = \sum_{i=m_\beta(nT)}^{m^-_\beta(nT+t)} \beta_i \big( f(Z_i) - \bar f) \to 0 \quad \text{as} \ n \to \infty.$$
Consequently, $\max_{t \in E_m} \| \Delta_{n, t} \| \to 0$ as $n \to \infty$, almost surely. This proves (\ref{eq-asprfa3}), from which the desired relation (\ref{eq-ave2}) follows, as shown earlier.
\end{proof}

We can now apply Prop.~\ref{prop-trace-averaging2} to show that the algorithm satisfies the required ``averaging conditions'' (\ref{eq-ascondf2a})  and (\ref{eq-asconds2}). We will do this below, inside the end part of our proof for Theorem~\ref{thm-gtd-as-conv}.

\bigskip
\noindent {\bf Completion of the proof of Theorem~\ref{thm-gtd-as-conv}}
\smallskip

In the rest of this proof of Theorem~\ref{thm-gtd-as-conv}, we discuss separately the two cases of setting $\lambda$: set $\lambda$ according to (\ref{eq-histlambda}), and set $\lambda$ according to the composite scheme (\ref{eq-cp-histlambda}). This is because the first case is simpler and provides the basis for treating the second case, as the analysis in Section~\ref{sec-cp-extension} already showed.

\medskip
\noindent {\bf Case I:} history-dependent $\lambda$'s set according to (\ref{eq-histlambda}).
In this case, for each $(\theta, x)$, applying Prop.~\ref{prop-trace-averaging2} with $f(\cdot)=k(\theta, x, \cdot)$ and stepsizes $\{\beta_n\}$, we see that the algorithm satisfies the averaging condition (\ref{eq-ascondf2a}) for the fast time-scale. For each $\theta$, applying the proposition with $f(\cdot) =g(\theta, \bar x(\theta), \cdot)$ and stepsizes $\{\alpha_n\}$, we see that the algorithm satisfies the averaging condition (\ref{eq-ascondf2a}) for the slow time-scale. 
All the conditions required for the fast time-scale have now been verified. So we can apply \cite[Theorem 6.1.1]{KuY03} to the equivalent form (\ref{eq-gtd1-asf}) of the algorithm at the fast time-scale, and conclude that almost surely, $(\theta_n, x_n)$ converges to the limit set of the ODE (\ref{eq-gtd1as-odefast}) in $B_\theta \times B_x$ for the initial condition $x(0) \in \sp \{ \phi(\S)\}$. This limit set is $\{(\theta, \bar x(\theta) ) \mid \theta \in B_\theta \}$, as shown by Lemma~\ref{lem-Bx} if $B_x$ satisfies the stronger condition (\ref{eq-cond-Bx0}), and by Lemma~\ref{lem-Bx2} otherwise.

We now consider the $\theta$-iterates (\ref{eq-gtd1-ass}) at the slow time-scale. Among the conditions listed above, we still need to verify the condition (\ref{eq-asconds3}) in (iii), which we do now. Using the almost sure convergence of $(\theta_n, x_n)$ to $\{(\theta, \bar x(\theta) ) \mid \theta \in B_\theta \}$ just proved, and using also the uniform continuity of $\bar x(\theta)$ on $B_\theta$, we have $x_n - \bar x(\theta_n) \asto 0$. Then, since $\{Z_n\}$ lie in a bounded set and since for all $z$ in that set and $\theta \in B_\theta$, $g(\theta, x, z)$ is Lipschitz continuous in $x$ uniformly w.r.t.\ $(\theta, z)$, there is a constant $c$ such that for all $i$,
$$ \big\| g(\theta_i, x_i, Z_i) - g(\theta_i, \bar x(\theta_i), Z_i) \big\| \leq c \| x_i - \bar x(\theta_i) \| \asto 0.$$
Consequently, 
\begin{equation} \label{eq-asprfb1}
  \sup_{j \geq n} \max_{0 \leq t \leq T} \Big\|  \sum_{i=m_\alpha(jT)}^{m^-_\alpha(jT+t)} \alpha_i \big( g(\theta_i, x_i, Z_i)  - g(\theta_i, \bar x(\theta_i), Z_i)  \big) \Big\|  \leq c \, T \cdot\!\! \sup_{i \geq m_\alpha(nT)} \| x_i - \bar x(\theta_i) \| \asto 0.
\end{equation}  
The l.h.s.\ above is bounded by some constant because $\{Z_n\}$ and $\{(\theta_n, x_n)\}$ are all bounded. So, using the same reasoning that led to (\ref{eq-asprfa3}) in the proof of Prop.~\ref{prop-trace-averaging2}, (\ref{eq-asconds3}) follows from the Markov inequality, the almost sure convergence to zero in (\ref{eq-asprfb1}), and the bounded convergence theorem.

Thus all the conditions required for the slow time-scale are met. We can now apply \cite[Theorem 6.1.1]{KuY03} to the $\theta$-iterates (\ref{eq-gtd1-ass}) to conclude that almost surely, $\theta_n$ converges to the limit set of the ODE~(\ref{eq-gtd1as-odeslow}) in $B_\theta$. This limit set is $\Theta_{\text{opt}}$, as we showed already in Section~\ref{sec-gtd1-odes} and Section~\ref{sec-gtda-revisit} for the two conditions on $B_x$. 
So we obtain the conclusions of Theorem~\ref{thm-gtd-as-conv} in the case of history-dependent $\lambda$'s set according to (\ref{eq-histlambda}), for the GTDa algorithm, as well as for the GTDb algorithm (since its proof under the stronger condition (\ref{eq-cond-Bx0}) on $B_x$ is essentially the same as given above).

\medskip
\noindent {\bf Case II:} history-dependent $\lambda$'s set according to the composite scheme (\ref{eq-cp-histlambda}). We proceed to verify (\ref{eq-ascondf2a}) and (\ref{eq-asconds2}) in the conditions (ii) for both time-scales. Recall from Section~\ref{sec-cp-extension} that in this case we have $\ell$ concurrent state-trace processes, $\big\{ (S_n, y_n, \e_n^{(q)} )\big\}_{n \geq 0}$, $q \leq \ell$, each of which has a unique invariant probability measure $\zeta^{(q)}$. As showed in the proof of Lemma~\ref{lem-composite-conv-mean-cond}, we can decompose $k(\cdot)$ and $g(\cdot)$ as the sum of $\ell$ functions and write, for each $(\theta, x)$,
\begin{equation} \label{eq-asprfc0}
  \textstyle{k(\theta, x, Z_n) = \sum_{q=1}^\ell k^o(\theta, x, Z_n^{(q)}), \qquad g(\theta, x, Z_n) = \sum_{q = 1}^\ell g(\theta, x, Z_n^{(q)})},
\end{equation}  
where $k^o(\cdot)$ is a continuous function and $Z_n^{(q)} = (S_n, y_n, \e_n^{(q)}, S_{n+1})$, and we also have that
\begin{align*}
 \bar k(\theta, x) & = \textstyle{  \sum_{q=1}^\ell} \bar k_q(\theta, x),  \quad  \text{where} \ \ \bar k_q(\theta, x) = \E_{\zeta^{(q)}} \big[ \, k^o(\theta, x, Z_0^{(q)}) \big], \\
\bar g(\theta) & = \textstyle{  \sum_{q=1}^\ell} \bar g_q(\theta), \qquad \text{where} \ \ \bar g_q(\theta) = \E_{\zeta^{(q)}} \big[ \, g(\theta, \bar x(\theta), Z_0^{(q)}) \big].
\end{align*}
We now apply Prop.~\ref{prop-trace-averaging2} to the $\ell$ state-trace processes individually. For each $q \leq \ell$, applying Prop.~\ref{prop-trace-averaging2} to the $q$-th process with $f(\cdot)=k^o(\theta, x, \cdot)$ and stepsizes $\{\beta_n\}$, we obtain that for each $\eta, T > 0$,
\begin{equation} \label{eq-asprfc1}
 \lim_{n \to \infty} \Pr \left\{ \sup_{j \geq n} \max_{0 \leq t \leq T} \Big\|  \sum_{i=m_\beta(jT)}^{m^-_\beta(jT+t)} \beta_i \big( k^o(\theta, x, Z^{(q)}_i) - \bar k^o_q(\theta, x) \big) \Big\| \geq \eta \right\} = 0, \qquad \forall \, q \leq \ell;
\end{equation}
and applying Prop.~\ref{prop-trace-averaging2} to the $q$-th process with $f(\cdot) = g(\theta, \bar x(\theta), \cdot)$ and stepsizes $\{\alpha_n\}$, we obtain that for each $\eta, T > 0$,
\begin{equation} \label{eq-asprfc2}
 \lim_{n \to \infty} \Pr \left\{ \sup_{j \geq n} \max_{0 \leq t \leq T} \Big\|  \sum_{i=m_\alpha(jT)}^{m^-_\alpha(jT+t)} \alpha_i \big( g(\theta, \bar x(\theta), Z^{(q)}_i) - \bar g_q(\theta) \big) \Big\| \geq \eta \right\} = 0, \qquad \forall \, q \leq \ell.
\end{equation} 
Clearly, that (\ref{eq-asprfc1}) and (\ref{eq-asprfc2}) hold for each $\eta, T > 0$ imply, respectively, that the desired relations (\ref{eq-ascondf2a}) and (\ref{eq-asconds2}) also hold for each $\eta, T > 0$. We have thus verified the ``averaging condition'' (ii) for both time-scales. From this point on, the rest of the proof is the same as that given above for the first case of history-dependent $\lambda$. 
This completes our proof of Theorem~\ref{thm-gtd-as-conv}.

\subsubsection{Convergence proof for MD-GTD}

The proof of Theorem~\ref{thm-mdgtd-as-conv} for MD-GTD follows the same steps and uses the same arguments as the proof for GTDa we just gave. So we will only briefly explain the proof outline.
In particular, take MD-GTDa as an example. 
The associated mean ODEs for the fast and slow time-scales are the same ODEs (\ref{eq-mdgtd1-odefast}) and (\ref{eq-mdgtd1-odeslow}) discussed in Section~\ref{sec-mdgtd}. 
As in the previous GTDa case, we apply \cite[Theorem 6.1.1]{KuY03} twice to analyze the iterates $\{(\theta^*_n, x_n)\}$, first to the fast time-scale and then to the slow time-scale, to establish that the average dynamics of $\{(\theta^*_n, x_n)\}$ at the fast time-scale and the average dynamics of $\{\theta^*_n\}$ at the slow time-scale are both characterized by the respective mean ODEs. Within this proof step, we first combine the conclusions obtained about the fast-time-scale iterates with the characterization of the limit set of the fast-time-scale ODE (\ref{eq-mdgtd1-odefast}) under the assumed condition on $B_x$. This leads to the conclusion that $x_n - x_{\theta_n} \asto 0$, where $\theta_n = \nabla \psi^*(\theta^*_n)$. We then use this in the analysis of the slow-time-scale $\theta^*$-iterates, and by applying \cite[Theorem 6.1.1]{KuY03} again, we obtain a convergence result about the $\theta^*$-iterates---in particular, their almost sure convergence to the limit set of the slow-time-scale ODE~(\ref{eq-mdgtd1-odeslow}). 
Then, as in Section~\ref{sec-mdgtd}, we translate this convergence of the $\theta^*$-iterates to the convergence of the $\theta$-iterates by using the relation~(\ref{eq-mdgtd-thstar2th}).

To apply \cite[Theorem 6.1.1]{KuY03} to MD-GTDa, we need to verify two sets of conditions for the fast and slow time-scales, similar to those given in the previous subsection for GTDa. The conditions are now on the $(\theta^*, x)$-space instead of the $(\theta, x)$-space, but they are essentially the same as those for GTDa, after we make the change of variable $\theta = \nabla \psi^*(\theta^*)$, similar to the case with the weak convergence proof method (cf.\ Footnote~\ref{footnote-mdgtd-cond} in Section~\ref{sec-mdgtd}). We therefore omit the details about the verification of these conditions.

\subsection{Single-Time-Scale GTDa and Biased GTDa} \label{sec-as-singletime}

We consider now the single-time-scale GTDa algorithm (\ref{eq-gtd1m-th})-(\ref{eq-gtd1m-x}) and its biased variant discussed in Section~\ref{sec-gtds}. We analyze separately the case with history-dependent $\lambda$, where the trace iterates lie in a pre-determined bounded set, and the case with state-dependent $\lambda$, where the trace iterates can be unbounded.
In the former case, we have more satisfactory results, obtained with an analysis similar to the one given in the previous subsection. 

\begin{assumption} \label{as-cond-gtds}
For the single-time-scale GTDa algorithm:\vspace*{-3pt}
\begin{itemize}
\item[\rm (i)] Condition~\ref{cond-pol} holds.
\item[\rm (ii)] The $\lambda$-parameters are set according to either (\ref{eq-histlambda}) or (\ref{eq-cp-histlambda}), where the memory states satisfy Condition~\ref{cond-mem}, and the function $\lambda(\cdot)$ in (\ref{eq-histlambda}) or each function $\lambda^{(i)}(\cdot), i \leq \ell$ in (\ref{eq-cp-histlambda}) satisfies Condition~\ref{cond-lambda}.
\item[\rm (iii)] The stepsizes $\{\alpha_n\}$ satisfy the conditions in Assumption~\ref{cond-large-stepsize} for the fast-time-scale stepsizes, as well as the square-summable condition, $\sum_{n \geq 0} \alpha^2_n < \infty$.
\item[\rm (iv)] The initial $x_0, \e_0 \in \sp  \{\phi(\S)\}$.
\end{itemize}
\end{assumption}

In the convergence theorem below, the definitions of the saddle points $D_\theta \times \{\bar x\}$ and $\Theta_{\text{\rm opt}} \times \{ x_{\text{\rm opt}} \}$ are as in Section~\ref{sec-gtds}. Recall that $\Theta_{\text{\rm opt}} = \argmin_{\theta \in B_\theta} \{ J(\theta) + p(\theta) \}$ and $x_{\text{\rm opt}}$ is the unique dual optimal solution in $\sp \{\phi(\S)\}$ of the minimax problem (\ref{eq-minimax0}) (which places no constraints on the maximizer).

\begin{thm} \label{thm-gtds-as-conv}
Let $\{(\theta_n, x_n)\}$ be generated by the single-time-scale GTDa algorithm (\ref{eq-gtd1m-th})-(\ref{eq-gtd1m-x}).
Then under Assumption~\ref{as-cond-gtds}, for each given initial condition, $\{(\theta_n, x_n)\}$ converges almost surely to the subset $D_\theta \times \{\bar x\}$ of saddle points of the minimax problem (\ref{eq-minimax1}). If the constraint set $B_x$ contains $x_{\text{\rm opt}}$ in its interior, then $D_\theta \times \{\bar x\}=\Theta_{\text{\rm opt}} \times \{ x_{\text{\rm opt}} \}$.
\end{thm}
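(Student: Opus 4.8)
The plan is to treat the joint iterate $(\theta_n,x_n)$ as a single constrained stochastic approximation recursion driven by the exogenous noise $\{Z_n\}$, and to apply \cite[Theorem 6.1.1]{KuY03} exactly once (in contrast to the two-time-scale case, where it was applied twice and required a tracking argument for the $x$-iterates). First I would rewrite the algorithm (\ref{eq-gtd1m-th})-(\ref{eq-gtd1m-x}), using the functions $g,k$ of (\ref{eq-gtd1-gk}) and the reward-noise $\omega_{n+1}=\rho_n(R_{n+1}-r(S_n,S_{n+1}))$, in the form
\begin{align*}
 \theta_{n+1} & = \Pi_{B_\theta}\big(\theta_n + \alpha_n(g(\theta_n,x_n,Z_n)-\nabla p(\theta_n))\big), \\
 x_{n+1} & = \Pi_{B_x}\big(x_n + \alpha_n(k(\theta_n,x_n,Z_n)+\e_n\omega_{n+1})\big),
\end{align*}
so that the driving vector-valued increment is $F(\theta,x,z)=(g(\theta,x,z)-\nabla p(\theta),\,k(\theta,x,z))$ with conditional mean $\bar F(\theta,x)=(\bar g(\theta,x)-\nabla p(\theta),\,\bar k(\theta,x))$ from (\ref{eq-gtds-gk}), the martingale noise is $(0,\e_n\omega_{n+1})$, and the associated mean ODE is precisely (\ref{eq-gtds-ode}).

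Second, I would verify the conditions of \cite[Theorem 6.1.1]{KuY03} for this joint recursion. These are the same list of conditions spelled out in Section~\ref{subsec-asconv-gtd} (boundedness in mean of the increments; the two domination/Lipschitz conditions; the martingale-noise condition; and the averaging condition), now imposed on $F$ and $\bar F$ rather than split across two time-scales. In the history-dependent-$\lambda$ setting the traces $\{\e_n\}$ lie in a fixed bounded set by Condition~\ref{cond-lambda}(ii), and the iterates $(\theta_n,x_n)$ are confined to $B_\theta\times B_x$, so the boundedness, domination, and Lipschitz conditions hold trivially exactly as in the GTD proof, while the martingale-noise condition holds because $\{\e_n\omega_{n+1}\}$ are martingale differences with bounded variances and $\sum_n\alpha_n^2<\infty$. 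The one substantive condition is the averaging condition; for it I would apply Prop.~\ref{prop-trace-averaging2} to the vector-valued function $f(z)=F(\theta,x,z)$, which is continuous (indeed Lipschitz) in the trace variable $\e$ for each fixed $(\theta,x)$ and satisfies $\E_\zeta[f(Z_0)]=\bar F(\theta,x)$ by (\ref{eq-gtds-gk}). When $\lambda$ is set by the composite scheme (\ref{eq-cp-histlambda}), I would instead decompose $F$ over the $\ell$ concurrent trace processes exactly as in Case II of the GTD proof and apply Prop.~\ref{prop-trace-averaging2} to each process separately.

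Finally, with all conditions verified, \cite[Theorem 6.1.1]{KuY03} yields that, almost surely, $(\theta_n,x_n)$ converges to the limit set of the ODE (\ref{eq-gtds-ode}) within $B_\theta\times B_x$. Because the initial $x_0,\e_0\in\sp\{\phi(\S)\}$, the projection structure of the recursion keeps $x_n\in\sp\{\phi(\S)\}$ for all $n$, so the relevant limit set is the one for solutions with $x(0)\in\sp\{\phi(\S)\}$, which Prop.~\ref{prop-sd-limitset} identifies as $D_\theta\times\{\bar x\}$; this gives the first claim. The second claim then follows immediately from Lemma~\ref{lem-saddle-relation}: if $x_{\text{opt}}\in\text{int}(B_x)$, then $D_\theta\times\{\bar x\}=\Theta_{\text{opt}}\times\{x_{\text{opt}}\}$. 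The main obstacle is not any single hard estimate but rather the need to re-certify the averaging condition for the full joint increment in the single-time-scale setting; fortunately the occupation-measure machinery (Lemma~\ref{lem-occ-meas}, Prop.~\ref{prop-trace-averaging2}) and the already-proven ODE limit-set characterization of Prop.~\ref{prop-sd-limitset} transfer directly, so the remaining effort is essentially assembling these ingredients rather than developing new ones.
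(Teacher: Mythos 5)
Your proposal is correct and follows essentially the same route as the paper's own proof: a single application of \cite[Theorem 6.1.1]{KuY03} to the joint constrained recursion, with the boundedness/domination/Lipschitz and martingale-noise conditions inherited from the two-time-scale analysis (thanks to the bounded traces under history-dependent $\lambda$), the averaging condition certified via Prop.~\ref{prop-trace-averaging2} (with the Case~II decomposition for the composite scheme), and the limit set identified through Prop.~\ref{prop-sd-limitset} and Lemma~\ref{lem-saddle-relation}. The only cosmetic difference is that you apply the averaging proposition to the joint increment $F$ while the paper applies it to $g$ and $k$ separately, which is equivalent since the $\nabla p(\theta)$ term is constant in $z$ for fixed $\theta$.
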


For the case of state-dependent $\lambda$ (or the case of the composite scheme discussed in Section~\ref{sec-cp-extension} that involves state-dependent $\lambda$), in general, the trace iterates can be unbounded. Our convergence result for this case is limited: it is only for stepsizes $\alpha_n = O(1/n)$. (This is also the reason why we do not have a convergence theorem for the two-time-scale GTDa algorithm with state-dependent $\lambda$---if we let the stepsize be $O(1/n)$ for the fast time-scale, we do not have a choice of stepsize left for the slow time-scale.)

\begin{assumption} \label{as-cond-gtds2}
For the single-time-scale GTDa algorithm and its biased variant:\vspace*{-3pt}
\begin{itemize}
\item[\rm (i)] The conditions in Assumption~\ref{as-cond-gtds}(i) and (iv) hold.
\item[\rm (ii)] The stepsizes $\{\alpha_n\}$ satisfy $\alpha_n = O(1/n)$ and $(\alpha_n - \alpha_{n+1})/\alpha_n = O(1/n)$.
\end{itemize}
\end{assumption}

In the convergence theorem below, the family of functions $\{ h_K \mid K > 0\}$ for the biased GTDa algorithm are the same as in Theorem~\ref{thm-gtds-bvrt}.

\begin{thm} \label{thm-gtds-as-conv2}
Consider the case of state-dependent $\lambda$. Let Assumption~\ref{as-cond-gtds2} hold, and let $\{(\theta_n, x_n)\}$ be generated by the single-time-scale GTDa algorithm (\ref{eq-gtd1m-th})-(\ref{eq-gtd1m-x}) or by its biased variant (\ref{eq-gtdsv-th})-(\ref{eq-gtdsv-x}) with $h \in \{ h_K \mid K > 0\}$. Then for each given initial condition, the following hold:\vspace*{-3pt} 
\begin{itemize}
\item[\rm (i)] In the case of GTDa, $\{(\theta_n, x_n)\}$ converges almost surely to the subset $D_\theta \times \{\bar x\}$ of saddle points of the minimax problem (\ref{eq-minimax1}). 
\item[\rm (ii)] In the case of biased GTDa, for each $\epsilon > 0$, there exists $K_\epsilon > 0$ such that if $h=h_K$ with $K \geq K_\epsilon$, then $\{(\theta_n, x_n)\}$ converges almost surely to $N_\epsilon ( D_\theta \times \{\bar x\})$.
\end{itemize}
If the constraint set $B_x$ contains $x_{\text{\rm opt}}$ in its interior, then $D_\theta \times \{\bar x\}=\Theta_{\text{\rm opt}} \times \{ x_{\text{\rm opt}} \}$ in the above.
\end{thm}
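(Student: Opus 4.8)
The plan is to reduce both parts of the theorem to a single fact: under the $O(1/n)$ stepsizes of Assumption~\ref{as-cond-gtds2}, the average dynamics of the coupled iterates $w_n = (\theta_n, x_n)$ is characterized by the mean ODE~(\ref{eq-gtds-ode}) (and by (\ref{eq-gtdsv-ode}) for the biased variant), after which the limit set is read off from the analysis of Section~\ref{sec-gtds-ode}. Concretely, I would treat $\{w_n\}$ as one constrained stochastic approximation recursion driven by the exogenous state-trace process $\{Z_n\}$, with mean field equal to the r.h.s.\ of~(\ref{eq-gtds-ode}), and invoke a general almost-sure convergence theorem for $O(1/n)$-type stepsizes with Markovian (correlated) noise, e.g.\ \cite[Chap.\ 6]{KuY03} or \cite[Chap.\ 6.3]{Bor08}. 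Since the associated mean ODE is the differential inclusion with the maximal monotone mapping $\A$ from Section~\ref{sec-gtds-ode}, and Prop.~\ref{prop-sd-limitset} already computes its limit set for $x(0) \in \sp\{\phi(\S)\}$ as $D_\theta \times \{\bar x\}$, the only substantive work is verifying the noise/averaging hypotheses of the chosen theorem; the routine conditions (continuity of $g,k$ in $(\theta,x)$, moment bounds on the increments, tightness of $\{Z_n\}$) follow from the uniform integrability of $\{\e_n\}$ in Prop.~\ref{prop-trace}(ii) exactly as in the weak-convergence analysis. The final ``interior'' statement $D_\theta \times \{\bar x\} = \Theta_{\text{\rm opt}} \times \{x_{\text{\rm opt}}\}$ is then immediate from Lemma~\ref{lem-saddle-relation}.

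The main obstacle is precisely the point where the state-dependent case departs from Section~\ref{sec-as-twotime}: the traces $\{\e_n\}$ are no longer confined to a compact set, so the occupation-measure argument of Lemma~\ref{lem-occ-meas} and Prop.~\ref{prop-trace-averaging2}, which underpins the averaging conditions in the history-dependent case, is unavailable (those results require $\{Z_n\}$ to lie in a fixed compact set). The $O(1/n)$ restriction, together with the slow-variation condition $(\alpha_n - \alpha_{n+1})/\alpha_n = O(1/n)$, is the device that compensates. My plan for the key noise estimate is to decompose the correlated noise via the Poisson equation for the weak Feller chain $\{Z_n\}$: solve $\hat F - P \hat F = F - \bar F$, where $F$ collects the $z$-dependent terms of the recursion (namely $g(\theta, x, \cdot)$ and $k(\theta, x, \cdot)$ from~(\ref{eq-gtd1-gk})), $\bar F$ is their stationary mean, and $P$ is the transition kernel. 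The ergodicity of Theorem~\ref{thm-erg}, the trace-coupling of Prop.~\ref{prop-trace}(i), and the linear growth of $F$ in $\|\e\|$ should yield a solution $\hat F$ growing at most linearly in $\|\e\|$, using the stationary-trace representation of Lemma~\ref{lem-exp-e}.

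With this decomposition, Abel summation rewrites a weighted partial sum $\sum_i \alpha_i (F(\cdot, Z_i) - \bar F)$ as a telescoping part $\alpha_n \hat F(Z_n) - \alpha_m \hat F(Z_{m+1}) + \sum_i (\alpha_i - \alpha_{i-1}) \hat F(Z_i)$ plus a martingale $\sum_i \alpha_i M_{i+1}$ with $M_{i+1} = \hat F(Z_{i+1}) - (P\hat F)(Z_i)$. Here $\alpha_n = O(1/n)$ and $(\alpha_n - \alpha_{n+1})/\alpha_n = O(1/n)$ give $|\alpha_n - \alpha_{n+1}| = O(\alpha_n/n) = O(1/n^2)$, so the telescoping remainder is dominated in expectation by $\sum_i |\alpha_i - \alpha_{i-1}|\, \E[\|\hat F(Z_i)\|]$, which is finite because $\sup_i \E[\|\e_i\|] < \infty$ (uniform integrability, Prop.~\ref{prop-trace}(ii)) bounds $\E[\|\hat F(Z_i)\|]$ while $\sum_i 1/i^2 < \infty$; the boundary terms are handled using $\alpha_n \to 0$ together with the same moment control. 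I expect the \emph{genuinely delicate} point to be the martingale term: in the state-dependent case the conditional second moments $\E[\|\e_n\|^2]$ can be \emph{unbounded} (as noted at the start of Section~\ref{sec-5}), so the usual $\sum_i \alpha_i^2 \E[\|M_{i+1}\|^2] < \infty$ criterion need not hold, and controlling $\sum_i \alpha_i M_{i+1}$ will require either a truncation argument feeding on the uniform integrability of the traces or the refined stationary/transient moment estimates behind Prop.~\ref{prop-trace}. This is exactly where the $O(1/n)$ rate is indispensable.

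For the biased variant in part~(ii), the function $h(\e_n)$ is bounded, so $g_h, k_h$ of~(\ref{eq-biasgtd1-gk}) are bounded and the martingale difficulty disappears, leaving only the correlated-noise telescoping terms to control as above; this yields almost-sure convergence to the limit set $D^h_\theta \times \{\bar x^h\}$ of the biased mean ODE~(\ref{eq-gtdsv-ode}) (the structural analysis of Section~\ref{sec-gtds-ode} transfers verbatim). Finally, invoking the approximation estimate of Lemma~\ref{lem-bias-apprsadd-lim}, for each $\epsilon > 0$ there is $K_\epsilon$ such that $D^{h_K}_\theta \times \{\bar x^{h_K}\} \subset N_\epsilon(D_\theta \times \{\bar x\})$ whenever $K \geq K_\epsilon$, which gives the stated almost-sure convergence to $N_\epsilon(D_\theta \times \{\bar x\})$; and the equivalence $D_\theta \times \{\bar x\} = \Theta_{\text{\rm opt}} \times \{x_{\text{\rm opt}}\}$ under $x_{\text{\rm opt}} \in \text{\rm int}(B_x)$ is again Lemma~\ref{lem-saddle-relation}.
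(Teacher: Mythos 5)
Your overall skeleton agrees with the paper's proof: both reduce the theorem to verifying the hypotheses of a general almost-sure convergence theorem for single-time-scale constrained stochastic approximation (\cite[Theorem 6.1.1]{KuY03}), then read off the limit set from Prop.~\ref{prop-sd-limitset}, and finish with Lemma~\ref{lem-saddle-relation} for the interior statement and Lemma~\ref{lem-bias-apprsadd-lim} for the biased variant. The divergence---and the gap---is in how the averaging (Kushner--Clark, asymptotic-rate-of-change) conditions are verified when the traces are unbounded. The paper does \emph{not} solve a Poisson equation: it invokes the observation from \cite[Chap.~6.2, p.~171]{KuY03} that these conditions are implied by a strong law of large numbers together with $\alpha_n = O(1/n)$ and $(\alpha_n - \alpha_{n+1})/\alpha_n = O(\alpha_n)$, and the required SLLN is exactly Theorem~\ref{thm-erg}(ii) (almost-sure convergence of $\tfrac{1}{n}\sum_{i<n} f(Z_i)$ for functions Lipschitz in the trace variable, valid even with unbounded traces), plus a citation to prior ETD work for $\tfrac{1}{n}\sum_{i<n}\e_i\omega_{i+1} \asto 0$. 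This sidesteps every second-moment issue.

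Your route, by contrast, stalls exactly where you say it does, and the stall is fatal as written. With state-dependent $\lambda$ the traces have only uniformly bounded \emph{first} moments (uniform integrability, Prop.~\ref{prop-trace}(ii)); $\E[\|\e_n\|^2]$ can be infinite at every $n$. Hence for your martingale term $\sum_i \alpha_i M_{i+1}$ neither the $L^2$ criterion nor any obvious truncation works: truncating at level $c_i$ requires something like $\sum_i \alpha_i \, \E\big[\|M_{i+1}\|\I(\|M_{i+1}\|>c_i)\big] < \infty$, and uniform integrability gives no decay rate for the tail expectations while $\sum_i \alpha_i = \infty$. The same problem hits the reward-noise term $\e_i\omega_{i+1}$ (a martingale difference with possibly infinite variance), which your decomposition does not address separately. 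Even your telescoping boundary term $\alpha_n\hat F(Z_n)$ must vanish \emph{almost surely} (the rate-of-change conditions are a.s.\ conditions), which bounded first moments alone do not give: $\sum_n \Pr\big(\|\hat F(Z_n)\|>\epsilon/\alpha_n\big)$ need not converge when $\alpha_n \sim 1/n$, so Borel--Cantelli is unavailable. In the paper's route this is automatic, since the a.s.\ ergodic theorem yields $f(Z_n)/n \to 0$ a.s.\ as a Ces\`aro difference. So the missing idea is precisely the one the paper is built on: do not re-derive the averaging by martingale methods; reduce the Kushner--Clark conditions to the SLLN that is already established by ergodic and coupling arguments---this reduction is the entire reason the theorem is restricted to $\alpha_n = O(1/n)$.
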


\subsubsection{Convergence proofs}

To prove the preceding convergence theorems, we will again apply \cite[Theorem 6.1.1]{KuY03}. The mean ODEs associated with the single-time-scale GTDa and its biased variant are, respectively, the ODEs~(\ref{eq-gtds-ode}) and (\ref{eq-gtdsv-ode}), which we analyzed in Section~\ref{sec-gtds}. We will use \cite[Theorem 6.1.1]{KuY03} to obtain the conclusion that the iterates produced by each algorithm converge almost surely to the limit set of its associated ODE, thereby proving the convergence theorems above.

Thus, all we need to do is to show that the algorithms satisfy the conditions of \cite[Theorem 6.1.1]{KuY03} under the assumptions of each theorem. For the single-time-scale GTDa, we need to verify the five conditions listed below, which correspond to the conditions (A.6.1.1), (A.6.1.3)-(A.6.1.4), and (A.6.1.6)-(A.6.1.7) in \cite[Chap.\ 6.1]{KuY03}. 
In the conditions, the functions $g, k, \bar g, \bar k$ are as defined in (\ref{eq-gtd1-gk}) and (\ref{eq-gtds-gk}).
The ``iteration pointers'' $m_\alpha(\cdot), m^-_\alpha(\cdot)$, etc.\ are defined w.r.t.\ the stepsizes $\{\alpha_n\}$ as in Section~\ref{subsec-asconv-gtd}.

The conditions for the biased GTDa variant are the same, except that the functions $g, k$ are replaced respectively by the functions $g_h, k_h$ (which are similar to $g, k$ but with $h(\e)$ in place of $\e$), and the functions $\bar g, \bar k$ are replaced respectively by the functions $\bar g_h, \bar k_h$ defined in (\ref{eq-gtdsvrt-gk}), and finally, in the conditions (i) and (iii), $\e_n \omega_{n+1}$ is replaced by $h(\e_n)\omega_{n+1}$.

\medskip
{\samepage \noindent {\bf Conditions for GTDa:}\vspace*{-3pt}
\begin{itemize}
\item[(i)] $\sup_{n \geq 0} \E  \big[ \| g(\theta_n, x_n, Z_n) - \nabla p(\theta_n) \| \big] < \infty$, $\sup_{n \geq 0} \E \big[ \big\| k(\theta_n, x_n, Z_n) + \e_n \omega_{n+1}  \big\| \big] < \infty$.
\item[(ii)] For each $(\theta, x) \in B_\theta \times B_x$, $\eta > 0$ and $T > 0$,
\begin{equation} \label{eq-ascondm2a}
 \lim_{n \to \infty} \Pr \left\{ \sup_{j \geq n} \max_{0 \leq t \leq T} \Big\|  \sum_{i=m_\alpha(jT)}^{m^-_\alpha(jT+t)} \alpha_i \big( g(\theta, x, Z_i) - \bar g(\theta, x) \big) \Big\| \geq \eta \right\} = 0,
\end{equation} 
\begin{equation} \label{eq-ascondm2b}
 \lim_{n \to \infty} \Pr \left\{ \sup_{j \geq n} \max_{0 \leq t \leq T} \Big\|  \sum_{i=m_\alpha(jT)}^{m^-_\alpha(jT+t)} \alpha_i \big( k(\theta, x, Z_i) - \bar k(\theta, x) \big) \Big\| \geq \eta \right\} = 0.
\end{equation} 
\item[(iii)] For each $\eta > 0$ and $T > 0$,
\begin{equation} \label{eq-ascondm3}
    \lim_{n \to \infty} \Pr \left\{ \sup_{j \geq n} \max_{0 \leq t \leq T} \Big\|  \sum_{i=m_\alpha(jT)}^{m^-_\alpha(jT+t)} \alpha_i  \e_i \omega_{i+1} \Big\| \geq \eta \right\} = 0.
\end{equation}
\item[(iv)] There are nonnegative functions $f_1(\theta, x)$ and $f_2(z)$ such that 
\begin{equation} \label{eq-ascond-sing40}
  \max\big\{ \|g(\theta, x, z) - \nabla p(\theta)\|, \, \|k(\theta, x, z)\|  \big\} \leq f_1(\theta,x) f_2(z),
\end{equation}  
where
$f_1$ is bounded on $B_\theta \times B_x$, and for each $\eta > 0$,
\begin{equation} \label{eq-ascond-sing4}
 \lim_{t \to 0} \lim_{n \to \infty} \Pr \left\{ \sup_{j \geq n}   \sum_{i=m_\alpha(jt)}^{m^-_\alpha(jt+t)} \alpha_i f_2( Z_i) \geq \eta \right\} = 0.
\end{equation} 
\item[(v)] There are nonnegative functions $f_3(\theta, x)$ and $f_4(z)$ such that 
$$ \big\| g(\theta, x, z) - \nabla p(\theta) - \big( g(\theta', x', z) - \nabla p(\theta') \big) \big\| \leq f_3(\theta - \theta', x - x') f_4(z),$$
$$ \big\| k(\theta, x, z) - k(\theta', x', z) \big\| \leq f_3(\theta - \theta', x - x') f_4(z),$$
where $f_3$ is bounded on $B_\theta \times B_x$ and $f_3(\theta, x) \to 0$ as $(\theta, x) \to 0$, and 
\begin{equation} \label{eq-ascond-sing5}
   \Pr \left\{ \limsup_{n \to \infty}   \sum_{i=n}^{m_\alpha(n,t)} \alpha_i f_4( Z_i) < \infty \right\} = 1, \quad \text{for some} \ t > 0.
\end{equation}
\end{itemize}}
\smallskip

Given the analysis in the previous section, Theorem~\ref{thm-gtds-as-conv} is evident:

\begin{proof}[Proof of Theorem~\ref{thm-gtds-as-conv}]
Assumption~\ref{as-cond-gtds}(ii) ensures that the trace iterates $\{\e_n\}$ lie in a pre-determined bounded set. It is then evident that in the previous Section~\ref{subsec-asconv-gtd}, we have already verified essentially all the conditions listed above, when analyzing the two-time-scale GTDa under the same choice of the $\lambda$-parameters. To avoid repetition, we will only discuss the condition (ii) here. Since $g(\theta, x, z)$ and $k(\theta, x, z)$ are continuous functions, applying Prop.~\ref{prop-trace-averaging2} to each of them (for fixed $(\theta, x)$) with stepsizes $\{\alpha_n\}$, we see that the condition (ii) is satisfied when we set the $\lambda$-parameters according to (\ref{eq-histlambda}). If the composite scheme with history-dependent $\lambda$ is used and the $\lambda$-parameters are set according to (\ref{eq-cp-histlambda}), then we proceed as in the analysis of Case II in the last part of the proof for Theorem~\ref{thm-gtd-as-conv}. In particular, we first decompose $g(\cdot)$ and $k(\cdot)$ as the sums of $\ell$ functions and then apply Prop.~\ref{prop-trace-averaging2} to each of the functions, before we combine the results (cf.\ the reasoning given in (\ref{eq-asprfc0})-(\ref{eq-asprfc2}) in Section~\ref{subsec-asconv-gtd}).
\end{proof}

We now proceed to prove Theorem~\ref{thm-gtds-as-conv2}. It concerns the case of state-dependent $\lambda$. In this case, without conditions to guarantee the boundedness of $\{\e_n\}$, a major difficulty for the algorithms is in satisfying the ``averaging condition'' (ii). We can only show that this condition is met for stepsizes satisfying $\alpha_n = O(1/n)$ and $(\alpha_n - \alpha_{n+1})/\alpha_n = O(1/n)$. The proof is given below. It utilizes the ergodicity property of the state-trace process given in Theorem~\ref{thm-erg}(ii), and it is similar to the proof given in \cite[Section 4.1]{Yu-siam-lstd} and \cite[Section 4.1 and Appendix B]{yu-etdarx} for the constrained off-policy TD and ETD algorithms respectively.

\begin{proof}[Proof of Theorem~\ref{thm-gtds-as-conv2}]
We give the proof for GTDa only, as the proof for the biased variant is essentially the same, except that it involves the functions $g_h, k_h, \bar g_h, \bar g_k$ instead of the functions $g, k, \bar g, \bar k$, as mentioned earlier.

We need to show that the five conditions listed above are all met. 
Similar to the derivations (\ref{eq-asprf-k1})-(\ref{eq-asprf-k2}) and (\ref{eq-asprf-g1})-(\ref{eq-asprf-g2}) in Section~\ref{subsec-asconv-gtd}, 
in the conditions (iv)-(v), 
we can take 
$$f_1(\theta, x) = \| x \| + \| \theta \| + \| \nabla p(\theta)\| + 1, \qquad f_3(\theta, x) = \| x \| + \| \theta \| + u_p(\| \theta \|),$$
and take $f_2(z), f_4(z)$ to be $c \, ( \| \e\| + 1)$ for some constant $c$. (Note that as required, $f_3(\theta, x) \to 0$ as $(\theta, x) \to 0$.)

Then, since the iterates $(\theta_n, x_n)$ lie in the bounded set $B_\theta \times B_x$, using the bound (\ref{eq-ascond-sing40}), we see that the two requirements in the condition (i) are implied by 
$\sup_{n \geq 0} \E [ \| \e_n\| ] < \infty$ and $\sup_{n \geq 0} \E [ \| \e_n \omega_{n+1} \| ] < \infty$. The latter inequality is itself implied by $\sup_{n \geq 0} \E [ \| \e_n \| ] < \infty$, because given the history $Z_i , R_i, i \leq n$, the noise term $\omega_{n+1}$ has zero mean and finite variance which depends only on the state transition $(S_n, S_{n+1})$.
Since Prop.~\ref{prop-trace}(ii) ensures $\sup_{n \geq 0} \E [ \| \e_n\| ] < \infty$, we see that the condition (i) is met.

We now focus on proving (\ref{eq-ascondm2a})-(\ref{eq-ascondm3}) and (\ref{eq-ascond-sing4})-(\ref{eq-ascond-sing5}) in the conditions listed above. 
Consider first (\ref{eq-ascond-sing4})-(\ref{eq-ascond-sing5}) in the conditions (iv)-(v). 
Based on the remarks in~\cite[p.\ 166]{KuY03}, to prove (\ref{eq-ascond-sing4}), it is sufficient to show that with $\bar f_2 = \E_\zeta [ f_2(Z_0) ]$,
\begin{equation} \label{eq-ascond-sing4a}
\lim_{n \to \infty} \Pr \left\{ \sup_{j \geq n} \max_{0 \leq t \leq T} \Big\|  \sum_{i=m_\alpha(jT)}^{m^-_\alpha(jT+t)} \alpha_i \big( f_2( Z_i) - \bar f_2 \big) \Big\| \geq \mu \right\} = 0;
\end{equation}
and to prove (\ref{eq-ascond-sing5}), it is sufficient to prove (\ref{eq-ascond-sing4a}) with $f_4$ in place of $f_2$ and $\bar f_4 = \E_\zeta [ f_4(Z_0) ]$ in place of $\bar f_2$. The above condition (\ref{eq-ascond-sing4a}) has the same form as (\ref{eq-ascondm2a})-(\ref{eq-ascondm3}) in the conditions (ii)-(iii). (They are known as the asymptotic rate of change conditions, or the Kushner-Clark conditions; cf.~\cite[Chap.\ 2.2]{KuC78}) By the examples in \cite[Chap.\ 6.2, p.\ 171]{KuY03}, one sufficient condition for this type of assumptions is that a strong law of large numbers holds. 
In particular, let $\psi_i$ represent any of the following terms involved in the conditions (ii)-(v): $g(\theta, x, Z_i) - \bar g(\theta, x)$, $k(\theta, x, Z_i) - \bar k(\theta, x)$, $\e_i \omega_{i+1}$, $f_2( Z_i) - \bar f_2$, and $f_4( Z_i) - \bar f_4$. Then for each of the conditions to hold, it is sufficient that for the respective $\psi_i$'s, we have $\alpha_n \sum_{i=0}^{n-1} \psi_i \asto 0$ and $(\alpha_n - \alpha_{n+1})/\alpha_n = O(\alpha_n)$.  

Since the functions $g(\theta, x, \cdot), k(\theta, x, \cdot), f_2(\cdot)$ and $f_4(\cdot)$ are Lipschitz continuous in the trace variable, by Theorem~\ref{thm-erg}(ii), for stepsizes $\alpha_n = O(1/n)$, we have $\alpha_n \sum_{i=0}^{n-1} \psi_i \asto 0$ for the corresponding $\psi_i$'s. Consequently the relations (\ref{eq-ascondm2a})-(\ref{eq-ascondm2b}) and  (\ref{eq-ascond-sing4})-(\ref{eq-ascond-sing5}) hold.
For (\ref{eq-ascondm3}) in the condition (iii), the proof that $\frac{1}{n} \sum_{i=0}^{n-1} \e_i \omega_{i+1} \asto 0$ was given in the context of ETD in \cite[Prop.\ 3.1(ii), Appendix~A.4]{yu-etdarx}, and the same proof applies here. 
Thus all the required conditions are met, so we can apply \cite[Theorem~6.1.1]{KuY03}, and using its conclusions, together with Prop.~\ref{prop-sd-limitset} and Lemma~\ref{lem-saddle-relation}, we then obtain the conclusions of the theorem.
\end{proof}

\subsection{Unconstrained Single-Time-Scale GTDa} \label{sec-as-singletime-unconstr}

Finally, we consider the unconstrained single-time-scale GTDa algorithm that uses the minimax approach to solve the problem $\inf_\theta J_p(\theta)$, for the regularized convex, differentiable objective function $J_p(\theta) = J(\theta) + p(\theta)$ as before. The algorithm is given by 
\begin{align}
    \theta_{n+1} & = \theta_n + \alpha_n \,  \rho_n \big(\phi(S_n) - \gma_{n+1} \phi(S_{n+1}) \big) \cdot \e_n^\tr x_n  - \alpha_n \nabla p(\theta_n),  \label{eq-ugtd1m-th}\\
   x_{n+1} & =  x_n + \alpha_n \big(\e_n \delta_n(v_{\theta_n}) - \phi(S_n) \phi(S_n)^\tr x_n\big). \label{eq-ugtd1m-x}
\end{align}
For unconstrained algorithms, keeping the iterates bounded is a difficulty in theory as well as in practice. We will impose stronger conditions on the above algorithm than those 
on the constrained GTDa in the previous subsection. We shall consider only the case of history-dependent $\lambda$ (including the composite scheme with history-dependent $\lambda$), where the traces $\{\e_n\}$ lie in a pre-determined bounded set by the choice of $\lambda$. To obtain a convergence theorem, we will apply stability and convergence results from \cite[Chap.\ 6.3]{Bor08}, and to this end, we shall also impose stronger conditions on the function $p(\cdot)$ in the regularized objective function.

\begin{assumption}[Conditions on the objective function] \label{as-cond-p} \hfill 
\vspace*{-3pt}
\begin{itemize}
\item[\rm (a)] The function $p(\cdot)$ is convex and differentiable, and its gradient mapping $\nabla p(\cdot)$ is Lipschitz continuous. Moreover, there is a convex differentiable function $p_\infty(\cdot)$ such that for each $\theta$, $\lim_{a \to \infty} p(a \theta)/a^2 = p_\infty(\theta)$. 
\item[\rm (b)] The set of optimal solutions to $\inf_\theta J_p(\theta)$ is nonempty.
\item[\rm (c)] The origin is the unique optimal solution of $\inf_{\theta} \big\{ \tfrac{1}{2} \| \Pi_\xi (\Pl - I) v_\theta \|_\xi^2 + p_\infty(\theta) \big\}$. 
\item[\rm (c$'$)] Alternative to (c), assume that $\nabla p(\cdot)$ is such that $\nabla p(\theta) \in \sp \{\phi(\S)\}$ for all $\theta \in \sp \{ \phi(\S)\}$, and that the origin is the uniquq optimal solution of $\inf_{\theta} \big\{ \tfrac{1}{2} \| \Pi_\xi (\Pl - I) v_\theta \|_\xi^2 + p_\infty(\theta) \big\}$ in $\sp \{ \phi(\S)\}$. 
\end{itemize}
\end{assumption}

We remark that Assumption~\ref{as-cond-p}(a) is especially strong. The requirement on $\nabla p(\cdot)$ being Lipschitz continuous rules out regularizers such as $p(\theta) = \sum_{j=1}^d | \theta_j|^c$, $c \in (1, 2)$, which can be useful if sparse solutions are preferred. The requirement on the convergence to $p_\infty$ also seems strong, since if $\nabla p(\cdot)$ is Lipschitz continuous, then for all $a > 0$, the functions $p(a \theta)/a^2$ are pointwise bounded and there always exists a sequence of them converging to a finite convex function. It may be possible to relax this requirement if one can make sure that the proof arguments in \cite[Chap.\ 6.3]{Bor08} still go through under weakened conditions.
We also note that it can be shown that Assumption~\ref{as-cond-p}(a) and (c) imply Assumption~\ref{as-cond-p}(b). However, the latter is a more natural assumption on the problem that the unconstrained algorithm aims to solve, since $p_\infty$ is introduced only for the purpose of analysis.

The next two lemmas are two implications of the preceding assumptions. They concern the optimal solutions of the minimization problem $\inf_{\theta} J_p(\theta)$ and the saddle points of its equivalent minimax problem. We will need them to state our convergence result.

\begin{lem} \label{lem-asugtd-optset}
Under Assumption~\ref{as-cond-p}(a)-(c), the set of optimal solutions of $\inf_{\theta} J_p(\theta)$ is compact. If instead of Assumption~\ref{as-cond-p}(c), Assumption~\ref{as-cond-p}(c$'\!$) holds, then the subset of those optimal solutions of  $\inf_{\theta} J_p(\theta)$ in $\sp \{ \phi(\S)\}$ is nonempty and compact.
\end{lem}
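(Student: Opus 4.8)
The plan is to control the growth of $J_p$ at infinity through its degree-two horizon function
\[
 h(\theta) := \tfrac{1}{2} \big\| \Pi_\xi (\Pl - I) v_\theta \big\|_\xi^2 + p_\infty(\theta),
\]
which I will show equals $\lim_{a \to \infty} J_p(a\theta)/a^2$ for each $\theta$, and whose minimizer set is pinned down by Assumption~\ref{as-cond-p}(c) or (c$'$). For the quadratic part this is direct: since $\Tl v_\theta - v_\theta = \rl + (\Pl - I)\Phi\theta$, one has $J(a\theta)/a^2 = \tfrac12\big\|\tfrac1a \Pi_\xi \rl + \Pi_\xi(\Pl - I)v_\theta\big\|_\xi^2 \to \tfrac12\|\Pi_\xi(\Pl-I)v_\theta\|_\xi^2$, and this limit is jointly continuous in $(1/a,\theta)$. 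For the regularizer, I would observe that each $q_a(\theta):=p(a\theta)/a^2$ is convex and, by Assumption~\ref{as-cond-p}(a), converges pointwise on $\re^d$ to the finite convex function $p_\infty$; hence by \cite[Theorem 10.8]{Roc70} the convergence is uniform on compact sets. Also $h(0)=0$ because $p_\infty(0)=\lim_a p(0)/a^2=0$.

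The first main step (compactness under (a)--(c)) is to show by contradiction that every sublevel set $\{\theta : J_p(\theta)\le M\}$ is bounded. If some sublevel set were unbounded, I would pick $\theta_n$ in it with $a_n:=\|\theta_n\|\to\infty$ and $u_n:=\theta_n/a_n\to u$ on the unit sphere. Dividing by $a_n^2$ and using the locally uniform convergence established above (which permits taking $a_n\to\infty$ and $u_n\to u$ jointly) gives $J_p(\theta_n)/a_n^2\to h(u)$; but $J_p(\theta_n)\le M$ forces $J_p(\theta_n)/a_n^2\to 0$, so $h(u)\le 0$. Since $h(0)=0$ and Assumption~\ref{as-cond-p}(c) makes the origin the \emph{unique} minimizer of $h$, we have $h(u)>0$ for $u\neq 0$, contradicting $\|u\|=1$. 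Thus all sublevel sets are bounded; the optimal set $\argmin_\theta J_p$ is the sublevel set with $M=\inf_\theta J_p$, which is nonempty by Assumption~\ref{as-cond-p}(b) and closed by continuity of $J_p$, hence compact.

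The second main step (the (c$'$) case) is to run the identical horizon argument confined to $L:=\sp\{\phi(\S)\}$: the restriction $\tilde J_p:=J_p|_L$ satisfies $\tilde J_p(au)/a^2\to h(u)$ for $u\in L$, and Assumption~\ref{as-cond-p}(c$'$) makes the origin the unique minimizer of $h$ over $L$, so $\tilde J_p$ has bounded sublevel sets in $L$ and attains its infimum on a nonempty compact subset of $L$. It then remains to identify $\argmin_{u\in L}\tilde J_p$ with $\argmin_\theta J_p\cap L$. The nontrivial inclusion uses two facts: from the gradient expression (\ref{eq-Jgrad1}), $\nabla J(\theta)$ always lies in the column space of $\Phi^\tr$, i.e.\ in $L$; and Assumption~\ref{as-cond-p}(c$'$) gives $\nabla p(u)\in L$ for $u\in L$. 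Hence for $u^\ast\in\argmin_{u\in L}\tilde J_p$ the first-order condition over $L$, namely $\Pi_L(\nabla J(u^\ast)+\nabla p(u^\ast))=0$, combined with $\nabla J(u^\ast)+\nabla p(u^\ast)\in L$, yields $\nabla J_p(u^\ast)=0$, so by convexity $u^\ast$ minimizes $J_p$ globally; the reverse inclusion is immediate. This gives the stated nonempty compact set (and incidentally establishes nonemptiness of $\argmin_\theta J_p$ in this case).

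The step I expect to be the main obstacle is the joint passage to the limit in the compactness argument: Assumption~\ref{as-cond-p}(a) supplies only \emph{pointwise} convergence $p(a\theta)/a^2\to p_\infty(\theta)$, whereas the contradiction needs $p(a_n u_n)/a_n^2\to p_\infty(u)$ with $a_n\to\infty$ and $u_n\to u$ simultaneously. Upgrading to locally uniform convergence via convexity and \cite[Theorem 10.8]{Roc70} is exactly what makes this legitimate, and it is the one place where finiteness of $p$ and $p_\infty$ on all of $\re^d$ is essential. A secondary delicate point is the subspace bookkeeping in the (c$'$) case, where the invariance of $L$ under $\nabla p$ is precisely what allows a minimizer over $L$ to be promoted to a global minimizer of $J_p$.
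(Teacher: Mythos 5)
Your proof is correct, and its core compactness argument is essentially the paper's: both normalize an unbounded sequence by $\|\theta_n\|^2$, pass to the ``horizon'' function $\tfrac12\|\Pi_\xi(\Pl-I)v_\theta\|_\xi^2+p_\infty(\theta)$, upgrade the pointwise convergence $p(a\theta)/a^2\to p_\infty(\theta)$ to locally uniform convergence via convexity and \cite[Theorem 10.8]{Roc70} (the same citation and the same delicate step the paper uses), and contradict the uniqueness of the origin as minimizer of the horizon function. The differences are two. First, you prove boundedness of \emph{all} sublevel sets of $J_p$, using only $J_p(\theta_n)\le M$, whereas the paper works directly with an unbounded set of optimizers and exploits the optimality inequality $J_p(\theta_i)\le J_p(\theta)$ to conclude that the limit direction $\tilde\theta_\infty\neq 0$ is itself a minimizer of the horizon problem; your version is marginally stronger (a coercivity statement) but the mechanism is identical. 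Second, and more substantively, your treatment of the (c$'$) case is a genuinely different route: you minimize the restriction $J_p|_{\sp\{\phi(\S)\}}$ (nonempty compact argmin from bounded sublevel sets in the subspace) and then \emph{promote} such a minimizer to a global one by combining the first-order condition $\Pi_{\sp\{\phi(\S)\}}\nabla J_p(u^\ast)=0$ with the inclusion $\nabla J_p(u^\ast)\in\sp\{\phi(\S)\}$, so that $\nabla J_p(u^\ast)=0$. The paper instead starts from a global optimizer (guaranteed by Assumption~\ref{as-cond-p}(b)), projects it onto $\sp\{\phi(\S)\}$, and uses the convexity inequality $J_p(\theta_{\text{opt}})\ge J_p(\bar\theta)+\langle\nabla J_p(\bar\theta),\theta_{\text{opt}}-\bar\theta\rangle=J_p(\bar\theta)$ to show the projection is also optimal, then reruns the compactness contradiction on $\Theta_{\text{opt}}\cap\sp\{\phi(\S)\}$. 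Both hinge on the same key fact that $\nabla J_p$ maps the subspace into itself; your variant has the minor bonus of not invoking Assumption~\ref{as-cond-p}(b) in the (c$'$) case, since existence of a minimizer comes out of the restricted coercivity argument.
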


\begin{proof}
Proof by contradiction. Suppose that Assumption~\ref{as-cond-p}(a)-(c) hold and yet $\inf_{\theta} J_p(\theta)$ has an unbounded (closed) set $\Theta_{\text{\rm opt}}$ of optimal solutions. 
Let $\theta_i \in \Theta_{\text{\rm opt}}$ for $i \geq 0$, with $a_i:= \| \theta_i \|_2 \to + \infty$ as $i \to +\infty$. 
Recall 
$$J_p(\theta) = \tfrac{1}{2} \big\| \Pi_\xi (\Tl v_{\theta} - v_{\theta} ) \big\|_\xi^2 + p(\theta)  =  \tfrac{1}{2} \big\| \Pi_\xi (\Pl  - I) \Phi \theta  + \Pi_\xi r_\pi^{(\lambda)}  \big\|_\xi^2 + p(\theta).$$
By the optimality of the $\theta_i$'s, $J_p(\theta_i) \leq J_p(\theta)$ for all $\theta \in \re^d$. 
Therefore, with $\tilde \theta_i = \theta_i/a_i = \theta_i / \| \theta_i\|_2$, we have
\begin{align}
 \frac{1}{a_i^2} J_p(a_i \tilde \theta_i)  =   \frac{1}{a_i^2} J_p(\theta_i) 
  & =   \tfrac{1}{2} \big\| \Pi_\xi (\Pl  - I) \Phi \tilde \theta_i  + \tfrac{1}{a_i} \Pi_\xi r_\pi^{(\lambda)}  \big\|_\xi^2 + \frac{1}{a_i^2} p(a_i \tilde \theta_i) \notag \\
  & \leq  \frac{1}{a_i^2}  J_p(a_i \theta)   \label{eq-ugtdsprf-1}
\end{align}
for all $\theta \in \re^d$.
By choosing a subsequence if necessary, we can suppose that the sequence $\{\tilde \theta_i\}$ converges to a point $\tilde \theta_\infty$ on the unit ball. 
Then, take $i \to +\infty$ in both sides of (\ref{eq-ugtdsprf-1}), and we have, by Assumption~\ref{as-cond-p}(a), that 
\begin{align}
 \lim_{i \to \infty}  \frac{1}{a_i^2} J_p(a_i \tilde \theta_i)  & =  \tfrac{1}{2} \big\| \Pi_\xi (\Pl  - I) \Phi \tilde \theta_\infty \big\|_\xi^2 + p_\infty(\tilde \theta_\infty)  \notag \\
    & \leq \tfrac{1}{2} \big\| \Pi_\xi (\Pl  - I) \Phi \theta \big\|_\xi^2 + p_\infty(\theta) \label{eq-ugtdsprf-2} 
\end{align}
for all $\theta \in \re^d$, where, to derive the inequality (\ref{eq-ugtdsprf-2}), we used the pointwise convergence of the convex functions $ \frac{1}{a_i^2} p(a_i \, \cdot)$ to $p_\infty(\cdot)$, and to derive the first equality, we used the fact that this pointwise convergence of $ \frac{1}{a_i^2} p(a_i \, \cdot)$ to $p_\infty(\cdot)$ implies that the convergence is uniform on a neighborhood of $\tilde \theta_\infty$ \cite[Theorem 10.8]{Roc70}. 
The inequality~(\ref{eq-ugtdsprf-2}) implies that the point $\tilde \theta_\infty \not=0$ is an optimal solution of $\inf_{\theta} \big\{ \tfrac{1}{2} \| \Pi_\xi (\Pl - I) v_\theta \|_\xi^2 + p_\infty(\theta) \big\}$. 
This contradicts Assumption~\ref{as-cond-p}(c), under which the latter optimization problem has the origin as its unique optimal solution. Therefore, the set of optimal solutions to $\inf_{\theta} J_p(\theta)$ must be compact. 

For the second part of the lemma, note that if $\nabla p(\theta) \in \sp \{\phi(\S)\}$ for all $\theta \in \sp \{ \phi(\S)\}$, then $\nabla J_p(\theta) \in  \sp \{\phi(\S)\}$ for all $\theta \in \sp \{ \phi(\S)\}$. This implies that the subset $\tilde \Theta_{\text{\rm opt}}$ of those optimal solutions of $\inf_{\theta} J_p(\theta)$ in $\sp \{ \phi(\S)\}$ is nonempty.%footnote starts
\footnote{To see this, let $\theta_{\text{opt}}$ be an optimal solution of $\inf_{\theta} J_p(\theta)$, and let $\bar \theta$ be the projection of  $\theta_{\text{opt}}$ on $\sp \{ \phi(\S)\}$. Then by the convexity of $J_p$, $J_p(\theta_{\text{opt}}) \geq J_p(\bar \theta) + \langle \nabla J_p(\bar \theta), \theta_{\text{opt}} - \bar \theta \rangle = J_p(\bar \theta)$, implying that $\bar \theta$ is also an optimal solution.} 
%footnote ends
Now apply the preceding proof argument with the set $\tilde \Theta_{\text{\rm opt}}$ in place of $\Theta_{\text{\rm opt}}$, and it follows that $\tilde \Theta_{\text{\rm opt}}$ must be compact.
\end{proof}

Recall from the discussion in Section~\ref{sec-gtds-ode} that the problem $\inf_\theta J_p(\theta)$ can be equivalently written as the minimax problem 
$$ \inf_\theta \sup_x \psi^o(\theta, x) = \inf_\theta \sup_{x \in \sp \{\phi(\S) \}} \psi^o(\theta, x) $$
for the convex-concave function $ \psi^o(\theta, x)  = \langle \Phi x, \Tl v_\theta - v_\theta \rangle_\xi - \tfrac{1}{2} \| \Phi x \|^2_\xi  + p(\theta)$ (cf.\ (\ref{eq-psio})),
which is constant over the subspace $\sp \{\phi(\S)\}^\perp$ for each $\theta$. Let $\Theta_{\text{\rm opt}} = \argmax_{\theta} J_p(\theta)$.

\begin{lem} \label{lem-asugtd-sadd}
Under Assumption~\ref{as-cond-p}(b), the maximization problem $\sup_x \{ \inf_\theta \psi^o(\theta, x)\}$ has a unique optimal solution $x_{\text{\rm opt}}$ in $\sp \{\phi(\S)\}$, and the set of saddle points of $\inf_\theta \sup_{x \in \sp \{\phi(\S) \}} \psi^o(\theta, x)$ is $\Theta_{\text{\rm opt}} \times \{x_{\text{\rm opt}}\}$.
\end{lem}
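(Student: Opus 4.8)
The plan is to reduce the statement to two facts: that the minimax problem $\inf_\theta \sup_{x \in \sp\{\phi(\S)\}} \psi^o(\theta, x)$ possesses at least one saddle point, and that its set of saddle points has the product form $\Theta_{\text{opt}} \times D_x$ with $D_x$ a singleton. First I would record the reductions already available from Section~\ref{sec-gtds-ode}: since $\psi^o(\theta, x)$ depends on $x$ only through $\Phi x$, the inner supremum over $\sp\{\phi(\S)\}$ coincides with the supremum over all of $\re^d$ and equals $J_p(\theta)$, so the primal problem is exactly $\inf_\theta J_p(\theta)$, whose optimal set $\Theta_{\text{opt}}$ is nonempty by Assumption~\ref{as-cond-p}(b); likewise $\inf_\theta \psi^o(\theta, \cdot)$ is constant along $\sp\{\phi(\S)\}^\perp$, so the dual problem may be taken over $\sp\{\phi(\S)\}$ without loss.

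The core step is to exhibit a saddle point directly, thereby establishing a saddle value without appealing to compactness of the $\theta$-domain (the device used in the constrained case of Section~\ref{sec-gtds-ode}). I would pick any $\theta_{\text{opt}} \in \Theta_{\text{opt}}$ and set $x_{\text{opt}} := x_{\theta_{\text{opt}}}$, the unique solution in $\sp\{\phi(\S)\}$ of $\bar k(\theta_{\text{opt}}, x) = 0$. The second saddle inequality, $x_{\text{opt}} \in \argmax_x \psi^o(\theta_{\text{opt}}, x)$, is immediate because $\nabla_x \psi^o(\theta_{\text{opt}}, x_{\text{opt}}) = \bar k(\theta_{\text{opt}}, x_{\text{opt}}) = 0$ while $\psi^o(\theta_{\text{opt}}, \cdot)$ is concave and constant over $\sp\{\phi(\S)\}^\perp$. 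For the first inequality, $\theta_{\text{opt}} \in \argmin_\theta \psi^o(\theta, x_{\text{opt}})$, I would use the gradient identity $\bar g(\theta) = \bar g(\theta, x_\theta) = -\nabla J(\theta)$ from (\ref{eq-gtd1-bg}) to compute
\begin{equation*}
\nabla_\theta \psi^o(\theta_{\text{opt}}, x_{\text{opt}}) = \nabla p(\theta_{\text{opt}}) - \bar g(\theta_{\text{opt}}, x_{\theta_{\text{opt}}}) = \nabla p(\theta_{\text{opt}}) + \nabla J(\theta_{\text{opt}}) = \nabla J_p(\theta_{\text{opt}}),
\end{equation*}
which vanishes by the unconstrained first-order optimality of $\theta_{\text{opt}}$; convexity of $\psi^o(\cdot, x_{\text{opt}})$ then yields the minimization. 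Hence $(\theta_{\text{opt}}, x_{\text{opt}})$ is a saddle point and the problem has a saddle value.

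With a saddle point in hand, I would invoke the product structure of saddle-point sets of convex-concave functions (\cite[Lemma 36.2]{Roc70}) to conclude that the saddle set equals $\Theta_{\text{opt}} \times D_x$, where $D_x = \argmax_{x \in \sp\{\phi(\S)\}} \inf_\theta \psi^o(\theta, x)$. Uniqueness of the dual solution then follows as in Section~\ref{sec-gtds-ode}: the function $\inf_\theta \psi^o(\theta, \cdot)$ is an infimum of functions affine in $x$ (hence concave) plus the term $-\tfrac{1}{2} \| \Phi x \|_\xi^2$, which is strictly concave on $\sp\{\phi(\S)\}$ because $x \mapsto \Phi x$ is injective there; so it admits at most one maximizer, forcing $D_x = \{x_{\text{opt}}\}$. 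This gives the claimed saddle set $\Theta_{\text{opt}} \times \{x_{\text{opt}}\}$, and in particular $x_{\text{opt}}$ is independent of the chosen $\theta_{\text{opt}}$. I expect the main obstacle to be precisely the existence argument: without compactness in $\theta$ the standard minimax theorems do not apply off the shelf, so everything hinges on converting Assumption~\ref{as-cond-p}(b) into a concrete saddle point via the gradient identity above; the product-structure and strict-concavity steps are then routine.
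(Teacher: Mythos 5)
Your proposal is correct and follows essentially the same route as the paper: both arguments convert Assumption~\ref{as-cond-p}(b) into an explicit saddle point $(\theta_{\text{opt}}, x_{\theta_{\text{opt}}})$ by verifying the first-order conditions $\nabla_\theta \psi^o(\theta_{\text{opt}}, x_{\theta_{\text{opt}}}) = \nabla J_p(\theta_{\text{opt}}) = 0$ (via the gradient expression (\ref{eq-Jgrad1})) and $\nabla_x \psi^o(\theta_{\text{opt}}, x_{\theta_{\text{opt}}}) = \bar k(\theta_{\text{opt}}, x_{\theta_{\text{opt}}}) = 0$, and then invoke \cite[Lemma 36.2]{Roc70} for the product structure $\Theta_{\text{opt}} \times X_{\text{opt}}$ of the saddle set. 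The only (immaterial) divergence is the final uniqueness step: the paper notes that two distinct dual solutions $x_1 \neq x_2$ in $\sp\{\phi(\S)\}$ would both have to solve $\Phi^\tr \Xi \, \Phi x = \Phi^\tr \Xi \, (\Tl v_{\bar\theta} - v_{\bar\theta})$ for a fixed $\bar\theta \in \Theta_{\text{opt}}$, which has a unique solution in $\sp\{\phi(\S)\}$, whereas you use strict concavity of the dual function on $\sp\{\phi(\S)\}$ --- both are sound, since any maximizer has finite value and thus lies in the (convex) effective domain where strict concavity applies.
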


\begin{proof}
By the definition of a saddle point, $(\bar \theta, \bar x)$ is a saddle point of $\inf_\theta \sup_x \psi^o(\theta, x)$ if and only if $\nabla \psi^o(\bar \theta, \bar x) = (0, 0)$.
The latter equation is the same as
\begin{equation} \label{eq-ugtdsprf-3}  
 \big( \Phi^\tr \Xi \, (\Pl- I) \Phi\big)^\tr \bar x + \nabla p(\bar \theta) = 0, \qquad  \Phi^\tr \Xi \, \Phi \bar x = \Phi^\tr \Xi \, (\Tl v_{\bar \theta} - v_{\bar \theta}). 
\end{equation} 
Consider any $\bar \theta \in \Theta_{\text{\rm opt}}$ (which is nonempty by our assumption). The second equation in (\ref{eq-ugtdsprf-3}) has a unique solution $x_{\bar \theta}$ in $\sp \{\phi(\S)\}$, as discussed earlier (cf.\ (\ref{eq-xtheta})-(\ref{eq-xtheta2})). 
By the optimality of $\bar \theta$ for $\inf_\theta J_p(\theta)$, $\nabla J_p(\bar \theta) = \nabla J(\bar \theta) + \nabla p(\bar \theta) = 0$, 
and then using the expression (\ref{eq-Jgrad1}) of $\nabla J(\bar \theta)$, we see that $(\bar \theta, x_{\bar \theta})$ satisfies the first equation in (\ref{eq-ugtdsprf-3}). Thus $(\bar \theta, x_{\bar \theta})$ is a saddle point. It then follows from \cite[Lemma 36.2]{Roc70} that $\inf_\theta \sup_x \psi^o(\theta, x)$ has a finite saddle-value and its saddle points are the points in $\Theta_{\text{\rm opt}} \times X_{\text{\rm opt}}$, where $X_{\text{\rm opt}} = \argmax_x \{\inf_\theta \psi^o(\theta, x)\}$ is the set of dual optimal solutions.
Obviously, since in the preceding proof the point $x_{\bar \theta} \in \sp \{\phi(\S)\}$, $X_{\text{\rm opt}} \cap \sp \{\phi(\S)\} \not= \emptyset$. Suppose we can find in this set two different points $x_1, x_2$. Then for any $\bar \theta \in \Theta_{\text{\rm opt}}$, since both $(\bar \theta, x_1)$ and $(\bar \theta, x_2)$ are saddle points, they must both satisfy (\ref{eq-ugtdsprf-3}). But the second equation in (\ref{eq-ugtdsprf-3}) has a unique solution in $\sp \{\phi(\S)\}$ for each $\bar \theta$, a contradiction. Therefore, $X_{\text{\rm opt}} \cap \sp \{\phi(\S)\}$ must be a singleton.
\end{proof}

Let $\tilde \Theta_{\text{\rm opt}} = \argmax_{\theta} J_p(\theta) \, \cap \, \sp \{\phi(\S)\}$ if Assumption~\ref{as-cond-p}(a)-(b) and (c$'\!$) hold; it is nonempty and compact by Lemma~\ref{lem-asugtd-optset}. By the preceding Lemma~\ref{lem-asugtd-sadd}, 
$\tilde \Theta_{\text{\rm opt}} \times \{x_{\text{\rm opt}}\}$ is the subset of those saddle points in $\sp \{\phi(\S)\} \times \sp \{\phi(\S)\}$ for the minimax problem $\inf_\theta \sup_x \psi^o(\theta, x)$.
 
Here is our convergence result for the unconstrained GTDa:

\begin{thm} \label{thm-ugtds-as-conv}
Let $\{(\theta_n, x_n)\}$ be generated by the unconstrained single-time-scale GTDa algorithm (\ref{eq-ugtd1m-th})-(\ref{eq-ugtd1m-x}).
Then under Assumption~\ref{as-cond-gtds} and Assumption~\ref{as-cond-p}(a)-(c), for each given initial condition, $\{(\theta_n, x_n)\}$ converges almost surely to the set $\Theta_{\text{\rm opt}} \times \{ x_{\text{\rm opt}}\}$. If instead of Assumption~\ref{as-cond-p}(c), Assumption~\ref{as-cond-p}(c$'\!$) holds, then with $\theta_0 \in \sp \{ \phi(\S)\}$,  $\{(\theta_n, x_n)\}$ converges almost surely to the set $\tilde \Theta_{\text{\rm opt}}\times \{ x_{\text{\rm opt}}\}$.
\end{thm}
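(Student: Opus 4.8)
The plan is to derive this result as an application of the stability-and-convergence machinery of \cite[Chap.\ 6.3]{Bor08}, exploiting the fact that under Assumption~\ref{as-cond-gtds}(ii) we are in the history-dependent-$\lambda$ regime where the traces $\{\e_n\}$ lie in a pre-determined bounded set. First I would observe that, by the same structural reasoning used throughout, the iterates live in a nice subspace: since $x_0,\e_0\in\sp\{\phi(\S)\}$, the update direction of the $x$-iteration always lies in $\sp\{\phi(\S)\}$, so $x_n\in\sp\{\phi(\S)\}$ for all $n$; and the column space of $A^\tr$ (for $A=\Phi^\tr\Xi(\Pl-I)\Phi$) is contained in $\sp\{\phi(\S)\}$, so in the case of Assumption~\ref{as-cond-p}(c$'$), with $\theta_0\in\sp\{\phi(\S)\}$ and $\nabla p$ preserving that subspace, $\theta_n\in\sp\{\phi(\S)\}$ as well. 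Thus in case (c) the analysis takes place on $\re^d\times\sp\{\phi(\S)\}$, and in case (c$'$) on $\sp\{\phi(\S)\}\times\sp\{\phi(\S)\}$, where the matrix $C=\Phi^\tr\Xi\,\Phi$ acts positive-definitely with smallest eigenvalue $c>0$. The mean ODE is (\ref{eq-gtds-ode}) with the boundary-reflection terms deleted, i.e.\ $\dot\theta=-A^\tr x-\nabla p(\theta)$, $\dot x=A\theta+b-Cx$ with $b=\Phi^\tr\Xi\,\rl$ (cf.\ (\ref{eq-gtds-gk})). The ``averaging,'' tightness, growth, and Lipschitz conditions of \cite[Theorem 6.1.1]{KuY03} and their stability-theoretic analogues in \cite[Chap.\ 6.3]{Bor08} are verified exactly as in Section~\ref{subsec-asconv-gtd} and the proof of Theorem~\ref{thm-gtds-as-conv} in Section~\ref{sec-as-singletime}, using the boundedness of the traces together with Prop.~\ref{prop-trace-averaging2} and Lemma~\ref{lem-occ-meas}; the only genuinely new ingredient is the \emph{stability} (almost sure boundedness) of the unconstrained iterates.

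For stability I would use the ``scaled ODE at infinity'' criterion. With drift $h(\theta,x)=(-A^\tr x-\nabla p(\theta),\,A\theta+b-Cx)$, the rescaled vector fields $h_c(\theta,x):=\tfrac1c h(c\theta,cx)$ converge, as $c\to\infty$, to
\begin{equation} \label{eq-ugtds-odeinf}
   h_\infty(\theta,x) = \big(-A^\tr x-\nabla p_\infty(\theta),\; A\theta-Cx\big),
\end{equation}
where the limit $\tfrac1c\nabla p(c\theta)\to\nabla p_\infty(\theta)$ follows from Assumption~\ref{as-cond-p}(a) by a standard convex-analytic fact: the pointwise convergence of the convex functions $p(c\,\cdot)/c^2$ to $p_\infty$ forces convergence of their gradients \cite[Theorem 25.7]{Roc70} (the uniform convergence on neighborhoods, \cite[Theorem 10.8]{Roc70}, also used in the proof of Lemma~\ref{lem-asugtd-optset}, guarantees $p_\infty$ is differentiable). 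The key point is that (\ref{eq-ugtds-odeinf}) is precisely the mean ODE of the minimax problem $\inf_\theta\sup_x\{x^\tr A\theta-\tfrac12 x^\tr Cx+p_\infty(\theta)\}$, whose primal objective is $\tfrac12\|\Pi_\xi(\Pl-I)v_\theta\|_\xi^2+p_\infty(\theta)$. By Assumption~\ref{as-cond-p}(c) (resp.\ (c$'$)) the origin is the unique minimizer of this problem (on $\re^d$, resp.\ on $\sp\{\phi(\S)\}$), so by the argument of Lemma~\ref{lem-asugtd-sadd} applied with $p_\infty$ and $b=0$ the origin is the unique saddle point, hence the unique equilibrium of $h_\infty$.

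I would then establish that the origin is \emph{globally asymptotically stable} for (\ref{eq-ugtds-odeinf}) by reusing the maximal-monotone/Lyapunov analysis of Section~\ref{sec-gtds-ode}: the set-valued mapping associated with the infinity minimax problem is maximal monotone, and with the Lyapunov function $V(\theta,x)=\tfrac12\|\theta\|_2^2+\tfrac12\|x\|_2^2$ the computation mirroring the first displayed chain in the proof of Prop.~\ref{prop-sd-limitset} yields $\dot V=-\langle\theta,\nabla p_\infty(\theta)\rangle-\langle x,Cx\rangle\le 0$, strictly negative off the origin on the relevant subspace (the $-\langle x,Cx\rangle$ term is strictly negative for $x\in\sp\{\phi(\S)\}\setminus\{0\}$ and the $\nabla p_\infty$ term is handled by monotonicity and the uniqueness in (c)/(c$'$)); this gives global asymptotic stability. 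The stability theorem of \cite[Chap.\ 6.3]{Bor08} then delivers $\sup_n\|(\theta_n,x_n)\|<\infty$ almost surely. Having secured boundedness, the convergence half of \cite[Chap.\ 6.3]{Bor08} (equivalently \cite[Theorem 6.1.1]{KuY03}) shows $\{(\theta_n,x_n)\}$ converges a.s.\ to the limit set of the true mean ODE (\ref{eq-gtds-ode}); by the monotone-mapping arguments of Prop.~\ref{prop-sd-limitset} together with Lemma~\ref{lem-asugtd-sadd}, this limit set is the saddle-point set $\Theta_{\text{\rm opt}}\times\{x_{\text{\rm opt}}\}$ in case (c) and $\tilde\Theta_{\text{\rm opt}}\times\{x_{\text{\rm opt}}\}$ in case (c$'$), both of which are nonempty and compact by Lemmas~\ref{lem-asugtd-optset}--\ref{lem-asugtd-sadd}. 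I expect the main obstacle to be Step three: proving the \emph{global} asymptotic stability of the origin for (\ref{eq-ugtds-odeinf}) rigorously when $p_\infty$ contributes only monotonicity (not strict convexity) and $C$ is singular off $\sp\{\phi(\S)\}$, so that one must combine the strict damping in the $x$-component with the uniqueness hypothesis to rule out drift along directions where $V$ is merely nonincreasing; justifying the gradient-scaling limit and the clean reduction to the invariant subspaces are the accompanying, less delicate, technical points.
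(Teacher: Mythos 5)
Your overall architecture coincides with the paper's own proof: restrict attention to $\re^d \times \sp\{\phi(\S)\}$ (resp.\ $\sp\{\phi(\S)\} \times \sp\{\phi(\S)\}$ under (c$'$)), verify the averaging/Lipschitz/tightness conditions via the boundedness of the traces, obtain almost sure boundedness of the iterates from the scaled-ODE-at-infinity criterion of \cite[Theorem 9, Chap.\ 6.3]{Bor08}, obtain convergence to an internally chain transitive invariant set from \cite[Theorem 7 and Corollary 8, Chap.\ 6.3]{Bor08}, and identify that set with the saddle set via the maximal-monotone machinery of Section~\ref{sec-gtds-ode}. Your gradient-scaling step is exactly Lemma~\ref{lem-asugtd-2} (proved there with the same Rockafellar gradient-convergence fact), and your identification of the origin as the unique equilibrium of the limiting ODE (\ref{eq-ode-infty}) is Lemma~\ref{lem-asugtd-3}(i).

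The one step you do differently is the global asymptotic stability of the origin for (\ref{eq-ode-infty}), and there your argument as written has a genuine gap. The claim that $\dot V = -\langle \theta, \nabla p_\infty(\theta)\rangle - \langle x, C x\rangle$ is strictly negative off the origin is false in general: if $p$ grows subquadratically then $p_\infty \equiv 0$ (Assumption~\ref{as-cond-p}(c) can still hold; it then amounts to nonsingularity of $\Phi^\tr \Xi\, (\Pl - I)\Phi$), and $\dot V$ vanishes identically on the slice $\{x = 0\}$, not just at the origin. So no strict-Lyapunov conclusion is available; you must finish with an invariance (LaSalle-type) argument, which you gesture at but do not carry out. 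It does close: on any trajectory lying in $\{\dot V = 0\}$ one has $x \equiv 0$ (since $C$ is positive definite on $\sp\{\phi(\S)\}$), hence $\dot x \equiv 0$ forces $A\theta(t) \equiv 0$; moreover $\langle \theta, \nabla p_\infty(\theta)\rangle = 0$ together with convexity, $p_\infty(0) = 0$ and $\nabla p_\infty(0) = 0$ (degree-2 homogeneity) gives $p_\infty(\theta) = 0$, hence $\nabla p_\infty(\theta) = 0$; thus $(\theta, 0)$ satisfies the saddle-point conditions of the limiting minimax problem, and uniqueness (Lemma~\ref{lem-asugtd-3}(i)) forces $\theta = 0$. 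The paper sidesteps this single-function difficulty altogether: Lemma~\ref{lem-asugtd-3}(ii) is proved by rerunning the proof of Lemma~\ref{lem-asugtd-1} (hence of Prop.~\ref{prop-sd-limitset}) with $\psi_\infty$ in place of $\psi^o$, where non-expansiveness of the maximal-monotone flow yields stability and compact invariant sets, Corollary~\ref{cor-invset} makes any nonincreasing continuous function constant along limit-set trajectories, and the second test function $\langle \bar x, A\theta\rangle + p(\theta)$ (which for the limiting problem specializes to $p_\infty(\theta)$, with derivative $-\|\nabla p_\infty(\theta)\|_2^2$ along trajectories with $x \equiv 0$) eliminates non-saddle points. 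Either completion works; yours is arguably more elementary once the invariance step is spelled out, but as submitted that step is missing.
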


%\smallskip
\begin{rem}[About unconstrained GTDa with a scaling parameter $\eta$] \label{rem-ugtds-eta} \rm \hfill \\
A version of Theorem~\ref{thm-ugtds-as-conv} also holds for the unconstrained GTDa algorithm (\ref{gtd1-th-eta-alt})-(\ref{gtd1-x-eta-alt}) or equivalently, (\ref{gtd1-th-eta})-(\ref{gtd1-x-eta}), which uses a scaling parameter $\eta > 0$. In this case, the conditions are the same as stated by Theorem~\ref{thm-ugtds-as-conv}. In the conclusion, we only need to replace the minimax problem $\inf_\theta \sup_x \psi^o(\theta, x)$ by the minimax problem corresponding to $\eta$, which is 
\begin{equation} \label{eq-minimax-ugtds-eta}
 \inf_{\theta \in \re^d} \sup_{\tilde x \in \re^d} \Big\{ a \, \langle \Phi \tilde x, \, \Tl v_\theta - v_\theta \rangle_\xi - \frac{a^2}{2} \| \Phi \tilde x \|^2_\xi  + p(\theta) \Big\}, \quad \text{for} \  a = \sqrt{\eta},
\end{equation} 
and use its unique dual optimal solution $\tilde x_{\text{opt}}$ in the subspace $\sp \{ \phi(\S)\}$ in replace of the point $x_{\text{opt}}$. The convergence conclusion for (\ref{gtd1-th-eta-alt})-(\ref{gtd1-x-eta-alt}) then translates to that for (\ref{gtd1-th-eta})-(\ref{gtd1-x-eta}) via the correspondence $x = \sqrt{\eta} \,\tilde x$. The reasoning is essentially the same as that given in Section~\ref{sec-gtds-eta} for the constrained GTDa, which shows why the case $\eta > 0$ can be treated by the same line of analysis for $\eta = 1$. In particular, the derivation of the preceding minimax problem (\ref{eq-minimax-ugtds-eta}) is similar to that of~(\ref{eq-minimax0-eta}), and since the two minimax problems, (\ref{eq-minimax-ugtds-eta}) and $\inf_\theta \sup_x \psi^o(\theta, x)$, have the same structure, the arguments we give in this subsection for proving Theorem~\ref{thm-ugtds-as-conv} carry over to the general case $\eta > 0$.
\end{rem} 

\subsubsection{Convergence proof}

We apply the ``natural averaging'' argument given in \cite[Chaps.\ 6.2-6.3]{Bor08} to prove Theorem~\ref{thm-ugtds-as-conv}.
Assumption~\ref{as-cond-p} is indeed introduced for this purpose. Let us derive more implications of it, which will be needed to show that the conditions required by the analysis in \cite{Bor08} are satisfied.

First, consider the desired mean ODE for the unconstrained GTDa algorithm (\ref{eq-ugtd1m-th})-(\ref{eq-ugtd1m-x}),
\begin{equation} \label{eq-ugtds-ode}
 \dot{\theta}(t) = \bar g (\theta(t), x(t)) - \nabla p(\theta(t)), \qquad   \dot x(t) = \bar k (\theta(t), x(t)),
\end{equation}
where the functions $\bar g, \bar k$ are defined as in (\ref{eq-gtds-gk}):
$$ \bar g(\theta, x) = \big( \Phi^\tr \Xi \, (I - \Pl) \Phi\big)^\tr x, \qquad \bar \k(\theta, x) = \Phi^\tr \Xi \, (\Tl v_\theta - v_\theta) - \Phi^\tr \Xi \, \Phi \, x.$$
Note that, as before, from any initial $x(0) \in \sp \{\phi(\S)\}$, the $x$-component of the solution of (\ref{eq-ugtds-ode}) lies entirely in $\sp \{\phi(\S)\}$.
Note also that if, as assumed in Assumption~\ref{as-cond-p}(c$'$), $\nabla p(\theta) \in \sp \{ \Phi(\S)\}$ for all $\theta \in \sp \{ \Phi(\S)\}$, then from any initial $\theta(0) \in \sp \{ \Phi(\S)\}$, the $\theta$-component of the solution of (\ref{eq-ugtds-ode}) also lies entirely in $\sp \{\phi(\S)\}$.
We can use the same analysis given in Section~\ref{sec-gtds-ode} for the constrained GTDa to show that the above ODE has the following solution properties:

\begin{lem} \label{lem-asugtd-1} \hfill \vspace*{-3pt}
\begin{itemize} 
\item[\rm (i)] Under Assumption~\ref{as-cond-p}(a)-(c), consider the ODE~(\ref{eq-ugtds-ode}) on the space $\re^d \times \sp \{\phi(\S)\}$. Then w.r.t.\ the latter space, the set $\Theta_{\text{\rm opt}} \times \{ x_{\text{\rm opt}}\}$ is globally asymptotically stable.
\item[\rm (ii)] Under Assumption~\ref{as-cond-p}(a)-(b) and (c$'\!$), consider the ODE~(\ref{eq-ugtds-ode}) on the space $\sp \{\phi(\S)\} \times \sp \{\phi(\S)\}$. Then w.r.t.\ the latter space, the set $\tilde \Theta_{\text{\rm opt}} \times \{ x_{\text{\rm opt}}\}$ is globally asymptotically stable.
\end{itemize}
\end{lem}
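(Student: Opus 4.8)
My plan is to recognize the ODE~(\ref{eq-ugtds-ode}) as the gradient flow $\dot y(t) = -G(y(t))$ of the smooth convex--concave saddle function $\psi^o$ from Section~\ref{sec-gtds-ode}, where $y=(\theta,x)$ and $G(\theta,x) = \big(\nabla_\theta\psi^o(\theta,x),\,-\nabla_x\psi^o(\theta,x)\big)$ is single-valued and continuous (since $\nabla p$ is Lipschitz by Assumption~\ref{as-cond-p}(a) and the remaining terms are affine), hence globally Lipschitz and maximal monotone by \cite[Cor.\ 37.5.2]{Roc70}. Writing $\bar k(\theta,x)=A\theta+b-Cx$ and $\bar g(\theta,x)=-A^\tr x$ with $A=\Phi^\tr\Xi(\Pl-I)\Phi$, $C=\Phi^\tr\Xi\,\Phi$, one checks that the range of $\bar k$ lies in $\sp\{\phi(\S)\}$, so the subspace $\re^d\times\sp\{\phi(\S)\}$ is invariant (and under (c$'$), since the column space of $A$ and, by (c$'$), $\nabla p$ map $\sp\{\phi(\S)\}$ into itself, so is $\sp\{\phi(\S)\}\times\sp\{\phi(\S)\}$). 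The relevant saddle set is $S=\Theta_{\text{\rm opt}}\times\{x_{\text{\rm opt}}\}$ (resp.\ $\tilde\Theta_{\text{\rm opt}}\times\{x_{\text{\rm opt}}\}$), which is nonempty and compact by Lemmas~\ref{lem-asugtd-optset} and~\ref{lem-asugtd-sadd}. This is exactly the unconstrained analogue of the setup in Section~\ref{sec-gtds-ode}, and I would follow that analysis, the difference being that here boundedness of trajectories is not automatic and must be supplied by the coercivity built into Assumption~\ref{as-cond-p}.

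For Lyapunov stability and boundedness, I would fix any saddle point $y^*=(\theta^*,x_{\text{\rm opt}})$ and use $V(y)=\tfrac12\|y-y^*\|_2^2$. Differentiating along a solution and adding the saddle-point relations $A^\tr x_{\text{\rm opt}}+\nabla p(\theta^*)=0$ and $A\theta^*+b-Cx_{\text{\rm opt}}=0$, the bilinear cross terms $-\langle\theta-\theta^*,A^\tr(x-x_{\text{\rm opt}})\rangle$ and $+\langle x-x_{\text{\rm opt}},A(\theta-\theta^*)\rangle$ cancel, leaving
$$\dot V(y(t)) = -\big\langle \theta-\theta^*,\,\nabla p(\theta)-\nabla p(\theta^*)\big\rangle - \big\langle x-x_{\text{\rm opt}},\,C(x-x_{\text{\rm opt}})\big\rangle \leq -c\,\|x-x_{\text{\rm opt}}\|_2^2 \leq 0,$$
where the first term is nonpositive by convexity of $p$ and, on the invariant subspace where $x-x_{\text{\rm opt}}\in\sp\{\phi(\S)\}$, the second is bounded by $-c\|x-x_{\text{\rm opt}}\|_2^2$ with $c>0$ the smallest nonzero eigenvalue of $C$. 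Thus $\|y(t)-y^*\|_2$ is nonincreasing for every $y^*\in S$; projecting the initial point onto the compact set $S$ gives Lyapunov stability of $S$, and nonincrease of $\|y(t)-y^*\|_2$ gives boundedness of every trajectory.

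Global attractivity I would obtain by the LaSalle/invariance argument used for the limit set in Section~\ref{sec-gtds-ode}. The $\omega$-limit set $\Omega$ of a (bounded) trajectory is nonempty, compact, and invariant, and lies in $\{\dot V=0\}$, which by the strict $C$-term forces $x\equiv x_{\text{\rm opt}}$ on $\Omega$. Since $\dot x=0$ there, $\bar k(\theta,x_{\text{\rm opt}})=0$ with $x_{\text{\rm opt}}\in\sp\{\phi(\S)\}$ shows $x_\theta=x_{\text{\rm opt}}$, whence $\nabla J(\theta)=A^\tr x_{\text{\rm opt}}$ by~(\ref{eq-Jgrad1}) and the $\theta$-flow reduces on $\Omega$ to $\dot\theta=-A^\tr x_{\text{\rm opt}}-\nabla p(\theta)=-\nabla J_p(\theta)$, the gradient flow of the convex function $J_p$. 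Because $J_p(\theta(t))$ is then nonincreasing on the invariant set $\Omega$, the recurrence property of $\omega$-limit sets (exactly as in Lemma~\ref{lem-invset} and Cor.~\ref{cor-invset}) forces $J_p$ to be constant on $\Omega$, hence $\nabla J_p\equiv0$ and $\Omega\subseteq S$; with stability this yields global asymptotic stability. Part~(ii) is identical, working on $\sp\{\phi(\S)\}\times\sp\{\phi(\S)\}$ and using (c$'$) to keep $\theta(t)$ in $\sp\{\phi(\S)\}$. The main obstacle is precisely the $\theta$-component: since $p$ is only convex (not strictly), the single Lyapunov function $V$ gives no strict decrease in $\theta$ and cannot by itself pin $\theta$ to $\Theta_{\text{\rm opt}}$; the resolution is the two-stage argument in which one first isolates $x=x_{\text{\rm opt}}$ on the invariant limit set and only then exploits the reduced convex gradient flow together with the recurrence/invariance mechanism of Cor.~\ref{cor-invset}.
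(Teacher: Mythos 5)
Your proof is correct and takes essentially the same approach as the paper: the paper likewise identifies the ODE with the flow $\dot y = -\A(y)$ of the maximal monotone saddle operator of $\psi^o$, obtains set stability and boundedness of trajectories from nonexpansiveness of that flow (distance to each saddle point is nonincreasing), and then establishes attractivity by re-running the two-stage Lyapunov/recurrence argument of Prop.~\ref{prop-sd-limitset} (with $B_\theta = B_x = \re^d$ and all boundary reflection terms set to zero) on the compact invariant set of points no farther from $\Theta_{\text{\rm opt}} \times \{x_{\text{\rm opt}}\}$ than the initial condition, compactness coming from Lemma~\ref{lem-asugtd-optset} exactly as you use it. The only differences are organizational: you run the invariance/recurrence machinery (Lemma~\ref{lem-invset}, Cor.~\ref{cor-invset}) on the $\omega$-limit set of each individual trajectory rather than on the limit set of that compact ball, and in the second stage you use $J_p$ itself as the Lyapunov function where the paper uses $\langle \bar x, A\theta \rangle + p(\theta)$ --- these coincide in effect, since both have gradient $A^\tr x_{\text{\rm opt}} + \nabla p(\theta)$ on the set where $x \equiv x_{\text{\rm opt}}$ and $x_\theta = x_{\text{\rm opt}}$.
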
 
 
\begin{proof}
Recall the solution properties of differential inclusion with a maximal monotone mapping $\A$ given by \cite[Theorem 1, Chap.\ 3]{AuC85} and described in Section~\ref{sec-gtds-ode}. We apply them here with $\A$ being the maximal monotone mapping $(\theta, x) \mapsto \big(\nabla_\theta \psi^o(\theta, x), - \nabla_x \psi^o(\theta, x) \big)$. 
With $y(t) = (\theta(t), x(t))$, the ODE~(\ref{eq-ugtds-ode}) is the same as $\dot y(t) = - \A(y(t))$. For any $\bar \theta \in \Theta_{\text{\rm opt}}$ and $\bar x = x_{\text{\rm opt}}$, $y(\cdot) \equiv (\bar \theta, \bar x)$ is a solution of (\ref{eq-ugtds-ode}). 
Then by the solution property (iii) for differential inclusion with $\A$ described in Section~\ref{sec-gtds-ode}, for any $\epsilon > 0$, starting from the $\epsilon$-neighborhood of $\Theta_{\text{\rm opt}} \times \{ x_{\text{\rm opt}}\}$, the solutions of (\ref{eq-ugtds-ode}) remain in that same neighborhood. 

Now consider an arbitrary solution $\bar y(\cdot)$ of (\ref{eq-ugtds-ode}) with the initial $\bar y(0) \in \re^d \times \sp \{\phi(\S)\}$.
Define a set $D \subset \re^d \times \sp \{\phi(\S)\}$ by
$$D = \Big\{ (\theta, x) \, \Big| \, \text{dist}\big( (\theta, x), \, \Theta_{\text{\rm opt}} \times \{ x_{\text{\rm opt}}\} \big) \leq \text{dist}\big( \bar y(0), \, \Theta_{\text{\rm opt}} \times \{ x_{\text{\rm opt}}\} \big)  \Big\} \, \cap  \big( \re^d \times \sp \{\phi(\S)\} \big).$$
Note that $D$ is compact by Lemma~\ref{lem-asugtd-optset}, and any solution of (\ref{eq-ugtds-ode}) starting from $D$ remains in $D$. 
Then, as $t \to + \infty$, $\bar y(t)$ must converge to the set $\cap_{\bar t \geq 0} \, \cl \{ y(t) \mid y(0) \in D, t \geq \bar t \, \}$, which is the limit set of the ODE~(\ref{eq-ugtds-ode}) for all initial conditions in $D$. 

We now apply the same proof arguments given in Section~\ref{sec-gtds-ode} for analyzing the limit set of the ODE associated with the constrained GTDa. In particular, with the set $D$ in place of the constraint set $B_\theta \times B_x$, Lemma~\ref{lem-invset0} and its proof remain valid for the case here, and Lemma~\ref{lem-invset} and Cor.~\ref{cor-invset} then also hold for the ODE~(\ref{eq-ugtds-ode}) here. In turn, the proof of Prop.~\ref{prop-sd-limitset} (which uses Cor.~\ref{cor-invset}, as we recall) also applies here, if we now take the sets $B_\theta$ and $B_x$ appearing in that proof to be the entire space $\re^d$, and let all the boundary reflection terms of the ODE be zero in that proof. The result of applying the proof of Prop.~\ref{prop-sd-limitset} here is that the limit set $\cap_{\bar t \geq 0} \, \cl \{ y(t) \mid y(0) \in D, t \geq \bar t \, \} = \Theta_{\text{\rm opt}} \times \{ x_{\text{\rm opt}}\}$, analogous to Prop.~\ref{prop-sd-limitset}. This shows that any solution $\bar y(t)$ must converge to $\Theta_{\text{\rm opt}} \times \{ x_{\text{\rm opt}}\}$ as $t \to +\infty$. So w.r.t.\ the space $\re^d \times \sp \{\phi(\S)\}$, the set $\Theta_{\text{\rm opt}} \times \{ x_{\text{\rm opt}}\}$ is globally asymptotically stable.

The second part of the lemma follows from the same argument, with $\sp \{\phi(\S)\} \times \sp \{\phi(\S)\}$ in place of the space $\re^d \times \sp \{\phi(\S)\}$.
\end{proof}

Consider now the minimization problem $\inf_{\theta} \big\{ \tfrac{1}{2} \| \Pi_\xi (\Pl - I) v_\theta \|_\xi^2 + p_\infty(\theta) \big\}$ appearing in Assumption~\ref{as-cond-p}. 
The same reasoning given before (\ref{eq-minimax0}) in Section~\ref{sec-gtds-ode} shows that it is equivalent to the minimax problem
\begin{equation}  \label{eq-minimax-infty}
 \quad \qquad \inf_\theta \sup_x \psi_\infty(\theta, x), \qquad \text{where} \ \psi_\infty(\theta, x) : = \langle \Phi x,  (\Pl - I) v_\theta \rangle_\xi - \tfrac{1}{2} \| \Phi x \|^2_\xi  + p_\infty(\theta).
\end{equation} 
Analogously to (\ref{eq-ugtds-ode}), consider the following ODE for solving this minimax problem:
\begin{equation} \label{eq-ode-infty}
\dot{\theta}(t)  = - \nabla_\theta \psi_\infty(\theta(t), x(t)), \qquad  \dot x(t) =  \nabla_x \psi_\infty(\theta(t), x(t)),
\end{equation}
where
$$ \nabla_\theta \psi_\infty(\theta, x) =  \big( \Phi^\tr \Xi \, (\Pl - I) \Phi\big)^\tr x + \nabla p_\infty(\theta),   \quad  \nabla_x \psi_\infty(\theta, x) =   \Phi^\tr \Xi \, (\Pl - I) v_\theta - \Phi^\tr \Xi \, \Phi \, x. $$

We need the following two implications of Assumption~\ref{as-cond-p}, which will be used in establishing the boundedness of the iterates of the unconstrained GTDa algorithm.

\begin{lem} \label{lem-asugtd-2}
Under Assumption~\ref{as-cond-p}(a), for each $(\theta, x)$,
$$\lim_{a \to +\infty} \tfrac{1}{a} \,  \bar g(a\theta, a x) - \tfrac{1}{a} \, \nabla p(a \theta) = - \nabla_\theta \psi_\infty(\theta, x), \qquad \lim_{a \to +\infty} \tfrac{1}{a} \, \bar k(a\theta, a x) =  \nabla_x \psi_\infty(\theta, x),$$ 
and the convergence is uniform on any compact set of $(\theta, x)$.
\end{lem}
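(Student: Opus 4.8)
The plan is to separate the statement into a purely affine part, which can be handled by direct algebra, and a part involving the regularizer, which is where all the real content lies. First I would record the structural facts. From (\ref{eq-gtds-gk}) the function $\bar g(\theta,x) = \big(\Phi^\tr \Xi\,(I-\Pl)\Phi\big)^\tr x$ is linear in $x$ and does not depend on $\theta$, so that $\tfrac{1}{a}\,\bar g(a\theta,ax) = \bar g(\theta,x)$ holds \emph{exactly} for every $a>0$; moreover this already equals the first term of $-\nabla_\theta \psi_\infty(\theta,x)$, namely $\big(\Phi^\tr \Xi\,(I-\Pl)\Phi\big)^\tr x$. Consequently the entire burden of the first limit is carried by the regularizer term $\tfrac{1}{a}\nabla p(a\theta)$, and the claim reduces to showing $\lim_{a\to\infty}\tfrac{1}{a}\nabla p(a\theta) = \nabla p_\infty(\theta)$ uniformly on compact sets.

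Next I would dispose of $\bar k$, which is elementary. Using the affine form $\Tl v = \rl + \Pl v$ from (\ref{eq-Tl0}) and $v_\theta = \Phi\theta$, I rewrite
\begin{equation*}
  \bar k(\theta,x) = \Phi^\tr \Xi\, \rl + \Phi^\tr \Xi\,(\Pl - I)\,v_\theta - \Phi^\tr \Xi\,\Phi\,x .
\end{equation*}
Since $v_\theta$ is linear in $\theta$, we have $v_{a\theta} = a\,v_\theta$, so scaling gives
\begin{equation*}
  \tfrac{1}{a}\,\bar k(a\theta,ax) = \tfrac{1}{a}\,\Phi^\tr \Xi\, \rl + \Phi^\tr \Xi\,(\Pl - I)\,v_\theta - \Phi^\tr \Xi\,\Phi\,x .
\end{equation*}
The only $a$-dependent term after scaling is $\tfrac{1}{a}\,\Phi^\tr \Xi\,\rl$, which is independent of $(\theta,x)$ and vanishes like $1/a$, hence uniformly; the remaining two terms are exactly $\nabla_x \psi_\infty(\theta,x)$. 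This establishes the second limit together with its uniformity.

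The main obstacle is the regularizer limit, and I would address it through convex-analytic gradient convergence. Setting $h_a(\theta) := p(a\theta)/a^2$, each $h_a$ is convex, finite, and differentiable on $\re^d$ (being a positive scaling of $p$ composed with a linear map), with $\nabla h_a(\theta) = \tfrac{1}{a}\nabla p(a\theta)$. By Assumption~\ref{as-cond-p}(a), $h_a \to p_\infty$ pointwise, and $p_\infty$ is convex and differentiable. The hard step is then to pass from pointwise convergence of the functions $h_a$ to convergence of their gradients; for this I would invoke Rockafellar's theorem on gradient convergence of pointwise-convergent finite convex functions to a differentiable limit \cite[Theorem 25.7]{Roc70}, which yields $\nabla h_a \to \nabla p_\infty$ uniformly on every bounded set, at least along any sequence $a_i\to\infty$. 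Since the limit here is over the continuous parameter $a$, I would finish with a routine subsequence argument: if uniform convergence on a compact set $K$ failed, one could extract $a_i\to\infty$ and $\theta_i\in K$ violating it, contradicting the uniform convergence along $\{a_i\}$ supplied by \cite[Theorem 25.7]{Roc70}. Combining the three pieces gives both limits and their uniformity, with the only nontrivial ingredient being the convex gradient-convergence theorem.
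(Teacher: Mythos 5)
Your proposal is correct and takes essentially the same route as the paper's proof: the scaling identities for $\bar g$ and $\bar k$ are handled by direct computation (the only $a$-dependent remnant being the constant term $\tfrac{1}{a}\Phi^\tr \Xi\, \rl$, which vanishes uniformly), and the regularizer limit is reduced to gradient convergence of the pointwise-convergent convex functions $p(a\theta)/a^2$ via Rockafellar's theorem. The only differences are cosmetic: the paper cites \cite[Theorem 35.10]{Roc70} where you cite the convex-function version \cite[Theorem 25.7]{Roc70}, and you make explicit the routine subsequence argument for the continuous parameter $a$ that the paper leaves implicit.
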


\begin{proof}
It is clear that $\tfrac{1}{a} \,  \bar g(a\theta, a x) = \big( \Phi^\tr \Xi \, (I - \Pl) \Phi\big)^\tr x$ for any $a > 0$, and 
$\lim_{a \to +\infty} \tfrac{1}{a} \, \bar k(a\theta, a x) = \Phi^\tr \Xi \, (\Pl - I) v_\theta - \Phi^\tr \Xi \, \Phi \, x = \nabla_x \psi_\infty(\theta, x)$ and the convergence is uniform on any compact set of $(\theta, x)$. 
Now note that $\tfrac{1}{a} \, \nabla p(a \theta)$ is the gradient of the convex function $\tfrac{1}{a^2} \, p(a \theta)$ at $\theta$. 
By Assumption~\ref{as-cond-p}(a), as $a \to + \infty$, the function $\tfrac{1}{a^2} \, p(a \theta)$ converges pointwise to the convex differentiable function $p_\infty(\theta)$.
Then by \cite[Theorem 35.10]{Roc70}, as $a \to + \infty$, the gradients $\tfrac{1}{a} \, \nabla p(a \theta)$ of $\tfrac{1}{a^2} \, p(a \theta)$ converge to $\nabla p_\infty(\theta)$ uniformly on any compact set of $\theta$.
The lemma then follows.
\end{proof}

\begin{lem} \label{lem-asugtd-3}
Let Assumption~\ref{as-cond-p}(a) hold in what follows.\vspace*{-3pt}
\begin{itemize} 
\item[\rm (i)] Under Assumption~\ref{as-cond-p}(c) or (c$'$\!), the point $x = 0$ is the unique dual optimal solution in $\sp \{\phi(\S)\}$ for the minimax problem (\ref{eq-minimax-infty}), and the point $(\theta, x) = (0,0)$ is its saddle point. 
\item[\rm (ii)] If Assumption~\ref{as-cond-p}(c) (respectively, (c$'$\!)) holds, then for the ODE (\ref{eq-ode-infty}) on the space $\re^d \times \sp \{\phi(\S)\}$ (respectively, the space $\sp \{\phi(\S)\} \times \sp \{\phi(\S)\}$), the point $(\theta, x) = (0,0)$ is globally asymptotically stable.
\end{itemize}
\end{lem}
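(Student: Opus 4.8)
The plan is to read Lemma~\ref{lem-asugtd-3} as the ``asymptotic'' counterpart of Lemma~\ref{lem-asugtd-sadd} (for part~(i)) and of Lemma~\ref{lem-asugtd-1} (for part~(ii)), exploiting that $\psi_\infty$ has exactly the convex--concave structure of $\psi^o$: it is linear in $\theta$ apart from the convex term $p_\infty(\theta)$, and strictly concave in $x$ on $\sp\{\phi(\S)\}$ because of the $-\tfrac12\|\Phi x\|_\xi^2$ term. The only differences from $\psi^o$ are that the constant reward term $r_\pi^{(\lambda)}$ is dropped and $p$ is replaced by $p_\infty$. In particular, by the same identity used before (\ref{eq-minimax0}), the minimax problem (\ref{eq-minimax-infty}) is equivalent to $\inf_\theta\{\tfrac12\|\Pi_\xi(\Pl-I)v_\theta\|_\xi^2+p_\infty(\theta)\}$, whose unique minimizer is the origin under Assumption~\ref{as-cond-p}(c) (in $\re^d$) or (c$'$) (in $\sp\{\phi(\S)\}$).

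For part~(i) I would first isolate the one algebraic fact that drives everything: the quadratic $\tfrac12\|\Pi_\xi(\Pl-I)v_\theta\|_\xi^2$ is homogeneous of degree two in $\theta$, hence has zero gradient at $\theta=0$, so first-order optimality of the origin for the equivalent minimization problem forces $\nabla p_\infty(0)=0$ (under (c$'$) one uses in addition that $\nabla p_\infty$ maps $\sp\{\phi(\S)\}$ into itself, which it inherits from $\nabla p$ via Lemma~\ref{lem-asugtd-2}). Then I would run the argument of Lemma~\ref{lem-asugtd-sadd} with $p_\infty$ in place of $p$: the saddle-point equations $\nabla\psi_\infty(\bar\theta,\bar x)=(0,0)$ read $(\Phi^\tr\Xi(\Pl-I)\Phi)^\tr\bar x+\nabla p_\infty(\bar\theta)=0$ and $\Phi^\tr\Xi\Phi\,\bar x=\Phi^\tr\Xi(\Pl-I)v_{\bar\theta}$. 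Taking $\bar\theta=0$ gives $v_0=0$, so the second equation yields $\bar x=0$ in $\sp\{\phi(\S)\}$, while $\nabla p_\infty(0)=0$ makes the first equation hold; thus $(0,0)$ is a saddle point. Strict concavity of $\inf_\theta\psi_\infty(\theta,\cdot)$ on $\sp\{\phi(\S)\}$ (same reasoning as in Section~\ref{sec-gtds-ode}) gives uniqueness of the dual optimal solution there, which is therefore $x=0$.

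For part~(ii) I would note that since $\psi_\infty$ is convex--concave and finite everywhere, the mapping $\A_\infty(\theta,x):=(\nabla_\theta\psi_\infty(\theta,x),-\nabla_x\psi_\infty(\theta,x))$ is maximal monotone by \cite[Corollary 37.5.2]{Roc70}, and the ODE~(\ref{eq-ode-infty}) is exactly $\dot y=-\A_\infty(y)$. This is the setting of Lemma~\ref{lem-asugtd-1}, so I would transplant that proof with $\psi_\infty$ in place of $\psi^o$ and with the whole space $\re^d\times\sp\{\phi(\S)\}$ (resp.\ $\sp\{\phi(\S)\}\times\sp\{\phi(\S)\}$ under (c$'$)) in place of $B_\theta\times B_x$, first checking that each component of the flow stays in $\sp\{\phi(\S)\}$ when so initialized (immediate since $\Phi^\tr(\cdot)\in\sp\{\phi(\S)\}$ and, under (c$'$), $\nabla p_\infty$ preserves $\sp\{\phi(\S)\}$). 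The contraction property~(iii) of \cite[Theorem~1, Chap.~3]{AuC85}, compared against the constant solution $(0,0)$ from part~(i), yields Lyapunov stability and boundedness of every trajectory; the invariance machinery (Lemma~\ref{lem-invset0}, Lemma~\ref{lem-invset}, Cor.~\ref{cor-invset}) together with the proof of Prop.~\ref{prop-sd-limitset} then identifies the $\omega$-limit set with the saddle-point set, which by part~(i) is the single point $(0,0)$.

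The main obstacle I anticipate is the global attractivity half of part~(ii), not part~(i): stability and boundedness come cheaply from monotonicity, but ruling out every non-equilibrium limit point requires the two-stage Lyapunov computation of Prop.~\ref{prop-sd-limitset}. Its first stage uses strict concavity in $x$ (positive definiteness of $C=\Phi^\tr\Xi\Phi$ on $\sp\{\phi(\S)\}$) to force the $x$-component of any limit point to be $0$, and its second stage uses uniqueness of the primal minimizer---precisely where Assumption~\ref{as-cond-p}(c)/(c$'$) is indispensable---to force the $\theta$-component to be $0$. I would have to confirm that this computation remains valid on the unbounded domain with all boundary-reflection terms set to zero; but since Lemma~\ref{lem-asugtd-1} already performs exactly this transplant for $\psi^o$, the adaptation to $\psi_\infty$ should be routine.
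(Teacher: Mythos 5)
Your proposal is correct and follows essentially the same route as the paper: part (i) is obtained by rerunning the proof of Lemma~\ref{lem-asugtd-sadd} with $\psi_\infty$ in place of $\psi^o$ and the point $\theta_{\text{opt}}=0$ in place of $\Theta_{\text{opt}}$, and part (ii) by transplanting the proof of Lemma~\ref{lem-asugtd-1} (maximal monotonicity of the associated mapping, the contraction property against the constant solution $(0,0)$, and the invariance machinery culminating in the proof of Prop.~\ref{prop-sd-limitset}), including the key observation for case (c$'$) that $\nabla p_\infty$ inherits the invariance of $\sp\{\phi(\S)\}$ from $\nabla p$ via the limit in Lemma~\ref{lem-asugtd-2}. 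The additional details you supply---notably deriving $\nabla p_\infty(0)=0$ from first-order optimality and the degree-two homogeneity of the quadratic term---are correct fillings-in of steps the paper leaves implicit in its ``same proof with substitutions'' argument.
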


\begin{proof}
The first part follows from the same proof of Lemma~\ref{lem-asugtd-sadd} with the function $\psi_\infty$ in place of $\psi^o$ and with the point $\theta_{\text{\rm opt}} = 0$ in place of the set $\Theta_{\text{\rm opt}}$. 

For the second part, the case of Assumption~\ref{as-cond-p}(c) follows from the proof of Lemma~\ref{lem-asugtd-1} with $\psi_\infty$ in place of $\psi^o$ and with the point $(\theta_{\text{\rm opt}}, x_{\text{\rm opt}}) = (0,0)$ in place of the set $\Theta_{\text{\rm opt}} \times \{ x_{\text{\rm opt}}\}$.
For the case of Assumption~\ref{as-cond-p}(c$'$), note first that $\tfrac{1}{a} \, \nabla p(a \theta) \to \nabla p_\infty (\theta)$ as shown in the proof of Lemma~\ref{lem-asugtd-2}. Then, since $\nabla p(a \theta) \in \sp \{ \phi(\S)\}$ for all $\theta \in \sp \{ \phi(\S)\}$ under Assumption~\ref{as-cond-p}(c$'$), we have $\nabla p_\infty (\theta) \in \sp \{ \phi(\S)\}$ for all $\theta \in \sp \{ \phi(\S)\}$. 
The desired conclusion then follows from the first part of the proof of Lemma~\ref{lem-asugtd-1}, with $\psi_\infty$ in place of $\psi^o$, with the point $(\theta_{\text{\rm opt}}, x_{\text{\rm opt}}) = (0,0)$ in place of the set $\Theta_{\text{\rm opt}} \times \{ x_{\text{\rm opt}}\}$, and with the space 
$\sp \{\phi(\S)\} \times \sp \{\phi(\S)\}$ in place of the space $\re^d \times \sp \{\phi(\S)\}$.
\end{proof}

Finally, we are ready to prove Theorem~\ref{thm-ugtds-as-conv}:

\begin{proof}[Proof of Theorem~\ref{thm-ugtds-as-conv}]
First, let us note the space we work with in the proof. In the case where Assumption~\ref{as-cond-p}(c) or (c$'$) holds, we restrict attention to the space $\re^d \times \sp \{\phi(\S)\}$ or the space $\sp \{\phi(\S)\} \times \sp \{\phi(\S)\}$, respectively. Recall that in each of these two cases, it is ensured by our assumption on the initial condition $x_0, \e_0$ and $\theta_0$ for the algorithm that all the iterates $\{(\theta_n, x_n)\}$ lie in the corresponding space $\re^d \times \sp \{\phi(\S)\}$ or $\sp \{\phi(\S)\} \times \sp \{\phi(\S)\}$.

Under Assumption~\ref{as-cond-gtds}, by Prop.~\ref{thm-erg}(i), the process $\{Z_n\}$ is an ergodic weak Feller Markov chain with a unique invariant probability measure. To prove the theorem, we will apply the ``natural averaging'' results of \cite[Chaps.\ 6.2-6.3]{Bor08} for the ``uncontrolled'' case, which involves an ergodic weak Feller Markov chain whose evolution is not affected by the $(\theta, x)$-iterates of the algorithm. Under our Assumption~\ref{as-cond-gtds}(ii), the traces $\{\e_n\}$ lie in a pre-determined compact set by the choice of $\lambda$. With the trace component $\e$ of $z$ restricted to this compact set, the continuous functions $g(\theta, x, z), k(\theta, x, z)$ in our GTDa algorithm are Lipschitz continuous in $(\theta, x)$ uniformly w.r.t.\ $z$. By Assumption~\ref{as-cond-p}(i), the gradient mapping $\nabla p(\theta)$ is Lipschitz continuous. These two facts together fulfill the condition on the algorithm in \cite[Chaps.\ 6.2, p.\ 68]{Bor08}. 

With the boundedness of $\{\e_n\}$ just mentioned, it also holds trivially that the set of random probability measures $\mu_\tau^{\alpha, t}, t \geq 0$ defined before Lemma~\ref{lem-occ-meas} in Section~\ref{subsec-asconv-gtd} is tight for each $\tau \geq 0$. This fulfills one major assumption in the analysis in \cite[Chaps.\ 6.2-6.3]{Bor08} (cf.\ page 71 therein). The second major assumption is for proving that the iterates of the algorithm are almost surely bounded, and it is the condition in \cite[Theorem 9, Chap.\ 6.3]{Bor08}. This condition is the same as the properties stated in our Lemma~\ref{lem-asugtd-2} and Lemma~\ref{lem-asugtd-3}(ii) above, and which is thus fulfilled by these lemmas. 

Then, applying \cite[Theorem 9, Chap.\ 6.3]{Bor08}, we obtain that the iterates $(\theta_n, x_n)$ are almost surely bounded, and in turn, applying \cite[Theorem 7 and Corollary 8, Chap.\ 6.3]{Bor08}, we obtain that $\{ (\theta_n, x_n) \}$ converges almost surely to an internally chain transitive invariant set of the ODE~(\ref{eq-ugtds-ode}), for the space $\re^d \times \sp \{\phi(\S)\}$ or the space $\sp \{\phi(\S)\} \times \sp \{\phi(\S)\}$ that we work with (which depends on whether Assumption~\ref{as-cond-p}(c) or (c$'$) holds, as noted at the beginning of the proof). Finally, combining this conclusion with Lemma~\ref{lem-asugtd-1}, we obtain the conclusions of Theorem~\ref{thm-ugtds-as-conv}.
\end{proof}

\section{Discussion} \label{sec-6}

We have presented a collection of convergence results for a number of gradient-based off-policy TD algorithms, including the two-time-scale GTD and mirror-descent GTD algorithms, the single-time-scale GTDa algorithm, as well as the biased variants of these algorithms. For slowly diminishing stepsize and constant stepsize, using the weak convergence method, we have shown that w.r.t.\ a sense of convergence that is weaker than the almost sure convergence, all these algorithms have comparable, satisfactory convergence properties, whether they employ state-dependent or history-dependent $\lambda$. 
For the standard type of diminishing stepsize, however, concerning the almost sure convergence of the algorithms, our results are more satisfactory  in the case of history-dependent $\lambda$ where the traces are bounded, and more limited in the case of state-dependent $\lambda$ (including constant $\lambda$) where the traces can be unbounded. We believe this is not entirely an artifact of the analysis: It could be that with general state-dependent (or constant) $\lambda$, in certain situations, the algorithms need not converge almost surely under the standard diminishing stepsize condition, whereas in some other situations, our current results can be strengthened. Further investigation is needed in order to clarify this issue.

In this paper we have focused on one case of history-dependent $\lambda$ where the way $\lambda(\cdot)$ and memory states are defined ensures that the state-trace process has a unique invariant probability measure. Another worthwhile subject for future research, as mentioned in the introduction, is to use stochastic approximation theory for differential inclusions to study the asymptotic behavior of the TD algorithms when the $\lambda$-parameters are chosen in a history-dependent way that is more flexible than the one we considered and when the state-trace process can have multiple invariant probability measures.

\section*{Acknowledgement}
I thank Professor Richard Sutton for helpful discussion. This research was supported by a grant from Alberta Innovates---Technology Futures.

\clearpage
\addcontentsline{toc}{section}{References}
\bibliographystyle{apa}
\let\oldbibliography\thebibliography
\renewcommand{\thebibliography}[1]{%
  \oldbibliography{#1}%
  \setlength{\itemsep}{0pt}%
}
{\fontsize{9}{11} \selectfont
\bibliography{gtd_bib}

\begin{thebibliography}{}

\bibitem[\protect\astroncite{Akin}{1993}]{Aki93}
Akin, E. (1993).
\newblock {\em The General Topology of Dynamical Systems}.
\newblock American Mathematical Society.

\bibitem[\protect\astroncite{Aubin and Cellina}{1985}]{AuC85}
Aubin, J.-P. and Cellina, A. (1985).
\newblock {\em Differential Inclusions: Set-Valued Maps and Viability Theory}.
\newblock Springer, Berlin.

\bibitem[\protect\astroncite{Bertsekas and Tsitsiklis}{1996}]{BET}
Bertsekas, D.~P. and Tsitsiklis, J.~N. (1996).
\newblock {\em Neuro-Dynamic Programming}.
\newblock Athena Scientific, Belmont, MA.

\bibitem[\protect\astroncite{Borkar}{2008}]{Bor08}
Borkar, V.~S. (2008).
\newblock {\em Stochastic Approximation: A Dynamic Viewpoint}.
\newblock Cambridge University Press, Cambridge.

\bibitem[\protect\astroncite{Dudley}{2002}]{Dud02}
Dudley, R.~M. (2002).
\newblock {\em Real Analysis and Probability}.
\newblock Cambridge University Press, Cambridge.

\bibitem[\protect\astroncite{Karmakar and Bhatnagar}{2015}]{KaB15}
Karmakar, P. and Bhatnagar, S. (2015).
\newblock Two timescale stochastic approximation with controlled {Markov} noise
  and off-policy temporal difference learning.
\newblock arXiv:1503.09105. To appear in \emph{Mathematics of Operations
  Research}.

\bibitem[\protect\astroncite{Kushner and Clark}{1978}]{KuC78}
Kushner, H.~J. and Clark, D.~S. (1978).
\newblock {\em Stochastic Approximation Methods for Constrained and
  Unconstrained Systems}.
\newblock Springer, New York.

\bibitem[\protect\astroncite{Kushner and Yin}{2003}]{KuY03}
Kushner, H.~J. and Yin, G.~G. (2003).
\newblock {\em Stochastic Approximation and Recursive Algorithms and
  Applications}.
\newblock Springer, New York, 2nd edition.

\bibitem[\protect\astroncite{Liu et~al.}{2015}]{pmtd3}
Liu, B., Liu, J., Ghavamzadeh, M., Mahadevan, S., and Petrik, M. (2015).
\newblock Finite-sample analysis of proximal gradient {TD} algorithms.
\newblock In {\em The 31st Conference on Uncertainty in Artificial Intelligence
  (UAI)}.

\bibitem[\protect\astroncite{Maei}{2011}]{maei11}
Maei, H.~R. (2011).
\newblock {\em Gradient Temporal-Difference Learning Algorithms}.
\newblock PhD thesis, University of Alberta.

\bibitem[\protect\astroncite{Maei and Sutton}{2010}]{gtd10}
Maei, H.~R. and Sutton, R.~S. (2010).
\newblock {GQ}($\lambda$): A general gradient algorithm for temporal-difference
  prediction learning with eligibility traces.
\newblock In {\em The 3rd Conference on Artificial General Intelligence}.

\bibitem[\protect\astroncite{Mahadevan and Liu}{2012}]{pmtd2}
Mahadevan, S. and Liu, B. (2012).
\newblock Sparse {Q-learning} with mirror descent.
\newblock In {\em The 28th Conference on Uncertainty in Artificial Intelligence
  (UAI)}.

\bibitem[\protect\astroncite{Mahadevan et~al.}{2014}]{pmrl}
Mahadevan, S., Liu, B., Thomas, P., Dabney, W., Giguere, S., Jacek, N., Gemp,
  I., and Liu, J. (2014).
\newblock Proximal reinforcement learning: A new theory of sequential decision
  making in primal-dual spaces.
\newblock arXiv:1405.6757.

\bibitem[\protect\astroncite{Mahmood et~al.}{2017}]{abq}
Mahmood, A.~R., Yu, H., and Sutton, R.~S. (2017).
\newblock Multi-step off-policy learning without importance-sampling ratios.
\newblock \href{http://arxiv.org/abs/1702.03006}{arXiv:1702.03006}.

\bibitem[\protect\astroncite{Meyn}{1989}]{Mey89}
Meyn, S. (1989).
\newblock Ergodic theorems for discrete time stochastic systems using a
  stochastic {Lyapunov} function.
\newblock {\em SIAM Journal on Control and Optimization}, 27:1409--1439.

\bibitem[\protect\astroncite{Meyn and Tweedie}{2009}]{MeT09}
Meyn, S. and Tweedie, R.~L. (2009).
\newblock {\em Markov Chains and Stochastic Stability}.
\newblock Cambridge University Press, Cambridge, 2nd edition.

\bibitem[\protect\astroncite{Munos et~al.}{2016}]{offpolicytd-mshb}
Munos, R., Stepleton, T., Harutyunyan, A., and Bellemare, M.~G. (2016).
\newblock Safe and efficient off-policy reinforcement learning.
\newblock In {\em Advances in Neural Information Processing Systems (NIPS) 29}.

\bibitem[\protect\astroncite{Nemirovsky and Yudin}{1983}]{Nem83}
Nemirovsky, A.~S. and Yudin, D.~B. (1983).
\newblock {\em Problem Complexity and Method Efficiency in Optimization}.
\newblock John Wiley \& Sons, New York.

\bibitem[\protect\astroncite{Nummelin}{1984}]{Num84}
Nummelin, E. (1984).
\newblock {\em General Irreducible Markov Chains and Non-Negative Operators}.
\newblock Cambridge University Press, Cambridge.

\bibitem[\protect\astroncite{Polyak and Juditsky}{1992}]{PoJ92}
Polyak, B.~T. and Juditsky, A.~B. (1992).
\newblock Acceleration of stochastic approximation by averaging.
\newblock {\em SIAM Journal on Control and Optimization}, 30:838--855.

\bibitem[\protect\astroncite{Precup et~al.}{2000}]{offpolicytd-pss}
Precup, D., Sutton, R.~S., and Singh, S. (2000).
\newblock Eligibility traces for off-policy policy evaluation.
\newblock In {\em The 17th International Conference on Machine Learning
  (ICML)}.

\bibitem[\protect\astroncite{Puterman}{1994}]{Puterman94}
Puterman, M.~L. (1994).
\newblock {\em Markov Decision Processes: Discrete Stochastic Dynamic
  Programming}.
\newblock John Wiley \& Sons, New York.

\bibitem[\protect\astroncite{Rockafellar}{1970}]{Roc70}
Rockafellar, R.~T. (1970).
\newblock {\em Convex Analysis}.
\newblock Princeton University Press, Princeton, NJ.

\bibitem[\protect\astroncite{Rockafellar and Wets}{1998}]{RoW98}
Rockafellar, R.~T. and Wets, R. J.-B. (1998).
\newblock {\em Variational Analysis}.
\newblock Springer, Berlin, 1st edition.

\bibitem[\protect\astroncite{Rudin}{1966}]{Rudin66}
Rudin, W. (1966).
\newblock {\em Real and Complex Analysis}.
\newblock McGraw-Hill, New York.

\bibitem[\protect\astroncite{Scherrer}{2010}]{bruno-oblproj}
Scherrer, B. (2010).
\newblock Should one compute the temporal difference fix point or minimize the
  {Bellman} residual? {The} unified oblique projection view.
\newblock In {\em The 27th International Conference on Machine Learning
  (ICML)}.

\bibitem[\protect\astroncite{Sutton}{1988}]{Sut88}
Sutton, R.~S. (1988).
\newblock Learning to predict by the methods of temporal differences.
\newblock {\em Machine Learning}, 3:9--44.

\bibitem[\protect\astroncite{Sutton}{1995}]{Sut95}
Sutton, R.~S. (1995).
\newblock {TD} models: Modeling the world at a mixture of time scales.
\newblock In {\em The 12th International Conference on Machine Learning
  (ICML)}.

\bibitem[\protect\astroncite{Sutton}{2009}]{Sut09}
Sutton, R.~S. (2009).
\newblock The grand challenge of predictive empirical abstract knowledge.
\newblock In {\em IJCAI Workshop on Grand Challenges for Reasoning from
  Experiences}.

\bibitem[\protect\astroncite{Sutton and Barto}{1998}]{SUB}
Sutton, R.~S. and Barto, A.~G. (1998).
\newblock {\em Reinforcement Learning}.
\newblock MIT Press, Cambridge, MA.

\bibitem[\protect\astroncite{Sutton et~al.}{2009}]{gtd09}
Sutton, R.~S., Maei, H.~R., Precup, D., Bhatnagar, S., Silver, D.,
  Szepesv\'{a}ri, C., and Wiewiora, E. (2009).
\newblock Fast gradient-descent methods for temporal-difference learning with
  linear function approximation.
\newblock In {\em The 26th International Conference on Machine Learning
  (ICML)}.

\bibitem[\protect\astroncite{Sutton et~al.}{2016}]{SuMW14}
Sutton, R.~S., Mahmood, A.~R., and White, M. (2016).
\newblock An emphatic approach to the problem of off-policy temporal-difference
  learning.
\newblock {\em Journal of Machine Learning Research}, 17(73):1--29.

\bibitem[\protect\astroncite{Sutton et~al.}{2008}]{gtd08}
Sutton, R.~S., Szepesv\'{a}ri, C., and Maei, H.~R. (2008).
\newblock A convergent {$O(n)$} algorithm for off-policy temporal-difference
  learning with linear function approximation.
\newblock In {\em Advances in Neural Information Processing Systems (NIPS) 21}.

\bibitem[\protect\astroncite{Tsitsiklis and {Van Roy}}{1997}]{tr-disc}
Tsitsiklis, J.~N. and {Van Roy}, B. (1997).
\newblock An analysis of temporal-difference learning with function
  approximation.
\newblock {\em IEEE Transactions on Automatic Control}, 42(5):674--690.

\bibitem[\protect\astroncite{Yu}{2012}]{Yu-siam-lstd}
Yu, H. (2012).
\newblock Least squares temporal difference methods: An analysis under general
  conditions.
\newblock {\em SIAM Journal on Control and Optimization}, 50:3310--3343.

\bibitem[\protect\astroncite{Yu}{2015}]{yu-etdarx}
Yu, H. (2015).
\newblock On convergence of emphatic temporal-difference learning.
\newblock \url{http://arxiv.org/abs/1506.02582}; a shorter version appeared in
  \emph{The 28th Annual Conference on Learning Theory (COLT)}.

\bibitem[\protect\astroncite{Yu}{2016}]{etd-wkconv}
Yu, H. (2016).
\newblock Weak convergence properties of constrained emphatic
  temporal-difference learning with constant and slowly diminishing stepsize.
\newblock {\em Journal of Machine Learning Research}, 17(220):1--58.

\bibitem[\protect\astroncite{Yu and Bertsekas}{2010}]{yb-errbd}
Yu, H. and Bertsekas, D.~P. (2010).
\newblock Error bounds for approximations from projected linear equations.
\newblock {\em Mathematics of Operations Research}, 35(2):306--329.

\bibitem[\protect\astroncite{Yu and Bertsekas}{2012}]{yb-bellmaneq}
Yu, H. and Bertsekas, D.~P. (2012).
\newblock Weighted {Bellman} equations and their applications in approximate
  dynamic programming.
\newblock LIDS Technical Report 2876, MIT.

\bibitem[\protect\astroncite{Yu et~al.}{2017}]{gbe_td17}
Yu, H., Mahmood, A.~R., and Sutton, R.~S. (2017).
\newblock On generalized {Bellman} equations and temporal-difference learning.
\newblock \url{http://arxiv.org/abs/1704.04463}. A short version appeared in
  \emph{The 30th Canadian Artificial Intelligence Conference (CAI)}.

\end{thebibliography}
}

\end{document}